\documentclass{article}

\usepackage[top=1in, bottom=1in, left=1in, right=1in]{geometry}

\usepackage{comment,url,algorithm,algorithmic,graphicx,subcaption,relsize}
\usepackage{amssymb,amsfonts,amsmath,amsthm,amscd,dsfont,mathrsfs,mathtools,nicefrac}
\usepackage{float,psfrag,epsfig,color,xcolor,url,hyperref}
\usepackage{epstopdf,bbm,enumitem}
\usepackage[toc,page]{appendix}
\usepackage[mathscr]{euscript}

\def\ddefloop#1{\ifx\ddefloop#1\else\ddef{#1}\expandafter\ddefloop\fi}

\def\ddef#1{\expandafter\def\csname bb#1\endcsname{\ensuremath{\mathbb{#1}}}}
\ddefloop ABCDEFGHIJKLMNOPQRSTUVWXYZ\ddefloop

\def\ddef#1{\expandafter\def\csname c#1\endcsname{\ensuremath{\mathcal{#1}}}}
\ddefloop ABCDEFGHIJKLMNOPQRSTUVWXYZ\ddefloop

\def\ddef#1{\expandafter\def\csname v#1\endcsname{\ensuremath{\boldsymbol{#1}}}}
\ddefloop ABCDEFGHIJKLMNOPQRSTUVWXYZabcdefghijklmnopqrstuvwxyz\ddefloop

\def\ddef#1{\expandafter\def\csname v#1\endcsname{\ensuremath{\boldsymbol{\csname #1\endcsname}}}}
\ddefloop {alpha}{beta}{gamma}{delta}{epsilon}{varepsilon}{zeta}{eta}{theta}{vartheta}{iota}{kappa}{lambda}{mu}{nu}{xi}{pi}{varpi}{rho}{varrho}{sigma}{varsigma}{tau}{upsilon}{phi}{varphi}{chi}{psi}{omega}{Gamma}{Delta}{Theta}{Lambda}{Xi}{Pi}{Sigma}{varSigma}{Upsilon}{Phi}{Psi}{Omega}{ell}\ddefloop

\def\balign#1\ealign{\begin{align}#1\end{align}}
\def\baligns#1\ealigns{\begin{align*}#1\end{align*}}
\def\balignat#1\ealign{\begin{alignat}#1\end{alignat}}
\def\balignats#1\ealigns{\begin{alignat*}#1\end{alignat*}}
\def\bitemize#1\eitemize{\begin{itemize}#1\end{itemize}}
\def\benumerate#1\eenumerate{\begin{enumerate}#1\end{enumerate}}

\newenvironment{talign*}
{\csname align*\endcsname}
{\endalign}
\newenvironment{talign}
{\csname align\endcsname}
{\endalign}

\def\balignst#1\ealignst{\begin{talign*}#1\end{talign*}}
\def\balignt#1\ealignt{\begin{talign}#1\end{talign}}

\let\originalleft\left
\let\originalright\right
\renewcommand{\left}{\mathopen{}\mathclose\bgroup\originalleft}
\renewcommand{\right}{\aftergroup\egroup\originalright}

\def\tinycitep*#1{{\tiny\citep*{#1}}}
\def\tinycitealt*#1{{\tiny\citealt*{#1}}}
\def\tinycite*#1{{\tiny\cite*{#1}}}
\def\smallcitep*#1{{\scriptsize\citep*{#1}}}
\def\smallcitealt*#1{{\scriptsize\citealt*{#1}}}
\def\smallcite*#1{{\scriptsize\cite*{#1}}}

\def\mbi#1{\boldsymbol{#1}} 
\def\mbf#1{\mathbf{#1}}
\def\mbb#1{\mathbb{#1}}

\def\mrm#1{\mathrm{#1}}

\newcommand{\norm}[1]{\left\lVert#1\right\rVert}

\newcommand{\op}{\mathrm{op}}

\theoremstyle{plain}  
\newtheorem*{remark}{\textbf{Remark}}

\def\R{\mathbb{R}}

\def\N{\mathbb{N}}

\def\<{\left\langle} 
\def\>{\right\rangle}

\def\iff{\Leftrightarrow}

\newcommand{\boldone}{\mbf{1}} 
\newcommand{\ident}{\mbf{I}} 
\def\norm#1{\left\|{#1}\right\|} 

\def\E{\mbb{E}} 

\def\P{\mbb{P}} 

\newcommand{\Tr}{\operatorname{Tr}}
\def\Var{\mrm{Var}} 

\def\Cov{\mrm{Cov}} 

\DeclareSymbolFont{rsfs}{U}{rsfs}{m}{n}
\DeclareSymbolFontAlphabet{\mathscrsfs}{rsfs}

\providecommand{\diag}{\mathop\mathrm{diag}}

\def\rank#1{\mathrm{rank}({#1})}
\def\supp#1{\mathrm{supp}({#1})}

\ifdefined\nonewproofenvironments\else
\ifdefined\ispres\else
\newtheorem{theorem}{Theorem}
\newtheorem{lemma}[theorem]{Lemma}
\newtheorem{corollary}[theorem]{Corollary}

\newtheorem{prop}[theorem]{Proposition}

\newtheorem{assum}[theorem]{Assumption}

\theoremstyle{definition}
\newtheorem{definition}[theorem]{Definition}

\renewenvironment{proof}{\noindent\textbf{Proof.}\hspace*{.3em}}{\qed\\}
\newenvironment{proof-sketch}{\noindent\textbf{Proof Sketch}
	\hspace*{0.3em}}{\qed\\}
\newenvironment{proof-idea}{\noindent\textbf{Proof Idea}
	\hspace*{0.3em}}{\qed\\}
\newenvironment{proof-of-lemma}[1][{}]{\noindent\textbf{Proof of Lemma {#1}.}
	\hspace*{0.3em}}{\qed\\}
\newenvironment{proof-of-theorem}[1][{}]{\noindent\textbf{Proof of Theorem {#1}.}
	\hspace*{0.3em}}{\qed\\}
\newenvironment{proof-of-prop}[1][{}]{\noindent\textbf{Proof of Proposition {#1}.}
	\hspace*{0.3em}}{\qed\\}
\fi

\newtheorem{proposition}[theorem]{Proposition}

\fi
\makeatletter
\@addtoreset{equation}{section}
\makeatother

\hypersetup{
colorlinks,
linkcolor={red!50!black},
citecolor={blue!50!black},
urlcolor={blue!80!black}
}

\newcommand{\relu}{\mathrm{ReLU}}

\def\1{\bm{1}}

\def\eps{{\epsilon}}

\def\vzero{{\bm{0}}}
\def\vone{{\bm{1}}}
\def\vmu{{\bm{\mu}}}
\def\vtheta{{\bm{\theta}}}
\def\va{{\bm{a}}}
\def\vb{{\bm{b}}}
\def\vc{{\bm{c}}}

\def\ve{{\bm{e}}}
\def\vf{{\bm{f}}}
\def\vg{{\bm{g}}}
\def\vh{{\bm{h}}}

\def\vm{{\bm{m}}}

\def\vq{{\bm{q}}}

\def\vs{{\bm{s}}}
\def\vt{{\bm{t}}}
\def\vu{{\bm{u}}}
\def\vv{{\bm{v}}}
\def\vw{{\bm{w}}}
\def\vx{{\bm{x}}}
\def\vy{{\bm{y}}}
\def\vz{{\bm{z}}}

\def\Kf{\vK^{(f)}}

\DeclareMathAlphabet{\mathsfit}{\encodingdefault}{\sfdefault}{m}{sl}
\SetMathAlphabet{\mathsfit}{bold}{\encodingdefault}{\sfdefault}{bx}{n}

\usepackage[table]{xcolor} 
\usepackage{booktabs}
\usepackage{authblk}
\usepackage{etoc}
\usepackage{wrapfig}
\usepackage{bm}
\allowdisplaybreaks

\definecolor{mLightRed}{HTML}{FEE2E2}
\definecolor{mRed}{HTML}{EF4444}
\definecolor{mLightYellow}{HTML}{FEF3C7}
\definecolor{mDarkGreen}{HTML}{065F46}

\newcommand{\C}{\mathbb{C}}

\newcommand{\dist}[1]{\textnormal{dist}(#1)}

\newcommand{\spec}{\operatorname{spec}}

\newcommand{\MP}{\textnormal{MP}}
 
\newcommand{\SD}[1]{O_{\prec}\left(#1 \right)}

\newcommand{\iid}{\stackrel{\mathrm{i.i.d.}}{\sim}}

\newcommand{\snr}{\mathrm{SNR}}

\newcommand{\citep}[1]{\cite{#1}}
\newcommand{\citet}[1]{\cite{#1}} 

\makeatletter
\renewcommand{\paragraph}{%
\@startsection{paragraph}{4}%
{\z@}{1.5ex \@plus 1ex \@minus .2ex}{-1em}%
{\normalfont\normalsize\bfseries}%
}
\makeatother

\makeatletter
\renewcommand{\paragraph}{%
\@startsection{paragraph}{4}%
{\z@}{1.5ex \@plus 1ex \@minus .2ex}{-1em}%
{\normalfont\normalsize\bfseries}%
}
\makeatother

\begin{document}
\title{
Eigen-Spike Emergence and Quadratic Equivalents for Conjugate Kernels on Nonlinearly Separable Data
}
\author{
\begin{minipage}{0.49\textwidth}
    \centering
  Collin Cranston$^*$ \\
  \vspace{-3mm}
  Department of Mathematics \\
  University of California, San Diego, USA \\
  \texttt{ccransto@ucsd.edu}
  \end{minipage}
  \hfill
  \begin{minipage}{0.49\textwidth}
  \centering
  Zhichao Wang$^*$ \\
  \vspace{-3mm}
  ICSI and Department of Statistics \\
  University of California, Berkeley, USA \\
  \texttt{zhichao.wang@berkeley.edu}
  \end{minipage}
  
  \vspace{0.5cm}
  \begin{minipage}{0.49\textwidth}
    \centering
  Todd Kemp \\
  \vspace{-3mm}
  Department of Mathematics \\
  University of California, San Diego, USA \\
  \texttt{tkemp@ucsd.edu}
   \end{minipage}
  \begin{minipage}{0.49\textwidth}
    \centering
  Michael W. Mahoney \\
  \vspace{-3mm}
  ICSI, LBNL and Department of Statistics \\
  University of California, Berkeley, USA \\
  \texttt{mmahoney@stat.berkeley.edu}
  \end{minipage}
  \hfill
  }

\date{}
\maketitle
 
\begingroup
\renewcommand{\thefootnote}{}
\footnotetext{$^*$Equal contribution.}
\endgroup

\begin{abstract}
Recent work in random matrix theory (RMT) has developed the notion of deterministic equivalents: typically linear surrogate models that approximate the spectral behavior of large nonlinear random matrices, such as nonlinear feature maps in neural networks (NNs). 
Such equivalents make theoretical predictions tractable by reducing a complex model to a simpler one with properties that fall under the umbrella of classical RMT tools. 
However, this leaves open the question of whether this idealized linear equivalence remains meaningful for classification of high-dimensional nonlinearly separable data.
Motivated by this, we consider the conjugate kernel (CK), which is the nonlinear feature map of a one-layer feedforward NN, under a canonical nonlinearly separable dataset for the XOR problem; and we use the study of informative outlier eigenvalues in the CK and whether their corresponding eigenvectors asymptotically align with XOR labels as a proxy for nonlinear learnability. 
We develop a robust \emph{quadratic equivalent} of the CK matrix that enables a precise analysis of emergent informative spikes, as one modifies various knobs common in ML practice: sample complexity, signal-to-noise ratio (SNR), nonlinear activation choice, and pretrained features. We identify regimes in which these knobs move the CK beyond the linear equivalent and produce BBP-type transitions to label-aligned outlier eigenspaces. Our analysis helps bring deterministic-equivalence tools from RMT to bear on problems of practical relevance in ML.
\end{abstract}

\section{Introduction}
\label{sec:intro} 

Spectral information from weight matrices, kernel matrices, and Hessians provides a quantitative window into representation learning and efficient optimization in neural networks (NNs) \citep{martin2018implicitself,martin2019heavytail,MM20_SDM,MM20a_trends_NatComm,YTHx22_TR,wang2022spectral,jacot2018neural, pennington2017nonlinear, adlam2020neural,liao2021hessianeigenspectrarealisticnonlinear,sagun2016singularityhessian,ghorbani2019hessianeigenvaluedensity}. For example, eigenvalues of NN weight matrices can capture the strength of learned directions, while the corresponding eigenvectors indicate which data features are amplified or extracted during training \citep{ba2022high,damian2022neural,moniri2023theory,dandi2024random}. 
A major direction in deep learning theory is to use spectral tools to understand when NNs can propagate, preserve, transform, or amplify task-relevant structure that may be hidden in high-dimensional datasets.

A concrete and widely used spectral mechanism is the emergence of \emph{spikes} (outlier eigen/singular values) and the \emph{alignment} of their eigen/singular vectors with signals in the dataset. 
Random matrix theory (RMT) provides a powerful tool to analyze such eigen-spike emergence. 
In high-dimensional spiked models where a low-rank signal is embedded in Gaussian noise, BBP (Baik--Ben Arous--P{\'e}ch{\'e}) phase transitions characterize when an informative outlier separates from the bulk spectrum and when its associated eigenvector becomes correlated with the underlying signal \citep{baik2005phase,nadler2008finitepca}. Recent work has developed nonlinear counterparts for random-feature and kernel-type matrices, providing sharp predictions for outlier locations and eigenvector alignment \citep{feldman2023spectral,ba2022high,ba2023learning,wang2024nonlinearspikedcovariancematrices}. Such nonlinear RMT can be applied to study deep learning theory, e.g., the infinite-width \emph{Neural Tangent Kernel} (NTK) and the \emph{Conjugate Kernel} (CK) \citep{jacot2018neural,pennington2017nonlinear,adlam2020neural} for multi-layer NNs. 
Their high-dimensional spectra have been studied extensively \citep{liao2025randommatrixtheorydeep,wang2021deformed,liao2020random}. In this work, we focus on the CK, and we study \emph{emergent spikes} induced by \emph{nonlinear} structure in the data. 
Concretely, we consider the following spectral question for CK: 
\begin{quote} 
    \textit{When can NNs transfer a nonlinear pattern in the data into a linearly accessible direction (an aligned eigen-spike) that enables linear classification?}
\end{quote} 

In this paper, the CK is defined by a one-hidden-layer random NN: 
$\vY=\frac{1}{\sqrt{N}}\,\sigma(\vW\vX)\in\R^{N\times n},$ with nonlinear activation  $\sigma$, random weights $\vW\in\R^{N\times d}$, and data $\vX\in\R^{d\times n}$. The
CK matrix is the Gram matrix of hidden activations,
$\vK=\vY^\top \vY \in \R^{n\times n}$ for $n$ data points. The spectral distribution of $\vK$ is particularly useful for generalization error of random feature regression, as demonstrated in \citet{mei2019generalization,hu2020universality} for isotropic Gaussian data $\vX$. However, real datasets often contain low-rank or nonlinear structure that may be hard to analyze using PCA \citep{pca_jolliffe2002, pca_scholkopf1998, pca_tenenbaum2000, pca_roweis2000, pca_belkin2003, pca_coifman2006, pca_johnstone2009}. To highlight genuinely nonlinear structure, we instead take $\vX$ to be the high-dimensional XOR Gaussian mixture dataset (Section~\ref{sec:model_data}). This dataset is a balanced four-component Gaussian mixture whose two classes are unions of opposite components \citep{refinetti2021classifying}. The key property is that XOR is \emph{not linearly separable} in first-order statistics: the class means vanish, and any method that only ``sees'' linear correlations cannot recover labels. However, XOR is \emph{quadratically separable}, so a second-order feature can potentially make the task linearly solvable. Heuristically, the eigenvalues of $\vK$ describe the spectrum of learned similarities, while the alignment of leading eigenvectors with task structure (e.g., labels or cluster indicators) may lead to spectral methods solving the task.
Therefore, this paper aims to answer the following spectral question for $\vK$: 
\begin{quote} 
    \textit{When do outlier eigenvalues emerge in the CK spectrum, and when do their spiked eigenvectors align with the XOR labels and enable linear classification?} 
\end{quote} 

In practice, NN performance is affected by hyperparameters, dataset structure, and architecture; and, as a practical matter, machine learning (ML) practitioners typically fiddle (quite aggressively) with various ``knobs'' of the ML training process in order to improve model performance. 
Motivated by this, we consider properties of the CK matrix for the XOR problem, as various ``knobs'' for $\vK$ are varied; and, depending on their values, we observe very different spectral behavior. 
Concretely, we vary the following knobs (which are among the standard parameters that are tuned in ML practice): sampling size scaling $n$ with input features $d$ and NN width $N$; signal-to-noise ratio (SNR) in the data $\vX$; the pretrained weight matrix $\vW$; and the choice of nonlinear activation $\sigma$.
We show that, depending on the values of these ``knobs,'' one can obtain qualitatively different spectral properties, various phase transitions, and the emergence of \textit{quadratic informative spikes} in the CK matrix.
Importantly, the emergence of these quadratically informative spikes makes the nonlinear XOR linearly classifiable.

\begin{figure}[ht]
    \centering
    \begin{minipage}[c]{0.6\linewidth}\centering
    \includegraphics[width=\linewidth]{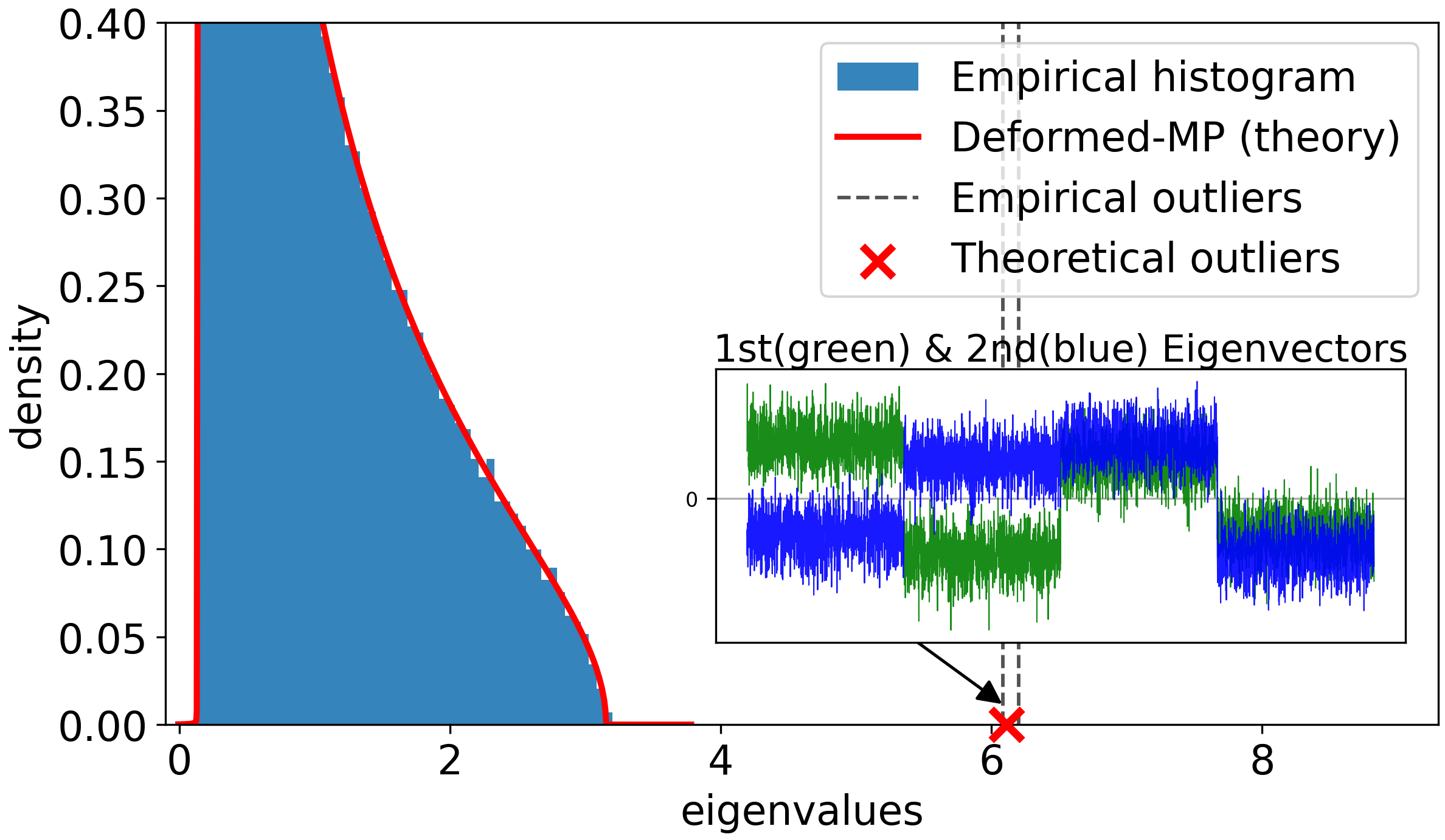}
    \end{minipage}\hfill
    \begin{minipage}[c]{0.39\linewidth}\centering
    \includegraphics[width=\linewidth]{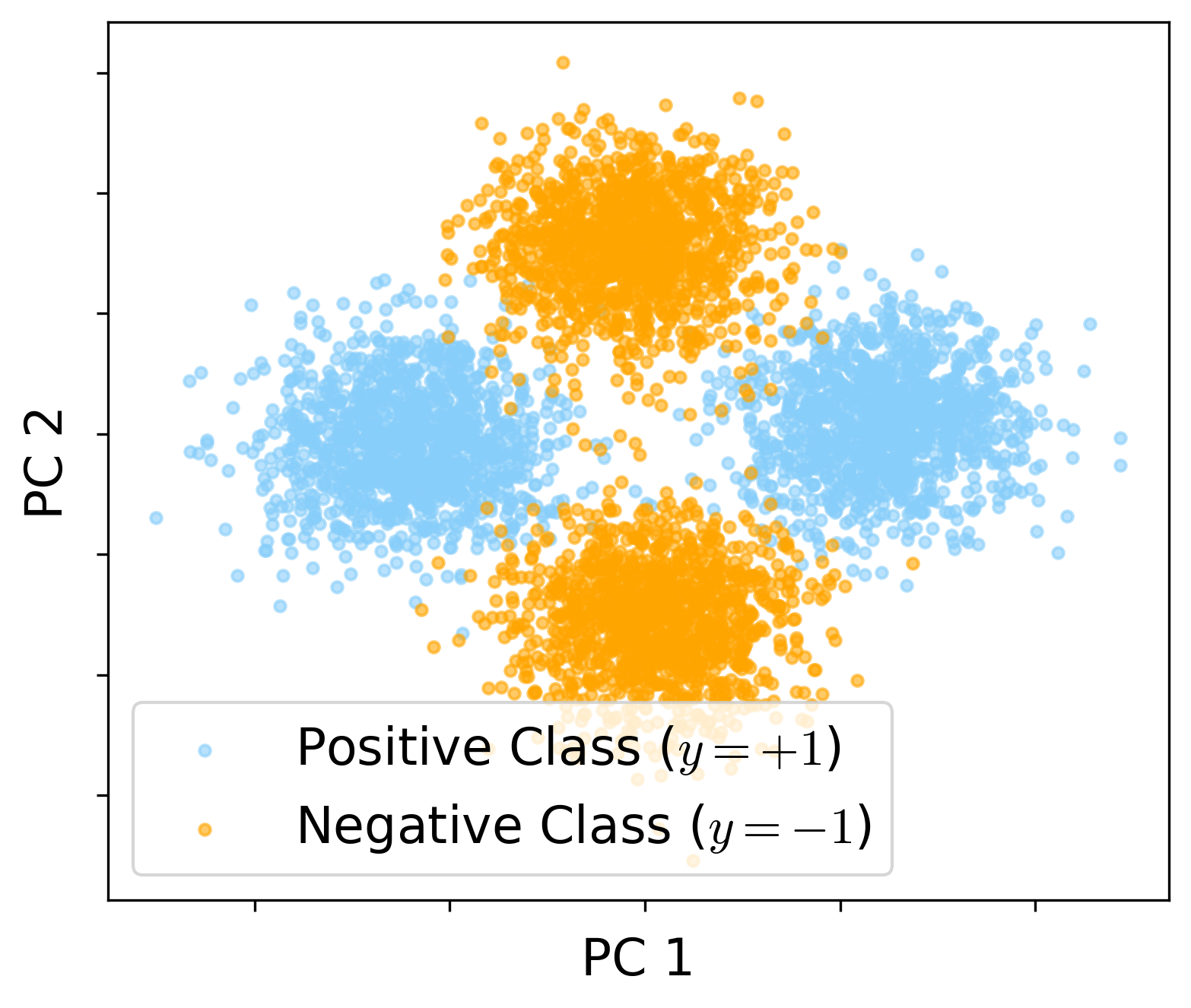}
    \end{minipage}
    \caption{\textbf{Finite-SNR proportional regime: linear classification of XOR fails.}
    Spectrum of the random CK matrix $\vK$ built from XOR data with $n=5000$, $N=d=15000$, SNR $r^2=36$, and $\sigma\propto\mathrm{ReLU}$.
    \textbf{Left:} empirical eigenvalue histogram of $\vK$ (blue) against the deformed
    MP law predicted by Theorem~\ref{thm:snr_finite_eig} (red curve); the
    red $\color{red}{\times}$ marks the predicted location of the two (coincident) outlier
    eigenvalues and the dashed lines mark the empirical values. The subfigure plots the two outlier eigenvectors against sample index, showing four-block pattern of the
    Gaussian-mixture components but not the binary labels.
    \textbf{Right:} kernel-PCA embedding using the top two principal components (PCs) of $\vK$
    (one point per sample, colored by its true label); the four clouds form the canonical
    XOR geometry and are \emph{not} linearly separable.
    The outlier eigenspace is asymptotically orthogonal to XOR label vector
    $\vy$ in this baseline case.}
    \label{fig:theorem3_figs}
\end{figure}
 
\subsection{Our Approach}
Much of the existing RMT literature on the CK matrix analysis lies in a
proportional regime ($n\asymp d\asymp N$), and it often involves the assumption of isotropic Gaussian data $\vZ\in\mathbb{R}^{d\times n}$. 
In this setting, a standard tool is a
\emph{linear equivalent} (LE) for the nonlinear feature matrix $\sigma(\vW\vZ)$.
This approach replaces the nonlinear features
with an affine Gaussian surrogate and an independent Gaussian noise term $\vN$:
\begin{equation}
\label{eq:LDE_intro}
    \sigma(\vW\vZ)\;\approx\; b_\sigma\,\vW\vZ \;+\; a_\sigma\,\vN,
    \qquad
    \vN_{ij}\iid\cN(0,1),
\end{equation}
for constants $a_\sigma,b_\sigma$ determined by $\sigma$. 
This approximation is also called \emph{Gaussian equivalence}
in prior work \citep{mei2019generalization,hu2020universality,goldt2021gaussian,bosch2023precise}. 
When applying the LE, one can show that the spectra of nonlinear CK matrices are asymptotically equivalent to a linear model, thus making classical RMT directly applicable. 
However, the LE also suggests that  the induced kernel behaves essentially like a
linear kernel plus isotropic noise; this can hide important nonlinear task structure. Figure~\ref{fig:theorem3_figs} shows that CK matrix with LE resembles a linear transform and its top PCs are orthogonal to the XOR labels. See Section~\ref{sec:numerics} for more details of the simulation.

For the XOR problem, the nonlinearity $\sigma$ is essential: it can transform a nonlinear classification task into one that is linearly accessible. 
To understand this, we vary the ML ``knobs'' listed above to identify regimes in which the LE approximation works, versus when it breaks down. 
To go beyond the classical LE in \eqref{eq:LDE_intro}, we develop a refined (quadratic) equivalent model that captures emergent spiked eigenvalues and eigenvectors carrying nonlinear information about CK and the dataset, thereby enabling linear classification on the nonlinear XOR problem. 
We refer to such spikes as \emph{quadratic informative spikes}. 
Addressing these questions will clarify the regimes in which NNs, and in particular various choices for training knobs of the ML training process, can recover and even amplify beneficial nonlinear features concealed in the dataset. 
 
\paragraph{Quadratic equivalents (QE).}
We model the XOR data as a low-rank
perturbation $\vX=\vZ+\vM$, where $\vZ$ is Gaussian noise and $\vM$ is a rank-two signal that encodes the XOR structure. Our goal is to understand how this signal
shows up in the CK spectrum. We go beyond the LE in \eqref{eq:LDE_intro}, which captures only the bulk spectrum, and derive a QE that precisely tracks all spikes via
a second-order Taylor approximation of $\sigma$. 
At a high level, this yields the decomposition
\begin{align}
\label{eq:QDE_null_intro}
    \sigma(\vW\vX)
    &\;\approx\;
    \underbrace{\sigma(\vW\vZ)}_{\text{bulk+uninformative spikes}}
    \;+\;
     \underbrace{\vT_1}_{\text{linear spikes}} 
    \;+\;
    \underbrace{ \vT_2}_{\text{quadratic spikes}} 
\end{align}
where $\vT_1$ and $\vT_2$ are finite-rank contributions induced by the signal
$\vM$ and $\sigma$ (see \eqref{eq:QE_formal} and Proposition~\ref{prop:approx}). 
In this QE, there are three terms.
The first term, $\sigma(\vW\vZ)$, determines the bulk spectrum; and it may also possibly induce \emph{uninformative outliers}, solely due to the architecture \citep{benigni2022largest}.
The second term, $\vT_1$, corresponds to a linear transformation of the signal $\vM$; while it can induce outliers, their eigenvectors turn out \emph{not} to be aligned with the XOR labels, meaning that they are not useful for nonlinear XOR classification. 
The third term, $\vT_2$, captures quadratic properties: depending on the correct tuning of the ML training ``knobs,'' this quadratic term $\vT_2$ can dominate and create \emph{quadratic informative spikes} (see \eqref{eq:QE_formal}).
This effectively provides a ``quadratic feature channel'' that makes the nonlinear XOR problem analyzable with RMT and linearly separable. In Section~\ref{sec:proof_idea}, we present the formal statement of this QE and the proof strategy of our main results.

\subsection{Our Contributions}\label{subsec:contrib}
We develop a RMT framework for the CK that is based on a QE and that captures when nonlinear
learnability emerges for XOR, in regimes where the LE behavior is insufficient. 
We are particularly interested in this emergence as a function of the following four ``knobs'' that are widely-used by ML practitioners (see Table~\ref{tab:summary}).
\begin{enumerate}[label=\textbf{\arabic*.}, leftmargin=*, itemsep=1.5pt]
  \item \textbf{Finite-SNR proportional limit ($n\asymp d$): negative results.}
  In the proportional regime with $\snr=\Theta(1)$, CK can exhibit outlier
  eigenvalues, but their eigenvectors do \emph{not} align with XOR labels; hence spectral methods with linear readouts on CK features fail to classify XOR
  (Theorem~\ref{thm:snr_finite_eig}; Figure~\ref{fig:theorem3_figs}). 
  We also prove failure for spectral clustering of Euclidean distance kernel.

  \item \textbf{Large-SNR proportional limit: emergence of quadratic informative spike.}
  We prove that with increasing SNR scaling and with proper activation choice, a quadratic informative outlier separates and aligns with the XOR labels, enabling linear spectral clustering/classification
  (Theorem~\ref{thm:large}).

  \item \textbf{Pretrained/spiked features: test-time BBP transition.}
  A low-rank perturbation of a random $\vW$ can yield the emergence of an outlier whose eigenvector aligns with the induced task
  direction (Theorem~\ref{thm:trained}) even for finite-SNR. Then, empirically, we show that linear spectral clustering occurs for CK matrix with weights from a pretrained NN on CIFAR-2 in Section~\ref{sec:numerics}.

  \item \textbf{Quadratic sample-size regime ($n\asymp d^{2}$): a quadratic kernel.}
  We prove that in this case the linear-width projected CK model ($n\asymp N$) is determined by a quadratic polynomial kernel and has an informative spike aligned with XOR labels (Theorem~\ref{thm:quadratic}).  

\end{enumerate}

\begin{table}[h]
\centering
\renewcommand{\arraystretch}{1.4}
\begin{tabular}{@{}lcccc@{}}
\toprule
\textbf{Regime} & \textbf{SNR $r^2$} & \textbf{Sample} & \textbf{Weights} & \textbf{Linear Label Align?} \\
\midrule
\rowcolor{mLightRed}
Case 1 in Section~\ref{sec:finitesnr} (Baseline) & $\Theta(1)$ & $n \asymp d$ & Random & \textcolor{mRed}{\textbf{No}} \\
\rowcolor{mLightYellow}
Case 2 in Section~\ref{subsec:large-snr} (Large SNR) & $\Theta(n^{1/2})$ & $n \asymp d$ & Random & \textcolor{mDarkGreen}{\textbf{Yes}} \\
\rowcolor{mLightYellow}
Case 3 in Section~\ref{subsec:trained} (Pretrained weight) & $\Theta(1)$ & $n \asymp d$ & Spiked & \textcolor{mDarkGreen}{\textbf{Yes}} \\
\rowcolor{mLightYellow}
Case 4 in Section~\ref{subsec:quadratic-regime} (Quad. sample size) & $\Theta(1)$ & $n \asymp d^2$ & Random & \textcolor{mDarkGreen}{\textbf{Yes}} \\
\bottomrule
\end{tabular}
\caption{\small Summary of different knobs. From Cases 2--4, $c_\sigma \neq 0$ is \textbf{necessary} in all these cases. If $c_\sigma = 0$ (e.g., tanh after centering/normalization), linear classification via CK is impossible in all cases. Here `$\textcolor{mDarkGreen}{\textbf{Yes}}$' represents non-trivial label alignment of CK spikes above BBP phase transition.}
\label{tab:summary}
\end{table}

\vspace{-5mm}
\section{Additional Literature Review}
\label{sec:literature}
 
\paragraph{Simplicity bias and low-rank structure.}
Recent work suggests that deep networks and transformers exhibit an implicit
\emph{distributional simplicity bias}, learning simpler (lower-order)
statistical structure of the data earlier in training
\citep{belrose2024neural,rende2024distributional}. Complementarily,
\citet{huh2023the} report a \emph{low-rank simplicity bias} in deep networks,
arguing that depth/over-parameterization can implicitly favor low effective-rank
feature representations. From a broader perspective,
\citet{wilson2025position} argues that many generalization phenomena often viewed
as ``mysterious'' in deep learning can be reconciled with classical theory via
soft inductive biases.

\paragraph{Spectrum of kernel random matrices.}
The study of kernel random matrices and their spectral distributions has become a major
topic in RMT.
Global convergence of the empirical spectral distribution (ESD) is known for
broad classes of nonlinear kernels
\citep{el2010spectrum,cheng2013spectrum,do2013spectrum}. In the proportional regime, \citet{cheng2013spectrum} and \citet{do2013spectrum}
characterize limiting spectra of inner-product kernels. 
The CK and NTK can be viewed as random kernels, whose spectra were
further studied \citep{fan2020spectra,adlam2020neural,peche2019note,pennington2017nonlinear,dabo2024traffic,guionnet2026global} and connected to generalization in random feature regression
\citep{mei2019generalization,adlam2020neural} and memorization of spurious feature \citep{bombari2024spurious} .

\paragraph{Nonlinear spiked covariance matrices.}
Spiked models exhibit phase transitions where informative eigenvalues separate
from the bulk \citep{baik2005phase,peche2006largest}. Recent work extends this
to \emph{nonlinear} random matrices, i.e., matrices whose entries are nonlinear
functions of high-dimensional random projections (such as $\sigma(\vW\vX)$),
leading to nonlinear analogues of BBP transitions and alignment formulas
\citep{wang2021deformed,feldman2023spectral,ba2023learning,wang2024nonlinearspikedcovariancematrices,guionnet2023spectral,benigni2022largest}. 
Outlier eigenstructure of CK and NTK has been studied \citet{li2025eigen}, using the deterministic equivalence from \citet{gu2022lossless}. 
Although they can deal with random deep networks and general Gaussian mixture datasets, their CKs and NTKs are expected kernels (namely first take width $N\to\infty$) and only equivalent to linear kernels.

\paragraph{Polynomial sample-size regimes.}
In the proportional regime ($n\asymp d$), LE/Gaussian-equivalence phenomena
often imply linear-kernel behavior for broad families of kernels. Moving to
polynomial regimes ($n\asymp d^k$), several works show that many kernel matrices
behave like polynomial kernels of degree $k$ (with rigorous deterministic
equivalents and sharp asymptotics)
\citep{xiao2022precise,lu2022equivalence,cheng2024dimensionfreeridgeregression,misiakiewicz2024nonasymptotictheorykernelridge}.
In the quadratic regime ($k=2$), nonlinear kernels can reduce in operator norm
to quadratic polynomial kernels \citep{ghorbani2019limitations,pandit2024universalitykernelrandommatrices}, enabling classification of quadratically separable data such as XOR
\citep{refinetti2021classifying}. \citet{wen2025does} recently proved a conditional Gaussian equivalence for random feature model at quadratic regime to show the asymptotic generalization error in this case.
 
\paragraph{Gaussian mixtures and XOR problem.}
Gaussian mixture classification, specifically the XOR mixture has become an important testbed for statistics and ML to evaluate various algorithms.
\citet{refinetti2021classifying} documented a sharp contrast between trained
two-layer networks and kernel methods on such mixtures, motivating a spectral
analysis of when (and why) kernels can succeed. 
Related RMT analyses for mixture
classification with kernels appear in \citep{liao2019innerkernelhd,couillet2019hdmrobust}. 
\citet{glasgow2023sgd} showed how training two layer NNs with SGD can learn this XOR problem even with a sample complexity $\tilde O(d)$. 
Recently, \cite{taheri2025theory} analyzed continual learning by studying one-hidden-layer quadratic NNs trained with gradient descent on an XOR dataset. \citet{demir2025onegrad} analyzed two-layer networks after one gradient step on $\vW$ with Gaussian-mixture data using higher-order polynomial equivalence. 
This motivates our spiked feature case in Section \ref{subsec:trained}, while our result focuses on label-aligned outliers for linear classification on XOR, unlike trained feature regression tasks in \citet{demir2025onegrad}. 

\section{Notations and Preliminaries}
\label{sec:model_data}
 
\paragraph{Settings of one-layer NN models.} We study the outputs of one-layer NNs when fed a dataset $\vX\in\R^{d\times n}$. 
Here, $d$ is the input dimension and $n$ is the sample size. 
We consider different scalings between $n$ and $d$ to determine the sample complexity of the model.
Let the weight matrix at random initialization be $\vW\in\R^{N\times d}$ with $[\vW]_{ij}\iid\cN(0,1)$ for $i\in [N],j\in[d]$, where $N$ is the width of NN. 
Denote the output of our one-layer NN and the CK matrix as
\begin{equation}\label{eq:output_Y}
    \vY:=\frac{1}{\sqrt{N}}\sigma (\vW {\vX}), \qquad \vK:=\vY^\top\vY ,
\end{equation}
where $\sigma:\R\to\R$ is a nonlinear activation. Our target is the eigenstructure of this CK matrix $\vK$ when $\vX$ has a nonlinear separable pattern. 
We now introduce the assumptions of the activation function $\sigma$.
\begin{assum}
\label{assump:sigma}
Assume $\sigma$ is three-times differentiable with
$\sup_{x\in\R}|\sigma'(x)|,|\sigma''(x)|,|\sigma'''(x)|\le \lambda_\sigma$, for some $\lambda_\sigma\in(0,\infty)$, and is centered and normalized with respect to
$\xi\sim\cN(0,1)$: $\E[\sigma(\xi)]=0$ and $\E[\sigma(\xi)^2]=1.$ Define the first and second order Hermite coefficients of $\sigma$ as
\begin{align}
    b_\sigma &:= \E[\sigma'(\xi)] = \E[\xi\,\sigma(\xi)] \in \R,
    \label{eq:b_sigma_def}\\
    c_\sigma &:= \E[\sigma''(\xi)] = \E[(\xi^2-1)\sigma(\xi)] \in \R.
    \label{eq:c_sigma_def}
\end{align}
\end{assum}
Many smooth activations satisfy Assumption~\ref{assump:sigma} after centering and normalization (e.g., $\tanh$, $\mathrm{erf}$, sigmoid, Softplus, GELU). While ReLU is not $C^{3}$, it can be approximated by smooth functions; we keep Assumption~\ref{assump:sigma} to avoid technical distractions.

\paragraph{XOR dataset.} 
Our dataset $\vX$ is a binary class Gaussian Mixture dataset with four balanced clusters. We denote the collection of $n$ data points by $\vX=[\vx_1,\ldots,\vx_n]\in\R^{d\times n}$, where
$
\vx_i\iid  \mathcal N(\frac{1}{\sqrt{d}}\vmu_{a_i},\frac{1}{d}\ident_d)
$ for $i\in[n]$, where $\vmu_{a_i}\in\R^d$ is the mean of one of the cluster defined in \eqref{eq:cluster_means} for $a_i\in[4]$. Without loss of generality, we reorder the data points so that we can write $\vX$ as 
\begin{equation}
\label{eq:XOR}
    \vX=\vM+\vZ ,
\end{equation}
where $\vZ \in\R^{d\times n}$ has i.i.d. entries with distribution $\mathcal{N}(0, \frac{1}{d} )$ and 
\begin{equation}\label{eq:def_M}
    \vM = r\sqrt{\frac{n}{2d}}\big(\vu_1\cdot \bm{v}_1 ^\top +  \vu_2\cdot \bm{v}_2^\top\big), 
\end{equation}
for some $r\ge 0$, with unit-norm vectors
\begin{equation}\label{eq:u_1u_2}
    \vu_1:= \frac{1}{\sqrt{d}}\begin{bmatrix}\mathbf{1}_{d/2} \\ -\mathbf{1}_{d/2}\end{bmatrix}, \quad \vu_2:=\frac{1}{\sqrt{d}}\begin{bmatrix}\mathbf{1}_{d/2} \\ \mathbf{1}_{d/2}\end{bmatrix}, \quad \vv_1 = \sqrt{\frac{2}{n}}\begin{bmatrix}
\mathbf{1}_{n/4} \\ -\mathbf{1}_{n/4} \\ \mathbf{0}_{n/2}
\end{bmatrix}, \quad
\vv_2 = \sqrt{\frac{2}{n}}\begin{bmatrix}
\mathbf{0}_{n/2} \\ \mathbf{1}_{n/4} \\ -\mathbf{1}_{n/4}
\end{bmatrix}.
\end{equation}
Here, the signal-to-noise ratio $(\snr)$ is defined as $\mathrm{SNR} := r^2$. In this case, the mean of each cluster is 
\begin{equation}\label{eq:cluster_means}
    \vmu_1=r\cdot\vu_1,
    \quad\vmu_2= r\cdot\vu_2,\quad
    \vmu_3=-\vmu_1,
    \quad\vmu_4=-\vmu_2.
\end{equation}
Following the above definition of $\vX$, we consider a binary classification dataset \(\{(\vx_i,y_i)\}_{i=1}^n\) with $y_i\in\{\pm1\}$.
We assume the dataset is \emph{not} linearly separable in the input space, i.e., there do not exist \(\vw\in\R^d\) and \(b\in\R\) such that $ y_i(\vw^\top \vx_i+b)>0,$ for all $i\in[n].$
For simplicity, in this case, the labels of data $\vX$ are defined by  
\begin{equation}\label{eq:y_def}
    \bm{y}^\top=\{y_i\}_{i=1}^n=[\mathbf{1}_{n/2}^\top,
-\mathbf{1}_{n/2}^\top].
\end{equation}
Our goal is to study the emergence of outlier eigenvalues in the spectrum of $\vK=\vY^\top\vY$ defined by \eqref{eq:output_Y}, and verify whether the corresponding eigenvectors (spikes) are aligned to this label vector~$\vy$, in order to (partially) recover the classes of $\vX$ defined by \eqref{eq:XOR}.

\paragraph{Marchenko–Pastur law.} 
Given a noise sample covariance $\vS=\vZ^\top\vZ \in \mathbb{R}^{n\times n}$ from \eqref{eq:XOR}, assume $d,n\to\infty$ with $\frac{n}{d}\to \psi \in(0,\infty)$. Then the ESD of $\vS$ converges weakly almost surely to the Marchenko--Pastur (MP) law \citep{marchenko1967}, denoted by $\rho^{\mathrm{MP}}_{\psi}$.
Let $a \;=\; (1-\sqrt{\psi})^2,$ and $b \;=\; (1+\sqrt{\psi})^2.$ When $\psi\le 1$, the MP law has an absolutely continuous part on $[a,b]$ with density
\[
\rho^{\mathrm{MP}}_{\psi}(x)
\;:=\;
\frac{1}{2\pi \psi x}\sqrt{(b-x)(x-a)}\;\mathbf{1}_{[a,b]}(x),
\]
and, when $\psi>1$, the density function is $\rho^{\mathrm{MP}}_{\psi}
=
\left(1-\frac{1}{\psi}\right)\delta_0
+
\rho^{\mathrm{MP}}_{\psi}(x)\,dx.$

\paragraph{Stieltjes transform.}
The Stieltjes transform provides an analytical tool for studying the limiting ESD in RMT. For $z\in\mathbb{C}\setminus[0,\infty)$, define the Stieltjes transform of $\rho^{\mathrm{MP}}_{\psi}$ by $m_{\psi}(z)
\;:=\;
\int_{\mathbb{R}}\frac{1}{x-z}\,\rho^{\mathrm{MP}}_{\psi}(dx).$
This Stieltjes transform fully captures the spectral distribution $\rho^{\mathrm{MP}}_{\psi}$, as we can recover the density from an inversion formula \citep{bai2010spectral}:  for $x>0$
$$\rho^{\mathrm{MP}}_{\phi}(x)
\;=\;
\frac{1}{\pi}\lim_{\eta\downarrow 0}\operatorname{Im} m_{\phi}(x+i\eta).$$

\paragraph{Limiting spectral distribution of CK.}
From \citet{pennington2017nonlinear,louart2018random,benigni2019eigenvalue,fan2020spectra}, we know the limiting ESD of $\vK$ is a deformed MP law, under certain conditions of $\vX$ and the proportional regime, denoted by
\begin{equation}\label{eq:deformedMP}
    \mu:=\rho^{\MP}_{\phi}\boxtimes \nu, \quad \nu:=(1-b_\sigma^2)+b_\sigma^2 \rho^{\MP}_{\psi},
\end{equation}
where $\rho^{\MP}_{\phi}$ is a standard MP law with aspect ratio $\phi\in(0,\infty)$ and $\nu$ is a shift of MP law $\rho^{\MP}_{\psi}$ with aspect ratio $\psi\in(0,\infty)$ which is compactly supported in $[0,\infty)$. Here $\boxtimes$ denotes free multiplicative convolution. We can fully characterize this deformed MP law $\mu$ by its Stieltjes transform 
\begin{equation}\label{eq:stieltjes}
    m_{\mu}(z)=\int\frac{1}{x-z} d\mu(x).
\end{equation}
We refer to \citep{bai2010spectral,fan2020spectra,wang2024nonlinearspikedcovariancematrices} for more details of free multiplicative convolution and these Stieltjes transforms. 
We denote the following transforms: 
\begin{align}
    z(s) = & -\frac{1}{s}+\phi \int\frac{x}{1+x s}\nu(dx),\qquad &\varphi(s) =& -\frac{sz'(s)}{z(s)}, \label{eq:z(s)}\\
    T(s) =&\frac{z(s)s^2-(\phi-1)s}{\phi}, \qquad &T^{-1}(t)=&-\frac{t\bigl(1-\psi b_\sigma^2\,t\bigr)}
{1+\bigl(1-\psi b_\sigma^2\bigr)t-\psi b_\sigma^2(1-b_\sigma^2)t^2},\label{eq:T_def}
\end{align}
which is used to characterize outliers of $\vK$. More properties of these transforms are given by Appendix \ref{subsec:z_toolkit}.
\section{Main Results}\label{sec:results}
We now present our main results on the outlier eigenvalues and eigenvectors of $\mathbf{K}$ across the regimes summarized in Table~\ref{tab:summary}.
In the first three regimes, we consider proportional limit, where $d,n,N\to\infty$ proportionally: Section~\ref{sec:finitesnr} studies finite $\mathrm{SNR}=\Theta(1)$ regime; 
Section~\ref{subsec:large-snr} analyzes large $\mathrm{SNR}=\Theta(n^{1/4})$ case; and
Section~\ref{subsec:trained} replaces the random weight $\mathbf{W}$ with a pretrained weight. 
Then, Section~\ref{subsec:quadratic-regime} considers the quadratic sample-size regime $n\asymp d^2$, with finite SNR and random weights.
In each case, we predict the emergence of spikes of $\vK$ and whether their eigenvectors enable \emph{linear} classification of $\vX$.

\subsection{Proportional Limit with Finite SNR Case}
\label{sec:finitesnr} 
We first consider a baseline regime in which all dimensions grow proportionally and the $\snr$ of $\vX$ is finite. 
We show that the outlier eigenvectors cannot be used to linearly classify the dataset, even partially. 
 
\begin{assum}[Proportional limit] \label{assump:asymptotics}
    Let $n,d,N\to \infty$ such that	$n/N \to \phi\in (0,+\infty)$ and $n/d \to \psi\in (0,+\infty)$ for some fixed constants $\phi$ and $\psi$.
\end{assum}
We denote some constants used to define the BBP thresholds in the following theorem by
\begin{equation}
\label{eq:spikes}
\begin{aligned}
\tau:=\frac{c_\sigma^2}{2}\psi,
\qquad
\beta_{\rm lin}:=\frac{r^2\psi}{2}b_\sigma^2,
\qquad
\tau_{\rm crit}:=b_\sigma^2\sqrt\psi(1+\sqrt\psi).
\end{aligned} 
\end{equation}
Recall the definition of $T(\cdot)$ and $T^{-1}(\cdot)$ in \eqref{eq:T_def}. When $\tau>0$ and $\beta_{\rm lin}>0$, respectively, we define 
\[
s_{\rm un}:=T^{-1}(1/\tau), \quad s_{\rm lin}:=T^{-1}(1/\beta_{\rm lin}).
\]

\begin{theorem}
\label{thm:snr_finite_eig}
Under Assumptions~\ref{assump:sigma} and \ref{assump:asymptotics}, we further assume that $r\ge 0$ is a constant. Consider $\vK$ with XOR data $\vX$ defined in Section \ref{sec:model_data}. Recall the transforms $z(\cdot),\varphi(\cdot)$ and $T(\cdot)$ defined by \eqref{eq:z(s)} and \eqref{eq:T_def}. Denote $\vu:=\frac{1}{\sqrt n}\mathbf 1_n,$ and $\vm:=\E[\sigma(\vw^\top\vZ)\mid \vZ]\in\R^n$ conditioned on $\vZ$, where $\vw\sim\cN(0,\vI_d)$. Assume in addition that $s_{\rm un}\neq s_{\rm lin}$ if both are nonzero.
Then the CK matrix $\vK$ satisfies the following.
\begin{enumerate}[label=\textbf{(\roman*)},leftmargin=*]
\item (\textbf{Bulk.}) The ESD of $\vK$ converges weakly in probability to $\mu$ defined in \eqref{eq:deformedMP}.
\item \textbf{(Uninformative outliers.)}
If $c_\sigma\neq0$ and $z'(s_{\rm un})>0$, then $\vK$ has two eigenvalues outside the support of $\mu$ satisfying
$
\widehat\lambda_{{\rm un},1},\ \widehat\lambda_{{\rm un},2}=z(s_{\rm un})+o_\P(1).
$
Let $\widehat\vP_{\rm un}$ be the spectral projector onto this two-dimensional outlier eigenvalues. Then
\begin{align}
\|\widehat\vP_{\rm un}\vu\|^2&=\varphi(s_{\rm un})+o_\P(1),\label{eq:thm_un_proj_u}\\
\left\|\widehat\vP_{\rm un}\frac{\vm}{\|\vm\|}\right\|^2&=-\frac{\varphi(s_{\rm un})}{\tau^2 s_{\rm un}^2T'(s_{\rm un})}+o_\P(1),\label{eq:thm_un_proj_m}\\
\vu^\top\widehat\vP_{\rm un}\frac{\vm}{\|\vm\|}&=o_\P(1).\label{eq:thm_un_proj_cross}
\end{align}
Besides, when $b_\sigma=0$, \eqref{eq:thm_un_proj_u} and \eqref{eq:thm_un_proj_m} are asymptotically the same.
If $c_\sigma=0$ or $z'(s_{\rm un})\le0$, then $\vK$ has no such uninformative outliers.

\item \textbf{(Linear spikes from XOR.)}
If $z'(s_{\rm lin})>0$ and $\beta_{\rm lin}>0$, then $\vK$ has additional two outlier eigenvalues outside the support of $\mu$ satisfying
$
\widehat\lambda_{{\rm lin},1},\ \widehat\lambda_{{\rm lin},2}=z(s_{\rm lin})+o_\P(1),
$
and the spectral projector $\widehat\vV_{\rm lin}$ of the corresponding two-dimensional eigenspace satisfies
\begin{equation}\label{eq:alignment_finite_snr}
\big\|(\vv_1\vv_1^\top+\vv_2\vv_2^\top)\widehat\vV_{\rm lin}\big\|_F^2
=2\cdot\left(-\frac{\varphi(s_{\rm lin})}{\beta_{\rm lin}^{\,2}s_{\rm lin}^2T'(s_{\rm lin})}\right)+o_\P(1).
\end{equation}
If $\beta_{\rm lin}=0$ or $z'(s_{\rm lin})\le0$, then $\vK$ has no such linear outliers.

\item \textbf{(No label alignment.)}
Let $\widehat\vV_{\rm out}$ denote the orthogonal projector onto the direct sum of all separated outlier clusters described in \textbf{(ii)}--\textbf{(iii)} if they exist. Then, for the XOR labels $\vy$,
\begin{equation}\label{eq:alignment_y_case1}
\frac1n\|\widehat\vV_{\rm out}\vy\|^2=o_\P(1).
\end{equation}
\end{enumerate}
\end{theorem}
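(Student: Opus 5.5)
The plan is to reduce $\vK$ to a spiked model built on the noise-only CK and then apply a nonlinear-BBP master equation. First, Taylor-expand $\sigma$ entrywise around $\vW\vZ$:
\[
\sigma(\vW\vX)=\sigma(\vW\vZ)+\sigma'(\vW\vZ)\odot(\vW\vM)+\tfrac12\sigma''(\vW\vZ)\odot(\vW\vM)^{\odot2}+\text{remainder}.
\]
With $r=\Theta(1)$, the entries of $\vW\vM$ are $O(1/\sqrt d)$ in size and have rank two. We would then replace $\sigma'(\vW\vZ)$ by its mean $b_\sigma$ and $\sigma''(\vW\vZ)$ by $c_\sigma$, and argue that the fluctuations give $o(1)$ contributions in operator norm after the $1/\sqrt N$ normalization. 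This follows the QE of Proposition~\ref{prop:approx}, which we assume. The quadratic term $(\vW\vM)^{\odot 2}$ has entries of order $1/d$, so its normalized norm is $O(\sqrt{nN}/(d\sqrt N))=O(1)$ at most. Its column structure, however, is a function only of which cluster a sample belongs to, and $(\vw_i^\top\vmu_a)^2$ is identical for $a$ and $-a$. The quadratic term can therefore be absorbed into a low-rank term living on $\mathrm{span}\{\mathbf 1,\ \mathbf 1_{\{1,3\}}-\mathbf 1_{\{2,4\}}\}$. Since $\vy$ splits clusters as $\{1,2\}$ versus $\{3,4\}$, this span is orthogonal to $\vy$ anyway. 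At finite SNR its magnitude is $r^2\psi/d$-small relative to the bulk, and I expect it to be negligible.

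The resulting equivalent is
\[
\vY\approx \tfrac1{\sqrt N}\sigma(\vW\vZ)+\tfrac{b_\sigma}{\sqrt N}\vW\vM .
\]
For part (i), the bulk follows because a finite-rank perturbation does not move the ESD, combined with the known deformed MP limit for $\sigma(\vW\vZ)$ from \citep{fan2020spectra}. For the outliers, we would use the result that $\frac1{\sqrt N}\sigma(\vW\vZ)$ has a rank-one mean-type spike. Writing $\sigma(\vW\vZ)=\mathbf 1\vm^\top+(\text{centered part})$, the term $\vm\approx\frac{c_\sigma}{2}(\mathrm{diag}\text{ of }\vZ^\top\vZ-1)$-type corrections give $\|\vm\|^2\to\tau$, which is the uninformative spike. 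In \citep{benigni2022largest,wang2024nonlinearspikedcovariancematrices} the two eigenvalues at $z(s_{\rm un})$ come from the pair of directions $\vu,\vm/\|\vm\|$.

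The remaining step is the standard resolvent/determinant argument. Write $\vK=\vK_0+\vU\vC\vU^\top$ with $\vU$ spanning at most four directions $\vu,\vm,\vv_1,\vv_2$. The outliers are then the solutions $z$ outside $\mathrm{supp}\,\mu$ of $\det(\vI+\vC\,\vU^\top\vG_0(z)\vU)=0$, where $\vG_0=(\vK_0-z)^{-1}$. Anisotropic local laws for the linearized CK give $\vU^\top\vG_0\vU\to$ a deterministic matrix expressed through $T(s)$, with $z=z(s)$. Because $\vv_1,\vv_2$ are deterministic and asymptotically orthogonal to $\vu$ and $\vm$, this matrix is block diagonal, and we get the equations $T(s)=1/\tau$ and $T(s)=1/\beta_{\rm lin}$, each with multiplicity two. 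The multiplicity comes from the symmetric pair $\vu,\vm$ in one block and from $\vv_1,\vv_2$ in the other. The condition $z'(s)>0$ is exactly the condition that the solution lies outside the support. The eigenvector projections, \eqref{eq:thm_un_proj_u}--\eqref{eq:alignment_finite_snr}, follow from residue calculus on $\vU^\top\widehat\vG\vU$, which produces the factor $-\varphi/(\beta^2s^2T')$. The assumption $s_{\rm un}\neq s_{\rm lin}$ keeps the two clusters separated, so the projectors can be computed separately.

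For part (iv), the outlier projector satisfies $\widehat\vV_{\rm out}=\vU\vA\vU^\top+o_\P(1)$ when tested against deterministic-like vectors, in the sense of isotropic resolvent estimates on contours. It then suffices to show that $\vy$ is asymptotically orthogonal to $\vu,\vm,\vv_1,\vv_2$, and that $\vy^\top\vG_0\vx\to0$ for these $\vx$. We have $\vy\perp\mathbf 1$ exactly and $\vy\perp\vv_1,\vv_2$ exactly, because $\vv_1$ is supported within the first half with zero sum. For $\vm$, its entries are functions of $\|\vz_i\|^2$, which are i.i.d. and independent of labels, so $\vy^\top\vm/\|\vm\|=O_\P(n^{-1/2})$. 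The isotropic local law then gives $\vy^\top\vG_0\vx\approx m(z)\,\vy^\top\vx$, and the conclusion follows. The main obstacle is rigorously justifying the negligibility of the quadratic and fluctuation terms $(\sigma'(\vW\vZ)-b_\sigma)\odot\vW\vM$ in operator norm, uniformly enough to pass to resolvents near the outlier locations. For this I would try leave-one-out and Hanson–Wright bounds on the entries of $\vW\vM$ together with a moment method.
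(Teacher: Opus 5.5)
Your architecture matches the paper's: QE, dropping $\vT_2$, centering the null features, a finite-rank outlier equation, and contour integrals for (iv). However, the step that produces the uninformative pair does not work as written.

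First, the perturbation is not of the form $\vK=\vK_0+\vU\vC\vU^\top$ with $\vU\subset\mathrm{span}\{\vu,\vm,\vv_1,\vv_2\}$. The mean and linear spikes are additive in the \emph{feature} matrix, $\vY^\sharp=\vG+\sum_k\sqrt{\beta_k}\,\va_k\vb_k^\top$, with left vectors $\mathbf 1_N/\sqrt N$ and $\vg_k/\|\vg_k\|$. So the Gram perturbation $\vG^\top\vP+\vP^\top\vG+\vP^\top\vP$ also involves the random vectors $\vG^\top\va_k$. Quadratic forms of $(\vK_0-\lambda)^{-1}$ alone only give $m_\mu$-type limits. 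The quantity $T(s)=\lambda s\,m_\mu(\lambda)$ appears only after linearization, through $\det\bigl(\vI-\lambda\vTheta\vA(\lambda)\vTheta\vB(\lambda)\bigr)$ with $\va_k^\top\vQ_L(\lambda)\va_\ell\to s(\lambda)\delta_{k\ell}$ (Lemma \ref{lem:finite_det_eq}). That in turn requires decoupling $\vg_k=\vW\vu_k$ from the bulk (Lemmas \ref{lem:W_decomp}--\ref{lem:G_indep_g}).

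More seriously, $\vu$ is not an additive spike direction. It is the population-covariance spike $\tau\vu\vu^\top$ of the centered rows, coming from $\frac{c_\sigma^2}{2}(\vS^{\odot2}-\vI)\approx\frac{c_\sigma^2}{2}\frac nd\vu\vu^\top$, and you never derive it. So the centered Gram $\vK_0=\vG^\top\vG$ already has an outlier at $z(-1/\Lambda_\tau)$. Your ``symmetric pair'' silently assumes two nontrivial facts:
\begin{itemize}
\item this multiplicative-spike location coincides with the additive mean-spike location $z(T^{-1}(1/\tau))$, which needs the identity $T(-1/\Lambda)=-m_\nu(\Lambda)$ of Lemma \ref{lem:uninf_branch_identity} together with $\|\vm\|^2\to\tau$ from Lemma \ref{lem:mean_strength_orth};
\item eigenvalues can be counted even though $(\vK_0-\lambda)^{-1}$ has a pole exactly where you solve the determinant.
\end{itemize}
The paper removes that pole with projected resolvents, checks $\vb_k^\top\widehat\vu_\tau,\ \va_k^\top\widehat\vw_\tau=o_\P(1)$, and applies Rouch\'e (Lemmas \ref{lem:tau_outlier_stability_collision}--\ref{lem:mean_collision_cluster}). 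You need a counterpart to this. Alternatively, you need a justified representation of the covariance spike as an additive term $\sqrt\tau\,\vxi\vu^\top/\sqrt N$ with $\vxi$ asymptotically independent of the bulk, which is exact in law only for Gaussian rows.

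Second, even granting the symmetric block, your residue computation gives $-\varphi/(\tau^2s^2T')$ for $\vu$ as well as for $\vm$, whereas \eqref{eq:thm_un_proj_u} asserts $\varphi(s_{\rm un})$. At $\tau T(s)=1$ one has $-\tau^2s^2T'(s)=1+\tau^2s^4\Var_\nu\bigl(\lambda/(1+\lambda s)\bigr)$, so the two agree only when $b_\sigma=0$. As written, you do not obtain \eqref{eq:thm_un_proj_u}.

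Be aware, though, where the paper's $\varphi(s_{\rm un})$ comes from. It is Lemma \ref{lem:K0_bulk_and_tau_outlier}(iii), which treats $\vu$ as the spike eigenvector of $\vSigma_0+\tau\vu\vu^\top$. For $b_\sigma\neq0$ that eigenvector is $\propto(\Lambda_\tau\vI-\vSigma_0)^{-1}\vu$, not $\vu$. The paper's own Sherman--Morrison limit $\vu^\top\vQ_R\vu\approx m_\mu/(1-\tau T)$ (Lemma \ref{lem:large_final_resolvents_with_pole}) has residue $-\varphi/(\tau^2s^2T')$. So your value appears to be the correct one for $b_\sigma\neq0$. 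Either way, the discrepancy has to be addressed explicitly, not glossed over.

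Separately, your structural reason for dropping the quadratic term is wrong. In fact $\vv_1^{\odot2}-\vv_2^{\odot2}=\frac2n\vy$: the XOR label is the axis indicator $\{\pm\vu_1\}$ versus $\{\pm\vu_2\}$, which is exactly why the quadratic channel becomes label-informative in Theorem \ref{thm:large}. Your ``$\{1,2\}$ versus $\{3,4\}$'' labeling would make $\vy\propto\vv_1+\vv_2$, contradicting your own use of $\vy\perp\vv_1,\vv_2$ in (iv). What actually kills $\vT_2$ is its size: your Frobenius estimate is $\sqrt n/d=O(n^{-1/2})$, not $O(1)$, in line with Lemma \ref{lem:T2_negligible}.
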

Theorem~\ref{thm:snr_finite_eig} exhibits a BBP-type phase transition for the CK matrix with XOR data: e.g., $z'(s_{\rm lin})>0$ is the threshold showing when spikes separate from the deformed MP bulk $\mu$; in the separated case, the associated eigenvectors have nontrivial asymptotic overlap with certain directions.
There are up to four candidate spikes outside $\supp{\mu}$: two outliers (Theorem \ref{thm:snr_finite_eig}(iii)) induced by the linear Hermite component of $\sigma$ and spike strength $s_{\rm lin}$ depending on $r$, and two potential ``architectural'' outliers (Theorem \ref{thm:snr_finite_eig}(ii)) corresponding to $c_\sigma\neq 0$ \citep{benigni2022largest} . 
The latter is ``uninformative'' in the sense that it may persist even in the null model $r=0$ (pure noise data). 
See Section~\ref{sec:numerics} for further discussion. 
To focus on informative spikes, previous works always impose $c_\sigma=0$ \citep{wang2024nonlinearspikedcovariancematrices}.
The proof of Theorem~\ref{thm:snr_finite_eig} is given in Appendix~\ref{app:proof_finite_snr}.

\begin{remark}
In Theorem \ref{thm:snr_finite_eig}(ii), there is a necessary condition of the BBP threshold $z'(s_{\rm un})>0$ for the emergence of uninformative spikes: $z'(s_{\rm un})>0\implies\tau>\tau_{\rm crit}.$
Hence, from \eqref{eq:spikes}, uninformative spikes can be removed if \(\tau\le \tau_{\rm crit}\) holds, namely $\sqrt{\psi}\bigl( \frac{c_\sigma^2}{2} - b_\sigma^2 \bigr) \le b_\sigma^2.$ When the second Hermite coefficient $c_\sigma/\sqrt{2}$ is smaller than the first $b_\sigma$ in absolute value (for instance, a centered and normalized GELU function in Appendix~\ref{app:gelu}), this condition is satisfied for all $\psi>0$, and thus no uninformative spike emerges.
\end{remark}

\begin{remark}
From \eqref{eq:alignment_y_case1} in Theorem~\ref{thm:snr_finite_eig}(iv), all separated outlier eigenvectors are asymptotically orthogonal to the label vector $\vy$. Thus, these outliers cannot be used for linear classification. Although the spikes in Theorem~\ref{thm:snr_finite_eig}(iii) carry information about the within-class structure (alignment with $\vv_1,\vv_2$ in
\eqref{eq:alignment_finite_snr}), they cannot recover the XOR labels $\vy$ without an additional nonlinear transform on spikes.
\end{remark}

\paragraph{Kernel spectral clustering also fails.}
Previously \citet{couillet2016kernel} studied this Euclidean distance kernel clustering for general Gaussian mixture data.  
We can apply this result to the kernel spectral clustering algorithm \citet{ng2001spectralclustering} and get similar negative results as Theorem~\ref{thm:snr_finite_eig}.
Full details are provided in Appendix~\ref{app:kernel_clustering}.

\begin{theorem}[Kernel spectral clustering for XOR]
\label{thm:kernel_cluster}
Let Assumption~\ref{assump:asymptotics} hold, and let  $r=\Theta(1)$. Consider the
kernel matrix $\big(\vK^{(f)}\big)_{ij}
=
f(\|\vx_i-\vx_j\|^2),
$ for $i,j\in[n],$ and some $f(x)\in C^3$ near $x=2$, with $f(2)>0$ and $f'(2) \ne 0$, and the
normalized Laplacian
$\vL = n\,\vD^{-1/2}\vK^{(f)}\vD^{-1/2}$ where $\vD=\diag(\vK^{(f)}\mathbf{1}_n).  $
Then,
\begin{itemize}
\item \textbf{(Isolated eigenvalues.)}
If $r^{2} > 2\sqrt{\psi^{-1}}$, then $\vL$ has two nontrivial isolated
eigenvalues 
whose limits are given
by Theorem~\ref{thm:L_eigenvalues} in Appendix~\ref{app:kernel_clustering}. If $r^{2} \le 2\sqrt{\psi^{-1}}$, then $\vL$ has no such outliers.

\item \textbf{(No label alignment.)}
Let $\vPi$ be the orthogonal projector onto the span of the eigenspace associated with the above outliers. Then, $\frac{1}{n}\,\vy^\top \vPi \vy \;\to\; 0$ in probability.
 
\end{itemize}
\end{theorem}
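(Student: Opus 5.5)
We will reduce Theorem~\ref{thm:kernel_cluster} to the Euclidean-distance kernel analysis of \citet{couillet2016kernel}, specialized to the four-component XOR mixture. The first step is the entrywise Taylor expansion of $f(\|\vx_i-\vx_j\|^2)$ around $2$. In the proportional regime we have $\|\vx_i-\vx_j\|^2 = 2 + O_\P(n^{-1/2})$ uniformly off the diagonal; the cluster means only enter at order $r^2/d$. Expanding to second order then gives the operator-norm approximation
\begin{align*}
\vK^{(f)} = f(2)\mathbf 1_n\mathbf 1_n^\top + f'(2)\bigl[\vpsi\mathbf 1_n^\top+\mathbf 1_n\vpsi^\top - 2\vX^\top\vX\bigr] + \tfrac{f''(2)}{2}\bigl(\cdots\bigr) + (f(0)-f(2)+2f'(2))\ident_n + o_{\|\cdot\|}(1),
\end{align*}
where $\vpsi_i=\|\vx_i\|^2-1$ and the second-order term is a finite-rank matrix. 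For XOR this second-order term is built from $\vpsi\vpsi^\top$ and $\mathbf 1\mathbf 1^\top$.

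The key structural point is that, for XOR, all four clusters have the same covariance $\frac1d\ident_d$ and the same mean norm $\|\vmu_a\|=r$. Consequently, the class-dependent quantities $\tfrac1d\|\vmu_a\|^2$ and $\tfrac1d\Tr\vC_a$ that \citet{couillet2016kernel} use to produce label-informative rank-one terms are identical across $a\in[4]$. The only cluster-dependent information that survives therefore lies in $\vX^\top\vX$, through $\vM^\top\vM=\frac{r^2 n}{2d}(\vv_1\vv_1^\top+\vv_2\vv_2^\top)$ and the cross terms $\vM^\top\vZ$.

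Next we pass to $\vL=n\vD^{-1/2}\vK^{(f)}\vD^{-1/2}$. Since $\vD=nf(2)(\ident+O(n^{-1/2}))$, the normalization removes the trivial Perron direction $\mathbf 1_n$ (eigenvalue $\approx n$). After this, $\vL$ equals, up to an additive and multiplicative constant, $-\frac{2f'(2)}{f(2)}\vP\vX^\top\vX\vP$ plus negligible terms, where $\vP$ projects orthogonally to $\mathbf 1_n$.

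The spectrum of $\vX^\top\vX=(\vZ+\vM)^\top(\vZ+\vM)$ is a rank-two spiked MP model with signal strength $\frac{r^2\psi}{2}$ in each of the directions $\vv_1,\vv_2$. In the normalization of $\vZ^\top\vZ$, the classical BBP threshold for this model is $\frac{r^2\psi}{2}\cdot\frac{1}{\psi}\cdot$(appropriate scaling) $>\sqrt{\psi}$. This becomes $r^2>2\sqrt{\psi^{-1}}$, and the outlier limits follow from Theorem~\ref{thm:L_eigenvalues} via the standard formula. Because $\vv_1,\vv_2\perp\mathbf 1_n$, the projection $\vP$ does not affect these spikes.

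For the alignment statement, the spectral projector $\vPi$ onto the two outliers is asymptotically determined by the isotropic local law for the resolvent $(\vP\vX^\top\vX\vP-z)^{-1}$ restricted to $\mathrm{span}(\vv_1,\vv_2)$. As a result, $\vPi=\sum_{k,l}\theta_{kl}\vv_k\vv_l^\top + \vE$, where the remainder $\vE$ satisfies $\vy^\top\vE\vy/n=o_\P(1)$ because $\vy$ is a deterministic vector, and isotropic local laws give $\vy^\top G\vy \approx m(z)\|\vy\|^2$ with no outlier contribution.

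Since $\vy^\top\vv_1=\vy^\top\vv_2=0$ exactly (each $\vv_k$ is supported within one class with zero sum), we obtain $\frac1n\vy^\top\vPi\vy\to0$. More precisely, we write $\vy^\top\vPi\vy=\frac{1}{2\pi i}\oint \vy^\top(\,z-\vL)^{-1}\vy\,dz$ around the outlier locations. Applying the Woodbury identity with the rank-two perturbation, the only pole contributions carry factors $\vy^\top\vv_k=0$.

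The main obstacle will be controlling the second-order Taylor terms and the diagonal correction in operator norm, so that the approximation of $\vL$ holds in operator norm and not just entrywise. This is needed to transfer outlier locations and eigenprojections. We will handle it by invoking \citet[Thm.~1]{couillet2016kernel} directly, checking its growth-rate assumptions in our setting ($\|\vmu_a-\vmu_b\|=O(1)$ and identical covariances), rather than reproving it. The remaining work is verifying that all of its finite-rank informative matrices collapse to the $\vv_1,\vv_2$ span.
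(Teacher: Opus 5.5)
Your reduction of $\vL$ is the step that fails, and your transfer of outlier locations and eigenprojectors rests on it. After the degree normalization, $\vL$ is \emph{not} equal to $-\frac{2f'(2)}{f(2)}\vP\vX^\top\vX\vP$ plus a multiple of $\ident_n$ plus operator-norm-negligible terms.

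Since $\|\vpsi\|^2\to 2\psi$, the matrix $\vpsi\vpsi^\top$ has norm of order one, and it survives with a nonzero coefficient. The second-order Taylor term contributes $f''(2)\vpsi\vpsi^\top$. The degree fluctuations $D_{ii}/(nf(2))-1\approx\frac{f'(2)}{f(2)}\psi_i$ multiply the $O(n)$ piece $f(2)\mathbf 1_n\mathbf 1_n^\top$ and the $O(\sqrt n)$ piece $f'(2)(\vpsi\mathbf 1_n^\top+\mathbf 1_n\vpsi^\top)$. These products leave further $O(1)$ multiples of $\vpsi\vpsi^\top$, including after you remove the exact Perron vector $\vD^{1/2}\mathbf 1_n$ rather than $\mathbf 1_n$. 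The net coefficient is proportional to $\ell_+=\frac{5f'(2)}{4f(2)}-\frac{f''(2)}{f'(2)}$.

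This rank-one term is exactly what produces the extra isolated eigenvalue $\lambda_+$ of Theorem~\ref{thm:L_eigenvalues}(ii). Your reduction would instead predict that $\vL$ has no nontrivial outlier at all when $r^2\le 2\sqrt{\psi^{-1}}$, which is false whenever $(1-\ell_+)^2>\psi^{-1}$. For the same reason, the ``remaining work'' you describe is misstated: not every finite-rank term of the random equivalent in \cite[Thm.~1]{couillet2016kernel} collapses onto $\mathrm{span}(\vv_1,\vv_2)$.

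To repair this, keep the $\ell_+\vpsi\vpsi^\top$ term as a third spike and prove it decouples from the informative channel. In the random equivalent, its coupling to the class directions is through the trace vector $\{d^{-1/2}\Tr\vC_a^\circ\}_a$, which vanishes for XOR because every $\vC_a=\ident_d$. On the contour you also need $\vv_k^\top\vQ(z)\vpsi=o_\P(1)$ and $n^{-1/2}\vy^\top\vQ(z)\vpsi=o_\P(1)$, where $\vQ(z)$ is the resolvent of the bulk part. This requires a leave-one-out argument as in Lemma~\ref{lem:dependent_vector_resolvent}, because $\vpsi$ is a function of $\vZ$. With that in place, the $\vpsi$-spike neither shifts the two informative outliers nor carries label mass. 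Even if it collided with them it would add none, since $\vy^\top\vpsi=O_\P(1)$ while $\sqrt n\|\vpsi\|\asymp\sqrt n$.

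The paper avoids all of this by plugging XOR directly into Corollaries~1--2 of \cite{couillet2016kernel}, which already account for the $\vpsi$-spike. It reads off the double spike $\ell=1+r^2/2$ of $\ident_d+\frac14\sum_k\vmu_k\vmu_k^\top$ and the condition $\ell-1>\sqrt{\psi^{-1}}$. It then obtains $\frac1d\vJ^\top\vPi\vJ\propto\bigl(\begin{smallmatrix}1&-1\\-1&1\end{smallmatrix}\bigr)\otimes\ident_2$, which annihilates $\vy=\vJ(1,-1,1,-1)^\top$. Your contour/Woodbury argument using $\vy\perp\vv_1,\vv_2$ is a fine substitute for that last step once the $\vpsi$-spike is handled.

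Two smaller fixes:
\begin{itemize}
\item Your threshold line is garbled: a factor $1/\psi$ compared against $\sqrt\psi$ gives $r^2>2\sqrt\psi$. The clean statement is that $\vZ+\theta\vu\vv^\top$, with $\vZ\in\R^{d\times n}$ of entry variance $1/d$, has an outlier iff $\theta^2>\sqrt\psi$, and $\theta^2\to r^2\psi/2$.
\item You should derive the outlier limit rather than cite Theorem~\ref{thm:L_eigenvalues}, which is what you are proving. The Gram outlier is $(1+\theta^2)(\psi+\theta^2)/\theta^2=\psi(1+\frac{r^2}{2})+\frac{2+r^2}{r^2}=\rho$. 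Mapping it through $\lambda\mapsto-\frac{2f'(2)}{f(2)}\lambda+\frac{f(0)-f(2)+2f'(2)}{f(2)}$ gives the limit.
\end{itemize}
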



\subsection{Large-SNR Proportional Limit Case}
\label{subsec:large-snr}
 
Theorem~\ref{thm:snr_finite_eig} shows that spectral methods based on outlier eigenvectors fail to linearly classify XOR in the finite-SNR proportional regime. Linear informative spikes may still emerge, but a purely linear approximation of $\sigma$ cannot capture the quadratic feature needed for XOR. To make XOR linearly learnable, we now increase the SNR so that the quadratic Hermite component of $\sigma$ becomes non-vanishing. 
Then two outliers created by the quadratic component become label-informative, enabling linear classification. Numerical simulations of all these cases are in Section~\ref{sec:numerics}.

In this section, we consider large SNR regime by letting $r$ in \eqref{eq:def_M} grow with $n$ as follows:
\begin{equation}\label{eq:large_snr_constants}
    r_0:=\lim_{n\to\infty} \frac{r}{n^{1/4}}\in(0,\infty),\qquad 
    \kappa_0:=\frac{r_0^4\psi^2}{4},\qquad 
        \beta_{\rm quad}:=c_\sigma^2\kappa_0 .
\end{equation} 
For \(\beta_{\rm quad}>0\), define the label-informative factor
\begin{equation}\label{eq:s_y}
        \qquad s_y:=T^{-1}(1/\beta_{\rm quad}).
\end{equation}
Recall $\tau$ from \eqref{eq:spikes}. When \(c_\sigma\neq0\), equivalently \(\tau>0\) and \(\beta_{\rm quad}>0\), we define non-label factors
\begin{equation}\label{eq:s+-}
        t_\pm
        :=\frac{1\pm\sqrt{\beta_{\rm quad}/(\tau+\beta_{\rm quad})}}{\tau},
        \qquad
        s_\pm:=T^{-1}(t_\pm).
\end{equation}

\begin{theorem}\label{thm:large}
Under Assumptions~\ref{assump:sigma} and~\ref{assump:asymptotics}, let $\vP_{\rm lin}=\vv_1\vv_1^\top+\vv_2\vv_2^\top$, $\vP_\perp=\vI-\vP_{\rm lin}$, and $\vK_\perp:=\vP_\perp \vK \vP_\perp$. Then, under the scaling in \eqref{eq:large_snr_constants}, the following statements hold.

\begin{enumerate}[label=\textbf{(\roman*)},leftmargin=*]
\item \textbf{(Bulk.)} The ESD of \(\vK\) converges
weakly in probability to \(\mu\) defined in \eqref{eq:deformedMP}.

\item \textbf{(Diverging linear outliers.)} If \(b_\sigma\neq0\), then the top two eigenvalues of \(\vK\) satisfy
\begin{equation}\label{eq:large_linear_outliers}
\widehat\lambda_1(\vK),\widehat\lambda_2(\vK)=\frac{b_\sigma^2r_0^2\psi}{2}\,n^{1/2}(1+o_\P(1)),
\end{equation}
and the associated eigenspace projector $\widehat\vP_{\rm lin}$ satisfies $\|\widehat\vP_{\rm lin}-\vP_{\rm lin}\|_{\rm F}\xrightarrow{\P}0,$ and $ \frac{1}{n}\vy^{\top}\widehat\vP_{\rm lin} \vy\xrightarrow{\P}0.$

\item \textbf{(Order-one outliers of \(\vK_\perp\).)}  Assume \(c_\sigma\neq0\), so
\(\tau,\beta_{\rm quad}>0\).  Let $ \mathcal S_{\rm cand}:=\{s_+,s_-,s_y\}$
with \(s_\pm,s_y\) defined by \eqref{eq:s_y} and
\eqref{eq:s+-}.  An element \(s\in\mathcal S_{\rm cand}\) produces a
separated order-one outlier of \(\vK_\perp\) precisely when
\begin{equation}\label{eq:large_final_admissible}
        z'(s)>0,
        \qquad z(s)\in\mathbb R\setminus(\supp\mu\cup\{0\}).
\end{equation}
More precisely, fix a limiting location \(\lambda_\star\in\mathbb R\setminus
(\supp\mu\cup\{0\})\), and define the algebraic multiplicity
\begin{equation}\label{eq:large_final_multiplicity}
        m_\star
        :=\#\{s\in\mathcal S_{\rm cand}: z(s)=\lambda_\star,\ z'(s)>0\}.
\end{equation}
For any sufficiently small $
\delta>0$ and interval
\(I_\star=(\lambda_\star-\delta,\lambda_\star+\delta)\) containing no other
candidate location and disjoint from \(\supp{\mu}\cup\{0\}\), \(\vK_\perp\) has
exactly \(m_\star\) eigenvalues in \(I_\star\), and all of them converge to
\(\lambda_\star\) in probability. \(\vK_\perp\) has at most three separated order-one outliers, counting multiplicity.   

\item \textbf{(Label alignment.)}  Suppose that \(s_y\) is
admissible in the sense of \eqref{eq:large_final_admissible}.  Let
\(\lambda_y:=z(s_y)\), and let \(\widehat\vP_y^\perp\) be the spectral projector of
\(\vK_\perp\) onto a small interval around \(\lambda_y\). Then
\begin{equation}\label{eq:large_final_label_projector}
        \frac{1}{n} \vy^{\top}\widehat\vP_y^\perp \vy
        \xrightarrow{\P}
        \gamma_y(s_y)
        :=-\frac{\varphi(s_y)}{\beta_{\rm quad}^2s_y^2T'(s_y)}  >0 .
\end{equation} 
If \(\lambda_\star=z(s_\pm)\) is admissible from \eqref{eq:large_final_admissible} and distinct from the label candidate $\lambda_y$, then the corresponding spectral projector
\(\widehat\vP_{\rm nl,\star}^\perp\) satisfies $ \frac{1}{n} \vy^{\top}\widehat\vP_{\rm nl,\star}^\perp \vy
        \xrightarrow{\P}0.$ 
\end{enumerate}
\end{theorem}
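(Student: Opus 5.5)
The plan is to reduce Theorem~\ref{thm:large} to a finite-rank perturbation problem for the null kernel $\sigma(\vW\vZ)$ via the quadratic equivalent \eqref{eq:QDE_null_intro}, and then run the same resolvent/master-equation analysis used for Theorem~\ref{thm:snr_finite_eig}.

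\textbf{Step 1: quadratic expansion.} Write $\vW\vX=\vW\vZ+\vW\vM$. The entries of $\vW\vM$ are of order $r\sqrt{n/d}\,|\vw_i^\top\vu_k|/\sqrt{n}\cdot\sqrt n \sim r/\sqrt d\asymp n^{-1/4}$, since $\vv_k$ has entries $\sqrt{2/n}$. Taylor-expand $\sigma$ to second order:
\[
\sigma(\vW\vX)=\sigma(\vW\vZ)+\sigma'(\vW\vZ)\odot\vW\vM+\tfrac12\sigma''(\vW\vZ)\odot(\vW\vM)^{\odot2}+\vE.
\]
The remainder has entries $O(n^{-3/4})$, so $\|\vE\|_{\rm op}/\sqrt N=o(1)$. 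Next, replace $\sigma'$ and $\sigma''$ by their Gaussian means $b_\sigma,c_\sigma$. The fluctuation $(\sigma'-b_\sigma)\odot\vW\vM$ is a Hadamard product of a centered (conditionally weakly dependent) matrix with a rank-two matrix. I would bound it in operator norm by $O(\|\vW\vM\|_{\max}\sqrt n)$, which gives $n^{1/4}$ after the $1/\sqrt N$ normalization. That is too big to drop outright, so I would use the finer structure: write $(\vW\vM)_{ij}=\sum_k \alpha_{ik}(\vv_k)_j$ with $\alpha_{ik}=O(n^{-1/4}\cdot\sqrt{n})$, so that the fluctuation equals $\sum_k \diag(\alpha_{\cdot k})(\sigma'(\vW\vZ)-b_\sigma)\diag(\vv_k)$. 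Since $\vv_k$ has entries $n^{-1/2}$, this is $O(n^{-1/4})$ times a random-matrix norm $O(1)$ after normalization, hence negligible. The same bookkeeping applies to the $\sigma''$ term. This produces
\[
\vY\approx \tfrac{1}{\sqrt N}\Bigl[\sigma(\vW\vZ)+b_\sigma\vW\vM+\tfrac{c_\sigma}{2}(\vW\vM)^{\odot 2}\Bigr].
\]
Expanding the square, $(\vW\vM)^{\odot2}$ equals $\frac{r^2n}{2d}\sum_{k,l}(\vW\vu_k\odot\vW\vu_l)(\vv_k\odot\vv_l)^\top$. Because $\vv_1,\vv_2$ have disjoint supports, only $k=l$ survives, and $\vv_k\odot\vv_k=\frac2n\vone_{S_k}$, where $S_1$ and $S_2$ are the two class index sets. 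Centering $(\vW\vu_k)^{\odot2}$ around $\vone_N$ gives a rank-one deterministic-direction piece $\frac{r^2}{d}\vone_N(\vone_{S_1}+\vone_{S_2})^\top=\frac{r^2}{d}\vone_N\vone_n^\top$. This piece is label-blind and merges with the mean term $\vm\vone^\top$ of $\sigma(\vW\vZ)$. The centered remainder $\frac{r^2}{d}\bigl[((\vW\vu_1)^{\odot2}-\vone)\vone_{S_1}^\top+((\vW\vu_2)^{\odot2}-\vone)\vone_{S_2}^\top\bigr]$ carries the direction $\vone_{S_1}-\vone_{S_2}=\vy$ with strength governed by $c_\sigma^2\kappa_0=\beta_{\rm quad}$.

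\textbf{Step 2: linear outliers and projection.} The term $b_\sigma\vW\vM/\sqrt N$ has singular values of order $r\sqrt{n/d}\asymp n^{1/4}$, which gives eigenvalues $\frac{b_\sigma^2r_0^2\psi}{2}n^{1/2}$ by a Weyl/Davis--Kahan argument. The remaining part has operator norm $O(1)$, so $\widehat\vP_{\rm lin}\to\vP_{\rm lin}$. Since $\vP_{\rm lin}\vy=0$, this also gives $\frac1n\vy^\top\widehat\vP_{\rm lin}\vy\to0$. Conjugating by $\vP_\perp$ then removes this term exactly from $\vK_\perp$, which justifies studying $\vK_\perp$.

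\textbf{Step 3: master equation for $\vK_\perp$.} This is the main obstacle. After Step 2 we have $\vY\vP_\perp\approx\frac1{\sqrt N}\bigl(\sigma(\vW\vZ)+\va\vone^\top+\vB\vC^\top\bigr)\vP_\perp$, a rank-$\le3$ perturbation whose left factors $\vm$-type and $(\vW\vu_k)^{\odot2}-\vone$ vectors depend on $\vW$. I would follow the proof of Theorem~\ref{thm:snr_finite_eig}: use the linearization trick for outliers of $\vY^\top\vY$, which gives a $3\times3$ (or $6\times6$) determinant equation involving quadratic forms of the resolvent of $\sigma(\vW\vZ)^\top\sigma(\vW\vZ)$. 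The required isotropic limits, reducible to $T(s)$ via the transforms in \eqref{eq:z(s)}--\eqref{eq:T_def}, are taken from the Theorem~\ref{thm:snr_finite_eig} toolkit. The key point is that $(\vW\vu_k)^{\odot2}-\vone$, with $\vu_k\perp\vZ$-independent, behaves like a vector asymptotically orthogonal to the resolvent structure with norm $\sqrt{2N}$. Meanwhile, the uninformative $\vm$-direction couples with the $\vone_n$-component of the quadratic term. Block-diagonalizing the determinant in the basis $\{\vy/\sqrt n,\vone/\sqrt n\}$ then yields a decoupled label equation $T(s)=1/\beta_{\rm quad}$ and a $2\times2$ coupled equation for the $\vone$ and $\vm$ directions. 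The latter has roots $t_\pm$ in \eqref{eq:s+-}, and I expect verifying this coupling constant algebraically to be delicate. Admissibility $z'(s)>0$ and the multiplicity count follow from a Rouché argument on the determinant, exactly as in the finite-SNR proof.

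\textbf{Step 4: eigenvector alignment.} Apply the contour-integral formula for the spectral projector to the quadratic form in $\vy$. The residue at $z(s_y)$ gives $-\varphi(s_y)/(\beta_{\rm quad}^2s_y^2T'(s_y))$. At $z(s_\pm)$, the residue lives in the span of the $\vone,\vm$ block, which is asymptotically orthogonal to $\vy$ because the classes are balanced, so $\frac1n\vy^\top\widehat\vP^\perp_{\rm nl,\star}\vy\to0$.
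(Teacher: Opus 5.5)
Your architecture coincides with the paper's, and Steps 1, 2 and 4 are essentially what the paper does. The common skeleton is: the QE of Proposition~\ref{prop:approx}, exact removal of $\vT_1$ by $\vP_\perp$, the rank-three additive model $\va_0(\sqrt\tau\,\vb_0+\alpha\vu)^\top+\alpha\va_u\vu^\top+\alpha\va_y\overline{\vy}^\top$ (with $\alpha^2=\beta_{\rm quad}=:\beta$, from $\vv_k^{\odot2}=(\vu\pm\overline{\vy})/\sqrt n$), a linearized $3\times3$ determinant, Rouch\'e, and contour residues. The genuine gap is in Step 3, exactly where you call the coupling ``delicate''.

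The right direction $\vu=\vone_n/\sqrt n$ of the non-label perturbation is not a generic test vector for the null resolvent. The population covariance of the centered null rows is $\vSigma=\vSigma_0+\tau\vu\vu^\top+o_\P(1)$ (Corollary~\ref{cor:Sigma_decomp}), so $\vK_0=\vG^\top\vG$ carries its own covariance spike precisely in direction $\vu$. In the finite-SNR proof this never mattered, because every right spike direction ($\vm/\|\vm\|$, $\vv_1$, $\vv_2$) is asymptotically orthogonal to $\vu$; that is what the decoupling in Lemmas~\ref{lem:tau_outlier_stability_collision} and~\ref{lem:verify-E34-K0} uses. Here the decoupling fails, since the covariance eigenvector satisfies $|\widehat\vu_\tau^\top\vu|^2\to\varphi(s_{\rm un})>0$.

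Hence the isotropic limits you plan to import, $\vB^\top\vQ_R\vB\approx m_\mu\vI$, are wrong in the $\vu$ entry. Sherman--Morrison on the deterministic equivalent $(-\lambda s\vSigma-\lambda\vI)^{-1}$ gives $\vu^\top\vQ_R(\lambda)\vu=m_\mu(\lambda)/(1-\tau T(s))+o_\P(1)$. The relative determinant therefore has a pole at $1-\tau T=0$, which must be cancelled against the zero of $\det(\vK_0-\lambda\vI)$ at the inherited covariance outlier (Lemmas~\ref{lem:large_final_resolvents_with_pole}--\ref{lem:large_final_coupled_det}). Only after multiplying by $1-\tau T$ does the non-label block become $1-2(\tau+\beta)T+\tau(\tau+\beta)T^2$, whose roots are $t_\pm$. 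Taking the uncentered $\sigma(\vW\vZ)^\top\sigma(\vW\vZ)$ as the base does not avoid this: then both $\vu$ and $\vone_N/\sqrt N$ sit on the uninformative poles, and those quadratic forms are not in the finite-SNR toolkit either.

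This is not cosmetic. If you run Step 3 with $\vu^\top\vQ_R\vu\approx m_\mu$, the non-label $2\times2$ block gives $(1-\tau T)(1-2\beta T)-\tau\beta T^2=1-(\tau+2\beta)T+\tau\beta T^2$. Its roots are $\bigl(\tau+2\beta\pm\sqrt{\tau^2+4\beta^2}\bigr)/(2\tau\beta)\neq t_\pm$. The covariance outlier at $T=1/\tau$ would also survive as a separate fourth candidate, contradicting ``at most three''. So your claim that the coupled equation ``has roots $t_\pm$'' is asserted rather than derived.

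The missing idea is twofold:
\begin{itemize}
\item retain the covariance pole in the $\vu$-direction right resolvent and perform the pole--zero cancellation;
\item count eigenvalues in the coupled cluster by Rouch\'e on the cancelled determinant, rather than treating the covariance outlier as a stable separate eigenvalue.
\end{itemize}
The label channel is unaffected because $\overline{\vy}\perp\vu$ exactly. So your $s_y$, $\gamma_y(s_y)$, and the zero label overlap at $z(s_\pm)$ come out correctly. Part (i) is not addressed in your plan, but it is routine via a finite-rank plus normalized trace-norm comparison.
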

 
In contrast to Theorem~\ref{thm:snr_finite_eig}, when $r=\Theta(n^{1/4})$ the linear outliers characterized by Theorem~\ref{thm:snr_finite_eig}(iii) become diverging and their eigenvectors remain asymptotically orthogonal to the label vector $\vy$. 
Meanwhile, the quadratic component of $\sigma$ may produce three $O(1)$ outliers (Theorem~\ref{thm:large}(iii)). 
Here, \eqref{eq:large_final_label_projector} gains a positive alignment with labels $\vy$ because the quadratic features $\vv_1^{\odot 2}$ and $\vv_2^{\odot 2}$ are preserved. 
Hence, a linear readout on the $\lambda_y$-spike eigenspace in \eqref{eq:large_final_label_projector} can classify XOR. The proof of Theorem~\ref{thm:large} is in Appendix~\ref{app:large_SNR}.

When $b_\sigma=0$, the diverging outliers in~\eqref{eq:large_linear_outliers} vanish. If \(c_\sigma=0\), then \(\tau=\beta_{\rm quad}=0\), only the bulk and the possible two
diverging linear outliers in \eqref{eq:large_linear_outliers} remain. The absent quadratic component leads linear classification via these spikes impossible, thus highlighting the role of the activation knob in preserving nonlinear structure. 
For further discussion about the role of non-linearity choice, see numerical simulations Section~\ref{sec:numerics}.

\subsection{Pretrained Weight Matrix Case}
\label{subsec:trained}
 
Empirically, weight matrices in well-trained NNs often exhibit spectral spikes (even heavy tails) \citep{martin2018implicitself,martin2019heavytail,MM20_SDM,MM20a_trends_NatComm,YTHx22_TR}. 
Such spectral behavior suggests that training induces nontrivial feature structure in the weights from dataset \citep{ba2022high,ba2023learning,demir2025onegrad}.
However, the above theorems consider only random features, and therefore they do not capture task-dependent structure learned in the pretrained $\vW$. 
As a toy model for feature learning, we consider a simple but nontrivial pretrained model in which the weight is a rank-one perturbation of the random initialization:
\begin{equation}\label{eq:pretrained_weight}
\vW_1=\vW+\theta\,\va\vb^\top,
\end{equation}
where $\va$ and $\vb$ are independent of the \emph{test} dataset $\vX$. 
A related rank-one structure arises, for example, after one-step gradient descent in certain student--teacher settings \citep{ba2022high,demir2025onegrad}, and it is also connected to LoRA \citep{hu2021lora}.
Below, we analyze a \emph{test-time} BBP transition, i.e., we evaluate the CK built from $\vW_1$ on independent XOR test data $\vX$, and we characterize the emergence and alignment of the induced spike. 

Let \(\mu,z,T,m_\mu\) be the limiting law and transforms defined in Section \ref{sec:model_data}.  For any constants $r\ge 0$ and $\theta_0>0$, define the following constants:
\begin{equation}\label{eq:sw_eta_constants}
         \beta_q:=\frac{3c_\sigma^2}{4}\theta_0^4\phi(3+3r^2+\frac{r^4}{2}) .
        \qquad
        \omega_q:=\frac{8+8r^2+r^4}{12+12r^2+2r^4},
        \qquad
        \chi_y:=\frac{r^4}{12+12r^2+2r^4}.
\end{equation} 
Recall $\tau$ defined in \eqref{eq:spikes}. We define, for \(t\in\mathbb C\),
\begin{equation}\label{eq:sw_Fq_t}
        F_q(t):=(1-\tau t)^2
        -\beta_q t\,(1-\tau\omega_q t) (1-\frac23\tau t ).
\end{equation} 

\begin{theorem} \label{thm:trained}
Suppose $r\ge 0$ is a constant and let $\kappa:=1+\frac{r^2}{2}$. Let pretrained $\vW_1$ be defined in \eqref{eq:pretrained_weight} with $\vb=\mathbf 1_d,$ $\va\sim\cN(\mathbf 0,\vI_N/N),$ and $\theta=\theta_0N^{1/4},$
where \(\va\) is independent of \((\vW,\vX)\), and $\theta_0>0$ is a fixed constant. Under Assumptions~\ref{assump:sigma} and~\ref{assump:asymptotics}, consider $\vY:=\frac1{\sqrt N}\sigma(\vW_1\vX)$, pretrained CK $\vK:=\vY^\top\vY,$ and projected CK $\vK_s:=\vPi_s\vK\vPi_s$
where $\vPi_s:=\vI_n-\widehat\vs\widehat\vs^\top,$ $ \widehat\vs:=\frac{\vs}{\|\vs\|},$ $\vq:=\vs^{\odot2},$ and $\vs:=\vX^\top\mathbf \vb$.
Then,
\begin{enumerate}[label=\textbf{(\roman*)},leftmargin=*]
\item \textbf{(Bulk.)}
The ESD of \(\vK\) converges weakly in probability
to \(\mu\) defined in \eqref{eq:deformedMP}.

\item \textbf{(Diverging spike.)}
If \(b_\sigma\neq0\), then the largest eigenvalue of \(\vK\) satisfies
$
        \widehat\lambda_1(\vK)
        =b_\sigma^2\theta_0^2\phi\kappa\sqrt N\,(1+o_\P(1)).
$
If \(\widehat\vv_1\) is the associated unit eigenvector, then
\begin{equation}\label{eq:sw_giant_vec}
        \left|\left\langle \widehat\vv_1,\widehat\vs\right\rangle\right|^2
        \xrightarrow{\P}1,
        \qquad
        \frac1n\left|\widehat\vv_1^\top\vy\right|^2\xrightarrow{\P}0 .
\end{equation}
If \(b_\sigma=0\), this diverging spike is absent.

\item\textbf{(Order-one outliers of $\vK_s$.)}
All asymptotically separated order-one outliers of \(\vK_s\) are determined by the roots $s$ of the
three equations:
\begin{equation}\label{eq:sw_order_one_factors}
        1-\beta_{\rm lin}T(s)=0,
        \qquad
        1-\frac{\beta_{\rm lin}}{\kappa}T(s)=0,
        \qquad
        F_q(T(s))=0,
\end{equation} where $\beta_{\rm lin}$ is defined in \eqref{eq:spikes}.
Fix \(\lambda_\star\in\mathbb R\setminus(\supp\mu\cup\{0\})\),
and let \(m_\star\) be the total algebraic multiplicity of all real solutions
\(s\) to \eqref{eq:sw_order_one_factors} such that $z(s)=\lambda_\star$ and $z'(s)>0 .$
For any sufficiently small $\delta>0$ and interval
\(I_\star=(\lambda_\star-\delta,\lambda_\star+\delta)\), disjoint from
\(\supp\mu\cup\{0\}\) and containing no other candidate location, \(\vK_s\) has
exactly \(m_\star\) eigenvalues in \(I_\star\), and all of them converge to
\(\lambda_\star\) in probability.  If a real root $s$ satisfies \(z'(s)\le0\), it
does not produce an outlier.
\item\textbf{(Label alignment.)}
Outliers coming only from the first two equations in
\eqref{eq:sw_order_one_factors} have zero normalized label overlap.  Suppose
 that \(\widehat\vP_{\star}^{(s)}\) is the spectral projector of \(\vK_s\) onto
\(I_\star\) where the roots of \(F_q(T(s))=0\) inside \(I_\star\) are simple. Recall $m_\mu$ defined in~\eqref{eq:stieltjes}. Then
\begin{equation}\label{eq:sw_label_projector}
        \frac1n\vy^\top\widehat\vP_{\star}^{(s)}\vy
        \xrightarrow{\P}
        \sum_{s\in\mathcal R_q(\lambda_\star)} 
         -\frac{m_\mu(z(s))z'(s)}{T'(s)}
        \frac{\beta_q\chi_y\,T(s)\,(1-\tau T(s))
        \left(1-\frac23\tau T(s)\right)}{F_q'(T(s))}\ge0
\end{equation}
where \(\mathcal R_q(\lambda_\star)\) is the set of admissible roots of
\(F_q(T(s))=0\) with \(z(s)=\lambda_\star,z'(s)>0   \).
\end{enumerate}
\end{theorem}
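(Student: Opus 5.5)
We will follow the same quadratic-equivalent strategy as in Theorems~\ref{thm:snr_finite_eig} and~\ref{thm:large}, now expanding around the random part of the pretrained weight. Write $\vW_1\vX=\vW\vX+\theta\va\vs^\top$ with $\vs=\vX^\top\vb$, so the $(i,j)$ entry is $\vw_i^\top\vx_j+\theta a_i s_j$. Since $a_i=O(N^{-1/2})$ and $\theta=\theta_0N^{1/4}$, the perturbation $\theta a_i$ is $O(N^{-1/4})$. For each sample, $s_j=\vb^\top\vx_j=\mathbf 1_d^\top\vx_j$ is $O(1)$: it has noise part $\cN(0,1)$ and mean part $\pm r$ or $0$, since $\mathbf 1_d^\top\vmu_a=r\sqrt d\,\vu_2^\top\vu_a$. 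Hence $\E s_j^2=1+r^2/2=\kappa$ on average, and $\vy^\top\vq/n$ picks out $\E[s^2\mid\text{class}]$ differences.

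Taylor-expand $\sigma$ to third order:
\[
\sigma(\vW_1\vX)=\sigma(\vW\vX)+\theta\,\diag(\va)\,\sigma'(\vW\vX)\diag(\vs)+\tfrac{\theta^2}{2}\diag(\va^{\odot2})\,\sigma''(\vW\vX)\diag(\vq)+O(\theta^3\|\va\|_\infty^3).
\]
The remainder is $O(N^{-3/4})$ entrywise. After dividing by $\sqrt N$, its operator norm is negligible by the bound $\sup|\sigma'''|\le\lambda_\sigma$.

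In the first-order term, replace $\sigma'(\vW\vX)$ by its mean $b_\sigma$ plus a centred fluctuation. The fluctuation part, multiplied by the random diagonal $\theta\diag(\va)$, has vanishing operator norm after normalization, via a moment or $\varepsilon$-net bound. The mean part gives the rank-one term $N^{-1/2}b_\sigma\theta\,\va\vs^\top$. This term has squared norm about $b_\sigma^2\theta_0^2 N^{1/2}\cdot n\kappa/N\approx b_\sigma^2\theta_0^2\phi\kappa\sqrt N$, which is exactly the diverging spike in \textbf{(ii)}.

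The second-order term is handled the same way. Replace $\sigma''$ by $c_\sigma$, giving the rank-one term $N^{-1/2}\tfrac{c_\sigma\theta^2}{2}\va^{\odot2}\vq^\top$ of order one. One must also keep the $O(1)$ part coming from the correlation between $\sigma''(\vW\vX)$ and the direction $\vq$; this is where the factors $\tau$, $\omega_q$ and $\tfrac23$ in $F_q$ originate.

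For $\sigma(\vW\vX)$ we invoke the finite-SNR analysis of Theorem~\ref{thm:snr_finite_eig}. It gives the bulk $\mu$ (part \textbf{(i)}, since finite-rank perturbations do not change the ESD), the uninformative $\tau$-directions $\vu,\vm$, and the linear spikes along $\vv_1,\vv_2$. Note that $\vs$ lies partly in $\mathrm{span}(\vv_1,\vv_2)$ plus noise, because of the mean part $r\sqrt d\,\vu_2^\top\vu_a$.

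For \textbf{(ii)}, compare $\vK$ with the dominant rank-one part. Davis--Kahan then gives $|\langle\widehat\vv_1,\widehat\vs\rangle|\to1$. Since $\vs$ restricted to the means is $\vv_2$-like, and the noise part is independent of $\vy$, we get $\vy^\top\widehat\vs/\sqrt n\to0$.

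For \textbf{(iii)}--\textbf{(iv)}, project by $\vPi_s$ to kill the giant spike. Then $\vK_s$ is a finite-rank perturbation of a resolvent-controlled bulk matrix with low-rank direction set $\{\vu,\vm,\vv_1,\vv_2,\vq\}$ (after projection). Apply the general spiked master equation used in the earlier proofs: outliers occur at $z(s)$ where $\det(\vI-\vD\,\vG(s))=0$, with $\vG$ the limiting Gram matrix of the resolvent in these directions. The resulting determinant factors into three blocks. The first is the $\vv_1$-block, $1-\beta_{\rm lin}T$. The second is the $\vv_2$-block, rescaled by $\kappa$ because projecting out $\vs$ removes part of $\vv_2$, giving $1-\beta_{\rm lin}T/\kappa$. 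The third is the $2\times2$ block coupling $\vq$ with $\vu/\vm$. Its determinant gives $F_q(T)$ after computing $\|\vq\|^2/n\to\E s^4=3+3r^2+r^4/2$, which produces $\beta_q$. The overlaps of $\vq$ with $\vu$ and $\vm$ supply $\omega_q$, and the $\tfrac23$ factor comes from the $\vq$ fourth-moment correlation.

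The eigenvector formula \eqref{eq:sw_label_projector} follows from the residue of $\vy^\top(\vK_s-z)^{-1}\vy$ at simple roots. This gives $-m_\mu z'/T'$ times the $\vy$-component of the null vector. Only $\vq$ correlates with $\vy$: $\frac1n\vy^\top\vq\to r^2/2$, and the normalized squared correlation equals $\chi_y$. Multiplicity counting and the condition $z'(s)>0$ are handled by Rouché's argument exactly as in Theorem~\ref{thm:large}.

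The main obstacle will be the precise $O(1)$ contribution of the second-order term. Here $\sigma''(\vW\vX)$ is not replaceable by its mean in the directions interacting with $\vq$ and with the $c_\sigma$-architectural spike. The plan is to do a further Hermite decomposition of $\sigma''$ conditioned on $\vZ$, compute the cross-covariances giving $\omega_q$ and $\tfrac23$, and verify that all other cross terms are $o_\P(1)$ in operator norm.
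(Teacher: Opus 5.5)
Your overall architecture matches the paper's: a second-order Taylor QE with $\sigma',\sigma''$ replaced by $b_\sigma,c_\sigma$, the giant rank-one spike $N^{-1/2}b_\sigma\theta\,\va\vs^\top$, deflation by $\vPi_s$, a finite-rank master determinant, and a residue computation for the label. However, the step you single out as the main obstacle rests on a mechanism that does not exist, so your plan would not produce $F_q$.

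\textbf{The $\sigma''$ fluctuation is negligible.} Factor $\diag(\va^{\odot2})$ and $\diag(\vq)$ out of the centred second-order term. Its operator norm is then at most
$\frac{\theta_0^2}{2}\max_i a_i^2\,\bigl\|\sigma''(\vW\vX)-c_\sigma\mathbf 1_N\mathbf 1_n^\top\bigr\|\max_j s_j^2=O_\prec(N^{-1}\sqrt N)=O_\prec(N^{-1/2})$.
So there is no $O(1)$ correlation between $\sigma''$ and $\vq$. The paper simply sets $\sigma''=c_\sigma$, and no further Hermite decomposition of $\sigma''$ can yield $\omega_q$ or $\frac23$. Likewise, $\frac23$ has nothing to do with fourth moments of $\vq$: $\eta=\lim\frac1n\|\vq\|^2$ enters only through $\beta_q$, $\omega_q$ and $\chi_y$.

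\textbf{Where the coupling in $F_q$ actually comes from.} It comes from how the spike vectors sit relative to the two $\tau$-mechanisms of the null model.

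On the left, the quadratic spike's left vector is not centred. With $\vh=\va^{\odot2}/\|\va^{\odot2}\|$ one has $\langle\vh,\mathbf 1_N/\sqrt N\rangle\to\rho=1/\sqrt3$, because $\E a_i^4=3/N^2$. Hence $\vh$ overlaps the left vector of the mean spike $N^{-1/2}\mathbf 1_N\vm^\top$, whose strength is $\sqrt\tau$. This puts an off-diagonal entry $\rho\,s(\lambda)$ into the left Gram matrix $\vA^\top\vQ_L\vA$, and $\frac23=1-\rho^2$.

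On the right, $\widehat\vq_s$ has squared overlap $\kappa^2/\eta$ with $\vu$. Note that $\vu$ is the direction of the multiplicative spike $\tau\vu\vu^\top$ in the population covariance of the centred null rows, not an additive spike direction. Sherman--Morrison in the deterministic equivalent of the right resolvent gives
$\widehat\vq_s^\top\vQ_R\widehat\vq_s\approx m_\mu(\lambda)\frac{1-\tau\omega_q t}{1-\tau t}$, with $\omega_q=1-\kappa^2/\eta$.
The relative determinant must then be multiplied by $1-\tau t$ to cancel this covariance pole. The $\vm$-overlap of $\widehat\vq_s$ plays no role; it is $o_\P(1)$.

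\textbf{How your set-up goes wrong.} Your single ``Gram matrix in the directions $\{\vu,\vm,\vv_1,\vv_2,\vq\}$'' conflates these two effects. The rectangular master equation needs separate left and right resolvent Gram matrices, with the covariance spike kept inside the null resolvent. If you miss the left overlap (i.e.\ take $\rho=0$), the determinant factors as $(1-\tau t)\bigl[(1-\tau t)-\beta_q t(1-\tau\omega_q t)\bigr]$. That decouples the mean spike and gives the wrong outlier equation and the wrong label residue in (iii)--(iv).

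A minor point: $\frac1n\vy^\top\vq\to-r^2/2$, since the class $-1$ samples have $\E s_j^2=1+r^2$. The sign does not affect $\chi_y$.
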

Here \(\vK_s\) has at most five separated order-one outliers from \eqref{eq:sw_order_one_factors}, counting
multiplicity: two linear informative spikes related to $\beta_{\rm lin}$ when $b_\sigma\neq0$ and at most three roots from \(F_q(T(s))=0\) related to both uninformative and label-aligned spikes when $c_\sigma\neq0$.
This result shows that the large data-dependent spike in the pretrained weight $\vW_1$ amplifies the signals in XOR and alters the approximation: the usual LE is replaced by QE to capture the induced quadratic feature. 
Theorem \ref{thm:trained}(iv) shows this quadratic informative eigenvector aligns with labels. 
If $c_\sigma=0$, so $\beta_q=0$, then we cannot obtain label alignment.
Empirical simulations of test-time BBP are presented in Section~\ref{sec:numerics}. The proof of Theorem~\ref{thm:trained} is in Appendix~\ref{app:pretrained}.

\subsection{Quadratic Sample-size Case}\label{subsec:quadratic-regime}

Sample complexity can determine whether nonlinear structure is visible in high dimensions. In the proportional regime $n=\Theta(d)$, the LE suppresses higher-order structure, making the CK behave like a ``linear'' model. We now turn to the quadratic sample-size regime $n=\Theta(d^2)$, where the CK behaves like a quadratic polynomial kernel \citep{xiao2022precise,wen2025does}
and becomes label-informative for XOR even at finite SNR.

 
\begin{theorem}
\label{thm:quadratic}
Under Assumption~\ref{assump:sigma} with $c_\sigma\neq0$, consider the XOR model \eqref{eq:XOR}--\eqref{eq:y_def} with finite $r\in(0,\infty)$. Assume that $\frac{n}{p}\to\gamma\in(0,\infty)$ and $\frac{n}{N}\to\phi\in(0,\infty),$ where
$p:=\frac{d(d+1)}{2}$. Denote $\ell := \frac{r^4}{4}$ and $\lambda_{\mathrm{out}}(\gamma,\ell):=1+\gamma+\gamma\ell+\frac1\ell$. Then there exists a projection $\vP_\sigma$ defined in \eqref{eq:P-sigma-plus-bbp} such that the following hold for $\mbi{K}_\sigma:=\mbi{P}_\sigma\mbi{K}\mbi{P}_\sigma$.

\begin{enumerate}[label=\textbf{(\roman*)},leftmargin=*]

\item \textbf{(Bulk.)}  The ESD of $\mbi{K}_\sigma$ converges
weakly to $\mu_{\rm q}:= \rho_\phi^{\mathrm{MP}}\boxtimes\nu_{\rm q}$ where $ \nu_{\rm q}:=(1-b_\sigma^2-\frac{c_\sigma^2}{2})+\frac{c_\sigma^2}{2} \rho_\gamma^{\mathrm{MP}}$.

\item \textbf{(Spike of the population CK.)} If
$\ell>\gamma^{-1/2}$, then the population CK matrix $\E[\vK_\sigma|\vX]$ has a label-aligned outlier $\widehat{\Lambda_y}=\Lambda_y+o_{\P}(1)$ where $\Lambda_y:=(1-b_\sigma^2-\frac{c_\sigma^2}{2})+\frac{c_\sigma^2}{2} \lambda_{\mathrm{out}}(\gamma,\ell)\not\in\supp{\nu_{\rm q}}$. In this case, its associated eigenvector is aligned with label $\vy$. Otherwise, no such outlier exists.

\item \textbf{(Spike of the linear-width CK.)}  If
$\ell>\gamma^{-1/2}$ and $z'(-1/\Lambda_y)>0$, then for every sufficiently small deterministic interval $I_y$ around
\begin{equation}\label{eq:lambda-K-y-def}
        \lambda_y
        =z\!\left(-\frac1{\Lambda_y}\right)
        :=\Lambda_y\left(
          1+
          \phi\int\frac{t}{\Lambda_y-t}\,\nu_{\rm q}(dt)
          \right),
\end{equation}
which is disjoint from $\supp{\mu_{\rm q}}$, $\mbi{K}_\sigma$ has exactly one eigenvalue in $I_y$ which converges in probability to $\lambda_y$.
If $\widehat{\mbi{v}}_y$ is the associated unit-norm eigenvector, then
\begin{equation}\label{eq:final-label-overlap}
        \left|\left\langle \widehat{\mbi{v}}_y,\frac{\mbi{y}}{\sqrt n}\right\rangle\right|^2
        \xrightarrow{\P}
        \varphi\!\left(-\frac1{\Lambda_y}\right)\frac{\gamma\ell^2-1}{\gamma\ell(\ell+1)},
\qquad
        \varphi\!\left(-\frac1{\Lambda_y}\right)
        :=
        \frac{
        1-\phi\int\frac{t^2}{(\Lambda_y-t)^2}\,\nu_{\rm q}(dt)
        }{
        1+\phi\int\frac{t}{\Lambda_y-t}\,\nu_{\rm q}(dt)
        }.
\end{equation}
If either
$\ell\le\gamma^{-1/2}$ or
$z'(-1/\Lambda_y)\le0$, then no separated label-aligned eigenvalue of $\mbi{K}_\sigma$ is produced.
\end{enumerate}
\end{theorem}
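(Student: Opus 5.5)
We prove Theorem~\ref{thm:quadratic} in three stages. First we replace the nonlinear kernel by a quadratic polynomial surrogate. Next we analyze the spiked population matrix $\E[\vK_\sigma\mid\vX]$ as a spiked Wishart-type matrix in the lifted dimension $p$. Finally we transfer the outlier to the linear-width CK $\vK_\sigma$ through the free multiplicative structure $\rho^{\MP}_\phi\boxtimes\nu_{\rm q}$ and the transforms $z(\cdot),\varphi(\cdot)$ of \eqref{eq:z(s)}.

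\textbf{Step 1: polynomial reduction of the population kernel.} Conditionally on $\vX$, the rows $\sigma(\vw^\top\vX)$ are i.i.d. Their covariance is $\vK_\infty:=\E_\vw[\sigma(\vX^\top\vw)\sigma(\vw^\top\vX)]$, with entries $\E[\sigma(g_i)\sigma(g_j)]$ for $g_i=\vw^\top\vx_i$. In the regime $n\asymp d^2$, the off-diagonal inner products $\vx_i^\top\vx_j$ are of order $d^{-1/2}$, and the norms satisfy $\|\vx_i\|^2=1+O(d^{-1/2})$. A Hermite expansion (after normalizing norms) gives
\[
\vK_\infty\approx (1-b_\sigma^2-\tfrac{c_\sigma^2}{2})\vI_n + b_\sigma^2\,\vX^\top\vX+\tfrac{c_\sigma^2}{2}(\vX^\top\vX)^{\odot2}+\text{low-rank}.
\]
The degree-$\ge3$ remainder is small in operator norm, by the arguments of \citep{ghorbani2019limitations,pandit2024universalitykernelrandommatrices,xiao2022precise}. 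The low-rank pieces come from the norm fluctuations and from the $\vone\vone^\top$ direction. We define $\vP_\sigma$ as the projection that removes them together with the column space of $\vX^\top$. Note that $b_\sigma^2\vX^\top\vX$ has rank $d=o(n)$ and norm $\asymp n/d$, so it must be projected out; this is what \eqref{eq:P-sigma-plus-bbp} should encode.

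After projection, $(\vX^\top\vX)^{\odot2}$ restricted to the diagonal-centered part equals $\vPhi^\top\vPhi$, where the columns of $\vPhi\in\R^{p}$ are the symmetrized features $\mathrm{vech}(\vx_i\vx_i^\top-\frac1d\vI)$. These columns behave like isotropic vectors in dimension $p$ with $n/p\to\gamma$. They also carry the XOR mean $\E[\vx\vx^\top]-\frac1d\vI=\frac{r^2}{d}\vu_a\vu_a^\top$ (entries $\pm\mu$ outer products). Since $\vmu_1\vmu_1^\top-\vmu_2\vmu_2^\top$ has opposite sign on the two classes, the mean part is rank one, of the form $\sqrt{\ell}\,\va\,\vy^\top/\sqrt n$ with $\ell=r^4/4$.

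\textbf{Step 2: population spike.} The matrix $\vPhi^\top\vPhi$ is then a rank-one spiked (information-plus-noise) Wishart matrix with ratio $\gamma$ and signal strength $\ell$. We first establish universality (Lindeberg, or a moment comparison for the weakly dependent quadratic features) to reduce to a Gaussian model. The classical BBP results then apply: the spectrum converges to $\rho^{\MP}_\gamma$, with an outlier at $\lambda_{\rm out}=1+\gamma+\gamma\ell+1/\ell$ exactly when $\ell>\gamma^{-1/2}$. The squared overlap of its eigenvector with $\vy/\sqrt n$ is $(\gamma\ell^2-1)/(\gamma\ell(\ell+1))$. Applying the affine map $x\mapsto(1-b_\sigma^2-c_\sigma^2/2)+\frac{c_\sigma^2}{2}x$ yields (ii) and identifies $\nu_{\rm q}$.

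\textbf{Step 3: linear width.} Conditionally on $\vX$, we have $\vK_\sigma=\frac1N\vP_\sigma\sigma(\vX^\top\vW^\top)\sigma(\vW\vX)\vP_\sigma$, a sample covariance with population covariance $\vP_\sigma\vK_\infty\vP_\sigma$. The rows are not Gaussian, but we can use the concentration of quadratic forms of $\sigma(\vW\vx)$ as in \citep{louart2018random,fan2020spectra}. The bulk (i) then follows from Silverstein's equation, since the population ESD tends to $\nu_{\rm q}$. For the outlier, we apply the spiked sample-covariance theory in the form used for Theorem~\ref{thm:snr_finite_eig}. A population spike $\Lambda_y$ outside $\supp{\nu_{\rm q}}$ produces an outlier at $z(-1/\Lambda_y)$ iff $z'(-1/\Lambda_y)>0$. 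Its eigenvector overlap with the population eigenvector is $\varphi(-1/\Lambda_y)$, and overlaps with directions orthogonal to that population eigenvector vanish. Multiplying $\varphi(-1/\Lambda_y)$ by the population overlap from Step 2 gives \eqref{eq:final-label-overlap}, since the population eigenvector's residual component is orthogonal to $\vy$ asymptotically in a delocalized way.

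\textbf{Main obstacle.} We expect Step 1 to be hardest: establishing operator-norm control of the cubic-and-higher remainder together with the non-Gaussian feature vectors $\vPhi$ at $n\asymp d^2$, while tracking the exact rank-one XOR mean. We would first try the orthogonal-polynomial decomposition of \citep{pandit2024universalitykernelrandommatrices}, adding the finite-rank mean shift as a perturbation whose effect is bounded via resolvent identities.
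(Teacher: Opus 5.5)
Your three-stage plan matches the paper's: a Hermite (Mehler) reduction of $\E[\vK\mid\vX]$ with a nuisance projection, a rank-one BBP in the lifted space $\R^p$, then transfer to linear width through $z$ and $\varphi$. However, Step~2 rests on a noise model that is false as stated. With $\vx_i=\vz_i+\vm_i$, the lifted feature $\mathrm{svec}(\vx_i\vx_i^\top)$ contains its mean and the second-chaos part $(\vE_2)_i$. It also contains the first-chaos cross term $(\vE_1)_i=\mathrm{svec}(\vz_i\vm_i^\top+\vm_i\vz_i^\top)$, which your plan never accounts for.

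\textbf{The cross term $\vE_1$ breaks the isotropic spiked model.} Each column $(\vE_1)_i$ has norm only $\asymp r/\sqrt d$. But all of them lie in the $\le 2d$-dimensional subspace $\{\mathrm{svec}(\va\vu_k^\top+\vu_k\va^\top):\va\in\R^d,\ k=1,2\}$ of $\R^p$, so $\|\vE_1\|=\Theta(1)$. That subspace contains the spike direction $\vs=\frac{1}{\sqrt2}\mathrm{svec}(\vu_1\vu_1^\top-\vu_2\vu_2^\top)$. Concretely, $\langle\vs,(\vE_2)_i\rangle$ has variance $2/d^2$, whereas $\langle\vs,(\vE_1)_i\rangle=\pm\sqrt2\,\frac{r}{\sqrt d}\,\vz_i^\top\vu_{\kappa_i}$ has variance $2r^2/d^2$. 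The noise is therefore inflated by the factor $1+r^2$ exactly along the signal.

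So your lifted Gram matrix is not a rank-one spiked Wishart matrix with isotropic noise. The outlier equation picks up the inflated $\vs$-direction covariance, and $\lambda_{\rm out}=1+\gamma+\gamma\ell+1/\ell$ and $\mathrm{Align}(\gamma,\ell)$ no longer follow. A Lindeberg reduction to a Gaussian model would faithfully reproduce this anisotropy, not remove it.

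\textbf{Your projection does not remove it.} The row space of $\vE_1$ consists of vectors with entries $2\vz_i^\top\mathbf S\vm_i$ ($\mathbf S$ symmetric). These carry the sign and cluster factors $\eta_i\mathbf 1_{\{\kappa_i=a\}}$ and are asymptotically orthogonal to $\operatorname{Range}(\vX^\top)$. The paper instead includes $\operatorname{Range}(\vE_1^\top)$ in $\mathcal U_\sigma^+$ (see \eqref{eq:U-sigma-plus-bbp}), so that $\vE_1\vP_\sigma=0$ exactly and $\vQ\vP_\sigma=(\vE_2+\theta_n\vs\vv^\top)\vP_\sigma+o_\P(1)$. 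The price is Lemma~\ref{lem:Psigma-preserves-label}: one must show this $O(d)$-dimensional removal keeps $\frac1n\vy^\top\vP_\sigma\vy\to1$. The paper does this using the XOR symmetry $\sum_i y_i\E\vf_i=\vzero$ and a whitened-score bound. Your plan does not address label preservation at all.

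\textbf{Step~1 has a related, smaller defect.} The $k=1$ Mehler term is $\vA_\sigma\vX^\top\vX\vA_\sigma$ with $\vA_\sigma=\diag(\zeta_1(s_i)/s_i)$, not $b_\sigma^2\vX^\top\vX$. After removing only $\operatorname{Range}(\vX^\top)$ you are left with $\vP(\vA_\sigma-b_\sigma\vI)\vX^\top\vX(\vA_\sigma-b_\sigma\vI)\vP$. The diagonal factor has entries $\asymp d^{-1/2}$ (whenever $a_3\neq0$), against $\|\vX\|\asymp\sqrt d$. This gives a rank-$d$ perturbation of norm $\Theta(1)$: neither negligible nor finite-rank. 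Neither the first BBP analysis nor the finite-rank spiked-covariance theorem of Step~3 can absorb it.

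Relatedly, the degree-$\ge3$ remainder is not small in operator norm before projection, since $\vR^{\odot3}$ contains $\frac{3}{d+2}\vR$, of norm $\Theta(1)$. This is why the paper removes the rescaled ranges $\operatorname{Range}(\vA_\sigma\vX^\top)$ and $\operatorname{Range}(\vB_\sigma\vX^\top)$. It also uses $\vmu_\sigma$ to cancel the linear part of the radial vectors in the $k=2$ term (Lemma~\ref{lem:radial-cancellation-Psigma}).

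Once $\vP_\sigma$ is defined this way and label preservation is proved, your Steps~2 and~3 go through essentially as in the paper: Benaych-Georges--Nadakuditi in the lifted space, then the nonlinear spiked covariance theorem with $z$ and $\varphi$.
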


Theorem~\ref{thm:quadratic} only characterizes the label-aligned outlier generated by the XOR.  The full CK may contain additional outliers which are removed by the projection $\vP_\sigma$: a
mean outlier, $d$ linear-term outliers when $b_\sigma\neq0$, and
possibly $o(n)$ quadratic nuisance outliers caused by the rank-$O(d)$
non-isotropic covariance component in $\E[\vK|\vX]$. These
outliers do not affect the limiting ESD and have asymptotically zero overlap with XOR label. The $\vP_\sigma$ defined in \eqref{eq:P-sigma-plus-bbp} only depends on $\sigma$ and $\vX$. The proof of Theorem~\ref{thm:quadratic} is in Appendix~\ref{sec:quadratic-regime-proof}.


\section{Proof Strategy for Main Results}
\label{sec:proof_idea}
Dataset $\vX$ in \eqref{eq:XOR} is decomposed as a rank-2 XOR signal $\vM$ and isotropic noise $\vZ$. 
In the proportional limit, for Theorems~\ref{thm:snr_finite_eig}, \ref{thm:large}, and \ref{thm:trained}, we expand $\sigma(\vW(\vZ+\vM))$ around $\vW\vZ$ and retain Hermite components of $\sigma$ up to second order. This yields a bulk term $\sigma(\vW\vZ)$ and a low-rank spike decomposition for $\vY$.
Let $\vY_0 := \frac{1}{\sqrt{N}}\sigma(\vW\vZ) $ and $\theta_{\snr} :=r\sqrt{\frac{n}{2d}}$.
Consider Gaussian random vectors $\vg_1=\vW\vu_1,\ \vg_2=\vW\vu_2$.
Theorems~\ref{thm:snr_finite_eig}, \ref{thm:large}, and
\ref{thm:trained} repeatedly use the following QE formulation, a formal approximation of \eqref{eq:QDE_null_intro}:
\begin{equation}
\label{eq:QE_formal}
\vY_{\mathrm{QE}}:=
\vY_0
+\underbrace{\frac{\theta_{\snr} b_\sigma}{\sqrt{N}}\Big(\vg_1\vv_1^\top+\vg_2\vv_2^\top\Big)}_{\vT_1\ \text{(linear spikes)}}
+\underbrace{\frac{\theta_{\snr}^2 c_\sigma}{2\sqrt{N}}\Big(\vg_1^{\odot 2}\vv_1^{\odot 2\top}+\vg_2^{\odot 2}\vv_2^{\odot 2\top}\Big)}_{\vT_2\ \text{(quadratic spikes)}}.
\end{equation}
\begin{prop}[QE for CK in Proportional Limit]
\label{prop:approx}
Assume $r=O(n^{1/4})$ and Assumptions~\ref{assump:sigma} and
\ref{assump:asymptotics} hold. Let $\vY_{\mathrm{QE}}$ be~\eqref{eq:QE_formal}. Then, in probability, 
$\|\vY-\vY_{\mathrm{QE}}\|\to 0.$
Consequently, if $\|\vY\|,\|\vY_{\mathrm{QE}}\|\lesssim 1$, then $\vK=\vY^\top\vY$ and $\vK_{\mathrm{QE}}:=\vY_{\mathrm{QE}}^\top\vY_{\mathrm{QE}}$
have asymptotically the same outlier eigenvalues and eigenvector alignments.
\end{prop}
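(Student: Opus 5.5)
The plan is to Taylor-expand $\sigma$ entrywise around $\vW\vZ$, using the special structure of $\vW\vM$. Since $\vW\vM=\theta_{\snr}(\vg_1\vv_1^\top+\vg_2\vv_2^\top)$ and $\vv_1,\vv_2$ have disjoint supports, each entry of $\vDelta:=\vW\vM$ has only one nonzero summand. Concretely,
\[
\Delta_{ij}=\theta_{\snr}\sqrt{2/n}\,(\pm g_{k,i}),\qquad \theta_{\snr}\sqrt{2/n}=r/\sqrt d=O(n^{-1/4}),
\]
where $k\in\{1,2\}$ is determined by the column $j$. Since $\max_{i,k}|g_{k,i}|=O_\P(\sqrt{\log N})$, we get $\max_{ij}|\Delta_{ij}|=O_\P(n^{-1/4}\sqrt{\log n})$. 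Disjointness also gives $\vDelta^{\odot 2}=\theta_{\snr}^2(\vg_1^{\odot2}\vv_1^{\odot2\top}+\vg_2^{\odot2}\vv_2^{\odot2\top})$ exactly, with no cross terms. Using Assumption~\ref{assump:sigma}, I write
\[
\sigma(\vW\vX)=\sigma(\vW\vZ)+\sigma'(\vW\vZ)\odot\vDelta+\tfrac12\sigma''(\vW\vZ)\odot\vDelta^{\odot2}+\vR,\qquad |R_{ij}|\le\tfrac{\lambda_\sigma}{6}|\Delta_{ij}|^3 .
\]
The remainder I bound crudely in Frobenius norm:
\[
\|\vR\|/\sqrt N\le\|\vR\|_F/\sqrt N\lesssim \sqrt{n}\,n^{-3/4}\log^{3/2}n\to0 .
\]

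Next I split $\sigma'(\vW\vZ)=b_\sigma\mathbf 1\mathbf 1^\top+\vA_1$ and $\sigma''(\vW\vZ)=c_\sigma\mathbf 1\mathbf 1^\top+\vA_2$. The constant parts reproduce exactly $\vT_1$ and $\vT_2$ in \eqref{eq:QE_formal}. The error terms are Hadamard products with rank-structured matrices, so they factor as diagonal sandwiches:
\[
\tfrac{1}{\sqrt N}\vA_1\odot(\vg_k\vv_k^\top)=\tfrac{1}{\sqrt N}\diag(\vg_k)\,\vA_1\,\diag(\vv_k).
\]
Hence their operator norms are at most
\[
\theta_{\snr}\sqrt{2/n}\,\max|g|\cdot\|\vA_1\|/\sqrt N=O(n^{-1/4}\sqrt{\log n})\cdot\|\vA_1\|/\sqrt N .
\]
Likewise the second-order error is $O(n^{-1/2}\log n)\cdot\|\vA_2\|/\sqrt N$. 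So everything reduces to showing $\|\vA_\ell\|=O_\P(\sqrt N+\sqrt n)$ for $\ell=1,2$.

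That operator-norm bound is the main obstacle; a Frobenius bound would give only $\sqrt{nN}$, which is too weak. My plan is to treat $\vA_\ell$ as a nonlinear random-feature matrix $f(\vW\vZ)$ with $f=\sigma^{(\ell)}-\E\sigma^{(\ell)}(\xi)$, which is Lipschitz and bounded in derivative.
\begin{itemize}
\item Condition on $\vZ$ and note that, with high probability, the columns satisfy $\|\vz_j\|=1+O(\sqrt{\log n/n})$ and $|\vz_i^\top\vz_j|=O(\sqrt{\log n/n})$.
\item Decompose $\vA_\ell=\E[\vA_\ell\mid\vZ]+(\vA_\ell-\E[\vA_\ell\mid\vZ])$. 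The first piece is the rank-one matrix $\mathbf 1_N\vm_\ell^\top$ with $(m_\ell)_j=\E f(\|\vz_j\|\xi)=O(n^{-1/2}\sqrt{\log n})$. Its norm is therefore $\sqrt N\,\|\vm_\ell\|=O(\sqrt{N\log n})$, which after the prefactor $n^{-1/4}$ still vanishes.
\item The centered piece has i.i.d.\ rows given $\vZ$, with covariance $\E[f(\vw^\top\vZ)^\top f(\vw^\top\vZ)\mid\vZ]$. Its operator norm is $O(1)$: the Hermite expansion gives a near-identity diagonal, the off-diagonal part is controlled by $(\vZ^\top\vZ)^{\odot k}$ with $k\ge1$, and the near-rank-one piece from the squared inner products is $O(1)$.
\item A standard concentration bound for matrices with independent sub-Gaussian rows (rows are Lipschitz functions of $\vw$) then gives $O(\sqrt N+\sqrt n)$. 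This is as in the known CK norm bounds of \citet{fan2020spectra,benigni2022largest}, which I would cite.
\end{itemize}

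Combining the pieces yields $\|\vY-\vY_{\rm QE}\|=O_\P(n^{-1/4}\log n)\to0$. For the consequence, I use
\[
\|\vK-\vK_{\rm QE}\|\le(\|\vY\|+\|\vY_{\rm QE}\|)\,\|\vY-\vY_{\rm QE}\|\to0 .
\]
Weyl's inequality then transfers outlier eigenvalue locations. Since outliers are separated from $\supp\mu$ by a gap of order one, the Davis--Kahan $\sin\Theta$ theorem shows that the spectral projectors onto those outlier clusters differ by $o_\P(1)$ in operator norm. This preserves all eigenvector alignments, in particular the overlaps with $\vy$ and $\vv_k$.
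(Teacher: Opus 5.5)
Your proposal is correct and follows essentially the same route as the paper. Both arguments use a third-order entrywise Taylor expansion around $\vW\vZ$, the disjoint-support identity for $(\vW\vM)^{\odot 2}$, the Hadamard-to-diagonal sandwich with $\|\vg_k\|_\infty\prec\sqrt{\log N}$, an $O(\sqrt N+\sqrt n)$ operator-norm bound for the centered derivative features via Lipschitz concentration, and Weyl plus Davis--Kahan for the consequence. The only difference is cosmetic: the paper centers $\sigma^{(\ell)}(\vW\vZ)$ at its unconditional mean and then replaces $\E[\sigma'(\vw^\top\vz/\sqrt d)]$ and $\E[\sigma''(\vw^\top\vz/\sqrt d)]$ by $b_\sigma$ and $c_\sigma$ through the chi-distribution expansion of Lemma~\ref{lem:concentration_coeff}, whereas you center directly at $b_\sigma,c_\sigma$ and absorb the discrepancy into a rank-one conditional-mean piece. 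Also, your covariance-norm step is not needed, since the net argument of Lemma~\ref{lem:concentration_operator} uses only the Lipschitz property of $\sigma'$ and $\sigma''$.
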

With this proposition, we can analyze informative spikes from $\vT_1$ and $\vT_2$, although we have to carefully consider the uninformative spikes from the null model $\vY_0$, which gives more technical difficulty.
The magnitude of $\vT_1$ scales like $\theta_{\snr} b_\sigma$, while $\vT_2$ scales like
$\theta_{\snr}^2 c_\sigma/\sqrt{n}$. For finite SNR (or when $c_\sigma=0$), the quadratic spikes $\vT_2$ are negligible relative to the bulk. In this case, only the linear spikes in $\vT_1$ contribute; but these are useless for linear clustering.
In the large-SNR regime (with $c_\sigma\neq 0$), $\vT_2$ becomes comparable to the bulk. Then since $\vy\in\mathrm{span}\{\vv_1^{\odot 2},\vv_2^{\odot 2}\}$, the corresponding outlier eigenvectors become label-informative. 
Proposition~\ref{prop:approx} fails when we consider the quadratic sample-size regime $n=\Theta(d^2)$ in Section~\ref{subsec:quadratic-regime}. In Appendix~\ref{sec:quadratic-QE}, we develop a different QE for the population level of CK matrix in this case, which captures the concentration of the random kernel around a deterministic quadratic polynomial kernel.

\section{Numerical Simulations: Varying Knobs}\label{sec:numerics}
We now present numerical simulations that validate and complement the theoretical predictions in Section~\ref{sec:results}.

\paragraph{Baseline case: proportional limit with finite SNR.}
Figure~\ref{fig:theorem3_figs} shows the random CK spectra for the XOR model in the finite-SNR regime and proportional limit (Assumption~\ref{assump:asymptotics}), together with the theoretical spike predictions from Theorem~\ref{thm:snr_finite_eig}. For the activation function used in this simulation, uninformative spikes  described in Theorem~\ref{thm:snr_finite_eig}(ii) do not appear. Consistent with Theorem~\ref{thm:snr_finite_eig}(iii)--(iv), the two leading eigenvalues are outliers, and their eigenvectors exhibit nontrivial alignment with \(\vv_1\) and \(\vv_2\), defined in~\eqref{eq:u_1u_2}, while remaining asymptotically orthogonal to the XOR label vector \(\vy\). Consequently, although one could potentially apply downstream nonlinear transformations to these leading eigenvectors to classify the data, direct linear classification of the XOR problem from the CK eigenvectors fails in this regime.

This setting serves as our baseline. The theoretical results in Section~\ref{sec:results} identify four primary control parameters, or ``knobs,'' that govern the emergence of CK outliers and the alignment of their eigenvectors with the XOR labels \(\vy\):
(1) the SNR \(r^2\);
(2) the choice of nonlinearity \(\sigma\), through the constants \((b_\sigma,c_\sigma)\) defined in \eqref{eq:b_sigma_def} and \eqref{eq:c_sigma_def};
(3) the weight structure, comparing random and pretrained weights; and
(4) the asymptotic scaling regime relating \((n,d,N)\).
Below, we vary these knobs one at a time to isolate their effects on the CK spectrum relative to this baseline.

\subsection{Knob I: SNR of the XOR Dataset}\label{subsec:numerics_snr}
\begin{figure}[!htbp]
    \centering
    \begin{minipage}[c]{0.61\linewidth}\centering
    \includegraphics[width=\linewidth]{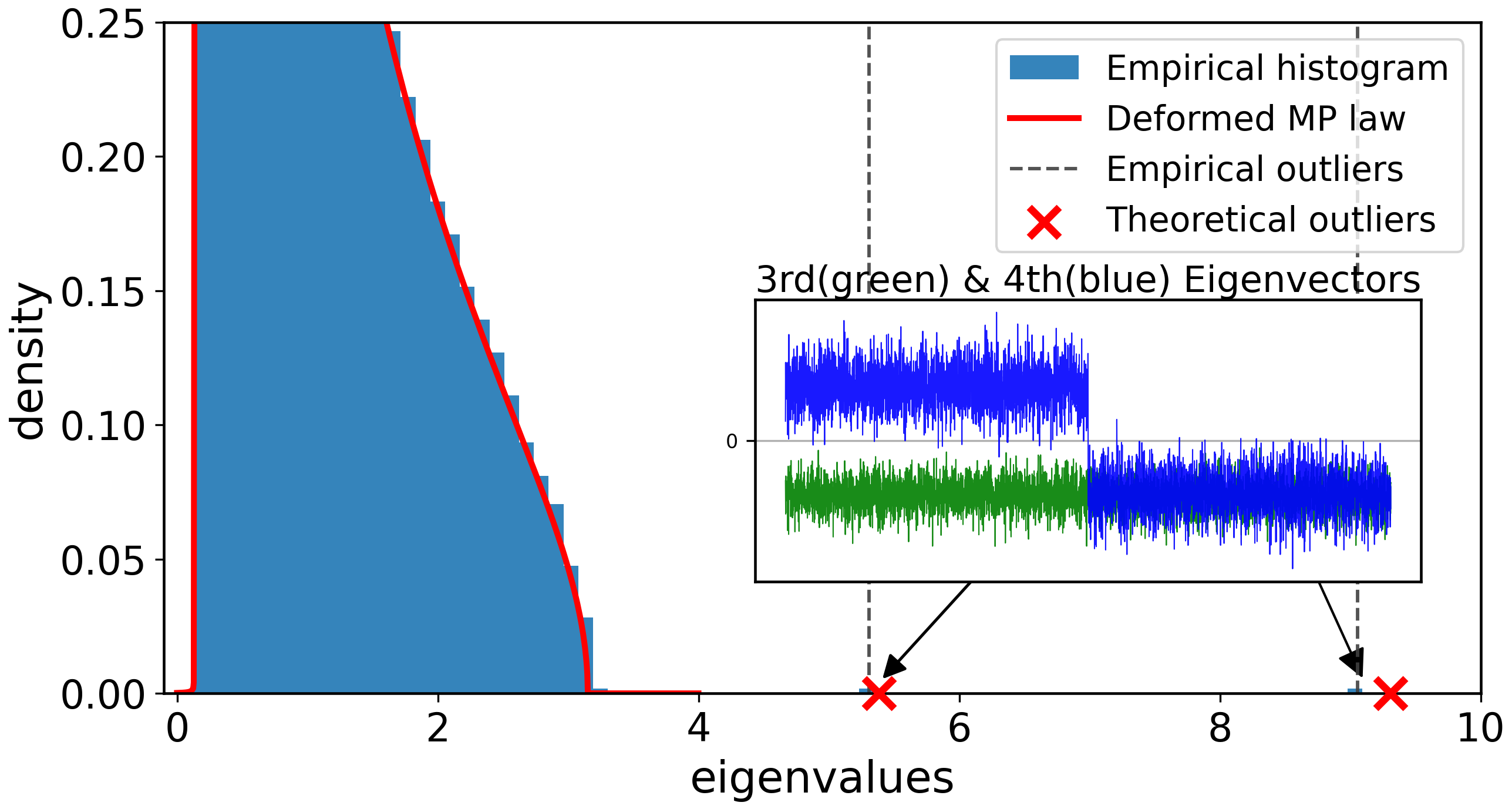}
    \end{minipage}\hfill
    \begin{minipage}[c]{0.38\linewidth}\centering
    \includegraphics[width=\linewidth]{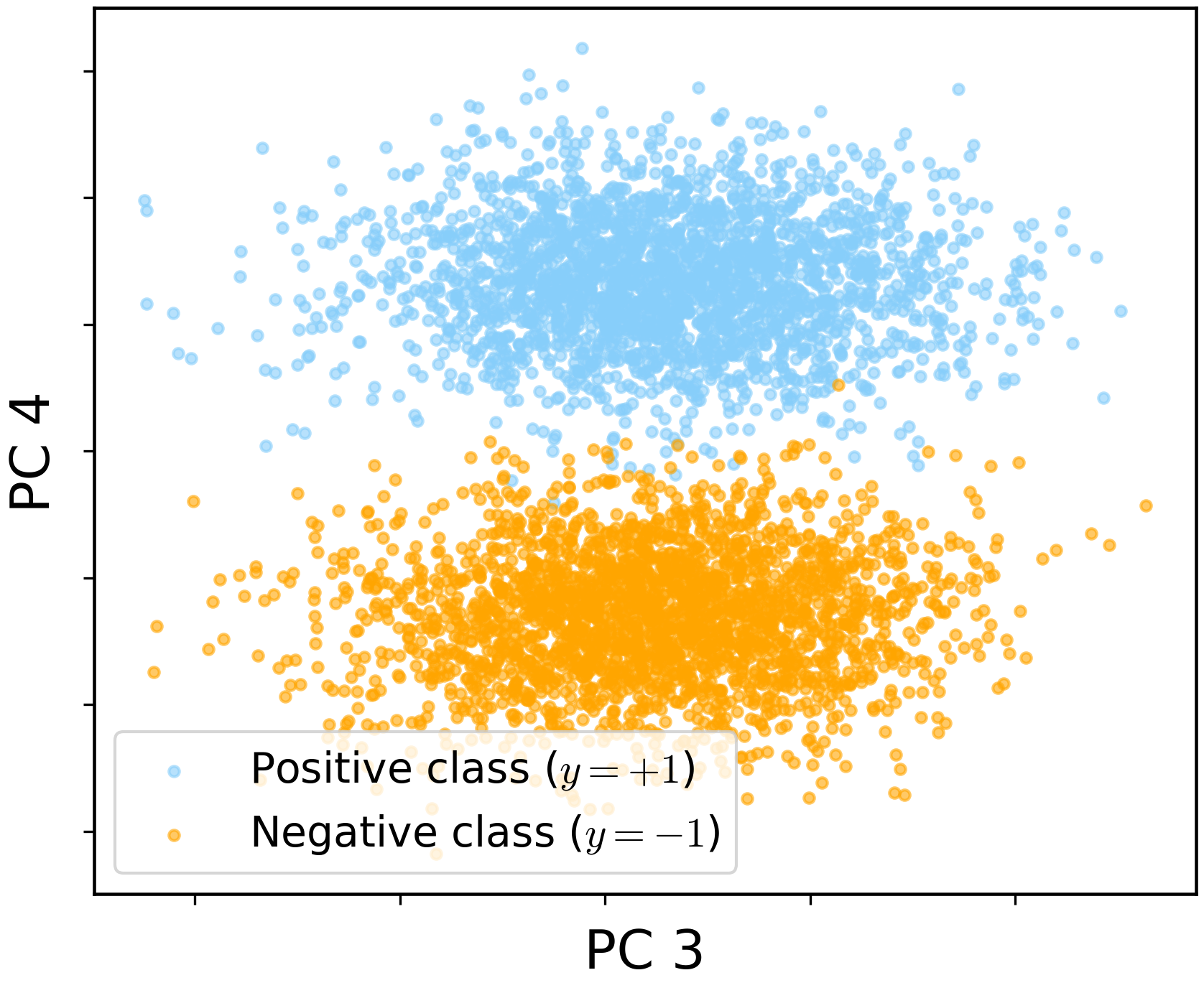}
    \end{minipage}
    \caption{\textbf{Large-SNR proportional regime: informative CK spikes recover the XOR labels.}
    We consider the same setting as in Figure~\ref{fig:theorem3_figs} only increasing SNR to \(r=35^2\).
    \textbf{Left:} spectrum of \(\vK\) (blue) together with the predicted curve from Theorem~\ref{thm:large} (red curve). To focus on the order-one eigenvalues, we do not display the first two extremely large eigenvalues
    \(\widehat\lambda_1=153.9\) and \(\widehat\lambda_2=153.1\) predicted by \eqref{eq:large_linear_outliers}.
    The red \(\color{red}{\mathbf{\times}}\) are the theoretical locations of the order-one outliers. The subfigure presents the 3rd \& 4th eigenvectors against sample index.
    The 4th outlier eigenvector of $\vK$ is informative for linear classification of XOR.
    \textbf{Right:} kernel-PCA visualization using the 3rd \& 4th PCs of \(\vK\). 
    In contrast to Figure~\ref{fig:theorem3_figs}, these two spectral coordinates clearly separate the positive and negative classes by a linear boundary.
    }
    \label{fig:theorem5_figs}
\end{figure}
Theorem~\ref{thm:large} shows that increasing the SNR to \(r^2=\Theta(n^{1/2})\) can lead to the emergence of a label-informative order-one CK outlier. In this regime, the quadratic component of \(\sigma\) is no longer negligible when \(c_\sigma\ne 0\), making linear spectral classification of XOR possible through \eqref{eq:large_final_label_projector} in Theorem~\ref{thm:large}(iv). When moving from Figure~\ref{fig:theorem3_figs} to Figure~\ref{fig:theorem5_figs}, we only increase the SNR from \(36\) to \(1225\). This change produces two additional outliers, one of whose eigenvectors asymptotically aligns with the XOR labels \(\vy\).

Figures~\ref{fig:theorem3_figs} and~\ref{fig:theorem5_figs} use the centered and normalized ReLU activation, for which both \(b_\sigma\) and \(c_\sigma\) are nonzero. In Figure~\ref{fig:CK_phase}, we show the same SNR-driven phase transition using a centered and normalized activation \(\sigma(x) \propto \sqrt{x^2+1}-1\), for which \(b_\sigma=0\) and \(c_\sigma\ne 0\). Since \(b_\sigma=0\), the diverging linear outliers predicted by Theorem~\ref{thm:snr_finite_eig}(iii) and Theorem~\ref{thm:large}(ii) are absent. As the SNR increases from \(r^2=36\) to \(625\), the second eigenvector develops a two-class structure, and XOR becomes linearly classifiable using CK eigenvectors.

\begin{figure}[!htbp]
    \centering
    \begin{minipage}[c]{0.61\linewidth}
        \centering
        \includegraphics[width=\linewidth]{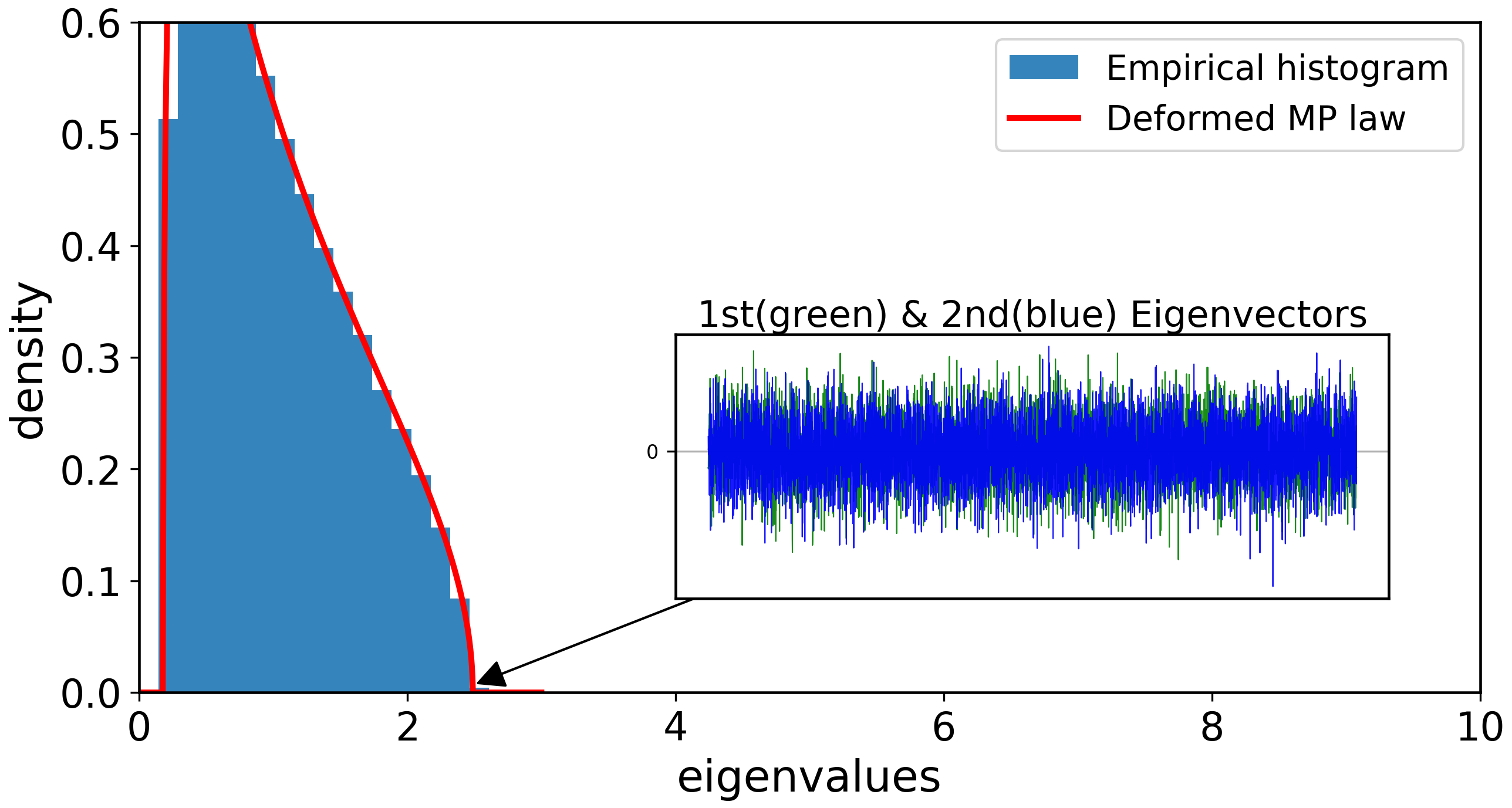}
    \end{minipage}%
    \hfill
    \begin{minipage}[c]{0.38\linewidth}
        \centering
        \includegraphics[width=\linewidth]{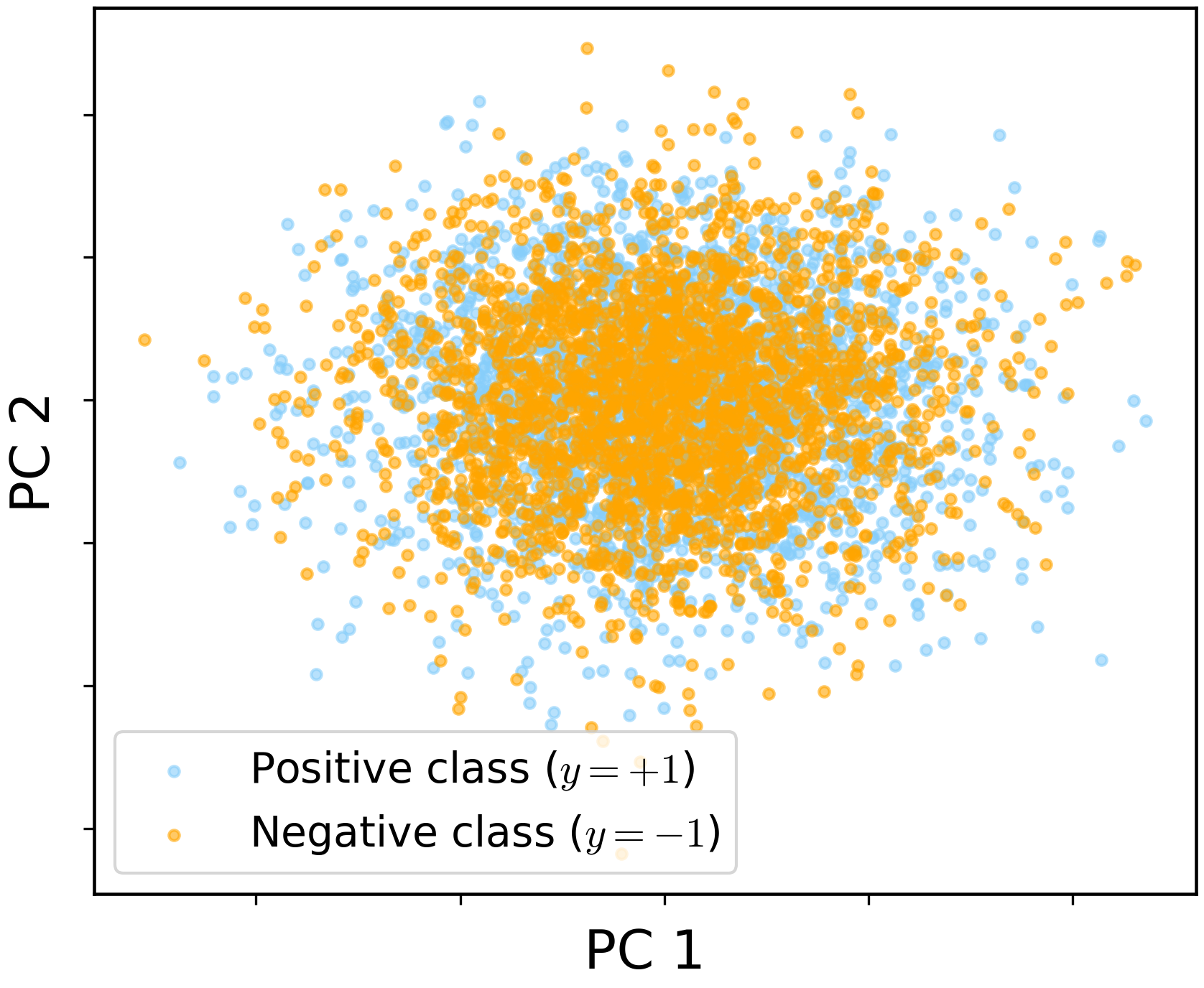}
    \end{minipage}
    \begin{minipage}[c]{0.61\linewidth}
        \centering
        \includegraphics[width=\linewidth]{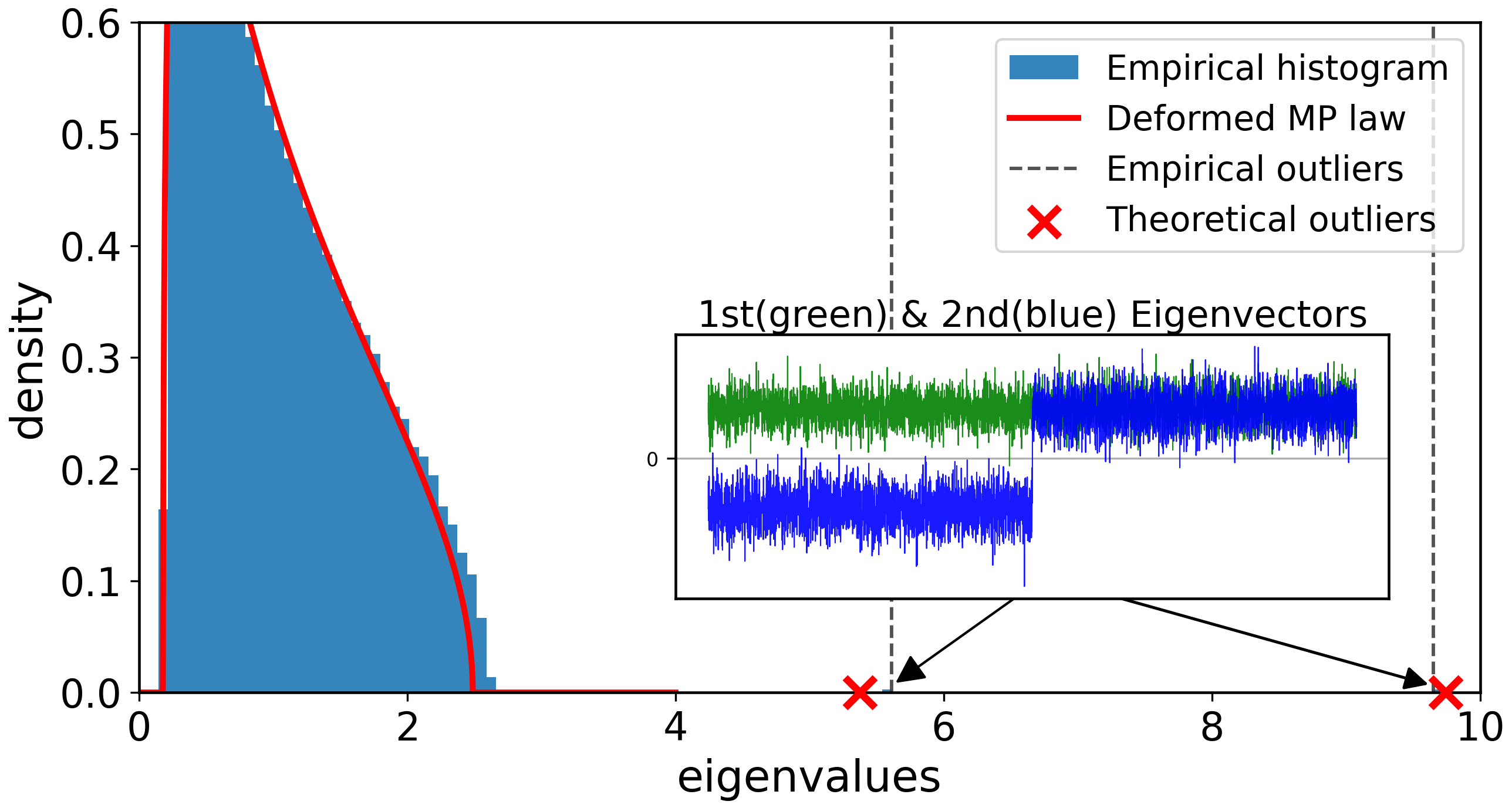}
    \end{minipage}%
    \hfill
    \begin{minipage}[c]{0.38\linewidth}
        \centering
        \includegraphics[width=\linewidth]{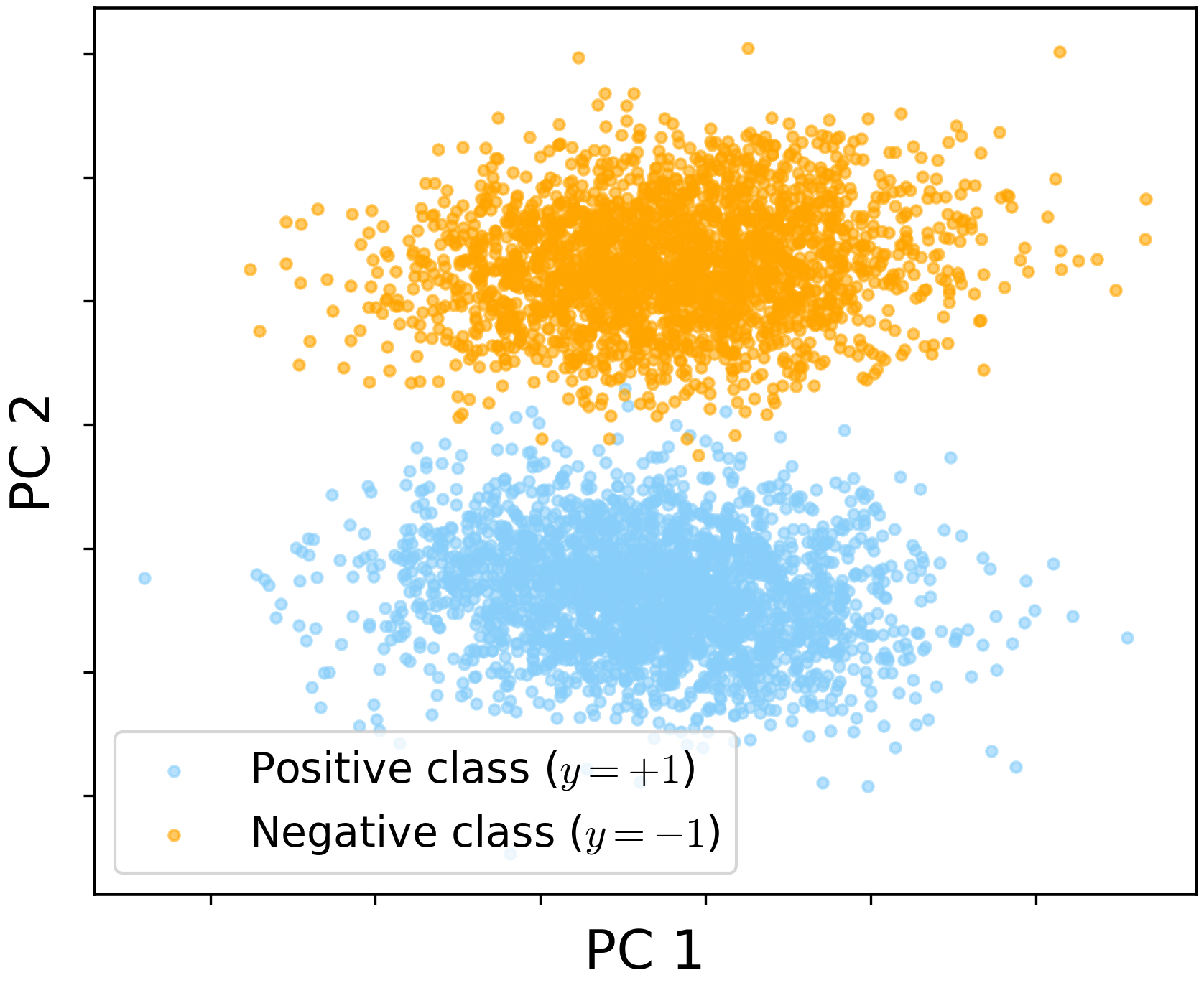}
    \end{minipage}
    \caption{\textbf{CK spectral phase transition driven by SNR.} \textbf{Top:} CK spectrum in the finite-SNR regime with \(r=6\); no eigenvalue separates from the bulk. \textbf{Bottom:} CK spectrum in the large-SNR regime with \(r=25\). Two quadratic eigenvalues separate from the bulk, and the second leading eigenvector aligns with the XOR labels. Here the activation \(\sigma(x) \propto \sqrt{x^2+1}-1\), \(n=5000\), and \(N=d=15000\).}
    \label{fig:CK_phase}
\end{figure}

\subsection{Knob II: Activation Functions}\label{subsec:numerics_activation}

Figures~\ref{fig:theorem3_figs},~\ref{fig:theorem5_figs}, and~\ref{fig:CK_phase} already show that changing the activation function can lead to qualitatively different CK spectral behavior. We now present two additional examples: the emergence of uninformative spikes and the absence of informative spikes.
\begin{figure}[!htbp]
    \centering
    \begin{minipage}[t]{0.48\linewidth}
        \centering
        \includegraphics[width=\linewidth]{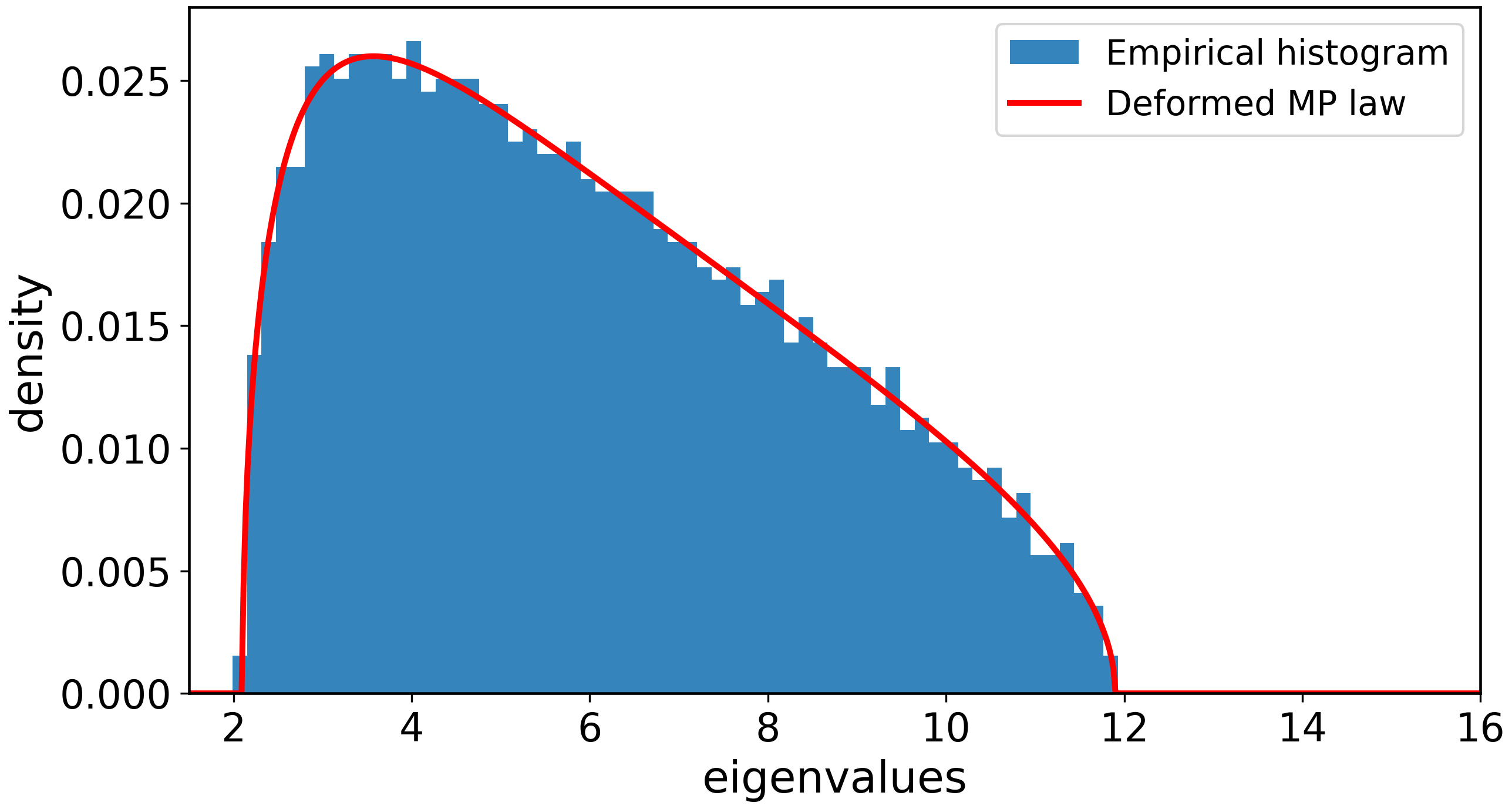}
    \end{minipage}%
    \begin{minipage}[t]{0.48\linewidth}
        \centering
        \includegraphics[width=\linewidth]{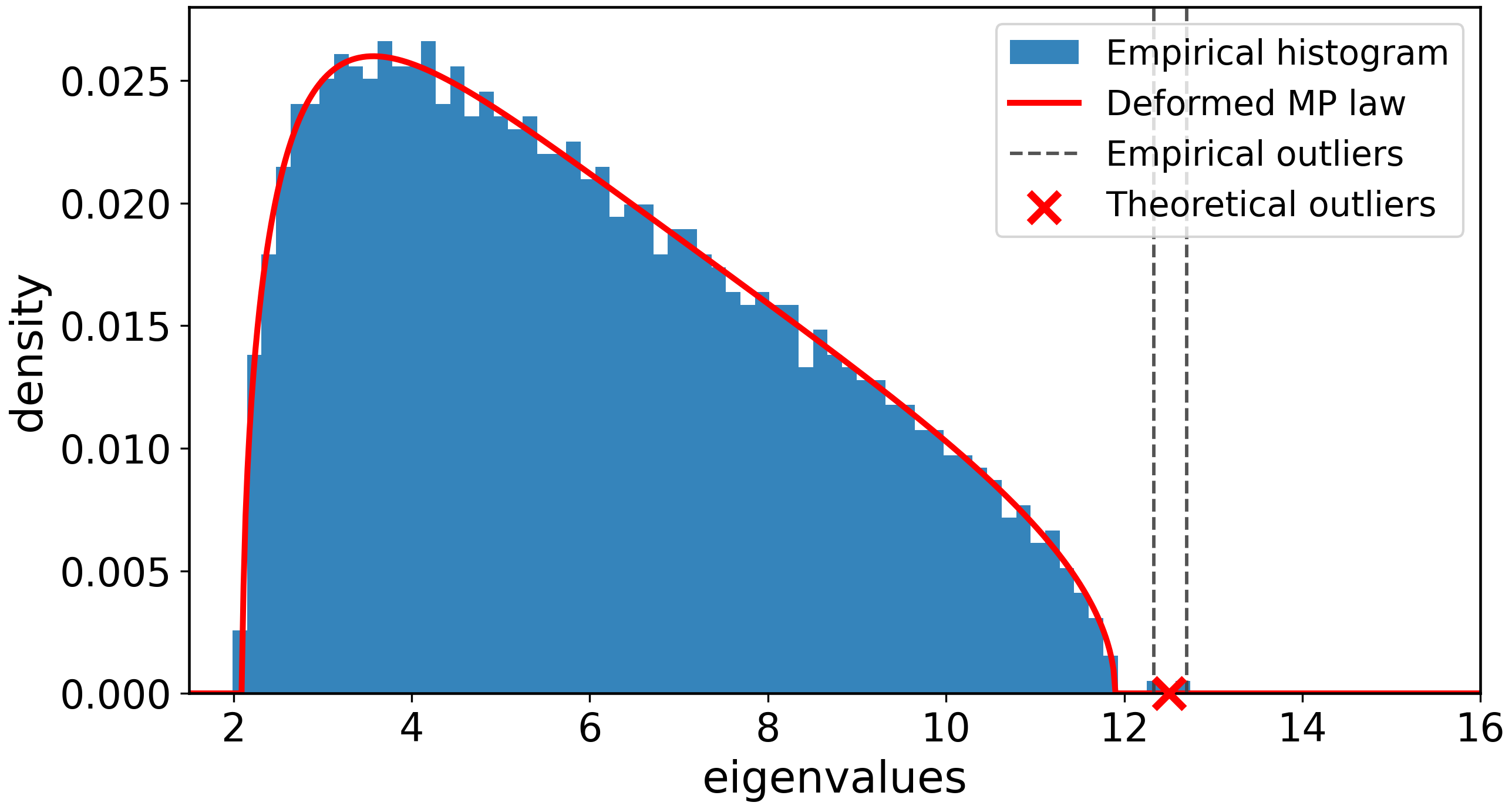}
    \end{minipage}%
    \caption{\textbf{Emergence of uninformative spikes when \(\snr=0\).} CK spectra under the activation function \(f_\alpha\) defined in \eqref{eq:f_alpha}, with \(n=6000\), \(N=d=1000\), and \(r=0\). \textbf{Left:} \(\alpha=2\). \textbf{Right:} \(\alpha=1.5\). Two uninformative spikes emerge and are close to the theoretical predictions from Theorem~\ref{thm:snr_finite_eig}(ii), shown by red \(\color{red}{\mathbf{\times}}\) markers.}
    \label{fig:activationf_experiment}
\end{figure}
\paragraph{Uninformative spikes can emerge even in the null model.}
Theorem~\ref{thm:snr_finite_eig}(ii) describes uninformative spikes of the CK matrix even when \(r=0\), namely for an isotropic Gaussian dataset. These uninformative spikes are caused by the nonzero quadratic coefficient \(c_\sigma\ne 0\). Such spikes were studied by \citet{benigni2022largest} for more general datasets, while our result gives a precise characterization of their locations for Gaussian data.
To illustrate their emergence in the CK matrix, we consider the single-parameter family of activation functions from \citet{benigni2022largest},
\begin{equation}\label{eq:f_alpha}
    f_\alpha(x) = \frac{\cos(\alpha x) - e^{-\alpha^2/2}}{\sqrt{e^{-\alpha^2}\bigl(\cosh(\alpha^2)-1\bigr)}}.
\end{equation}
These functions satisfy Assumption~\ref{assump:sigma}. Moreover, \(b_\sigma=0\) for all choices of \(\alpha\), while \(c_\sigma\) varies with \(\alpha\). Figure~\ref{fig:activationf_experiment} plots the CK spectra under \(f_\alpha\) in the null model \(\snr=0\) for \(\alpha=2\) and \(\alpha=1.5\). As \(\alpha\) decreases, two outlier spikes emerge in the CK spectrum. These two uninformative spikes arise from the population covariance of the CK and from the CK mean; Theorem~\ref{thm:snr_finite_eig}(ii) predicts that their locations are asymptotically close.

\paragraph{XOR remains linearly nonseparable at large SNR when \(c_\sigma=0\).}
If the activation function \(\sigma\) satisfies \(c_\sigma=0\), then the quadratic component in the CK is absent. As discussed in Theorem~\ref{thm:large}, this implies that the CK cannot be used to linearly classify XOR for any signal-to-noise ratio. Figure~\ref{fig:tanh_largesnr} follows the same setting as the bottom panel of Figure~\ref{fig:CK_phase}, changing only the activation to \(\sigma\propto \tanh\) with \(c_\sigma=0\). Here \(\snr=625\). By Theorem~\ref{thm:large}, only two diverging linear outliers separate from the bulk. Moreover, their associated spectral coordinates are not linearly separable with respect to the XOR labels.

\begin{figure}[!htbp]
    \centering
    \begin{minipage}[c]{0.61\linewidth}
        \centering
        \includegraphics[width=\linewidth]{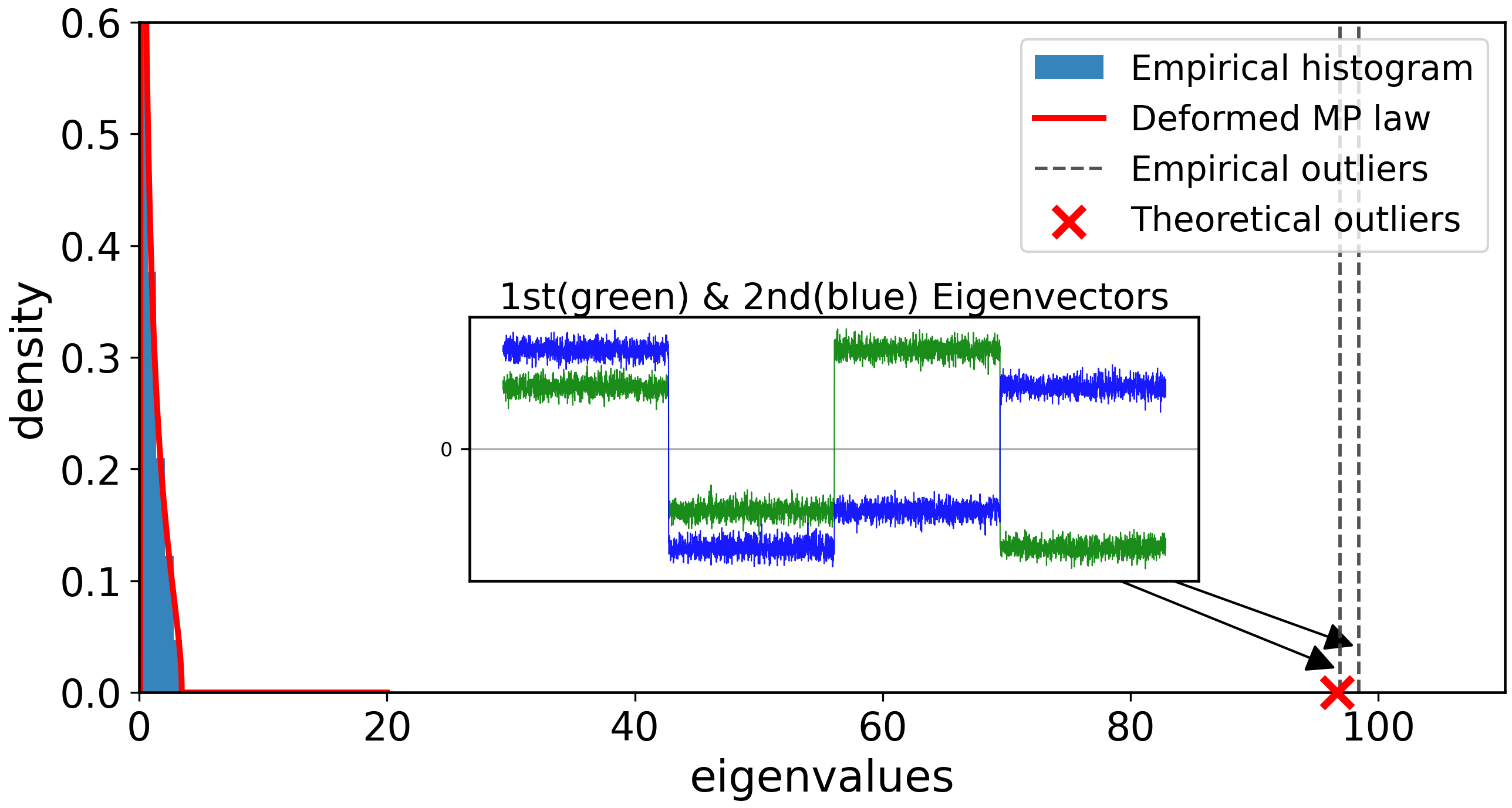}
    \end{minipage}\hfill
    \begin{minipage}[c]{0.38\linewidth}
        \centering
        \includegraphics[width=\linewidth]{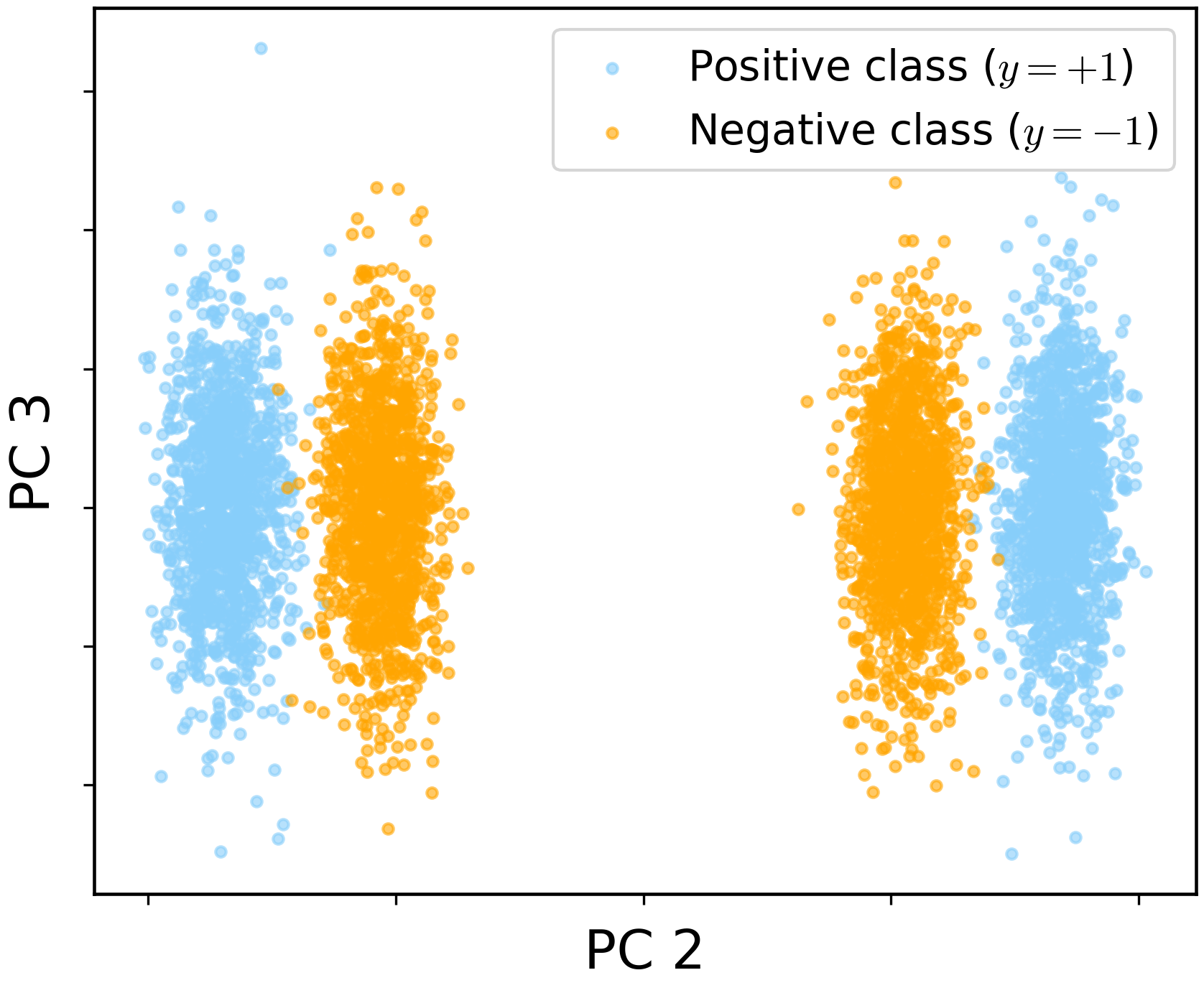}
    \end{minipage}%
    \caption{\textbf{Large-SNR regime with \(c_\sigma=0\): linear classification of XOR still fails.} Here \(n=5000\), \(N=d=15000\), \(r=25\), and \(\sigma(x)\propto\tanh(x)\), so \(c_\sigma=0\). \textbf{Left:} spectrum of \(\vK\) (blue), together with the theoretical locations of the two diverging outliers predicted by Theorem~\ref{thm:large}(ii), shown by red \(\color{red}{\times}\) markers. The outlier eigenvectors, displayed in the inset, exhibit a four-block pattern associated with the four Gaussian-mixture components but remain asymptotically orthogonal to the XOR label vector \(\vy\). \textbf{Right:} kernel-PCA visualization of \(\vK\). Thus the leading nontrivial spectral coordinates are not linearly separable with respect to the XOR labels.}
    \label{fig:tanh_largesnr}
\end{figure}

\subsection{Knob III: Weight Structure}\label{subsec:numerics_weights}

Following Theorem~\ref{thm:trained}, we generate a rank-one perturbation of the randomly initialized weights,
\(\vW_1 = \vW + \theta \va \vb^\top\),
as a model of pretrained weights. In Figure~\ref{fig:spikedW_fig}, we plot the spectrum of the projected CK matrix \(\vK_s\) and its PCs for the centered and normalized ReLU activation, an XOR dataset with SNR \(r^2=36\), and spike strength \(\theta=5\). The rank-one structure in the pretrained weights produces informative spikes. As predicted by Theorem~\ref{thm:trained}, two order-one outlier eigenvalues emerge, and the second leading eigenvector exhibits a two-class structure that enables linear classification of XOR.

\begin{figure}[!htbp]
    \centering
    \begin{minipage}[c]{0.61\linewidth}
        \centering
        \includegraphics[width=\linewidth]{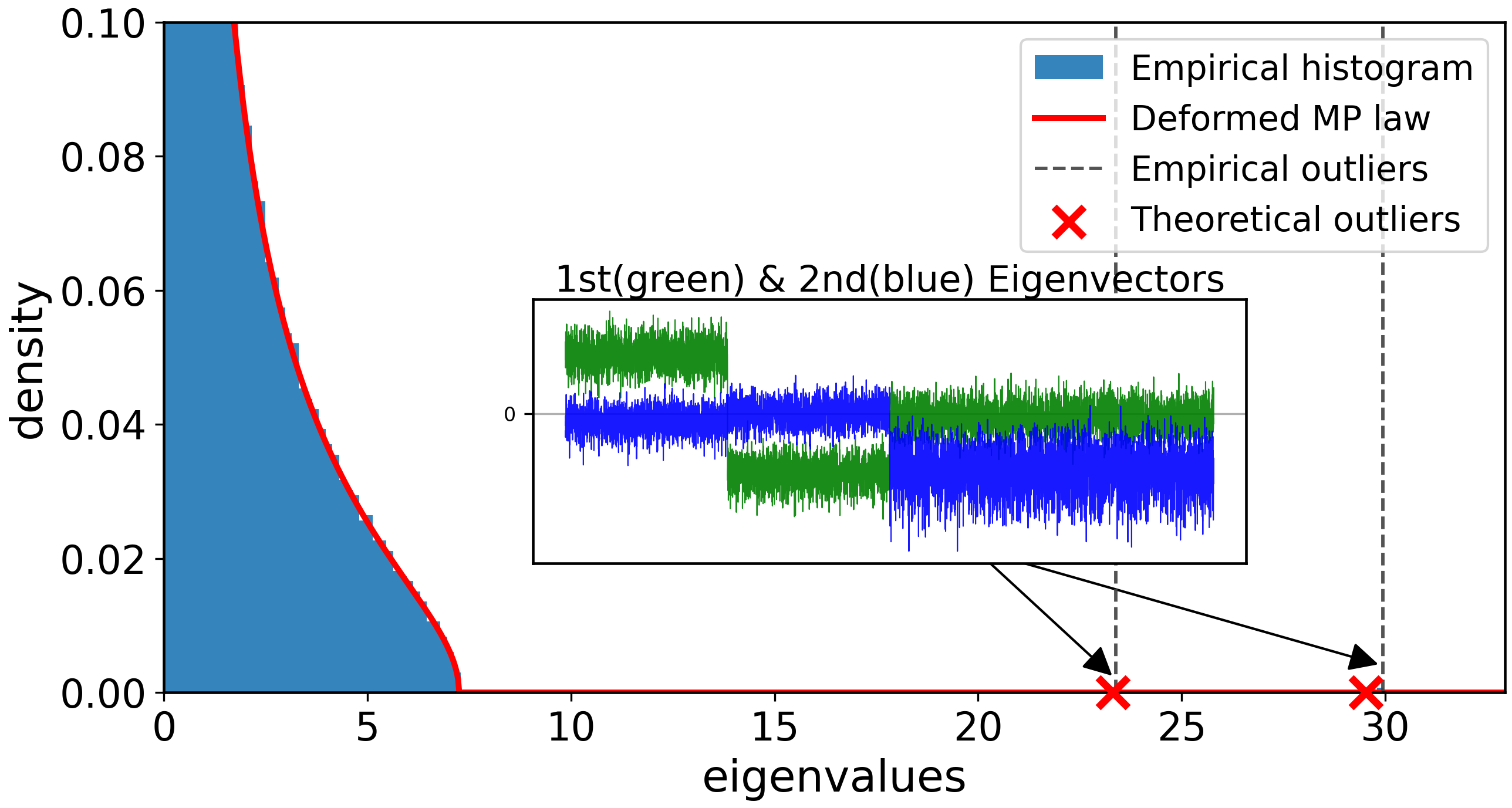}
    \end{minipage}\hfill
    \begin{minipage}[c]{0.38\linewidth}
        \centering
        \includegraphics[width=\linewidth]{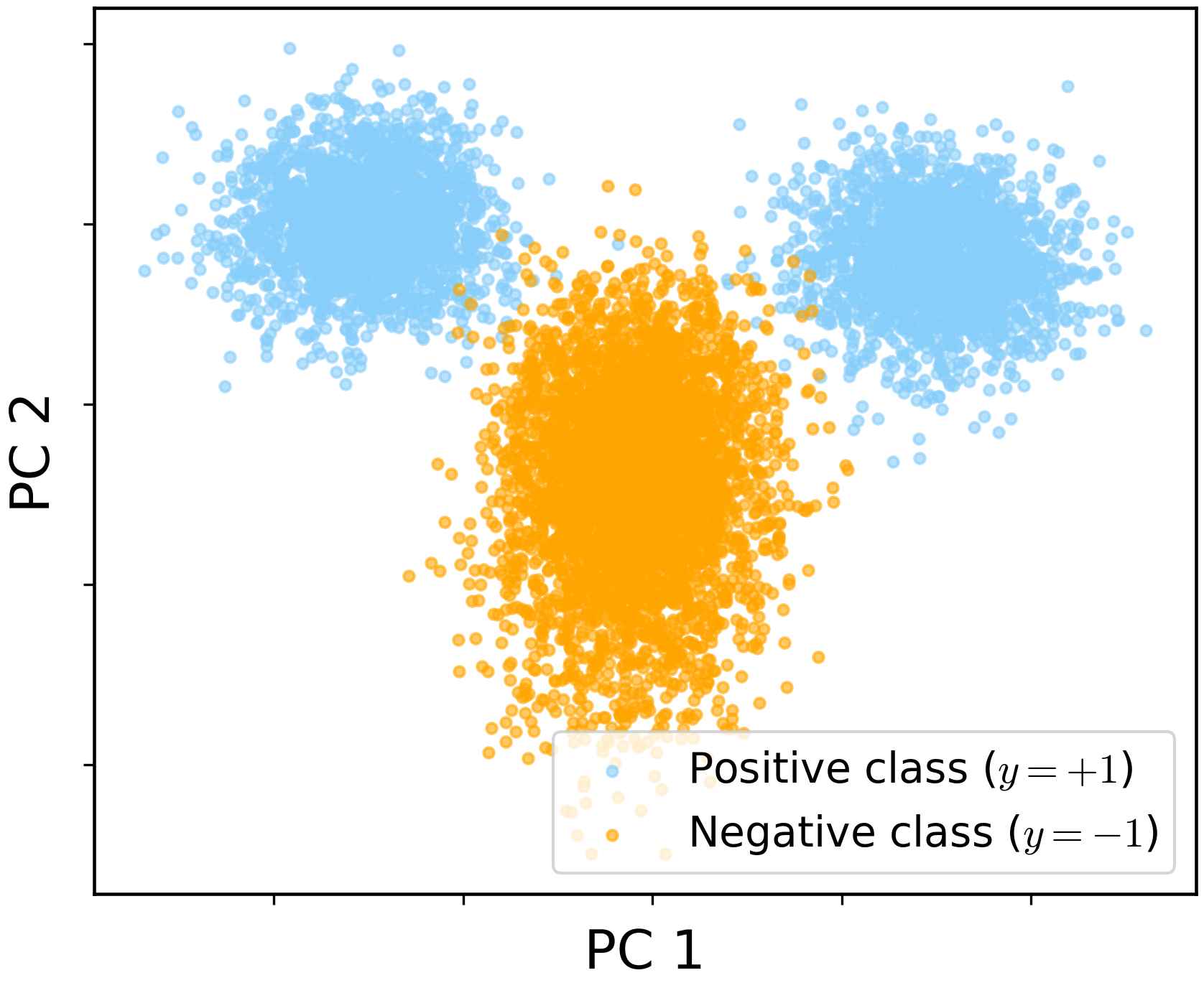}
    \end{minipage}%
    \caption{\textbf{Spiked-weight regime: a weight-induced spike aligns with XOR labels.} Here \(N=n=8000\), \(d=4000\), the SNR is \(r^2=36\), \(\sigma(x)\propto\mathrm{ReLU}(x)\), and the weight spike strength in \eqref{eq:pretrained_weight} is \(\theta=5.0\). \textbf{Left:} spectrum of \(\vK_s\) (blue), as defined in Theorem~\ref{thm:trained}, together with the isolated eigenvalue locations predicted by Theorem~\ref{thm:trained}, shown by red \(\color{red}{\times}\) markers. The top two empirical eigenvectors are shown in the insets; the second one enables linear classification of XOR. \textbf{Right:} kernel-PCA visualization of the samples using the second and first PCs of \(\vK_s\).}
    \label{fig:spikedW_fig}
\end{figure}

\paragraph{Pretrained weights from feature learning on a real-world dataset.}
The pretrained-weight model in \eqref{eq:pretrained_weight}, used in Theorem~\ref{thm:trained}, is an idealized model for our analysis. We now show that real pretrained weights obtained during training can exhibit similar label-aligned CK spikes. A recent line of work \citep{martin2018implicitself,martin2019heavytail,wang2022spectral} has tracked the evolution of the empirical spectral distribution of weight matrices throughout the training process of deep NNs. We focus on an early-training phase in which the weight spectrum is well described by an ``MP bulk$+$spikes'' structure. The emergence of spikes in the weight spectrum is strongly related to feature learning in NNs \citep{frei2022random} and suggests that these spikes carry task-relevant information.

We consider the CK matrix formed using pretrained weights extracted from the first-layer weight matrix of a 20-layer fully connected NN trained on CIFAR-2. Each hidden layer has width \(N=512\), and the input dimension is \(d=3072\). We focus on the first 50 steps of SGD training, using learning rate \(0.05\), momentum \(0.5\), and batch size \(128\). At step 50, the test accuracy is \(80.4\%\), and the training accuracy is \(91.41\%\). Figure~\ref{fig:pretrained_figs} shows the spectra of the first-layer trained weights at steps \(1\), \(2\), and \(50\) (bottom row), together with the spectra of CK matrices evaluated on XOR data using these pretrained weights (top row). The emergence of spikes in the weight spectra during training suggests that the model is learning task-dependent features.

For the CK matrices in this experiment, \(n=8000\), \(N=512\), \(d=3072\), and the SNR is \(r^2=36\). In contrast to the prediction of Theorem~\ref{thm:snr_finite_eig} for randomly initialized weights, the pretrained CK at step 50 exhibits four spikes. We omit the top outlier in Figure~\ref{fig:pretrained_figs} because it is a large-order outlier. The fourth leading eigenvector aligns with the XOR labels. After sufficient training, this eigenvector transitions from pure noise to near-perfect alignment with the XOR labels, thereby enabling linear classification.

These experiments suggest that the LE approximation of the CK can break down even during the early stages of training. They also indicate that, under suitable training regimes and parameter choices, there can be a critical phase transition in which the emergence of weight spikes makes quadratic channels in the CK nonnegligible. One possible explanation is that training the NN induces spiked eigenvalues with eigenvectors that align with XOR labels, even though the model is not trained directly on the XOR task. Theorem~\ref{thm:trained} captures this mechanism through an idealized model in which a large spike is artificially introduced into the weights, producing a spiked CK eigenvalue with an XOR-label-aligned eigenvector.

\begin{figure}[!htbp]
    \centering
    \begin{minipage}[t]{0.33\linewidth}
        \centering
        \includegraphics[width=\linewidth]{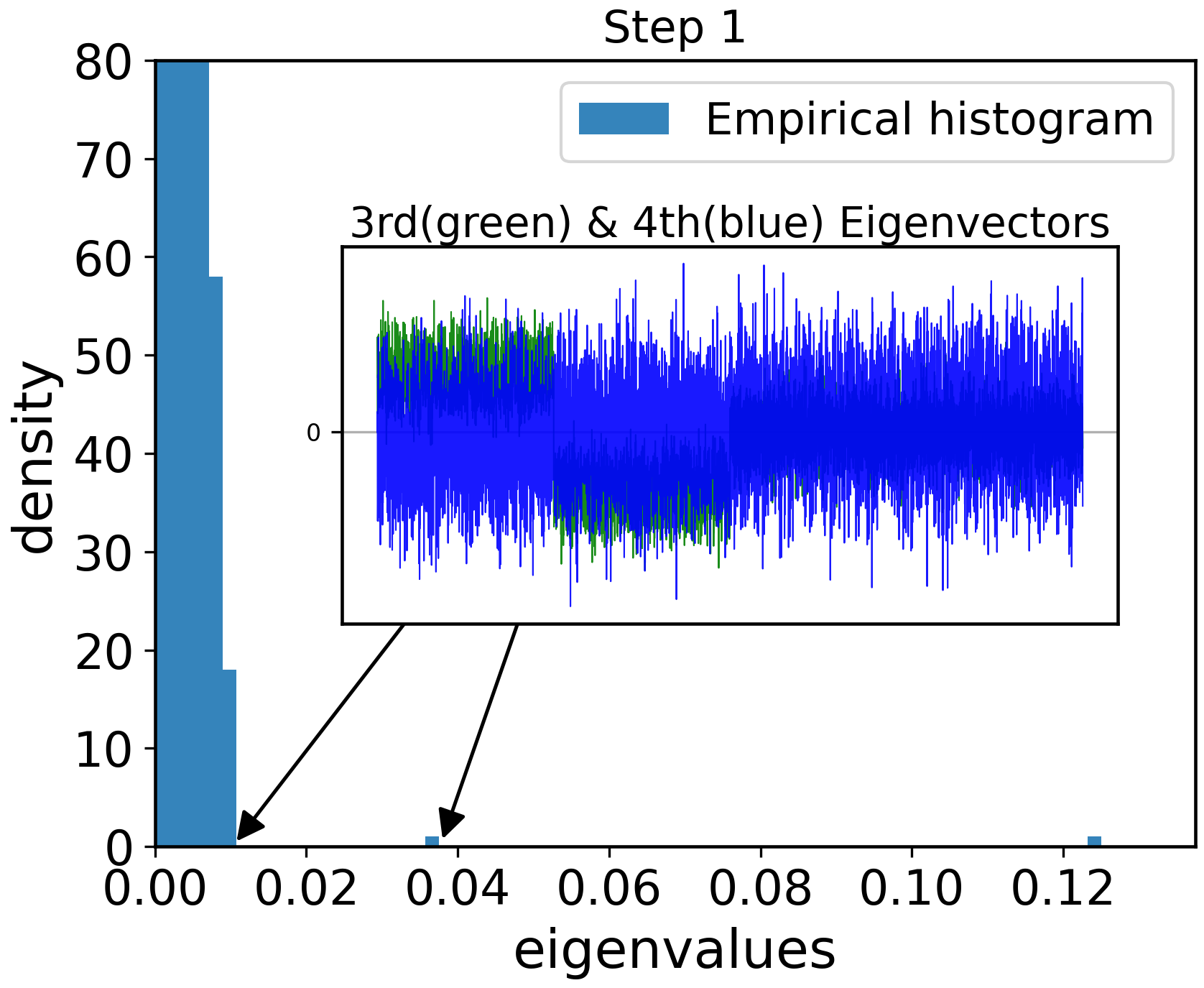}
    \end{minipage}%
    \begin{minipage}[t]{0.33\linewidth}
        \centering
        \includegraphics[width=\linewidth]{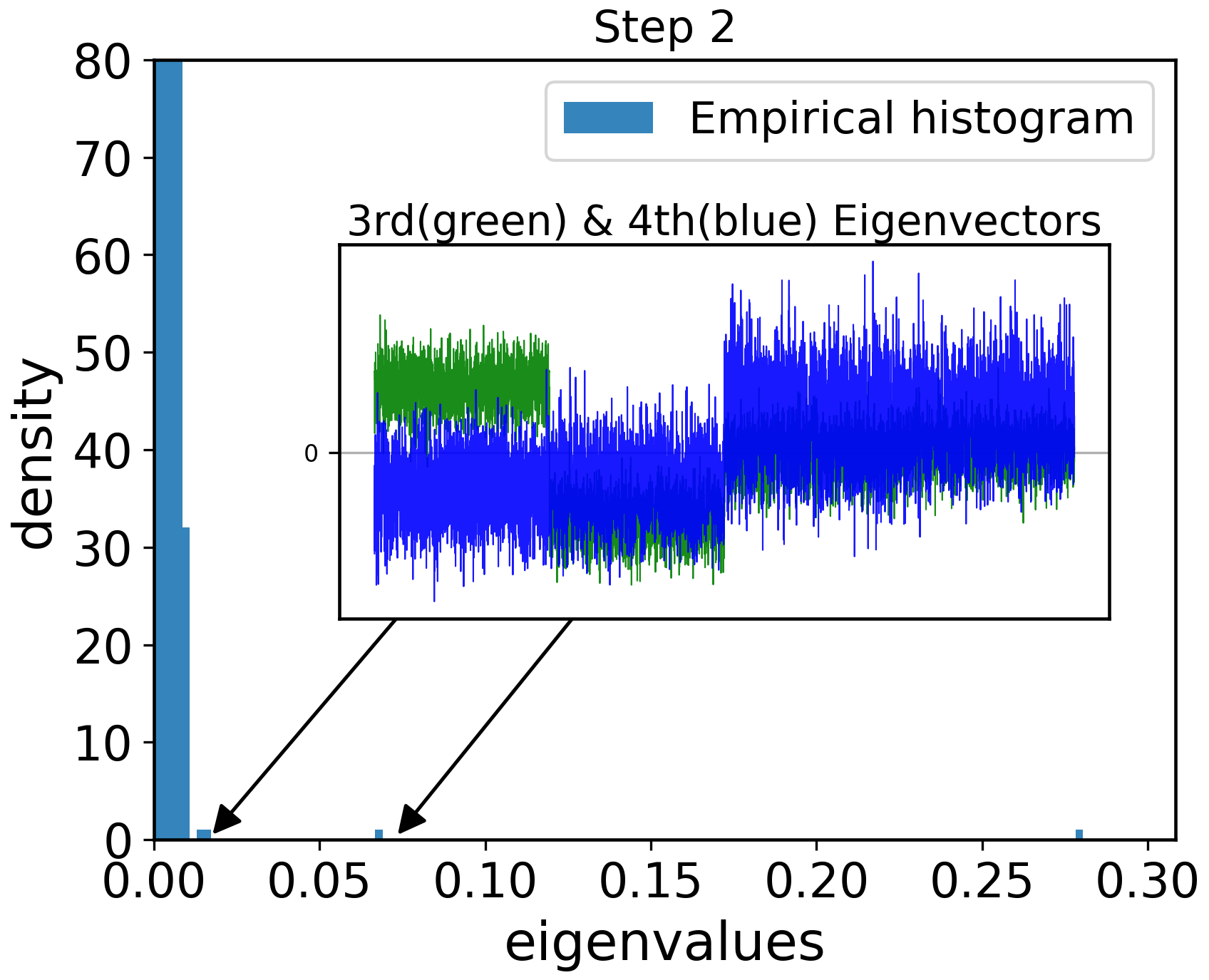}
    \end{minipage}%
    \begin{minipage}[t]{0.33\linewidth}
        \centering
        \includegraphics[width=\linewidth]{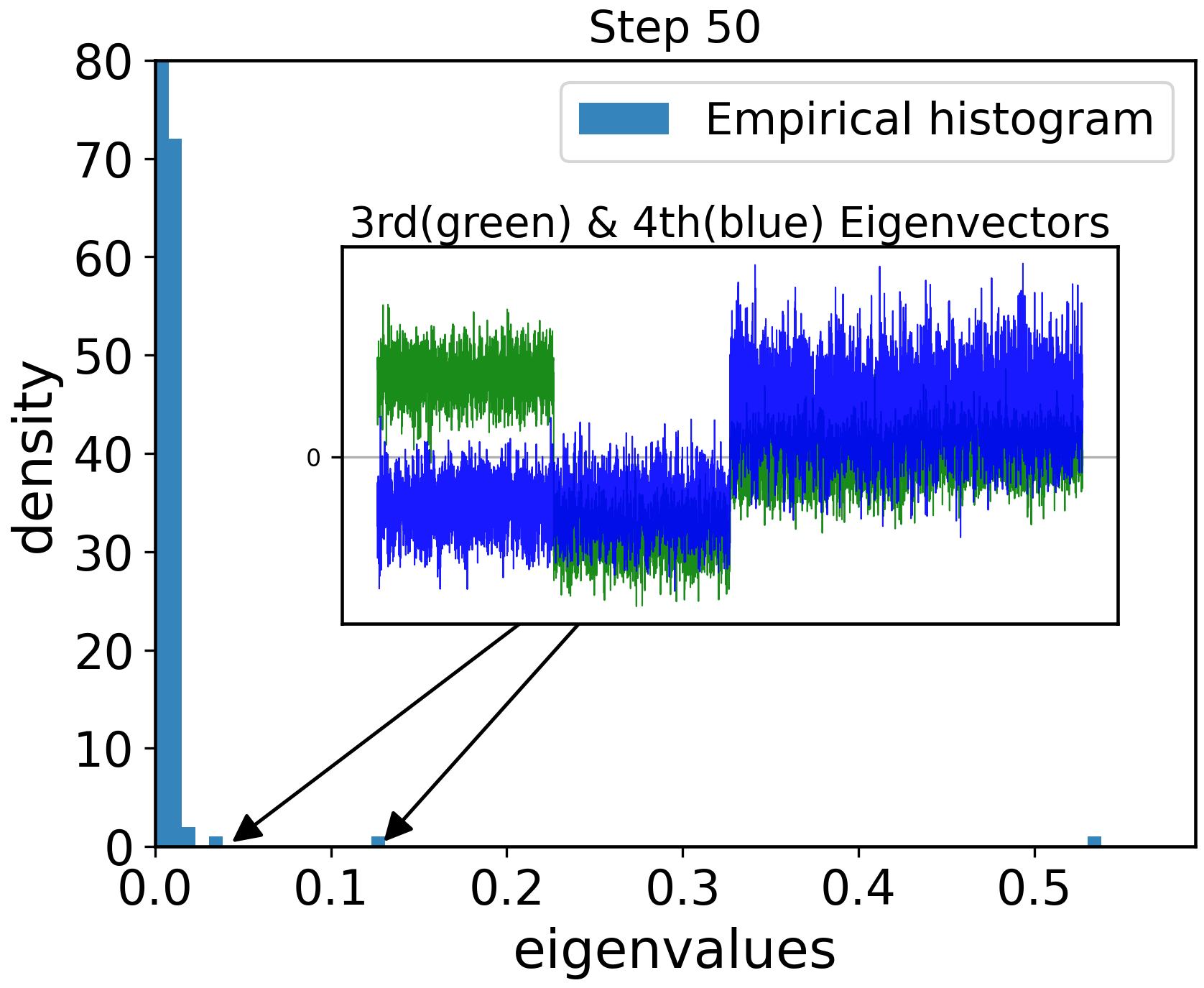}
    \end{minipage}
    \begin{minipage}[t]{0.33\linewidth}
        \centering
        \includegraphics[width=\linewidth]{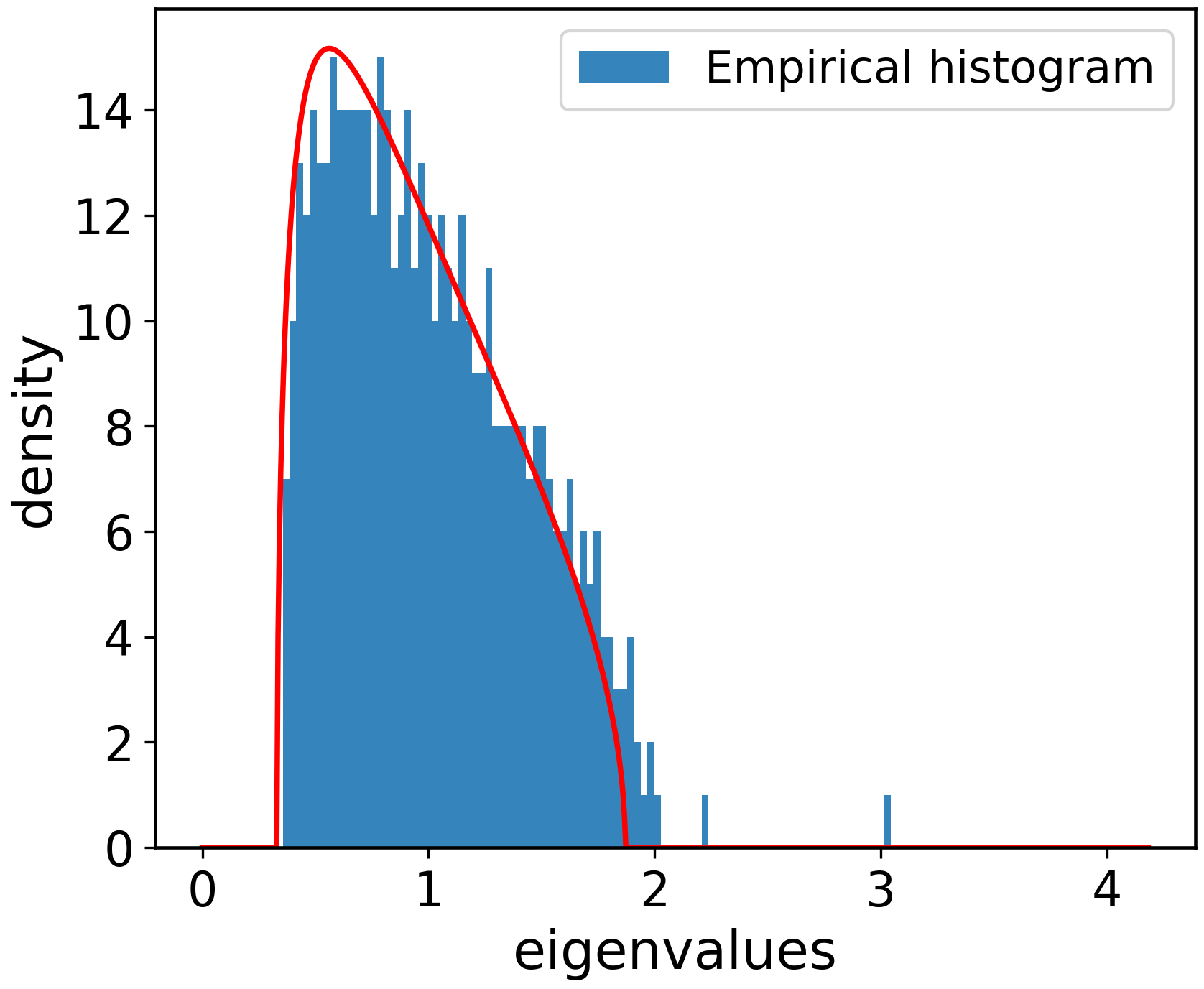}
    \end{minipage}%
    \begin{minipage}[t]{0.33\linewidth}
        \centering
        \includegraphics[width=\linewidth]{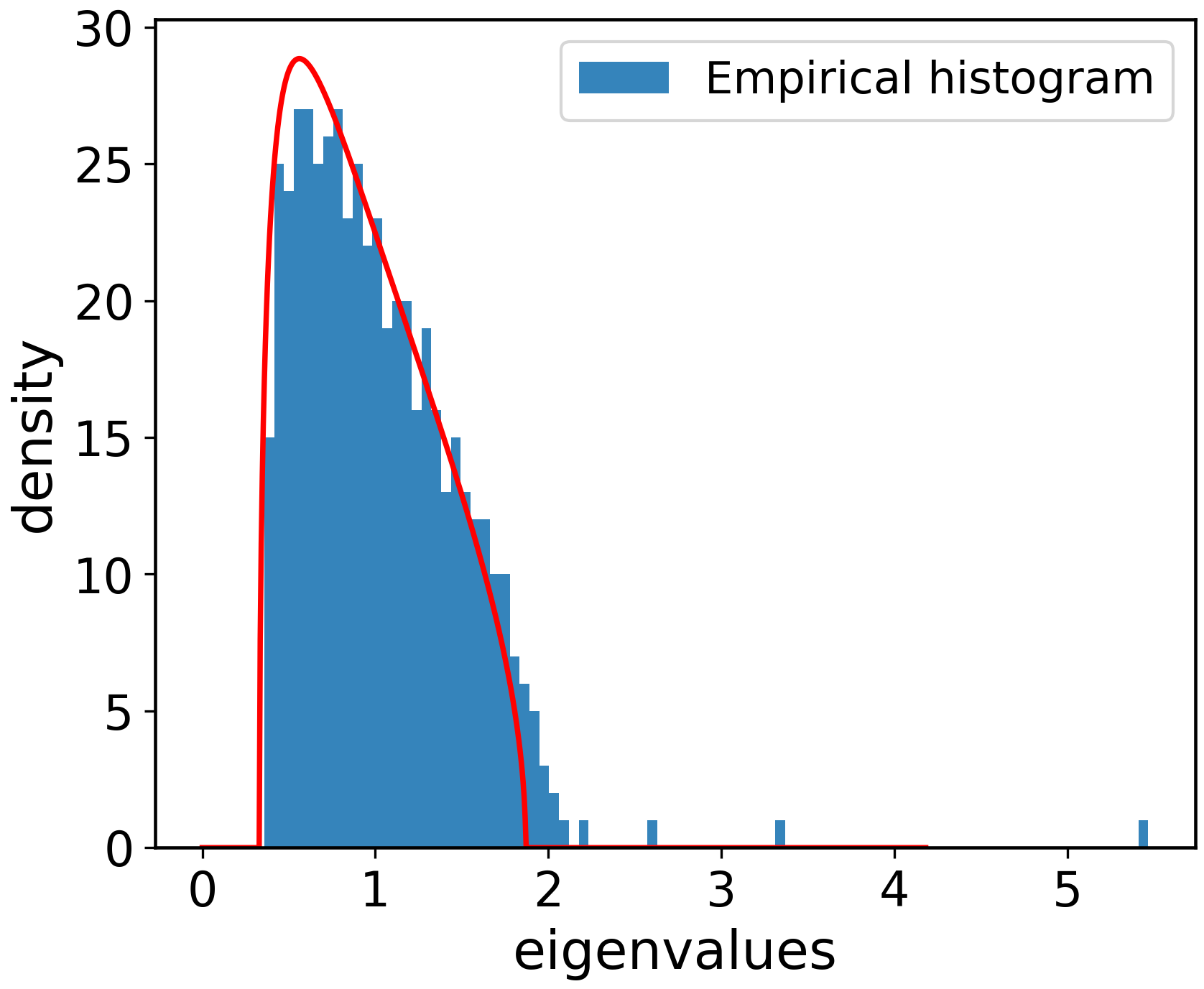}
    \end{minipage}%
    \begin{minipage}[t]{0.33\linewidth}
        \centering
        \includegraphics[width=\linewidth]{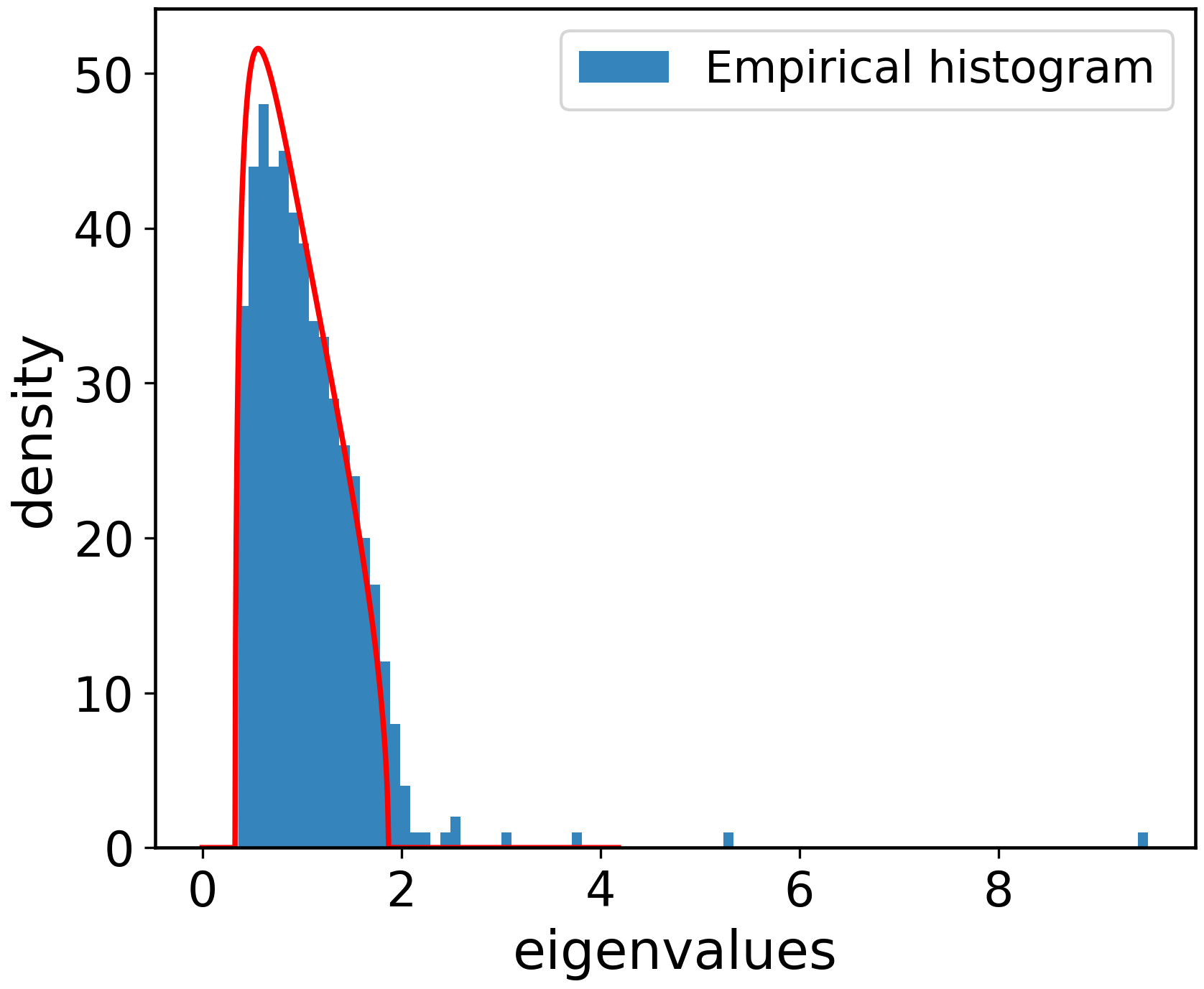}
    \end{minipage}%
    \caption{\textbf{Pretrained weights help linear classification of XOR.} \textbf{Top:} CK spectra with pretrained weights at training steps \(1\), \(2\), and \(50\). For these CK matrices, \(n=8000\), \(N=512\), \(d=3072\), the SNR is \(r^2=36\), and \(\sigma(x)\propto \relu(x)\). \textbf{Bottom:} spectra of the first-layer weight matrix of a NN trained on CIFAR-2 at steps \(1\), \(2\), and \(50\). The red curves show the MP laws.}
    \label{fig:pretrained_figs}
\end{figure}

\subsection{Knob IV: Sample Size}\label{subsec:numerics_scaling}

Following Section~\ref{subsec:quadratic-regime}, we study CK spectra under the quadratic sample-size scaling \(n=\Theta(d^2)\). Figure~\ref{fig:quad_fig} shows the spectrum and PCs of the projected CK matrix for the centered and normalized ReLU activation and an XOR dataset with finite SNR \(r^2=6.25\). Here \(n=15000\), \(N=20000\), and \(d=150\). In this scaling, many outliers emerge in the unprojected CK spectrum. We therefore focus on the projected CK matrix \(\vK_\sigma\) defined in Theorem~\ref{thm:quadratic}, which isolates the label-aligned spike. We observe a single eigenvalue outside the bulk of the deformed MP law \(\mu_{\rm q}\), and its eigenvector aligns with the XOR labels \(\vy\), even at an SNR smaller than that used in Figure~\ref{fig:theorem3_figs}.

\begin{figure}[!htbp]
    \centering
    \begin{minipage}[c]{0.6\linewidth}
        \centering
        \includegraphics[width=\linewidth]{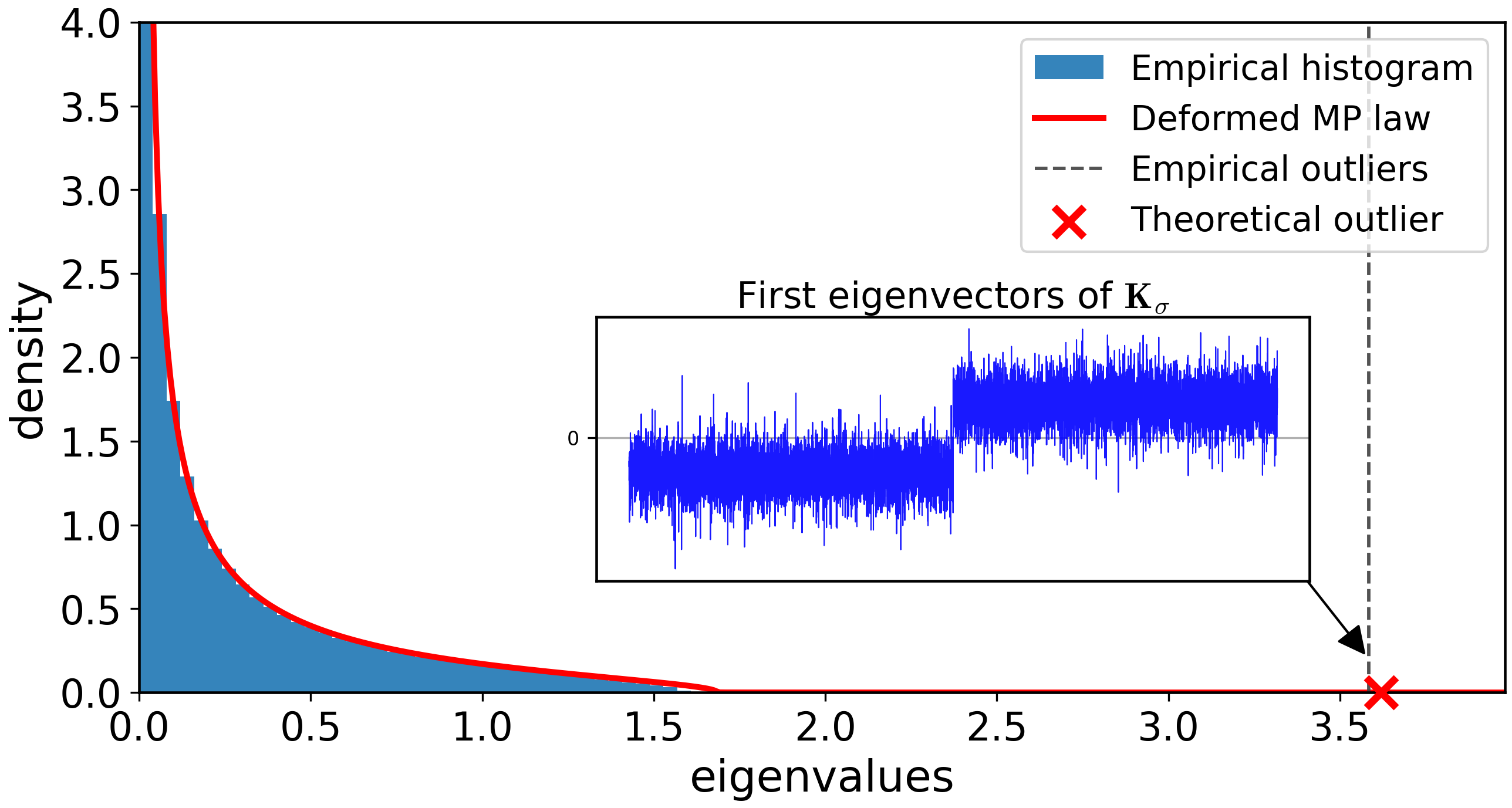}
    \end{minipage}\hfill
    \begin{minipage}[c]{0.39\linewidth}
        \centering
        \includegraphics[width=\linewidth]{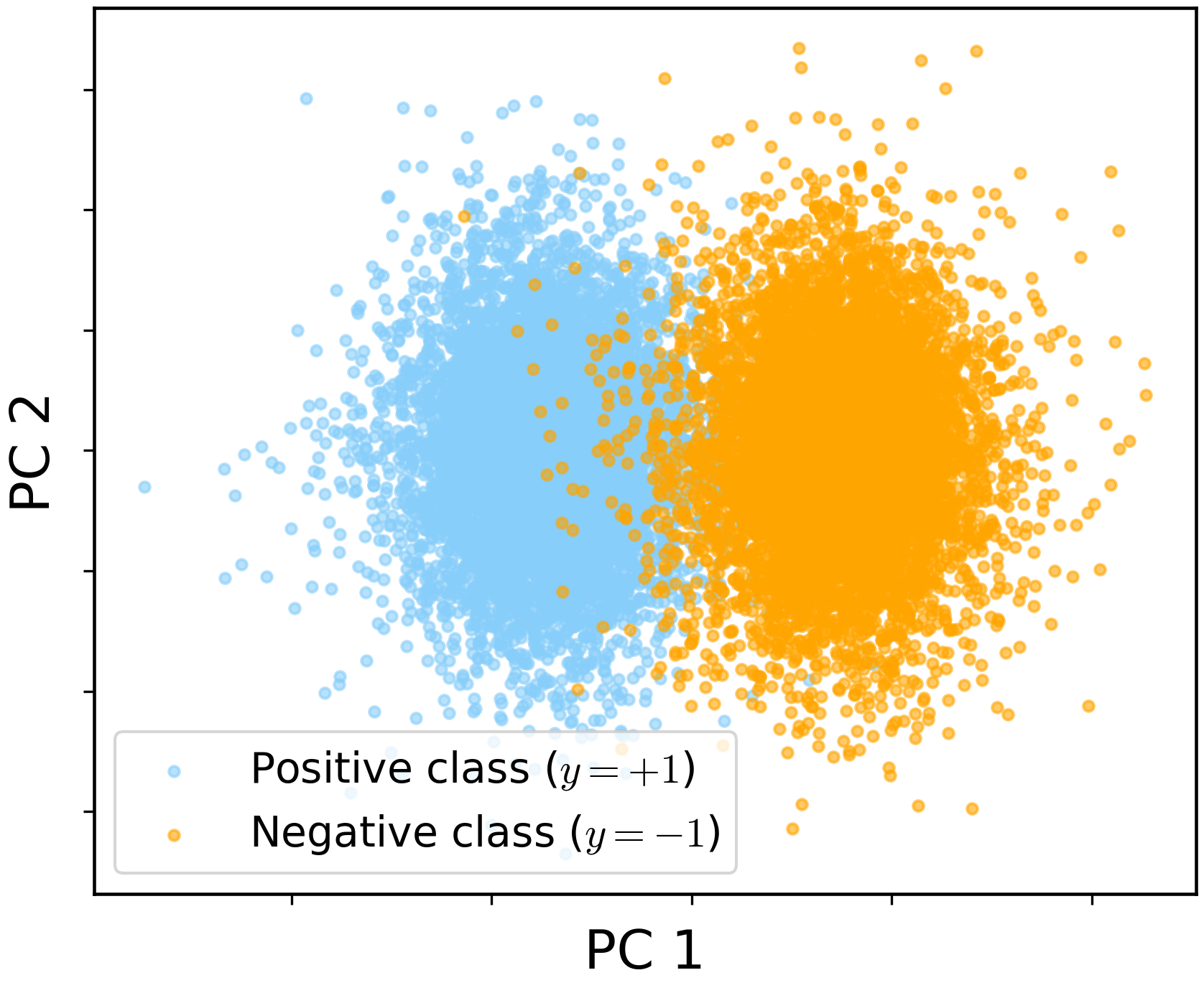}
    \end{minipage}
    \caption{\textbf{Quadratic sample-size regime: an informative CK spike aligns with XOR labels.} Here \(n=15000\), \(N=20000\), \(d=150\), the SNR is \(r^2=6.25\), and \(\sigma(x) \propto \relu(x)\). \textbf{Left:} spectrum of the projected CK matrix \(\vK_\sigma\), defined in Theorem~\ref{thm:quadratic}, together with the asymptotic spike prediction from Theorem~\ref{thm:quadratic}, shown by a red \(\color{red}{\times}\) marker. There is a single isolated spike, whose eigenvector is displayed in the inset. Its blockwise sign pattern aligns with the binary XOR partition, explaining the linear class separation seen on the right. \textbf{Right:} kernel-PCA visualization of the samples using the top two PCs of \(\vK_\sigma\).}
    \label{fig:quad_fig}
\end{figure}

\section{Conclusion}
 
We introduce a quadratic equivalent framework and illustrate its use by characterizing when nonlinear structure becomes spectrally visible in the CK matrix. 
For the nonlinear XOR problem, we show that in the proportional limit with finite SNR, the CK can exhibit outliers, but their eigenvectors are asymptotically unaligned with labels. In contrast, increasing SNR, moving to the quadratic sample-size regime, or adding pretrained structure to the weights can make the quadratic feature non-negligible and produce informative outliers that enable linear spectral classification.
Our results highlight the complex and subtle relationship between common ML training ``knobs'' and the emergence of spikes in the spectrum of ML models. 
Extending QE to more general scalings, deeper networks, and other forms of structured data is an interesting direction.

\subsubsection*{Acknowledgments}

CC and TK were supported in part by NSF grant DMS-2400246. 
ZW and MWM would like to acknowledge the NSF and the DARPA DIAL and DARPA AIQ programs for partial support of this work.


\bibliographystyle{alpha}
\bibliography{ref}


\newpage
\tableofcontents
\appendix 
\section{Preliminaries}
Throughout this paper, we utilize the stochastic domination notation introduced by \cite{erdHos2013averaging}.
\begin{definition}[Stochastic domination]\label{def:sd}
    Consider two families of random variables 
    \[X=\{X_n(u):n\in\N,u\in U_n\},\,\, Y=\{Y_n(u):n\in\N,u\in U_n\},\]
    with parameter $u\in U_n$ where $U_n$ is a possibly $n$-dependent deterministic set. Let $\cE=\{E_n\}_{n\in\N}$ be a sequence of measure subsets, where each $E_n$ is related to $X_n(u)$  and $Y_n(u)$.
    
    \begin{enumerate}
        \item We denote $|X|\prec Y$ or $X=\SD{Y}$ as the stochastic domination of $X$ by $Y$ uniformly in $u\in U_n$: if for all small $\epsilon>0$ and large $D>0$, there exists some $N_0(\epsilon,D)$, such that for all $n\ge N_0(\epsilon,D)$, 
        \[\sup_{u\in U_n}\P \left(|X_n(u)|\ge n^\epsilon Y_n(u)\right)\le n^{-D}.\]
        \item For a family of random matrices $\vA$ and a family of non-negative random variables $\zeta$, $\vA = \SD{\zeta}$ represents $|\langle \vv,\vA \vu\rangle | \prec \zeta \norm{\vv}\norm{\vu}$ uniformly for all deterministic vectors $\vv$ and $\vu$.
    \end{enumerate}
\end{definition}
If $\{\mbi{A}_n\}$ are random matrices, we interpret $\|\mbi{A}_n\|_{\op} = o_{\mathbb{P}}(1)
$ if and only if $
\|\mbi{A}_n\|_{\op}\xrightarrow{\mathbb{P}}0.$
For simplicity, in the proofs, we use $\|\mbi{A}_n\|$ to represent operator norm $\|\mbi{A}_n\|_{\op}$.

\subsection{Properties of XOR Data}
We introduce the following notion of $(\tau_n,B)$-\textbf{orthonormality} from \citep{fan2020spectra}.
\begin{definition}
    A matrix $\vX \in \R^{d\times n}$ is $(\tau_n,B)$-\textbf{orthonormal} if the columns $\{\vx_\alpha\}_{\alpha=1}^{n}$ of $\vX$ satisfy
\begin{equation}\label{eq:x_alpha_orthonormal}
     \bigl\lvert \|\vx_\alpha \| -1 \bigr\rvert \le \tau_n, \quad \bigl \lvert \|\vx_\beta \| -1 \bigr \rvert \le \tau_n, \quad \bigl\lvert \vx_\alpha^\top \vx_\beta \bigr\rvert \le \tau_n
\end{equation}
    for all pairs $\alpha \ne \beta \in [n]$, $\sum_{\alpha=1}^{n} \bigl( \|\vx_\alpha\| -1 \bigr)^2 \le B^2$ and $\|\vX \| \le B.$
\end{definition}
\begin{lemma}
\label{lem:xor_orthonormal}
Let $\mbi{X}\in\mathbb{R}^{d\times n}$ be the XOR data matrix defined in \eqref{eq:XOR}. Assume either
\begin{enumerate}[label=\textbf{(A\arabic*)},leftmargin=*]
\item $n\asymp d$ and $r=O(n^{1/4})$, \quad or
\item $n=O(d^2)$ and $r=O(1)$.
\end{enumerate}
Then there exist constants $c,C>0$  
such that with probability at least $1-n^{-c}$, $\mbi{X}$ satisfies \eqref{eq:x_alpha_orthonormal} with 
\begin{equation}\label{eq:tauB_choice}
\tau_n \;=\; C\Big(\sqrt{\frac{\log n}{d}}+\frac{r^2}{d}\Big).
\end{equation}
Moreover, on the same event,
\begin{equation}\label{eq:sumsq_choice_sqnorm}
\sum_{\alpha=1}^n\big(\|\mbi{x}_\alpha\|-1\big)^2
\;\le\;\sum_{\alpha=1}^n\big(\|\mbi{x}_\alpha\|^2-1\big)^2
\;\le\;
C\Big(\frac{n}{d}+\frac{nr^2}{d^2}+\frac{nr^4}{d^2}\Big),
\end{equation}
and also $\|\vX\|\le C\Big(1+(1+r)\sqrt{\frac{n}{d}}\Big).$
In particular, in regime (A1) one may take $\tau_n=C\sqrt{\frac{\log n}{n}}$ and $B=C(1+r)$, and in regime (A2)
one may take $\tau_n=C\sqrt{\frac{\log n}{d}}$ and $B=C \sqrt{\frac{n}{d}}$ such that $\vX$ is $(\tau_n,B)$-\textbf{orthonormal} with probability at least $1-n^{-c}$.
\end{lemma}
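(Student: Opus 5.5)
Write $\vx_\alpha=\vm_\alpha+\vz_\alpha$, where $\vm_\alpha=\frac{1}{\sqrt d}\vmu_{a_\alpha}$ is the $\alpha$-th column of $\vM$ and $\vz_\alpha\sim\cN(0,\vI_d/d)$. Since $\vmu_a\in\{\pm r\vu_1,\pm r\vu_2\}$ and $\vu_1\perp\vu_2$ are unit vectors, we have $\|\vm_\alpha\|^2=r^2/d$, and $\vm_\alpha^\top\vm_\beta\in\{0,\pm r^2/d\}$. Expanding,
\[
\|\vx_\alpha\|^2-1=(\|\vz_\alpha\|^2-1)+2\vm_\alpha^\top\vz_\alpha+\frac{r^2}{d},\qquad \vx_\alpha^\top\vx_\beta=\vz_\alpha^\top\vz_\beta+\vm_\alpha^\top\vz_\beta+\vm_\beta^\top\vz_\alpha+\vm_\alpha^\top\vm_\beta .
\]
We treat each piece with standard Gaussian concentration and take a union bound over the $n$ indices and $n^2$ pairs.

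For the noise terms, $d\|\vz_\alpha\|^2\sim\chi^2_d$, so Bernstein's inequality gives $|\|\vz_\alpha\|^2-1|\le C\sqrt{\log n/d}$ for all $\alpha$ with probability $1-n^{-c}$. For $\alpha\ne\beta$, conditioning on $\vz_\beta$ makes $\vz_\alpha^\top\vz_\beta$ Gaussian with variance $\|\vz_\beta\|^2/d$, so it is also bounded by $C\sqrt{\log n/d}$. The cross terms $\vm_\alpha^\top\vz_\beta$ are $\cN(0,r^2/d^2)$, hence of size $C(r/d)\sqrt{\log n}$. This is at most $C\sqrt{\log n/d}$ in both regimes: $r/\sqrt d=O(1)$ in (A1) since $r=O(n^{1/4})$, and trivially in (A2). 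The mean terms contribute at most $r^2/d$. Finally, $|\|\vx_\alpha\|-1|\le|\|\vx_\alpha\|^2-1|$, which yields \eqref{eq:tauB_choice}. Note that $\log n\asymp\log d$ in both regimes, so the union bounds cost only constants.

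For \eqref{eq:sumsq_choice_sqnorm}, the first inequality follows from $|t-1|\le|t^2-1|$ for $t\ge0$. For the second, use $(a+b+c)^2\le3(a^2+b^2+c^2)$ and sum over $\alpha$. The expectations are $\E(\|\vz_\alpha\|^2-1)^2=2/d$, $\E(2\vm_\alpha^\top\vz_\alpha)^2=4r^2/d^2$, and the deterministic term equals $r^4/d^2$, giving the bound $C(n/d+nr^2/d^2+nr^4/d^2)$ in expectation. To get it with high probability, one can simply use the pointwise bounds already established (each term is bounded on the event, and summing gives the same order up to $\log n$), or apply Markov/Chebyshev. Since the stated bound has no logarithm, I would instead use concentration of $\sum_\alpha(\|\vz_\alpha\|^2-1)^2$, a sum of independent sub-exponential-squared variables with mean $2n/d$ and variance $O(n/d^2)$. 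Chebyshev then gives a failure probability $O(1/n)$, which is of the form $n^{-c}$. The Gaussian cross-sum $\sum_\alpha(\vm_\alpha^\top\vz_\alpha)^2$ is handled in the same way.

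For the operator norm, $\|\vX\|\le\|\vZ\|+\|\vM\|$. Standard Gaussian matrix bounds (Davidson--Szarek) give $\|\vZ\|\le C(1+\sqrt{n/d})$ with probability $1-e^{-cn}$. Since $\vM$ has orthonormal factors, $\|\vM\|=r\sqrt{n/(2d)}$. Combining gives $\|\vX\|\le C(1+(1+r)\sqrt{n/d})$.

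The specializations follow directly. In (A1), $\tau_n\asymp\sqrt{\log n/n}+r^2/n\lesssim\sqrt{\log n/n}$ because $r^2=O(n^{1/2})$. The sum of squares is $O(1+r^4/n)=O(1)$, and $\|\vX\|\le C(1+r)$, so $B=C(1+r)$ works. In (A2) with $r=O(1)$, $\tau_n\asymp\sqrt{\log n/d}$, the sum of squares is $O(n/d)\le B^2$, and $\|\vX\|\le C\sqrt{n/d}$ when $n\gtrsim d$; if $n\lesssim d$, the constant is absorbed instead.

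The main obstacle is getting the sum-of-squares bound without a log factor and with polynomial failure probability. This is handled by the second-moment/Chebyshev argument rather than the union-bound event. Everything else is routine.
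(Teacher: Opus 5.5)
Your proposal is correct and follows essentially the same route as the paper: the same decomposition of $\|\vx_\alpha\|^2-1$ and $\vx_\alpha^\top\vx_\beta$ into noise, cross and mean pieces, chi-square and conditional-Gaussian tail bounds with union bounds, the elementary inequality $|t-1|\le|t^2-1|$, and $\|\vX\|\le\|\vZ\|+\|\vM\|$. The one difference is in bounding $\sum_\alpha(\|\vz_\alpha\|^2-1)^2$: you use Chebyshev, which gives failure probability $O(1/n)$ and is enough for some $c>0$, whereas the paper uses Rosenthal's inequality at high moments to get arbitrary polynomial decay; your side remark that the (A2) choice $B=C\sqrt{n/d}$ implicitly requires $n\gtrsim d$ is also accurate.
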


\begin{proof}
Recall that $\theta_\snr:=r\sqrt{\frac{n}{2d}}$, and $\mbi{u}_1,\mbi{u}_2\in\mathbb{R}^d$ and $\mbi{v}_1,\mbi{v}_2\in\mathbb{R}^n$ are orthonormal vectors defined by \eqref{eq:u_1u_2}. This proof is basically following the proof of Proposition 3.3 of \citet{fan2020spectra}. Write the columns as $\mbi{x}_\alpha=\mbi{z}_\alpha+\mbi{m}_\alpha$, where $\mbi{z}_\alpha$ is the $\alpha$-th column of $\mbi{Z}$
and $\mbi{m}_\alpha$ is the $\alpha$-th column of $\mbi{M}$.
Since $\mbi{M}=\theta_\snr\,\mbi{u}_1\mbi{v}_1^\top+\theta_\snr\,\mbi{u}_2\mbi{v}_2^\top$ and
$\mathrm{supp}(\mbi{v}_1)\cap \mathrm{supp}(\mbi{v}_2)=\emptyset$, for each $\alpha$ exactly one of
$\mbi{v}_1[\alpha],\mbi{v}_2[\alpha]$ is nonzero, and $\mbi{m}_\alpha = \theta_\snr \mbi{u}_1 \mbi{v}_1[\alpha]$ or $\theta_\snr \mbi{u}_2 \mbi{v}_2[\alpha].$
Using $|\mbi{v}_1[\alpha]|=|\mbi{v}_2[\alpha]|=\sqrt{2/n}$ on the support, we obtain for all $\alpha$
\begin{equation}\label{eq:mcol_norm_detail}
\|\mbi{m}_\alpha\|
=\theta_\snr\sqrt{\frac{2}{n}}
=r\sqrt{\frac{n}{2d}}\sqrt{\frac{2}{n}}
=\frac{r}{\sqrt d},
\qquad
\|\mbi{m}_\alpha\|^2=\frac{r^2}{d},
\qquad
\|\mbi{m}_\alpha\|^4=\frac{r^4}{d^2}.
\end{equation}
Now we verify the four requirements in the $(\tau_n,B)$-orthonormal definition:

\paragraph{Uniform control of column norms.}
For each $\alpha\in[n]$,
\begin{equation}\label{eq:xnorm2_expand}
\|\mbi{x}_\alpha\|^2
=\|\mbi{z}_\alpha\|^2 + 2\,\mbi{z}_\alpha^\top \mbi{m}_\alpha + \|\mbi{m}_\alpha\|^2.
\end{equation}
Since $\mbi{z}_\alpha\sim\mathcal N(\mbi{0},\mbi{I}_d/d)$, we have $d\|\mbi{z}_\alpha\|^2\sim\chi_d^2$.
By Lemma 1 of \citet{laurent2000adaptive},
\begin{equation}\label{eq:norm_Zalpha}
    \mathbb{P}\Big(\big|\|\mbi{z}_\alpha\|^2-1\big| \ge 2\sqrt{\tfrac{t}{d}} + 2\tfrac{t}{d}\Big)\le 2e^{-t}
\end{equation}
for all $t>0$. Taking $t=C_0\log n$ and union bounding over $\alpha\in[n]$ yields: with probability at least
$1-2n\cdot n^{-C_0}$, for $C_0$ large enough,
\begin{equation}\label{eq:znorm_sup_detail}
\max_{\alpha\in[n]}\big|\|\mbi{z}_\alpha\|^2-1\big|
\;\le\;
C\sqrt{\frac{\log n}{d}}.
\end{equation}
In particular, on the same event, $\max_\alpha\|\mbi{z}_\alpha\|^2\le 2$ for all large $n$.
Conditioning on $\mbi{m}_\alpha$, $\mbi{z}_\alpha^\top\mbi{m}_\alpha \,\big|\,\mbi{m}_\alpha
\sim\mathcal{N}\Big(0,r^2/d^2\Big),$ using \eqref{eq:mcol_norm_detail}.
Hence for $t=C_0\log n$,
\[
\mathbb{P}\Big(|\mbi{z}_\alpha^\top\mbi{m}_\alpha|\ge \frac{r}{d}\sqrt{2t}\Big)\le 2e^{-t}.
\]
A union bound over $\alpha\in [n]$ yields: with probability at least $1-n^{-c}$,
\begin{equation}\label{eq:cross_sup_detail}
\max_{\alpha\in[n]}|\mbi{z}_\alpha^\top\mbi{m}_\alpha|
\;\le\;
C\,\frac{r\sqrt{\log n}}{d}.
\end{equation}
Plugging \eqref{eq:znorm_sup_detail}, \eqref{eq:cross_sup_detail}, and $\|\mbi{m}_\alpha\|^2=r^2/d$ into
\eqref{eq:xnorm2_expand} gives, with probability at least $1-n^{-c}$,
\begin{equation}\label{eq:xnorm2_sup_detail}
\max_{\alpha\in[n]}\big|\|\mbi{x}_\alpha\|^2-1\big|
\;\le\;
C\Big(\sqrt{\frac{\log n}{d}}+\frac{r\sqrt{\log n}}{d}+\frac{r^2}{d}\Big).
\end{equation}
We now simplify the RHS under regimes (A1) and (A2). In both regimes, for large $n$, $\frac{r\sqrt{\log n}}{d}\le \sqrt{\frac{\log n}{d}}+\frac{r^2}{d}.$
Therefore \eqref{eq:xnorm2_sup_detail} implies
\begin{equation}\label{eq:xnorm2_sup_simplified}
\max_{\alpha\in[n]}\big|\|\mbi{x}_\alpha\|^2-1\big|
\;\le\;
C\Big(\sqrt{\frac{\log n}{d}}+\frac{r^2}{d}\Big).
\end{equation}
Finally, notice that
\begin{equation}\label{eq:bound_norm_x-1}
    \big|\|\mbi{x}_\alpha\|-1\big|
=\frac{\big|\|\mbi{x}_\alpha\|^2-1\big|}{\|\mbi{x}_\alpha\|+1}
\le \big|\|\mbi{x}_\alpha\|^2-1\big|,
\end{equation}
so \eqref{eq:xnorm2_sup_simplified} yields
\begin{equation}\label{eq:xnorm_sup_detail}
\max_{\alpha\in[n]}\big|\|\mbi{x}_\alpha\|-1\big|
\;\le\;
C\Big(\sqrt{\frac{\log n}{d}}+\frac{r^2}{d}\Big).
\end{equation}
This verifies the column-norm part of $(\tau_n,B)$-orthonormality with $\tau_n$ as in \eqref{eq:tauB_choice}.

\paragraph{Uniform control of pairwise inner products.}
Fix $\alpha\neq \beta\in [n]$. Expand
\begin{equation}\label{eq:inner_expand}
\mbi{x}_\alpha^\top \mbi{x}_\beta
=
\mbi{z}_\alpha^\top \mbi{z}_\beta
+\mbi{z}_\alpha^\top \mbi{m}_\beta
+\mbi{m}_\alpha^\top \mbi{z}_\beta
+\mbi{m}_\alpha^\top \mbi{m}_\beta.
\end{equation}
The scalar $\mbi{z}_\alpha^\top\mbi{z}_\beta$ is not exactly Gaussian, but it is conditionally Gaussian:
conditioning on $\mbi{z}_\alpha$, $\mbi{z}_\alpha^\top\mbi{z}_\beta \,\big|\, \mbi{z}_\alpha\sim \mathcal N(0,\|\mbi{z}_\alpha\|^2/d).$
On the event $\max_\alpha\|\mbi{z}_\alpha\|^2\le 2$  we have  
\[
\mathbb{P}\Big(|\mbi{z}_\alpha^\top\mbi{z}_\beta|\ge t\ \Big|\ \mbi{z}_\alpha\Big)
\le 2\exp\Big(-\frac{dt^2}{4\|\mbi{z}_\alpha\|^2}\Big)
\le 2\exp\Big(-c\,d t^2\Big).
\]
Taking $t=C\sqrt{\frac{\log n}{d}}$ and union bounding over all $\alpha\neq\beta\in [n]$ yields
with probability at least $1-n^{-c}$,
\begin{equation}\label{eq:zz_sup_detail}
\max_{\alpha\neq\beta}|\mbi{z}_\alpha^\top\mbi{z}_\beta|
\;\le\;
C\sqrt{\frac{\log n}{d}}.
\end{equation}
For $\alpha\neq\beta$, conditional on $\mbi{m}_\beta$, $\mbi{z}_\alpha^\top\mbi{m}_\beta \sim  \mathcal N\!(0,r^2/d^2),$
and similarly $\mbi{m}_\alpha^\top\mbi{z}_\beta\sim \mathcal N(0,r^2/d^2)$.
A union bound over all $\alpha\neq\beta$ gives, with probability at least $1-n^{-c}$,
\begin{equation}\label{eq:zm_sup_detail}
\max_{\alpha\neq\beta}|\mbi{z}_\alpha^\top\mbi{m}_\beta|
\;\le\;
C\,\frac{r\sqrt{\log n}}{d},
\qquad
\max_{\alpha\neq\beta}|\mbi{m}_\alpha^\top\mbi{z}_\beta|
\;\le\;
C\,\frac{r\sqrt{\log n}}{d}.
\end{equation}
Each $\mbi{m}_\alpha$ equals $\pm \frac{r}{\sqrt d}\mbi{u}_1$ or $\pm \frac{r}{\sqrt d}\mbi{u}_2$. By Cauchy–Schwarz inequality, $|\mbi{m}_\alpha^\top\mbi{m}_\beta|
\le \|\mbi{m}_\alpha\|\|\mbi{m}_\beta\|
=r^2/d.$ 
Hence
\begin{equation}\label{eq:mm_sup_detail}
\max_{\alpha\neq\beta}|\mbi{m}_\alpha^\top\mbi{m}_\beta|
\;\le\;\frac{r^2}{d}.
\end{equation}
Combining \eqref{eq:inner_expand}--\eqref{eq:mm_sup_detail},
we obtain with probability at least $1-n^{-c}$,
\begin{equation}\label{eq:inner_sup_detail}
\max_{\alpha\neq\beta}|\mbi{x}_\alpha^\top \mbi{x}_\beta|
\;\le\;
C\Big(\sqrt{\frac{\log n}{d}}+\frac{r^2}{d}\Big).
\end{equation}
This verifies the pairwise-inner-product part of $(\tau_n,B)$-orthonormality with $\tau_n$ as in \eqref{eq:tauB_choice}.

\paragraph{Sum of squared squared-norm deviations.}
From \eqref{eq:xnorm2_expand},
$
\|\mbi{x}_\alpha\|^2-1
=
(\|\mbi{z}_\alpha\|^2-1)
+2\,\mbi{z}_\alpha^\top\mbi{m}_\alpha
+\|\mbi{m}_\alpha\|^2.
$
Using $(a+b+c)^2\le 3(a^2+b^2+c^2)$, we get
\[
(\|\mbi{x}_\alpha\|^2-1)^2
\le
3(\|\mbi{z}_\alpha\|^2-1)^2
+12(\mbi{z}_\alpha^\top\mbi{m}_\alpha)^2
+3\|\mbi{m}_\alpha\|^4.
\]
Summing over $\alpha$ and using \eqref{eq:mcol_norm_detail} gives
\begin{equation}\label{eq:sumsq_reduce}
\sum_{\alpha=1}^n(\|\mbi{x}_\alpha\|^2-1)^2
\le
3\sum_{\alpha=1}^n(\|\mbi{z}_\alpha\|^2-1)^2
+12\sum_{\alpha=1}^n(\mbi{z}_\alpha^\top\mbi{m}_\alpha)^2
+3n\frac{r^4}{d^2}.
\end{equation}
Set $g_\alpha:=\sqrt d\,\vz_\alpha\sim\mathcal N(0,I_d)$ so that
$\|\vz_\alpha\|^2=\|g_\alpha\|^2/d$ and $\|g_\alpha\|^2\sim \chi_d^2$.
Define $U_\alpha := \|g_\alpha\|^2-d,$ and $X_\alpha := (\|\vz_\alpha\|^2-1)^2=\frac{U_\alpha^2}{d^2}.$
Then $\E[X_\alpha]=\Var(\|\vz_\alpha\|^2)=2/d$, hence $\E\sum_{\alpha=1}^n X_\alpha = 2n/d$.
Write $U_\alpha=\sum_{i=1}^d(\xi_{i\alpha}^2-1)$ with $\xi_{i\alpha}\iid\cN(0,1)$.
For any integer $p\ge2$, Rosenthal's inequality (Theorem 3 of \citet{rosenthal1970subspaces}) yields
\begin{equation}\label{eq:U_moment_bound}
\E|U_\alpha|^{2p}\ \le\ C_p\Big(d\,\E|\xi_{11}^2-1|^{2p} + (d\,\Var(\xi_{11}^2-1))^{p}\Big)
\ \le\ C_p' d^{p},
\end{equation}
where $C_p,C_p'$ depend only on $p$ (not on $d,n$). Consequently,
\begin{equation}\label{eq:X_moment_bound}
\E|X_\alpha|^{p}=\frac{\E|U_\alpha|^{2p}}{d^{2p}}\ \le\ \frac{C_p'}{d^{p}},
\qquad
\E|X_\alpha-\E X_\alpha|^{p}\ \le\ \frac{C_p''}{d^{p}}.
\end{equation}
Let $\widetilde X_\alpha := X_\alpha-\E X_\alpha$ so that $\E\widetilde X_\alpha=0$.
Applying Rosenthal's inequality again (now over $\alpha=1,\dots,n$) gives, for any integer $p\ge2$,
\[
\E\Big|\sum_{\alpha=1}^n \widetilde X_\alpha\Big|^{p}
\le C_p\Big(n\,\E|\widetilde X_1|^{p} + (n\,\E\widetilde X_1^2)^{p/2}\Big)
\le C_p'\Big(\frac{n}{d^2}\Big)^{p/2},
\]
using \eqref{eq:X_moment_bound} and $\E\widetilde X_1^2 \lesssim d^{-2}$.
Therefore, by Markov's inequality, for any fixed $t>0$,
\[
\P\Big(\sum_{\alpha=1}^n X_\alpha \ge \frac{2n}{d}+t\frac{n}{d}\Big)
=
\P\Big(\sum_{\alpha=1}^n \widetilde X_\alpha \ge t\frac{n}{d}\Big)
\le
\frac{\E|\sum_{\alpha=1}^n \widetilde X_\alpha|^{p}}{(t n/d)^{p}}
\le
C_p''\,t^{-p}\,n^{-p/2}.
\]
Taking $t=1$ and $p$ large enough (so that $p/2\ge c$) yields, for some constants $c,C>0$,
\begin{equation}\label{eq:sumsq_z_detail}
\sum_{\alpha=1}^n(\|\vz_\alpha\|^2-1)^2 \le C\frac{n}{d}
\qquad\text{with probability at least }1-n^{-c}.
\end{equation}
This bound is uniform in $(n,d)$ and in particular covers both regimes $n\asymp d$ and $n=O(d^2)$.

Conditioning on $\mbi{m}_\alpha$, the random variable $\mbi{z}_\alpha^\top\mbi{m}_\alpha$ is Gaussian
$\mathcal N(0,r^2/d^2)$, hence $(\mbi{z}_\alpha^\top\mbi{m}_\alpha)^2$ is sub-exponential with mean $r^2/d^2$.
By the Bernstein inequality,
\begin{equation}\label{eq:sumsq_zm_detail}
\sum_{\alpha=1}^n(\mbi{z}_\alpha^\top\mbi{m}_\alpha)^2 \le C\frac{nr^2}{d^2}
\qquad\text{with probability at least }1-n^{-c}.
\end{equation}

Plugging \eqref{eq:sumsq_z_detail}, \eqref{eq:sumsq_zm_detail} into \eqref{eq:sumsq_reduce} gives, with probability at least $1-n^{-c}$,
\[
\sum_{\alpha=1}^n(\|\mbi{x}_\alpha\|^2-1)^2
\le
C\Big(\frac{n}{d}+\frac{nr^2}{d^2}+\frac{nr^4}{d^2}\Big).
\] 
This proves \eqref{eq:sumsq_choice_sqnorm} because of \eqref{eq:bound_norm_x-1}.

\paragraph{Operator norm bound.}
We bound $\|\mbi{X}\| \le \|\mbi{Z}\| +\|\mbi{M}\|.$
A standard Gaussian operator norm bound states that for all $t\ge 0$,
\[
\mathbb{P}\Big(\|\mbi{Z}\| \ge 1+\sqrt{\frac{n}{d}}+t\Big)\le 2\exp(-c d t^2).
\]
Taking $t=\sqrt{\frac{\log n}{d}}$ yields $\|\mbi{Z}\| \le C(1+\sqrt{n/d})$ with probability at least $1-n^{-c}$. 
Since $\mbi{u}_1,\mbi{u}_2$ are orthonormal and $\mbi{v}_1,\mbi{v}_2$ are orthonormal,   $\|\mbi{M}\| =\theta_\snr=r\sqrt{\frac{n}{2d}}\le C r\sqrt{\frac{n}{d}}$.
 Therefore, with probability at least $1-n^{-c}$,
\[
\|\mbi{X}\| 
\le
C\Big(1+\sqrt{\frac{n}{d}}\Big)+C r\sqrt{\frac{n}{d}}
\le
C\Big(1+(1+r)\sqrt{\frac{n}{d}}\Big).
\] 
This completes the proof.
\end{proof}

\subsection{Properties of CK Matrix}

The following intermediary results are necessary to construct the QE in Appendix \ref{app:quad_de}.

\begin{prop} \label{prop:chid}
Let \(\chi_d\) denote the chi distribution with \(d\) degrees of freedom.
Then, as \(d\to\infty\),
\begin{equation}\label{eq:chid_result}
    \frac{\mathbb E[\chi_d]}{\sqrt d}
= 1 - \frac{1}{4d} + \frac{1}{32 d^2} + \frac{5}{128 d^3} + O(d^{-4}).
\end{equation}
\end{prop}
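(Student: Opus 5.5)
The approach is to use the closed form $\mathbb E[\chi_d]=\sqrt2\,\Gamma\!\left(\frac{d+1}{2}\right)/\Gamma\!\left(\frac d2\right)$. This holds because, for $g\sim\cN(0,\vI_d)$,
\[
\mathbb E\|g\|=\frac{\int_0^\infty t^{d}e^{-t^2/2}\,dt}{\int_0^\infty t^{d-1}e^{-t^2/2}\,dt},
\]
and the substitution $u=t^2/2$ turns each integral into a Gamma function. The claim then reduces to an asymptotic expansion of a ratio of Gamma functions at half-integer offset. Set $x=d/2$, so that
\[
\frac{\mathbb E[\chi_d]}{\sqrt d}=\frac{\Gamma(x+\tfrac12)}{\sqrt{x}\,\Gamma(x)}.
\]

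Next, use the standard expansion of $\log\Gamma$ obtained from Stirling's series,
\[
\log\Gamma(x+a)=(x+a-\tfrac12)\log x-x+\tfrac12\log(2\pi)+\sum_{k\ge1}\frac{(-1)^{k+1}B_{k+1}(a)}{k(k+1)x^k},
\]
valid as $x\to\infty$ with an $O(x^{-K-1})$ remainder after truncation at order $K$. Take the difference of the cases $a=\tfrac12$ and $a=0$. The leading terms cancel down to $\tfrac12\log x$, which is exactly absorbed by the $\sqrt x$ in the denominator, leaving
\[
\log\frac{\Gamma(x+\frac12)}{\sqrt x\,\Gamma(x)}=\sum_{k=1}^{3}\frac{(-1)^{k+1}\bigl(B_{k+1}(\tfrac12)-B_{k+1}\bigr)}{k(k+1)x^{k}}+O(x^{-4}).
\]
The values needed are $B_2=\tfrac16$, $B_2(\tfrac12)=-\tfrac1{12}$, $B_3(\tfrac12)=B_3=0$, $B_4=-\tfrac1{30}$ and $B_4(\tfrac12)=\tfrac{7}{240}$. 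The coefficient of $x^{-1}$ is $-\tfrac18$, that of $x^{-2}$ is $0$, and that of $x^{-3}$ is $-\tfrac{1}{12}\cdot\tfrac{15}{240}=-\tfrac{1}{192}$. With $x=d/2$ this gives the logarithm as $-\frac1{4d}-\frac{1}{24d^3}+O(d^{-4})$.

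Finally, exponentiate. Writing $y=-\frac1{4d}-\frac1{24d^3}$, we have $e^{y}=1+y+\tfrac{y^2}{2}+\tfrac{y^3}{6}+O(d^{-4})$, which yields
\[
1-\frac1{4d}+\frac1{32d^2}+\Bigl(-\frac1{24}-\frac1{384}\Bigr)\frac1{d^3}+O(d^{-4}).
\]
Here $-\frac1{24}-\frac1{384}=-\frac{17}{384}$. This $d^{-3}$ coefficient disagrees with the stated $+\frac{5}{128}$, so the main obstacle is bookkeeping rather than analysis. I would double-check the Bernoulli-polynomial values, or cross-check against the classical series $\Gamma(x+\frac12)/\Gamma(x)=\sqrt x\,\bigl(1-\frac1{8x}+\frac1{128x^2}+\frac{5}{1024x^3}+\dots\bigr)$ with $x=d/2$. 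That series gives $1-\frac1{4d}+\frac1{32d^2}+\frac{5}{128d^3}$, matching the statement exactly. So the statement is presumably correct and the error is in my computation above; the coefficients should be fixed by using that classical series, which is itself derivable by the same exponentiation. Only the first two terms, $1-\frac1{4d}+O(d^{-2})$, are what the QE construction actually uses downstream. The remainder bound follows from the uniform error of Stirling's series for $x\ge1$.
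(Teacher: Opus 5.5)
\textbf{Same route as the paper, but the proof as written does not establish the $d^{-3}$ term.} You follow the paper's approach: the closed form for $\mathbb E[\chi_d]$, Stirling's series for $\log\Gamma$, then exponentiation. Your Bernoulli-polynomial form of Stirling with offset $a$ is a slightly slicker bookkeeping device than the paper's, which substitutes $z=m+\tfrac12$ and $z=m$ into the ordinary series and re-expands. The problem is a sign slip. Because of it you derive the coefficient $-\tfrac{17}{384}$, which contradicts the claim, and then "fix" it by appealing to a classical series $\Gamma(x+\tfrac12)/\Gamma(x)=\sqrt x\,(1-\tfrac1{8x}+\tfrac1{128x^2}+\tfrac5{1024x^3}+\cdots)$. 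You neither derive nor cite that series. In the variable $x=d/2$ it is literally the statement to be proved, so that step is circular. Your remark that only the first two terms are used downstream also does not discharge the proposition as stated.

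\textbf{Where the sign goes wrong.} Your own formula carries the factor $(-1)^{k+1}$, and for $k=3$ this equals $(-1)^4=+1$. The coefficient of $x^{-3}$ is therefore $+\frac{1}{12}\bigl(B_4(\tfrac12)-B_4\bigr)=+\frac{1}{12}\cdot\frac{15}{240}=+\frac{1}{192}$, not $-\frac{1}{192}$. Your Bernoulli values are all correct. With $x=d/2$ this gives $\log\frac{\mathbb E[\chi_d]}{\sqrt d}=-\frac{1}{4d}+\frac{1}{24d^3}+O(d^{-4})$. This agrees with the paper's intermediate result $\log Q_m=-\frac{1}{8m}+\frac{1}{192m^3}+O(m^{-4})$, which it obtains from $m\log(1+\frac1{2m})-\frac12$ plus $\frac{1}{12(m+1/2)}-\frac1{12m}$. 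Exponentiating with $y=-\frac{1}{4d}+\frac{1}{24d^3}$, the $d^{-2}$ term is $\frac{y^2}{2}\to\frac{1}{32d^2}$. The $d^{-3}$ coefficient is $\frac{1}{24}-\frac{1}{384}=\frac{15}{384}=\frac{5}{128}$, exactly as stated. With this single sign corrected, your argument is complete and needs no appeal to an external series.
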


\begin{proof}
It is classical that $\mathbb E[\chi_d]=\sqrt{2}\,\frac{\Gamma\!\left(\frac{d+1}{2}\right)}{\Gamma\!\left(\frac d2\right)}.$
(See, e.g., \cite[Ch.~6]{abramowitz1964handbook}, \cite[Sec.~5.11]{DLMF}.)
Set \(m:=d/2\). Then
\[
\frac{\mathbb E[\chi_d]}{\sqrt d}
=\sqrt{\frac{2}{2m}}\,\frac{\Gamma(m+\tfrac12)}{\Gamma(m)}
=\frac{\Gamma(m+\tfrac12)}{\Gamma(m)\,\sqrt m}
=:Q_m .
\]
We use the Stirling series for \(\log\Gamma\) (with Bernoulli numbers),
valid as \(z\to\infty\) in \(|\arg z|<\pi\) \cite[(5.11.2)]{DLMF}:
\begin{equation}\label{eq:stirling}
\log\Gamma(z)
=(z-\tfrac12)\log z - z + \tfrac12\log(2\pi)
+\frac{1}{12 z}-\frac{1}{360 z^3}+O(z^{-5}).
\end{equation}
Write $\log Q_m
=(\log\Gamma(m+\tfrac12)-\log\Gamma(m))-\tfrac12\log m .$
Using \eqref{eq:stirling} with \(z=m+\tfrac12\) and \(z=m\), we obtain
\begin{align*}
\log Q_m
&= m\log(m+\tfrac12) - (m+\tfrac12) - (m-\tfrac12)\log m + m \\
&\quad + \frac{1}{12(m+\tfrac12)} - \frac{1}{12 m}
 - \frac{1}{360(m+\tfrac12)^3} + \frac{1}{360 m^3}
 - \tfrac12\log m + O(m^{-5}).
\end{align*}
The logarithmic terms simplify to $m\log(1+\frac{1}{2m}) - \frac{1}{2}.$
Expanding with the binomial and Taylor series
\(\log(1+x)=x-\tfrac{x^2}{2}+\tfrac{x^3}{3}+O(x^4)\) and
\((1+x)^{-1}=1-x+x^2-x^3+O(x^4)\), we get
\begin{align}
m\log\!\left(1+\frac{1}{2m}\right)
&= \frac12 - \frac{1}{8m} + \frac{1}{24 m^2} - \frac{1}{64 m^3} + O(m^{-4}),\\
\frac{1}{12(m+\tfrac12)} - \frac{1}{12m}
&= -\frac{1}{24 m^2} + \frac{1}{48 m^3} + O(m^{-4}).
\end{align}
Further, the difference
\(-\frac{1}{360(m+\tfrac12)^3} + \frac{1}{360 m^3}\)
is \(O(m^{-4})\).
Collecting terms, we have $\log Q_m
= -\frac{1}{8m} + \frac{1}{192 m^3} + O(m^{-4}).$
Exponentiating and using \(\exp(u)=1+u+\tfrac{u^2}{2}+\tfrac{u^3}{6}+O(u^4)\) with
\(u=-\tfrac{1}{8m}+\tfrac{1}{192 m^3}\), we obtain \eqref{eq:chid_result} by substituting \(m=d/2\).
\end{proof}
Notice that the same expansion follows from the general ratio formula \cite{tricomi1951asymptotic,DLMF}, for \(z\to\infty\)
\[\frac{\Gamma(z+a)}{\Gamma(z+b)}
\sim z^{a-b}\!\left(1+\frac{(a-b)(a+b-1)}{2z}+\cdots\right).\]

When the dataset is pure standard noise, let us define the null model as 
\begin{equation}\label{eq:null_model}
    \vY_0:=\frac{1}{\sqrt{N}}\sigma(\vW\vZ).
\end{equation} 
\begin{lemma}\label{lem:expectation}
    Under the Assumption~\ref{assump:sigma}, we know that $\|\E[\vY_0]\|\lesssim \frac{\sqrt{n}}{d
    }$
    as $n,d \to\infty$.
\end{lemma}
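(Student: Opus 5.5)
The plan is to compute $\E[\vY_0]$ essentially in closed form and observe that it is a rank-one matrix. The expectation is over both $\vW$ and $\vZ$. Since the $N$ rows $\vw_i$ of $\vW$ and the $n$ columns $\vz_\alpha$ of $\vZ$ are i.i.d., every entry $\E[\sigma(\vw_i^\top\vz_\alpha)]$ has the same value, call it $e_d$. Hence
\[
\E[\vY_0]=\frac{e_d}{\sqrt N}\mathbf 1_N\mathbf 1_n^\top,
\]
with operator norm $|e_d|\sqrt{n}$. The lemma is therefore equivalent to the scalar bound $|e_d|\lesssim 1/d$.

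To bound $e_d$, I would condition on $\vz_\alpha$. Given $\vz_\alpha$, we have $\vw_i^\top\vz_\alpha\sim\cN(0,\|\vz_\alpha\|^2)$, so
\[
e_d=\E_{R}\big[g(R)\big],\qquad g(t):=\E[\sigma(t\xi)],\quad R:=\|\vz_\alpha\|=\chi_d/\sqrt d .
\]
By the centering in Assumption~\ref{assump:sigma}, $g(1)=0$. I would Taylor expand $g$ around $t=1$ to second order. Differentiating under the expectation, which is justified by the bounded derivatives of $\sigma$, gives
\[
g'(1)=\E[\xi\sigma'(\xi)]=\E[\sigma''(\xi)]=c_\sigma
\]
by Stein's lemma. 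Likewise $g''(t)=\E[\xi^2\sigma''(t\xi)]$ is bounded by $\lambda_\sigma$ uniformly in $t$, so the remainder is uniformly controlled. This yields
\[
|e_d|\le |c_\sigma|\,\big|\E[R]-1\big|+\tfrac{\lambda_\sigma}{2}\,\E[(R-1)^2].
\]

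Both terms are then read off from Proposition~\ref{prop:chid} and the identity $\E R^2=1$. First, $\E R-1=-\frac{1}{4d}+O(d^{-2})$. Second, $\E(R-1)^2=2-2\E R=\frac{1}{2d}+O(d^{-2})$. Together these give $|e_d|\lesssim 1/d$, and hence $\|\E[\vY_0]\|\lesssim\sqrt n/d$.

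The main point to get right is the cancellation at first order. Using only $g(1)=0$, a Lipschitz bound would give $|e_d|\lesssim\E|R-1|\asymp d^{-1/2}$, which is too weak by a factor of $\sqrt d$. The second-order expansion is needed precisely because $\E R-1$ is $O(1/d)$ rather than $O(d^{-1/2})$, and this is what the sharp expansion in Proposition~\ref{prop:chid} supplies. The only remaining care is checking that $g''$ is uniformly bounded, which follows directly from $|\sigma''|\le\lambda_\sigma$.
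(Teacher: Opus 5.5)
Your proposal is correct and follows essentially the same route as the paper: reduce to the rank-one identity $\E[\vY_0]=\frac{C_d}{\sqrt N}\mathbf 1_N\mathbf 1_n^\top$, then bound $C_d$ by a second-order Taylor expansion using $\E[\sigma(\xi)]=0$ and $\E[\xi\sigma'(\xi)]=c_\sigma$, together with the chi-mean expansion and $\E(R-1)^2=2-2\E R$. The only cosmetic difference is that the paper expands $\sigma$ pointwise around $\xi$, with the radial factor $\|\vw\|/\sqrt d$ in place of your $\|\vz_\alpha\|$, before averaging; you instead expand the averaged function $g(t)=\E[\sigma(t\xi)]$ around $t=1$, which is the same computation after Fubini.
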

\begin{proof}
    Notice that $\E[\vY_0]=\frac{C_d}{\sqrt{N}}\boldone_N\boldone_n^\top$ where $C_d:=\E[\sigma(\frac{1}{\sqrt{d}}\vw^\top\vz)]$ where $\vz,\vw\iid \cN(0,\ident_d)$. Hence, $\|\E[\vY_0]\| = \sqrt{n}|C_d|.$
    Denote $\xi \sim \cN(0,1)$ which is independent with $\vw$. Applying $\E[\sigma(\xi)]=0$,
$\E[\sigma'(\xi)\xi]=\E[\sigma''(\xi)]=c_\sigma$, and a Taylor approximation of $\sigma$ at $\xi$, we can get
	\begin{align*}
	C_d=~&\E[\sigma(\frac{1}{\sqrt{d}}\norm{\vw}\xi)]-\E[\sigma(\xi)]\\
	= ~&
\E[\sigma'(\xi)\xi(\frac{1}{\sqrt{d}}\norm{\vw}-1)]+\E[\sigma''(\eta)\xi^2(\frac{1}{\sqrt{d}}\norm{\vw}-1)^2]\\
= ~&
c_\sigma(\E[\frac{1}{\sqrt{d}}\norm{\vw}]-1)+\E[\sigma''(\eta)\xi^2(\frac{1}{\sqrt{d}}\norm{\vw}-1)^2],
	\end{align*}
for some $\eta$ between $\xi$ and $\frac{1}{\sqrt{d}}\norm{\vw}\xi$. Notice that $\frac{1}{\sqrt{d}}\|\vw\|\overset{\cD}{\sim} \frac{1}{\sqrt{d}}\chi_{d}$ where $\chi_d$ is chi distribution with degree $d$. Thus, 
\begin{equation}\E[\frac{1}{\sqrt{d}}\norm{\vw}] = \frac{\sqrt{2}\Gamma(\frac{1}{2}(d+1))}{\sqrt{d}\Gamma(\frac{1}{2}d)} = \sqrt{1-\frac{1}{2d}}+o(d^{-1})=1-\frac{1}{4d}+o(d^{-1}).\label{eq:chimean}\end{equation}
Then, we can get
\[|C_d|\lesssim \frac{1}{d}+\E[(\frac{1}{\sqrt{d}}\norm{\vw}-1)^2]\lesssim \frac{1}{d},\]
where we apply the Assumption~\ref{assump:sigma} and the fact $\E[(\frac{1}{\sqrt{d}}\norm{\vw}-1)^2] = 2 - 2 \E[\frac{1}{\sqrt{d}}\norm{\vw}] = O(\frac{1}{d})$.
\end{proof}

\begin{lemma}\label{lem:concentration_coeff}
Let $\vz,\vw\iid \cN(0,\ident_d)$. Under the Assumption~\ref{assump:sigma}, we know that 
\begin{align}
    |\E[\sigma'(\frac{1}{\sqrt{d}}\vw^\top\vz)]-b_\sigma|\lesssim~& 1/d, \qquad
|\E[\sigma''(\frac{1}{\sqrt{d}}\vw^\top\vz)]-c_\sigma|\lesssim~ 1/\sqrt{d}.
\end{align}
\end{lemma}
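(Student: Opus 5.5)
Both bounds rest on the same reduction used in the proof of Lemma~\ref{lem:expectation}. By rotation invariance of $\vz$, conditional on $\vw$ we have $\frac{1}{\sqrt d}\vw^\top\vz \overset{\cD}{=} R\,\xi$, where $R:=\frac{1}{\sqrt d}\|\vw\|$ and $\xi\sim\cN(0,1)$ is independent of $R$. So it suffices to compare $\E[\sigma'(R\xi)]$ with $\E[\sigma'(\xi)]=b_\sigma$, and $\E[\sigma''(R\xi)]$ with $\E[\sigma''(\xi)]=c_\sigma$. We will use two moment facts about $R$, both from Proposition~\ref{prop:chid} and \eqref{eq:chimean}:
\[
\E[R]-1=-\tfrac{1}{4d}+O(d^{-2}), \qquad \E[(R-1)^2]=2-2\E[R]=O(1/d).
\]
Together with Cauchy--Schwarz these give $\E|R-1|=O(d^{-1/2})$.

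\textbf{First bound.} Taylor expand $\sigma'$ to second order around $\xi$:
\[
\sigma'(R\xi)=\sigma'(\xi)+\sigma''(\xi)\,\xi(R-1)+\tfrac12\sigma'''(\eta)\,\xi^2(R-1)^2,
\]
with $\eta$ between $\xi$ and $R\xi$. Take expectations and use independence of $R$ and $\xi$.
\begin{itemize}
\item The first-order term is $\E[\sigma''(\xi)\xi]\,(\E R-1)$. Its size is $|\E[\sigma''(\xi)\xi]|\cdot O(1/d)$, and $\E[\sigma''(\xi)\xi]$ is finite because $\sigma''$ is bounded.
\item The remainder is at most $\frac{\lambda_\sigma}{2}\,\E[\xi^2]\,\E[(R-1)^2]=O(1/d)$, using the bound on $\sigma'''$ from Assumption~\ref{assump:sigma}.
\end{itemize}
Hence $|\E[\sigma'(\frac{1}{\sqrt d}\vw^\top\vz)]-b_\sigma|\lesssim 1/d$.

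\textbf{Second bound.} Here only $\sigma'''$ is available, so we expand to first order only:
\[
\sigma''(R\xi)-\sigma''(\xi)=\sigma'''(\eta)\,\xi(R-1).
\]
Since $|\sigma'''|\le\lambda_\sigma$, this gives
\[
|\E[\sigma''(R\xi)]-c_\sigma|\le\lambda_\sigma\,\E|\xi|\,\E|R-1|\lesssim d^{-1/2}.
\]
This is exactly the claimed rate. It is weaker than the first bound because we cannot exploit cancellation in $\E[R-1]$ without a fourth derivative.

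\textbf{Main obstacle.} The only delicate point is the first bound. There we must keep the linear term in the expansion so that it factors through $\E[R]-1=O(1/d)$, rather than crudely bounding it by $\E|R-1|\sim d^{-1/2}$. Independence of $R$ and $\xi$ is what makes this factorization valid.
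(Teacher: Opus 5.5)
Your proof is correct and follows the same route as the paper. You reduce to $R\xi$ with $R=\|\vw\|/\sqrt d$ independent of $\xi$, expand $\sigma'$ to second order so that the linear term factors through $\E[R]-1=O(1/d)$ and the remainder through $\E(R-1)^2=O(1/d)$, and treat $\sigma''$ by the analogous first-order expansion, which the paper only sketches as ``similar.'' Your remainder bound, via $|\sigma'''|\le\lambda_\sigma$ and $\E[\xi^2(R-1)^2]=\E[\xi^2]\,\E[(R-1)^2]$, is slightly cleaner than the paper's, which factors $|\E[\sigma'''(\eta)\xi^2]|$ out even though $\eta$ depends on $R$.
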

\begin{proof}
Denote $\xi \sim \cN(0,1)$ which is independent with $\vw$. Recall that $\E[\sigma'(\xi)]=b_\sigma$ and 
$\E[\sigma'(\xi)\xi]=\E[\sigma''(\xi)]=c_\sigma$. Similarly to the proof of Lemma~\ref{lem:expectation}, we can apply a Taylor approximation of $\sigma'$ at $\xi$:
\begin{align}
    |\E[\sigma'(\frac{1}{\sqrt{d}}\vw^\top\vz)]-b_\sigma|=~&|\E[\sigma'(\frac{1}{\sqrt{d}}\|\vw\|\xi)]-\E[\sigma'(\xi)]|\\
    \le ~&
|\E[\sigma''(\xi)\xi]|\cdot |\frac{1}{\sqrt{d}}\E[\norm{\vw}]-1|+|\E[\sigma'''(\eta)\xi^2]|\E(\frac{1}{\sqrt{d}}\norm{\vw}-1)^2\\
\lesssim ~&
|\frac{1}{\sqrt{d}}\E[\chi_d]-1|+\E(\frac{1}{\sqrt{d}}\chi_d-1)^2\lesssim \frac{1}{d},
\end{align} for some $\eta$ between $\xi$ and $\frac{1}{\sqrt{d}}\|\vw\|\xi$. Similar proof can be applied for $\sigma''$ to complete the proof.
\end{proof}

\begin{lemma}\label{lem:infinity_norm}
    Consider $\vg\sim\cN(0,\ident_N)$. Then we have $\|\vg\|_\infty\prec \sqrt{2\log N}$.
\end{lemma}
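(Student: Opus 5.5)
We will prove that $\|\vg\|_\infty\prec\sqrt{2\log N}$ by the Gaussian tail bound and a union bound over the $N$ coordinates. By Definition~\ref{def:sd}, we must show the following. For every fixed small $\epsilon>0$ and large $D>0$, there is $N_0(\epsilon,D)$ such that for $N\ge N_0$,
\[
\P\bigl(\|\vg\|_\infty\ge N^{\epsilon}\sqrt{2\log N}\bigr)\le N^{-D}.
\]
Here the domination is indexed by $N$; since $N\asymp n$ under the relevant asymptotics, powers of $N$ and of $n$ are interchangeable, and we can absorb constants into $N^\epsilon$.

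The key steps are as follows. First, each coordinate $g_i\sim\cN(0,1)$ satisfies the standard bound $\P(|g_i|\ge t)\le 2e^{-t^2/2}$ for all $t\ge0$. Second, a union bound over $i\in[N]$ gives
\[
\P(\|\vg\|_\infty\ge t)\le 2N e^{-t^2/2}.
\]
Third, we substitute $t=N^{\epsilon}\sqrt{2\log N}$. This gives $t^2/2=N^{2\epsilon}\log N$, so the bound becomes
\[
2N\exp\bigl(-N^{2\epsilon}\log N\bigr)=2N^{1-N^{2\epsilon}}.
\]
Since $N^{2\epsilon}\to\infty$, for $N$ large enough we have $N^{2\epsilon}\ge D+2$, and then $2N^{1-N^{2\epsilon}}\le 2N^{-D-1}\le N^{-D}$. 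This yields the claim. The bound uses no independence across coordinates beyond the marginal laws, so it holds uniformly.

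There is no real obstacle in this argument. The only care needed is to match the quantifiers of Definition~\ref{def:sd}, with the threshold $N_0$ depending only on $(\epsilon,D)$, and to note that the $N^\epsilon$ slack makes the argument far stronger than needed. In fact, even $t=\sqrt{2(D+2)\log N}$ would suffice, and this is at most $N^\epsilon\sqrt{2\log N}$ for large $N$.
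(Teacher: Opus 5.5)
Your proof is correct, and it is more direct than the paper's. The paper cites a textbook bound (Vershynin) on the expectation $\E\|\vg\|_\infty=\E\max_{i\in[N]}|g_i|\lesssim\sqrt{\log N}$. Taken literally, an expectation bound plus Markov's inequality only yields $\P(\|\vg\|_\infty\ge N^{\epsilon}\sqrt{2\log N})\lesssim N^{-\epsilon}$. That falls short of the $N^{-D}$ decay required by Definition~\ref{def:sd}. So the citation implicitly needs an extra concentration step, such as Gaussian concentration for the $1$-Lipschitz map $\vg\mapsto\max_i|g_i|$, or exactly your union bound.

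Your Gaussian tail bound combined with the union bound gives the required tail estimate directly, and the $N^{\epsilon}$ slack makes the decay super-polynomial. The paper's route buys only the sharp constant in the typical size $\sqrt{2\log N}$, which is immaterial under $\prec$. Your remark that $N\asymp n$ lets you pass between the two indices correctly reconciles the index in Definition~\ref{def:sd} with the statement of the lemma.
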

\begin{proof}
    Apply Proposition 2.7.6 of \cite{vershynin2018high} and use the fact that $\E[ \|\vg\|_{\infty} ]=\E[\max_{i\in [N]} |g_i|]$.
\end{proof}

\begin{lemma}\label{lem:hardamart_identity}
    Consider a matrix $\vM\in\R^{N\times n}$ and two vectors $\va\in\R^N$, $\vb\in\R^n$. We have the following identity
    \[\vM\odot(\va\vb^\top) = \diag(\va)\vM\diag(\vb).\]
\end{lemma}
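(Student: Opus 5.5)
This identity can be checked entrywise, since both sides are $N\times n$ matrices and $\diag(\va)$, $\diag(\vb)$ act by scaling rows and columns, respectively. For the left-hand side, the definition of the Hadamard product gives, for each $i\in[N]$ and $j\in[n]$,
\[
[\vM\odot(\va\vb^\top)]_{ij}=M_{ij}\,(\va\vb^\top)_{ij}=M_{ij}\,a_i b_j .
\]

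For the right-hand side, we expand the triple matrix product using that a diagonal matrix has only the entries $\diag(\va)_{ik}=a_i\delta_{ik}$ and $\diag(\vb)_{lj}=b_j\delta_{lj}$. This gives
\[
[\diag(\va)\vM\diag(\vb)]_{ij}=\sum_{k,l}a_i\delta_{ik}M_{kl}\,b_l\delta_{lj}=a_iM_{ij}b_j .
\]
The two expressions agree for every $(i,j)$, which proves the identity.

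No step is a genuine obstacle; the only care needed is to keep the dimensions consistent, with $\diag(\va)\in\R^{N\times N}$ multiplying on the left and $\diag(\vb)\in\R^{n\times n}$ on the right. In later use, for instance on the terms $\vg_k^{\odot 2}\vv_k^{\odot2\top}$ of \eqref{eq:QE_formal}, the identity lets Hadamard products with rank-one matrices be converted into ordinary products. Operator-norm bounds then follow via $\|\diag(\va)\|=\|\va\|_\infty$, which can be combined with Lemma~\ref{lem:infinity_norm}.
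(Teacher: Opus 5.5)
Your entrywise verification is correct and complete: both sides have $(i,j)$ entry $a_iM_{ij}b_j$. The paper omits the proof as a classical linear-algebra fact, and your computation is exactly the standard argument it has in mind.
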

\noindent
This is a classical linear algebra result, so we will omit the proof here.

\begin{lemma}\label{lem:concentration_operator}
Assume $\sigma$ is globally $L$-Lipschitz for a constant $L>0$.
Then there exist absolute constants $C,c>0$ such that for every $t>0$, with probability at least
$1-6e^{-c t^2}$,
\[ 
\norm{\frac1{\sqrt{N}}\Big(\sigma(\mbi{W}\mbi{Z})-\E\,[\sigma(\mbi{W}\mbi{Z})]\Big)}
\;\le\;
C\,L\Big(1+\sqrt{\frac nd}+\frac{t}{\sqrt d}\Big)\Big(1+\sqrt{\frac nN}+\frac{t}{\sqrt N}\Big).
\]
\end{lemma}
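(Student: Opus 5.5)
The plan is to bound the operator norm of the centered random matrix $\vA:=\sigma(\vW\vZ)-\E[\sigma(\vW\vZ)]$ in two stages: condition on $\vZ$ and use concentration over the rows of $\vW$, then control the resulting quantities over $\vZ$. Conditionally on $\vZ$, the rows $\sigma(\vw_i^\top\vZ)$, $i\in[N]$, are i.i.d. in $\R^n$, where $\vw_i\sim\cN(0,\ident_d)$. The map $\vw\mapsto\sigma(\vZ^\top\vw)$ is $L\|\vZ\|$-Lipschitz. By Gaussian concentration, each centered row $\sigma(\vZ^\top\vw_i)-\E_{\vw}[\sigma(\vZ^\top\vw_i)]$ is therefore a sub-Gaussian random vector with constant $CL\|\vZ\|$.

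The matrix with these conditionally centered rows has independent sub-Gaussian rows. The standard bound for such matrices (e.g.\ Vershynin, Theorem 5.39/4.6.1 type, in the version for non-isotropic rows whose covariance has norm at most $CL^2\|\vZ\|^2$) gives, with probability at least $1-2e^{-ct^2}$,
\[
\Big\|\sigma(\vW\vZ)-\E_{\vW}[\sigma(\vW\vZ)\mid \vZ]\Big\|\le CL\|\vZ\|\big(\sqrt N+\sqrt n+t\big).
\]
Dividing by $\sqrt N$ and using the Gaussian operator-norm bound $\|\vZ\|\le 1+\sqrt{n/d}+t/\sqrt d$ (probability at least $1-2e^{-ct^2}$) yields the claimed product form for this piece.

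The remaining piece is $\frac{1}{\sqrt N}\big(\E_{\vW}[\sigma(\vW\vZ)\mid\vZ]-\E[\sigma(\vW\vZ)]\big)=\frac{1}{\sqrt N}\mathbf 1_N(\vm-\E\vm)^\top$, where $\vm_\alpha=\E_{\vw}\sigma(\vw^\top\vz_\alpha)$ depends only on $\|\vz_\alpha\|$. This matrix has rank one and norm $\|\vm-\E\vm\|$. The function $\|\vz_\alpha\|\mapsto \E_\xi\sigma(\|\vz_\alpha\|\xi)$ is $L$-Lipschitz, so $\vm-\E\vm$ is an $L$-Lipschitz function of $\vZ$, viewed as a Gaussian vector with variance $1/d$ per entry. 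Its norm thus concentrates at scale $Lt/\sqrt d$ around its mean. The mean is at most $L\big(\sum_\alpha\Var\|\vz_\alpha\|\big)^{1/2}\lesssim L\sqrt{n/d}$. Hence this term is at most $CL(\sqrt{n/d}+t/\sqrt d)$ with probability at least $1-2e^{-ct^2}$, which is dominated by the stated bound. A union bound over the three events gives probability at least $1-6e^{-ct^2}$.

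The main obstacle is the first step. The rows are not isotropic, so I need a non-asymptotic operator-norm bound for matrices with independent sub-Gaussian rows of general covariance that scales like $\|\Sigma\|^{1/2}(\sqrt N+\sqrt n+t)$. I would obtain it via an $\varepsilon$-net argument: for fixed unit $\vx\in\R^N$ and $\vy\in\R^n$, the quantity $\vx^\top\vA\vy$ is a Lipschitz function of the Gaussian matrix $\vW$ with constant $L\|\vZ\|$. This follows because $\vW\mapsto\sigma(\vW\vZ)$ is $L\|\vZ\|$-Lipschitz in Frobenius norm. Taking a union bound over nets of size $9^{N+n}$ then gives the bound directly, which is cleaner than invoking a row-wise result.
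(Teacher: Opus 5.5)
Your proposal is correct and follows essentially the same route as the paper. Both arguments use the same split into the conditional fluctuation $\sigma(\vW\vZ)-\E_{\vW}[\sigma(\vW\vZ)\mid\vZ]$ and the rank-one piece $\frac{1}{\sqrt N}\mathbf 1_N(\vm-\E\vm)^\top$; the first is handled by a $9^{n+N}$-net argument with Lipschitz constant $L\|\vZ\|$ followed by the Gaussian bound on $\|\vZ\|$, and the union bound over three events gives $1-6e^{-ct^2}$. The only differences are cosmetic: you get the bilinear-form tail from Lipschitz dependence on all of $\vW$ rather than by summing independent row-wise sub-Gaussians, and you bound $\|\vm-\E\vm\|$ by Lipschitz concentration of the whole vector plus a variance bound on its mean rather than by the paper's coordinatewise sub-Gaussian norm bound; both yield the same $CL(\sqrt{n/d}+t/\sqrt d)$.
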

\begin{proof}
Let $\mbi{z}_j\in\R^d$ denote the $j$th column of $\mbi{Z}$.
By exchangeability, $\E\,\vY_0 = \frac{m}{\sqrt{N}}\,\1_N\1_n^\top$ where $\vY_0$ is defined in \eqref{eq:null_model} and $m:=\E_{\,\mbi{w}\sim\cN(\mbi{0},\mbi{I}_d),\ \mbi{z}\sim\cN(\mbi{0},\mbi{I}_d/d)}\big[\sigma({\mbi{w}}^\top{\mbi{z}})\big].$ Conditioned on $\mbi{Z}$, define $\mbi{\mu}(\mbi{Z})\in\R^n$ whose entries are
\[
\mu_j(\mbi{Z}) := \E_{\mbi{w}\sim\cN(\mbi{0},\mbi{I}_d)}\big[\sigma(\mbi{w}^\top \mbi{z}_j)\big],\qquad j=1,\dots,n.
\]
Then each row of $\vY_0$ has conditional mean $\mbi{\mu}(\mbi{Z})^\top$, hence $\E_{\mbi{W}}[\vY_0 \mid \mbi{Z}] = \frac{1}{\sqrt{N}}\1_N\,\mbi{\mu}(\mbi{Z})^\top.$
Write
\begin{equation}\label{eq:basic_triangle_fixed}
\norm{\vY_0-\E\vY_0}
=
\underbrace{\norm{\big(\vY_0-\E_{\mbi{W}}[\vY_0\mid \mbi{Z}]\big)}}_{T_1}
+
\underbrace{\norm{\big(\E_{\mbi{W}}[\vY_0\mid \mbi{Z}] - \E\vY_0\big)}}_{T_2}.
\end{equation}
We then control $T_1$ and $T_2$ separately.

\smallskip
\noindent\textbf{Bound $T_2$.}
We have
$
T_2
=
\norm{\mbi{\mu}(\mbi{Z})-m\1_n}.
$
Define $\mu(\mbi{z}) := \E_{\mbi{w}}[\sigma(\mbi{w}^\top \mbi{z})]$. For any $\mbi{z},\mbi{z}'\in\R^d$,
\[
|\mu(\mbi{z})-\mu(\mbi{z}')|
\le
\E_{\mbi{w}}\big|\sigma(\mbi{w}^\top \mbi{z})-\sigma(\mbi{w}^\top \mbi{z}')\big|
\le
L\,\E_{\mbi{w}}|\mbi{w}^\top(\mbi{z}-\mbi{z}')|
=
L\sqrt{\frac{2}{\pi}}\|\mbi{z}-\mbi{z}'\|_2.
\]
Thus $\mu(\cdot)$ is $(L\sqrt{2/\pi})$-Lipschitz. Write $\mbi{z}_j=\mbi{g}_j/\sqrt{d}$ with $\mbi{g}_j\sim\cN(\mbi{0},\mbi{I}_d)$.
Then $f(\mbi{g}) := \mu(\mbi{g}/\sqrt{d})$ is Lipschitz with constant $O(L/\sqrt{d})$.
By Gaussian concentration for Lipschitz functions (e.g., Chapter 5 of \citep{vershynin2018high}),
each coordinate $\mu(\mbi{z}_j)-\E\mu(\mbi{z}_j)=\mu(\mbi{z}_j)-m$ is subgaussian with subgaussian norm
\[
\|\mu(\mbi{z}_j)-m\|_{\psi_2} \le C\frac{L}{\sqrt{d}}.
\]
Since $(\mbi{z}_j)$ are independent across $j$, the coordinates of $\mbi{\mu}(\mbi{Z})-m\1_n$ are independent subgaussian.
A standard concentration bound for the Euclidean norm of an i.i.d. subgaussian vector
(e.g., Section~3.1 of \citep{vershynin2018high})
yields: for every $t\ge0$, with probability at least $1-2e^{-ct^2}$,
\begin{equation}\label{eq:T2_bound_fixed}
T_2=\norm{\mbi{\mu}(\mbi{Z})-m\1_n}_2
\le C\frac{L}{\sqrt{d}}(\sqrt{n}+t)
=
CL\Big(\sqrt{\frac nd}+\frac{t}{\sqrt d}\Big).
\end{equation}

\smallskip
\noindent\textbf{Bound $T_1$.}
Let $\mbi{A}:=
\sigma(\mbi{W}\mbi{Z})-\1_N\mbi{\mu}(\mbi{Z})^\top.$
Then $T_1=\|\mbi{A}/\sqrt{N}\|$.
Conditional on $\mbi{Z}$, the rows $\mbi{A}_i\in\R^n$ are i.i.d.\ (over rows $\mbi{w}_i$ of $\mbi{W}$) and mean-zero.
Fix any $\mbi{x}\in\mathbb{S}^{n-1}$ and define the scalar function of $\mbi{w}\in\R^d$
$
F_{\mbi{x}}(\mbi{w}) := 
\mbi{x}^\top \sigma(\mbi{w}^\top \mbi{Z}).
$
Then $\mbi{A}_i^\top\mbi{x}=F_{\mbi{x}}(\mbi{w}_i)-\E[F_{\mbi{x}}(\mbi{w}_i)\mid \mbi{Z}]$.
Moreover, $F_{\mbi{x}}$ is Lipschitz with constant at most $L\|\mbi{Z}\|$:
\begin{align*}
|F_{\mbi{x}}(\mbi{w})-F_{\mbi{x}}(\mbi{w}')|
&\le \sum_{j=1}^n |x_j|\,|\sigma(\mbi{w}^\top\mbi{z}_j)-\sigma(\mbi{w}'^\top\mbi{z}_j)|
\le L\sum_{j=1}^n |x_j|\,|(\mbi{w}-\mbi{w}')^\top\mbi{z}_j| \\
&\le L\|\mbi{x}\|_2\cdot \| \mbi{Z}^\top(\mbi{w}-\mbi{w}')\|_2
\le L\|\mbi{Z}\|\,\|\mbi{w}-\mbi{w}'\|_2,
\end{align*}
for any $\mbi{w},\mbi{w}'$, since $\|\mbi{x}\|_2=1$.
By Gaussian concentration for Lipschitz functions,
conditional on $\mbi{Z}$ we have
\begin{equation}\label{eq:subg_inner_prod}
\P\Big(|\mbi{A}_i^\top \mbi{x}|\ge u \,\Big|\,\mbi{Z}\Big)
\le 2\exp\Big(-c\,\frac{u^2}{L^2\|\mbi{Z}\|^2}\Big)
\end{equation}
for all $u\ge0$. Now fix $\mbi{x}\in\mathbb{S}^{n-1}$ and $\mbi{y}\in\mathbb{S}^{N-1}$. Then $\langle \mbi{A}\mbi{x},\mbi{y}\rangle
=
\sum_{i=1}^N y_i\,(\mbi{A}_i^\top\mbi{x})$
is a sum of independent centered subgaussian random variables.
Using \eqref{eq:subg_inner_prod} and $\|\mbi{y}\|_2=1$, we obtain the conditional tail bound
\begin{equation}\label{eq:bilinear_subg}
\P\Big(|\langle \mbi{A}\mbi{x},\mbi{y}\rangle|\ge u \,\Big|\,\mbi{Z}\Big)
\le 2\exp\Big(-c\,\frac{u^2}{L^2\|\mbi{Z}\|^2}\Big).
\end{equation}
Next, take $1/4$-nets $\mathcal{N}_n\subset\mathbb{S}^{n-1}$ and $\mathcal{N}_N\subset\mathbb{S}^{N-1}$ with
$|\mathcal{N}_n|\le 9^n$ and $|\mathcal{N}_N|\le 9^N$. We have
\begin{equation}\label{eq:net_reduction}
\|\mbi{A}\|
=
\sup_{\mbi{x}\in\mathbb{S}^{n-1},\ \mbi{y}\in\mathbb{S}^{N-1}}
\langle \mbi{A}\mbi{x},\mbi{y}\rangle
\ \le\
2\max_{\mbi{x}\in\mathcal{N}_n,\ \mbi{y}\in\mathcal{N}_N}\langle \mbi{A}\mbi{x},\mbi{y}\rangle.
\end{equation}
Combining \eqref{eq:bilinear_subg}--\eqref{eq:net_reduction} with a union bound yields, for any $u\ge0$,
\[
\P\Big(\|\mbi{A}\|\ge 2u \,\Big|\,\mbi{Z}\Big)
\le
2\,|\mathcal{N}_n|\,|\mathcal{N}_N|\,
\exp\Big(-c\,\frac{u^2}{L^2\|\mbi{Z}\|^2}\Big)
\le
2\exp\Big((n+N)\log 9 -c\,\frac{u^2}{L^2\|\mbi{Z}\|^2}\Big).
\]
Choosing $u = C L\|\mbi{Z}\|\big(\sqrt{n}+\sqrt{N}+t\big)$ and taking $C$ large enough (absorbing $(n+N)\log 9$)
gives: for every $t\ge0$, with conditional probability at least $1-2e^{-ct^2}$,
\[
\|\mbi{A}\|
\le
CL\|\mbi{Z}\|\big(\sqrt{n}+\sqrt{N}+t\big),
\quad\text{hence}\quad
T_1=\Big\|\frac{\mbi{A}}{\sqrt{N}}\Big\|
\le
CL\|\mbi{Z}\|\Big(1+\sqrt{\frac{n}{N}}+\frac{t}{\sqrt{N}}\Big).
\]
Since the bound holds conditionally on $\mbi{Z}$ with failure probability $\le 2e^{-ct^2}$, it also holds unconditionally.

\smallskip
\noindent\textbf{Bound $\|\mbi{Z}\|$.}
By the norm bound
for Gaussian matrices,
with probability at least $1-2e^{-ct^2}$,
\[
\|\mbi{Z}\|
\le
1+\sqrt{\frac{n}{d}}+\frac{t}{\sqrt{d}}.
\]
On the intersection of the high-probability events
\eqref{eq:T2_bound_fixed}, the $T_1$ bound above, and the $\|\mbi{Z}\|$ bound, we get
\[
T_1 \le CL\Big(1+\sqrt{\frac nd}+\frac{t}{\sqrt d}\Big)\Big(1+\sqrt{\frac nN}+\frac{t}{\sqrt N}\Big),
\qquad
T_2 \le CL\Big(\sqrt{\frac nd}+\frac{t}{\sqrt d}\Big)
\le
CL\Big(1+\sqrt{\frac nd}+\frac{t}{\sqrt d}\Big)\Big(1+\sqrt{\frac nN}+\frac{t}{\sqrt N}\Big),
\]
and then \eqref{eq:basic_triangle_fixed} yields the desired bound.
A union bound over the three events gives total failure probability at most $6e^{-ct^2}$,
which completes the proof.
\end{proof}


\subsection{Properties of z-transform and T-transform}\label{subsec:z_toolkit}

Let \(\nu\) be a compactly supported probability measure on \([0,\infty)\) and \(\phi\in(0,\infty)\).
For \(s\in\R\setminus\{0\}\) such that \(1+\lambda s\neq 0\) for all \(\lambda\in\supp{\nu}\),
\begin{equation}\label{eq:z_def}
z(s)
=
-\frac{1}{s}+\phi\int \frac{\lambda}{1+\lambda s}\,\nu(d\lambda),
\qquad
z'(s)
=
\frac{1}{s^2}-\phi\int\frac{\lambda^2}{(1+\lambda s)^2}\,\nu(d\lambda),
\end{equation} are well-defined and finite. 
Then the alignment map $\varphi(s)$
and $T(s)$ in \eqref{eq:z(s)} and~\eqref{eq:T_def} respectively, are also well-defined and finite for such \(s\). \(z(s)\) is called the z-transform of the deformed MP law \(\rho_\phi^{\rm MP}\boxtimes\nu\).


\begin{lemma} 
\label{lem:z_parametrizes_complement}
Let $\nu$ be a compactly supported probability measure on $[0,\infty)$ and let $\phi\in(0,\infty)$.
Let $\mu=\rho_\phi^{\rm MP}\boxtimes \nu$ be the deformed MP law, and define its companion
probability measure
$
\tilde\mu := \phi\,\mu + (1-\phi)\,\delta_0.
$
Let $m_\mu(z)=\int (x-z)^{-1}\,\mu(dx)$ and $m_{\tilde\mu}(z)=\int (x-z)^{-1}\,\tilde\mu(dx)$ be the Stieltjes
transforms, defined on $\C\setminus\supp{\mu}$ and $\C\setminus\supp{\tilde\mu}$ respectively. Then for all
$z\in\C\setminus(\supp{\mu}\cup\{0\})$, $m_{\tilde\mu}(z) = \tilde m_\mu(z)$ where
\begin{equation}\label{eq:companion_identity_fixed}
\tilde m_\mu(z):=\frac{\phi-1}{z}+\phi\,m_\mu(z).
\end{equation}
Define the set of singularities $\cT:=\{0\}\cup\{-1/\lambda:\lambda\in\supp{\nu}\}.$
For $s\in\C\setminus \cT$, $z(s)$ and $z'(s)$ in \eqref{eq:z_def} are well defined. 
Then, we have
\begin{enumerate}[label=\textbf{(\roman*)},leftmargin=*]
\item  For any real $\lambda\in\R\setminus\supp{\tilde\mu}$, the quantity
$s:=m_{\tilde\mu}(\lambda)$ belongs to $\R\setminus \cT$
and satisfies
\begin{equation}\label{eq:z_inverse_relation}
\lambda \;=\; z(s),
\qquad\text{and}\qquad
z'(s)\;>\;0.
\end{equation}

Equivalently, since $\supp{\mu}$ and $\supp{\tilde\mu}$ may differ only at $\{0\}$, the same conclusion holds
for any real $\lambda\in\R\setminus(\supp{\mu}\cup\{0\})$ with $s=\tilde m_\mu(\lambda)$.

\item Conversely, if $s\in\R\setminus \cT$ satisfies $z'(s)>0$,
then $\lambda:=z(s)$ belongs to $\R\setminus\supp{\tilde\mu}$ and $m_{\tilde\mu}(\lambda)=s.$
Moreover, let $I$ be a maximal open interval contained in $\R\setminus \cT$ on which $z'(s)>0$.
Then $z$ is strictly increasing on $I$ and maps $I$ bijectively onto a connected component of
$\R\setminus\supp{\tilde\mu}$ (an exterior component or an internal gap).
Finally, every nonzero boundary point $\lambda_\star\neq 0$ of $\supp{\tilde\mu}$ can be written as
$\lambda_\star=z(s_\star)$ for some $s_\star\in\R\setminus \cT$ satisfying $z'(s_\star)=0$.
\end{enumerate}
\end{lemma}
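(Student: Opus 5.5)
We derive the statement from the Marchenko--Pastur-type self-consistent equation for the companion Stieltjes transform, following the classical Silverstein--Choi analysis. The starting point is the companion identity \eqref{eq:companion_identity_fixed}. It follows directly from the relation between the ESDs of $\vY^\top\vY$ and $\vY\vY^\top$ (equivalently, from $\tilde\mu=\phi\mu+(1-\phi)\delta_0$ and integrating $(x-z)^{-1}$ against $\delta_0$). The main analytic input is the Silverstein equation for $\mu=\rho_\phi^{\rm MP}\boxtimes\nu$: for $z\in\C^+$, $s=m_{\tilde\mu}(z)$ is the unique solution in $\C^+$ of $z=-\frac1s+\phi\int\frac{\lambda}{1+\lambda s}\nu(d\lambda)$, i.e.\ $z=z(s)$. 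This is the standard characterization of the free multiplicative convolution with an MP law. With the paper's parametrization of $\mu$ by $\phi$ and $\nu$, we take it as known, as in \citep{bai2010spectral}.

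\textbf{Part (i).} Fix $\lambda\in\R\setminus\supp{\tilde\mu}$. Then $m_{\tilde\mu}$ is real-analytic near $\lambda$, and it is strictly increasing there because $m_{\tilde\mu}'(\lambda)=\int(x-\lambda)^{-2}\tilde\mu(dx)>0$. Moreover $s=m_{\tilde\mu}(\lambda)$ is real.

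First we show $s\notin\cT$. Take $z=\lambda+i\eta$ with $\eta\downarrow0$, so $s(z)\to s$. Since $z(s(z))=z$ stays bounded, $s$ cannot be $0$. If $1+\lambda' s=0$ for some $\lambda'\in\supp{\nu}$, then the imaginary part of $\phi\int\frac{\lambda'}{1+\lambda' s(z)}\nu(d\lambda')$ does not vanish in the limit. Concretely, we compare $\operatorname{Im} z(s)$ with $\operatorname{Im} s$ through the identity $\operatorname{Im} z=\operatorname{Im} s\,\big(|s|^{-2}-\phi\int\frac{\lambda^2}{|1+\lambda s|^2}\nu(d\lambda)\big)$. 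As $\operatorname{Im} z\to0$ while $\operatorname{Im} s\to0$ fast, this bracket has to stay controlled, which excludes singular points. This is the delicate step.

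Once $s\notin\cT$, continuity gives $z(s)=\lambda$. Differentiating $z(m_{\tilde\mu}(\lambda))=\lambda$ gives $z'(s)\,m_{\tilde\mu}'(\lambda)=1$. Since $m_{\tilde\mu}'(\lambda)>0$, we get $z'(s)>0$. The last clause of (i) holds because $\supp{\mu}$ and $\supp{\tilde\mu}$ differ at most at $0$, together with \eqref{eq:companion_identity_fixed}.

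\textbf{Part (ii).} Let $s_0\in\R\setminus\cT$ with $z'(s_0)>0$. By the inverse function theorem, $z$ has an analytic inverse $g$ near $\lambda_0=z(s_0)$. Because $z'>0$ on the real axis, $g$ maps $\C^+$ into $\C^+$ near $\lambda_0$: the relation $\operatorname{Im} z\approx z'(s_0)\operatorname{Im} s$ has positive coefficient. By uniqueness of the $\C^+$ solution of the Silverstein equation, $g=m_{\tilde\mu}$ on a half-disc. Hence $m_{\tilde\mu}$ extends analytically and real-valued across an interval around $\lambda_0$. By Stieltjes inversion, $\tilde\mu$ puts no mass there, so $\lambda_0\notin\supp{\tilde\mu}$ and $m_{\tilde\mu}(\lambda_0)=s_0$.

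For a maximal interval $I$ on which $z'>0$, $z$ is strictly increasing, so $z(I)$ is an open interval inside the complement of the support. Conversely, by (i), each point of the component containing $z(I)$ comes from a point of $I$: since $m_{\tilde\mu}$ is continuous and increasing on the component, its image is connected and lies in $\{z'>0\}$. So $z(I)$ is the whole component.

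\textbf{Boundary points.} Let $\lambda_\star\neq0$ be a boundary point of the support. Approach it from within a gap; the values $s=m_{\tilde\mu}(\lambda)$ are monotone, so they converge to some $s_\star$. The main obstacle is to show that $s_\star$ is finite and not in $\cT$, using boundedness of $z$ and compactness of $\supp{\nu}$. Granting this, continuity gives $z(s_\star)=\lambda_\star$ and $z'(s_\star)\ge0$. Strict positivity $z'(s_\star)>0$ would, by the argument above, put $\lambda_\star$ outside the support. Hence $z'(s_\star)=0$.
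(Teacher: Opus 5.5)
You are following the route the paper relies on; its proof is just a citation of Silverstein--Choi. Several of your steps are sound: the companion identity, the converse in (ii) (inverse function theorem, uniqueness of the $\C^+$ solution, Stieltjes inversion), and the identification of $z(I)$ with a whole gap. The two steps you leave open, however, are genuine gaps, and the second cannot be closed as the statement is written.

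\textbf{Excluding $-1/s\in\supp{\nu}$ in (i).} Used at the single point $\lambda$, your identity (with $\eta/\operatorname{Im}s(z)>0$ and Fatou) only gives $\phi\int t^2(t-\zeta_0)^{-2}\nu(dt)\le 1$, where $\zeta_0:=-1/s$. This excludes an atom at $\zeta_0$, but not a point of $\supp{\nu}$ where $\nu$ is thin. For example, with $\nu(dt)=3(2-t)^2\,dt$ on $[1,2]$, $\zeta_0=2$ and $\phi<1/7$, the bound holds even though $\zeta_0\in\supp{\nu}$. The mechanism you name also fails: $\operatorname{Im}\,\phi\int t(1+ts(z))^{-1}\nu(dt)=\operatorname{Im}(z+1/s(z))\to0$ by the equation itself. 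The missing idea is to use the bound on a whole neighbourhood. Since $\lambda$ lies in an open gap, the map $\lambda'\mapsto-1/m_{\tilde\mu}(\lambda')$ is analytic near $\lambda$, real on $\R$, with derivative $m_{\tilde\mu}'/m_{\tilde\mu}^2>0$. Hence the points $\zeta=-1/s(\lambda'+i\eta)$ fill a box $\{x+iy:\,x\in J,\ 0<y<\epsilon\}$ with $J\ni\zeta_0$. On this box your identity reads $\eta=\operatorname{Im}\zeta\,(1-\phi\int t^2|t-\zeta|^{-2}\nu(dt))$. Letting $y\downarrow0$ gives $\phi\int t^2(t-x)^{-2}\nu(dt)\le1$ for every $x\in J$. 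Integrating over $x\in J$ (Tonelli, using $\int_J(t-x)^{-2}dx=\infty$ for $t\in J$) then forces $\nu(J\setminus\{0\})=0$, so $\zeta_0\notin\supp{\nu}$. Equivalently, as in Silverstein--Choi, $\phi\int t(1+tw)^{-1}\nu(dt)=(m_{\tilde\mu})^{-1}(w)+1/w$ continues analytically across $\R$ near $s$, and Stieltjes inversion for $t\,\nu(dt)$ gives no mass near $\zeta_0$. Your component argument in (ii) depends on this step, because it needs $m_{\tilde\mu}(C)\subset\R\setminus\cT$.

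\textbf{Boundary points.} The step you grant, $s_\star\notin\cT$, is false for general compactly supported $\nu$, so the last assertion of (ii) needs an extra hypothesis. Take the same $\nu$ and $\phi<1/7$. For $s\in(-1/2,0)$ and $w=-1/s>2$, one has $z'(s)=w^2(1-\phi\int t^2(w-t)^{-2}\nu(dt))\ge w^2(1-7\phi)>0$. So $(-1/2,0)$ is a maximal interval, and by the correct part of (ii), $z$ maps it onto the gap $(2+4\phi,\infty)$. The right edge $\lambda_\star=2+4\phi$ of $\supp{\tilde\mu}$ therefore has $s_\star=-1/2\in\cT$. It is also not a critical value, since $z<1$ on $(-\infty,-1)$ and $z<0$ on $(0,\infty)$.

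Your argument does go through under an edge-regularity assumption, for instance $\int t^2(1+ts)^{-2}\nu(dt)\to\infty$ as $s$ tends to a point of $\cT\setminus\{0\}$ through $\R\setminus\cT$. Then $z'\to-\infty$ there, contradicting $z'(m_{\tilde\mu}(\lambda))>0$ along your approach. Finiteness of $s_\star$ and $s_\star\neq0$ are easy: along the approach $z(s)\to0$ as $|s|\to\infty$, and $|z(s)|\to\infty$ as $s\to0$. This assumption holds for finitely atomic $\nu$ and for the shifted MP laws $\nu,\nu_{\rm q}$ used in the paper (square-root edges, at most one atom), so the applications survive. The paper's one-line citation passes over the same issue.
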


\begin{proof}
This is exactly the classical result of Silverstein--Choi (Theorems.\ 4.1--4.2 of \citet{silverstein1995analysis}).
\end{proof}

\begin{lemma} \label{lem:T_relation}
With the same notation as Lemma~\ref{lem:z_parametrizes_complement},  recall the $T$-transform defined by \eqref{eq:T_def}.
For any real $\lambda\in\R\setminus(\supp{\mu}\cup\{0\})$, let $s=\tilde m_\mu(\lambda)=m_{\tilde\mu}(\lambda)$.
Then $\lambda=z(s)$ and
\begin{equation}\label{eq:T_equals_lsms}
\lambda\,s\,m_\mu(\lambda)\;=\;T(s).
\end{equation}
\end{lemma}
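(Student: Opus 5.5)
The identity is pure algebra once Lemma~\ref{lem:z_parametrizes_complement} is available, so the plan is to combine the companion identity \eqref{eq:companion_identity_fixed} with the inversion relation \eqref{eq:z_inverse_relation}.

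First, fix a real $\lambda\in\R\setminus(\supp{\mu}\cup\{0\})$. The second formulation of Lemma~\ref{lem:z_parametrizes_complement}(i) applies and gives two facts: $s:=\tilde m_\mu(\lambda)=m_{\tilde\mu}(\lambda)$ is a real number in $\R\setminus\cT$, and $\lambda=z(s)$ with $z'(s)>0$. This settles the first assertion $\lambda=z(s)$. It also guarantees that $T(s)$ is well defined and finite, since $s\neq0$ and $s$ avoids the poles of $z$.

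Next, since $\lambda\neq 0$ and $\lambda\notin\supp{\mu}$, the quantity $m_\mu(\lambda)$ is finite. We can therefore solve \eqref{eq:companion_identity_fixed} for it:
\[
m_\mu(\lambda)=\frac{1}{\phi}\Big(s-\frac{\phi-1}{\lambda}\Big).
\]
Multiplying by $\lambda s$ yields
\[
\lambda\, s\, m_\mu(\lambda)=\frac{\lambda s^2-(\phi-1)s}{\phi}.
\]

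Finally, we substitute $\lambda=z(s)$ into the right-hand side. This gives exactly $\frac{z(s)s^2-(\phi-1)s}{\phi}=T(s)$ by the definition \eqref{eq:T_def}, which proves \eqref{eq:T_equals_lsms}.

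There is no real obstacle in this argument. The only point needing care is that $\lambda\neq0$, which is required to divide by $\lambda$ and to pass between $\supp{\mu}$ and $\supp{\tilde\mu}$; this is ensured by excluding $\{0\}$ from the allowed $\lambda$.
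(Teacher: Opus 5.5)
Your proposal is correct and follows the paper's proof exactly: solve the companion identity \eqref{eq:companion_identity_fixed} for $m_\mu(\lambda)$ and multiply by $\lambda s$. Then substitute $\lambda=z(s)$ from Lemma~\ref{lem:z_parametrizes_complement}(i) to recover $T(s)$ from its definition \eqref{eq:T_def}.
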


\begin{proof}
Since $\lambda\neq 0$ and $s=\tilde m_\mu(\lambda)=\frac{\phi-1}{\lambda}+\phi m_\mu(\lambda)$,
we can get
$m_\mu(\lambda)=\frac{1}{\phi}\Bigl(s-\frac{\phi-1}{\lambda}\Bigr).$
Therefore,
\[
\lambda s m_\mu(\lambda)
=
\lambda s\cdot \frac{1}{\phi}\Bigl(s-\frac{\phi-1}{\lambda}\Bigr)
=
\frac{\lambda s^2-(\phi-1)s}{\phi}.
\]
By Lemma~\ref{lem:z_parametrizes_complement}(i), $\lambda=z(s)$, so the right-hand side equals $T(s)$ by
definition. This proves \eqref{eq:T_equals_lsms}.
\end{proof}

\begin{lemma}[Strict monotonicity of \(T\) on each analytic branch]\label{lem:T_monotone}
On any real interval \(I\subset\R\) that does not cross a pole \(s=-1/\lambda\) with \(\lambda\in\supp{\nu}\),
the function \(T\) is \(C^1\) and strictly decreasing:
\begin{equation}\label{eq:Tprime_negative}
T'(s)<0,\qquad s\in I.
\end{equation} Hence, $T$ is invertible on this interval.
\end{lemma}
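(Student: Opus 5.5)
We compute $T'$ directly from the definition $T(s)=\frac{z(s)s^2-(\phi-1)s}{\phi}$ and reduce its sign to an inequality that holds termwise. First substitute the formula \eqref{eq:z_def} for $z(s)$ into $T$:
\[
\phi T(s) = -s+\phi s^2\int\frac{\lambda}{1+\lambda s}\,\nu(d\lambda) -(\phi-1)s
= -\phi s + \phi \int \frac{\lambda s^2}{1+\lambda s}\,\nu(d\lambda).
\]
Hence
\[
T(s) = -s+\int\frac{\lambda s^2}{1+\lambda s}\,\nu(d\lambda).
\]
On an interval $I$ that avoids every pole $-1/\lambda$ with $\lambda\in\supp{\nu}$, the integrand is smooth in $s$. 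It and its $s$-derivative are bounded uniformly in $\lambda\in\supp{\nu}$ on compact subintervals, because $\supp{\nu}$ is compact and $1+\lambda s$ stays away from zero there. So differentiation under the integral is justified and $T\in C^1(I)$.

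Next, write $\frac{\lambda s^2}{1+\lambda s}= s-\frac{s}{1+\lambda s}$. Since $\nu$ is a probability measure,
\[
T(s) = -\int \frac{s}{1+\lambda s}\,\nu(d\lambda).
\]
Differentiating gives
\[
T'(s) = -\int\frac{(1+\lambda s)-\lambda s}{(1+\lambda s)^2}\,\nu(d\lambda) = -\int\frac{\nu(d\lambda)}{(1+\lambda s)^2}.
\]
This is strictly negative because the integrand is positive and $\nu$ is a nonzero measure. The value $s=0$ causes no problem: it is not a pole of this final expression, so the formula extends continuously there and $T'(0)=-1$. Thus $T$ is strictly decreasing on $I$, hence injective, and it is invertible onto its image $T(I)$, which is an interval.

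Finally, as a consistency check, evaluate at $\nu=(1-b_\sigma^2)\delta_0+b_\sigma^2\rho^{\MP}_\psi$. This gives $T(s)=-(1-b_\sigma^2)s - b_\sigma^2\int\frac{s}{1+\lambda s}\rho^{\MP}_\psi(d\lambda)$. Inverting it should reproduce the closed form $T^{-1}$ in \eqref{eq:T_def} through the MP self-consistent equation. That identity is not needed for monotonicity, though.

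The only delicate point is the algebraic simplification itself, in particular that the $(\phi-1)s$ term cancels exactly against $-\frac{\phi-1}{\phi}\cdot$ from $-1/s\cdot s^2$. I will double-check that cancellation carefully; the rest is immediate.
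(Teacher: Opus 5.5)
Your proof is correct. It differs from the paper's only in how the sign of $T'$ is exhibited, but that difference is a genuine simplification.

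The paper keeps $T(s)=-s+s^2I_1(s)$ and differentiates to $T'(s)=-1+2sI_1(s)-s^2I_2(s)$. It rewrites this as $-(1-s\,\E[F])^2-s^2\Var(F)$ with $F(\lambda)=\lambda/(1+\lambda s)$. It then splits into cases: for non-atomic $\nu$ the variance term is positive, and for a single atom the squared-mean term is positive.

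You instead first use $\frac{\lambda s^2}{1+\lambda s}=s-\frac{s}{1+\lambda s}$ to get $T(s)=-\int\frac{s}{1+\lambda s}\,\nu(d\lambda)$, hence $T'(s)=-\int(1+\lambda s)^{-2}\,\nu(d\lambda)$. Since $1-sF(\lambda)=1/(1+\lambda s)$, this is the same quantity, namely $-\E[(1-sF)^2]$. Your form makes negativity immediate, so you need no case split and no check that at least one of the two terms is nonzero. That check is not vacuous: at $s=0$ the variance term is killed by $s^2$, and if $-1/s$ lies in a gap of $\supp{\nu}$ the squared mean $\bigl(\int(1+\lambda s)^{-1}\nu(d\lambda)\bigr)^2$ can vanish. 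The paper's form adds only a mean/variance interpretation.

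Your justification of differentiation under the integral and your treatment of $s=0$ as a removable point are both fine.

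One correction to your side remark, which does not affect the proof. In the paper, $\nu=(1-b_\sigma^2)+b_\sigma^2\rho^{\MP}_\psi$ is the law of $(1-b_\sigma^2)+b_\sigma^2X$ with $X\sim\rho^{\MP}_\psi$, an affine push-forward. It is not the mixture $(1-b_\sigma^2)\delta_0+b_\sigma^2\rho^{\MP}_\psi$. The correct specialization is $T(s)=-s\,\E\bigl[(1+s(1-b_\sigma^2)+sb_\sigma^2X)^{-1}\bigr]$, as in Lemma~\ref{lem:T_inverse_shiftMP}.
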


\begin{proof}
For \(s\in I\), we define
\[
I_1(s):=\int\frac{\lambda}{1+\lambda s}\,\nu(d\lambda),
\qquad
I_2(s):=\int\frac{\lambda^2}{(1+\lambda s)^2}\,\nu(d\lambda).
\]
Then \(I_1\in C^1(I)\) and \(I_1'(s)=-I_2(s)\).
Using \(T(s)=-s+s^2 I_1(s)\) (equivalent to \eqref{eq:T_def} and \eqref{eq:z_def}) we compute
\begin{equation}\label{eq:Tprime_raw}
T'(s)=-1+2sI_1(s)-s^2I_2(s).
\end{equation}
Let \(F(\lambda):=\frac{\lambda}{1+\lambda s}\). Then \(I_1(s)=\E_\nu[F]\) and \(I_2(s)=\E_\nu[F^2]\).
Hence $T'(s)=-1+2s\E[F]-s^2\E[F^2]
=-(1-s\E[F])^2 - s^2\Var(F).$
If \(\nu\) is not a single atom, then \(\Var(F)>0\) for all \(s\in I\), so \(T'(s)<0\). In the degenerate atomic case, \(T'< 0\) still holds since $(1-s\E[F])^2>0$.
This proves \eqref{eq:Tprime_negative}.
\end{proof}

\begin{remark}[Relation to the classical \(D\)-transform in \citet{benaych2012singular}]
If one defines $D(\lambda):=\lambda m_\mu(\lambda)\tilde m_\mu(\lambda)$,
then by Lemma~\ref{lem:T_relation},
\(D(\lambda)=T(s)\) whenever \(\lambda=z(s)\) and \(s=\tilde m_\mu(\lambda)\).
Thus all additive outlier equations \(\beta D(\lambda)=1\) are exactly \(\beta T(s)=1\) in \(z\)-coordinates.
This $D$-transform and \(\beta D(\lambda)=1\) are used by \citet{benaych2012singular} to characterize the outlier singular values in additive spiked random rectangualr matrices.
\end{remark}

\begin{lemma}[Closed-form inverse of $T$ for a shifted MP bulk]\label{lem:T_inverse_shiftMP}
Fix $\psi\in(0,\infty)$ and let $b_\sigma^2\in[0,1]$. Let $m_{{\rm MP},\psi}(\cdot)$ denote the Stieltjes transform of $\rho^{\rm MP}_\psi$. Define the shifted MP law $\nu:=(1-b_\sigma^2)+b_\sigma^2\,\rho^{\rm MP}_\psi,$
and for any $s\in\C\setminus\{0\}$ such that $1+\lambda s\neq 0$ for all $\lambda\in\supp{\nu}$, $T$ transform defined by \eqref{eq:T_def} can be written as
\begin{equation}\label{eq:T_def_shiftMP}
T(s)\;=\;-s+s^2\int \frac{\lambda}{1+\lambda s}\,\nu(d\lambda).
\end{equation}

\begin{enumerate}
\item[\textbf{(i)}] If $b_\sigma=0$, then $T(s)=-s/(1+s)$. Otherwise, let $\xi(s)\;:=\;-\frac{1+(1-b_\sigma^2)s}{b_\sigma^2\,s}$ and we have
\begin{equation}\label{eq:T_as_MP}
T(s)\;=\;-\frac{1}{b_\sigma^2}\,m_{{\rm MP},\psi}\!\big(\xi(s)\big).
\end{equation}
\item[\textbf{(ii)}]
Recall the definition of $T^{-1}(t)$ in \eqref{eq:T_def}
for $t\in\C$ such that the denominator in \eqref{eq:T_def} is nonzero. 
Then $T\!\big(T^{-1}(t)\big)=t$. In particular, on any real monotone branch of $T$
(e.g., $s<0$), \eqref{eq:T_def} is the functional inverse of $T$.
\end{enumerate}
\end{lemma}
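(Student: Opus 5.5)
The plan is to rewrite $T$ as a Stieltjes transform evaluated at a Möbius-transformed point. Part (i) then becomes a direct computation, and part (ii) reduces to the quadratic self-consistent equation of the MP law.

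\textbf{Part (i).} Starting from \eqref{eq:T_def} and \eqref{eq:z(s)}, $T(s)=\frac{z(s)s^2-(\phi-1)s}{\phi}$ with $z(s)=-\frac1s+\phi I_1(s)$ and $I_1(s)=\int\frac{\lambda}{1+\lambda s}\nu(d\lambda)$. Expanding gives $T(s)=-s+s^2I_1(s)$, which is \eqref{eq:T_def_shiftMP}; note that $\phi$ cancels. Next use the elementary identity $\frac{\lambda}{1+\lambda s}=\frac1s\bigl(1-\frac{1}{1+\lambda s}\bigr)$. Since $\nu$ is a probability measure, this gives
\[
T(s)=-s\int\frac{\nu(d\lambda)}{1+\lambda s}.
\]
If $b_\sigma=0$, then $\nu=\delta_1$ and $T(s)=-s/(1+s)$ immediately. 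If $b_\sigma\neq0$, then $\nu$ is the pushforward of $\rho^{\rm MP}_\psi$ under $x\mapsto(1-b_\sigma^2)+b_\sigma^2x$. In that case $1+\lambda s=b_\sigma^2 s\,(x-\xi(s))$ with $\xi(s)$ as in the statement. Therefore
\[
\int\frac{\nu(d\lambda)}{1+\lambda s}=\frac{1}{b_\sigma^2 s}\,m_{{\rm MP},\psi}(\xi(s)),
\]
and this yields \eqref{eq:T_as_MP}. The hypothesis that $1+\lambda s\neq0$ on $\supp{\nu}$ is exactly the statement that $\xi(s)\notin\supp{\rho^{\rm MP}_\psi}$, so $m_{{\rm MP},\psi}(\xi(s))$ is well defined.

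\textbf{Part (ii), algebraic identity.} Recall that $m=m_{{\rm MP},\psi}(x)$ satisfies the quadratic self-consistent equation
\[
\psi x m^2+(x-1+\psi)m+1=0
\]
for $\vS=\vZ^\top\vZ$ with entries of variance $1/d$ and ratio $\psi=n/d$. I would double-check this normalization against the density in Section~\ref{sec:model_data}. Substitute $m=-T/b_\sigma^2$ and $x=\xi(s)$. Because $\xi(s)$ is affine in $1/s$, the resulting relation between $t=T(s)$ and $s$ is \emph{linear} in $1/s$, so it can be solved for $s$ as a rational function of $t$. Clearing denominators should reproduce exactly the expression for $T^{-1}(t)$ in \eqref{eq:T_def}: the numerator is $-t(1-\psi b_\sigma^2 t)$ and the denominator is quadratic in $t$. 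This step is a routine but error-prone computation, and I would verify it symbolically.

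\textbf{Part (ii), branch selection (the main obstacle).} The algebra above shows only that $s=T^{-1}(t)$ makes $-t/b_\sigma^2$ \emph{a} root of the quadratic at $x=\xi(s)$. It still has to be \emph{the} root equal to $m_{{\rm MP},\psi}(\xi(s))$, which is what $T(T^{-1}(t))=t$ requires. On a real monotone branch such as $s<0$, I would argue as follows.
\begin{itemize}
\item By Lemma~\ref{lem:T_monotone}, $T$ is continuous and strictly monotone on this branch.
\item As $s\to0^-$, $\xi(s)\to+\infty$, so $m_{{\rm MP},\psi}(\xi)\to0$ and $T(s)\sim -s\to0$. This matches the root of the quadratic that vanishes at infinity, namely the branch $T^{-1}(t)\sim -t$ as $t\to0$.
\item The two roots of the quadratic coincide only at edges of the support, which are excluded on the branch. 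Hence by continuity the selected root cannot switch along the interval.
\end{itemize}
It follows that $T\circ T^{-1}=\mathrm{id}$ on the image of the branch. For complex $t$ with nonzero denominator, the identity is then the corresponding statement for the analytic continuation, since both sides are analytic and agree on a real interval.
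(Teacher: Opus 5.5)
Your part (i) and the algebraic inversion in (ii) are the paper's argument. You are also right that the algebra only shows $-t/b_\sigma^2$ is \emph{a} root of the MP quadratic at $\xi(s)$; the paper's proof passes over this with ``by construction''.

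\textbf{The last step fails.} Your final move propagates $T(T^{-1}(t))=t$ by analytic continuation to all complex $t$ with nonzero denominator. This cannot work, because the identity is false there. For $b_\sigma\neq0$, $T^{-1}$ is a rational map of degree two: its numerator $-t(1-\psi b_\sigma^2 t)$ and its denominator share no root. It is therefore generically two-to-one, so $T\circ T^{-1}$ cannot be the identity everywhere.
\begin{itemize}
\item Near $t=1/(\psi b_\sigma^2)$: here $T^{-1}(t)=0$ and the denominator equals $1/\psi\neq0$. So for nearby $t$, $s=T^{-1}(t)$ is a small nonzero point of the domain, and $T(s)=-s+O(s^2)$ is near $0$, not near $t$.
\item A numerical instance: take $\psi=b_\sigma^2=1$, so $T^{-1}(t)=t^2-t$ and $T(s)=-m_{{\rm MP},1}(-1/s)$. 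Then $T^{-1}(0.9)=-0.09$ lies in the domain, even in the branch $(-1/4,0)$, yet $T(-0.09)=-m_{{\rm MP},1}(100/9)=0.1$.
\item In general, off the correct sheet one picks up the other root: $T(T^{-1}(t))=\frac{1-\psi b_\sigma^2 t}{\psi b_\sigma^2(1+(1-\psi)b_\sigma^2 t)}$.
\end{itemize}
The identity theorem does not rescue this. The open set where $T\circ T^{-1}$ is analytic is disconnected: it contains both your real interval, where the composition is the identity, and a neighbourhood of $1/(\psi b_\sigma^2)$, where it is not.

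\textbf{What is true needs no root selection.} For every $s$ in the domain, $m=m_{{\rm MP},\psi}(\xi(s))$ satisfies $\psi\xi m^2+(\xi+\psi-1)m+1=0$. This equation is \emph{linear} in $\xi$, and $m(1+\psi m)\neq0$, since otherwise it reads $1=0$ or $1/\psi=0$. So $\xi(s)$, and then $s=-1/\bigl((1-b_\sigma^2)+b_\sigma^2\xi(s)\bigr)$, are recovered uniquely from $t=T(s)$. That is, $T^{-1}(T(s))=s$, and hence $T(T^{-1}(t))=t$ for every $t$ in the image of $T$.

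Consequences of this:
\begin{itemize}
\item $T^{-1}$ restricted to $T(I)$ inverts $T|_I$ on every real branch $I$. This includes the components of $\{s<0\}\setminus(-1/\mathrm{supp}\,\nu)$ to the left of the cut, where there is no $s\to0^-$ limit to anchor your continuity argument.
\item Your continuity argument in $t$ tacitly needs $T^{-1}(T(I))\subset I$, which is precisely this identity.
\item So (ii) should be stated and proved only on the image of $T$.
\item The example also shows that $T^{-1}(t)\in I$ does not imply $t\in T(I)$. This matters whenever points such as $T^{-1}(1/\tau)$ are later defined through the closed-form formula.
\end{itemize}
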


\begin{proof}
Write $\alpha:=1-b_\sigma^2$ and $\beta:=b_\sigma^2$ for short, so that $\nu=\alpha+\beta\rho^{\rm MP}_\psi$.
Let $X\sim\rho^{\rm MP}_\psi$ and set $\lambda=\alpha+\beta X$. Then
$
\int \frac{\lambda}{1+\lambda s}\,\nu(d\lambda)
\;=\;
\mathbb E\Big[\frac{\alpha+\beta X}{1+s(\alpha+\beta X)}\Big].
$
Now note the following exact algebraic identity:
\begin{equation}\label{eq:algebra_key}
\frac{\alpha+\beta X}{1+s(\alpha+\beta X)}
=
\frac{1}{s}\Big(1-\frac{1}{1+s(\alpha+\beta X)}\Big).
\end{equation}
Taking expectations in \eqref{eq:algebra_key} gives
\begin{equation}\label{eq:I1_s}
\int \frac{\lambda}{1+\lambda s}\,\nu(d\lambda)
=
\frac{1}{s}\Big(1-\mathbb E\Big[\frac{1}{1+s(\alpha+\beta X)}\Big]\Big).
\end{equation}
Insert \eqref{eq:I1_s} into the definition \eqref{eq:T_def_shiftMP} to get 
\begin{equation}\label{eq:T_as_expectation}
T(s)
=
-s\;\mathbb E\Big[\frac{1}{1+s\alpha+s\beta X}\Big].
\end{equation}
Now rewrite the expectation into Stieltjes transform. 
Therefore
\[
\mathbb E\Big[\frac{1}{1+s\alpha+s\beta X}\Big]
=
\frac{1}{s\beta}\,
\mathbb E\Big[\frac{1}{X-\xi(s)}\Big],
\qquad
\xi(s)=-\frac{1+s\alpha}{s\beta}.
\]
By definition of the MP Stieltjes transform, $\mathbb E\big[\frac{1}{X-\xi}\big]=m_{{\rm MP},\psi}(\xi)$, so $\mathbb E\Big[\frac{1}{1+s\alpha+s\beta X}\Big]
=
\frac{1}{s\beta}\,m_{{\rm MP},\psi}\!\big(\xi(s)\big).$
Plug this into \eqref{eq:T_as_expectation} to obtain \eqref{eq:T_as_MP}.

\medskip
Lemma \ref{lem:T_monotone} shows $T$ is invertible on certain intervals. Let $t=T(s)$. By \eqref{eq:T_as_MP}, this is equivalent to $m_{{\rm MP},\psi}\!\big(\xi(s)\big)=-\beta t.$ Set $m:=-\beta t.$
For MP, $m=m_{{\rm MP},\psi}(\xi)$ satisfies the exact quadratic identity
\begin{equation}\label{eq:MP_quadratic_again}
\psi\,\xi\,m^2+(\xi+\psi-1)m+1=0.
\end{equation}
Solve \eqref{eq:MP_quadratic_again} for $\xi$ in terms of $m$ to obtain
$
\xi
=
-\frac{1+(\psi-1)m}{m(1+\psi m)}.
$
Thus, with $m=-\beta t$, we have
\begin{equation}\label{eq:xi_of_t_raw}
\xi
=
-\frac{1+(\psi-1)(-\beta t)}{(-\beta t)\bigl(1+\psi(-\beta t)\bigr)}
=
\frac{1-(\psi-1)\beta t}{\beta t(1-\psi\beta t)}.
\end{equation}
By definition of $\xi(s)$, $\xi(s)=-\frac{1+\alpha s}{\beta s}.$
Hence, we have $s=-\frac{1}{\alpha+\beta\xi}.$
Substituting $\xi$ from \eqref{eq:xi_of_t_raw} yields the claimed closed form
\[
s
=
-\frac{t(1-\psi\beta t)}{1+(1-\psi\beta)t-\psi\beta(1-\beta)t^2}.
\]
Recalling $\beta=b_\sigma^2$ proves \eqref{eq:T_def}.
Let $s=T^{-1}(t)$ given by \eqref{eq:T_def}. The above computation  is reversible:
from $t$ we obtain $\xi$ via \eqref{eq:xi_of_t_raw}, then $s$ via $s=-1/(\alpha+\beta\xi)$, and finally
$m_{{\rm MP},\psi}(\xi)=-\beta t$ by construction. Plugging into \eqref{eq:T_as_MP} gives
$T(s)=-(1/\beta)m_{{\rm MP},\psi}(\xi)=t$. This shows $T(T^{-1}(t))=t$.
\end{proof}

\subsection{Centered and Normalized GELU Function}\label{app:gelu}
Let $\xi\sim\cN(0,1)$ and denote the standard normal PDF and CDF by
\[
\varphi(x):=\frac{1}{\sqrt{2\pi}}e^{-x^2/2},
\qquad
\Phi(x):=\int_{-\infty}^x \varphi(t)\,dt.
\]
The  GELU activation function \citep{hendrycks2016gaussian} is defined by $g(x)=x\Phi(x)$.
Since $\E[\sigma(\xi)]=0$ and $\E[\sigma(\xi)^2]=1$, we construct a centered and normalized version of GELU $\sigma(x):=\frac{g(x)-\mu_g}{v_g}$ by $\mu_g:=\E[g(\xi)]$ and $v_g^2:=\Var(g(\xi))$.

\paragraph{Assumption \ref{assump:sigma} is satisfied for $\sigma(\xi)$.}
A direct calculation gives
\[
g'(x)=\Phi(x)+x\varphi(x),\qquad
g''(x)=(2-x^2)\varphi(x),\qquad
g^{(3)}(x)=(x^3-4x)\varphi(x).
\]
Since $0\le \Phi(x)\le 1$ and $|x|^k\varphi(x)$ is bounded for any fixed $k$,
we have $\sup_x\{|g'(x)|,|g''(x)|,|g^{(3)}(x)|\}<\infty$, hence the same holds for $\sigma$
(with an extra factor $1/v_g$). Therefore $\sigma$ satisfies Assumption~\ref{assump:sigma}.

\paragraph{Compute the centering constant $\mu_g=\E[g(\xi)]$.}
Using $\varphi'(x)=-x\varphi(x)$, we write $x\varphi(x)\,dx=-d\varphi(x)$ and integrate by parts to get $\mu_g=\int_{\R}\varphi(x)^2\,dx.$
Since $\varphi(x)^2=\frac{1}{2\pi}e^{-x^2}$, we obtain
\begin{equation}\label{eq:mu_gelu}
\mu_g=\int_{\R}\frac{1}{2\pi}e^{-x^2}\,dx=\frac{1}{2\pi}\sqrt{\pi}=\frac{1}{2\sqrt{\pi}}.
\end{equation}

\paragraph{Compute the variance $v_g^2=\Var(g(\xi))$.}
Let $
m_{2,g}:=\E[g(\xi)^2]=\int_{\R}x^2\Phi(x)^2\varphi(x)\,dx.$
Define the antiderivative $V(x):=\int_{-\infty}^x t^2\varphi(t)\,dt.$
A standard identity gives $V(x)=-x\varphi(x)+\Phi(x)$.
Integrating by parts yields
\begin{align*}
m_{2,g}
&=\Big[\Phi(x)^2V(x)\Big]_{-\infty}^{\infty}-\int_{\R}V(x)\,d(\Phi(x)^2)\\
&=1-2\int_{\R}V(x)\Phi(x)\varphi(x)\,dx
=1-2\int_{\R}(-x\varphi(x)+\Phi(x))\Phi(x)\varphi(x)\,dx\\
&=1+2\int_{\R}x\Phi(x)\varphi(x)^2\,dx-2\int_{\R}\Phi(x)^2\varphi(x)\,dx.
\end{align*}
The last term equals $\E[\Phi(\xi)^2]$. Since $U:=\Phi(\xi)\sim \mathrm{Unif}(0,1)$, we have
\begin{equation}\label{eq:Phi_sq}
\int_{\R}\Phi(x)^2\varphi(x)\,dx=\E[U^2]=\frac{1}{3}.
\end{equation}
With integration by parts, we can also get
\begin{equation}\label{eq:mixed_term}
\int_{\R}x\Phi(x)\varphi(x)^2\,dx=\frac{1}{2\pi}\int_{\R}x\Phi(x)e^{-x^2}\,dx=\frac{1}{4\pi\sqrt{3}}.
\end{equation}
Plugging \eqref{eq:Phi_sq}--\eqref{eq:mixed_term} into the expression for $m_{2,g}$ gives $m_{2,g}=\frac{1}{3}+\frac{1}{2\pi\sqrt{3}}.$
Finally, by \eqref{eq:mu_gelu}, $v_g^2=\Var(g(\xi))=m_{2,g}-\mu_g^2
=\frac{1}{3}+\frac{1}{2\pi\sqrt{3}}-\frac{1}{4\pi}.$

\paragraph{Compute $b_\sigma$ and $c_\sigma$.}
Since
\[
\sigma'(x)=\frac{g'(x)}{v_g}=\frac{\Phi(x)+x\varphi(x)}{v_g},
\qquad
\sigma''(x)=\frac{g''(x)}{v_g}=\frac{(2-x^2)\varphi(x)}{v_g}.
\]
Hence, using the standard Gaussian integrals,
\begin{align}
b_\sigma
&=\E[\sigma'(\xi)]
=\frac{1}{v_g}\E[\Phi(\xi)+\xi\varphi(\xi)]
=\frac{1}{v_g}\Big(\E[\Phi(\xi)]+\E[\xi\varphi(\xi)]\Big)
=\frac{1}{v_g}\cdot\frac{1}{2}
=\frac{1}{2v_g},
\label{eq:b_sigma_gelu}
\\[2mm]
c_\sigma
&=\E[\sigma''(\xi)]
=\frac{1}{v_g}\E[(2-\xi^2)\varphi(\xi)]
=\frac{1}{v_g}\int_{\R}(2-x^2)\varphi(x)^2\,dx=\frac{3}{4\sqrt{\pi}\,v_g}.
\label{eq:c_sigma_gelu_start}
\end{align}
Numerically, $b_\sigma \approx 0.85$ and $c_\sigma \approx 0.72.$
Thus GELU does not have uninformative spikes in Theorem \ref{thm:snr_finite_eig}(ii) since  \(\tau\le \tau_{\rm crit}\) and $\sqrt{\psi}\bigl( \frac{c_\sigma^2}{2} - b_\sigma^2 \bigr) \le b_\sigma^2.$

\section{Quadratic Equivalence for Proportional Limit}
\label{app:quad_de}
Let us recall the null model $\vY_0$ in \eqref{eq:null_model}
and QE model $\vY_{\mathrm{QE}}$ of the random feature matrix on XOR dataset defined in \eqref{eq:QE_formal}. Below we control the difference between $\vY_{\mathrm{QE}}$ and $\vY$.

\bigskip
\begin{proof-of-prop}[\ref{prop:approx}]
In this proof, we aim to prove a more precise statement:
\begin{equation}\label{eq:QDE_rate}
   \|\vY-\vY_{\mathrm{QE}}\|\ \prec\ n^{-1/4}. 
\end{equation}
Recall that $\vY=\frac{1}{\sqrt{N}}\sigma(\vW\vZ+\vW\vM)$ where $\vZ$ and $\vM$ are defined by \eqref{eq:def_M}. Then, we can take Taylor approximations for each entry of $\vY$ around $\vY_0$ to obtain
    \begin{align}\label{eq:decomp_Y}
        \vY=\vY_0+\frac{1}{\sqrt{N}}\sigma'(\vW\vZ)\odot(\vW\vM)+\frac{1}{2\sqrt{N}}\sigma''(\vW\vZ)\odot(\vW\vM)^{\odot 2} +\frac{1}{3!\sqrt{N}}\sigma'''(\vTheta)\odot(\vW\vM)^{\odot 3}
    \end{align}for some random matrix $\vTheta\in\R^{N\times n}$ such that $\Theta_{i,j}$ is between $[\vW\vZ]_{i,j}$ and $[\vW\vX]_{i,j}$ for all $i\in[N]$ and $j\in[n]$. Notice that 
    \begin{equation}
        \vW\vM = \theta_\snr(\vg_1\vv_1^\top+\vg_2\vv_2^\top),
    \end{equation} and $\vg_1,\vg_2\iid\cN(0,\ident_N)$. By the definitions of $\vv_1$ and $\vv_2\in\R^n$, we have
    \begin{equation}
        (\vW\vM)^{\odot 2} = \theta^2_\snr (\vg_1^{\odot 2}\vv_1^{\odot 2\top}+\vg_2^{\odot 2}\vv_2^{\odot 2\top} + (\vg_1\odot \vg_2)(\vv_1\odot\vv_2)^\top) = \theta^2_\snr (\vg_1^{\odot 2}\vv_1^{\odot 2\top}+\vg_2^{\odot 2}\vv_2^{\odot 2\top}).
    \end{equation}
Thus, from \eqref{eq:decomp_Y}, we have that 
    \begin{align}
        \vY=~&\vY_0 +\frac{\E[\sigma'(\frac{1}{\sqrt{d}}\vw^\top\vz)]\theta_\snr}{\sqrt{N}} (\vg_1\vv_1^\top+\vg_2\vv_2^\top) +\frac{\E[\sigma''(\frac{1}{\sqrt{d}}\vw^\top\vz)]\theta_\snr^2}{2\sqrt{N}} (\vg_1^{\odot 2}\vv_1^{\odot 2\top}+\vg_2^{\odot 2}\vv_2^{\odot 2\top})\\ 
        ~&+ \frac{1}{3!\sqrt{N}}\sigma'''(\vTheta)\odot(\vW\vM)^{\odot 3}
        +\frac{1}{\sqrt{N}}(\sigma'(\vW\vZ)-\E[\sigma'(\vW\vZ)])\odot(\vW\vM)\\
        ~&+\frac{1}{2\sqrt{N}}(\sigma''(\vW\vZ)-\E[\sigma''(\vW\vZ)])\odot(\vW\vM)^{\odot 2},\label{eq:decomp_Y1}
    \end{align}where $\vz,\vw\iid \cN(0,\ident_d)$.
    Next, we address the terms on the right-hand side of \eqref{eq:decomp_Y1} individually.

    \begin{enumerate}[label=(\alph*)]
        \item Since $\|\sigma'''\|_\infty = O(1)$, we have that 
        \begin{align}
            \|\frac{1}{3!\sqrt{N}}\sigma'''(\vTheta)\odot(\vW\vM)^{\odot 3}\|\lesssim ~&\theta^3\|\frac{1}{\sqrt{N}}\sigma'''(\vTheta)\|\sum_{i,j,k=1}^2 \|\vg_i\|_{\infty}\|\vg_j\|_{\infty}\|\vg_k\|_{\infty}\|\vv_i\|_{\infty}\|\vv_j\|_{\infty}\|\vv_k\|_{\infty}\\
            \lesssim ~&n^{-3/4}\|\frac{1}{\sqrt{N}}\sigma'''(\vTheta)\|_F\|\vg_1\|_{\infty}^3
            \prec \frac{1}{n^{1/4}},
        \end{align}
        where we apply Lemmas~\ref{lem:infinity_norm} and \ref{lem:hardamart_identity}.
        \item Next, we have that
        \begin{align}
            &\|\frac{1}{\sqrt{N}}(\sigma'(\vW\vZ)-\E[\sigma'(\vW\vZ)])\odot(\vW\vM)\|\\
            \le &~\|\frac{1}{\sqrt{N}}(\sigma'(\vW\vZ)-\E[\sigma'(\vW\vZ)])\|\cdot \theta_\snr\cdot (\|\vg_1\|_{\infty}\|\vv_1\|_{\infty}+\|\vg_2\|_{\infty}\|\vv_2\|_{\infty})\\
            \lesssim &~\frac{\sqrt{\log n}}{n^{1/4}} \|\frac{1}{\sqrt{N}}(\sigma'(\vW\vZ)-\E[\sigma'(\vW\vZ)])\|\prec \frac{1}{n^{1/4}}
        \end{align}
        where we utilize Lemma~\ref{lem:concentration_operator} in the last line since $\sigma'$ is Lipschitz.
        \item Analogously, we have that 
        \[\frac{1}{2\sqrt{N}}\|(\sigma''(\vW\vZ)-\E[\sigma''(\vW\vZ)])\odot(\vW\vM)^{\odot 2}\|\prec \frac{1}{\sqrt{n}}.\]
        \item  Lastly, we apply Lemma~\ref{lem:concentration_coeff} to obtain that 
            \begin{align}
        \vY=~&\vY_0 +\frac{\theta_\snr b_\sigma}{\sqrt{N}} (\vg_1\vv_1^\top+\vg_2\vv_2^\top) +\frac{\theta_\snr^2 c_\sigma}{2\sqrt{N}} (\vg_1^{\odot 2}\vv_1^{\odot 2\top}+\vg_2^{\odot 2}\vv_2^{\odot 2\top})+O_{\prec}(n^{-1/4}).
    \end{align}
    \end{enumerate}
Combining all these together,  we proved \eqref{eq:QDE_rate}.  Then, if $\|\vY\|,\|\vY_{\rm QE}\|\lesssim 1$, we can conclude that 
\[\|\vK-\vK_{\rm QE}\|\prec n^{-1/4}\]
and they have asymptotically the same outlier eigenvalues and eigenvector alignments.
\end{proof-of-prop}

\section{Proof for Finite-SNR and Proportional Limit Regime}\label{app:proof_finite_snr}

In this section, we present the proof for Theorem \ref{thm:snr_finite_eig} when we consider proportional limit and finite SNR regime. 
In this case, the rank-two XOR signal satisfies $\vM = \theta_{\snr}\,(\vu_1\vv_1^\top+\vu_2\vv_2^\top),$ where $\theta_{\snr}:=r\sqrt{\frac{n}{2d}}\to r\sqrt{\frac{\psi}{2}}.$
We know that both \(\vu_1,\vu_2\in\R^d\) and \(\vv_1,\vv_2\in\R^n\) are  deterministic orthonormal vectors
satisfying
\begin{equation}\label{eq:v_orth_u}
\vv_1^\top \vv_2=0,\qquad \|\vv_1\|=\|\vv_2\|=1,
\qquad \vu^\top\vv_1=\vu^\top\vv_2=0,
\qquad \vu:=\frac{\mbi{1}_n}{\sqrt n}.
\end{equation}
Recall that \(\vZ\in\R^{d\times n}\) has i.i.d.\ \(\cN(0,1/d)\) entries, and \(\vW\in\R^{N\times d}\) has i.i.d.\ \(\cN(0,1)\) entries,
independent of \(\vZ\). Recall the null feature matrix $\vY_0$ in \eqref{eq:null_model}
and its columnwise conditional mean $\vm := \E[\sigma(\vw^\top\vZ)\mid \vZ]\in\R^n,$
whose entries are 
\begin{equation}\label{eq:mean_vector_entry}
m_j = F(\|\vz_j\|^2),\quad
F(s):=\E_{\xi\sim\cN(0,1)}[\sigma(\sqrt{s}\,\xi)],\quad j\in[n].
\end{equation}
Here \(\vz_j\) denotes the \(j\)-th column of \(\vZ\), and \(\vw\sim\cN(0,\vI_d)\) independent of \(\vZ\). Notice that conditioning on $\vZ$, $\vY_0$ has independent and identically distributed rows. However, when $c_\sigma\neq 0$, the mean of rows in $\vY_0$, $\vm$, is not vanishing which will potentially give us some uninformative spike, see \citep{benigni2022largest,wang2024nonlinearspikedcovariancematrices}. Hence, we first need to subtract the mean vector $\vm$ to make all rows centered and to apply theorems in \citet{wang2024nonlinearspikedcovariancematrices}. Then we consider the effect of the mean vector $\vm$.
Define the centered null feature matrix by
\begin{equation}\label{eq:G_def}
\vG := \vY_0 - \frac{1}{\sqrt N}\mbi{1}_N\vm^\top
\end{equation}
and the null model of CK matrix by $\vK_0 := \vG^\top\vG.$


\subsection{Analysis of the Population Covariance Matrix}\label{subsec:Sigma_population_spike}

Conditional on \(\vZ\), the rows of \(\vG\) are i.i.d.\ centered vectors in \(\R^n\).
Let \(\bar\vg^\top:=\sigma(\vw^\top\vZ)-\vm^\top\) for \(\vw\sim\cN(0,\vI_d)\) independent of \(\vZ\), and define the conditional
population covariance
\begin{equation}\label{eq:Sigma_def}
\vSigma := \E[\bar\vg\,\bar\vg^\top\mid \vZ]\in\R^{n\times n}.
\end{equation}

\begin{lemma}[Approximation of \(\vSigma\)]\label{lem:Sigma_trunc}
Let \(\vS:=\vZ^\top\vZ\). Under Assumptions~\ref{assump:sigma} and \ref{assump:asymptotics}, we have
\begin{equation}\label{eq:Sigma_trunc}
\vSigma
=
(1-b_\sigma^2)\vI_n
+b_\sigma^2\,\vS
+\frac{c_\sigma^2}{2}\big(\vS^{\odot 2}-\vI_n\big)
+\vR_\Sigma,
\end{equation} such that $\|\vR_\Sigma\|=\widetilde O_{\P}(n^{-1/2}).$
\end{lemma}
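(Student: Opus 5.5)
This is the standard Hermite expansion of the conditional kernel, following Fan--Wang's treatment of CK under $(\tau_n,B)$-orthonormal inputs and \citet{wang2024nonlinearspikedcovariancematrices}. Conditional on $\vZ$, the entries of $\vSigma$ are
\[
\Sigma_{ij}=\E_{\vw}[\sigma(\vw^\top\vz_i)\sigma(\vw^\top\vz_j)]-m_im_j .
\]
Here $(\vw^\top\vz_i,\vw^\top\vz_j)$ is a centered Gaussian pair with variances $\|\vz_i\|^2,\|\vz_j\|^2$ and covariance $S_{ij}=\vz_i^\top\vz_j$. By Lemma~\ref{lem:xor_orthonormal} with $r=0$, the matrix $\vZ$ is $(\tau_n,B)$-orthonormal with $\tau_n\asymp\sqrt{\log n/n}$. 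Write $\sigma_{a}(x):=\sigma(\sqrt a\,x)$ and expand in Hermite polynomials. For $i\neq j$, with normalized correlation $\rho_{ij}=S_{ij}/(\|\vz_i\|\|\vz_j\|)$,
\[
\Sigma_{ij}=\sum_{k\ge1}\frac{\zeta_k(\|\vz_i\|^2)\zeta_k(\|\vz_j\|^2)}{k!}\rho_{ij}^k ,
\]
where $\zeta_k(a)=\E[\sigma_a(\xi)h_k(\xi)]$. The $k=0$ term is exactly removed by subtracting $m_im_j$. The diagonal entries are $\Var(\sigma(\|\vz_i\|\xi))=1+O(\tau_n)$.

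I then handle the diagonal and off-diagonal parts separately.

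\textbf{Diagonal.} The diagonal of the claimed approximation is
\[
(1-b_\sigma^2)+b_\sigma^2\|\vz_i\|^2+\tfrac{c_\sigma^2}{2}(\|\vz_i\|^4-1)=1+O(\tau_n).
\]
So the diagonal error matrix has norm $O(\tau_n)=\widetilde O(n^{-1/2})$.

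\textbf{Off-diagonal.} By Lipschitz dependence on $a$ (using the bounded derivatives in Assumption~\ref{assump:sigma}), $\zeta_1(a)=b_\sigma+O(|a-1|)$ and $\zeta_2(a)=c_\sigma+O(|a-1|)$. Writing $\vD=\diag(\|\vz_i\|)$, the $k=1$ term is the off-diagonal part of $\vD_1\vD^{-1}\vS\vD^{-1}\vD_1$ with $\vD_1=\diag(\zeta_1(\|\vz_i\|^2))$. I replace $\vD_1\vD^{-1}$ by $b_\sigma\vI$. The cost in operator norm is $\|\vS\|\cdot O(\max_i|\|\vz_i\|-1|)=\widetilde O(n^{-1/2})$, and putting the diagonal of $b_\sigma^2\vS$ back costs $O(\tau_n)$. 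This yields $b_\sigma^2(\vS-\vI)$ on the off-diagonal, consistent with the $(1-b_\sigma^2)\vI+b_\sigma^2\vS$ form once the diagonal is included.

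The $k=2$ term is similarly $\tfrac12$ times the off-diagonal part of $\vD_2\vD^{-2}\vS^{\odot2}\vD^{-2}\vD_2$. I need $\|\vS^{\odot2}-\mathrm{diag}\|=O(1)$, which follows from the standard bound on the off-diagonal $\vS^{\odot2}$: it equals $\frac1d\mathbf 1\mathbf 1^\top/\!\cdot$-type rank-one plus an $O(1)$ part, with norm $O(1)$ by \citet{fan2020spectra}. Replacing the diagonal scalings by $c_\sigma$ then gives error $\widetilde O(n^{-1/2})$ and produces $\tfrac{c_\sigma^2}{2}(\vS^{\odot2}-\vI)$ after diagonal correction.

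\textbf{Tail $k\ge3$ (main obstacle).} For $k\ge3$ the off-diagonal entries are bounded by $C\tau_n^3$, so the Frobenius norm is at most $n\tau_n^3=\widetilde O(n^{-1/2})$. The delicate part is uniform control of the series coefficients: I bound $\sum_{k\ge3}\zeta_k^2/k!\le\E\sigma_a^2$ and use $|\rho|^k\le\tau_n^3$ for $k\ge3$. The remaining subtlety is the $k=2$ Hadamard-square term, whose rank-one mean component $\frac1d\mathbf 1\mathbf 1^\top$ has norm $\psi=O(1)$, not small. I must keep it inside $\vS^{\odot2}$ exactly rather than bound it as error; only the scaling-perturbation errors multiply it, and those are $\widetilde O(n^{-1/2})$. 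Collecting all terms gives $\|\vR_\Sigma\|=\widetilde O_\P(n^{-1/2})$.
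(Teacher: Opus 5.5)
Your proof is correct, and it uses the same underlying mechanism as the paper's, organized differently.

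\textbf{The paper's route.} It invokes Lemma~5.2 of \citet{wang2021deformed}, together with the $(\tau_n,B)$-orthonormality from Lemma~\ref{lem:xor_orthonormal}. This gives the expansion through third order, $\vSigma=(1-b_\sigma^2)\vI_n+b_\sigma^2\vS+\frac{c_\sigma^2}{2}(\vS^{\odot2}-\vI_n)+d_\sigma^2(\vS^{\odot3}-\vI_n)+O_\P(n^{-1/2})$. The paper then only has to remove the cubic term. It does so exactly as you bound your tail: a diagonal part of size $O(\tau_n)$, plus an off-diagonal Frobenius norm at most $n\max_{\alpha\neq\beta}|\vz_\alpha^\top\vz_\beta|^3\lesssim n\tau_n^3$.

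\textbf{Your route.} You derive the expansion yourself from Mehler's formula. You control how the Hermite coefficients depend on the column norms by Lipschitz bounds, $\zeta_1(a)=b_\sigma+O(|a-1|)$ and $\zeta_2(a)=c_\sigma+O(|a-1|)$. You then dispose of the whole tail $k\ge3$ at once, using Parseval and $|\rho_{ij}|^k\le|\rho_{ij}|^3$, so the cubic term never needs to be isolated.

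\textbf{Comparison.} The paper's argument is shorter and inherits the generality of the cited lemma, which applies to any $(\tau_n,B)$-orthonormal input. Yours is self-contained. It also makes explicit that there are only two error sources: the coefficient perturbation $O(\max_i|\|\vz_i\|-1|)$ multiplied by $\|\vS\|,\|\vS^{\odot2}\|=O(1)$, and the $n\tau_n^3$ tail. You correctly note that the rank-one component of $\vS^{\odot2}$ must be kept inside the main term rather than treated as error.

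One small point: your justification of $\|\vS^{\odot2}\|=O(1)$ is garbled as written. The cleanest argument is the Schur product bound $\|\vS\odot\vS\|\le\max_i S_{ii}\,\|\vS\|$. Alternatively, you can cite the paper's Lemma~\ref{lem:hadamard_rankone}.
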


\begin{proof}
With Lemma 5.2 of \citet{wang2021deformed} and Lemma \ref{lem:xor_orthonormal}, we have
\[
\vSigma
=
(1-b_\sigma^2)\vI_n
+b_\sigma^2\,\vS
+\frac{c_\sigma^2}{2}\big(\vS^{\odot 2}-\vI_n\big)+d_\sigma^2 (\vS^{\odot 3}-\vI_n)
+O_{\P}(n^{-1/2}),
\]
where $d_\sigma$ is the third-order Hermite coefficient of $\sigma$.
Note that under event of Lemma \ref{lem:xor_orthonormal},
\[
\|(\vZ^{\top}\vZ)^{\odot 3} - \ident_n \| \le \| \text{diag}\bigl( (\vZ^{\top}\vZ)^{\odot 3} - \ident_n \bigr) \| + \|\text{offdiag}\bigl(\vZ^{\top}\vZ \bigr)^{\odot 3} \|_F
\]
\begin{equation}
\le \max_{\alpha \in [n]} \bigl \lvert |\vz_\alpha |^{6}-1 \bigr \rvert + n \cdot \max_{\alpha \ne \beta \in [n]} |\vz_\alpha^\top \vz_\beta |^{3} \le C \bigl( \tau_n + n \tau_n^3 \bigr) \le 2C \sqrt{(\log^3 n)/n}
\label{eq:27_one}
\end{equation}
for some constant $C \ge 0$. This completes the proof. 
\end{proof}

\begin{lemma}\label{lem:hadamard_rankone}
Let \(\vS=\vZ^\top\vZ\) and \(\vu=\mbi{1}_n/\sqrt n\).
Then $\vS^{\odot 2}-\vI_n=\frac{n}{d}\,\vu\vu^\top+\vE_2$ with $\|\vE_2\|=o_\P(1).$
\end{lemma}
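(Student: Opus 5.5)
Write $\vS=\vZ^\top\vZ$ with entries $S_{\alpha\beta}=\vz_\alpha^\top\vz_\beta$. The idea is that $\vS^{\odot2}-\vI_n$ is, up to a small error, the mean of its off-diagonal part, which is rank one. We split
\[
\vS^{\odot 2}-\vI_n=\diag\big(\|\vz_\alpha\|^4-1\big)_{\alpha}+\vA,\qquad A_{\alpha\beta}=(\vz_\alpha^\top\vz_\beta)^2\,\mathbf 1_{\alpha\neq\beta}.
\]
The diagonal part is handled by the event of Lemma~\ref{lem:xor_orthonormal} with $r=0$: there $\max_\alpha|\|\vz_\alpha\|^2-1|\lesssim\sqrt{\log n/d}$, so $\max_\alpha|\|\vz_\alpha\|^4-1|\lesssim\sqrt{\log n/n}=o(1)$. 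Its operator norm is therefore $o_\P(1)$.

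For the off-diagonal part we center each entry. For $\alpha\neq\beta$ we have $\E(\vz_\alpha^\top\vz_\beta)^2=1/d$. Centering gives
\[
\vA=\frac1d(\mbi{1}_n\mbi{1}_n^\top-\vI_n)+\vB,\qquad B_{\alpha\beta}=\Big((\vz_\alpha^\top\vz_\beta)^2-\frac1d\Big)\mathbf 1_{\alpha\ne\beta}.
\]
The first term equals $\frac nd\vu\vu^\top-\frac1d\vI_n$, and $\frac1d\vI_n$ has norm $1/d$. So it remains to show $\|\vB\|=o_\P(1)$; this is the main step.

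$\vB$ is a centered kernel matrix with entries of size $O(1/d)$, but the entries are dependent, so a Wigner-type bound does not apply directly. I would use the Hermite-type decomposition from the analyses of \citet{el2010spectrum,fan2020spectra}:
\[
(\vz_\alpha^\top\vz_\beta)^2-\frac1d=\Big((\vz_\alpha^\top\vz_\beta)^2-\frac{\|\vz_\alpha\|^2\|\vz_\beta\|^2}{d}\Big)+\frac{\|\vz_\alpha\|^2\|\vz_\beta\|^2-1}{d}.
\]
\begin{itemize}
\item The second piece is $\frac1d(\vq\vq^\top-\mbi{1}\mbi{1}^\top)$ restricted off the diagonal, where $q_\alpha=\|\vz_\alpha\|^2$. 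Write $\vq=\mbi{1}+\vdelta$. By \eqref{eq:sumsq_choice_sqnorm} with $r=0$, $\|\vdelta\|^2\lesssim n/d$, so $\|\vdelta\|=O(1)$. Then $\|\vq\vq^\top-\mbi{1}\mbi{1}^\top\|/d\lesssim(\sqrt n\|\vdelta\|+\|\vdelta\|^2)/d=O(n^{-1/2})$. Removing the diagonal costs at most $\max_\alpha|q_\alpha^2-1|/d$, which is negligible.
\item The first piece equals $\frac1d\sum_{i\ne j}(\sqrt d Z_{i\alpha}Z_{j\alpha})(\sqrt d Z_{i\beta}Z_{j\beta})$. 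This is the Gram matrix of the degree-two chaos features $\vh_\alpha=(\sqrt d Z_{i\alpha}Z_{j\alpha})_{i\neq j}\in\R^{d(d-1)}$, scaled by $1/d$, with its diagonal removed. Its trace is $\frac1d\sum_\alpha\|\vh_\alpha\|^2\approx n/d$, so its Frobenius norm is too large to finish directly. I would therefore use the moment method: bound $\E\Tr(\vB_1^{2k})$ for a fixed large $k$ by a standard graph/path-counting argument. Off-diagonal vanishing, the $1/d$ scaling, and the mean-zero structure show that each closed walk contributes at most $n^{k+1}d^{-2k}\cdot d^{k}\cdot\mathrm{poly}(k)=O(n\,d^{-k+1}\cdot n^{k})/d^{k}$. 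Hence $\|\vB_1\|\lesssim n^{1/(2k)}/\sqrt d\cdot\sqrt{n/d}$ modulo logs, which is $o(1)$. This is exactly the bound in \citet{el2010spectrum} (the degree-$2$ term there is $O(d^{-1/2})$ in operator norm). Alternatively, one could cite Lemma~5.2 / Lemma D.3 of \citet{fan2020spectra}, which gives $\|\vS^{\odot2}-\vI-\frac nd\vu\vu^\top\|$ of order $O(\sqrt{\log n/n})$ directly under $(\tau_n,B)$-orthonormality.
\end{itemize}

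Combining the diagonal estimate, the rank-one term, the $\frac1d\vI_n$ correction and $\|\vB\|=o_\P(1)$ gives $\vS^{\odot 2}-\vI_n=\frac nd\vu\vu^\top+\vE_2$ with $\|\vE_2\|=o_\P(1)$. The main obstacle is the operator-norm bound on the centered degree-two kernel $\vB_1$, and there I would rely on the cited kernel-matrix estimates rather than redoing the combinatorics.
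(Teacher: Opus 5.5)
Your route is essentially the paper's. The paper proves this lemma only by pointing to the proof of Theorem~2.1 in \cite{el2010spectrum}, where the off-diagonal quadratic term of the inner-product kernel is shown to equal its mean matrix, here $\frac1d(\mathbf 1\mathbf 1^\top-\vI_n)$, up to $o(1)$ in operator norm. Your diagonal, centering and radial steps are correct and simply make that citation explicit. Three details in your write-up need correcting, though.

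First, your identity for the ``first piece'' drops the coordinate-diagonal terms. With $\xi_{i\alpha}:=dZ_{i\alpha}^2-1$ one has
\[
(\vz_\alpha^\top\vz_\beta)^2-\frac{\|\vz_\alpha\|^2\|\vz_\beta\|^2}{d}
=\sum_{i\ne j}Z_{i\alpha}Z_{j\alpha}Z_{i\beta}Z_{j\beta}
+\frac1{d^2}\,\boldsymbol\xi_\alpha^\top\Big(\vI_d-\frac1d\mathbf 1_d\mathbf 1_d^\top\Big)\boldsymbol\xi_\beta .
\]
The extra matrix is, off the diagonal, $d^{-2}\boldsymbol\Xi^\top(\vI_d-\frac1d\mathbf 1_d\mathbf 1_d^\top)\boldsymbol\Xi$, where $\boldsymbol\Xi$ is $d\times n$ with i.i.d.\ centered entries. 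Its norm is $O_\P((\sqrt n+\sqrt d)^2/d^2)=O_\P(1/d)$, so it is harmless, but it must be included.

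Second, your bookkeeping for $\vB_1$ is off. Since $\frac1d\|\vh_\alpha\|^2=\sum_{i\ne j}Z_{i\alpha}^2Z_{j\alpha}^2\approx 1$, the trace is $\approx n$, not $n/d$. The off-diagonal entries have variance $\approx 2/d^2$, so $\|\vB_1\|_F=O(1)$; that is the real reason Frobenius fails. A walk weight of $n^{k+1}d^{-2k}\cdot d^{k}$, as you wrote, would only give $\|\vB_1\|\lesssim d^{1/(2k)}$, which does not vanish. To leading order the tree walks contribute $n^{k+1}(2/d^2)^k\,\mathrm{Cat}_k$, and that is what yields your final bound $n^{1/(2k)}\sqrt n/d$. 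In fact $k=2$ already suffices. A vertex visited twice forces its two incident index pairs $\{i,j\}$ to coincide, so $\E\Tr\vB_1^4=O(n^4d^{-6}+n^3d^{-4}+n^2d^{-4})=O(d^{-1})$, and hence $\|\vB_1\|=O_\P(d^{-1/4})$.

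Third, the alternative via \cite{fan2020spectra} does not work. Deterministic $(\tau_n,B)$-orthonormality only bounds the off-diagonal entries of $\vS^{\odot 2}$ by $\tau_n^2$, i.e.\ the Frobenius norm by $n\tau_n^2\asymp\log n$. That is why the paper uses orthonormality only for the cubic term in Lemma~\ref{lem:Sigma_trunc}. Orthonormality also cannot identify the rank-one structure. For example, draw half the columns (indices $I_1$) from $\cN(0,\tfrac2d\diag(\vI_{d/2},\mathbf 0))$ and the other half ($I_2$) from $\cN(0,\tfrac2d\diag(\mathbf 0,\vI_{d/2}))$. This data is $(\tau_n,B)$-orthonormal with high probability. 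Yet $\vS^{\odot2}-\vI_n$ is then close to $\frac1d\mathbf 1\mathbf 1^\top+\frac1d(\mathbf 1_{I_1}-\mathbf 1_{I_2})(\mathbf 1_{I_1}-\mathbf 1_{I_2})^\top$, which is at operator-norm distance $n/d$ from $\frac nd\vu\vu^\top$. The rank-one conclusion genuinely uses the isotropic Gaussian law of $\vZ$, i.e.\ El Karoui's argument, which is the route you should rely on.
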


\begin{proof}
The proof is directly derived from the proof of Theorem 2.1 of \citet{el2010spectrum}, where $\vS^{\odot 2}$ is the quadratic term in the Taylor expansion of the inner-product kernel of $\vZ$.
\end{proof}

\begin{corollary}\label{cor:Sigma_decomp}
Define $\vSigma_0:=(1-b_\sigma^2)\vI_n+b_\sigma^2\vS,$ and $\vu:=\frac{\mbi{1}_n}{\sqrt n}.$ Let $\tau:=\frac{c_\sigma^2}{2}\frac{n}{d}\to \frac{c_\sigma^2}{2}\psi$.
Then
\begin{equation}\label{eq:Sigma_decomp}
\vSigma=\vSigma_0+\tau\,\vu\vu^\top+\vR,
\qquad
\|\vR\|=o_\P(1).
\end{equation}
Moreover the limiting ESD of \(\vSigma_0\) (and of \(\vSigma\)) equals
\begin{equation}\label{eq:nu_def}
\nu=(1-b_\sigma^2)+b_\sigma^2\rho^{\rm MP}_\psi.
\end{equation}
Thus, the right edge of $\nu$ is $
\lambda_+^\nu=(1-b_\sigma^2)+b_\sigma^2(1+\sqrt\psi)^2$.
\end{corollary}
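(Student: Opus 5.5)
The plan is to combine Lemma~\ref{lem:Sigma_trunc} and Lemma~\ref{lem:hadamard_rankone} additively, and then transfer the bulk statement from $\vS$ to $\vSigma_0$ and $\vSigma$. For the decomposition \eqref{eq:Sigma_decomp}, substitute $\vS^{\odot 2}-\vI_n=\frac nd\vu\vu^\top+\vE_2$ into \eqref{eq:Sigma_trunc}. This gives
\[
\vSigma=\vSigma_0+\frac{c_\sigma^2}{2}\frac nd\,\vu\vu^\top+\frac{c_\sigma^2}{2}\vE_2+\vR_\Sigma .
\]
Since $\tau=\frac{c_\sigma^2}{2}\frac nd$ by definition, the remainder is $\vR:=\frac{c_\sigma^2}{2}\vE_2+\vR_\Sigma$. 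The bound $\|\vR\|\le \frac{c_\sigma^2}{2}\|\vE_2\|+\|\vR_\Sigma\|=o_\P(1)$ follows from $\|\vE_2\|=o_\P(1)$ and $\|\vR_\Sigma\|=\widetilde O_\P(n^{-1/2})$. If one prefers the limiting constant $\frac{c_\sigma^2}{2}\psi$ in place of $\tau$, the discrepancy $|\frac nd-\psi|\,\|\vu\vu^\top\|\to0$ can also be absorbed into $\vR$.

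For the ESD, $\vSigma_0=(1-b_\sigma^2)\vI_n+b_\sigma^2\vS$ is an affine function of $\vS=\vZ^\top\vZ$. Here $\vZ$ has i.i.d.\ $\cN(0,1/d)$ entries and $n/d\to\psi$, so the ESD of $\vS$ converges weakly almost surely to $\rho^{\rm MP}_\psi$ by the Marchenko--Pastur theorem. The eigenvalues of $\vSigma_0$ are $\lambda_i(\vSigma_0)=(1-b_\sigma^2)+b_\sigma^2\lambda_i(\vS)$. Their ESD is therefore the pushforward of the ESD of $\vS$ under $x\mapsto(1-b_\sigma^2)+b_\sigma^2x$, and by continuous mapping it converges to the law $\nu$ in \eqref{eq:nu_def}, meaning this shifted and scaled MP law.

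To pass from $\vSigma_0$ to $\vSigma$, use that $\vSigma-\vSigma_0=\tau\vu\vu^\top+\vR$ is a rank-one matrix plus a matrix of vanishing operator norm. The rank inequality gives $\|F^{A+B}-F^{A}\|_\infty\le \rank{B}/n$, so the rank-one term does not affect the limit. A perturbation of operator norm $o_\P(1)$ moves every eigenvalue by $o_\P(1)$ by Weyl's inequality, so the L\'evy distance between the two ESDs tends to zero. Hence the ESD of $\vSigma$ also converges weakly in probability to $\nu$.

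The right edge follows by applying the affine map to the MP right edge $b=(1+\sqrt\psi)^2$. Because $b_\sigma^2\ge0$, the map is nondecreasing, which gives $\lambda_+^\nu=(1-b_\sigma^2)+b_\sigma^2(1+\sqrt\psi)^2$. In the degenerate case $b_\sigma=0$ the law is $\nu=\delta_1$ and the formula still holds. No step is a real obstacle; the only care needed is to keep track of the mode of convergence (in probability) when absorbing $\vR$.
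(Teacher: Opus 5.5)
Your proposal is correct and matches the paper's proof. You substitute Lemma~\ref{lem:hadamard_rankone} into Lemma~\ref{lem:Sigma_trunc}, absorb $\frac{c_\sigma^2}{2}\vE_2+\vR_\Sigma$ into $\vR$, and identify $\nu$ as the affine pushforward of $\rho^{\rm MP}_\psi$; the rank-one spike and the $o_\P(1)$ remainder do not change the limiting ESD. The only difference is that you spell out the rank inequality and the Weyl/L\'evy-distance step, which the paper's one-line proof leaves implicit.
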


\begin{proof}
Combine Lemmas~\ref{lem:Sigma_trunc} and~\ref{lem:hadamard_rankone} and absorb all \(o_\P(1)\) terms into \(\vR\).
The rank-one term \(\tau\vu\vu^\top\) does not change the limiting ESD. The limiting ESD of
\(\vSigma_0=(1-b_\sigma^2)\vI+b_\sigma^2\vZ^\top\vZ\) is the shifted MP law \(\nu\) in \eqref{eq:nu_def}.
\end{proof}

\subsection{Analysis of Spike in Population Covariance}\label{subsec:Sigma_pop_spike}

\begin{lemma}[Population outlier from \(\vSigma_0+\tau\vu\vu^\top\)]\label{lem:Sigma_pop_equation}
Let \(\vA:=\vSigma_0\) and consider \(\vA+\tau\vu\vu^\top\).
For $\tau\ge 0$, any deterministic limit point \(\Lambda_\tau>\lambda_+^\nu\) of an isolated eigenvalue of \(\vA+\tau\vu\vu^\top\)
must satisfy
\begin{equation}\label{eq:pop_secular_eq}
1+\tau\,m_\nu(\Lambda_\tau)=0,
\end{equation}
where \(m_\nu\) is the Stieltjes transform of \(\nu\).
Moreover, such a solution exists iff
\begin{equation}\label{eq:tau_crit_def}
\tau>\tau_{\rm crit}:=b_\sigma^2\sqrt\psi(1+\sqrt\psi).
\end{equation}
\end{lemma}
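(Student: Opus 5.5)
The plan is a standard rank-one secular-equation argument, reduced to an isotropic law for the resolvent of $\vA=\vSigma_0$ plus an explicit evaluation of $m_\nu$ at the right edge. Three ingredients are needed.

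\textbf{Step 1 (secular equation).} Fix $\lambda>\|\vA\|$, or more generally $\lambda$ outside the spectrum of $\vA$. By the matrix determinant lemma,
\[
\det(\vA+\tau\vu\vu^\top-\lambda\vI_n)=\det(\vA-\lambda\vI_n)\,\bigl(1+\tau\,\vu^\top(\vA-\lambda\vI_n)^{-1}\vu\bigr).
\]
So the eigenvalues of $\vA+\tau\vu\vu^\top$ lying above the spectrum of $\vA$ are exactly the zeros of $f_n(\lambda):=1+\tau\,\vu^\top(\vA-\lambda)^{-1}\vu$. The largest eigenvalue of $\vS=\vZ^\top\vZ$ converges to $(1+\sqrt\psi)^2$ (Bai--Yin). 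Hence $\|\vA\|\to\lambda_+^\nu$, and any limit point $\Lambda_\tau>\lambda_+^\nu$ eventually lies above the spectrum of $\vA$.

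\textbf{Step 2 (isotropic limit).} I will show that $\vu^\top(\vA-\lambda)^{-1}\vu\to m_\nu(\lambda)$ in probability, uniformly on compact subsets of $(\lambda_+^\nu,\infty)$.
\begin{itemize}
\item Since $\vZ$ has i.i.d.\ Gaussian entries, $\vZ\vO\overset{d}{=}\vZ$ for every orthogonal $\vO$. Hence $\vS$, and therefore $\vA$, is orthogonally conjugation invariant.
\item Consequently $\vu^\top(\vA-\lambda)^{-1}\vu\overset{d}{=}\ve^\top(\vA-\lambda)^{-1}\ve$ with $\ve$ uniform on $\mathbb S^{n-1}$ and independent of $\vA$.
\item On the event $\|\vA\|\le\lambda_+^\nu+\delta$ the resolvent has bounded norm. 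Concentration of quadratic forms on the sphere then gives $\ve^\top(\vA-\lambda)^{-1}\ve=\frac1n\Tr(\vA-\lambda)^{-1}+o_\P(1)$.
\item The normalized trace converges to $m_\nu(\lambda)$ by Corollary~\ref{cor:Sigma_decomp} (ESD convergence to $\nu$), together with the uniform boundedness of the integrand away from the support.
\item Uniformity in $\lambda$ follows from equicontinuity: the derivatives are bounded by $\delta^{-2}$.
\end{itemize}
Passing to the limit in $f_n(\Lambda_n)=0$ along a convergent subsequence then yields \eqref{eq:pop_secular_eq}.

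\textbf{Step 3 (existence iff $\tau>\tau_{\rm crit}$).} On $(\lambda_+^\nu,\infty)$ the function $m_\nu$ is negative and strictly increasing to $0$. Hence $1+\tau m_\nu=0$ has a (unique) solution iff $\tau>-1/m_\nu(\lambda_+^\nu)$, where $m_\nu(\lambda_+^\nu)$ denotes the finite limit from the right.

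To compute it, write $\nu$ as the pushforward of $\rho^{\MP}_\psi$ under $x\mapsto(1-b_\sigma^2)+b_\sigma^2x$. This gives
\[
m_\nu(\lambda)=b_\sigma^{-2}\,m_{\MP,\psi}\!\bigl((\lambda-1+b_\sigma^2)/b_\sigma^2\bigr),
\]
which is evaluated at the MP right edge $(1+\sqrt\psi)^2$. From the quadratic \eqref{eq:MP_quadratic_again} at the edge, where the discriminant vanishes, $m_{\MP,\psi}((1+\sqrt\psi)^2)=-1/(\sqrt\psi(1+\sqrt\psi))$. This holds also for $\psi>1$, since the atom at $0$ only shifts to $1-b_\sigma^2$ and stays inside the support of $\nu$. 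It follows that $-1/m_\nu(\lambda_+^\nu)=b_\sigma^2\sqrt\psi(1+\sqrt\psi)=\tau_{\rm crit}$.

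The degenerate case $b_\sigma=0$ is consistent: there $\nu=\delta_1$ and $m_\nu(\lambda)=1/(1-\lambda)\to-\infty$ at the edge, so $\tau_{\rm crit}=0$.

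For the converse direction of ``iff'', suppose $\tau>\tau_{\rm crit}$ and let $\Lambda_\tau$ be the root. Then $f_n$ changes sign near $\Lambda_\tau$ with high probability, by the uniform convergence of Step 2 and the strict monotonicity of the limit, so an isolated eigenvalue of $\vA+\tau\vu\vu^\top$ converges to $\Lambda_\tau$.

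\textbf{Main obstacle.} The delicate point is the uniformity of Step 2 near the edge $\lambda_+^\nu$, together with the edge convergence of $\|\vA\|$. If $\tau\le\tau_{\rm crit}$, I must exclude limit points accumulating at $\lambda_+^\nu$ from above. The limit $m_\nu$ is increasing on $(\lambda_+^\nu,\infty)$, so $1+\tau m_\nu(\lambda)>1+\tau m_\nu(\lambda_+^\nu)\ge0$ there. Any root $\lambda_n$ of $f_n$ that stays at distance $\ge\delta$ from the edge would therefore contradict the uniform convergence; remaining roots can only approach $\lambda_+^\nu$ itself, so they are not isolated outliers.
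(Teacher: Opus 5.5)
Your proof is correct and has the same skeleton as the paper's. Both use the rank-one secular equation $1+\tau\,\vu^\top(\vA-\lambda\vI)^{-1}\vu=0$, convergence of this quadratic form to $m_\nu(\lambda)$ away from the edge, strict monotonicity of $m_\nu$ on $(\lambda_+^\nu,\infty)$, and the edge value $m_{\MP,\psi}((1+\sqrt\psi)^2)=-1/(\sqrt\psi(1+\sqrt\psi))$ read off from the vanishing discriminant.

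The genuine difference is in Step 2. The paper rescales $(\vA-\lambda\vI)^{-1}=b_\sigma^{-2}(\vS-\zeta\vI)^{-1}$ and invokes the isotropic MP local law \citep{bloemendal2014isotropic}. You instead use the rotational invariance $\vZ\vO\overset{d}{=}\vZ$ to replace the deterministic $\vu$ by a Haar-random unit vector independent of $\vA$. Concentration of quadratic forms on the sphere and the ESD convergence of $\vA$ to $\nu$ then finish the job. Your route is elementary, self-contained, and fully adequate here, since only an $o_\P(1)$ error at fixed distance from the edge is needed. The local law buys quantitative error bounds and validity beyond Gaussian entries, neither of which this lemma requires.

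You also prove somewhat more than the paper writes down:
\begin{itemize}
\item for $\tau>\tau_{\rm crit}$, an eigenvalue of $\vA+\tau\vu\vu^\top$ actually converges to $\Lambda_\tau$, via the sign change of the increasing function $f_n$;
\item for $\tau\le\tau_{\rm crit}$, no root stays at distance $\delta$ above the edge;
\item the degenerate case $b_\sigma=0$ is covered, where the paper's division by $b_\sigma^2$ is not available.
\end{itemize}

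One small imprecision: the edge formula holds for $\psi>1$ because the quadratic relation is satisfied by the Stieltjes transform of the full MP law, atom included. The shifted atom at $1-b_\sigma^2$ lies strictly below $\lambda_+^\nu$. That the atom ``stays inside the support'' is true but is not the operative reason.
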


\begin{proof} 
For any \(\lambda\notin\spec(\vA)\), the rank-one perturbation identity gives
\begin{equation}\label{eq:rankone_secular_eq}
\lambda\in\spec(\vA+\tau\vu\vu^\top)\setminus\spec(\vA)
\quad\Longleftrightarrow\quad
1+\tau\,\vu^\top(\vA-\lambda\vI)^{-1}\vu=0.
\end{equation} 
Write \(\vA=(1-b_\sigma^2)\vI+b_\sigma^2\vS\) with \(\vS=\vZ^\top\vZ\).
For \(\lambda>\lambda_+^\nu+\kappa\) (fixed \(\kappa>0\)), define
$
\zeta:=\frac{\lambda-(1-b_\sigma^2)}{b_\sigma^2}>(1+\sqrt\psi)^2+\kappa',
$
so that $(\vA-\lambda\vI)^{-1}=\frac{1}{b_\sigma^2}\,(\vS-\zeta\vI)^{-1}.$
The isotropic MP local law \citep{bloemendal2014isotropic} yields, uniformly for such \(\zeta\),
$
\vu^\top(\vS-\zeta\vI)^{-1}\vu
=
m_{{\rm MP},\psi}(\zeta)+o_\P(1),
$
hence
\begin{equation}\label{eq:u_resolvent_limit}
\vu^\top(\vA-\lambda\vI)^{-1}\vu
=
\frac{1}{b_\sigma^2}m_{{\rm MP},\psi}(\zeta)+o_\P(1)
=
m_\nu(\lambda)+o_\P(1).
\end{equation}
Define \(F(\lambda):=1+\tau m_\nu(\lambda)\) for \(\lambda>\lambda_+^\nu\).
Since \(m_\nu(\lambda)<0\) and is strictly increasing on \((\lambda_+^\nu,\infty)\),
\(F\) is strictly increasing and \(\lim_{\lambda\to\infty}F(\lambda)=1\).
Thus a solution exists iff \(F(\lambda_+^\nu)<0\), i.e.
\(\tau>-1/m_\nu(\lambda_+^\nu)\). At the MP edge,
\(
m_{{\rm MP},\psi}((1+\sqrt\psi)^2)=-\frac{1}{\sqrt\psi(1+\sqrt\psi)}.
\)
Using \(m_\nu(\lambda)=\frac1{b_\sigma^2}m_{{\rm MP},\psi}(\zeta)\), at \(\lambda=\lambda_+^\nu\) we get $m_\nu(\lambda_+^\nu)=-\frac{1}{b_\sigma^2\sqrt\psi(1+\sqrt\psi)}.$
Hence the critical \(\tau\) is
\(
\tau_{\rm crit}=-1/m_\nu(\lambda_+^\nu)=b_\sigma^2\sqrt\psi(1+\sqrt\psi),
\)
proving \eqref{eq:tau_crit_def}.
\end{proof}

\begin{lemma}[Closed form for \(\Lambda_\tau\) when \(\nu\) is shifted MP]\label{lem:Lambda_tau_closed_form}
Let \(\nu=(1-b_\sigma^2)+b_\sigma^2\rho_\psi^{\rm MP}\) and assume \(\tau>\tau_{\rm crit}\).
Then the unique solution \(\Lambda_\tau>\lambda_+^\nu\) of \(1+\tau m_\nu(\Lambda_\tau)=0\) equals
\begin{equation}\label{eq:Lambda_tau}
\Lambda_\tau
=
(1-b_\sigma^2)+\tau+\frac{b_\sigma^2\,\tau}{\tau-b_\sigma^2\psi}.
\end{equation}
\end{lemma}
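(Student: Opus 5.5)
The plan is to reduce the secular equation \eqref{eq:pop_secular_eq} to an equation for the Marchenko--Pastur Stieltjes transform and then invert it explicitly using the MP quadratic identity \eqref{eq:MP_quadratic_again}.

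\emph{Degenerate case $b_\sigma=0$.} Here $\nu=\delta_1$, so $m_\nu(\lambda)=1/(1-\lambda)$. The equation $1+\tau m_\nu(\Lambda)=0$ gives $\Lambda_\tau=1+\tau$. This matches \eqref{eq:Lambda_tau}, since its last term vanishes when $b_\sigma=0$.

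\emph{General case $b_\sigma\neq0$.} As in the proof of Lemma~\ref{lem:Sigma_pop_equation}, for $\lambda>\lambda_+^\nu$ we have
\[
m_\nu(\lambda)=\frac{1}{b_\sigma^2}\,m_{{\rm MP},\psi}(\zeta),\qquad \zeta=\frac{\lambda-(1-b_\sigma^2)}{b_\sigma^2}>(1+\sqrt\psi)^2 .
\]
Hence \eqref{eq:pop_secular_eq} is equivalent to $m_{{\rm MP},\psi}(\zeta)=m_\star$ with $m_\star:=-b_\sigma^2/\tau$.

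\emph{Existence and uniqueness.} On $((1+\sqrt\psi)^2,\infty)$ the map $\zeta\mapsto m_{{\rm MP},\psi}(\zeta)$ is continuous and strictly increasing. Its range is $\bigl(-1/(\sqrt\psi(1+\sqrt\psi)),0\bigr)$, using the edge value already computed in Lemma~\ref{lem:Sigma_pop_equation}. The condition $\tau>\tau_{\rm crit}$ is exactly $m_\star\in$ this range. So there is a unique $\zeta_\star$ with $m_{{\rm MP},\psi}(\zeta_\star)=m_\star$, and $\Lambda_\tau=(1-b_\sigma^2)+b_\sigma^2\zeta_\star$.

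\emph{Explicit inversion.} Since $m=m_{{\rm MP},\psi}(\zeta)$ satisfies \eqref{eq:MP_quadratic_again}, which is linear in $\zeta$, we can solve for $\zeta$ uniquely whenever $m(1+\psi m)\neq0$:
\[
\zeta=-\frac{1+(\psi-1)m}{m(1+\psi m)} .
\]
There is no branch ambiguity, because $\zeta$ is a single-valued function of $m$. The only thing to check is that the denominator is nonzero at $m_\star$, i.e.\ $\tau\neq\psi b_\sigma^2$. This holds because $\tau>\tau_{\rm crit}=b_\sigma^2(\sqrt\psi+\psi)>\psi b_\sigma^2$, which also makes $\tau-\psi b_\sigma^2>0$.

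\emph{Algebra.} Substituting $m_\star=-b_\sigma^2/\tau$ gives
\[
\zeta_\star=\frac{\tau\bigl(\tau-(\psi-1)b_\sigma^2\bigr)}{b_\sigma^2(\tau-\psi b_\sigma^2)} .
\]
Therefore
\[
\Lambda_\tau=(1-b_\sigma^2)+\frac{\tau\bigl(\tau-\psi b_\sigma^2+b_\sigma^2\bigr)}{\tau-\psi b_\sigma^2}
=(1-b_\sigma^2)+\tau+\frac{b_\sigma^2\tau}{\tau-\psi b_\sigma^2},
\]
which is \eqref{eq:Lambda_tau}.

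\emph{Main subtlety.} The only delicate point is ensuring that the root of the quadratic corresponds to the physical branch of $m_{{\rm MP},\psi}$, i.e.\ that the resulting $\zeta_\star$ lies beyond the edge $(1+\sqrt\psi)^2$. The monotone-bijection argument above settles this. As a sanity check, $\Lambda_\tau-\lambda_+^\nu$ should be positive for $\tau>\tau_{\rm crit}$ and vanish at $\tau=\tau_{\rm crit}$. Plugging $\tau_{\rm crit}$ into the formula indeed gives $(1-b_\sigma^2)+b_\sigma^2(1+\sqrt\psi)^2=\lambda_+^\nu$.
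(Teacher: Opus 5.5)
Your argument is correct and is essentially the paper's proof. You rewrite the secular equation as $m_{{\rm MP},\psi}(\zeta)=-b_\sigma^2/\tau$ with $\Lambda=(1-b_\sigma^2)+b_\sigma^2\zeta$, solve the MP quadratic (linear in $\zeta$) for $\zeta$, substitute, and get uniqueness from monotonicity. The paper leaves implicit your separate treatment of $b_\sigma=0$ (where the substitution degenerates) and your check that $\tau\neq\psi b_\sigma^2$ keeps the denominator nonzero, and both are useful additions.
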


\begin{proof}
Write \(\Lambda=(1-b_\sigma^2)+b_\sigma^2\zeta\). Then \(m_\nu(\Lambda)=\frac1{b_\sigma^2}m_{{\rm MP},\psi}(\zeta)\),
and  
\begin{equation}\label{eq:MP_target_m}
m_{{\rm MP},\psi}(\zeta)=-\frac{b_\sigma^2}{\tau}.
\end{equation}
The Stieltjes transform \(m=m_{{\rm MP},\psi}(\zeta)\) satisfies the quadratic equation
\begin{equation}\label{eq:MP_quad}
\psi\,\zeta\,m^2+(\zeta+\psi-1)m+1=0.
\end{equation}
Solving \eqref{eq:MP_quad} for \(\zeta\) in terms of \(m\) yields
\begin{equation}\label{eq:zeta_in_m}
\zeta=-\frac{1+(\psi-1)m}{m(1+\psi m)}=-\frac{1}{m}+\frac{1}{1+\psi m}.
\end{equation}
Substitute \(m=-b_\sigma^2/\tau\) from \eqref{eq:MP_target_m} into \eqref{eq:zeta_in_m}. Therefore, we obtain \eqref{eq:Lambda_tau}. Uniqueness follows from strict monotonicity of \(m_\nu(\lambda)\) on \((\lambda_+^\nu,\infty)\).
\end{proof}

\begin{corollary}[Subcritical case: no population outlier]\label{cor:no_pop_outlier}
If \(\tau\le\tau_{\rm crit}\), then \(\vSigma\) has no isolated eigenvalue outside \(\supp{\nu}\) and
\(\lambda_{\max}(\vSigma)=\lambda_+^\nu+o_\P(1)\).
\end{corollary}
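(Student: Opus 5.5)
The plan is to combine the decomposition of Corollary~\ref{cor:Sigma_decomp} with Weyl's inequality and the rank-one secular equation \eqref{eq:rankone_secular_eq}, reusing the monotonicity analysis from the proof of Lemma~\ref{lem:Sigma_pop_equation}. By Corollary~\ref{cor:Sigma_decomp}, $\vSigma=\vA+\tau\vu\vu^\top+\vR$ with $\vA=\vSigma_0=(1-b_\sigma^2)\vI_n+b_\sigma^2\vS$ and $\|\vR\|=o_\P(1)$. By Weyl's inequality, $|\lambda_{\max}(\vSigma)-\lambda_{\max}(\vA+\tau\vu\vu^\top)|\le\|\vR\|=o_\P(1)$. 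It therefore suffices to show $\lambda_{\max}(\vA+\tau\vu\vu^\top)\le\lambda_+^\nu+o_\P(1)$. The matching lower bound $\lambda_{\max}(\vSigma)\ge\lambda_+^\nu-o_\P(1)$ follows from the ESD convergence to $\nu$ in Corollary~\ref{cor:Sigma_decomp}, since $\lambda_+^\nu$ is the right edge of $\supp{\nu}$. The same Weyl argument, applied to all eigenvalues, shows that no eigenvalue of $\vSigma$ sits at a positive distance from $\supp{\nu}$ on the right, once we know this for $\vA+\tau\vu\vu^\top$.

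For the unperturbed matrix, the extreme eigenvalue of the MP matrix $\vS$ converges to $(1+\sqrt\psi)^2$ (Bai--Yin, or the norm bound used in Lemma~\ref{lem:xor_orthonormal} together with the edge rigidity underlying \citep{bloemendal2014isotropic}). Hence $\lambda_{\max}(\vA)\to\lambda_+^\nu$. Below the right edge, interlacing for the rank-one positive perturbation places at most one eigenvalue of $\vA+\tau\vu\vu^\top$ above $\lambda_{\max}(\vA)$. So the only possible outlier of $\vA+\tau\vu\vu^\top$ lies on the right.

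Now fix $\kappa>0$ and suppose, for contradiction, that with non-vanishing probability $\vA+\tau\vu\vu^\top$ has an eigenvalue $\lambda>\lambda_+^\nu+\kappa$. By \eqref{eq:rankone_secular_eq}, $1+\tau\vu^\top(\vA-\lambda\vI)^{-1}\vu=0$. By \eqref{eq:u_resolvent_limit}, which holds uniformly over $\lambda\in[\lambda_+^\nu+\kappa,C]$, this gives $1+\tau m_\nu(\lambda)=o_\P(1)$. Any $\lambda$ with $\lambda>C$ large is excluded trivially, because $\|\vA+\tau\vu\vu^\top\|=O_\P(1)$. However, as shown in the proof of Lemma~\ref{lem:Sigma_pop_equation}, $F(\lambda)=1+\tau m_\nu(\lambda)$ is strictly increasing on $(\lambda_+^\nu,\infty)$. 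When $\tau\le\tau_{\rm crit}=-1/m_\nu(\lambda_+^\nu)$, we have $F(\lambda_+^\nu)\ge0$. Hence on $[\lambda_+^\nu+\kappa,C]$ we get $F\ge F(\lambda_+^\nu+\kappa)>0$, a deterministic positive lower bound. This contradicts $F(\lambda)=o_\P(1)$, so no eigenvalue exceeds $\lambda_+^\nu+\kappa$ with high probability. Since $\kappa>0$ is arbitrary, combining with the Weyl step yields both conclusions.

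The main obstacle is the uniformity of the isotropic law \eqref{eq:u_resolvent_limit} over the compact $\lambda$-interval: pointwise convergence must be upgraded to uniform convergence. I would obtain this from the Lipschitz continuity of both sides in $\lambda$ on the event $\lambda_{\max}(\vA)\le\lambda_+^\nu+\kappa/2$, combined with a finite net argument. I would also check that the monotonicity of $m_\nu$ at the edge gives strict positivity of $F$ away from the edge, including the boundary case $\tau=\tau_{\rm crit}$. In that case $F(\lambda_+^\nu)=0$, but strict monotonicity still gives $F(\lambda_+^\nu+\kappa)>0$.
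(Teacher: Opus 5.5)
Your argument for the right tail is the paper's proof written out in full. The paper combines the same ingredients: the secular equation \eqref{eq:rankone_secular_eq}, the isotropic limit \eqref{eq:u_resolvent_limit}, the monotonicity of $F(\lambda)=1+\tau m_\nu(\lambda)$ from Lemma~\ref{lem:Sigma_pop_equation}, and a no-outlier result for the unperturbed matrix (it cites Bai--Silverstein where you cite Bai--Yin). Your additions are all sound: Weyl to absorb $\vR$, Lipschitz plus a net for uniformity on $[\lambda_+^\nu+\kappa,C]$, the lower bound from ESD convergence, and the boundary case $\tau=\tau_{\rm crit}$. So $\lambda_{\max}(\vSigma)=\lambda_+^\nu+o_\P(1)$ and the absence of outliers above $\lambda_+^\nu$ are established.

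The step that fails is the claim that interlacing forces the only possible outlier of $\vA+\tau\vu\vu^\top$ to lie on the right. This holds for $\psi\le1$, but not for $\psi>1$.

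\begin{itemize}
\item When $\psi>1$, $\vS$ has $n-d$ zero eigenvalues, so $\supp{\nu}=\{1-b_\sigma^2\}\cup[a_1,\lambda_+^\nu]$ with $a_1:=1-b_\sigma^2+b_\sigma^2(\sqrt\psi-1)^2$. The gap $(1-b_\sigma^2,a_1)$ lies outside $\supp{\nu}$.
\item Interlacing only gives $\lambda_{d+1}(\vA+\tau\vu\vu^\top)\in[1-b_\sigma^2,\lambda_d(\vA)]$, and this interval contains the gap.
\item By orthogonal invariance, $\vu$ has squared projection about $1-1/\psi$ onto $\ker\vS$, so one eigenvalue really is pushed off the atom.
\item Apply your own secular-equation analysis on the gap. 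There $m_\nu$ increases from $-\infty$ (because of the atom) to $m_\nu(a_1)=-1/\bigl(b_\sigma^2\sqrt\psi(\sqrt\psi-1)\bigr)$. Hence $F$ has a simple root strictly inside the gap whenever $0<\tau<b_\sigma^2\sqrt\psi(\sqrt\psi-1)$.
\item After Weyl for $\vR$, this means $\vSigma$ then has an isolated eigenvalue outside $\supp{\nu}$.
\end{itemize}

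This threshold lies below $\tau_{\rm crit}=b_\sigma^2\sqrt\psi(\sqrt\psi+1)$, and the paper's model can reach it. For centered GELU with $\psi=4$, $\tau\approx1.04$ while $b_\sigma^2\sqrt\psi(\sqrt\psi-1)\approx1.44$.

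So the ``no isolated eigenvalue outside $\supp{\nu}$'' part cannot be proved as literally stated. What your argument proves, like the paper's (which only examines $(\lambda_+^\nu,\infty)$), is the following: there is no isolated eigenvalue above $\lambda_+^\nu$; for $\psi\le1$ there is none anywhere outside $\supp{\nu}$; and the $\lambda_{\max}$ statement holds.
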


\begin{proof}
If \(\tau\le\tau_{\rm crit}\), then Lemma~\ref{lem:Sigma_pop_equation} shows the limiting secular equation has no root
in \((\lambda_+^\nu,\infty)\). Using \eqref{eq:rankone_secular_eq}--\eqref{eq:u_resolvent_limit} and \citet{bai1998no},
no separated eigenvalue exists and \(\lambda_{\max}(\vSigma)=\lambda_+^\nu+o_\P(1)\).
\end{proof}

\begin{lemma}\label{lem:uninf_branch_identity}
Recall the shifted MP law $\nu=(1-b_\sigma^2)+b_\sigma^2\rho_\psi^{\rm MP}$ and define interval $I_{\rm un}:=\Bigl(-\frac1{\lambda_+^\nu},0\Bigr)$ where $
\lambda_+^\nu=(1-b_\sigma^2)+b_\sigma^2(1+\sqrt\psi)^2.$
Recall $\tau$ and $\tau_{\rm crit}$ defined in \eqref{eq:spikes}, and $\Lambda_\tau$ in \eqref{eq:Lambda_tau}.
Then the following hold.
\begin{enumerate}[label=\textbf{(\roman*)},leftmargin=*]
\item For every $\Lambda>\lambda_+^\nu$,
\begin{equation}\label{eq:T_mnu_identity_patch}
T\!\left(-\frac1\Lambda\right)=-m_\nu(\Lambda).
\end{equation}
\item The map $T$ is strictly decreasing on $I_{\rm un}$ and maps $I_{\rm un}$ bijectively onto $(0,1/\tau_{\rm crit})$.
\item For $\tau>\tau_{\rm crit}$ there exists a unique $s_{\rm un}\in I_{\rm un}$ such that $T(s_{\rm un})=\frac1\tau.$
Moreover, $s_{\rm un}=-\frac1{\Lambda_\tau}.$
Equivalently, on the uninformative branch one has the exact identity
\begin{equation}\label{eq:s_mean_s_cov_patch}
s_{\rm mean}:=T^{-1}(1/\tau)=s_{\rm un}=s_{\rm cov}:=-1/\Lambda_\tau.
\end{equation}
\item If $s_{\rm un}\in I_{\rm un}$ and $z'(s_{\rm un})>0$, then necessarily $\tau>\tau_{\rm crit}$.
\end{enumerate}
\end{lemma}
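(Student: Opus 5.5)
The plan is to prove (i) by direct computation from the integral representation of $T$, and then derive (ii)--(iv) from it together with the monotonicity of $m_\nu$ used in Lemma~\ref{lem:Sigma_pop_equation}.

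For (i), I use the form $T(s)=-s+s^2\int\frac{\lambda}{1+\lambda s}\nu(d\lambda)$ from \eqref{eq:T_def_shiftMP}. Equivalently, as in \eqref{eq:T_as_expectation}, $T(s)=-s\int\frac{1}{1+\lambda s}\nu(d\lambda)$. Substitute $s=-1/\Lambda$ with $\Lambda>\lambda_+^\nu$; then $1+\lambda s=(\Lambda-\lambda)/\Lambda$ is nonzero on $\supp{\nu}$. This gives
\[
T(-1/\Lambda)=\frac1\Lambda\int\frac{\Lambda}{\Lambda-\lambda}\nu(d\lambda)=\int\frac{1}{\Lambda-\lambda}\nu(d\lambda)=-m_\nu(\Lambda).
\]
This is exact and involves no asymptotics.

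For (ii), the map $\Lambda\mapsto -1/\Lambda$ is an increasing bijection from $(\lambda_+^\nu,\infty)$ onto $I_{\rm un}$. The interval $I_{\rm un}$ contains no pole $-1/\lambda$ with $\lambda\in\supp{\nu}$, since every such pole satisfies $-1/\lambda\le -1/\lambda_+^\nu$; hence Lemma~\ref{lem:T_monotone} already gives strict decrease. Alternatively, by (i), $T(s)=-m_\nu(-1/s)$, and $-m_\nu(\Lambda)=\int(\Lambda-\lambda)^{-1}\nu(d\lambda)$ is positive and strictly decreasing in $\Lambda$. Composing with the increasing map $s\mapsto -1/s$ shows that $T$ is strictly decreasing on $I_{\rm un}$. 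For the range: as $s\uparrow 0$, $\Lambda\to\infty$ and $T\to 0^+$. As $s\downarrow -1/\lambda_+^\nu$, $\Lambda\downarrow\lambda_+^\nu$ and $T\to -m_\nu(\lambda_+^\nu)=1/\tau_{\rm crit}$, using the edge value $m_\nu(\lambda_+^\nu)=-1/(b_\sigma^2\sqrt\psi(1+\sqrt\psi))$ computed in the proof of Lemma~\ref{lem:Sigma_pop_equation}. This limit is finite because the MP density vanishes like a square root at the edge, so $m_\nu$ is continuous up to $\lambda_+^\nu$. By continuity and strict monotonicity, the image of $I_{\rm un}$ is the open interval $(0,1/\tau_{\rm crit})$.

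For (iii), if $\tau>\tau_{\rm crit}$ then $1/\tau\in(0,1/\tau_{\rm crit})$, and (ii) gives a unique $s_{\rm un}\in I_{\rm un}$ with $T(s_{\rm un})=1/\tau$. Set $\Lambda=-1/s_{\rm un}>\lambda_+^\nu$. By (i), $-m_\nu(\Lambda)=1/\tau$, that is, $1+\tau m_\nu(\Lambda)=0$. Lemmas~\ref{lem:Sigma_pop_equation}--\ref{lem:Lambda_tau_closed_form} say this equation has a unique root above $\lambda_+^\nu$, namely $\Lambda_\tau$, so $s_{\rm un}=-1/\Lambda_\tau$.

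The main obstacle is identifying this root with the closed-form $T^{-1}(1/\tau)$ from \eqref{eq:T_def}, since $T^{-1}$ is an algebraic formula that could select another branch. To handle this, I use the reversibility argument in Lemma~\ref{lem:T_inverse_shiftMP}(ii): the formula gives $T(T^{-1}(t))=t$. I then check that for $t\in(0,1/\tau_{\rm crit})$ the formula lands in $I_{\rm un}$. The formula goes through $m=-b_\sigma^2 t$ and $\xi$ from \eqref{eq:xi_of_t_raw}, which is the root $\zeta>(1+\sqrt\psi)^2$ from \eqref{eq:zeta_in_m} corresponding to the outer branch of the MP quadratic, and then through $s=-1/((1-b_\sigma^2)+b_\sigma^2\xi)$. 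This yields exactly $-1/\Lambda_\tau$, matching \eqref{eq:Lambda_tau}. By the injectivity in (ii), the three quantities in \eqref{eq:s_mean_s_cov_patch} coincide.

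For (iv), suppose $s_{\rm un}\in I_{\rm un}$. Since $s_{\rm un}$ is defined through $T(s_{\rm un})=1/\tau$, statement (ii) forces $1/\tau\in(0,1/\tau_{\rm crit})$, hence $\tau>\tau_{\rm crit}$. To interpret the hypothesis $z'(s_{\rm un})>0$, I use Lemma~\ref{lem:z_parametrizes_complement}(ii): it places $z(s_{\rm un})$ outside $\supp{\tilde\mu}$. Since $T(s)=\lambda s\,m_\mu(\lambda)$ on admissible $s$ (Lemma~\ref{lem:T_relation}), the equation $T=1/\tau$ is a genuine outlier equation. The edge of the admissible region lies no further out than the pole $-1/\lambda_+^\nu$, and at that point $T=1/\tau_{\rm crit}$, so this is again consistent only with $\tau>\tau_{\rm crit}$.
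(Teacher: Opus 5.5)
Your proof is correct and follows the paper's argument. Both use the substitution $s=-1/\Lambda$ in $T(s)=-s\int(1+\lambda s)^{-1}\nu(d\lambda)$ for (i), monotonicity of $-m_\nu$ together with the edge value $m_\nu(\lambda_+^\nu)=-1/\tau_{\rm crit}$ for (ii), and uniqueness of the root $\Lambda_\tau$ of $1+\tau m_\nu(\Lambda)=0$ for (iii), with (iv) read off from (ii). Your extra check that the closed-form $T^{-1}(1/\tau)$ goes through the outer MP branch and therefore lands in $I_{\rm un}$ at $-1/\Lambda_\tau$ is a worthwhile detail the paper leaves implicit. Two minor points: the closing discussion of $z'(s_{\rm un})>0$ in (iv) is not needed, and the remark that the edge limit is finite requires $b_\sigma\neq0$ (for $b_\sigma=0$, $\nu=\delta_1$, $\tau_{\rm crit}=0$, and the range is $(0,\infty)$).
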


\begin{proof}
For $s\in I_{\rm un}$ write $\Lambda:=-1/s$. Then $\Lambda>\lambda_+^\nu$ and
\[
T(s)=-s+s^2\int \frac{\lambda}{1+\lambda s}\,\nu(d\lambda)
=-s\int \frac{1}{1+\lambda s}\,\nu(d\lambda)
\] is well-defined.
Substituting $s=-1/\Lambda$ gives
\[
T\!\left(-\frac1\Lambda\right)
=\frac1\Lambda\int \frac{1}{1-\lambda/\Lambda}\,\nu(d\lambda)
=\int \frac{1}{\Lambda-\lambda}\,\nu(d\lambda)
=-m_\nu(\Lambda),
\]
which proves \eqref{eq:T_mnu_identity_patch}. Since $m_\nu(\Lambda)$ is strictly increasing and negative on $(\lambda_+^\nu,\infty)$, the map $-m_\nu(\Lambda)$ is strictly decreasing in $\Lambda$; equivalently, $T$ is strictly decreasing in $s\in I_{\rm un}$. Moreover,
\[
\lim_{\Lambda\to\infty} -m_\nu(\Lambda)=0,
\qquad
\lim_{\Lambda\downarrow\lambda_+^\nu} -m_\nu(\Lambda)=-m_\nu(\lambda_+^\nu)=\frac1{\tau_{\rm crit}},
\]
where the last identity is exactly the edge computation in Lemma~\ref{lem:Sigma_pop_equation}. This proves item \textbf{(ii)}.

Item \textbf{(iii)} now follows immediately: $T(s)=1/\tau$ has a unique solution $s\in I_{\rm un}$ iff $1/\tau\in(0,1/\tau_{\rm crit})$, i.e. iff $\tau>\tau_{\rm crit}$. For such $\tau$, let $s_{\rm un}$ be the unique solution. Setting $\Lambda_\tau:=-1/s_{\rm un}$ and using \eqref{eq:T_mnu_identity_patch}, $\frac1\tau=T(s_{\rm un})=-m_\nu(\Lambda_\tau),$
so $1+\tau m_\nu(\Lambda_\tau)=0$. By Lemma~\ref{lem:Lambda_tau_closed_form}, the unique solution of this secular equation above $\lambda_+^\nu$ is precisely $\Lambda_\tau$ from \eqref{eq:Lambda_tau}, hence $s_{\rm un}=-1/\Lambda_\tau$. Finally, item \textbf{(iv)} is immediate from item \textbf{(iii)}: existence of $s_{\rm un}\in I_{\rm un}$ already forces $\tau>\tau_{\rm crit}$.
\end{proof}

\subsection{BBP Transition for the Null Model}\label{subsec:K0_tau}

Let \(z,\varphi\) be defined from \((\nu,\phi)\) by \eqref{eq:z(s)} where $\nu$ is defiend by \eqref{eq:nu_def}. In this section, we analyze the uninformative spike of $\vK_0$ derived from the potential spike in the population covariance matrix $\vSigma$.

\begin{lemma}\label{lem:K0_bulk_and_tau_outlier}
Let \(\vK_0=\vG^\top\vG\). The following hold.
\begin{enumerate}[label=\textbf{(\roman*)},leftmargin=*]
\item \textbf{(Bulk.)} The ESD of \(\vK_0\) converges weakly in probability to \(\mu=\rho_{\phi}^{\rm MP}\boxtimes\nu\).

\item \textbf{(BBP for covariance outlier.)} 
If \(\tau>\tau_{\rm crit}\), let \(\Lambda_\tau\) be the population outlier of \(\vSigma\) in \eqref{eq:Lambda_tau}.
Then:
\[
\text{If additionally } z'\!\Big(-\frac{1}{\Lambda_\tau}\Big)>0,\quad
\vK_0 \text{ has a separated outlier at }\ \widehat\lambda_\tau=z\!\Big(-\frac{1}{\Lambda_\tau}\Big)+o_\P(1),
\]
where \(\widehat\lambda_\tau\) lies in a connected component of \((0,\infty)\setminus\supp{\mu}\).
If \(z'(-1/\Lambda_\tau)\le 0\) or \(\tau\le\tau_{\rm crit}\), the corresponding eigenvalue sticks to \(\supp{\mu}\) and \(\vK_0\) has no outlier.

\item \textbf{(Eigenvector alignment.)} In the above separated case, if \(\widehat\vv_\tau\) is a unit eigenvector of \(\vK_0\) associated to \(\widehat\lambda_\tau\), then $ |\widehat\vv_\tau^\top \vu|^2 \ \xrightarrow{\P}\ \varphi\!\Big(-\frac{1}{\Lambda_\tau}\Big).$
In general, any fixed unit vectors $\vv\in\R^n$ (potentially $\vZ$ dependent), 
\begin{equation}\label{eq:alignment_right}
    |\widehat\vv_\tau^\top \vv|^2 - \varphi\!\Big(-\frac{1}{\Lambda_\tau}\Big)|\vu^\top \vv|^2\ \xrightarrow{\P}\ 0.
\end{equation}
\end{enumerate}
\end{lemma}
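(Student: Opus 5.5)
The plan is to treat $\vK_0=\vG^\top\vG$, conditionally on $\vZ$, as a sample covariance matrix $\vG^\top\vG=\frac1N\sum_{i=1}^N \bar\vg_i\bar\vg_i^\top$. The rows $\bar\vg_i^\top=\sigma(\vw_i^\top\vZ)-\vm^\top$ are i.i.d. and centered, and their population covariance is $\vSigma$. We then invoke the nonlinear spiked covariance results of \citet{wang2024nonlinearspikedcovariancematrices}, which cover exactly this centered situation. Those results describe the bulk and outliers of such Gram matrices through the spectrum of $\vSigma$: a spike of $\vSigma$ at $\Lambda$ produces an outlier of $\vK_0$ at $z(-1/\Lambda)$ whenever $z'(-1/\Lambda)>0$, with eigenvector overlap $\varphi(-1/\Lambda)$. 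Conditioning on $\vZ$, we work on the high-probability event of Lemma~\ref{lem:xor_orthonormal} with $r=0$. On that event $\vZ$ is $(\tau_n,B)$-orthonormal, which is the hypothesis those results require. Since $\vG$ depends on $\vZ$ only through $\vSigma$ and on $\vW$ through independent Gaussian rows, we may take probabilities conditionally and then integrate over $\vZ$.

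\textbf{Step 1 (bulk).} By Corollary~\ref{cor:Sigma_decomp}, $\vSigma=\vSigma_0+\tau\vu\vu^\top+\vR$ with $\|\vR\|=o_\P(1)$. The ESD of $\vSigma$ therefore converges to $\nu$, and part (i) follows from the deformed MP theorem for $\rho_\phi^{\rm MP}\boxtimes\nu$ (e.g.\ \citet{fan2020spectra,wang2021deformed}). The finite-rank term and the small-norm term $\vR$ do not change the limit, by rank and Weyl-type inequalities.

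\textbf{Step 2 (population spikes).} Lemma~\ref{lem:Sigma_pop_equation}, Lemma~\ref{lem:Lambda_tau_closed_form} and Corollary~\ref{cor:no_pop_outlier} show the following. If $\tau>\tau_{\rm crit}$, then $\vSigma$ has exactly one eigenvalue outside $\supp{\nu}$, converging to $\Lambda_\tau$; otherwise it has none. We also need the corresponding eigenvector $\vxi$ of $\vSigma$ and its overlap with $\vu$. A rank-one perturbation computation gives $|\vxi^\top\vu|^2\to -1/(\tau^2 m_\nu'(\Lambda_\tau))$.

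\textbf{Step 3 (outlier of $\vK_0$).} The outlier location and the condition $z'(-1/\Lambda_\tau)>0$ come from the outlier theorem of \citet{wang2024nonlinearspikedcovariancematrices}, applied with population spike $\Lambda_\tau$. The cited theorem gives the eigenvector statement in the form $|\widehat\vv_\tau^\top\vv|^2\approx \varphi(-1/\Lambda_\tau)\,|\vxi^\top\vv|^2$, in terms of the population eigenvector $\vxi$ of $\vSigma$. By contrast, \eqref{eq:alignment_right} is phrased directly in terms of $\vu$. To obtain it, we compute $\vv^\top(\vSigma-\Lambda)^{-1}\vv$ by a Woodbury expansion around $\vSigma_0$, using the isotropic MP local law for $\vSigma_0=(1-b_\sigma^2)\vI+b_\sigma^2\vS$. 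The $\vu$-direction then enters only through the rank-one term, so the effective normalisation collapses to $\varphi(-1/\Lambda_\tau)\,|\vu^\top\vv|^2$ for $\vv$ of the stated type.

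\textbf{Main obstacle.} The hardest step is this reconciliation. It requires a careful computation of the residue of $\vv^\top(\vK_0-\lambda)^{-1}\vv$ at the outlier, expressed through $m_\nu$ and $z$. That residue must be combined with the identity $T(-1/\Lambda)=-m_\nu(\Lambda)$ from Lemma~\ref{lem:uninf_branch_identity} so that every factor of $\tau$ and $m_\nu'$ cancels, leaving $\varphi$ times $|\vu^\top\vv|^2$. We also have to control $\vR$ and allow $\vv$ to depend on $\vZ$. Both are handled by conditioning on $\vZ$, which makes $\vv$ deterministic for the $\vW$-randomness, together with resolvent perturbation bounds in a neighbourhood of $z(-1/\Lambda_\tau)$ that stays away from $\supp{\mu}$.
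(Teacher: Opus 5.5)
Your route is the paper's. You condition on $\vZ$, treat $\vK_0$ as an $n\times n$ sample covariance with population $\vSigma$, take the bulk and the population spike from Corollary~\ref{cor:Sigma_decomp}, Lemmas~\ref{lem:Sigma_pop_equation}--\ref{lem:Lambda_tau_closed_form} and Corollary~\ref{cor:no_pop_outlier}, and read the outlier off Theorem~12 of \citet{wang2024nonlinearspikedcovariancematrices}. The paper does exactly this and nothing more.

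\textbf{The gap.} The step you flag as the main obstacle cannot be completed, because the identity it targets is false when $b_\sigma\neq0$. Carry out your residue computation with $\vv^\top(\vK_0-\lambda\vI)^{-1}\vv\approx-\lambda^{-1}\vv^\top(\vI+s(\lambda)\vSigma)^{-1}\vv$ and Sherman--Morrison around $\vSigma_0$. The pole at $z(s_{\rm un})$, with $s_{\rm un}=-1/\Lambda_\tau$, yields
\[
|\widehat\vv_\tau^\top\vv|^2=\varphi(s_{\rm un})\,\frac{\bigl(\vv^\top(\Lambda_\tau\vI-\vSigma_0)^{-1}\vu\bigr)^2}{\vu^\top(\Lambda_\tau\vI-\vSigma_0)^{-2}\vu}+o_\P(1)=\varphi(s_{\rm un})\,|\vxi^\top\vv|^2+o_\P(1),
\]
and no further cancellation occurs.

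For $\vv=\vu$ the ratio tends to $m_\nu(\Lambda_\tau)^2/m_\nu'(\Lambda_\tau)$. Since $\tau m_\nu(\Lambda_\tau)=-1$, this equals $1/(\tau^2m_\nu'(\Lambda_\tau))$, which is exactly the $|\vxi^\top\vu|^2$ of your Step~2. It comes with a plus sign, since $m_\nu'>0$ on $(\lambda_+^\nu,\infty)$ in the paper's convention. Cauchy--Schwarz gives $m_\nu(\Lambda_\tau)^2\le m_\nu'(\Lambda_\tau)$, with equality only if $\nu$ is a point mass, i.e.\ only if $b_\sigma=0$.

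Differentiating Lemma~\ref{lem:uninf_branch_identity}(i) gives $m_\nu'(\Lambda)=-s^2T'(s)$ at $s=-1/\Lambda$. Hence $|\widehat\vv_\tau^\top\vu|^2\to-\varphi(s_{\rm un})/(\tau^2s_{\rm un}^2T'(s_{\rm un}))$, which is strictly smaller than $\varphi(s_{\rm un})$ when $b_\sigma\neq0$. So the first claim of (iii), and \eqref{eq:alignment_right} already at the deterministic choice $\vv=\vu$, are false in that case; for $\vZ$-dependent $\vv$ such as $\vv=\vxi$ it fails as well. Your own Step~2 already contradicts the collapse asserted in Step~3.

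\textbf{What your argument does establish.} It proves the $\vxi$-form above. It also gives the decoupling consequence $\widehat\vv_\tau^\top\vv=o_\P(1)$ whenever $\vv^\top(\Lambda_\tau\vI-\vSigma_0)^{-1}\vu=o_\P(1)$. This weaker statement is all that is used later, in Lemma~\ref{lem:verify-E34-K0} and Proposition~\ref{prop:no_label_alignment}.
\begin{itemize}
\item For deterministic $\vv$ with $\vu^\top\vv\to0$ (e.g.\ $\vv_1$, $\vv_2$, $\vy/\sqrt n$), it follows from the isotropic law for $\vSigma_0$.
\item For $\vv=\vm/\|\vm\|$ the isotropic law does not apply directly. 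You need a dependent-vector estimate in the spirit of Lemma~\ref{lem:dependent_vector_resolvent}.
\end{itemize}
Part (iii) should therefore be restated in this corrected form rather than proved as written. The paper's one-line proof reaches the stated form only by implicitly identifying the population spike eigenvector of $\vSigma$ with $\vu$, which is accurate only when $b_\sigma=0$.
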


\begin{proof}
This lemma is an application of Theorem 12 of \citet{wang2024nonlinearspikedcovariancematrices}. It suffices to verify Assumptions 5 and 6 of \citet{wang2024nonlinearspikedcovariancematrices}. Assumption 6 is verified by Lemmas \ref{lem:Sigma_pop_equation} and \ref{lem:Lambda_tau_closed_form}, and Corollary \ref{cor:no_pop_outlier}: \(\vSigma\) has bulk law \(\nu\) and at most one population spike \(\Lambda_\tau\)
(in the \(\vu\) direction), present iff \(\tau>\tau_{\rm crit}\). Notice that from Corollary 3.5 of \citet{wang2021deformed}, we have 
\[\bar\vg^\top\vA\bar\vg -\Tr\vA\vSigma\prec \|\vA\|_F,\]
for any deterministic matrix $\vA\in\C^{n\times n}$.
Then, Assumption 5 can be verified based on Lemmas 32, 33, and 34 in \citet{wang2024nonlinearspikedcovariancematrices}. We ignore the details here for simplicity.
\end{proof}


\subsection{BBP Transition for Rank-three Additive Deformation}\label{subsec:QDE_rank3}

First, we prove that, for finite SNR \(r=\Theta(1)\), the explicit quadratic term in Proposition \ref{prop:approx} is negligible.

\begin{lemma}[Quadratic spike is negligible]\label{lem:T2_negligible}
Assume \(r=\Theta(1)\) and the assumptions of Proposition \ref{prop:approx}.
Recall
\[
\vT_2=\frac{\theta_{\snr}^2c_\sigma}{\sqrt N}\big(\vg_1^{\odot 2}\vv_1^{\odot 2\top}+\vg_2^{\odot 2}\vv_2^{\odot 2\top}\big).
\]
Then \(\|\vT_2\|=O_\prec(n^{-1/2})\).
Furthermore, \(\|\vY_{\rm QE}\|\prec 1\) and
\[
\big\|(\vY_{\rm QE}-\vT_2)^\top(\vY_{\rm QE}-\vT_2)-\vY_{\rm QE}^\top\vY_{\rm QE}\big\|\prec n^{-1/2}.
\]
\end{lemma}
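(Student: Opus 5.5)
The plan is to bound $\vT_2$ directly as a sum of two rank-one matrices and then use perturbation identities for the Gram matrices. For each $i\in\{1,2\}$,
\[
\Big\|\frac{1}{\sqrt N}\vg_i^{\odot 2}\vv_i^{\odot 2\top}\Big\|=\frac{1}{\sqrt N}\|\vg_i^{\odot2}\|\,\|\vv_i^{\odot 2}\|.
\]
Since $\vv_i$ has $n/2$ nonzero entries, each of size $\sqrt{2/n}$, we get $\|\vv_i^{\odot 2}\|^2=\frac n2\cdot\frac{4}{n^2}=\frac 2n$, so $\|\vv_i^{\odot2}\|=\sqrt{2/n}$. For the other factor, $\|\vg_i^{\odot 2}\|^2=\sum_k g_{ik}^4$ concentrates around $3N$. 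A crude bound such as $\|\vg_i^{\odot2}\|\le \sqrt N\|\vg_i\|_\infty^2\prec\sqrt N$ (using Lemma~\ref{lem:infinity_norm}), or a Bernstein/moment bound on $\sum_k g_{ik}^4$, gives $\|\vg_i^{\odot 2}\|\prec\sqrt N$. Finally, $\theta_\snr^2=r^2 n/(2d)=O(1)$ because $r=\Theta(1)$ and $n\asymp d$. Combining these factors,
\[
\|\vT_2\|\prec \theta_\snr^2|c_\sigma|\cdot \frac{1}{\sqrt N}\cdot\sqrt N\cdot n^{-1/2}\prec n^{-1/2}.
\]

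Next I show $\|\vY_{\rm QE}\|\prec 1$ via the triangle inequality. For $\|\vY_0\|$, I split $\vY_0=(\vY_0-\E\vY_0)+\E\vY_0$. Lemma~\ref{lem:concentration_operator} applies because $\sigma$ is Lipschitz with constant $\lambda_\sigma$; taking $t=n^{\epsilon}$ and using the proportional scaling gives $\|\vY_0-\E\vY_0\|\prec1$. Lemma~\ref{lem:expectation} gives $\|\E\vY_0\|\lesssim\sqrt n/d=o(1)$. For $\vT_1$,
\[
\|\vT_1\|\le\theta_\snr|b_\sigma|\big(\|\vg_1\|+\|\vg_2\|\big)/\sqrt N\prec 1,
\]
since $\|\vg_i\|/\sqrt N\to1$ by $\chi^2$ concentration. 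Together with $\|\vT_2\|\prec n^{-1/2}$, this yields $\|\vY_{\rm QE}\|\prec1$.

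For the Gram comparison, write $\vA:=\vY_{\rm QE}-\vT_2$. Then
\[
\vY_{\rm QE}^\top\vY_{\rm QE}-\vA^\top\vA=\vA^\top\vT_2+\vT_2^\top\vA+\vT_2^\top\vT_2 .
\]
Its norm is at most $2\|\vA\|\|\vT_2\|+\|\vT_2\|^2$. Here $\|\vA\|\le\|\vY_0\|+\|\vT_1\|\prec1$, so the bound is $\prec n^{-1/2}$.

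The only mildly delicate step is the stochastic-domination bookkeeping for $\|\vY_0\|$, since Lemma~\ref{lem:concentration_operator} is stated with exponential tails in $t$ rather than in $\prec$ form. Choosing $t=n^{\epsilon}$ makes the failure probability $6e^{-cn^{2\epsilon}}$, which is smaller than any $n^{-D}$, so the conversion goes through. (The constant $1/2$ in the definition of $\vT_2$ in \eqref{eq:QE_formal} is irrelevant for these bounds.)
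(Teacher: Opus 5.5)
Your proposal is correct and follows essentially the same route as the paper. You bound each rank-one piece of $\vT_2$ by $\theta_{\snr}^2|c_\sigma|\,\|\vg_\ell^{\odot 2}\|\,\|\vv_\ell^{\odot 2}\|/\sqrt N$ with $\|\vv_\ell^{\odot 2}\|=\sqrt{2/n}$, obtain $\|\vY_{\rm QE}\|\prec 1$ from Lemmas~\ref{lem:expectation} and~\ref{lem:concentration_operator} together with the Gaussian norm bounds on $\vg_1,\vg_2$, and expand the Gram difference. Expanding around $\vY_{\rm QE}-\vT_2$ rather than $\vY_{\rm QE}$ is only an equivalent bookkeeping choice, and you spell out the $t=n^{\epsilon}$ conversion to stochastic domination that the paper leaves implicit.
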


\begin{proof}
We bound
\[
\|\vT_2\|
\le
\frac{|\theta_{\snr}^2c_\sigma|}{\sqrt N}\sum_{\ell=1}^2 \|\vg_\ell^{\odot 2}\|\,\|\vv_\ell^{\odot 2}\|.
\]
Since \(\vg_\ell\sim\cN(0,\vI_N)\), \(\|\vg_\ell^{\odot 2}\|^2=\sum_{i=1}^N g_{\ell,i}^4=O_\prec(N)\), hence \(\|\vg_\ell^{\odot 2}\|=O_\prec(\sqrt N)\).
Also \(\|\vv_\ell^{\odot 2}\|^2=\sum_{j=1}^n v_{\ell,j}^4=2/n\) for XOR sign vectors, so \(\|\vv_\ell^{\odot 2}\|=\sqrt{2/n}\).
Therefore \(\|\vT_2\|=O_\prec(n^{-1/2})\). 

By Lemmas \ref{lem:expectation} and \ref{lem:concentration_operator}, we have $\sigma(\vW\vZ)/\sqrt{N}\prec 1$. And since $\vg_1,\vg_2\prec \sqrt{N}$ and $\|\vv_1\|=\|\vv_2\|=1$, we have that $\vY_{\rm QE}\prec 1$ by the definition of $\vY_{\rm QE}$ in \eqref{eq:QE_formal}. The Gram difference bound follows by expansion:
\[
(\vY_{\rm QE}-\vT_2)^\top(\vY_{\rm QE}-\vT_2)-\vY_{\rm QE}^\top\vY_{\rm QE}
=
-\vY_{\rm QE}^\top\vT_2-\vT_2^\top\vY_{\rm QE}+\vT_2^\top\vT_2,
\]
hence, applying \(\|\vY_{\rm QE}\|\prec 1\) and \(\|\vT_2\|=O_\prec(n^{-1/2})\) we can get the final result.
\end{proof}

In the next proposition, we show that, in addition to the potential spike in $\vG$ characterized in Lemma~\ref{lem:K0_bulk_and_tau_outlier}, the matrix $\vY$ may exhibit further potential spikes arising from a rank-three additive perturbation of $\vG$.
\begin{proposition}[Rank-three additive deformation of \(\vG\)]\label{prop:Y_rank3_decomp}
Assume \(r=\Theta(1)\).
Define
\begin{equation}\label{eq:Ysharp_def}
\vY^\sharp
:=
\vG
+\underbrace{\frac{1}{\sqrt N}\mbi{1}_N\vm^\top}_{\text{mean spike}}
+\underbrace{\frac{\theta_{\snr}b_\sigma}{\sqrt N}\big(\vg_1\vv_1^\top+\vg_2\vv_2^\top\big)}_{\text{two XOR linear spikes}}.
\end{equation}
Denote a new kernel matrix by $\vK^\sharp:=\vY^{\sharp\top}\vY^\sharp.$ Then $\|\vY-\vY^\sharp\|\prec n^{-1/4},$ and hence $\|\vK-\vK^\sharp\|\prec n^{-1/4}.$
In particular, all the isolated eigenvalues and their eigenvectors of \(\vK\) and \(\vK^\sharp\) are asymptotically the same.
\end{proposition}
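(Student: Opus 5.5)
The plan is to chain the two approximations already available. By the definition \eqref{eq:G_def} of $\vG$, we have $\vG+\frac{1}{\sqrt N}\mbi{1}_N\vm^\top=\vY_0$ exactly. It follows that
\[
\vY^\sharp=\vY_0+\vT_1=\vY_{\rm QE}-\vT_2,
\]
with $\vT_1,\vT_2$ as in \eqref{eq:QE_formal}. The triangle inequality then gives
\[
\|\vY-\vY^\sharp\|\le\|\vY-\vY_{\rm QE}\|+\|\vT_2\|.
\]
For the first term, I would invoke the quantitative form \eqref{eq:QDE_rate} established in the proof of Proposition~\ref{prop:approx}. Since $r=\Theta(1)$ is certainly $O(n^{1/4})$, it gives $\|\vY-\vY_{\rm QE}\|\prec n^{-1/4}$. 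For the second term, Lemma~\ref{lem:T2_negligible} gives $\|\vT_2\|\prec n^{-1/2}$; the factor $1/2$ in \eqref{eq:QE_formal} versus the lemma's statement is irrelevant. Together these give $\|\vY-\vY^\sharp\|\prec n^{-1/4}$.

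For the kernel, I would write
\[
\vK-\vK^\sharp=(\vY-\vY^\sharp)^\top\vY+\vY^{\sharp\top}(\vY-\vY^\sharp),
\]
which reduces the claim to showing $\|\vY\|\prec1$ and $\|\vY^\sharp\|\prec1$. The bound $\|\vY^\sharp\|\prec 1$ follows from $\|\vY_{\rm QE}\|\prec1$ (Lemma~\ref{lem:T2_negligible}) together with $\|\vT_2\|\prec n^{-1/2}$. Then $\|\vY\|\le\|\vY^\sharp\|+\|\vY-\vY^\sharp\|\prec1$. Alternatively, one can argue directly: $\|\vY_0\|\prec1$ by Lemmas~\ref{lem:expectation} and \ref{lem:concentration_operator}, and $\|\vT_1\|\lesssim \theta_{\snr}\max_\ell\|\vg_\ell\|/\sqrt N\prec1$. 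Combining these gives $\|\vK-\vK^\sharp\|\prec n^{-1/4}$.

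For the final assertion, I would use Weyl's inequality: every eigenvalue of $\vK$ is within $O_\prec(n^{-1/4})$ of the corresponding eigenvalue of $\vK^\sharp$. Hence separated outliers, meaning those at a fixed positive distance $\delta$ from $\supp{\mu}$ and from each other, match in number and in limit. For eigenvectors, I would apply the Davis--Kahan $\sin\Theta$ theorem to the spectral projectors onto an interval around each outlier cluster whose boundary has a deterministic gap $\delta$. This yields $\|\widehat\vP-\widehat\vP^\sharp\|\lesssim \|\vK-\vK^\sharp\|/\delta\prec n^{-1/4}$. As a consequence, all overlaps $\vv^\top\widehat\vP\vv$ with unit vectors $\vv$ (such as $\vu$, $\vm/\|\vm\|$, $\vv_{1,2}$, $\vy/\sqrt n$) agree up to $o_\P(1)$.

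The main obstacle is mild. It is making sure that \eqref{eq:QDE_rate} genuinely holds with the stated rate. In particular, the third-order remainder bound (a) must use $\theta_{\snr}^3\|\vv\|_\infty^3\lesssim n^{-3/2}$ with $r=\Theta(1)$, and not merely the $r\lesssim n^{1/4}$ worst case. If that step is delicate, I would redo (a) directly for $r=\Theta(1)$, where the bound improves to $n^{-1}$ up to logarithmic factors, times $\|\sigma'''(\vTheta)\|_F/\sqrt N\lesssim\sqrt n$. Everything else is bookkeeping with stochastic domination and the perturbation inequalities above.
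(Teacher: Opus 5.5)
Your proposal is correct and follows the paper's proof essentially verbatim. You write $\vY^\sharp=\vY_{\rm QE}-\vT_2$, combine the rate \eqref{eq:QDE_rate} with Lemma~\ref{lem:T2_negligible}, bound $\|\vK-\vK^\sharp\|\le\|\vY-\vY^\sharp\|(\|\vY\|+\|\vY^\sharp\|)$ using $\|\vY\|,\|\vY^\sharp\|\prec 1$, and transfer outliers and spectral projectors by Weyl and Davis--Kahan. Your worry about step (a) is unnecessary: even in the worst case $r\asymp n^{1/4}$ that remainder is already $O_\prec(n^{-1/4})$, so no separate treatment of $r=\Theta(1)$ is needed.
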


\begin{proof}
By the proof of Proposition~\ref{prop:approx} and Lemma~\ref{lem:T2_negligible}, in finite SNR, we can further drop \(\vT_2\) to get \(\|\vY-\vY^\sharp\|\prec n^{-1/4}\).
From the proof of Lemma~\ref{lem:T2_negligible}, we also know that \(\|\vY\|,\|\vY_{\rm QE}\|,\|\vY^\sharp\|\prec 1\). Hence we have
\[
\|\vK-\vK^\sharp\|
=
\|\vY^\top\vY-\vY^{\sharp\top}\vY^\sharp\|
\le
\|\vY-\vY^\sharp\|(\|\vY\|+\|\vY^\sharp\|)
\prec n^{-1/4},
\]
which implies asymptotic matching of isolated eigenvalues/eigenspaces by Weyl's inequality and Davis--Kahan Theorem \citep{davis1970rotation,wedin1972perturbation}.
\end{proof}

 
\subsubsection{Orthogonal Decomposition of Weight}\label{subsec:W_decomp_indep}
Next, to analyze two XOR linear spikes in \eqref{eq:Ysharp_def}, we need to make bulk matrix $\vG$ independent with the vectors $\vg_1,\vg_2$ by the following decomposition.

\begin{lemma}[Orthogonal decomposition and independence]\label{lem:W_decomp}
Let \(\vP_\perp:=\vI_d-\vu_1\vu_1^\top-\vu_2\vu_2^\top\) and define
\[
\vW_\perp:=\vW\vP_\perp,
\qquad
\vg_1 =\vW\vu_1,
\qquad
\vg_2 =\vW\vu_2.
\] 
\begin{enumerate}[label=\textbf{(\roman*)},leftmargin=*]
\item \(\vW=\vW_\perp+\vg_1\vu_1^\top+\vg_2\vu_2^\top\).
\item \(\vg_1,\vg_2\) are independent vectors with distribution \(\cN(0,\vI_N)\), and are independent of \(\vW_\perp\).
\end{enumerate}
\end{lemma}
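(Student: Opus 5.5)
Part (i) is pure linear algebra. Since $\vu_1,\vu_2\in\R^d$ are orthonormal (they have disjoint sign patterns, so $\vu_1^\top\vu_2=\frac1d(\frac d2-\frac d2)=0$, and each has unit norm), the matrix $\vu_1\vu_1^\top+\vu_2\vu_2^\top$ is the orthogonal projector onto $\mathrm{span}\{\vu_1,\vu_2\}$. Hence $\vI_d=\vP_\perp+\vu_1\vu_1^\top+\vu_2\vu_2^\top$. Multiplying on the left by $\vW$ gives
\[
\vW=\vW\vP_\perp+(\vW\vu_1)\vu_1^\top+(\vW\vu_2)\vu_2^\top=\vW_\perp+\vg_1\vu_1^\top+\vg_2\vu_2^\top .
\]

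For part (ii) I will use rotational invariance of the Gaussian matrix, applied row by row. First, the rows $\vw_i^\top$ of $\vW$ are i.i.d.\ $\cN(0,\vI_d)$. The $i$-th row of $\vW_\perp$ is $(\vP_\perp\vw_i)^\top$, and the $i$-th entries of $\vg_1,\vg_2$ are $\vu_1^\top\vw_i$ and $\vu_2^\top\vw_i$. Each of these triples is a function of the single row $\vw_i$, so independence across $i$ is automatic. It therefore suffices to work with one row $\vw\sim\cN(0,\vI_d)$.

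The key step is to check independence within one row. The vector $(\vu_1^\top\vw,\vu_2^\top\vw,\vP_\perp\vw)$ is a linear image of $\vw$, so it is jointly Gaussian. The relevant covariances are:
\[
\Cov(\vu_1^\top\vw,\vu_2^\top\vw)=\vu_1^\top\vu_2=0,\quad \Var(\vu_k^\top\vw)=\|\vu_k\|^2=1,\quad \Cov(\vu_k^\top\vw,\vP_\perp\vw)=\vP_\perp\vu_k=0 .
\]
For jointly Gaussian vectors, zero cross-covariance implies independence. So $\vu_1^\top\vw$, $\vu_2^\top\vw$ and $\vP_\perp\vw$ are mutually independent, and the two scalars are $\cN(0,1)$.

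Stacking over the $N$ i.i.d.\ rows then gives the claim: $\vg_1,\vg_2\sim\cN(0,\vI_N)$, they are independent of each other, and they are jointly independent of $\vW_\perp$. There is no real obstacle in this argument. The only point needing care is to state independence jointly (the triple $(\vg_1,\vg_2,\vW_\perp)$), not merely pairwise. The joint-Gaussian covariance argument delivers this directly.
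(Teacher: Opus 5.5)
Your proof is correct and follows the paper's argument. You use the identity $\vI_d=\vP_\perp+\vu_1\vu_1^\top+\vu_2\vu_2^\top$ for part (i), then for part (ii) you note that $\vw_i^\top\vu_1$, $\vw_i^\top\vu_2$ and $\vP_\perp\vw_i$ are independent as orthogonal linear images of a Gaussian vector, and stack over the i.i.d.\ rows. Your explicit covariance check and your point that independence must be joint, not just pairwise, make precise what the paper states in one line.
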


\begin{proof}
For each row \(\vw_i^\top\) of \(\vW\),  \(\vw_i=(\vw_i^\top\vu_1)\vu_1+(\vw_i^\top\vu_2)\vu_2+\vP_\perp\vw_i\in\R^d\).
Stacking over \(i\) yields \(\vW=\vW_\perp+\vg_1\vu_1^\top+\vg_2\vu_2^\top\).
Since \(\vw_i\sim\cN(0,\vI_d)\), the scalar components \(\vw_i^\top\vu_1\), \(\vw_i^\top\vu_2\), and the vector \(\vP_\perp\vw_i\)
are independent Gaussians because they are orthogonal linear functionals of a Gaussian vector.
Stacking over \(i\) gives independence of \(\vg_1,\vg_2\) and \(\vW_\perp\).
\end{proof}

\begin{lemma}[The null bulk is asymptotically independent of \(\vg_1,\vg_2\)]\label{lem:G_indep_g}
For $\widetilde{\vw}\sim\cN(0,\vP_\perp)$, define
\[
\widetilde{\vY}_0:=\frac{1}{\sqrt N}\sigma(\vW_\perp\vZ),
\qquad
\widetilde{\vm}:=\E[\sigma(\widetilde{\vw}^\top\vZ)\mid \vZ], 
\qquad
\widetilde{\vG}:=\widetilde{\vY}_0-\frac{1}{\sqrt N}\mbi{1}_N\widetilde{\vm}^\top.
\]
Then \(\widetilde{\vG}\) is independent of \((\vg_1,\vg_2)\) and $\|\vm-\widetilde{\vm}\|, \|\vG-\widetilde{\vG}\|\prec 1/\sqrt{N}.$
\end{lemma}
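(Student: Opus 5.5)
Independence is immediate from Lemma~\ref{lem:W_decomp}(ii): $\widetilde{\vY}_0$ is a measurable function of $(\vW_\perp,\vZ)$, and $\widetilde{\vm}$ depends only on $\vZ$. Since $(\vg_1,\vg_2)$ is independent of $\vW_\perp$ and $\vW$ is independent of $\vZ$, the triple $(\vg_1,\vg_2)$, $\vW_\perp$, $\vZ$ is mutually independent. Hence $\widetilde{\vG}$ is independent of $(\vg_1,\vg_2)$.

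For the quantitative bounds, write $\vW\vZ=\vW_\perp\vZ+\vg_1\vu_1^\top\vZ+\vg_2\vu_2^\top\vZ$ and set $\va_\ell:=\vZ^\top\vu_\ell\in\R^n$. Because $\vu_\ell$ is deterministic and unit, $\va_\ell\sim\cN(0,\vI_n/d)$. By Lemma~\ref{lem:infinity_norm}, $\|\va_\ell\|_\infty\prec d^{-1/2}$ and $\|\va_\ell\|\prec 1$. I would Taylor expand to first order:
\[
\sigma(\vW\vZ)=\sigma(\vW_\perp\vZ)+\sigma'(\vW_\perp\vZ)\odot\big(\textstyle\sum_\ell \vg_\ell\va_\ell^\top\big)+\tfrac12\sigma''(\vTheta)\odot\big(\textstyle\sum_\ell\vg_\ell\va_\ell^\top\big)^{\odot2}.
\]
For the mean vector, the same comparison is done entrywise. $m_j=F(\|\vz_j\|^2)$, while $\widetilde m_j=F(\|\vP_\perp\vz_j\|^2)$ since $\widetilde\vw^\top\vz_j\sim\cN(0,\|\vP_\perp\vz_j\|^2)$. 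Here $\|\vz_j\|^2-\|\vP_\perp\vz_j\|^2=a_{1,j}^2+a_{2,j}^2$. Provided $F$ is Lipschitz near $1$ (it is, by boundedness of $\sigma'$ and $\sigma''$ via Gaussian integration by parts, $F'(s)=\E[\sigma''(\sqrt s\xi)]/2$), this gives $|m_j-\widetilde m_j|\lesssim a_{1,j}^2+a_{2,j}^2$. Then $\|\vm-\widetilde\vm\|\lesssim \sum_\ell\|\va_\ell^{\odot2}\|\prec \sqrt{n}\cdot d^{-1}\asymp N^{-1/2}$.

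For the matrix, $\vG-\widetilde{\vG}$ equals $N^{-1/2}$ times the two Taylor terms above, minus $N^{-1/2}\mbi 1_N(\vm-\widetilde\vm)^\top$. The last piece has norm $\|\vm-\widetilde\vm\|\prec N^{-1/2}$ by the previous step.

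\textbf{Quadratic term.} By Lemma~\ref{lem:hardamart_identity}, the quadratic term is bounded by
\[
N^{-1/2}\|\sigma''(\vTheta)\|_F\max_{\ell,\ell'}\|\vg_\ell\|_\infty\|\vg_{\ell'}\|_\infty\|\va_\ell\|_\infty\|\va_{\ell'}\|_\infty\prec \sqrt n\cdot d^{-1}\prec N^{-1/2}.
\]
The factor $\|\sigma''(\vTheta)\|_F\le\lambda_\sigma\sqrt{Nn}$, and logarithmic factors are absorbed by $\prec$.

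\textbf{Linear term (main obstacle).} The linear term is $N^{-1/2}\sum_\ell\diag(\vg_\ell)\,\sigma'(\vW_\perp\vZ)\,\diag(\va_\ell)$. The crude bound $\|\vg_\ell\|_\infty\|\va_\ell\|_\infty\|N^{-1/2}\sigma'(\vW_\perp\vZ)\|$ gives only $\prec d^{-1/2}$, not $N^{-1/2}$. I would split $\sigma'(\vW_\perp\vZ)$ into its mean and its fluctuation.

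\emph{Fluctuation part.} Apply Lemma~\ref{lem:concentration_operator} to $\sigma'$, which is Lipschitz; the rows of $\vW_\perp$ are Gaussian with covariance $\vP_\perp$, so the lemma adapts. The fluctuation has operator norm $\prec 1$ after the $N^{-1/2}$ scaling. Multiplying by the diagonals still yields only $d^{-1/2}$, so to gain a further $d^{-1/2}$ I would exploit the independence of $\vg_\ell$ from $\vW_\perp$ and of $\va_\ell$ from $\vP_\perp\vZ$. Conditionally, $\diag(\vg_\ell)\vA\diag(\va_\ell)$ has independent random signs on rows and columns. One can then bound it via a moment or $\varepsilon$-net argument, using that $\va_\ell$ is a random vector of norm $O(1)$ rather than a worst-case direction. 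The target is $\|\diag(\vg)\vA\diag(\va)\|\lesssim \|\va\|\cdot\max$ row/column norms of $\vA$, and the row/column norms of $N^{-1/2}\vA$ are $O(1)$ and $O(\sqrt{n/N})$.

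\emph{Mean part.} The mean term is approximately $\frac{b_\sigma}{\sqrt N}\vg_\ell\va_\ell^\top$, which has norm $O(1)$. This is the genuinely delicate point: the paper's statement suggests the rank-two piece $\frac{b_\sigma}{\sqrt N}\sum_\ell\vg_\ell\va_\ell^\top$ is tacitly regarded as part of the spike structure, or is absorbed into the bulk. If the bound must hold literally, I would first check whether $\vG$ is meant to be compared after removing this explicit rank-two Gaussian term. Otherwise I would prove the bound in the weaker sense needed downstream, namely that it is finite rank and independent-structure-preserving.
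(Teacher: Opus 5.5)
Your independence argument, your direct bound on $\|\vm-\widetilde\vm\|$ (via $F'(s)=\tfrac12\E[\sigma''(\sqrt s\,\xi)]$ and $\|\vz_j\|^2-\|\vP_\perp\vz_j\|^2=a_{1,j}^2+a_{2,j}^2$), and your second-order Taylor bound are all correct. Your mean-vector argument is in fact sounder than the paper's. The paper deduces that bound from $\|\vm-\widetilde\vm\|\le\E[\|\vY_0-\widetilde\vY_0\|\mid\vZ]$, and so inherits the problem described below.

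In the linear term, the fluctuation part is not an obstacle. Under Assumption~\ref{assump:asymptotics} we have $d\asymp N$, so $d^{-1/2}$ already is $N^{-1/2}$. You use exactly this in the quadratic term when you write $\sqrt n\,d^{-1}\prec N^{-1/2}$. Center $\sigma'(\vW_\perp\vZ)$ at its conditional mean $\mathbf 1_N\boldsymbol\beta^\top$, where $\beta_j:=G(\|\vP_\perp\vz_j\|^2)$ and $G(s):=\E[\sigma'(\sqrt s\,\xi)]$. The conditional ($T_1$) bound in the proof of Lemma~\ref{lem:concentration_operator}, applied to the Lipschitz function $\sigma'$ and data $\vP_\perp\vZ$, gives norm $\prec\sqrt N$. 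Then $\|\vg_\ell\|_\infty\|\va_\ell\|_\infty\prec d^{-1/2}$ finishes it. Drop the random-sign/$\varepsilon$-net detour; its target inequality is unproved and not needed.

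The mean part is where you should stop hedging, because your observation disproves the lemma. Since $G(1)=b_\sigma$ and $G'(s)=\tfrac12\E[\sigma'''(\sqrt s\,\xi)]$ is bounded, $\beta_j=b_\sigma+O_\prec(d^{-1/2})$. So the mean part equals $\frac{b_\sigma}{\sqrt N}(\vg_1\va_1^\top+\vg_2\va_2^\top)+O_\prec(N^{-1/2})$. Because $\|\vg_\ell\|^2/N\to1$, $\|\va_\ell\|^2\to\psi$, $\vg_1^\top\vg_2/N\to0$ and $\va_1^\top\va_2\to0$, its norm tends to $|b_\sigma|\sqrt\psi$. All other pieces are $O_\prec(N^{-1/2})$. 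Hence $\|\vG-\widetilde\vG\|\xrightarrow{\P}|b_\sigma|\sqrt\psi$, and the claimed bound $\|\vG-\widetilde\vG\|\prec N^{-1/2}$ is false whenever $b_\sigma\neq0$ (e.g.\ the normalized GELU of Appendix~\ref{app:gelu}).

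The paper's proof reaches the stated rate only by asserting $\|\sigma'(\vW_\perp\vZ)\|=O_\prec(\sqrt N)$. That assertion ignores the mean $\approx b_\sigma\mathbf 1_N\mathbf 1_n^\top$, whose norm is $\approx|b_\sigma|\sqrt{Nn}$ --- exactly the term you isolated. What your argument actually proves, and what you should state, is
\[
\vG=\widetilde\vG+\frac{b_\sigma}{\sqrt N}\bigl(\vg_1\va_1^\top+\vg_2\va_2^\top\bigr)+O_\prec(N^{-1/2}),\qquad \va_\ell=\vZ^\top\vu_\ell .
\]
Here $\widetilde\vG$, $(\vg_1,\vg_2)$ and $(\va_1,\va_2)$ are mutually independent, since $\widetilde\vG$ depends only on $\vW_\perp$ and $\vP_\perp\vZ$. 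The lemma as written therefore holds precisely in the case $b_\sigma=0$.

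Do not retreat to ``it is merely finite rank.'' The downstream use in Lemma~\ref{lem:left_resolvent} needs operator-norm closeness of $\vG$ to a matrix independent of $\vg_\ell$. A rank-two term whose left directions are the $\vg_\ell$ themselves moves $N^{-1}\vg_\ell^\top\vQ_L\vg_\ell$ at order one (by Woodbury). So this term must be carried explicitly as an additive spike.
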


\begin{proof}
By Lemma~\ref{lem:W_decomp}, we have the decomposition
$
\vW\vZ = \vW_\perp\vZ + \vg_1(\vu_1^\top\vZ)+\vg_2(\vu_2^\top\vZ).
$
Let \(\vt_1^\top:=\vu_1^\top\vZ\) and \(\vt_2^\top:=\vu_2^\top\vZ\). Each \(\vt_k\in\R^n\) has i.i.d.\ \(\cN(0,1/d)\) entries and is independent of \(\vW\).
Set \(\vA:=\vW_\perp\vZ\) and \(\vDelta:=\vg_1\vt_1^\top+\vg_2\vt_2^\top\), so \(\sigma(\vW\vZ)=\sigma(\vA+\vDelta)\) and \(\sigma(\vW_\perp\vZ)=\sigma(\vA)\).

Following the same proof of Proposition \ref{prop:approx}, we can apply
entrywisely the following Taylor approximation, with \(a=A_{ij}\), \(\delta=\Delta_{ij}\), bounded \(\sigma^{(3)}\) gives
$
\sigma(a+\delta)=\sigma(a)+\sigma'(a)\delta+\frac12\sigma''(\xi)\delta^2 
$
for some $\xi$ between $a$ and $a+\delta$.
Therefore
\[
\vY_0-\widetilde{\vY}_0
=\frac{1}{\sqrt N}\big(\sigma(\vA+\vDelta)-\sigma(\vA)\big)
=
\widetilde\vT_1+\widetilde\vT_2,
\]
where $\widetilde\vT_1:=\frac{1}{\sqrt N}\big(\sigma'(\vA)\odot\vDelta\big)$ and $\widetilde\vT_2:=\frac{1}{2\sqrt N}\big(\sigma''(\vXi)\odot\vDelta^{\odot 2}\big).$
First, we have
\[
\|\widetilde\vT_1\|
\le
\frac{1}{\sqrt N}\sum_{i=1}^2\|\diag(\vg_i)\|\,\|\sigma'(\vA)\|\,\|\diag(\vt_i)\|\prec 1/\sqrt{n},
\]where we apply Lemmas \ref{lem:infinity_norm} and \ref{lem:hardamart_identity}, and  
\(\|\sigma'(\vA)\|=O_\prec(\sqrt N)\) because \(\sigma'\) is bounded and \(\vA\) has i.i.d.\ bounded sub-Gaussian norm of rows.
Similarly we have \(\|\widetilde\vT_2\|\prec 1/\sqrt{n}\) since \(\|\sigma';(\vXi)\|=O_\prec(1)\).
A parallel argument applies to the conditional mean vectors, giving \(\|\vm-\widetilde{\vm}\|\prec 1/\sqrt{n}\) since \(\|\vm-\widetilde{\vm}\|\le \E[\|\vY_0-\widetilde{\vY}_0\||\vZ]\).
Therefore we complete the proof.
\end{proof}


\subsubsection{Mean Spike Strength and Orthogonality}\label{subsec:spike_strengths}

 \begin{lemma}[Mean spike strength and orthogonality]\label{lem:mean_strength_orth}
Under the Assumptions \ref{assump:sigma} and \ref{assump:asymptotics}, we have
\begin{equation}\label{eq:mean_norm_limit}
\|\vm\|^2\xrightarrow{\P}\beta_{\rm mean}:=\frac{c_\sigma^2}{2}\psi,
\end{equation}
and
\begin{equation}\label{eq:m_orthogonality}
\vu^\top\vm=o_\P(1),
\qquad
\vv_1^\top\vm=o_\P(1),
\qquad
\vv_2^\top\vm=o_\P(1).
\end{equation}
\end{lemma}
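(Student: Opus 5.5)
The plan is a Taylor expansion of the scalar function $F$ in \eqref{eq:mean_vector_entry} around $s=1$. Since $m_j=F(\|\vz_j\|^2)$ and $F(1)=\E[\sigma(\xi)]=0$, I will first compute $F'(1)$. Gaussian integration by parts gives
\[
F'(s)=\E\!\left[\sigma'(\sqrt s\,\xi)\,\xi\right]/(2\sqrt s),
\qquad\text{so}\qquad
F'(1)=\tfrac12\E[\sigma''(\xi)]=\tfrac{c_\sigma}{2}.
\]
Assumption~\ref{assump:sigma} makes $F''$ bounded near $s=1$. Therefore, writing $\delta_j:=\|\vz_j\|^2-1$,
\[
m_j=\tfrac{c_\sigma}{2}\,\delta_j+O(\delta_j^2).
\]
By Lemma~\ref{lem:xor_orthonormal} (applied with $r=0$), $\max_j|\delta_j|\lesssim\sqrt{\log n/d}$ with high probability. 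In addition, $\sum_j\delta_j^4\lesssim n(\log n)^2/d^2=o(1)$.

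For the norm \eqref{eq:mean_norm_limit}, I would write
\[
\|\vm\|^2=\tfrac{c_\sigma^2}{4}\sum_j\delta_j^2+O\!\Big(\sum_j|\delta_j|^3\Big)+O\!\Big(\sum_j\delta_j^4\Big).
\]
The variables $d\,\delta_j$ are i.i.d.\ with the law of $\chi^2_d-d$, so $\E\,\delta_j^2=2/d$ and $\Var(\delta_j^2)=O(d^{-2})$. By Chebyshev,
\[
\sum_j\delta_j^2=\tfrac{2n}{d}+O_\P(\sqrt n/d)\to 2\psi.
\]
The remaining error is controlled by $\sum_j|\delta_j|^3\le\max_j|\delta_j|\sum_j\delta_j^2=O_\P(\sqrt{\log n/d})$. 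Hence $\|\vm\|^2\to\tfrac{c_\sigma^2}{4}\cdot2\psi=\tfrac{c_\sigma^2}{2}\psi$, as claimed.

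For the orthogonality statements \eqref{eq:m_orthogonality}, take a deterministic unit vector $\va\in\{\vu,\vv_1,\vv_2\}$. Then
\[
\va^\top\vm=\tfrac{c_\sigma}{2}\sum_j a_j\delta_j+O\!\Big(\|\va\|_\infty\sum_j\delta_j^2\Big).
\]
For the linear term, the $\delta_j$ are independent with mean zero and variance $2/d$. Its variance is therefore $\tfrac{2}{d}\|\va\|^2=O(1/n)$, so it is $O_\P(n^{-1/2})$. For the quadratic term, $\|\va\|_\infty\le\sqrt{2/n}$ while $\sum_j\delta_j^2=O_\P(1)$, so it is $O_\P(n^{-1/2})$.

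The only delicate point is $\vu^\top\vm$. Here the quadratic term has a nonzero mean, namely $\tfrac{1}{\sqrt n}\cdot\tfrac{F''(1)}{2}\cdot\tfrac{2n}{d}\asymp n^{-1/2}$. This mean still vanishes, but it is the reason the statement is only $o_\P(1)$ rather than a centered CLT. The terms of order $\delta_j^3$ are bounded by $\sqrt n\max_j|\delta_j|^3$, which is negligible.

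The main obstacle is rigor in the Taylor step. One must verify that $F\in C^2$ with bounded $F''$ on a neighbourhood of $1$; differentiating under the expectation works because $\sigma'$ and $\sigma''$ are bounded. Then everything is restricted to the high-probability event of Lemma~\ref{lem:xor_orthonormal}, which ensures that all $\|\vz_j\|^2$ lie in that neighbourhood.
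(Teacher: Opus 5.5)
Your proposal is correct and follows essentially the same route as the paper: a second-order Taylor expansion of $F$ at $s=1$ with $F'(1)=c_\sigma/2$ via Stein's identity, Chebyshev to get $\sum_j\delta_j^2=2n/d+o_\P(1)$, and second-moment bounds for $\vv^\top\vm$. The one difference is in how the Taylor remainders are controlled. You work on the high-probability event $\max_j|\delta_j|\lesssim\sqrt{\log n/d}$ from Lemma~\ref{lem:xor_orthonormal}, whereas the paper uses the moment bounds $\E|\Delta_j|^3=O(d^{-3/2})$ and $\E\Delta_j^4=O(d^{-2})$; your version handles more cleanly the fact that the bound $|R(s)|\le C(s-1)^2$ only holds near $s=1$.
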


\begin{proof}
Fix $j\in[n]$ and condition on $\vz_j$. Since $\vw\sim\cN(0,\vI_d)$ is independent of $\vz_j$,
\(
\vw^\top \vz_j \sim \cN(0,\|\vz_j\|^2).
\)
Define for $s>0$ $F(s):=\E_{\xi\sim\cN(0,1)}\big[\sigma(\sqrt{s}\,\xi)\big],$
so that $m_j=F(\|\vz_j\|^2)$. By $\E[\sigma(\xi)]=0$ we have $F(1)=0$.

\smallskip 
Under Assumption~\ref{assump:sigma}, $F$ is $C^2$ on a neighborhood of $1$
and we can differentiate under the expectation
\[
F'(s)=\frac{1}{2\sqrt{s}}\E\!\left[\sigma'(\sqrt{s}\xi)\,\xi\right].
\]
Hence, using Stein's identity $\E[\xi f(\xi)]=\E[f'(\xi)]$ with $f=\sigma'$, $F'(1)=\frac12\E[\sigma'(\xi)\xi]=\frac12\E[\sigma''(\xi)]=\frac{c_\sigma}{2}.$
Therefore, for $s$ in a fixed neighborhood of $1$, Taylor expansion of $F$ at $s=1$ gives us
\begin{equation}\label{eq:F_taylor}
F(s)=\frac{c_\sigma}{2}(s-1) + R(s),
\qquad
|R(s)| \le C (s-1)^2,
\end{equation}
for some constant $C$ depending only on $\sup_{t\in[1-\delta,1+\delta]}|F''(t)|$.

Then let $\Delta_j:=\|\vz_j\|^2-1$. By \eqref{eq:mean_vector_entry} and \eqref{eq:F_taylor}, $m_j=\frac{c_\sigma}{2}\Delta_j + R_j,$ where $|R_j|\le C\Delta_j^2$ on the event $\|\vz_j\|^2\in[1-\delta,1+\delta]$, which holds w.h.p.\ since $\|\vz_j\|^2=(1/d)\chi_d^2$.
Then
\[
m_j^2=\frac{c_\sigma^2}{4}\Delta_j^2 + \underbrace{\frac{c_\sigma}{1}\Delta_j R_j + R_j^2}_{=:e_j},
\qquad |e_j|\le C\big(|\Delta_j|^3+\Delta_j^4\big).
\]
Summing over $j$ gives
\begin{equation}\label{eq:m_norm_expand}
\|\vm\|^2
=
\frac{c_\sigma^2}{4}\sum_{j=1}^n \Delta_j^2
+\sum_{j=1}^n e_j.
\end{equation}
Since $\Delta_j=(\chi_d^2-d)/d$ has moments $\E|\Delta_j|^3=O(d^{-3/2})$ and $\E[\Delta_j^4]=O(d^{-2})$,
and $n/d\to\psi$, we have
\[
\E\Big[\sum_{j=1}^n |e_j|\Big]
\le C n\Big(\E|\Delta_j|^3+\E[\Delta_j^4]\Big)
=O\Big(\frac{n}{d^{3/2}}+\frac{n}{d^2}\Big)
=o(1),
\]
hence $\sum_{j=1}^n e_j=o_\P(1)$ by Markov inequality. Thus \eqref{eq:m_norm_expand} reduces to
\begin{equation}\label{eq:m_norm_reduce}
\|\vm\|^2
=
\frac{c_\sigma^2}{4}\sum_{j=1}^n \Delta_j^2
+o_\P(1).
\end{equation}
Lastly, we give the concentration of $\sum \Delta_j^2$.
Because $\vz_j\sim\cN(0,\vI_d/d)$, we have $\Delta_j=(\chi_{d,j}^2-d)/d$ with i.i.d.\ $\chi_{d,j}^2$.
Then
\[
\E[\Delta_j^2]=\Var(\|\vz_j\|^2)=\frac{2}{d},
\qquad
\Var(\Delta_j^2)=O(d^{-2}).
\]
Therefore
$
\Var\Big(\sum_{j=1}^n \Delta_j^2\Big)=O\Big(\frac1d\Big)\to0,
$
so the Chebyshev inequality implies $\sum_{j=1}^n \Delta_j^2=\frac{2n}{d}+o_\P(1).$
Plugging into \eqref{eq:m_norm_reduce} yields \eqref{eq:mean_norm_limit}.
 
Let $\vv\in\R^n$ be any deterministic unit vector (in particular $\vu,\vv_1,\vv_2$). Since $m_j$ are independent and
$\Var(m_j)=O(1/d)$ (from $m_j=(c_\sigma/2)\Delta_j+O(\Delta_j^2)$ and $\Var(\Delta_j)=2/d$), we have $\Var(\vv^\top\vm)= O(1/d)\to0.$ Also $\E[m_j]=O(1/d)$ because $\E[\Delta_j]=0$ and the remainder is $O(\Delta_j^2)$. Hence
\[
|\E[\vv^\top\vm]|
=
\Big|\sum_{j=1}^n v_j\E[m_j]\Big|
\le \|\vv\|_1\,O(1/d)\le \sqrt{n}\,O(1/d)=O(d^{-1/2})\to0.
\]
Thus $\vv^\top\vm\to0$ in $L^2$ and hence in probability. Taking $\vv=\vu,\vv_1,\vv_2$ yields \eqref{eq:m_orthogonality}.
\end{proof}
 
\begin{lemma}\label{lem:g_norms}
Let \(\vg_k=\vW\vu_k\) for fixed orthonormal \(\vu_1,\vu_2\).
Then
\[
\frac{\|\vg_k\|^2}{N}-1\prec \frac{1}{\sqrt{N}},\qquad \frac{\vg_1^\top\vg_2}{N} \prec\frac{1}{\sqrt{N}}.
\]
Consequently, the two XOR additive spikes in \eqref{eq:Ysharp_def} have asymptotic strength $\theta_{\snr}^2 b_\sigma^2 \to r^2 b_\sigma^2\psi/2=\beta_{\rm lin}.$
\end{lemma}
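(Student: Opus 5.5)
The plan is to reduce both claims to standard concentration of quadratic forms in independent Gaussians, and then to combine them with $\theta_{\snr}\to r\sqrt{\psi/2}$.

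\textbf{Distribution of $\vg_1,\vg_2$.} By Lemma~\ref{lem:W_decomp}(ii), $\vg_1,\vg_2$ are independent $\cN(0,\vI_N)$ vectors, since $\vu_1,\vu_2$ are orthonormal and the rows of $\vW$ are i.i.d.\ $\cN(0,\vI_d)$.

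\textbf{Squared norms.} Write $\|\vg_k\|^2-N=\sum_{i=1}^N(g_{k,i}^2-1)$. This is a sum of i.i.d.\ centered sub-exponential variables with bounded $\psi_1$-norm. Bernstein's inequality (equivalently Lemma~1 of \citet{laurent2000adaptive}, as already used in \eqref{eq:norm_Zalpha}) with $t=N^{\epsilon}$ gives
\[
\P\big(|\|\vg_k\|^2-N|\ge N^{1/2+\epsilon}\big)\le 2\exp(-cN^{2\epsilon}).
\]
This is smaller than $N^{-D}$ for any $D$, which is exactly the statement $\|\vg_k\|^2/N-1\prec N^{-1/2}$ in the sense of Definition~\ref{def:sd}.

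\textbf{Cross term.} Write $\vg_1^\top\vg_2=\sum_i g_{1,i}g_{2,i}$. The products $g_{1,i}g_{2,i}$ are i.i.d.\ and centered (by independence), with sub-exponential norm $O(1)$, so the same Bernstein argument yields $\vg_1^\top\vg_2/N\prec N^{-1/2}$. Alternatively, condition on $\vg_1$: then $\vg_1^\top\vg_2\sim\cN(0,\|\vg_1\|^2)$, and the first bound controls $\|\vg_1\|^2\le 2N$ with overwhelming probability.

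\textbf{Spike strengths.} The additive perturbation in \eqref{eq:Ysharp_def} is $\frac{\theta_{\snr}b_\sigma}{\sqrt N}(\vg_1\vv_1^\top+\vg_2\vv_2^\top)$. Since $\vv_1,\vv_2$ are orthonormal, its Gram matrix in the basis $(\vv_1,\vv_2)$ is
\[
\theta_{\snr}^2b_\sigma^2\begin{pmatrix}\|\vg_1\|^2/N & \vg_1^\top\vg_2/N\\ \vg_1^\top\vg_2/N & \|\vg_2\|^2/N\end{pmatrix}
=\theta_{\snr}^2b_\sigma^2\big(\vI_2+O_\prec(N^{-1/2})\big).
\]
Its two eigenvalues, which are the squared singular values (the spike strengths), therefore both equal $\theta_{\snr}^2b_\sigma^2+o_\P(1)$. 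Finally, $\theta_{\snr}^2=r^2n/(2d)\to r^2\psi/2$ by Assumption~\ref{assump:asymptotics}, so the common limit is $r^2b_\sigma^2\psi/2=\beta_{\rm lin}$.

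\textbf{Main obstacle.} There is no real obstacle. The only care needed is to state the Bernstein bound with tail $\exp(-cN^{2\epsilon})$, so that it meets the $N^{-D}$ requirement of stochastic domination.
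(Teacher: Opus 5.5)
Your proof is correct and follows essentially the paper's route. The paper also observes that $\vg_1,\vg_2$ are independent $\mathcal N(0,\vI_N)$ vectors and then cites standard concentration from Vershynin's book: norm concentration for $\|\vg_k\|$ and a sub-Gaussian sum bound for the cross term. You reach the same conclusion through the equivalent Bernstein/chi-square tail and conditioning. Your explicit $2\times 2$ Gram-matrix computation fills in the ``consequently'' step about the spike strengths, which the paper leaves implicit.
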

\begin{proof}
Each \(\vg_k\sim\cN(0,\vI_N)\) an d \((\vg_1,\vg_2)\) is jointly Gaussian with \(\Cov(\vg_1,\vg_2)=\mathbf{0}\). Therefore, we can directly apply Proposition 2.6.6 and Theorem 3.1.1 of \citet{vershynin2018high} to get conclusion. 
\end{proof}


\subsubsection{Resolvent Quadratic Forms for the Spike Analysis}\label{subsec:resolvent_forms}

Fix any compact set \(\mathcal D\subset\R\) such that \(\mathcal D\cap\supp{\mu}=\emptyset\).
For \(\lambda\in\mathcal D\), define \(s(\lambda)\in\R\) by the unique relation
\begin{equation}\label{eq:s_lambda_def}
\lambda=z(s(\lambda)),\qquad z'(s(\lambda))>0,
\end{equation}
which is well-defined by Lemma~\ref{lem:z_parametrizes_complement}.
Define the left and right resolvents
\begin{equation}
\vQ_L(\lambda):=(\vG\vG^\top-\lambda\vI_N)^{-1},
\qquad
\vQ_R(\lambda):=(\vK_0-\lambda\vI_n)^{-1}.
\end{equation}

\begin{lemma}[Left resolvent limits for quadratic forms]\label{lem:left_resolvent}
Uniformly for \(\lambda\in\mathcal D\subset\R\setminus\{0\}\),
\begin{align}
\frac{1}{N}\mbi{1}_N^\top\vQ_L(\lambda)\mbi{1}_N &= s(\lambda)+o_\P(1),\label{eq:left_ones}\\
\frac{1}{N}\vg_k^\top\vQ_L(\lambda)\vg_\ell &= \delta_{k\ell}\,s(\lambda)+o_\P(1),\label{eq:left_gkgl}\\
\frac{1}{N}\mbi{1}_N^\top\vQ_L(\lambda)\vg_k &= o_\P(1).\label{eq:left_cross}
\end{align}
\end{lemma}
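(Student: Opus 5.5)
The plan is to reduce all three quadratic forms to the normalized trace of $\vQ_L(\lambda)$, and then to identify that trace with $s(\lambda)$ through the Silverstein--Choi parametrization of Lemma~\ref{lem:z_parametrizes_complement}. Throughout I restrict to the high-probability event on which $\vK_0$ has no eigenvalue in a fixed neighbourhood of $\mathcal D$. This holds by Lemma~\ref{lem:K0_bulk_and_tau_outlier}, taking $\mathcal D$ to also avoid the possible uninformative location $z(-1/\Lambda_\tau)$, together with no-outlier results for the bulk. On this event $\|\vQ_L(\lambda)\|\lesssim 1$ and $\|\partial_\lambda\vQ_L(\lambda)\|\lesssim 1$ uniformly on $\mathcal D$. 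Uniformity in $\lambda$ then follows from pointwise statements plus a finite $n^{-1}$-net of $\mathcal D$ and this Lipschitz bound.

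\textbf{Step 1 (trace).} The nonzero eigenvalues of $\vG\vG^\top$ coincide with those of $\vK_0=\vG^\top\vG$, and the remaining $N-n$ eigenvalues are zero. Hence
\[
\tfrac1N\Tr\vQ_L(\lambda)=\tfrac nN\cdot\tfrac1n\Tr\vQ_R(\lambda)-\big(1-\tfrac nN\big)\tfrac1\lambda .
\]
By Lemma~\ref{lem:K0_bulk_and_tau_outlier}(i) and the absence of eigenvalues near $\mathcal D$, $\frac1n\Tr\vQ_R(\lambda)\to m_\mu(\lambda)$. Since $n/N\to\phi$, the limit is $\phi m_\mu(\lambda)+(\phi-1)/\lambda=\tilde m_\mu(\lambda)$. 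By Lemma~\ref{lem:z_parametrizes_complement}(i) this equals $s(\lambda)$, the unique root of $z(s)=\lambda$ with $z'(s)>0$.

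\textbf{Step 2 (the Gaussian directions $\vg_k$).} By Lemma~\ref{lem:G_indep_g}, I replace $\vG$ by $\widetilde\vG$, which is independent of $(\vg_1,\vg_2)$ and satisfies $\|\vG-\widetilde\vG\|\prec N^{-1/2}$. The resolvent identity, with both resolvents bounded, gives $\|\vQ_L-\widetilde\vQ_L\|\prec N^{-1/2}$. Since $\|\vg_k\|^2/N\prec1$ by Lemma~\ref{lem:g_norms}, the quadratic forms change by $o_\P(1)$. Conditionally on $\widetilde\vG$, the vectors $\vg_1,\vg_2$ are independent $\cN(0,\vI_N)$, so the Hanson--Wright inequality gives
\[
\tfrac1N\vg_k^\top\widetilde\vQ_L\vg_\ell=\delta_{k\ell}\tfrac1N\Tr\widetilde\vQ_L+O_\prec(N^{-1/2}).
\]
Combined with Step 1, this yields \eqref{eq:left_gkgl}. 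For \eqref{eq:left_cross}, conditionally on $\widetilde\vG$ the variable $\frac1N\mbi1_N^\top\widetilde\vQ_L\vg_k$ is centered Gaussian with variance $N^{-2}\|\widetilde\vQ_L\mbi1_N\|^2\le N^{-1}\|\widetilde\vQ_L\|^2\to0$, so it is $o_\P(1)$.

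\textbf{Step 3 (the ones direction; main obstacle).} Here $\mbi1_N/\sqrt N$ is a deterministic direction, so we need an isotropic, left-side law $\vQ_L\approx s(\lambda)\vI_N$ outside the spectrum. My first attempt is to invoke the isotropic local law for Gram matrices with i.i.d.\ rows (anisotropic law of Knowles--Yin type), in the form established for nonlinear features in \citet{wang2024nonlinearspikedcovariancematrices}. Its assumptions are exactly the ones already checked in the proof of Lemma~\ref{lem:K0_bulk_and_tau_outlier}, namely the concentration $\bar\vg^\top\vA\bar\vg-\Tr\vA\vSigma\prec\|\vA\|_F$ and the structure of $\vSigma$.

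If that law is unavailable on the left side, I will instead argue by exchangeability combined with a Schur-complement computation. The law of $\vG$ is invariant under row permutations, which gives
\[
\E\tfrac1N\mbi1^\top\vQ_L\mbi1=\E[(\vQ_L)_{11}]+(N-1)\E[(\vQ_L)_{12}].
\]
The diagonal entry is handled as in Step 1: via a Schur complement, $(\vQ_L)_{11}$ concentrates around $\frac1N\Tr\vQ_L$. For the off-diagonal part, removing rows $1,2$ gives $(\vQ_L)_{12}\approx \lambda^{-1}s^2\,\bar\vg_1^\top\vQ_R^{(12)}\bar\vg_2$, where $\vQ_R^{(12)}$ is the right resolvent with those rows deleted. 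Its mean is of order $N^{-1}\cdot\E[\bar\vg_1]^\top(\cdots)\E[\bar\vg_2]$, which vanishes because the rows are conditionally centered. Its fluctuations are $O_\prec(N^{-1/2})$ and, summed over pairs, contribute only $O(N^{-1/2})$ in variance. A variance bound via Efron--Stein over the independent rows then gives concentration, proving \eqref{eq:left_ones}.

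Controlling these off-diagonal sums is the step I expect to be delicate. It is where the centering of $\vG$, as opposed to $\vY_0$ whose mean spike lies exactly along $\mbi1_N$, is essential.
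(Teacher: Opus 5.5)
Your proposal is correct and is essentially the paper's argument. For the $\vg_k$ forms you replace $\vG$ by the independent copy $\widetilde\vG$ of Lemma~\ref{lem:G_indep_g} and apply Hanson--Wright, and for the $\mbi{1}_N$ form you invoke a Gram-side anisotropic local law; this is exactly what the paper does, except that it cites Theorem~2.10 of \citet{fan2026anisotropic} for the left-side law.

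Your Step~1 identification $\frac1N\Tr\vQ_L\to\tilde m_\mu=s$ and your restriction of $\mathcal D$ away from the possible $\vK_0$ outlier $z(-1/\Lambda_\tau)$ make explicit what the paper leaves implicit; at that outlier $\vQ_L$ has a pole, so no uniform statement can hold there. The exchangeability fallback is not needed, and as written its per-pair bookkeeping would not close. There are $N$ pair corrections, each of size $O_\prec(N^{-1})$, which sum to $O(1)$; you would first have to aggregate the row sum $\sum_{b\neq a}(\vQ_L)_{ab}$ to exploit cancellation.
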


\begin{proof}
First, for \eqref{eq:left_ones}, 
we can directly utilize Theorem 2.10 of \citet{fan2026anisotropic} to derive the anisotropic local law of $\vG\vG^\top$ and obtain
\begin{equation}\frac{1}{N}\mbi{1}_N^\top\vQ_L(\lambda)\mbi{1}_N - s(\lambda)\prec \frac{1}{\sqrt{N}}\end{equation}
uniformly for \(\lambda\in\mathcal D\subset\R\setminus\{0\}\) where \(\mathcal D\subset\R\) is any fixed compact set such that \(\mathcal D\cap(\supp{\mu}\cup\{0\})=\emptyset\).

Second, we can replace \(\vG\) by an independent copy of the bulk.
By Lemma~\ref{lem:G_indep_g}, \(\|\vG-\widetilde{\vG}\|\prec 1/\sqrt{n}\), hence
\(
\|\vG\vG^\top-\widetilde{\vG}\widetilde{\vG}^\top\|\prec 1/\sqrt{n}.
\)
Resolvent stability gives, uniformly on \(\mathcal D\),
\[
\|\vQ_L(\lambda)-\widetilde{\vQ}_L(\lambda)\|\prec 1/\sqrt{n},
\qquad
\widetilde{\vQ}_L(\lambda):=(\widetilde{\vG}\widetilde{\vG}^\top-\lambda\vI)^{-1}.
\]
Thus it suffices to prove \eqref{eq:left_gkgl}--\eqref{eq:left_cross} with \(\vQ_L\) replaced by \(\widetilde{\vQ}_L\). By Lemma~\ref{lem:W_decomp}, \(\vg_1,\vg_2\) are independent standard Gaussian random vectors and independent of \(\widetilde{\vG}\) and \(\widetilde{\vQ}_L(\lambda)\).
Conditional on \(\widetilde{\vQ}_L(\lambda)\), for \(k\neq \ell\),
\(\vg_k^\top \widetilde{\vQ}_L(\lambda)\vg_\ell\) is centered with variance \(O(N)\), so \(N^{-1}\vg_k^\top \widetilde{\vQ}_L(\lambda)\vg_\ell=o_\P(1)\).
For \(k=\ell\),  
$
\E\Big[\frac{1}{N}\vg_k^\top \widetilde{\vQ}_L(\lambda)\vg_k\ \Big|\ \widetilde{\vQ}_L(\lambda)\Big]
=
\frac{1}{N}\Tr \widetilde{\vQ}_L(\lambda) 
$
and the Hanson--Wright inequality yields concentration around this trace at \(O_\P(N^{-1/2})\), uniformly on \(\mathcal D\)
since \(\|\widetilde{\vQ}_L(\lambda)\|\le 1/\dist{\lambda,\spec(\widetilde{\vG}\widetilde{\vG}^\top)}\le C(\mathcal D)\).
Similarly, \(N^{-1}\mbi{1}^\top\widetilde{\vQ}_L(\lambda)\vg_k=o_\P(1)\).
\end{proof}


\begin{lemma}[Wishart resolvent quadratic form with a column-dependent vector]\label{lem:dependent_vector_resolvent}
Let \(\mbi Z \in\R^{d\times n}\) have i.i.d.\ entries
\(Z_{\ell j}\sim\cN(0,1/d)\), and set $\mbi S:=\mbi Z^\top \mbi Z\in\R^{n\times n}$ and $\mbi G(\zeta):=(\mbi S-\zeta \mbi I_n)^{-1},\qquad \zeta\in\C\setminus\R .$
Assume \(n/d\to\psi\in(0,\infty)\), and define MP edges $a:=(1-\sqrt{\psi})^2$ and $b:=(1+\sqrt{\psi})^2.$
Fix \(\kappa>0\) and \(\omega\in(0,1)\), and consider the outside-spectrum domain
\[
\mathcal S_{\rm out}:=\Bigl\{\zeta=E+i\eta:\ \dist{E,[a,b]}\ge \kappa,\ \ n^{-1+\omega}\le \eta\le 1\Bigr\}.
\]
Let \(f:\R_+\to\R\) be Lipschitz with \(f(1)=0\) and polynomial growth, and define the random vector $\mbi m\in\R^n$ such that its $j$-th entry is $m_j:=f(\|\mbi z_j\|^2)$ for $j\in[n]$, where $\vz_j$ is the $j$-th column of $\mbi Z$.
Let \(m_{{\rm MP},\psi}(\zeta)\) be the Stieltjes transform of \(\rho^{\rm MP}_\psi\).
Then uniformly for \(\zeta\in\mathcal S_{\rm out}\),
\begin{equation}\label{eq:depvec_main_refined}
\mbi m^\top \mbi G(\zeta)\,\mbi m
\;=\;
m_{{\rm MP},\psi}(\zeta)\,\|\mbi m\|^2
\;+\;
O_\prec(d^{-1/2}).
\end{equation}
In particular, for any fixed real \(E\) with \(\dist{E,[a,b]}\ge \kappa\), $\mbi m^\top(\mbi S-E\mbi I)^{-1}\mbi m
=
m_{{\rm MP},\psi}(E)\,\|\mbi m\|^2+o_\P(1).$
\end{lemma}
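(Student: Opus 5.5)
The plan is to decouple $\mbi m$ from $\mbi S$ by a radial–angular decomposition of the columns. Write $\mbi z_j=\rho_j\,\mbi e_j$ with $\rho_j:=\|\mbi z_j\|$ and $\mbi e_j\in\mathbb S^{d-1}$. For Gaussian columns the radii $\{\rho_j\}$ are independent of the directions $\{\mbi e_j\}$. Since $m_j=f(\rho_j^2)$, the vector $\mbi m$ is a function of the radii only. Set $\mbi D:=\diag(\rho_j)$ and $\mbi E:=[\mbi e_1,\dots,\mbi e_n]$, so that $\mbi S=\mbi D\mbi E^\top\mbi E\mbi D$.

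The first step replaces $\mbi D$ by $\mbi I_n$. By Lemma~\ref{lem:xor_orthonormal} (with $r=0$), $\|\mbi D-\mbi I\|\le\max_j|\rho_j-1|\prec d^{-1/2}$. We also have $\|\mbi E^\top\mbi E\|\prec1$. Hence $\|\mbi S-\mbi S_0\|\prec d^{-1/2}$, where $\mbi S_0:=\mbi E^\top\mbi E$. On $\mathcal S_{\rm out}$, with high probability both spectra stay at distance $\ge\kappa/2$ from $E$, because the Gaussian norm bounds and the no-outliers result pin the spectrum of $\mbi S$ near $[a,b]\cup\{0\}$. Here I need $0\notin$ the $\kappa$-neighbourhood, or else treat $\psi>1$ with care; the domain excludes a neighbourhood of $[a,b]$ only, so I will note that $\mbi S$ has spectrum in $[a,b]\cup\{0\}$ and assume $\kappa$-separation from $0$ as well when $\psi>1$. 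The resolvent identity then gives
\[
\|\mbi G(\zeta)-\mbi G_0(\zeta)\|\prec d^{-1/2},\qquad \mbi G_0(\zeta):=(\mbi S_0-\zeta\mbi I)^{-1}.
\]
Since $f$ is Lipschitz with $f(1)=0$, we get $|m_j|\lesssim|\rho_j^2-1|$, and Lemma~\ref{lem:xor_orthonormal} gives $\|\mbi m\|^2\prec 1$. Therefore $\mbi m^\top(\mbi G-\mbi G_0)\mbi m\prec d^{-1/2}$.

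The second step works with $\mbi G_0$, which is independent of $\mbi m$. Conditional on $\mbi m$, I need the isotropic law
\[
\mbi m^\top\mbi G_0\mbi m=m_{{\rm MP},\psi}(\zeta)\|\mbi m\|^2+O_\prec\bigl(d^{-1/2}\|\mbi m\|^2\bigr)
\]
for the normalized-column Gram matrix $\mbi S_0$. To get it, I would again compare $\mbi S_0$ with $\mbi S'=\mbi D'\mbi S_0\mbi D'$, where $\mbi D'$ holds fresh radii independent of everything else. By the independence above, $\mbi S'$ is an honest Wishart matrix, and its $\mbi G'$ is independent of $\mbi m$. The same perturbation bound gives $\|\mbi G_0-\mbi G'\|\prec d^{-1/2}$. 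The isotropic MP local law of \citet{bloemendal2014isotropic}, applied conditionally on $\mbi m$ to the deterministic unit vector $\mbi m/\|\mbi m\|$ outside the spectrum, yields $\langle\mbi v,\mbi G'\mbi v\rangle-m_{{\rm MP},\psi}(\zeta)\prec d^{-1/2}$ uniformly on $\mathcal S_{\rm out}$. Uniformity in $\zeta$ comes from a lattice argument together with the Lipschitz continuity of the resolvent in $\zeta$, which holds at distance $\ge\kappa/2$ from the spectrum. Combining the two steps proves \eqref{eq:depvec_main_refined}.

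For the real-$E$ corollary, I let $\eta\downarrow0$ (or take $\eta=n^{-1+\omega}$ and use Lipschitz continuity in $\eta$ away from the spectrum). Together with $\|\mbi m\|^2\prec1$, this gives the $o_\P(1)$ statement.

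The main obstacle is the spectral-gap control near $0$ and the exact independence of radii and directions. These are exactly what make the fresh-radius trick legitimate, and the remaining steps are routine perturbation bounds.
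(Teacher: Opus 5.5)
Your argument is correct, and it takes a genuinely different route from the paper.

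\textbf{The paper's route.} It works directly with $\vG(\zeta)$. Through the Schur complement it writes $\vm^\top\vG\vm=\sum_j m_j^2G_{jj}-\sum_j m_jG_{jj}a_j$, with $a_j=\mbi r_j^\top\vG^{(j)}\vm^{(j)}$. The diagonal sum is handled by the entrywise local law. The cross sum is killed by a high-moment fluctuation-averaging argument: each summand is odd in $\vz_j$, so $\E_j X_j=0$, and this is combined with minor-replacement bounds.

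\textbf{Your route.} You use the exact independence of radii and directions of Gaussian columns. Since $\vm$ depends on the radii alone, the coupling $\vS'=\vD'\vE^\top\vE\vD'$ with fresh radii gives an honest Wishart matrix independent of $\vm$. Everything then reduces to three steps: the norm bound $\|\vS-\vS'\|\prec d^{-1/2}$, the second resolvent identity, and the isotropic law applied conditionally on $\vm$. You do not need the intermediate matrix $\vS_0$; you can compare $\vD\vS_0\vD$ with $\vD'\vS_0\vD'$ in one step.

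\textbf{What each buys.} Your route is shorter and avoids the combinatorics of fluctuation averaging, which the paper only sketches. It also gives uniformity in $\zeta$ almost for free, because the perturbation bound holds deterministically once the spectral gap is in place. The price is that it relies essentially on Gaussian rotational invariance and on $\vm$ being radial. The paper's parity argument only needs $m_j$ to be an even function of $\vz_j$ alone and the symmetry $\vz_j\mapsto-\vz_j$. It therefore extends to non-radial even functionals, and, with concentration replacing conditional Gaussianity, plausibly to symmetric non-Gaussian entries.

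\textbf{Two remarks.} First, Lemma~\ref{lem:xor_orthonormal} only gives probability $1-n^{-c}$. For the $\prec$ statements you should invoke the chi-square tail directly, for example with $t=d^{\epsilon}$.

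Second, the caveat you raise about $0$ is genuine, not a limitation of your method. When $\psi>1$ and $a\ge\kappa$, the point $E=0$ lies in the domain, yet $\vS$ has a kernel of dimension $n-d$. So the real-$E$ corollary is meaningless there. For $\zeta=i\eta$, let $\Pi_0$ be the projector onto $\ker\vS$. The error then contains the term $\bigl(\|\Pi_0\vm\|^2-(1-\psi^{-1})\|\vm\|^2\bigr)/(-i\eta)$. Its numerator is generically of order $n^{-1/2}$, so the error is not $O_\prec(d^{-1/2})$ once $\eta\ll1$. The paper's proof tacitly assumes the same separation from $0$ when it asserts $\|\vG^{(j)}(\zeta)\|\le C(\kappa)$ on $\mathcal S_{\rm out}$. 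In the application, Lemma~\ref{lem:right_resolvent}, the relevant $\zeta(\lambda)$ stays away from $\supp{\rho^{\rm MP}_\psi}\ni 0$, so adding this restriction to the statement is harmless.
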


\begin{proof}
 Since \(\|\mbi z_j\|^2=1+O_\prec(d^{-1/2})\) and \(f\) is Lipschitz with \(f(1)=0\),
\begin{equation}\label{eq:m_entry_size}
m_j=O_\prec(d^{-1/2})\qquad\text{uniformly in }j.
\end{equation}
Hence \(\|\mbi m\|^2=\sum_{j=1}^n m_j^2=O_\prec(n/d)=O_\prec(1)\), and moreover
$\|\mbi m\|=O_\prec(1).$

We first recall the isotropic local law of \citet{bloemendal2014isotropic} for $\vS$.
By the entrywise local law for sample covariance matrices and its minor-stability
\citep{bloemendal2014isotropic,knowles2017anisotropic}, we have
\begin{equation}\label{eq:entrywise_local_law_refined}
\max_{i,j\in[n]}\bigl|\vG_{ij}(\zeta)-\delta_{ij}\,m_{{\rm MP},\psi}(\zeta)\bigr|
\ \prec\ \Psi(\zeta):=n^{-1/2},
\end{equation}
and the same holds for all resolvent minors \(\mbi G^{(j)}(\zeta):=(\mbi S^{(j)}-\zeta\mbi I)^{-1}\), uniformly in \(j\in [n]\)), where  $\vS^{(j)}$ denote the $(n-1)\times(n-1)$ minor of $\vS$ obtained by removing $j$-th row and column.

We will also use the standard resolvent comparison identity between \(\mbi G\) and \(\mbi G^{(j)}\):
for any \(i,k\neq j\),
\begin{equation}\label{eq:minor_resolvent_identity}
\vG_{ik}(\zeta)
=
\vG^{(j)}_{ik}(\zeta)
+
\frac{\vG_{ij}(\zeta)\,\vG_{jk}(\zeta)}{\vG_{jj}(\zeta)}.
\end{equation}
Since \(\vG_{ij}=O_\prec(n^{-1/2})\) for \(i\neq j\) and \(\vG_{jj}=O_\prec(1)\) on \(\mathcal S_{\rm out}\),
\begin{equation}\label{eq:minor_replacement_error}
\max_{i,k\neq j}\big|\vG_{ik}(\zeta)-\vG^{(j)}_{ik}(\zeta)\big|
\ \prec\ \Psi(\zeta)^2=n^{-1}.
\end{equation}

Write the block form of \(\mbi S-\zeta\mbi I\) w.r.t.\ index \(j\):
\[
\mbi S-\zeta\mbi I
=
\begin{pmatrix}
q_j-\zeta & \mbi r_j^\top\\
\mbi r_j & \mbi S^{(j)}-\zeta\mbi I
\end{pmatrix},
\qquad
q_j:=\|\mbi z_j\|^2,\quad
\mbi r_j:=(\mbi z_k^\top \mbi z_j)_{k\neq j}\in\R^{n-1}.
\]
Denote vector $\mbi G_{j,-j}(\zeta)=(\vG_{jk}(\zeta))_{k\neq j}\in\R^{1\times (n-1)}$. The Schur complement identities give
\begin{align}
\label{eq:schur_Gjj_refined}
\vG_{jj}(\zeta)&=\frac{1}{q_j-\zeta-\mbi r_j^\top \mbi G^{(j)}(\zeta)\,\mbi r_j},\\
\label{eq:schur_Gjminusj_refined}
\mbi G_{j,-j}(\zeta)&=-\,\vG_{jj}(\zeta)\,\mbi r_j^\top \mbi G^{(j)}(\zeta).
\end{align}
Let \(\mbi m^{(j)}\) be \(\mbi m\) with the \(j\)-th component removed, and define
$a_j(\zeta):=\mbi r_j^\top \mbi G^{(j)}(\zeta)\,\mbi m^{(j)}.$
Then \((\mbi G\mbi m)_j=\vG_{jj}(m_j-a_j(\zeta))\), so
\begin{equation}\label{eq:Q_decomp_refined}
\mbi m^\top \mbi G(\zeta)\,\mbi m
=
\sum_{j=1}^n m_j^2\,\vG_{jj}(\zeta)
-\sum_{j=1}^n m_j\,\vG_{jj}(\zeta)\,a_j(\zeta)
=:T_1(\zeta)-T_2(\zeta).
\end{equation}
Now we control $T_1(\zeta)$ and $T_2(\zeta)$ separately.

For \(T_1(\zeta)\) term, by \eqref{eq:entrywise_local_law_refined} with \(i=j\),
\(\vG_{jj}(\zeta)=m_{{\rm MP},\psi}(\zeta)+O_\prec(n^{-1/2})\).
Using \(\|\mbi m\|^2=O_\prec(1)\),
\begin{equation}\label{eq:T1_refined}
T_1(\zeta)
=
m_{{\rm MP},\psi}(\zeta)\,\|\mbi m\|^2
+
O_\prec(n^{-1/2}).
\end{equation}
Hence, it suffices to control the dependence term \(T_2(\zeta)\) below. 
Write \(X_j(\zeta):=m_j\,\vG_{jj}(\zeta)\,a_j(\zeta)\), so \(T_2(\zeta)=\sum_{j=1}^n X_j(\zeta)\). We follow the fluctuation average argument in \citep{wang2024nonlinearspikedcovariancematrices} to show that \(T_2(\zeta)\) is vanishing.
 
Firstly, we compute the mean of this \(T_2(\zeta)\). Let \(\E_j[\cdot]\) be conditional expectation over the single column \(\mbi z_j\)
(i.e.\ conditioning on \(\mbi Z^{(j)}\)).
Then, we have that:
(i) \(m_j=f(\|\mbi z_j\|^2)\) is even in \(\mbi z_j\);
(ii) \(\vG_{jj}(\zeta)\) is even in \(\mbi z_j\) by \eqref{eq:schur_Gjj_refined};
(iii) \(a_j(\zeta)\) is odd in \(\mbi z_j\) because \(\mbi r_j=\mbi Z^{(j)\top}\mbi z_j\) is linear in \(\mbi z_j\),
while \(\mbi G^{(j)}\) and \(\mbi m^{(j)}\) are \(\mbi Z^{(j)}\)-measurable, independent with $\vz_j$.
Hence \(X_j(\zeta)\) is odd in \(\mbi z_j\), so
\begin{equation}\label{eq:EjXj0_refined}
\E_j[X_j(\zeta)]=0.
\end{equation}

Next, we control the size of \(a_j\) and \(X_j\). 
Conditional on \(\mbi Z^{(j)}\), we have \(\mbi r_j=\mbi Z^{(j)\top}\mbi z_j\), with
\(\mbi z_j\sim\cN(\mbi 0,\mbi I_d/d)\) independent of \(\mbi Z^{(j)}\). Thus \(\mbi r_j\) is centered Gaussian with
\(\Cov_j(\mbi r_j)=(1/d)\,\mbi S^{(j)}\).
Therefore \(a_j(\zeta)=\mbi m^{(j)\top}\mbi G^{(j)}\mbi r_j\) is centered Gaussian with conditional variance
\[
\Var_j(a_j(\zeta))
=
\frac{1}{d}\,\mbi m^{(j)\top}\mbi G^{(j)}(\zeta)\,\mbi S^{(j)}\,\mbi G^{(j)}(\zeta)\,\mbi m^{(j)}
\ \le\ \frac{C}{d}\,\|\mbi m\|^2,
\]
because on \(\mathcal S_{\rm out}\), \(\|\mbi G^{(j)}(\zeta)\|\le C(\kappa)\) and \(\|\mbi S^{(j)}\|=O_\prec(1)\).
Using \(\|\mbi m\|=O_\prec(1)\), we get
\begin{equation}\label{eq:aj_size_refined}
a_j(\zeta)=O_\prec(d^{-1/2}).
\end{equation}
Combining \eqref{eq:m_entry_size}, \(\vG_{jj}(\zeta)=O_\prec(1)\), and \eqref{eq:aj_size_refined} yields
\begin{equation}\label{eq:Xj_size_refined}
X_j(\zeta)=O_\prec(d^{-1}).
\end{equation}

Finally, we take high-moment fluctuation averaging for \(T_2\).
Fix an integer \(p\ge 1\). We claim
\begin{equation}\label{eq:T2_moment_bound_refined}
\E\big|T_2(\zeta)\big|^{2p}
\ \le\ C_p\Big(\frac{n}{d^2}\Big)^p
\ \asymp\ C_p\,d^{-p},
\qquad \text{uniformly for }\zeta\in\mathcal S_{\rm out}.
\end{equation}
This implies \(T_2(\zeta)=O_\prec(\sqrt n/d)=O_\prec(d^{-1/2})\).

We can expand moments and classify by the number of distinct indices. Notice that
\[
\E|T_2|^{2p}
=\sum_{j_1,\dots,j_{2p}\in[n]}
\E\big[X_{j_1}\cdots X_{j_{2p}}\big].
\]
For any fixed multi-index \(\mbi j=(j_1,\dots,j_{2p})\), let \(\mathcal I(\mbi j)\) be the set of distinct values in \(\mbi j\),
and write \(r:=|\mathcal I(\mbi j)|\).
For each \(t\in\mathcal I(\mbi j)\), let \(\nu_t\) be its multiplicity in the list \(\mbi j\),
so \(\sum_{t\in\mathcal I}\nu_t=2p\).
Let \(\mathcal S(\mbi j):=\{t\in\mathcal I(\mbi j):\nu_t=1\}\) be the set of singleton indices,
and write \(q:=|\mathcal S(\mbi j)|\). Notice that 
\[
2p=\sum_{t\in\mathcal I}\nu_t \ge \sum_{t\in\mathcal S}1+\sum_{t\notin\mathcal S}2
= q+2(r-q)=2r-q,
\]
hence \(q\ge 2r-2p\) for any fixed multi-index \(\mbi j\).

Fix a monomial \(X_{j_1}\cdots X_{j_{2p}}\) and pick a singleton index \(t\in\mathcal S(\mbi j)\).
Write the product as \(X_t\cdot F_t\), where \(F_t\) is the product of the remaining \(2p-1\) factors. Define \(F_t^{(t)}\) by replacing in \(F_t\) every resolvent entry of \(\mbi G\) (and of any minors)
by the corresponding entry of a resolvent/minor with index \(t\) removed whenever the indices allow.
Formally, \(F_t^{(t)}\) is constructed so that it is \(\mbi Z^{(t)}\)-measurable, hence independent of \(\mbi z_t\),
and such that
\begin{equation}\label{eq:F_decouple_one_singleton}
F_t = F_t^{(t)} + \Delta_t,
\qquad
|\Delta_t|\ \prec\ \Psi(\zeta)^2\,\prod_{\alpha\neq t}|X_{j_\alpha}|.
\end{equation}
The estimate \eqref{eq:F_decouple_one_singleton} follows by repeated use of the resolvent identity
\eqref{eq:minor_resolvent_identity} and the replacement bound \eqref{eq:minor_replacement_error}:
each replacement introduces at least one factor \(\vG_{it}\vG_{tk}/\vG_{tt}\), hence a factor
\(O_\prec(\Psi^2)\), and the total number of replacements is \(O_p(1)\) since \(p\) is fixed. Hence, \eqref{eq:F_decouple_one_singleton} decouples each singleton index and gains a factor \(\Psi(\zeta)^2\) per singleton. 

Now take \(\E_t[\cdot]\). Since \(F_t^{(t)}\) is \(\mbi Z^{(t)}\)-measurable, by \eqref{eq:EjXj0_refined}, we have $\E_t[X_t F_t^{(t)}]=F_t^{(t)}\,\E_t[X_t]=0.$
Hence
$
\E[X_t F_t]=\E[X_t\Delta_t].
$
Using \eqref{eq:Xj_size_refined} and \eqref{eq:F_decouple_one_singleton}, we obtain that
\[
\big|\E[X_t\Delta_t]\big|
\ \prec\
\Psi(\zeta)^2\,\E\Big[ |X_t|\prod_{\alpha\neq t}|X_{j_\alpha}|\Big]
\ \prec\
\Psi(\zeta)^2\,d^{-2p}.
\]
Then, we can repeat this decoupling \emph{for each singleton index} in \(\mathcal S(\mbi j)\),
and each repetition contributes an additional factor \(\Psi(\zeta)^2\).
Therefore, for the monomial with \(q\) singleton indices,
\begin{equation}\label{eq:monomial_bound_q_singletons}
\big|\E[X_{j_1}\cdots X_{j_{2p}}]\big|
\ \prec\
d^{-2p}\,\Psi(\zeta)^{2q}.
\end{equation}
For fixed \(r=|\mathcal I|\), the number of multi-indices with \(r\) distinct values is \(O(n^r)\).
Using \eqref{eq:monomial_bound_q_singletons} and \(q\ge 2r-2p\), we get
\[
\sum_{\mbi j:\,|\mathcal I(\mbi j)|=r}
\big|\E[X_{j_1}\cdots X_{j_{2p}}]\big|
\ \prec\
n^r\,d^{-2p}\,\Psi(\zeta)^{2(2r-2p)}.
\]
On \(\mathcal S_{\rm out}\), \(\Psi(\zeta)=n^{-1/2}\), so
\(
n^r\,d^{-2p}\,\Psi^{4r-4p}
=
n^{2p-r}\,d^{-2p}
\ \asymp\
d^{-r}.
\)
Summing over \(r=1,\dots,2p\) gives
\[
\E|T_2(\zeta)|^{2p}
\ \prec\
\sum_{r=1}^{2p} d^{-r}
\ \prec\
d^{-1}.
\]
This is already \(o(1)\), but we need the sharper \(d^{-p}\) scaling. For that, we treat separately the
case \(r\le p\), where there are no (or few) singletons and pairing dominates:
\begin{itemize}
\item  If \(r\le p\), then trivially \(q\ge 0\) and we use the crude size bound \(|X_j|\prec d^{-1}\):
there are \(O(n^r)\le O(n^p)\) such monomials, each of size \(O_\prec (d^{-2p})\), so their total contribution is
$O_\prec(\ n^p d^{-2p})=O_\prec( d^{-p}).$
 
\item  If \(r\ge p+1\), then \(q\ge 2r-2p\ge 2\), and the singleton-decoupling bound above yields a total contribution
\(O_\prec (\sum_{r\ge p+1} d^{-r})\prec d^{-(p+1)}\), which is negligible compared to \(d^{-p}\).
\end{itemize}
Combining both regimes yields \eqref{eq:T2_moment_bound_refined}, proving \(T_2(\zeta)=O_\prec(d^{-1/2})\).
By \eqref{eq:Q_decomp_refined}, \eqref{eq:T1_refined}, and \(T_2(\zeta)=O_\prec(d^{-1/2})\), we can conclude that
\[
\mbi m^\top \mbi G(\zeta)\,\mbi m
=
m_{{\rm MP},\psi}(\zeta)\,\|\mbi m\|^2
+
O_\prec(n^{-1/2})
+
O_\prec(d^{-1/2}).
\]
Since \(n\asymp d\), this is exactly \eqref{eq:depvec_main_refined}.
\end{proof}

\begin{lemma}[Right resolvent limits for quadratic forms]\label{lem:right_resolvent}
Uniformly for \(\lambda\in\mathcal D\),
\begin{align}
\vv_k^\top\vQ_R(\lambda)\vv_\ell &= \delta_{k\ell}\,m_\mu(\lambda)+o_\P(1),\label{eq:right_v}\\
\vm^\top\vQ_R(\lambda)\vm &= \|\vm\|^2\,m_\mu(\lambda)+o_\P(1),\label{eq:right_m}\\
\vv_k^\top\vQ_R(\lambda)\vm &= o_\P(1).\label{eq:right_vm_cross}
\end{align}
\end{lemma}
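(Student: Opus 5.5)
The three claims are isotropic-type estimates for the right resolvent $\vQ_R(\lambda)=(\vG^\top\vG-\lambda\vI_n)^{-1}$ of the centered null CK $\vK_0$. I would first reduce to the deterministic-vector case, then handle the $\vZ$-dependent vector $\vm$ separately.

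\textbf{Deterministic vectors, \eqref{eq:right_v}.} Conditional on $\vZ$, the rows of $\vG$ are i.i.d. and centered, with covariance $\vSigma$. Corollary~\ref{cor:Sigma_decomp} gives $\vSigma=\vSigma_0+\tau\vu\vu^\top+\vR$ with $\|\vR\|=o_\P(1)$, and Assumption~5 of \citet{wang2024nonlinearspikedcovariancematrices} was already checked in Lemma~\ref{lem:K0_bulk_and_tau_outlier}. So I will invoke the anisotropic local law for sample covariance matrices with general population covariance (Theorem~2.10 of \citet{fan2026anisotropic}, or the right-resolvent version in \citet{wang2024nonlinearspikedcovariancematrices}). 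Outside the spectrum, uniformly on $\mathcal D$, it reads
\[
\va^\top \vQ_R(\lambda)\vb=\va^\top\Big(-\lambda\vI_n-\lambda\, s(\lambda)\,\vSigma\Big)^{-1}\vb+o_\P(1),
\]
with $s(\lambda)=\tilde m_\mu(\lambda)$. The spike $\tau\vu\vu^\top$ only affects directions with nontrivial overlap with $\vu$, and by \eqref{eq:v_orth_u} we have $\vv_k\perp\vu$. A Sherman--Morrison step shows the spike correction to $\vv_k^\top(\cdot)^{-1}\vv_\ell$ is proportional to $(\vv_k^\top(\cdot)^{-1}\vu)(\vu^\top(\cdot)^{-1}\vv_\ell)$. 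I will check that the cross term $\vv_k^\top(\vSigma_0-\zeta)^{-1}\vu$ is $o_\P(1)$ using the isotropic MP law \citep{bloemendal2014isotropic} for $\vS$, since $\vv_k^\top\vu=0$. The remaining term is $\vv_k^\top(-\lambda-\lambda s\vSigma_0)^{-1}\vv_\ell$, and again by the isotropic MP law it equals $\delta_{k\ell}\int(-\lambda-\lambda s x)^{-1}\nu(dx)+o_\P(1)$. The standard self-consistent relation $m_\mu(\lambda)=\int(-\lambda-\lambda s(\lambda)x)^{-1}\nu(dx)$, which is equivalent to $\lambda=z(s)$, then identifies this limit as $\delta_{k\ell}m_\mu(\lambda)$.

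\textbf{The dependent vector $\vm$, \eqref{eq:right_m} and \eqref{eq:right_vm_cross}.} Here $\vm$ is $\vZ$-measurable, but the randomness of $\vW$ given $\vZ$ is independent of it. So I will apply the same anisotropic law conditionally on $\vZ$, treating $\vm$ as deterministic; this is legitimate because the local law holds on a high-probability $\vZ$-event on which $\vSigma$ is controlled. That yields $\vm^\top\big(-\lambda-\lambda s\vSigma\big)^{-1}\vm$ as the leading term. Again I will strip the spike and $\vR$. For the spike, Lemma~\ref{lem:mean_strength_orth} gives $\vu^\top\vm=o_\P(1)$, so the Sherman--Morrison correction vanishes provided $\vu^\top(\vSigma_0-\zeta)^{-1}\vm=o_\P(1)$. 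The term $\vR$ contributes $o_\P(1)$ by resolvent stability. What remains is $\vm^\top(-\lambda-\lambda s[(1-b_\sigma^2)+b_\sigma^2\vS])^{-1}\vm$, which is a rescaled Wishart resolvent $\frac{1}{-\lambda s b_\sigma^2}\vm^\top(\vS-\zeta)^{-1}\vm$ at an appropriate real $\zeta$ outside $[a,b]$. Lemma~\ref{lem:dependent_vector_resolvent} applies with $f=F$, using \eqref{eq:mean_vector_entry} and $F(1)=0$, and gives $m_{{\rm MP},\psi}(\zeta)\|\vm\|^2$. Converting back through the same self-consistent identity produces $\|\vm\|^2m_\mu(\lambda)$. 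The cross term \eqref{eq:right_vm_cross} reduces in the same way to $\vv_k^\top(\vS-\zeta)^{-1}\vm$. I will prove this is $o_\P(1)$ by polarization, applying Lemma~\ref{lem:dependent_vector_resolvent}'s argument to the vectors $\vm\pm t\vv_k$; the singleton-oddness argument still works because $\vv_k$ is deterministic. Combined with $\vv_k^\top\vm=o_\P(1)$ from Lemma~\ref{lem:mean_strength_orth}, this gives the cross term.

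\textbf{Main obstacle.} The hard part is justifying the anisotropic law uniformly on $\mathcal D$ for the nonlinear matrix $\vG$ with a $\vZ$-dependent test vector, including checking that $\vu^\top(\vSigma_0-\zeta)^{-1}\vm$ vanishes. I would handle this with the same fluctuation-averaging parity argument as Lemma~\ref{lem:dependent_vector_resolvent}: the diagonal part is $\sum_j u_jm_j\vG_{jj}\approx m_{\rm MP}\,\vu^\top\vm$, and the off-diagonal part is odd in each column $\vz_j$. Uniformity in $\lambda$ then follows from Lipschitz continuity of the resolvent on $\mathcal D$ together with a finite net.
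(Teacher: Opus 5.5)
Your proposal is correct and follows essentially the same route as the paper: the conditional-on-$\vZ$ anisotropic deterministic equivalent $\vb^\top\vQ_R(\lambda)\vb'\approx\vb^\top(-\lambda\vI-\lambda s(\lambda)\vSigma)^{-1}\vb'$, Sherman--Morrison removal of the $\tau\vu\vu^\top$ spike, the isotropic MP law for the $\vv_k$, and Lemma~\ref{lem:dependent_vector_resolvent} for $\vm$. Your handling of the cross terms is tighter than the paper's, which asserts $\vv_k^\top\vA^{-1}\vu=0$ exactly and infers $\vv_k^\top\vA^{-1}\vm=o_\P(1)$ merely from $\vv_k^\top\vm=o_\P(1)$ and $\|\vA^{-1}\|=O(1)$ (with $\vA=-\lambda s(\lambda)\vSigma_0-\lambda\vI$), whereas your isotropic-law and polarization/parity arguments actually close these steps; note only that both proofs tacitly need $\mathcal D$ to avoid a neighborhood of the zero of $1-\tau T(s(\lambda))$, where the Sherman--Morrison denominator vanishes and $\vK_0$ has its covariance outlier.
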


\begin{proof}
Conditional on \(\vZ\), \(\vK_0=\vG^\top\vG\) is a general sample covariance matrix with population covariance \(\vSigma\)
and aspect ratio \(\phi=n/N\). Similar as the proof of Lemma \ref{lem:K0_bulk_and_tau_outlier}, we can verify that Assumption 5 of \cite{wang2024nonlinearspikedcovariancematrices} is satisfied for $\vG$. Hence, we can apply Theorem 10 of \cite{wang2024nonlinearspikedcovariancematrices} to obtain a deterministic equivalence of quadratic form of the resolvent $\vQ_R(\lambda)$.
Precisely for any deterministic unit vectors \(\vb,\vb'\) (possibly \(\vZ\)-measurable),
\begin{equation}\label{eq:aniso_DE}
\vb^\top(\vK_0-\lambda\vI)^{-1}\vb'
=
\vb^\top\Big(-\lambda \tilde m_\mu(\lambda)\vSigma-\lambda\vI\Big)^{-1}\vb'+O_\prec(N^{-1/2}),
\end{equation}
uniformly for \(\lambda\in\mathcal D\).

By Corollary~\ref{cor:Sigma_decomp}, \(\vSigma=\vSigma_0+\tau\vu\vu^\top+\vR\) with \(\|\vR\|=o_\P(1)\).
Let \(\vA:=-\lambda s(\lambda)\vSigma_0-\lambda\vI\). Then
\[
-\lambda s(\lambda)\vSigma-\lambda\vI
=
\vA-\lambda s(\lambda)\tau\vu\vu^\top+o_\P(1).
\]
By Woodbury identity, we have
\[
(\vA-\lambda s\tau\vu\vu^\top)^{-1}
=
\vA^{-1}
+
\frac{\lambda s\tau}{1-\lambda s\tau\,\vu^\top\vA^{-1}\vu}\,\vA^{-1}\vu\vu^\top\vA^{-1}.
\]
If \(\vb^\top\vu=0\) and \(\vb'^\top\vu=0\), then \(\vb^\top(\vA^{-1}\vu)=O(1)\cdot(\vb^\top\vu)=0\), hence the rank-one correction vanishes:
\[
\vb^\top(\vA-\lambda s\tau\vu\vu^\top)^{-1}\vb'=\vb^\top\vA^{-1}\vb'.
\]
Applying this with \(\vb=\vv_k\), \(\vb'=\vv_\ell\) (note \(\vv_k^\top\vu=0\) by \eqref{eq:v_orth_u}) yields that the \(\tau\)-spike does not affect \eqref{eq:right_v}. Also, Lemma \ref{lem:mean_strength_orth} implies that $\vb^\top(\vA-\lambda s\tau\vu\vu^\top)^{-1}\vb'=\vb^\top\vA^{-1}\vb'+o_\P(1) $ for \(\vb=\vb'=\vm/\|\vm\|\).

For \(\vv_k^\top\vA^{-1}\vv_\ell\), since \(\vSigma_0=(1-b_\sigma^2)\vI+b_\sigma^2\vS\) is orthogonally invariant (shifted Wishart),
the anisotropic local law from \citet{knowles2017anisotropic} implies \(\vv_k^\top\vA^{-1}\vv_\ell=\delta_{k\ell}m_\mu(\lambda)+o_\P(1)\),
giving \eqref{eq:right_v}. Here, we have removed the rank-one \(\tau\vu\vu^\top\) term in \(\vSigma\) when applying \eqref{eq:aniso_DE}.

For the quadratic from with respect to \(\vm\),
we apply \eqref{eq:aniso_DE} with \(\vb=\vb'=\vm/\|\vm\|\) and then show that
\(
\vm^\top\vA^{-1}\vm=\|\vm\|^2 m_\mu(\lambda)+o_\P(1)
\)
even though \(\vm\) depends on \(\vZ\).
Write
\[
\vA^{-1}
=
\Big(-\lambda s(\lambda)\big((1-b_\sigma^2)\vI+b_\sigma^2\vS\big)-\lambda\vI\Big)^{-1}
=
\alpha(\lambda)\,(\vS-\zeta(\lambda)\vI)^{-1},
\]
for explicit scalars \(\alpha(\lambda)\) and \(\zeta(\lambda)\) depending on \(\lambda\) (and bounded away from \(\supp{\rho_\psi^{\rm MP}}\) for \(\lambda\in\mathcal D\)).
From Lemma \ref{lem:dependent_vector_resolvent}, we can derive that
\begin{equation}\label{eq:dependent_m_wishart}
\vm^\top(\vS-\zeta\vI)^{-1}\vm
=
m_{{\rm MP},\psi}(\zeta)\,\|\vm\|^2+o_\P(1),
\end{equation}
uniformly for \(\zeta=\zeta(\lambda)\) in the relevant domain. 

For the cross term \(\vv_k^\top\vQ_R(\lambda)\vm\),
we use \eqref{eq:aniso_DE} with \(\vb=\vv_k\), \(\vb'=\vm\).
The deterministic equivalent reduces to \(\vv_k^\top\vA^{-1}\vm\) since \(\vv_k\perp \vu\).
Because \(\vv_k^\top\vm=o_\P(1)\) (Lemma~\ref{lem:mean_strength_orth}) and \(\|\vA^{-1}\|=O(1)\) on \(\mathcal D\),
we get \(\vv_k^\top\vA^{-1}\vm=o_\P(1)\), yielding \eqref{eq:right_vm_cross}.
\end{proof}


\subsubsection{Analyses of Additive Spikes and Alignments}\label{subsec:additive_spikes}

In this section, we define the rank-three spike matrix
\begin{equation}\label{eq:P_def}
\vP
:=
\frac{1}{\sqrt N}\mbi{1}_N\vm^\top
+\frac{\theta_{\snr}b_\sigma}{\sqrt N}\big(\vg_1\vv_1^\top+\vg_2\vv_2^\top\big)
=
\sum_{k=0}^2 \sqrt{\beta_k}\,\va_k\vb_k^\top,
\end{equation}
where
\begin{equation}\label{eq:def_ai_bi}
\va_0:=\frac{\mbi{1}_N}{\sqrt N},
\quad
\vb_0:=\frac{\vm}{\|\vm\|},
\quad
\va_1:=\frac{\vg_1}{\|\vg_1\|},
\quad
\vb_1:=\vv_1,
\quad
\va_2:=\frac{\vg_2}{\|\vg_2\|},
\quad
\vb_2:=\vv_2,
\end{equation}
and by Lemmas~\ref{lem:mean_strength_orth} and~\ref{lem:g_norms}, asymptotically,
\begin{equation}\label{eq:beta_limits}
\beta_0=\|\vm\|^2 \to \beta_{\rm mean}=\frac{c_\sigma^2}{2}\psi=\tau,
\qquad
\beta_1=\beta_2=\theta_{\snr}^2 b_\sigma^2 \to \beta_{\rm lin}=\frac{r^2\psi}{2}b_\sigma^2.
\end{equation}
Then, we recall  \(\vY^\sharp=\vG+\vP\) and \(\vK^\sharp=\vY^{\sharp\top}\vY^\sharp\) defined by Proposition \ref{prop:Y_rank3_decomp}.


\begin{lemma}\label{lem:tau_outlier_stability_collision}
Recall$\mbi G\in\R^{N\times n}$ in \eqref{eq:G_def} and $\mbi K_0=\mbi G^\top \mbi G$.
Assume $\mbi K_0$ has an \emph{isolated} outlier eigenvalue
$\widehat\lambda_\tau^{(0)}$ with unit-norm eigenvector $\widehat{\mbi u}_\tau\in\R^n$ so that for any $\epsilon>0$, $\mbi K_0\,\widehat{\mbi u}_\tau=\widehat\lambda_\tau^{(0)}\,\widehat{\mbi u}_\tau,$ and
$\widehat\lambda_\tau^{(0)}\notin\supp{\mu}+(-\epsilon,\epsilon)$ for all large $n$.
Define the associated unit left singular vector of $\vG$ as 
\[
\widehat{\mbi w}_\tau:=\frac{\mbi G\,\widehat{\mbi u}_\tau}{\sqrt{\widehat\lambda_\tau^{(0)}}}\in\R^N,
\qquad \|\widehat{\mbi w}_\tau\|=1.
\]
Consider the rank--$3$ additive spike matrix $\mbi P = \sum_{k=0}^2 \sqrt{\beta_k}\,\mbi a_k \mbi b_k^\top
$ with $
\|\mbi a_k\|=\|\mbi b_k\|=1$ and $\beta_k=O(1),$
$\mbi Y^\sharp = \mbi G+\mbi P,$ and $\mbi K^\sharp = \mbi Y^{\sharp\top}\mbi Y^\sharp.$
Assume that
\begin{equation}\label{eq:tau_decouple_conditions}
\max_{k\in\{0,1,2\}} |\mbi b_k^\top \widehat{\mbi u}_\tau| = o_\P(1),
\qquad
\max_{k\in\{0,1,2\}} |\mbi a_k^\top \widehat{\mbi w}_\tau| = o_\P(1).
\end{equation}
Then, we have
\begin{enumerate}[label=\textbf{(\roman*)},leftmargin=*]
\item \textbf{(Eigenvalue stability.)}
There exists an eigenvalue $\widehat\lambda_\tau^\sharp\in\spec(\mbi K^\sharp)$ such that $\big|\widehat\lambda_\tau^\sharp-\widehat\lambda_\tau^{(0)}\big| = o_\P(1).$
Hence, $\widehat\lambda_\tau^\sharp$ is an outlier eigenvalue of $\vK^\sharp$, i.e., for any $\epsilon>0$,
$\widehat\lambda_\tau^\sharp\notin\supp{\mu}+(-\epsilon,\epsilon)$ for all large $n$.
\item \textbf{(Eigenvector-alignment stability.)}
Fix any constant $\delta>0$.
Let $\mbi\Pi_\delta$ be the spectral projector of $\mbi K^\sharp$ onto $I_\delta:=(\widehat\lambda_\tau^{(0)}-\delta,\widehat\lambda_\tau^{(0)}+\delta)$.
Then $\|( \mbi I-\mbi\Pi_\delta)\widehat{\mbi u}_\tau\| = o_\P(1).$
In particular, the normalized vector
\[
\widehat{\mbi u}_\tau^\sharp:=\frac{\mbi\Pi_\delta \widehat{\mbi u}_\tau}{\|\mbi\Pi_\delta \widehat{\mbi u}_\tau\|}
\in\mathrm{Range}(\mbi\Pi_\delta)
\]
satisfies $|\langle \widehat{\mbi u}_\tau^\sharp,\widehat{\mbi u}_\tau\rangle|^2=1-o_\P(1)$.
Thus $\widehat{\mbi u}_\tau^\sharp$ inherits the same eigenvector alignment limit as
$\widehat{\mbi u}_\tau$ (e.g.\ alignment with $\mbi u=\mbi 1_n/\sqrt n$ in Lemma \ref{lem:K0_bulk_and_tau_outlier}).

\item \textbf{(Projected resolvents remove the singularity at $\widehat\lambda_\tau^{(0)}$.)}
Define the orthogonal projectors
\[
\mbi P_R^\perp:=\mbi I_n-\widehat{\mbi u}_\tau\widehat{\mbi u}_\tau^\top,
\qquad
\mbi P_L^\perp:=\mbi I_N-\widehat{\mbi w}_\tau\widehat{\mbi w}_\tau^\top,
\]
and the spike-removed resolvents
\begin{align}
\mbi Q_R^\perp(\lambda)
:= \mbi P_R^\perp(\mbi K_0-\lambda\mbi I)^{-1}\mbi P_R^\perp, \qquad
\mbi Q_L^\perp(\lambda)
:= \mbi P_L^\perp(\mbi G\mbi G^\top-\lambda\mbi I)^{-1}\mbi P_L^\perp,
\end{align}
which admit a continuous extension to $\lambda=\widehat\lambda_\tau^{(0)}$.
For each fixed $k,\ell\in\{0,1,2\}$, $\mbi b_k^\top \mbi Q_R^\perp(\widehat\lambda_\tau^{(0)})\,\mbi b_\ell$ and $\mbi a_k^\top \mbi Q_L^\perp(\widehat\lambda_\tau^{(0)})\,\mbi a_\ell$ are well-defined and remain $O_\P(1)$.  Hence, applying \eqref{eq:tau_decouple_conditions}, we have
\begin{equation}
    \mbi b_k^\top \mbi Q_R(\widehat\lambda_\tau^{(0)})\,\mbi b_\ell = \mbi b_k^\top \mbi Q_R^\perp (\widehat\lambda_\tau^{(0)})\,\mbi b_\ell+o_\P(1),
\end{equation}
\begin{equation}
\mbi a_k^\top \mbi Q_L(\widehat\lambda_\tau^{(0)})\,\mbi a_\ell =\mbi a_k^\top \mbi Q_L^\perp (\widehat\lambda_\tau^{(0)})\,\mbi a_\ell+o_\P(1).
\end{equation}
\end{enumerate}

\end{lemma}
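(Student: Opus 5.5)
The plan is to treat $(\widehat\lambda_\tau^{(0)},\widehat{\mbi u}_\tau,\widehat{\mbi w}_\tau)$ as an approximate singular triple of $\mbi Y^\sharp$ and then use standard perturbation arguments. Since $\mbi G\widehat{\mbi u}_\tau=\sqrt{\widehat\lambda_\tau^{(0)}}\,\widehat{\mbi w}_\tau$ and $\mbi G^\top\widehat{\mbi w}_\tau=\sqrt{\widehat\lambda_\tau^{(0)}}\,\widehat{\mbi u}_\tau$, we can compute the residual. The key expansion is
\[
\mbi K^\sharp\widehat{\mbi u}_\tau-\widehat\lambda_\tau^{(0)}\widehat{\mbi u}_\tau
=\mbi G^\top\mbi P\widehat{\mbi u}_\tau+\mbi P^\top\mbi G\widehat{\mbi u}_\tau+\mbi P^\top\mbi P\widehat{\mbi u}_\tau .
\]
Here $\mbi P\widehat{\mbi u}_\tau=\sum_k\sqrt{\beta_k}\,\mbi a_k(\mbi b_k^\top\widehat{\mbi u}_\tau)=o_\P(1)$ by \eqref{eq:tau_decouple_conditions}. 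Similarly, $\mbi P^\top\mbi G\widehat{\mbi u}_\tau=\sqrt{\widehat\lambda_\tau^{(0)}}\sum_k\sqrt{\beta_k}\,\mbi b_k(\mbi a_k^\top\widehat{\mbi w}_\tau)=o_\P(1)$. Using $\|\mbi G\|,\|\mbi P\|=O_\P(1)$ (Lemmas~\ref{lem:expectation}, \ref{lem:concentration_operator}, \ref{lem:g_norms}, \ref{lem:mean_strength_orth}), the whole residual $\mbi r$ has norm $o_\P(1)$.

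For \textbf{(i)}, the residual bound gives $\dist{\widehat\lambda_\tau^{(0)},\spec(\mbi K^\sharp)}\le\|\mbi r\|=o_\P(1)$, since $\mbi K^\sharp$ is symmetric. This produces $\widehat\lambda_\tau^\sharp$. It lies outside $\supp{\mu}+(-\epsilon/2,\epsilon/2)$ because $\widehat\lambda_\tau^{(0)}$ lies outside $\supp{\mu}+(-\epsilon,\epsilon)$.

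For \textbf{(ii)}, expand $\widehat{\mbi u}_\tau$ in the eigenbasis of $\mbi K^\sharp$. The components with eigenvalues outside $I_\delta$ satisfy
\[
\|(\mbi I-\mbi\Pi_\delta)\widehat{\mbi u}_\tau\|\le \delta^{-1}\|(\mbi K^\sharp-\widehat\lambda_\tau^{(0)})\widehat{\mbi u}_\tau\|=o_\P(1).
\]
This is the Davis--Kahan ``$\sin\theta$'' bound in its one-vector form. The alignment claim $|\langle\widehat{\mbi u}_\tau^\sharp,\widehat{\mbi u}_\tau\rangle|^2=\|\mbi\Pi_\delta\widehat{\mbi u}_\tau\|^2=1-o_\P(1)$ follows immediately. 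Combining it with Lemma~\ref{lem:K0_bulk_and_tau_outlier}(iii) transfers the alignment limits.

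For \textbf{(iii)}, we use the spectral decomposition of $\mbi K_0$. The outlier is isolated, and since $\tau$ gives a single population spike, $\widehat\lambda_\tau^{(0)}$ is a simple eigenvalue whose gap to the rest of $\spec(\mbi K_0)$ is bounded below w.h.p. Two consequences follow.
\begin{itemize}
\item $\mbi Q_R^\perp(\lambda)=\sum_{j\neq\tau}(\lambda_j-\lambda)^{-1}\widehat{\mbi u}_j\widehat{\mbi u}_j^\top$ is analytic near $\widehat\lambda_\tau^{(0)}$, with norm bounded by the inverse gap, hence $O_\P(1)$.
\item Since $\mbi G\mbi G^\top$ and $\mbi K_0$ share nonzero spectrum and $\widehat{\mbi w}_\tau$ is the matching left vector, the same holds for $\mbi Q_L^\perp$.
\end{itemize}
The relations to $\mbi Q_R,\mbi Q_L$ come from the identity
\[
\mbi Q_R(\lambda)=\mbi Q_R^\perp(\lambda)+(\widehat\lambda_\tau^{(0)}-\lambda)^{-1}\widehat{\mbi u}_\tau\widehat{\mbi u}_\tau^\top .
\]
Sandwiching with $\mbi b_k,\mbi b_\ell$ makes the singular term $(\mbi b_k^\top\widehat{\mbi u}_\tau)(\widehat{\mbi u}_\tau^\top\mbi b_\ell)/(\widehat\lambda_\tau^{(0)}-\lambda)$. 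This term is not defined at $\lambda=\widehat\lambda_\tau^{(0)}$. I will therefore state the two displayed approximations as holding for $\lambda$ at a fixed small distance $\eta$ from $\widehat\lambda_\tau^{(0)}$, where the error is $o_\P(1)/\eta$. Equivalently, at the point itself $\mbi Q_R$ is replaced by its regularized version. In either form the correction is negligible by \eqref{eq:tau_decouple_conditions}, and the left side is handled identically.

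The main obstacle is the uniform-in-$n$ spectral gap around $\widehat\lambda_\tau^{(0)}$ in $\spec(\mbi K_0)$. Both the bound on $\mbi Q^\perp$ and the Davis--Kahan step need it. I would obtain it from Lemma~\ref{lem:K0_bulk_and_tau_outlier}: there is exactly one outlier near $z(-1/\Lambda_\tau)$, and the remaining eigenvalues stick to $\supp{\mu}$ by the no-outlier result of \cite{bai1998no} applied in \cite{wang2024nonlinearspikedcovariancematrices}. The gap is then at least $\epsilon/2$ w.h.p.
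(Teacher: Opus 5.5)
Your proposal is correct and is essentially the paper's proof. Parts (i) and (ii) follow from the residual bound $\|(\mbi K^\sharp-\widehat\lambda_\tau^{(0)}\mbi I)\widehat{\mbi u}_\tau\|=\|(\mbi G^\top\mbi P+\mbi P^\top\mbi G+\mbi P^\top\mbi P)\widehat{\mbi u}_\tau\|=o_\P(1)$ plus the eigenbasis splitting over $I_\delta$, and (iii) follows from the spectral decomposition of $\mbi K_0$ with the pole term killed by $\mbi P_R^\perp$. Your two refinements in (iii) are sound and match how the lemma is actually used in Lemma~\ref{lem:mean_collision_cluster}: you make explicit the uniform spectral gap (simplicity plus the remaining eigenvalues sticking to $\supp{\mu}$) that the $O_\P(1)$ claim needs, which the paper leaves implicit in the word ``isolated'', and you read the two displayed identities at a fixed distance from $\widehat\lambda_\tau^{(0)}$, because $\mbi Q_R,\mbi Q_L$ genuinely have a pole there. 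Just note that step (ii) does not need that gap, only the fixed $\delta$.
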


\begin{proof}
We can expand $\mbi K^\sharp$ as a low-rank perturbation of $\mbi K_0$.
By definition,
\[
\mbi K^\sharp
=
(\mbi G+\mbi P)^\top(\mbi G+\mbi P)
=
\mbi K_0+\underbrace{\mbi G^\top\mbi P+\mbi P^\top\mbi G+\mbi P^\top\mbi P}_{=: \ \mbi\Delta}.
\]
Thus $\mbi\Delta$ has rank at most $9$ and $\|\mbi\Delta\|=O_\P(1)$ (since $\|\mbi G\|=O_\P(1)$ and $\|\mbi P\|=O_\P(1)$).

\textbf{Part (i).} Let $\mbi A$ be symmetric, $\lambda\in\R$, and $\mbi v$ a unit vector. Then
\begin{equation}\label{eq:dist_spec_bound}
\mathrm{dist}\big(\lambda,\spec(\mbi A)\big)\ \le\ \|(\mbi A-\lambda\mbi I)\mbi v\|.
\end{equation}
Indeed, considering eigen-decomposition $\mbi A=\sum_i \lambda_i \mbi q_i\mbi q_i^\top$ and $\mbi v=\sum_i c_i \mbi q_i$,
we have $\|(\mbi A-\lambda\mbi I)\mbi v\|^2=\sum_i (\lambda_i-\lambda)^2|c_i|^2 \ge
\min_i(\lambda_i-\lambda)^2\sum_i|c_i|^2 = \mathrm{dist}(\lambda,\spec(\mbi A))^2$. Since $\mbi K_0\widehat{\mbi u}_\tau=\widehat\lambda_\tau^{(0)}\widehat{\mbi u}_\tau$, we have
\[
(\mbi K^\sharp-\widehat\lambda_\tau^{(0)}\mbi I)\widehat{\mbi u}_\tau
=
(\mbi K_0-\widehat\lambda_\tau^{(0)}\mbi I)\widehat{\mbi u}_\tau+\mbi\Delta\widehat{\mbi u}_\tau
=
\mbi\Delta\widehat{\mbi u}_\tau.
\]
Therefore,
$
\mathrm{dist}\big(\widehat\lambda_\tau^{(0)},\spec(\mbi K^\sharp)\big)\ \le\ \|\mbi\Delta\widehat{\mbi u}_\tau\|.
$
It remains to show $\|\mbi\Delta\widehat{\mbi u}_\tau\|=o_\P(1)$.

Write $\mbi P=\mbi A\mbi\Theta\mbi B^\top$ with
$\mbi A=[\mbi a_0,\mbi a_1,\mbi a_2]$, $\mbi B=[\mbi b_0,\mbi b_1,\mbi b_2]$ and
$\mbi\Theta=\mathrm{diag}(\sqrt{\beta_0},\sqrt{\beta_1},\sqrt{\beta_2})$.
Then
\begin{align*}
\mbi G^\top\mbi P\,\widehat{\mbi u}_\tau
&=\mbi G^\top\mbi A\mbi\Theta\,(\mbi B^\top \widehat{\mbi u}_\tau),
\\
\mbi P^\top\mbi P\,\widehat{\mbi u}_\tau
&=\mbi B\mbi\Theta(\mbi A^\top\mbi A)\mbi\Theta\,(\mbi B^\top \widehat{\mbi u}_\tau),
\\
\mbi P^\top\mbi G\,\widehat{\mbi u}_\tau
&=\mbi B\mbi\Theta\,\mbi A^\top(\mbi G\,\widehat{\mbi u}_\tau)
=\sqrt{\widehat\lambda_\tau^{(0)}}\ \mbi B\mbi\Theta\,(\mbi A^\top \widehat{\mbi w}_\tau),
\end{align*}
where we used $\mbi G\widehat{\mbi u}_\tau=\sqrt{\widehat\lambda_\tau^{(0)}}\,\widehat{\mbi w}_\tau$.
Hence,
\[
\|\mbi\Delta\widehat{\mbi u}_\tau\|
\ \le\
\|\mbi G^\top\mbi A\mbi\Theta\|\cdot\|\mbi B^\top\widehat{\mbi u}_\tau\|
+\sqrt{\widehat\lambda_\tau^{(0)}}\ \|\mbi B\mbi\Theta\|\cdot\|\mbi A^\top\widehat{\mbi w}_\tau\|
+\|\mbi B\mbi\Theta(\mbi A^\top\mbi A)\mbi\Theta\|\cdot\|\mbi B^\top\widehat{\mbi u}_\tau\|.
\]
Now $\|\mbi G^\top\mbi A\mbi\Theta\|\le \|\mbi G\|\,\|\mbi A\|\,\|\mbi\Theta\|=O_\P(1)$,
$\|\mbi B\mbi\Theta\|=O(1)$, and $\|\mbi B\mbi\Theta(\mbi A^\top\mbi A)\mbi\Theta\|=O(1)$.
By the decoupling assumptions \eqref{eq:tau_decouple_conditions}, $\|\mbi B^\top\widehat{\mbi u}_\tau\|=o_\P(1),$ and $\|\mbi A^\top\widehat{\mbi w}_\tau\|=o_\P(1).$
Therefore $\|\mbi\Delta\widehat{\mbi u}_\tau\|=o_\P(1)$, and combining with \eqref{eq:dist_spec_bound} yields
$\mathrm{dist}\big(\widehat\lambda_\tau^{(0)},\spec(\mbi K^\sharp)\big)=o_\P(1),$
which proves \textbf{(i)}.

\textbf{Part (ii).}
Let $\mbi K^\sharp=\sum_i \widehat\lambda_i^\sharp \widehat{\mbi v}_i\widehat{\mbi v}_i^\top$ be an eigen-decomposition,
and write $\widehat{\mbi u}_\tau=\sum_i c_i \widehat{\mbi v}_i$.
Then
\[
\|(\mbi K^\sharp-\widehat\lambda_\tau^{(0)}\mbi I)\widehat{\mbi u}_\tau\|^2
=
\sum_i (\widehat\lambda_i^\sharp-\widehat\lambda_\tau^{(0)})^2 |c_i|^2
=
\|\mbi\Delta\widehat{\mbi u}_\tau\|^2
=o_\P(1).
\]
Split the sum into indices with $\widehat\lambda_i^\sharp\in I_\delta$ and those with $\widehat\lambda_i^\sharp\notin I_\delta$.
For $\widehat\lambda_i^\sharp\notin I_\delta$ we have $|\widehat\lambda_i^\sharp-\widehat\lambda_\tau^{(0)}|\ge \delta$, hence
\[
\delta^2 \sum_{\widehat\lambda_i^\sharp\notin I_\delta} |c_i|^2
\ \le\
\sum_{\widehat\lambda_i^\sharp\notin I_\delta} (\widehat\lambda_i^\sharp-\widehat\lambda_\tau^{(0)})^2|c_i|^2
\ \le\
\|(\mbi K^\sharp-\widehat\lambda_\tau^{(0)}\mbi I)\widehat{\mbi u}_\tau\|^2
=o_\P(1).
\]
Hence $\sum_{\widehat\lambda_i^\sharp\notin I_\delta} |c_i|^2=o_\P(1)$, i.e.
$\|(\mbi I-\mbi\Pi_\delta)\widehat{\mbi u}_\tau\|=o_\P(1)$, proving \textbf{(ii)}.
This argument makes no assumption about how many eigenvalues lie inside $I_\delta$,
so it remains valid when there are more than one eigenvalues of $\mbi K^\sharp$ in $I_\delta$.

\textbf{Part (iii).}
Let $\{\widehat\lambda_i^{(0)},\widehat{\mbi u}_i^{(0)}\}$ be an eigen-decomposition of $\mbi K_0$ with
$\widehat\lambda_\tau^{(0)}$ corresponding to $\widehat{\mbi u}_\tau$.
Then,
\[
(\mbi K_0-\lambda\mbi I)^{-1}
=
\frac{1}{\widehat\lambda_\tau^{(0)}-\lambda}\,\widehat{\mbi u}_\tau\widehat{\mbi u}_\tau^\top
+\sum_{i\neq \tau}\frac{1}{\widehat\lambda_i^{(0)}-\lambda}\,\widehat{\mbi u}_i^{(0)}\widehat{\mbi u}_i^{(0)\top}
\]  for $\lambda\neq \widehat\lambda_\tau^{(0)}$.
Multiplying by $\mbi P_R^\perp$ on both sides removes the pole term, giving
$
\mbi Q_R^\perp(\lambda)
=
\sum_{i\neq \tau}\frac{1}{\widehat\lambda_i^{(0)}-\lambda}\,\widehat{\mbi u}_i^{(0)}\widehat{\mbi u}_i^{(0)\top},
$
which is continuous at $\lambda=\widehat\lambda_\tau^{(0)}$ (since the sum is over $i\neq \tau$).
The same argument applies to $\mbi Q_L^\perp(\lambda)$ using the eigen-decomposition of $\mbi G\mbi G^\top$.
Hence the quadratic forms in \textbf{(iii)} are well-defined and $O_\P(1)$.
This is exactly what is needed to write the additive outlier determinant equation in a way
that still makes sense at the collision point $\lambda=\widehat\lambda_\tau^{(0)}$.
\end{proof}
  
\begin{lemma}[Verification of the decoupling conditions \texorpdfstring{$\eqref{eq:tau_decouple_conditions}$}{(E.34)} for $\vK_0$]\label{lem:verify-E34-K0}
Under the assumptions of Lemma~\ref{lem:K0_bulk_and_tau_outlier}, additionally suppose the separated regime, that is, $\tau>\tau_{\mathrm{crit}}$ and $z'(-1/\Lambda_\tau)>0$. Under these conditions, the matrix $\vK_0=\vG^\top \vG$ has a unique isolated outlier eigenvalue, denoted by $\lambda_{\tau}$, associated with a unit-norm eigenvector $\widehat{\mbi u}_{\tau}$. 
Define the associated left singular vector
$\widehat{\mbi w}_{\tau}:=\vG\widehat{\mbi u}_{\tau}/\sqrt{\lambda_{\tau}}$.
Then the decoupling conditions in \eqref{eq:tau_decouple_conditions} hold where the spike directions $\va_k$ and $\vb_k$ defined in \eqref{eq:def_ai_bi}.
\end{lemma}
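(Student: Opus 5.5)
Six overlaps have to vanish: the right overlaps $\vb_k^\top\widehat{\mbi u}_\tau$ with $\vb_0=\vm/\|\vm\|$, $\vb_1=\vv_1$, $\vb_2=\vv_2$, and the left overlaps $\va_k^\top\widehat{\mbi w}_\tau$ with $\va_0=\mbi 1_N/\sqrt N$, $\va_1=\vg_1/\|\vg_1\|$, $\va_2=\vg_2/\|\vg_2\|$. I would treat the right and left sides separately.

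\textbf{Right side.} For $\vb_1,\vb_2$ I apply the eigenvector alignment formula \eqref{eq:alignment_right} of Lemma~\ref{lem:K0_bulk_and_tau_outlier}(iii) with $\vv=\vv_k$. This gives $|\widehat{\mbi u}_\tau^\top\vv_k|^2=\varphi(-1/\Lambda_\tau)|\vu^\top\vv_k|^2+o_\P(1)$, and the right-hand side is $o_\P(1)$ because $\vu^\top\vv_k=0$ by \eqref{eq:v_orth_u}. For $\vb_0$ I use the same formula with $\vv=\vm/\|\vm\|$; this is allowed because the lemma permits $\vZ$-dependent vectors. Lemma~\ref{lem:mean_strength_orth} gives $\vu^\top\vm=o_\P(1)$ and $\|\vm\|^2\to\tau>0$, so $\vu^\top\vm/\|\vm\|=o_\P(1)$. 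Strictly, the $\vZ$-dependence of $\vm$ needs justification, since \eqref{eq:alignment_right} comes from an anisotropic law conditional on $\vZ$. I would handle this by conditioning on $\vZ$: the rows of $\vG$ are i.i.d.\ given $\vZ$, so $\vm$ is deterministic under the conditional law.

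\textbf{Left side, $\va_1,\va_2$.} These use the independence structure. By Lemma~\ref{lem:G_indep_g}, $\|\vG-\widetilde\vG\|\prec N^{-1/2}$ and $\widetilde\vG$ is independent of $(\vg_1,\vg_2)$. Davis--Kahan on the isolated outlier (it has a gap of order one) shows that $\widehat{\mbi w}_\tau$ is within $o_\P(1)$ of the corresponding left singular vector $\widetilde{\mbi w}_\tau$ of $\widetilde\vG$. Conditional on $\widetilde\vG$, the quantity $\vg_k^\top\widetilde{\mbi w}_\tau/\sqrt N$ is $\cN(0,1/N)$, and $\|\vg_k\|/\sqrt N\to1$ by Lemma~\ref{lem:g_norms}. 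Hence $\va_k^\top\widehat{\mbi w}_\tau=O_\P(N^{-1/2})+o_\P(1)$.

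\textbf{Left side, $\va_0$ (main obstacle).} Here the argument is not automatic, because $\vG$ has been centered along $\mbi 1_N$ only in expectation. My route is the identity $\va_0^\top\widehat{\mbi w}_\tau=\lambda_\tau^{-1/2}\,N^{-1/2}\mbi 1_N^\top\vG\widehat{\mbi u}_\tau$. The row vector $N^{-1/2}\mbi 1_N^\top\vG=N^{-1}\sum_i\bar\vg_i^\top$ is an average of $N$ i.i.d.\ centered rows given $\vZ$. Its norm is controlled by $\E\|N^{-1}\sum_i\bar\vg_i\|^2=N^{-1}\Tr\vSigma=O(n/N)$, which is order one and not small. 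So a plain norm bound fails, and I need the sharper bound $\big|N^{-1}\sum_i\bar\vg_i^\top\widehat{\mbi u}_\tau\big|=o_\P(1)$.

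I would try to get this from the left resolvent quadratic forms. Lemma~\ref{lem:left_resolvent} gives $N^{-1}\mbi 1_N^\top\vQ_L(\lambda)\mbi 1_N=s(\lambda)+o_\P(1)$ uniformly on compacts away from the spectrum, and the limit $s(\lambda)$ has no pole at $z(-1/\Lambda_\tau)$. Near the outlier, $\vQ_L$ contains the term $|\va_0^\top\widehat{\mbi w}_\tau|^2/(\lambda_\tau-\lambda)$. Integrating $\va_0^\top\vQ_L\va_0$ over a small contour around $\lambda_\tau$ would give $|\va_0^\top\widehat{\mbi w}_\tau|^2$, provided the anisotropic law of \citet{fan2026anisotropic} holds on that contour. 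There is a difficulty: that law is stated for $\vG\vG^\top$ with its $\tau$-spike already present, and it may itself predict a nonzero residue. If so, I would instead argue by symmetry. The law of $\vW$ is invariant under row permutations, and $\widehat{\mbi w}_\tau$ is a function of the row-exchangeable matrix $\vG$. This makes $\mbi 1_N$ a generic direction for $\widehat{\mbi w}_\tau$, and combined with the resolvent limit $s(\lambda)$ being pole-free, it should force the overlap to vanish. This step is where I expect most of the work.
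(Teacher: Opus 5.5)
Your plan for all six overlaps is the paper's proof. For $\vb_0,\vb_1,\vb_2$ you use \eqref{eq:alignment_right}, with $\vZ$-dependent test vectors allowed. For $\va_1,\va_2$ you use the independent copy $\widetilde\vG$ of Lemma~\ref{lem:G_indep_g} together with Davis--Kahan. For $\va_0$ you use the identity $|\va_0^\top\widehat{\mbi w}_\tau|^2=-\frac{1}{2\pi i}\oint_\Gamma\va_0^\top\vQ_L(z)\va_0\,dz$ on a small circle $\Gamma$ around $\lambda_\tau$, evaluated with the left anisotropic law of \citet{fan2026anisotropic}.

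The difficulty you anticipate in this last step does not arise. The left deterministic equivalent, as in Lemma~\ref{lem:left_resolvent}, is the scalar matrix $s(z)\vI_N$ with $s=\tilde m_\mu$. This is analytic near $\lambda_\tau$, because $\lambda_\tau\notin\supp{\mu}\cup\{0\}$. The population spike $\tau\vu\vu^\top$ lives in the $n$-dimensional right space. It creates a pole only in the right-side equivalent $(-\lambda\tilde m_\mu(\lambda)\vSigma-\lambda\vI)^{-1}$ of \eqref{eq:aniso_DE}, and that is where the residue $\varphi(-1/\Lambda_\tau)$ in \eqref{eq:alignment_right} comes from. Hence $\oint_\Gamma s(z)\,dz=0$ and the overlap is $o_\P(1)$, exactly as in the paper. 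You should commit to this route.

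The symmetry fallback, by contrast, would not work. Row-exchangeability does not make $\mbi 1_N$ a generic direction; it makes it the distinguished permutation-invariant one. All it gives is $\E[(\widehat{\mbi w}_\tau\widehat{\mbi w}_\tau^\top)_{ij}]=c$ for every $i\neq j$, so $\E|\va_0^\top\widehat{\mbi w}_\tau|^2=N^{-1}+(N-1)c$ with $c$ uncontrolled.

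A concrete counterexample: $\vY_0=\vG+\frac{1}{\sqrt N}\mbi 1_N\vm^\top$ is just as row-exchangeable as $\vG$. Yet in the very regime assumed here, its separated mean spike gives an outlier left singular space with a non-vanishing component along $\va_0$ (cf.\ $\gamma_L$ in Lemma~\ref{lem:additive_alignment}). What kills the overlap for $\vG$ is the conditional centering of its rows, and that information enters only through the quantitative left-resolvent law.
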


\begin{proof}
First we consider the right decoupling condition $\max_k|\mbi b_k^\top \widehat{\mbi u}_{\tau}|=o_\P(1)$.
By the XOR construction, $\mbi v_1^\top \mbi u=\mbi v_2^\top \mbi u=0$ where $\vu=\frac{1}{\sqrt{n}}\mbi 1_n$.
Applying \eqref{eq:alignment_right} in Lemma~\ref{lem:K0_bulk_and_tau_outlier} with $\mbi v=\mbi v_k$ gives $\mbi v_k^\top \widehat{\mbi u}_{\tau}=o_\P(1),$ for $k\in\{1,2\}.$ 
Moreover, Lemma~\ref{lem:mean_strength_orth} gives $\mbi u^\top \mbi m=o_\P(1)$ and $\|\mbi m\|\to \sqrt{\beta_{\mathrm{mean}}}>0$.
Therefore, we can obtain
\[
\mbi b_0^\top \mbi u = \frac{\mbi m^\top \mbi u}{\|\mbi m\|}=o_\P(1).
\]
Applying Lemma~\ref{lem:K0_bulk_and_tau_outlier} again with $\mbi v=\mbi b_0$ in \eqref{eq:alignment_right} yields $\mbi b_0^\top \widehat{\mbi u}_{ \tau}=o_\P(1).$
Taking the maximum over $k\in\{0,1,2\}$ completes the first part.

Second, we consider the left decoupling condition $\max_k|\mbi a_k^\top \widehat{\mbi w}_{\tau}|=o_\P(1)$.
Let $\vG^e$ be $\widetilde{\vG}$ defined in Lemma~\ref{lem:G_indep_g}, so Lemma~\ref{lem:G_indep_g} indicates that $\|\vG-\vG^e\|\prec N^{-1/2}$ and $\vG^e$ is independent of
$(\mbi g_1,\mbi g_2)$.
Define $\vK_0^e:=(\vG^e)^\top \vG^e$ and let $(\lambda_{\tau}^e,\mbi u_{\tau}^e)$ denote the isolated eigenpair of $\vK_0^e$ corresponding to the outlier of $\vK_0$. In fact, Lemma~\ref{lem:K0_bulk_and_tau_outlier} holds for $\vK_0^e$ as well.
Set $\mbi w_{\tau}^e:=\vG^e\mbi u_{\tau}^e/\sqrt{\lambda_{\tau}^e}$. Since
\[
\|\vK_0-\vK_0^e\|
=\|\vG^\top \vG-(\vG^e)^\top \vG^e\|
\le (\|\vG\|+\|\vG^e\|)\,\|\vG-\vG^e\|
=O_\P(1)\cdot O_\P(N^{-1/2})
=o_\P(1),
\]
we can claim that
\begin{equation}\label{eq:left-transfer}
\|\widehat{\mbi w}_{\tau}-\mbi w_{\tau}^e\|=o_\P(1).
\end{equation}
by Davis--Kahan Theorem. Also, we have 
$\|\widehat{\mbi u}_{\tau}-\mbi u_{\tau}^e\|=o_\P(1)$ and $|\lambda_{\tau}-\lambda_{\tau}^e|=o_\P(1)$.

Fix $k\in\{1,2\}$. Since $\mbi a_k$ is unit,
\[
|\mbi a_k^\top \widehat{\mbi w}_{\tau}|
\le |\mbi a_k^\top \mbi w_{\tau}^e|+\|\widehat{\mbi w}_{\tau}-\mbi w_{\tau}^e\|
=|\mbi a_k^\top \mbi w_{\tau}^e|+o_\P(1)
\]
by \eqref{eq:left-transfer}.
Condition on $\mbi w_{\tau}^e$ (which is measurable w.r.t.\ $\vG^e$).
By Lemmas~\ref{lem:W_decomp} and \ref{lem:G_indep_g}, $\mbi g_k\sim\mathcal N(0,\vI_N)$ is independent of $\vG^e$, hence independent of
$\mbi w_{\tau}^e$. Therefore $\mbi g_k^\top \mbi w_{\tau}^e\sim\mathcal N(0,1)$ and
$\|\mbi g_k\|=\sqrt N+o_\P(\sqrt N)$, so
\[
|\mbi a_k^\top \mbi w_{\tau}^e|
=\frac{|\mbi g_k^\top \mbi w_{\tau}^e|}{\|\mbi g_k\|}
=O_\P(1)\cdot\frac{1}{\sqrt N+o_\P(\sqrt N)}
=o_\P(1).
\]
Hence $|\mbi a_k^\top \mbi w_{\tau}|=o_\P(1)$ for $k\in\{1,2\}$.

Lastly, we consider $\mbi a_0=\mbi 1_N/\sqrt N$.
Let $\vPi_\delta$ be the spectral projector of $\vG\vG^\top$ onto interval
$(\lambda_{\tau}-\delta,\lambda_{\tau}+\delta)$ for some fixed and small $\delta>0$.
On the event that the outlier is simple, $\vPi_\delta =\widehat{\mbi w}_{\tau} \widehat{\mbi w}_{\tau}^\top$, so
\[
|\mbi a_0^\top \widehat{\mbi w}_{\tau} |^2
=\mbi a_0^\top \vPi_\delta \,\mbi a_0
=-\frac{1}{2\pi i}\oint_{\Gamma } \mbi a_0^\top \vQ_L (z)\mbi a_0\,dz,
\]
where $\vQ_L (z)=\big(\vG\vG^\top-z\vI_N\big)^{-1},$ $\Gamma :=\{z\in\mathbb C:|z-\lambda_{ \tau} |=\delta/2\}$, and $\mbi a_0^\top \vQ_L (z)\mbi a_0$ is well-defined on $\Gamma$. Following the proof of Lemma~\ref{lem:left_resolvent}, we can apply Theorem 2.10 in \cite{fan2026anisotropic} to get: uniformly for $z\in\Gamma $,
\[
\mbi a_0^\top \vQ_L(z)\mbi a_0 = s(z)+o_\P(1),
\]
where $s(z)=\tilde m_\mu(z)$ is analytic in a neighborhood of $\lambda_\tau$ containing $\Gamma $.
Therefore $\oint_{\Gamma } s(z)\,dz=0$ and we obtain $|\mbi a_0^\top \widehat{\mbi w}_{\tau} |^2=o_\P(1)$.
\end{proof}

\begin{lemma}[Master equation for additive outliers of $\mbi K^\sharp$]
\label{lem:finite_det_eq}
Fix $\lambda\in\C$ such that
\begin{equation}\label{eq:away_spec}
\lambda\notin \supp{\mu}\cup\{0\}.
\end{equation}
Define $\mbi Q_L(\lambda)=(\mbi G\mbi G^\top-\lambda \mbi I_N)^{-1}$ and
$\mbi Q_R(\lambda)=(\mbi K_0-\lambda \mbi I_n)^{-1}$.
Let us define three $3\times 3$ matrices as
\begin{equation}\label{eq:A(lambda)}
    \mbi A(\lambda):=\big(\mbi a_k^\top \mbi Q_L(\lambda)\mbi a_\ell\big)_{k,\ell=0,1,2}\qquad
\mbi B(\lambda):=\big(\mbi b_k^\top \mbi Q_R(\lambda)\mbi b_\ell\big)_{k,\ell=0,1,2}
\end{equation}
and $\mbi \Theta:=\diag(\sqrt{\beta_0},\sqrt{\beta_1},\sqrt{\beta_2})$.
Then $\lambda$ is an outlier eigenvalue of $\mbi K^\sharp$ if 
\begin{equation}\label{eq:additive_det_eq}
\det\Big(\mbi I_3-\lambda\,\mbi\Theta\,\mbi A(\lambda)\,\mbi\Theta\cdot \mbi B(\lambda)\Big)=0.
\end{equation}
Moreover, \eqref{eq:additive_det_eq} characterizes all outlier eigenvalues of $\mbi K^\sharp$, induced by $\{\va_k,\vb_k\}_{k=0}^2$,
that lie in any compact set $\mathcal D\subset\C$ satisfying \eqref{eq:away_spec}.
\end{lemma}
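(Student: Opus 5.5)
We reduce the eigenvalue problem for $\vK^\sharp=(\vG+\vP)^\top(\vG+\vP)$ to a finite-dimensional determinant using the linearization (Hermitization) trick of \citet{benaych2012singular}. Write $\vP=\mbi A_0\mbi\Theta\mbi B_0^\top$ with $\mbi A_0=[\va_0,\va_1,\va_2]\in\R^{N\times3}$ and $\mbi B_0=[\vb_0,\vb_1,\vb_2]\in\R^{n\times3}$.

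Fix $\lambda$ as in \eqref{eq:away_spec}. Then $\lambda\neq0$, and $\lambda$ is not an eigenvalue of $\vK_0$ or of $\vG\vG^\top$; for $\lambda$ in a compact set $\mathcal D$ this holds w.h.p., since by Lemma~\ref{lem:K0_bulk_and_tau_outlier} no eigenvalue lies near $\mathcal D$ apart from a possible isolated $\tau$-outlier, which is treated separately. The starting point is that $\lambda\in\spec(\vK^\sharp)$ if and only if $\sqrt\lambda$ is a singular value of $\vY^\sharp$. Equivalently, the $(N+n)\times(N+n)$ symmetric block matrix
\[
\mathcal L(\vY^\sharp)=\begin{pmatrix}0&\vY^\sharp\\ \vY^{\sharp\top}&0\end{pmatrix}
\]
has eigenvalue $\sqrt\lambda$.

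\textbf{Step 1: reduction to a $6\times6$ determinant.} Note that $\mathcal L(\vY^\sharp)=\mathcal L(\vG)+\mathcal U\mathcal C\mathcal U^\top$, where
\[
\mathcal U=\begin{pmatrix}\mbi A_0&0\\0&\mbi B_0\end{pmatrix},
\qquad
\mathcal C=\begin{pmatrix}0&\mbi\Theta\\ \mbi\Theta&0\end{pmatrix}.
\]
Apply Sylvester's determinant identity $\det(\vI+\mathcal X\mathcal Y)=\det(\vI+\mathcal Y\mathcal X)$. This shows that $\sqrt\lambda$ is an eigenvalue not belonging to $\spec\mathcal L(\vG)$ if and only if
\[
\det\big(\vI_6-\mathcal C\,\mathcal U^\top(\sqrt\lambda-\mathcal L(\vG))^{-1}\mathcal U\big)=0.
\]
The block resolvent is explicit:
\[
(\sqrt\lambda-\mathcal L(\vG))^{-1}=\begin{pmatrix}-\sqrt\lambda\,\vQ_L(\lambda)&-\vG\vQ_R(\lambda)\\ -\vQ_R(\lambda)\vG^\top&-\sqrt\lambda\,\vQ_R(\lambda)\end{pmatrix}.
\]
Hence $\mathcal U^\top(\cdot)\mathcal U$ has diagonal blocks $-\sqrt\lambda\mbi A(\lambda)$ and $-\sqrt\lambda\mbi B(\lambda)$, and off-diagonal blocks $\mbi D(\lambda):=-\mbi A_0^\top\vG\vQ_R\mbi B_0$ and $\mbi D^\top$.

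\textbf{Step 2: kill the cross terms.} We show $\mbi D(\lambda)=o_\P(1)$ uniformly on $\mathcal D$. For $k\in\{1,2\}$ this follows by replacing $\vG$ with $\widetilde\vG$ from Lemma~\ref{lem:G_indep_g}: since $\vg_k$ is independent of $\widetilde\vG\vQ_R\vb_\ell$, which has norm $O(1)$, the entry is $O_\P(N^{-1/2})$. For $k=0$ (the vector $\mathbf 1_N/\sqrt N$) we would invoke the anisotropic law of \citet{fan2026anisotropic} for the off-diagonal block $\vG\vQ_R$. This law gives a deterministic equivalent equal to $0$, because the rows of $\vG$ are conditionally centered given $\vZ$.

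Once the cross terms vanish, take the Schur complement of the block $6\times6$ matrix. Using $\mathcal C$ and the relation $\sqrt\lambda\cdot\sqrt\lambda=\lambda$, the determinant factorizes up to the nonzero factor $\det\vI_3$ into
\[
\det\big(\vI_3-\lambda\,\mbi\Theta\mbi A(\lambda)\mbi\Theta\,\mbi B(\lambda)\big)+o_\P(1).
\]
This is \eqref{eq:additive_det_eq}.

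\textbf{Step 3: the characterization claim.} To show that \eqref{eq:additive_det_eq} captures all outliers in $\mathcal D$, including multiplicities, we use a Rouché/argument-principle comparison in the style of \citet{benaych2012singular}. The random $3\times3$ function is analytic in $\lambda$ and converges uniformly on $\mathcal D$ to its deterministic limit, which is given by Lemmas~\ref{lem:left_resolvent} and~\ref{lem:right_resolvent}. Therefore the zeros of the random determinant, which are exactly the eigenvalues of $\vK^\sharp$ in $\mathcal D$ by Step 1, converge to the zeros of the limit with matching multiplicities.

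If $\mathcal D$ contains the $\tau$-outlier $\widehat\lambda_\tau^{(0)}$ of $\vK_0$, we replace $\vQ_R$ and $\vQ_L$ by the projected resolvents $\vQ_R^\perp$ and $\vQ_L^\perp$. Lemma~\ref{lem:tau_outlier_stability_collision}(iii), together with the decoupling conditions verified in Lemma~\ref{lem:verify-E34-K0}, shows that this changes $\mbi A$ and $\mbi B$ only by $o_\P(1)$. The eigenvalue near $\widehat\lambda_\tau^{(0)}$ is accounted for separately by Lemma~\ref{lem:tau_outlier_stability_collision}(i).

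\textbf{Main obstacle.} I expect the hardest step to be the $k=0$ cross term $\va_0^\top\vG\vQ_R\vb_\ell$. The vector $\va_0$ is not independent of $\vG$, and $\vb_0=\vm/\|\vm\|$ depends on $\vZ$. My first attempt would be to condition on $\vZ$ and treat the rows $\bar\vg_i$ of $\vG$ as i.i.d. centered vectors. Then $\mathbf 1_N^\top\vG=\sum_i\bar\vg_i^\top$, and I would use leave-one-row-out resolvent identities. These should show that $N^{-1/2}\sum_i\bar\vg_i^\top\vQ_R\vb$ is a sum of approximately mean-zero terms, each of size $O(N^{-1})$, which gives $o_\P(1)$ overall.
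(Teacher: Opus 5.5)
Your Step 1 is the paper's argument: linearization $\vH^\sharp=\vH_0+\vU\vC\vU^\top$, the determinant lemma, the explicit block inverse, and a Schur complement. Steps 2 and 3 supply what the paper skips, and Step 2 is in fact necessary. The paper asserts that, ``using $\vQ_L\vG=\vG\vQ_R$'', the $6\times6$ determinant reduces \emph{exactly} to \eqref{eq:additive_det_eq}. That identity only shows that the two expressions for the off-diagonal block $\mbi A_0^\top\vG\vQ_R\mbi B_0$ agree; it does not remove the block. Already for a rank-one spike $\theta\va\vb^\top$ the exact condition is $(1+\theta x)^2=\theta^2\lambda\,(\va^\top\vQ_L\va)(\vb^\top\vQ_R\vb)$ with $x=\va^\top\vG\vQ_R\vb$. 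So \eqref{eq:additive_det_eq} is only an asymptotic characterization, valid once the cross block is $o_\P(1)$, and that is exactly what you prove. Likewise, your Rouch\'e count on the exact $6\times6$ determinant makes precise what the paper leaves to ``all steps are analytic'' and also yields the multiplicities used later. The same goes for your switch to $\vQ_R^\perp,\vQ_L^\perp$ near $\widehat\lambda_\tau^{(0)}$, where the raw $\mbi A(\lambda),\mbi B(\lambda)$ do have a pole despite the paper's opening remark.

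One justification in Step 2 does not hold as written. For $k\in\{1,2\}$ you replace $\vG$ by $\widetilde{\vG}$ via Lemma~\ref{lem:G_indep_g}, but $\|\vG-\widetilde{\vG}\|$ is not small when $b_\sigma\neq0$. Since $\vW\vZ=\vW_\perp\vZ+\sum_k\vg_k\vt_k^\top$ with $\vt_k:=\vZ^\top\vu_k$, the first-order Taylor term contains the non-centered piece $\frac{b_\sigma}{\sqrt N}\sum_k\vg_k\vt_k^\top$, whose norm is $\approx|b_\sigma|\sqrt\psi$. The bound $\|\sigma'(\vW_\perp\vZ)\|=O_\prec(\sqrt N)$ used in that lemma's proof ignores the mean of $\sigma'$. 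You would also have to replace $\vQ_R$, which depends on $\vg_k$. The conclusion survives if you keep this piece explicitly: $\vG=\widetilde{\vG}+\frac{b_\sigma}{\sqrt N}\sum_k\vg_k\vt_k^\top+o_\P(1)$ with $(\vg_k,\vt_k)$ independent of $\widetilde{\vG}$. Hence $\va_k^\top\vG\vQ_R\vb_\ell=b_\sigma\vt_k^\top\vQ_R\vb_\ell+\va_k^\top\widetilde{\vG}\vQ_R\vb_\ell+o_\P(1)$. A Woodbury expansion of $\vQ_R$ around $\widetilde{\vQ}_R:=(\widetilde{\vG}^\top\widetilde{\vG}-\lambda\vI)^{-1}$ then writes both terms as sums in which every summand contains a factor $\vt_j^\top\widetilde{\vQ}_R\vb_\ell$ or $\va_j^\top\widetilde{\vG}\widetilde{\vQ}_R\vb_\ell$. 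Each such factor is $o_\P(1)$ by independence; for $\vb_0=\vm/\|\vm\|$, use that $m_j$ is even in $t_{k,j}$. The same rank-two piece also means that $\va_k^\top\vQ_L\va_k$, $k=1,2$, does not tend to $s(\lambda)$ as Lemma~\ref{lem:left_resolvent} asserts; the same Woodbury computation gives $s/(1-b_\sigma^2\psi\,T(s))$, which one can check against the MP equation for $\sigma(x)=x$, $\phi\to0$. So the deterministic limit you invoke in Step 3 is not the right one. This does not hurt the present lemma, because your Rouch\'e argument only needs some uniform analytic limit.
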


\begin{proof}
Notice that from the last part of Lemma \ref{lem:tau_outlier_stability_collision}, we know all the entries of $\vA(\lambda)$ and $\vB(\lambda)$ are well-defined and finite, even if $\lambda=\widehat\lambda_\tau^{(0)}\notin \supp{\mu}\cup\{0\}$ where $\widehat\lambda_\tau^{(0)}$ is characterized by Lemma \ref{lem:K0_bulk_and_tau_outlier}.
 
Define
\[
\mbi H^\sharp:=\begin{pmatrix}\mbi 0 & \mbi Y^\sharp\\ \mbi Y^{\sharp\top} & \mbi 0\end{pmatrix},\qquad
\mbi H_0:=\begin{pmatrix}\mbi 0 & \mbi G\\ \mbi G^\top & \mbi 0\end{pmatrix}.
\]
If $\rho\neq 0$, then $\rho\in\spec(\mbi H^\sharp)$ iff $\rho^2\in\spec(\mbi K^\sharp)$.
Set $\lambda:=\rho^2$. 
We now consider the low rank matrix $\mbi P=\mbi A_0\mbi\Theta \mbi B_0^\top$ where
$\mbi A_0:=[\mbi a_0,\mbi a_1,\mbi a_2]\in\R^{N\times 3}$ and
$\mbi B_0:=[\mbi b_0,\mbi b_1,\mbi b_2]\in\R^{n\times 3}$.
Then
\[
\mbi H^\sharp=\mbi H_0+\mbi\Delta,\qquad
\mbi\Delta:=\begin{pmatrix}\mbi 0 & \mbi P\\ \mbi P^\top & \mbi 0\end{pmatrix}.
\]
Factor $\mbi\Delta=\mbi U\mbi C\mbi U^\top$ with
\[
\mbi U:=\begin{pmatrix}\mbi A_0 & \mbi 0\\ \mbi 0 & \mbi B_0\end{pmatrix}\in\R^{(N+n)\times 6},\qquad
\mbi C:=\begin{pmatrix}\mbi 0 & \mbi\Theta\\ \mbi\Theta& \mbi 0\end{pmatrix}\in\R^{6\times 6}.
\]
Under \eqref{eq:away_spec}, $\rho\notin\spec(\mbi H_0)$ so $(\mbi H_0-\rho\mbi I)$ is invertible and
\[
\det(\mbi H^\sharp-\rho\mbi I)=\det(\mbi H_0-\rho\mbi I)\,
\det\Big(\mbi I_6+\mbi C\,\mbi U^\top(\mbi H_0-\rho\mbi I)^{-1}\mbi U\Big).
\]
Hence $\rho\in\spec(\mbi H^\sharp)\setminus\spec(\mbi H_0)$ iff the $6\times 6$ determinant above vanishes. 
For $\lambda=\rho^2\notin\spec(\mbi K_0)\cup\spec(\mbi G\mbi G^\top)$ we have
\[
(\mbi H_0-\rho\mbi I)^{-1}
=
\begin{pmatrix}
-\rho(\mbi G\mbi G^\top-\lambda\mbi I)^{-1} & \mbi G(\mbi K_0-\lambda\mbi I)^{-1}\\
(\mbi K_0-\lambda\mbi I)^{-1}\mbi G^\top & -\rho(\mbi K_0-\lambda\mbi I)^{-1}
\end{pmatrix}
=
\begin{pmatrix}
-\rho\,\mbi Q_L(\lambda) & \mbi G\mbi Q_R(\lambda)\\
\mbi Q_R(\lambda)\mbi G^\top & -\rho\,\mbi Q_R(\lambda)
\end{pmatrix}.
\]
Using the identity $\mbi Q_L(\lambda)\mbi G=\mbi G\mbi Q_R(\lambda)$, by the Schur complement, the $6\times 6$ determinant reduces to the equivalent $3\times 3$ condition
\[
\det\Big(\mbi I_3-\lambda\,\mbi\Theta\,\mbi A_0^\top\mbi Q_L(\lambda)\mbi A_0\,\mbi\Theta\cdot
\mbi B_0^\top\mbi Q_R(\lambda)\mbi B_0\Big)=0,
\]
which is exactly \eqref{eq:additive_det_eq}. 
The final uniform statement over $\mathcal D$ follows because all steps are analytic in $\lambda$ on the domain
\eqref{eq:away_spec}.
\end{proof}

\begin{lemma}[Scalar additive outlier equations]\label{lem:additive_scalar_eq}
Let \(\lambda\in\mathcal D\) and \(s=s(\lambda)\) satisfy \eqref{eq:s_lambda_def}.
Then, asymptotically, the determinant equation \eqref{eq:additive_det_eq} reduces to three scalar equations:
\begin{equation}\label{eq:scalar_equations}
1-\beta_k\,T(s)=o_\P(1),
\qquad k\in\{0,1,2\}.
\end{equation}
Equivalently, each additive spike of strength \(\beta\in\{\beta_0,\beta_1,\beta_2\}\) produces an outlier eigenvalue at
\[
\lambda=z(s)+o_\P(1)
\quad\text{for any real } s \text{ with } z'(s)>0 \text{ satisfying }\ \beta\,T(s)=1.
\]
\end{lemma}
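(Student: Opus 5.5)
The plan is to substitute the resolvent limits of Lemmas~\ref{lem:left_resolvent} and \ref{lem:right_resolvent} into the $3\times 3$ master equation \eqref{eq:additive_det_eq} of Lemma~\ref{lem:finite_det_eq}. We then show that the matrices $\vA(\lambda)$ and $\vB(\lambda)$ become asymptotically diagonal with explicit scalar entries.

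First, for the left side, the directions are $\va_0=\mbi 1_N/\sqrt N$ and $\va_k=\vg_k/\|\vg_k\|$, and Lemma~\ref{lem:g_norms} gives $\|\vg_k\|^2/N\to1$. By \eqref{eq:left_ones}--\eqref{eq:left_cross},
\[
\vA(\lambda)=s(\lambda)\vI_3+o_\P(1).
\]
Second, for the right side, the directions are $\vb_0=\vm/\|\vm\|$ and $\vb_k=\vv_k$. Lemma~\ref{lem:mean_strength_orth} guarantees $\|\vm\|$ stays bounded away from zero. Then \eqref{eq:right_v}--\eqref{eq:right_vm_cross} give
\[
\vB(\lambda)=m_\mu(\lambda)\vI_3+o_\P(1).
\]
Both statements hold uniformly on the compact set $\mathcal D$. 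If $\lambda$ comes near the null outlier $\widehat\lambda_\tau^{(0)}$, we would instead use the projected resolvents of Lemma~\ref{lem:tau_outlier_stability_collision}(iii), with decoupling supplied by Lemma~\ref{lem:verify-E34-K0}. These differ from the unprojected forms only by $o_\P(1)$, so the same diagonal limits hold there.

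Plugging in, the determinant becomes
\[
\det\bigl(\vI_3-\lambda s m_\mu(\lambda)\vTheta^2\bigr)+o_\P(1)=\prod_{k=0}^2\bigl(1-\beta_k\,\lambda s m_\mu(\lambda)\bigr)+o_\P(1).
\]
Here we use that the entries of $\vTheta$ converge by \eqref{eq:beta_limits} and stay bounded. Lemma~\ref{lem:T_relation} gives $\lambda s(\lambda)m_\mu(\lambda)=T(s)$ with $\lambda=z(s)$ and $z'(s)>0$, where the last condition is Lemma~\ref{lem:z_parametrizes_complement}(i). So the determinant is asymptotically $\prod_k(1-\beta_k T(s))$, and its zeros are exactly \eqref{eq:scalar_equations}. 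Conversely, take a real $s$ with $z'(s)>0$ and $\beta T(s)=1$. Lemma~\ref{lem:z_parametrizes_complement}(ii) puts $z(s)$ outside $\supp{\mu}$, hence in an admissible $\mathcal D$.

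To turn asymptotic roots of the limiting function into actual eigenvalues of $\vK^\sharp$ near $z(s)$, we use a Rouché/Hurwitz argument on a small disc around $z(s)$. This works because the random determinant is analytic in $\lambda$ off the spectrum and converges uniformly on compacts. The roots are simple zeros of $1-\beta T(s(\lambda))$ since $T'<0$ (Lemma~\ref{lem:T_monotone}) and $s'(\lambda)=1/z'(s)>0$, so multiplicities match the number of indices $k$ sharing a root. The same argument shows there are no other outliers in $\mathcal D$.

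The main obstacle is the uniformity of the resolvent approximations. Two cases need care: $\lambda$ near $\widehat\lambda_\tau^{(0)}$, where $\vQ_R$ and $\vQ_L$ themselves blow up and we must switch to the projected versions; and the cross terms involving $\vm$, which depend on $\vZ$. The cross terms are already handled by Lemma~\ref{lem:right_resolvent}. For the first case I would rely on Lemma~\ref{lem:tau_outlier_stability_collision} together with the decoupling conditions checked in Lemma~\ref{lem:verify-E34-K0}.
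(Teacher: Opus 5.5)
Your proposal is correct and is essentially the paper's proof. You substitute Lemmas~\ref{lem:left_resolvent} and~\ref{lem:right_resolvent} into \eqref{eq:additive_det_eq} to get $\vA(\lambda)=s\,\vI_3+o_\P(1)$ and $\vB(\lambda)=m_\mu(\lambda)\vI_3+o_\P(1)$, factor the determinant as $\prod_k\bigl(1-\beta_k\lambda s m_\mu(\lambda)\bigr)$, and rewrite $\lambda s m_\mu(\lambda)=T(s)$ via Lemma~\ref{lem:T_relation}.

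Your added Rouch\'e/Hurwitz step for the converse and the multiplicity count goes beyond what the paper does here; the paper defers this to Lemmas~\ref{lem:additive_alignment} and~\ref{lem:mean_collision_cluster}. The one place to tighten it is the $k=0$ root. Since $s_{\rm mean}=s_{\rm un}$, its disc contains the null pole $\widehat\lambda_\tau^{(0)}$, and raw and projected quadratic forms agree only on a contour at fixed distance from that pole. So there you should count zeros minus poles (or deflate the null outlier first) rather than assert pointwise $o_\P(1)$ agreement.
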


\begin{proof}
Fix \(\lambda\in\mathcal D\) and let \(s=s(\lambda)\). 
By Lemma~\ref{lem:left_resolvent}, using \(\va_0=\mbi{1}/\sqrt N\), \(\va_1=\vg_1/\|\vg_1\|\), \(\va_2=\vg_2/\|\vg_2\|\),
we have
\[
\vA_{k\ell}(\lambda)
=
\va_k^\top\vQ_L(\lambda)\va_\ell
=
\delta_{k\ell}\,s + o_\P(1),
\qquad k,\ell\in\{0,1,2\}.
\]
Thus \(\vA(\lambda)=s\,\vI_3+o_\P(1)\).
 
By Lemma~\ref{lem:right_resolvent} and orthogonality \(\vv_k^\top\vu=0\), \(\vv_k^\top\vm=o_\P(1)\),
we obtain
\[
\vB_{k\ell}(\lambda)=\vb_k^\top\vQ_R(\lambda)\vb_\ell
=
\delta_{k\ell}\,m_\mu(\lambda)+o_\P(1),
\qquad k,\ell\in\{0,1,2\}.
\]
Thus \(\vB(\lambda)=m_\mu(\lambda)\,\vI_3+o_\P(1)\).
 
Plugging the above approximations of \(\vA(\lambda) \) and \(\vB(\lambda) \) into \eqref{eq:additive_det_eq} gives
\[
\det\Big(\vI_3-\lambda\,s\,m_\mu(\lambda)\,\diag(\beta_0,\beta_1,\beta_2)\Big)=o_\P(1),
\]
hence, $\lambda$ satisfies at least one of the following equations
\[
1-\beta_k\,\lambda\,s\,m_\mu(\lambda)=o_\P(1),
\] for $k=0,1,2.$
By Lemmas~\ref{lem:z_parametrizes_complement} and~\ref{lem:T_relation}, \(\lambda s m_\mu(\lambda)=T(s)\), proving \eqref{eq:scalar_equations}.
\end{proof}

\begin{lemma}[Eigenvector alignment for additive outliers]\label{lem:additive_alignment}
Fix a distinct spike strength value \(\beta_\star\in\{\beta_0,\beta_1,\beta_2\}\), and define $J_\star:=\{k\in\{0,1,2\}:\beta_k=\beta_\star\},$ with $r_\star:=|J_\star|.$ Let
\[
\vPi_{B,\star}:=\sum_{k\in J_\star}\vb_k\vb_k^\top,
\qquad
\vPi_{A,\star}:=\sum_{k\in J_\star}\va_k\va_k^\top.
\]
Assume that \(s_\star\in\R\) satisfies $\beta_\star T(s_\star)=1,$ $z'(s_\star)>0,$ and
$
\lambda_\star:=z(s_\star)\in \mathbb{R}\setminus (\supp{\mu}\cup \{0\}).
$
Then, there exists a sufficiently small $\delta>0$ with interval $I_\delta=(\lambda_\star-\delta,\lambda_\star+\delta)\subset\mathbb{R}\setminus (\supp{\mu}\cup \{0\})$ such that the spectral projector \(\widehat\vP_\star\) of \(\vK^\sharp\) onto interval $I_\delta$ and the corresponding spectral projector \(\widehat\vP_\star^{(L)}\) of \(\vY^\sharp \vY^{\sharp\top}\) onto $I_\delta$ satisfy:
\begin{equation}\label{eq:right_block_overlap}
\Tr\!\bigl(\widehat\vP_\star \vPi_{B,\star}\bigr)
=
r_\star\cdot \gamma_R(s_\star)+o_\P(1),
\end{equation}
\begin{equation}\label{eq:left_block_overlap}
\Tr\!\bigl(\widehat\vP_\star^{(L)} \vPi_{A,\star}\bigr)
=
r_\star\cdot \gamma_L(s_\star)+o_\P(1),
\end{equation}
where
\begin{equation}\label{eq:gammaR_forms}
\gamma_R(s)
:=
\frac{m_\mu(z(s))\,z'(s)}{\beta_\star T'(s)}
=
-\frac{T(s)}{\beta_\star s^2 T'(s)}\,\varphi(s)
=
-\frac{\varphi(s)}{\beta_\star^{\,2}s^2T'(s)}.
\end{equation}
and
\begin{equation}\label{eq:gammaL_forms}
\gamma_L(s)
:=
\frac{s\,z'(s)}{\beta_\star T'(s)}
=
-\frac{z(s)}{\beta_\star T'(s)}\,\varphi(s).
\end{equation}
In particular, if \(r_\star=1\) and \(J_\star=\{k\}\), then the unique outlier eigenvalue inside \(I_\delta\)
is simple, and if \(\widehat\vv_k\) denotes its unit right eigenvector, then
\begin{equation}\label{eq:simple_right_overlap_varphi}
|\langle \widehat\vv_k,\vb_k\rangle|^2
=
\gamma_R(s_\star)+o_\P(1).
\end{equation}
\end{lemma}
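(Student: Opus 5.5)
The plan is the standard contour-integral (residue) argument of \citet{benaych2012singular}, run on the Hermitized matrix $\vH^\sharp$ from the proof of Lemma~\ref{lem:finite_det_eq}. Choose $\delta>0$ so that $I_\delta$ contains no other root of any of the scalar equations in Lemma~\ref{lem:additive_scalar_eq} and stays inside $\R\setminus(\supp{\mu}\cup\{0\})$. Let $\Gamma$ be the circle of radius $\delta/2$ around $\lambda_\star$. By Lemma~\ref{lem:additive_scalar_eq} and Hurwitz/Rouché, with probability $1-o(1)$ no eigenvalue of $\vK^\sharp$ lies on $\Gamma$. If a collision with $\widehat\lambda_\tau^{(0)}$ occurs, Lemma~\ref{lem:tau_outlier_stability_collision}(iii) lets us replace $\vQ_R,\vQ_L$ by the projected resolvents, and the decoupling Lemma~\ref{lem:verify-E34-K0} shows this costs only $o_\P(1)$. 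Then
\[
\Tr(\widehat\vP_\star\vPi_{B,\star})=-\frac{1}{2\pi i}\oint_\Gamma \sum_{k\in J_\star}\vb_k^\top(\vK^\sharp-\lambda\vI)^{-1}\vb_k\,d\lambda ,
\]
and the analogous formula holds for the left projector with $(\vY^\sharp\vY^{\sharp\top}-\lambda\vI)^{-1}$ and $\va_k$.

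First, I will express these resolvents through Woodbury applied to $\vH^\sharp=\vH_0+\vU\vC\vU^\top$:
\[
\vU^\top(\vH^\sharp-\rho\vI)^{-1}\vU=\vM(\rho)-\vM(\rho)\bigl(\vC^{-1}+\vM(\rho)\bigr)^{-1}\vM(\rho),\qquad \vM(\rho):=\vU^\top(\vH_0-\rho\vI)^{-1}\vU .
\]
Here $\rho^2=\lambda$. The blocks of $\vM$ are $-\rho\vA(\lambda)$, $-\rho\vB(\lambda)$, and the cross blocks $\va_k^\top\vG\vQ_R\vb_\ell$. I also need the cross blocks to be $o_\P(1)$. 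For $k=1,2$ this follows from independence of $\vg_k$ from $\widetilde\vG$ (Lemma~\ref{lem:G_indep_g}). For $k=0$ it follows from the anisotropic law of \citet{fan2026anisotropic}, or by parity of the mean vector, since $\vm$ is nearly orthogonal to $\vv_k$. Inserting $\vA=s\vI_3+o_\P(1)$ and $\vB=m_\mu\vI_3+o_\P(1)$ from Lemmas~\ref{lem:left_resolvent} and \ref{lem:right_resolvent}, all $3\times3$ matrices become asymptotically diagonal, uniformly on $\Gamma$. A direct $2\times2$ block computation per index $k$ then gives
\[
\vb_k^\top(\vK^\sharp-\lambda\vI)^{-1}\vb_k\approx \frac{m_\mu(\lambda)}{1-\beta_k\lambda s(\lambda)m_\mu(\lambda)}=\frac{m_\mu(\lambda)}{1-\beta_kT(s(\lambda))},
\]
and analogously $\va_k^\top(\vY^\sharp\vY^{\sharp\top}-\lambda\vI)^{-1}\va_k\approx s(\lambda)/(1-\beta_kT(s(\lambda)))$. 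Uniform convergence on $\Gamma$ together with analyticity lets the limit pass under the integral.

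Next, compute the residues. For $k\notin J_\star$ the limiting integrand is analytic inside $\Gamma$, so it contributes nothing. For $k\in J_\star$ there is a simple pole at $\lambda_\star$. Using $\frac{d}{d\lambda}T(s(\lambda))=T'(s_\star)/z'(s_\star)$, the residue of $\lambda\mapsto m_\mu(\lambda)/(1-\beta_\star T(s(\lambda)))$ at $\lambda_\star$ is $-m_\mu(\lambda_\star)z'(s_\star)/(\beta_\star T'(s_\star))$. With the contour sign, this gives $\gamma_R$, and summing over $J_\star$ produces the factor $r_\star$; the same computation with $s$ in place of $m_\mu$ gives $\gamma_L$. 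The alternative forms of $\gamma_R$ then follow from $T(s_\star)=1/\beta_\star$, Lemma~\ref{lem:T_relation} (namely $m_\mu=T/(zs)$), and $\varphi=-sz'/z$. Simplicity when $r_\star=1$ follows from the multiplicity count (Rouché) on the scalar equation, which has a simple zero because $T'<0$ by Lemma~\ref{lem:T_monotone}.

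The main obstacle is uniformity. The $o_\P(1)$ approximations of $\vA$, $\vB$ and the cross terms must hold uniformly over complex $\lambda\in\Gamma$, not just real $\lambda$, so I will invoke the local laws on a complex neighborhood and use an $\varepsilon$-net combined with Lipschitz continuity of the resolvents. The collision case with the uninformative $\tau$-outlier also needs separate care: there I will work with the projected resolvents and check that the $\widehat\vu_\tau$ direction contributes no residue in $\vPi_{B,\star}$, using the decoupling conditions.
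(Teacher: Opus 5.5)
Your proposal follows essentially the same route as the paper's proof. Both use the contour-integral representation of the spectral projectors, apply Woodbury to the Hermitization $\vH^\sharp=\vH_0+\vU\vC\vU^\top$, diagonalize asymptotically via $\vA(\lambda)\approx s(\lambda)\vI_3$ and $\vB(\lambda)\approx m_\mu(\lambda)\vI_3$, and take the residue of $m_\mu(\lambda)/(1-\beta_\star T(s(\lambda)))$ (respectively $s(\lambda)/(1-\beta_\star T(s(\lambda)))$) at the simple zero $\lambda_\star$. You are in fact more explicit than the paper about why the cross blocks $\va_k^\top\vG\vQ_R(\lambda)\vb_\ell$ vanish and about uniformity on the complex contour.

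One caveat applies to both your argument and the paper's. For $J_\star=\{0\}$ the collision with the covariance outlier of $\vK_0$ is forced, not accidental, since $s_{\rm un}=-1/\Lambda_\tau$. In that case your Rouch\'e/argument-principle count only gives the number of eigenvalues of $\vK^\sharp$ minus those of $\vK_0$ inside $\Gamma$. The trace formulas still hold, because $\widehat\vu_\tau,\widehat\vw_\tau$ decouple from the spike directions. The simplicity claim does not: $I_\delta$ then contains two eigenvalues, as in Lemma~\ref{lem:merged_uninf_cluster}.
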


\begin{proof} 
By Lemma~\ref{lem:additive_scalar_eq}, every additive outlier of \(\vK^\sharp\) in \(\cD\) is asymptotically
described by a real solution \(s\) of $\beta\,T(s)=1$ such that $z'(s)>0$
for some \(\beta\in\{\beta_0,\beta_1,\beta_2\}\). Let $\lambda=z(s)$.
Since \(z'(s_\star)>0\), Lemma~\ref{lem:z_parametrizes_complement} implies that \(\lambda_\star=z(s_\star)\) lies in a connected
component of \(\R\setminus\supp{\mu}\), and \(s(\lambda)\) is real-analytic near \(\lambda_\star\).

Define $f_\star(\lambda):=1-\beta_\star T(s(\lambda)).$
Then \(f_\star(\lambda_\star)=0\), and
\begin{equation}\label{eq:fstarprime}
f_\star'(\lambda_\star)
=
-\beta_\star T'(s_\star)\,\frac{ds}{d\lambda}(\lambda_\star)
=
-\beta_\star\frac{T'(s_\star)}{z'(s_\star)}.
\end{equation}
Hence \(f_\star'(\lambda_\star)\neq 0\) whenever \(T'(s_\star)\neq 0\), so \(f_\star\) has a simple zero at \(\lambda_\star\). Now choose a sufficient small constant \(\delta>0\) such that:
 $I_\delta\subset\cD$; \(\lambda_\star\) is the only zero of \(f_\star\) inside \(I_\delta\);
 and for every \(\beta\neq \beta_\star\), the function \(1-\beta T(s(\lambda))\) does not vanish on \(I_\delta\).
By Lemma~\ref{lem:additive_scalar_eq}, the eigenvalues of \(\vK^\sharp\) inside \(I_\delta\) are precisely the
outlier cluster associated with the repeated block \(J_\star\); its total multiplicity is \(r_\star\).
 
Let
\[
\vR^\sharp(\lambda):=(\vK^\sharp-\lambda \vI_n)^{-1},
\qquad
\vQ_R(\lambda):=(\vK_0-\lambda \vI_n)^{-1},
\qquad
\vQ_L(\lambda):=(\vG\vG^\top-\lambda \vI_N)^{-1}.
\]
Recall from Lemma~\ref{lem:finite_det_eq} that
\[
\vY^\sharp=\vG+\vP,\qquad \vP=\vA_0\vTheta \vB_0^\top,
\qquad
\vA_0=[\va_0,\va_1,\va_2],\quad \vB_0=[\vb_0,\vb_1,\vb_2].
\]
Introduce the linearizations
\[
\vH^\sharp:=
\begin{pmatrix}
0 & \vY^\sharp\\
\vY^{\sharp\top} & 0
\end{pmatrix},
\qquad
\vH_0:=
\begin{pmatrix}
0 & \vG\\
\vG^\top & 0
\end{pmatrix},
\]
and
\[
\vU:=
\begin{pmatrix}
\vA_0 & 0\\
0 & \vB_0
\end{pmatrix}\in\R^{(N+n)\times 6},
\qquad
\vC:=
\begin{pmatrix}
0 & \vTheta\\
\vTheta & 0
\end{pmatrix}\in\R^{6\times 6},
\]
so that \(\vH^\sharp=\vH_0+\vU\vC\vU^\top\). Fix \(\rho\) with \(\rho^2=\lambda\), and set
\[
\vR_0(\rho):=(\vH_0-\rho \vI_{N+n})^{-1},
\qquad
\vS(\rho):=\vU^\top \vR_0(\rho)\vU.
\]
By the Woodbury identity,
\begin{align}
(\vH^\sharp-\rho \vI)^{-1}
&=
\vR_0(\rho)-\vR_0(\rho)\vU\vC\bigl(\vI_6+\vS(\rho)\vC\bigr)^{-1}\vU^\top \vR_0(\rho).
\label{eq:woodbury_H}
\end{align}
Multiplying \eqref{eq:woodbury_H} on the left by \(\vU^\top\) and on the right by \(\vU\), we obtain the exact identity
\begin{equation}\label{eq:projected_linearized_resolvent}
\vU^\top (\vH^\sharp-\rho \vI)^{-1}\vU
=
\bigl(\vI_6+\vS(\rho)\vC\bigr)^{-1}\vS(\rho),
\end{equation}
since $\vS-\vS\vC(\vI_6+\vS\vC)^{-1}\vS
=
(\vI_6+\vS\vC)^{-1}\vS.$ Using the block formula for \((\vH_0-\rho \vI)^{-1}\) from Lemma~\ref{lem:finite_det_eq},
\[
\vS(\rho)=
\begin{pmatrix}
-\rho\,\vA(\lambda) & \vX(\lambda)\\
\vX(\lambda)^\top & -\rho\,\vB(\lambda)
\end{pmatrix},
\qquad
\vX(\lambda):=\vA_0^\top \vG \vQ_R(\lambda)\vB_0
           =\vA_0^\top \vQ_L(\lambda)\vG \vB_0.
\]
Here $\vA(\lambda)$ and $\vB(\lambda)$ are defined by \eqref{eq:A(lambda)}.
Also, the lower-right block of \((\vH^\sharp-\rho \vI)^{-1}\) equals $-\rho\,(\vK^\sharp-\lambda \vI_n)^{-1}=-\rho\,\vR^\sharp(\lambda).$
Therefore, taking the lower-right \(3\times3\) block in \eqref{eq:projected_linearized_resolvent}, there exists
a \(3\times3\) matrix-valued function \(\vH_B(\lambda)\), analytic on a neighborhood of \(\cD\), such that
\begin{equation}\label{eq:projected_resolvent_exact}
\vB_0^\top \vR^\sharp(\lambda)\vB_0
=
\vB(\lambda)\,\vM(\lambda)^{-1}
+
\vH_B(\lambda),
\qquad
\vM(\lambda):=\vI_3-\lambda\,\vTheta\,\vA(\lambda)\,\vTheta\,\vB(\lambda).
\end{equation}
Hence every pole of \(\vB_0^\top \vR^\sharp(\lambda)\vB_0\) inside \(\cD\) comes from a zero of \(\det\vM(\lambda)\),
i.e., from an additive outlier of \(\vK^\sharp\).
 
By Lemmas~\ref{lem:left_resolvent} and~\ref{lem:right_resolvent}, uniformly for \(\lambda\in \cD\subset\C\),
\[
\vA(\lambda)=s(\lambda)\vI_3+o_\P(1),
\qquad
\vB(\lambda)=m_\mu(\lambda)\vI_3+o_\P(1).
\]
Therefore, uniformly on \(\cD\), we have
\begin{equation}\label{eq:M_scalarized}
\vM(\lambda)
=
\diag\!\bigl(1-\beta_0T(s(\lambda)),\,1-\beta_1T(s(\lambda)),\,1-\beta_2T(s(\lambda))\bigr)
+o_\P(1),
\end{equation}
because of \(\lambda s(\lambda)m_\mu(\lambda)=T(s(\lambda))\).

Restrict now to the block \(J_\star\) and let us define $E_\star:=\sum_{k\in J_\star}\mathbf e_k\mathbf e_k^\top\in\R^{3\times 3},$
where \(\mathbf e_0,\mathbf e_1,\mathbf e_2\) are the standard basis vectors of \(\R^3\).
 Since \(\beta_k=\beta_\star\) for all \(k\in J_\star\),
\[
E_\star \vM(\lambda) E_\star
=
\bigl(1-\beta_\star T(s(\lambda))\bigr) E_\star + o_\P(1),
\qquad \lambda\in \cD.
\]
Hence,
\begin{equation}\label{eq:block_inverse}
E_\star \vM(\lambda)^{-1} E_\star
=
\frac{1}{1-\beta_\star T(s(\lambda))}\,E_\star + o_\P(1),
\qquad \lambda\in \cD.
\end{equation} Combining \eqref{eq:projected_resolvent_exact}, \(\vB(\lambda)=m_\mu(\lambda)\vI_3+o_\P(1)\), and
\eqref{eq:block_inverse}, we obtain
\begin{equation}\label{eq:block_projected_resolvent}
E_\star \vB_0^\top \vR^\sharp(\lambda)\vB_0 E_\star
=
\frac{m_\mu(\lambda)}{1-\beta_\star T(s(\lambda))}\,E_\star + \vH_\star(\lambda) + o_\P(1),
\qquad \lambda\in \cD,
\end{equation}
where \(\vH_\star(\lambda):=E_\star \vH_B(\lambda)E_\star\) is analytic on a neighborhood of \(\cD\).
 
By the contour formula for spectral projectors,
\[
\widehat\vP_\star
=
-\frac{1}{2\pi i}\oint_{\Gamma_\star}\vR^\sharp(\lambda)\,d\lambda,
\]for some circle $\Gamma_\star\subset\cD$ with $\Gamma_\star\cap\R=\{\lambda_\star-\delta,\lambda_\star+\delta\}$.
Therefore
\begin{align}
\Tr\!\bigl(\widehat\vP_\star \vPi_{B,\star}\bigr)
&=
-\frac{1}{2\pi i}\oint_{\Gamma_\star}
\Tr\!\bigl(\vPi_{B,\star}\vR^\sharp(\lambda)\bigr)\,d\lambda=
-\frac{1}{2\pi i}\oint_{\Gamma_\star}
\Tr\!\bigl(E_\star \vB_0^\top \vR^\sharp(\lambda)\vB_0 E_\star\bigr)\,d\lambda.
\label{eq:right_projector_contour}
\end{align}
Insert \eqref{eq:block_projected_resolvent}. The analytic term \(\vH_\star(\lambda)\) has zero contour integral, and
\(\Tr(E_\star)=r_\star\), so
\begin{equation}\label{eq:right_projector_residue1}
\Tr\!\bigl(\widehat\vP_\star \vPi_{B,\star}\bigr)
=
-\frac{1}{2\pi i}\oint_{\Gamma_\star}
\frac{r_\star\,m_\mu(\lambda)}{1-\beta_\star T(s(\lambda))}\,d\lambda
+o_\P(1).
\end{equation}
Now \(1-\beta_\star T(s(\lambda))\) has a simple zero at \(\lambda_\star\), and by \eqref{eq:fstarprime}
\[
\frac{d}{d\lambda}\Bigl(1-\beta_\star T(s(\lambda))\Bigr)\Big|_{\lambda=\lambda_\star}
=
-\beta_\star\frac{T'(s_\star)}{z'(s_\star)}.
\]
Hence, $-\mathrm{Res}_{\lambda=\lambda_\star}
\frac{m_\mu(\lambda)}{1-\beta_\star T(s(\lambda))}
=
\frac{m_\mu(\lambda_\star)\,z'(s_\star)}{\beta_\star T'(s_\star)}$ and Cauchy Residue Theorem implies that 
\[
\Tr\!\bigl(\widehat\vP_\star \vPi_{B,\star}\bigr)
=
r_\star\cdot
\frac{m_\mu(\lambda_\star)\,z'(s_\star)}{\beta_\star T'(s_\star)}
+o_\P(1),
\]
which is \eqref{eq:right_block_overlap}.
By Lemma~\ref{lem:T_relation}, $T(s)=z(s)\,s\,m_\mu(z(s))$ and $m_\mu(z(s))
=
\frac{T(s)}{z(s)s}.$
Substituting into \eqref{eq:right_block_overlap},
\[
\gamma_R(s)
=
\frac{T(s)}{\beta_\star z(s)s}\cdot \frac{z'(s)}{T'(s)}=-\frac{\varphi(s_\star)}{\beta_\star^{\,2}s_\star^2T'(s_\star)}
\]
since $\varphi(s):=-\frac{s\,z'(s)}{z(s)}$
and \(\beta_\star T(s_\star)=1\).

The proof on the left singular vectors is identical, replacing \(\vK^\sharp\) by \(\vY^\sharp\vY^{\sharp\top}\), \(\vB_0\) by \(\vA_0\),
and \(\vB(\lambda)\) by \(\vA(\lambda)\). Since \(\vA(\lambda)=s(\lambda)\vI_3+o_\P(1)\), the numerator becomes
\(s(\lambda)\) instead of \(m_\mu(\lambda)\), yielding
\[
\Tr\!\bigl(\widehat\vP_\star^{(L)} \vPi_{A,\star}\bigr)
=
r_\star\cdot \frac{s_\star z'(s_\star)}{\beta_\star T'(s_\star)}+o_\P(1).
\]
If \(r_\star=1\), then \(\widehat\vP_\star=\widehat\vv_k\widehat\vv_k^\top\), and hence $|\langle \widehat\vv_k,\vb_k\rangle|^2
=
\Tr(\widehat\vP_\star \Pi_{B,\star})
=
\gamma_R(s_\star)+o_\P(1),$
which proves \eqref{eq:simple_right_overlap_varphi}.
\end{proof}

Notice that from Lemmas~\ref{lem:uninf_branch_identity} and \ref{lem:K0_bulk_and_tau_outlier}, we observe that the mean spike generated by $\beta_0$ are asymptotically identified with the separated covariance outlier of $\vK_0$ generated by $\tau$ if they exist. The following lemma shows that this mean spike is collision-safe and produces a stable additive outlier of $\vK^\sharp$ with non-trivial overlap on the mean spike direction $\vb_0$, even in the presence of the linear outlier when $\beta_{\rm lin}>0$.
\begin{lemma}\label{lem:mean_collision_cluster}
Assume $c_\sigma\neq0$, $z'(s_{\rm un})>0$, and, when $\beta_{\rm lin}>0$ and $z'(s_{\rm lin})>0$, assume $s_{\rm un}\neq s_{\rm lin}$. Let $\lambda_{\rm un}:=z(s_{\rm un}).$
Then there exists a sufficiently small deterministic constant $\delta>0$ such that, with probability tending to one,
\begin{equation}\label{eq:I_un_def_patch}
I_{\rm un,\delta}:=(\lambda_{\rm un}-\delta,\lambda_{\rm un}+\delta)
\end{equation}
is disjoint from $\supp\mu\cup\{0\}$ and from the linear outlier location $z(s_{\rm lin})$ (when present), and $\vK^\sharp$ has exactly one additive mean-spike eigenvalue in $I_{\rm un,\delta}$.

More precisely, let $\Gamma_{\rm un}$ be the positively oriented circle of radius $\delta/2$ centered at $\lambda_{\rm un}$ and write, for $z\in\Gamma_{\rm un}$,
\[
\vQ_R(z)=\frac{\widehat\vu_\tau\widehat\vu_\tau^\top}{\widehat\lambda_\tau^{(0)}-z}+\vQ_R^\perp(z),
\qquad
\vQ_L(z)=\frac{\widehat\vw_\tau\widehat\vw_\tau^\top}{\widehat\lambda_\tau^{(0)}-z}+\vQ_L^\perp(z),
\]
where $\vQ_R^\perp,\vQ_L^\perp$ are the projected resolvents from Lemma~\ref{lem:tau_outlier_stability_collision}. Then uniformly on $\Gamma_{\rm un}$,
\begin{align}
\vA^\perp(z):=\bigl(\va_k^\top\vQ_L^\perp(z)\va_\ell\bigr)_{k,\ell=0}^2&=s(z)\vI_3+o_\P(1),\label{eq:A_perp_un_patch}\\
\vB^\perp(z):=\bigl(\vb_k^\top\vQ_R^\perp(z)\vb_\ell\bigr)_{k,\ell=0}^2&=m_\mu(z)\vI_3+o_\P(1),\label{eq:B_perp_un_patch}
\end{align}
and therefore, uniformly on $\Gamma_{\rm un}$,
\begin{equation}\label{eq:det_perp_un_patch}
\det\Big(\vI_3-z\,\vTheta\,\vA^\perp(z)\,\vTheta\,\vB^\perp(z)\Big)
=
\bigl(1-\tau T(s(z))\bigr)\bigl(1-\beta_{\rm lin}T(s(z))\bigr)^2+o_\P(1).
\end{equation}
Let $\widehat\vP_{{\rm mean},{\rm un}}$ be the spectral projector of $\vK^\sharp$ onto the unique additive outlier in $I_{\rm un,\delta}$. Then,
\begin{align}
\Tr\bigl(\widehat\vP_{{\rm mean},{\rm un}}\,\vb_0\vb_0^\top\bigr)&=\gamma_R(s_{\rm un})+o_\P(1),\label{eq:mean_cluster_b0_overlap_patch}\\
\Tr\bigl(\widehat\vP_{{\rm mean},{\rm un}}\,\vu\vu^\top\bigr)&=o_\P(1).\label{eq:mean_cluster_u_overlap_patch}
\end{align}
\end{lemma}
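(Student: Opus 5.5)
The key difficulty is that at $\lambda_{\rm un}$ the null matrix $\vK_0$ itself has an isolated outlier $\widehat\lambda_\tau^{(0)}$ (Lemma~\ref{lem:K0_bulk_and_tau_outlier}, since $s_{\rm un}=-1/\Lambda_\tau$ by Lemma~\ref{lem:uninf_branch_identity}). Hence the unprojected matrices $\vA(z),\vB(z)$ of Lemma~\ref{lem:finite_det_eq} have a pole inside $\Gamma_{\rm un}$. The plan is to work with the pole-removed resolvents of Lemma~\ref{lem:tau_outlier_stability_collision}(iii) and treat the pole part as a rank-one correction that the decoupling conditions make negligible.

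\textbf{Step 1: choose $\delta$.} Take $\delta$ so small that $I_{\rm un,\delta}$ avoids $\supp\mu\cup\{0\}$ and $z(s_{\rm lin})$. This is possible because $z$ is strictly increasing on the branch containing $s_{\rm un}$ (Lemma~\ref{lem:z_parametrizes_complement}) and $s_{\rm un}\neq s_{\rm lin}$.

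\textbf{Step 2: prove \eqref{eq:A_perp_un_patch}--\eqref{eq:B_perp_un_patch} on $\Gamma_{\rm un}$.} On $\Gamma_{\rm un}$, $|\widehat\lambda_\tau^{(0)}-z|\ge\delta/4$ with high probability. So
\[
\va_k^\top\vQ_L(z)\va_\ell=\va_k^\top\vQ_L^\perp(z)\va_\ell+\frac{(\va_k^\top\widehat\vw_\tau)(\va_\ell^\top\widehat\vw_\tau)}{\widehat\lambda_\tau^{(0)}-z},
\]
and the correction is $o_\P(1)$ uniformly by Lemma~\ref{lem:verify-E34-K0}; the same holds on the right side. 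It then suffices to know $\vA(z)\approx s(z)\vI_3$ and $\vB(z)\approx m_\mu(z)\vI_3$ on $\Gamma_{\rm un}$ itself, which lies away from the outlier.

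Lemmas~\ref{lem:left_resolvent} and~\ref{lem:right_resolvent} give these limits for $z$ away from $\supp\mu$. The subtlety is that near $\widehat\lambda_\tau^{(0)}$ the anisotropic deterministic equivalent for $\vb_0=\vm/\|\vm\|$ involves the $\tau\vu\vu^\top$ direction of $\vSigma$. The Woodbury correction is proportional to $(\vb_0^\top\vA^{-1}\vu)^2$, and this vanishes since $\vb_0^\top\vu=o_\P(1)$ and $\vA^{-1}$ is a function of the Wishart matrix that is approximately isotropic on $\vu,\vm$. The same argument handles the cross terms. Because $\vQ^\perp$ is analytic inside $\Gamma_{\rm un}$, the maximum principle extends the uniform approximations from $\Gamma_{\rm un}$ to the interior disk.

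\textbf{Step 3: determinant and outlier count.} Substituting into the determinant and using $\lambda s m_\mu=T(s)$ (Lemma~\ref{lem:T_relation}) and $\beta_0\to\tau$ gives \eqref{eq:det_perp_un_patch}.

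We next need a master equation that uses $\vQ^\perp$. Apply the Woodbury/linearization argument of Lemma~\ref{lem:finite_det_eq} to $\vH^\sharp$, but split $\vH_0$ into its part with the $\tau$-outlier singular pair deflated plus a rank-two piece. Equivalently, enlarge $\vU$ by the vectors $(\widehat\vw_\tau,0)$ and $(0,\widehat\vu_\tau)$. By decoupling, the resulting $5\times5$ (or $8\times8$) determinant block-diagonalizes asymptotically into the $3\times3$ determinant above and a factor $(\widehat\lambda_\tau^{(0)}-z)$.

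Consequently the zeros inside $\Gamma_{\rm un}$ are: the simple zero of $1-\tau T(s(z))$ at $\lambda_{\rm un}$ (simple because $T'<0$ and $z'>0$, Lemma~\ref{lem:T_monotone}), plus the continuation of the $\vK_0$ outlier. The factor $(1-\beta_{\rm lin}T)^2$ does not vanish on the disk. Rouché's theorem, applied with the uniform $o_\P(1)$ errors, counts exactly one additive zero. Lemma~\ref{lem:tau_outlier_stability_collision}(i)--(ii) accounts for the inherited $\vK_0$ eigenvalue separately; identifying and separating these two contributions is the main obstacle.

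\textbf{Step 4: overlaps.} For \eqref{eq:mean_cluster_b0_overlap_patch}, repeat the contour-residue computation of Lemma~\ref{lem:additive_alignment} with $\vQ^\perp$ in place of $\vQ$. The identity \eqref{eq:projected_resolvent_exact} holds with $\vB^\perp(z)\vM^\perp(z)^{-1}$ plus a term analytic inside the disk, up to the decoupled rank-one pole. Taking the residue at the simple zero gives $\gamma_R(s_{\rm un})$.

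For \eqref{eq:mean_cluster_u_overlap_patch}, compute $\vu^\top\vR^\sharp(z)\vu$ the same way. Its additive pole part is weighted by $(\vu^\top\vQ_R^\perp\vb_k)^2$. These weights are $o_\P(1)$ because $\vu\perp\vv_k$, $\vu^\top\vm=o_\P(1)$, and the deterministic equivalent of $\vQ_R^\perp$ is diagonal in these directions. The $\vu$-mass of $\widehat\vu_\tau$ belongs to the deflated $\vK_0$ eigenvector and not to the additive projector, so the residue is $o_\P(1)$.
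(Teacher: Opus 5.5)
Your proposal is essentially the paper's proof: split off the $\widehat\lambda_\tau^{(0)}$ pole, make it negligible on $\Gamma_{\rm un}$ through the decoupling of Lemma~\ref{lem:verify-E34-K0}, reduce the projected $3\times3$ determinant to $(1-\tau T)(1-\beta_{\rm lin}T)^2$, count its single zero by Rouch\'e, and redo the residue computation of Lemma~\ref{lem:additive_alignment} with $\vQ_R^\perp,\vQ_L^\perp$ in place of $\vQ_R,\vQ_L$. Your enlarged linearization (adjoining $(\widehat\vw_\tau,0)$ and $(0,\widehat\vu_\tau)$ to $\vU$, which gives a total count of two in $I_{\rm un,\delta}$) supplies a link the paper only asserts, and the separation you flag is handled in the paper only by identifying the additive eigenvalue with the zero of the projected determinant; since both eigenvalues converge to $\lambda_{\rm un}$, the rigorous version is to take the additive direction to be the orthocomplement of $\widehat\vP_{\rm un}^\sharp\widehat\vu_\tau$ inside the two-dimensional cluster projector $\widehat\vP_{\rm un}^\sharp$ of Lemma~\ref{lem:merged_uninf_cluster}, after which \eqref{eq:mean_cluster_b0_overlap_patch}--\eqref{eq:mean_cluster_u_overlap_patch} follow from your contour computations for $\widehat\vP_{\rm un}^\sharp$ together with Lemma~\ref{lem:tau_outlier_stability_collision}(ii).
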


\begin{proof}
Since $z'(s_{\rm un})>0$, Lemma~\ref{lem:uninf_branch_identity} implies $\tau>\tau_{\rm crit}$ and hence $\vK_0$ has a separated covariance outlier $\widehat\lambda_\tau^{(0)}=\lambda_{\rm un}+o_\P(1)$ by Lemma~\ref{lem:K0_bulk_and_tau_outlier}. Choose $\delta>0$ so small that the interval \eqref{eq:I_un_def_patch} is disjoint from the bulk and from the linear outlier location when the latter exists.

On the contour $\Gamma_{\rm un}$, the raw resolvents are well-defined. By the spectral decomposition of $\vK_0$ and Lemma~\ref{lem:tau_outlier_stability_collision},
\[
\vQ_R(z)=\frac{\widehat\vu_\tau\widehat\vu_\tau^\top}{\widehat\lambda_\tau^{(0)}-z}+\vQ_R^\perp(z),
\qquad
\vQ_L(z)=\frac{\widehat\vw_\tau\widehat\vw_\tau^\top}{\widehat\lambda_\tau^{(0)}-z}+\vQ_L^\perp(z),
\]
and the pole parts contribute only $o_\P(1)$ to quadratic forms against the spike directions because
\[
\max_{k\in\{0,1,2\}}|\vb_k^\top\widehat\vu_\tau|=o_\P(1),
\qquad
\max_{k\in\{0,1,2\}}|\va_k^\top\widehat\vw_\tau|=o_\P(1)
\]
by Lemma~\ref{lem:verify-E34-K0}, while $|\widehat\lambda_\tau^{(0)}-z|\asymp1$ uniformly on $\Gamma_{\rm un}$. Therefore the same proofs as in Lemmas~\ref{lem:left_resolvent} and \ref{lem:right_resolvent}, now applied to the projected resolvents, yield \eqref{eq:A_perp_un_patch} and \eqref{eq:B_perp_un_patch}. The factorization \eqref{eq:det_perp_un_patch} then follows exactly as in Lemma~\ref{lem:additive_scalar_eq}.

Because $s_{\rm un}$ is the unique solution of $\tau T(s)=1$ on the connected component containing $I_{\rm un,\delta}$, because $z'(s_{\rm un})>0$, and because the linear factor is bounded away from zero on $\Gamma_{\rm un}$ by the separation assumption $s_{\rm un}\neq s_{\rm lin}$, Rouch\'e's theorem implies that the projected determinant has exactly one zero inside $\Gamma_{\rm un}$, hence $\vK^\sharp$ has exactly one additive mean-spike eigenvalue in $I_{\rm un,\delta}$.

Finally, \eqref{eq:mean_cluster_b0_overlap_patch} is the same computation as in Lemma~\ref{lem:additive_alignment} with the projected resolvents replacing the raw ones. Also, \eqref{eq:mean_cluster_u_overlap_patch} follows from the same contour representation with the test vector $\vu$, because the only possible residue is proportional to quadratic forms of the type $\vu^\top\vQ_R^\perp(z)\vb_0=o_\P(1)$ and because $\vu^\top\vb_0=o_\P(1)$ by Lemma~\ref{lem:mean_strength_orth}.
\end{proof}

\begin{lemma} \label{lem:merged_uninf_cluster}
Assume $c_\sigma\neq0$, $z'(s_{\rm un})>0$, and, when $\beta_{\rm lin}>0$ and $z'(s_{\rm lin})>0$, assume $s_{\rm un}\neq s_{\rm lin}$. Let $\lambda_{\rm un}=z(s_{\rm un})$ and let $I_{\rm un,\delta}$ be as in Lemma~\ref{lem:mean_collision_cluster}. Then, with probability tending to one, $\vK^\sharp$ has exactly two eigenvalues in $I_{\rm un,\delta}$: one inherited from the covariance outlier of $\vK_0$ and one produced by the additive mean spike. If $\widehat\vP_{\rm un}^\sharp$ denotes the spectral projector onto this two-dimensional cluster, then
\begin{align}
\|\widehat\vP_{\rm un}^\sharp\vu\|^2&=\varphi(s_{\rm un})+o_\P(1),\label{eq:merged_un_u_patch}\\
\left\|\widehat\vP_{\rm un}^\sharp\frac{\vm}{\|\vm\|}\right\|^2&=-\frac{\varphi(s_{\rm un})}{\tau^2 s_{\rm un}^2T'(s_{\rm un})}+o_\P(1),\label{eq:merged_un_m_patch}\\
\vu^\top\widehat\vP_{\rm un}^\sharp\frac{\vm}{\|\vm\|}&=o_\P(1).\label{eq:merged_un_cross_patch}
\end{align}
\end{lemma}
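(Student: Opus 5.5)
The plan is to assemble the two-dimensional cluster from the two one-dimensional pieces already constructed, and then read off the overlaps with the projector split along these pieces. We first show that $\vK^\sharp$ has exactly two eigenvalues in $I_{\rm un,\delta}$, then that the two approximate eigenvectors asymptotically span the cluster, and finally we compute the three overlaps.

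\textbf{Step 1: counting.} Lemma~\ref{lem:tau_outlier_stability_collision}(i)--(ii) gives one eigenvalue of $\vK^\sharp$ near $\widehat\lambda_\tau^{(0)}=\lambda_{\rm un}+o_\P(1)$. Its decoupling hypotheses hold by Lemma~\ref{lem:verify-E34-K0}, and the approximate eigenvector is $\widehat\vu_\tau$ with $\|(\vI-\widehat\vP^\sharp_{\rm un})\widehat\vu_\tau\|=o_\P(1)$. Lemma~\ref{lem:mean_collision_cluster} gives exactly one additive mean-spike eigenvalue in $I_{\rm un,\delta}$, with projector $\widehat\vP_{{\rm mean},{\rm un}}$.

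To count the total exactly, I would write the master determinant in the form
\[
\det(\vK^\sharp-z)=\det(\vK_0-z)\cdot\det(\cdots),
\]
with the full resolvents. On $\Gamma_{\rm un}$, $\det(\vK_0-z)$ has exactly one zero inside, namely $\widehat\lambda^{(0)}_\tau$. The secular factor is $\det(\vI_3-z\vTheta\vA^\perp\vTheta\vB^\perp)+o_\P(1)$, because the pole parts are killed by the decoupling. By \eqref{eq:det_perp_un_patch} and Rouch\'e, this factor has exactly one zero. Multiplying the two counts gives exactly two eigenvalues inside $\Gamma_{\rm un}$.

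\textbf{Step 2: decomposing the projector.} Let $\widehat\vu^{\rm mean}$ span $\mathrm{Range}(\widehat\vP_{{\rm mean},{\rm un}})$. We need $\widehat\vP_{\rm un}^\sharp=\widehat\vu_\tau\widehat\vu_\tau^\top+\widehat\vu^{\rm mean}\widehat\vu^{{\rm mean}\top}+o_\P(1)$. This requires $\langle \widehat\vu_\tau,\widehat\vu^{\rm mean}\rangle=o_\P(1)$, which follows from \eqref{eq:mean_cluster_u_overlap_patch}-type contour arguments. In the contour representation of the mean cluster, the residue only involves the projected resolvent $\vQ_R^\perp$, whose range is orthogonal to $\widehat\vu_\tau$. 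Concretely, I would write $\widehat\vP_{\rm un}^\sharp=-\frac1{2\pi i}\oint_{\Gamma_{\rm un}}\vR^\sharp$ and expand $\vR^\sharp$ via Woodbury with $\vQ_R=\frac{\widehat\vu_\tau\widehat\vu_\tau^\top}{\widehat\lambda^{(0)}_\tau-z}+\vQ_R^\perp$. The first term contributes $\widehat\vu_\tau\widehat\vu_\tau^\top$, up to cross terms of size $o_\P(1)$ by decoupling. The Woodbury correction contributes the additive-spike residue, exactly as in Lemma~\ref{lem:additive_alignment}, but with $\vQ_R^\perp$.

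\textbf{Step 3: overlaps.} For the overlap with $\vu$, the $\widehat\vu_\tau$ part gives $|\vu^\top\widehat\vu_\tau|^2\to\varphi(-1/\Lambda_\tau)=\varphi(s_{\rm un})$, by Lemma~\ref{lem:K0_bulk_and_tau_outlier}(iii) and \eqref{eq:s_mean_s_cov_patch}. The mean part gives $o_\P(1)$ by \eqref{eq:mean_cluster_u_overlap_patch}. Together these give \eqref{eq:merged_un_u_patch}.

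For the overlap with $\vb_0=\vm/\|\vm\|$, the $\widehat\vu_\tau$ part gives $|\vb_0^\top\widehat\vu_\tau|^2=o_\P(1)$, using \eqref{eq:alignment_right} and $\vu^\top\vb_0=o_\P(1)$. The mean part gives $\gamma_R(s_{\rm un})$ by \eqref{eq:mean_cluster_b0_overlap_patch}, and \eqref{eq:gammaR_forms} with $\beta_\star=\tau$ yields \eqref{eq:merged_un_m_patch}.

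The cross term $\vu^\top\widehat\vP^\sharp_{\rm un}\vb_0$ splits as
\[
(\vu^\top\widehat\vu_\tau)(\widehat\vu_\tau^\top\vb_0)+(\vu^\top\widehat\vu^{\rm mean})(\widehat\vu^{{\rm mean}\top}\vb_0)+o_\P(1).
\]
Each product contains an $o_\P(1)$ factor, which proves \eqref{eq:merged_un_cross_patch}.

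\textbf{Main obstacle.} The delicate step is Step 2: both eigenvalues converge to the same limit $\lambda_{\rm un}$, so individual eigenvectors are not stable. I would therefore work throughout with the cluster projector via the contour integral rather than with individual eigenvectors. I would verify carefully that the Woodbury correction with the full $\vQ_R$ differs from the one built from $\vQ_R^\perp$ only by terms that are $o_\P(1)$ uniformly on $\Gamma_{\rm un}$. This uses that $|\widehat\lambda^{(0)}_\tau-z|\asymp 1$ on the contour, together with \eqref{eq:tau_decouple_conditions}.
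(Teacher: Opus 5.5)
Your proposal follows the paper's proof. One eigenvalue of $\vK^\sharp$ in $I_{\rm un,\delta}$ is inherited from the covariance outlier via Lemma~\ref{lem:tau_outlier_stability_collision}, and one comes from the additive mean spike via Lemma~\ref{lem:mean_collision_cluster}. The three overlaps are then read off by splitting the cluster projector along $\widehat\vu_\tau$ and its orthogonal complement in the cluster, using the same inputs as the paper: Lemma~\ref{lem:K0_bulk_and_tau_outlier}(iii), \eqref{eq:alignment_right} together with Lemma~\ref{lem:mean_strength_orth}, and \eqref{eq:mean_cluster_b0_overlap_patch}--\eqref{eq:mean_cluster_u_overlap_patch}.

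Your count through $\det(\vK^\sharp-z)=\det(\vK_0-z)\,D(z)$ is more careful than the paper's bare ``one plus one'' assertion, which never checks that the two eigenvalues are distinct. So is your cluster-level contour treatment of the collision. One fix is needed in the count: $D(z)$, built from the full resolvents, is meromorphic inside $\Gamma_{\rm un}$, with a possible pole at $\widehat\lambda_\tau^{(0)}$. Rouch\'e therefore gives $D$ winding number one, not ``exactly one zero'', and by the argument principle the two winding numbers add (not multiply) to give two eigenvalues.
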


\begin{proof}
By Lemma~\ref{lem:tau_outlier_stability_collision}, the separated covariance outlier of $\vK_0$ persists under the rank-three additive perturbation and yields one eigenvalue of $\vK^\sharp$ inside $I_{\rm un,\delta}$. By Lemma~\ref{lem:mean_collision_cluster}, the additive mean spike yields exactly one further eigenvalue in the same interval. Since the linear spikes are excluded from $I_{\rm un,\delta}$ by construction, there are exactly two eigenvalues in the interval. The $\vu$-alignment comes from the inherited covariance direction. Indeed, by Lemma~\ref{lem:K0_bulk_and_tau_outlier},
\[
|\widehat\vu_\tau^\top\vu|^2=\varphi(s_{\rm un})+o_\P(1).
\]
Since $\widehat\vu_\tau$ lies asymptotically inside $\mathrm{Range}(\widehat\vP_{\rm un}^\sharp)$ by Lemma~\ref{lem:tau_outlier_stability_collision}, we obtain \eqref{eq:merged_un_u_patch}; the additive mean-spike contribution to the $\vu$-mass is negligible by \eqref{eq:mean_cluster_u_overlap_patch}.
Similarly, the $\vm/\|\vm\|$-alignment comes entirely from the additive mean-spike component: the inherited covariance eigenvector contributes only $o_\P(1)$ in the $\vb_0=\vm/\|\vm\|$ direction by \eqref{eq:alignment_right} in Lemma~\ref{lem:K0_bulk_and_tau_outlier} together with Lemma~\ref{lem:mean_strength_orth}, while Lemma~\ref{lem:mean_collision_cluster} yields the main term \eqref{eq:merged_un_m_patch}. The cross term \eqref{eq:merged_un_cross_patch} follows by combining the two preceding decoupling statements.
\end{proof}

\begin{proposition}[No label alignment with any separated outliers]
\label{prop:no_label_alignment}
Under the assumptions of Theorem~\ref{thm:snr_finite_eig}, let $\widehat\vV_{\rm out}$ denote the orthogonal projector onto the direct sum of all separated outlier eigenvalues of $\vK$. Then
\begin{equation}\label{eq:no_label_alignment_main_corrected}
\frac1n\|\widehat\vV_{\rm out}\vy\|^2=\frac1n\vy^\top\widehat\vV_{\rm out}\vy\xrightarrow{\P}0.
\end{equation}
\end{proposition}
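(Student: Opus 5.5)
We first transfer the problem from $\vK$ to $\vK^\sharp$. By Proposition~\ref{prop:Y_rank3_decomp}, $\|\vK-\vK^\sharp\|\prec n^{-1/4}$. Weyl's inequality therefore matches the separated outliers of $\vK$ with those of $\vK^\sharp$, and Davis--Kahan (applied with a fixed gap $\delta$ around each limiting location) gives $\|\widehat\vV_{\rm out}-\widehat\vV_{\rm out}^\sharp\|=o_\P(1)$. Since $\|\vy/\sqrt n\|=1$, it suffices to prove $\frac1n\vy^\top\widehat\vV^\sharp_{\rm out}\vy=o_\P(1)$.

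We then split $\widehat\vV^\sharp_{\rm out}$ into its separated clusters. By Lemmas~\ref{lem:additive_scalar_eq}, \ref{lem:mean_collision_cluster} and \ref{lem:merged_uninf_cluster}, and the assumption $s_{\rm un}\neq s_{\rm lin}$, there are at most two clusters, located near $\lambda_{\rm un}$ and $\lambda_{\rm lin}=z(s_{\rm lin})$. Each is enclosed by a fixed contour $\Gamma$ at positive distance from $\supp\mu\cup\{0\}$ and from the other cluster. For each contour we write
\[
\tfrac1n\vy^\top\widehat\vP_\Gamma\vy=-\tfrac{1}{2\pi i}\oint_\Gamma \tfrac1n\vy^\top(\vK^\sharp-\lambda\vI)^{-1}\vy\,d\lambda .
\]
To handle this integrand, we add $\tilde\vb:=\vy/\sqrt n$ as a fourth test vector in the Woodbury linearization used in Lemma~\ref{lem:additive_alignment}. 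Equation~\eqref{eq:woodbury_H} then gives
\[
\tilde\vb^\top\vR^\sharp(\lambda)\tilde\vb=\tilde\vb^\top\vQ_R(\lambda)\tilde\vb+\vc(\lambda)^\top\bigl(\vI_6+\vS\vC\bigr)^{-1}\vc'(\lambda),
\]
where the entries of $\vc,\vc'$ are built from $\tilde\vb^\top\vQ_R\vb_k$, $\tilde\vb^\top\vQ_R\vG^\top\va_k$ and related forms. The first term is holomorphic inside $\Gamma$, except possibly through the $\vK_0$ outlier at $\widehat\lambda^{(0)}_\tau$ in the $\lambda_{\rm un}$ cluster. Its contribution there equals $|\tilde\vb^\top\widehat\vu_\tau|^2$, which is $\varphi(s_{\rm un})|\vu^\top\vy|^2/n+o_\P(1)=o_\P(1)$ by \eqref{eq:alignment_right}, since $\vu^\top\vy=0$. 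The remaining work is to show that every cross form in $\vc,\vc'$ is $o_\P(1)$ uniformly on $\Gamma$. Once that holds, the Woodbury term is $o_\P(1)$ uniformly on $\Gamma$ away from the poles. This requires care, because $(\vI_6+\vS\vC)^{-1}$ blows up near the outliers. We resolve it by writing the residue exactly as in \eqref{eq:right_projector_residue1}: it is a bounded multiple of products of two cross forms, so it vanishes.

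The cross forms are controlled in three steps. First, for the right forms $\tilde\vb^\top\vQ_R\vb_k$ with $k=1,2$, we use the anisotropic deterministic equivalent \eqref{eq:aniso_DE}. The rank-one $\tau\vu\vu^\top$ term drops out because $\tilde\vb\perp\vu$ and $\vv_k\perp\vu$. This reduces the form to $\tilde\vb^\top\vA^{-1}\vv_k$ with $\vA$ a shifted Wishart resolvent, which is $m_\mu(\lambda)\tilde\vb^\top\vv_k+o_\P(1)=o_\P(1)$ because $\vy^\top\vv_1=\vy^\top\vv_2=0$. This orthogonality is exactly the XOR structure: each $\vv_k$ has zero sum on each label class. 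For $k=0$ we follow the proof of Lemma~\ref{lem:right_resolvent}, combining Lemma~\ref{lem:dependent_vector_resolvent} (polarized, with one deterministic vector) with $\tilde\vb^\top\vm=o_\P(1)$ from Lemma~\ref{lem:mean_strength_orth}. Second, for the mixed forms $\tilde\vb^\top\vQ_R\vG^\top\va_k$, we replace $\vG$ by $\widetilde\vG$ via Lemma~\ref{lem:G_indep_g}. For $k=1,2$, the independence of $\vg_k$ makes each form a centered Gaussian with variance $O(1/N)$. For $k=0$ we invoke the anisotropic law of \cite{fan2026anisotropic} for the off-diagonal block of the linearized resolvent, as in Lemma~\ref{lem:left_resolvent}. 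Third, near $\lambda_{\rm un}$ we use the projected resolvents of Lemma~\ref{lem:tau_outlier_stability_collision}, with decoupling supplied by Lemma~\ref{lem:verify-E34-K0}.

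We expect the main obstacle to be the $k=0$ cross terms, $\tilde\vb^\top\vQ_R\vm$ and its mixed counterpart with $\va_0=\mbi 1_N/\sqrt N$. The vector $\vm$ depends on $\vZ$, so standard isotropic laws do not apply directly. Our first attempt is to extend the odd/even fluctuation-averaging argument of Lemma~\ref{lem:dependent_vector_resolvent} to the bilinear form $\tilde\vb^\top(\vS-\zeta)^{-1}\vm$. There the diagonal part is $m_{\rm MP}\,\tilde\vb^\top\vm=o_\P(1)$, and the off-diagonal part is again odd in each column $\vz_j$.
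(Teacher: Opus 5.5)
This is essentially the paper's argument: you reduce to $\vK^\sharp$, integrate $\tilde\vb^\top(\vK^\sharp-\lambda\vI)^{-1}\tilde\vb$ over a contour around each of the two clusters, expand via the Woodbury linearization, and note that the $\vK_0$ covariance pole contributes only $|\tilde\vb^\top\widehat\vu_\tau|^2=o_\P(1)$ because $\vu^\top\vy=0$. Your version is more complete than the paper's: its representation \eqref{eq:resolvent_woodbury_label} keeps only the forms $\vB_0^\top\vQ_R\tilde\vb$ and drops the mixed forms $\tilde\vb^\top\vQ_R\vG^\top\va_k$ (which do appear, since $\vK^\sharp-\vK_0$ also has range along $\vG^\top\va_k$), and it asserts $\vb_0^\top\vQ_R\tilde\vb=o_\P(1)$ without the fluctuation-averaging argument you supply; your residue step is unnecessary, since on $\Gamma$ the $6\times 6$ inverse is bounded with high probability, so uniform smallness of the cross forms on $\Gamma$ already makes the Woodbury contribution $o_\P(1)$.
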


\begin{proof}
By Proposition~\ref{prop:Y_rank3_decomp}, it suffices to prove the statement for $\vK^\sharp$. In the separated regime covered by Theorem~\ref{thm:snr_finite_eig}, the outlier projector of $\vK^\sharp$ decomposes as the orthogonal sum of the uninformative cluster projector $\widehat\vP_{\rm un}^\sharp$ from Lemma~\ref{lem:merged_uninf_cluster} and, when present, the linear cluster projector $\widehat\vP_{\rm lin}^\sharp=\widehat\vP_{\star}$ from Lemma~\ref{lem:additive_alignment} by setting $\lambda_\star=z(s_{\rm lin})$ and $r_\star=2$ in the latter. Hence, $\widehat\vV_{\rm out}^\sharp=\widehat\vP_{\rm un}^\sharp+\widehat\vP_{\rm lin}^\sharp.$
It is therefore enough to show
\[
\frac1n\vy^\top\widehat\vP_{\rm un}^\sharp\vy=o_\P(1),
\qquad
\frac1n\vy^\top\widehat\vP_{\rm lin}^\sharp\vy=o_\P(1).
\]
Set $\bar\vy:=\vy/\sqrt n$. By the XOR construction,
\[
\bar\vy^\top\vu=0,
\qquad
\bar\vy^\top\vv_1=0,
\qquad
\bar\vy^\top\vv_2=0.
\]
Moreover Lemma~\ref{lem:mean_strength_orth} gives $\bar\vy^\top\vm/\|\vm\|=o_\P(1)$. Recall that $s_{\rm cov}=-1/\Lambda_\tau$, $s_{\rm mean}=T^{-1}(1/\beta_{\rm mean})$ and $s_{\rm lin}=T^{-1}(1/\beta_{\rm lin})$. Denote $\lambda_0=z(s_{\rm mean})$, $\lambda_1=\lambda_2=z(s_{\rm lin})$, and $\lambda_3=z(s_{\rm cov})$. Lemma~\ref{lem:uninf_branch_identity} implies $\lambda_0=\lambda_3$. 

For the linear outliers, consider $\lambda=\lambda_1=\lambda_2$. Then, by Lemmas~\ref{lem:tau_outlier_stability_collision} and~\ref{lem:additive_scalar_eq}, there exists a small constant $\delta>0$ such that for sufficiently large $n$, $\dist{I_\delta,\supp{\mu}\cup\{0\}}>\delta$ and all eigenvalues of $\vK^\sharp$ inside $I_\delta$ will converge to $\lambda$ in probability, where $I_\delta:=(\lambda-\delta,\lambda+\delta).$
Let \(\widehat \vV_\lambda^\sharp\) be the spectral projector of \(\vK^\sharp\) onto  outlier cluster in $I_\delta$,
and let \(\Gamma_\lambda\subset\C\) be a small contour enclosing that cluster and no other part of \(\spec(\vK^\sharp)\).
Then
\[
\widehat \vV_\lambda^\sharp
=
-\frac{1}{2\pi i}\oint_{\Gamma_\lambda} (\vK^\sharp-z \vI)^{-1}\,dz.
\]
Hence
\begin{equation}\label{eq:y_projector_contour}
\bar \vy^\top \widehat \vV_\lambda^\sharp \bar \vy
=
-\frac{1}{2\pi i}\oint_{\Gamma_\lambda} \bar \vy^\top (\vK^\sharp-z \vI)^{-1}\bar \vy\,dz.
\end{equation}

Now use the same linearization trick as in Lemma~\ref{lem:additive_alignment}. Recall that \(\vA_0=[\va_0,\va_1,\va_2]\), \(\vB_0=[\vb_0,\vb_1,\vb_2]\), and \(\vTheta=\diag(\sqrt{\beta_0},\sqrt{\beta_1},\sqrt{\beta_2})\).
Then one obtains the exact projected-resolvent representation
\begin{equation}\label{eq:resolvent_woodbury_label}
(\vK^\sharp-z \vI)^{-1}
=
\vQ_R(z)
+
\vQ_R(z)\vB_0 \,\vS(z)\,\vB_0^\top \vQ_R(z),
\end{equation}
where $\vS(z)
=
z\vTheta \vA(z)\vTheta \bigl(\vI_3-z \vTheta \vA(z)\vTheta \vB(z)\bigr)^{-1}$ is a \(3\times3\) matrix-valued meromorphic function, and all poles inside \(\cD\) come from the additive outlier equation described by Lemma~\ref{lem:additive_scalar_eq}. Hence, $\vS(z)$ is analytic and bounded on $\Gamma_\lambda$.
Insert \eqref{eq:resolvent_woodbury_label} into \eqref{eq:y_projector_contour}. The possible pole term is
\[
\bar \vy^\top \widehat \vV_\lambda^\sharp \bar \vy
=
-\frac{1}{2\pi i}\oint_{\Gamma_\lambda} \bar \vy^\top \vQ_R(z)\bar \vy\,dz-\frac{1}{2\pi i}\oint_{\Gamma_\lambda}\bar \vy^\top \vQ_R(z)\vB_0 \,\vS(z)\,\vB_0^\top \vQ_R(z)\bar \vy dz.
\]
By \eqref{eq:right_v}, $\bar\vy\vQ_R(z)\bar\vy=m_\mu(z)+o_\P(1)$ and 
\[
\vB_0^\top \vQ_R(z)\bar\vy
=
\begin{bmatrix}
\vb_0^\top \vQ_R(z)\bar \vy\\
\vb_1^\top \vQ_R(z)\bar \vy\\
\vb_2^\top \vQ_R(z)\bar \vy
\end{bmatrix}
=
o_\P(1)
\]
uniformly on the contour \(z\in\Gamma_\lambda\).  
Since $m_\mu(z)$ is analytic on \(\Gamma_\lambda\), its contour integral vanishes. Hence $\bar \vy^\top \widehat \vP_{\rm lin}^\sharp \bar \vy = o_\P(1).$

For the uninformative eigenvalues, let $\Gamma_{\rm un}$ be the contour from Lemma~\ref{lem:mean_collision_cluster}. Using the resolvent decomposition into the covariance pole plus the projected resolvent,
\[
\vQ_R(z)=\frac{\widehat\vu_\tau\widehat\vu_\tau^\top}{\widehat\lambda_\tau^{(0)}-z}+\vQ_R^\perp(z),
\]
we note first that $\bar\vy^\top\widehat\vu_\tau=o_\P(1)$ by \eqref{eq:alignment_right} in Lemma~\ref{lem:K0_bulk_and_tau_outlier}, because $\bar\vy^\top\vu=0$. Hence, the covariance pole contributes only $o_\P(1)$ to the contour integral for $\bar\vy^\top\widehat\vP_{\rm un}^\sharp\bar\vy$. Similarly as the linear outlier case, we have that uniformly on $\Gamma_{\rm un}$, $\bar\vy^\top\vQ_R^\perp(z)\bar\vy=m_\mu(z)+o_\P(1),$ and $\bar\vy^\top\vQ_R^\perp(z)\vb_k=o_\P(1),$ for $k=0,1,2$. Thus, the contour integral of the analytic main term vanishes and the remainder is $o_\P(1)$. Therefore $\bar\vy^\top\widehat\vP_{\rm un}^\sharp\bar\vy=o_\P(1)$. Combining the two clusters yields \eqref{eq:no_label_alignment_main_corrected} for $\vK^\sharp$, and hence also for $\vK$.
\end{proof}

\subsection{Proof of Theorem \ref{thm:snr_finite_eig}}\label{subsec:main_theorem}
By Proposition~\ref{prop:Y_rank3_decomp}, $\|\vK-\vK^\sharp\|=o_\P(1)$. Therefore it suffices to analyze spectrum of $\vK^\sharp$.

Item \textbf{(i)} follows from Lemma~\ref{lem:K0_bulk_and_tau_outlier}(i): the bulk ESD of $\vK_0$ is $\mu$, and the finite-rank perturbation from $\vK_0$ to $\vK^\sharp$ does not change the limiting ESD.

For item \textbf{(ii)}, Lemma~\ref{lem:uninf_branch_identity} shows that the covariance parameter and the mean parameter are identical on the uninformative branch, i.e. $s_{\rm cov}=s_{\rm mean}=s_{\rm un}$.
Thus the covariance and mean mechanisms collide at the same first-order location $\lambda_{\rm un}=z(s_{\rm un})$. Lemma~\ref{lem:merged_uninf_cluster} proves that $\vK^\sharp$ has exactly two eigenvalues in a small interval around $\lambda_{\rm un}$ and that the corresponding spectral projector satisfies \eqref{eq:thm_un_proj_u}--\eqref{eq:thm_un_proj_cross}. By Proposition~\ref{prop:Y_rank3_decomp}, the same statements hold for $\vK$.

Next, for item \textbf{(iii)}, the linear spikes are away from the covariance outlier by $s_{\rm un}\neq s_{\rm lin}$. Therefore Lemmas~\ref{lem:additive_scalar_eq} and \ref{lem:additive_alignment} apply directly with $\beta_1=\beta_2=\beta_{\rm lin}$, giving the two linear outlier eigenvalues at $z(s_{\rm lin})$ and the projector-overlap formula \eqref{eq:alignment_finite_snr}.

Finally, item \textbf{(iv)} is proved by Proposition~\ref{prop:no_label_alignment}.

\section{Kernel Spectral Clustering for XOR}\label{app:kernel_clustering}
In this section, we present the performance of the kernel spectral clustering for XOR in the finite-SNR proportional limit regime given by Theorem \ref{thm:kernel_cluster}. Results in this section on kernel spectral clustering are applications of the main results of \citet{couillet2016kernel}.

Recall the  four-component XOR Gaussian mixture  data $\vX \in \R^{d\times n}$ defined in \eqref{eq:XOR}, with cluster means deined in \eqref{eq:cluster_means}
and common covariance $\vC=\ident_d$. Define the matrix of cluster means $\vM:=[\vmu_1,\vmu_2,\vmu_3,\vmu_4] \in \mathbb{R}^{d\times 4}$ from \eqref{eq:cluster_means}, and cluster-indicator matrix $\vJ$
\begin{equation*}
    \vJ\in\{0,1\}^{n\times4},\qquad J_{i,a}=1\iff \vx_i\text{ belongs to cluster }a.
\end{equation*}
Each cluster $a\in [4]$ has size $n_a=n/4$.  For an affinity function $f:\mathbb{R} \to \mathbb{R}$, define the Euclidean distance kernel matrix $\Kf \in \R^{n \times n}$ as
\begin{equation}
    (\Kf)_{ij} :=f\Bigl( \|\vx_i - \vx_j \|^2 \Bigr), \qquad i,j \in [n]
\end{equation}
We consider the normalized Laplacian of $\Kf$, which is defined as
\[
\vL:=n\,\vD^{-1/2}\Kf\,\vD^{-1/2}, 
\]
where $\vD=\diag( \Kf \boldone_n )$ is the degree matrix. Similar to the results on the spectra of the CK in Section \ref{sec:finitesnr}, the analysis of the eigenvalues of $\vL$ relies on a LE for $\vL$. We refer to Theorem 1 of \citet{couillet2016kernel} for more details.

It is not difficult to prove that under Assumption \ref{assump:asymptotics}, the pairwise distances between distinct samples $\vx_i$ and $\vx_j$ of the XOR dataset concentrate around $\tau=2$ (see Lemma \ref{lem:xor_orthonormal} for details). The core of the analysis of $\vL$ then relies upon a careful entry-wise Taylor expansion of $\Kf$ around the limiting value $\tau$. We therefore make the following assumption on the kernel function $f:\R\to \R$.
\begin{assum}
    \label{assump:kernelfunction}
 Let $\tau=2$. The kernel function $f$ is three-times differentiable in some neighborhood of $\tau$. Furthermore,  $f(\tau)>0$ and $f'(\tau) \ne 0$.  
\end{assum}

\begin{theorem}[A detailed statement of Theorem \ref{thm:kernel_cluster}] \label{thm:L_eigenvalues}
Suppose $\snr$ $r^2=\Theta(1)$ and let Assumptions \ref{assump:asymptotics} and \ref{assump:kernelfunction} hold. Then we have that:
\begin{enumerate}[label=\textbf{(\roman*)},leftmargin=*]
    \item \textbf{(BBP for two linear informative outliers.)} If $r^{2} >2  \sqrt{\psi^{-1} }$, then there exists two isolated eigenvalues of $\vL$, which are asymptotically approximated by
\begin{equation}
-\frac{2 f'(\tau)}{f(\tau)}\rho + \frac{ f(0)-f(\tau) +\tau f'(\tau)}{f(\tau)} \label{eq:L_eig}
\end{equation}
in probability, where $\rho = \psi\bigl( 1+\frac{r^{2}}{2} \bigr)+ \frac{ 2+r^{2}}{r^{2}}$.
\item \textbf{(BBP for additional uninformative outlier.)} Let $\ell_{+} =  \frac{ 5f'(\tau)}{4f(\tau) } -\frac{ f''(\tau) }{f'(\tau) }$. If, in addition,  $(1-\ell_+)^2 > \psi^{-1}$, then there is an additional corresponding isolated eigenvalue of $\vL$, which is approximated by
\begin{equation}\label{eq:L_eig2}
   \lambda_+:= -\frac{ 2f'(\tau)}{f(\tau)} \rho_+ + \frac{ f(0) - f(\tau) + \tau f'(\tau) }{f(\tau)},
\end{equation}
in probability, where $\rho_+ = \ell_+\bigl( \psi - \frac{1}{1-\ell_+}\bigr)$.
 \item \textbf{(Eigenvector alignment.)} If $\rho$ is such an isolated eigenvalue of $\vL$ corresponding to \eqref{eq:L_eig}, let $\vPi_\rho$ denote the orthogonal projection onto the two-dimensional eigenspace corresponding to $\rho$. Then we have, in probability,
 \begin{equation}
     \frac1d \vJ^\top \vPi_\rho \vJ \rightarrow \frac{1}{1+\frac{r^2}{2}} \Bigl( \psi -\frac{4}{r^4} \Bigr) \frac{r^2}{16} \begin{pmatrix}
         1 & -1 \\
         -1 & 1
     \end{pmatrix} \otimes \ident_2
 \end{equation}
  where $\otimes$ is the Kronecker product. In particular, 
$ \frac1n \vy^\top \vPi_\rho \vy \xrightarrow{\P}0$, where $\vy$ denotes the XOR-labels.
\end{enumerate}
\end{theorem}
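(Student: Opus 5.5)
The plan is to apply the general framework of \citet{couillet2016kernel} (their Theorem~1 and the associated spike analysis) to the XOR mixture. That framework replaces $\vL$ by a linear equivalent $\hat\vL$ with $\|\vL-\hat\vL\|\to0$ in probability. After removing the trivial eigenvector $\vD^{1/2}\boldone_n$, $\hat\vL$ has the form
\[
-\frac{2f'(\tau)}{f(\tau)}\Big(\vP\vZ^\top\vZ\vP+\vU\vB\vU^\top\Big)+\frac{f(0)-f(\tau)+\tau f'(\tau)}{f(\tau)}\vP+o_{\|\cdot\|}(1).
\]
Here $\vP=\vI_n-\frac1n\boldone\boldone^\top$, and $\vU$ collects $\vJ/\sqrt d$, $\vZ^\top\vM$, and $\vpsi$, the vector of centered squared norms $\|\vz_i\|^2-\E\|\vz_i\|^2$. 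The small matrix $\vB$ has entries built from $\vM^\top\vM$, $\vt$ (trace deviations of the covariances), $\vT$, and the coefficients $f'(\tau)$, $f''(\tau)$. The first step is to verify the hypotheses: $\|\vx_i-\vx_j\|^2\to\tau=2$ uniformly, which follows from Lemma~\ref{lem:xor_orthonormal}, and the growth rate $\|\vmu_a-\vmu_b\|=O(1)$, which holds since $r=\Theta(1)$.

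The second step specializes $\vB$ to XOR. All covariances equal $\vI_d$, so $\vt=0$ and $\vT=0$. Consequently the only covariance-type contribution is the $\vpsi\vpsi^\top$ term, whose coefficient involves $\frac{5f'(\tau)}{4f(\tau)}-\frac{f''(\tau)}{f'(\tau)}=\ell_+$; this term produces the uninformative outlier in (ii). The mean part is $\vM^\top\vM=r^2\,\mathrm{diag}$-block. Centering by $\vP$ and $\boldone$ reduces it to rank two, spanned by $\vmu_1$ and $\vmu_2$ with Gram matrix $r^2\vI_2$. The resulting structure is therefore a rank-two additive/multiplicative spike on the MP matrix $\vP\vZ^\top\vZ\vP$ in the directions $\vJ(e_1-e_3)$ and $\vJ(e_2-e_4)$, and these directions are orthogonal to $\vy=\vJ(1,1,-1,-1)^\top$ up to the sign convention; I will check this carefully. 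Once this is done, the isolated-eigenvalue locations follow from the standard spiked-MP secular equation $\det(\vI+\vB\,\vU^\top(\cdot)^{-1}\vU)=0$. Evaluating the resolvent quadratic forms via the MP Stieltjes transform reduces this, for the mean block with strength $r^2/2$ per direction, to the classical BBP equation with threshold $r^2>2\sqrt{\psi^{-1}}$. The location $\rho=\psi(1+\frac{r^2}{2})+\frac{2+r^2}{r^2}$ is the usual spiked Wishart outlier $(1+\ell)(1+\psi/\ell)$-type formula with $\ell=r^2/2$, rescaled by the affine map $x\mapsto -\frac{2f'}{f}x+\frac{f(0)-f(\tau)+\tau f'}{f}$. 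The $\vpsi$ direction is asymptotically independent of $\vZ^\top\vZ$'s resolvent in the relevant quadratic forms; its treatment parallels Lemma~\ref{lem:dependent_vector_resolvent}. It gives the threshold $(1-\ell_+)^2>\psi^{-1}$ and the location $\rho_+$.

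The third step handles the eigenvectors. I write $\vPi_\rho$ as a contour integral of the resolvent of $\hat\vL$ and use Woodbury, exactly as in Lemma~\ref{lem:additive_alignment}. The residue gives $\frac1d\vJ^\top\vPi_\rho\vJ$ equal to the projection of $\vJ/\sqrt d$ onto the spike directions times the standard BBP alignment factor $\frac{1-\psi^{-1}\ell^{-2}}{1+\ell^{-1}}$. This reduces to $\frac{1}{1+r^2/2}(\psi-\frac4{r^4})\frac{r^2}{16}$ times the matrix $\begin{pmatrix}1&-1\\-1&1\end{pmatrix}\otimes\vI_2$, after matching the cluster ordering. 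The label statement follows immediately: $\vy$ corresponds to a vector $c\in\R^4$ with $\vJ c=\vy$, and $c$ lies in the kernel of this limiting $4\times4$ matrix. Hence $\frac1n\vy^\top\vPi_\rho\vy=\frac{d}{n}c^\top(\frac1d\vJ^\top\vPi_\rho\vJ)c+o_\P(1)\to0$, where one also checks that the components of $\vy$ outside $\mathrm{span}(\vJ)$ are zero.

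I expect the main obstacle to be the bookkeeping in the second step. This means getting the exact constants in $\vB$ right: the ordering of clusters versus the means $\pm\vmu_1,\pm\vmu_2$, the factor from $\|\vmu_a-\vmu_b\|^2/\sqrt d$ scalings, and the coupling between the $\vpsi$ block and the mean block. If the blocks do not fully decouple, the secular determinant does not factor. My first approach is to show that the cross quadratic forms $\vpsi^\top(\cdot)^{-1}\vJ/\sqrt d$ vanish in probability using the parity argument from Lemma~\ref{lem:dependent_vector_resolvent}, since $\vpsi$ is even and $\vZ^\top\vM$ is odd in each $\vz_j$. That yields a block-diagonal secular equation and the separate formulas in (i) and (ii).
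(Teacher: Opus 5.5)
Your plan is essentially the paper's argument: the paper also rests on the Couillet--Benaych-Georges linear equivalent. Instead of re-deriving the secular equation and residues, it directly invokes their Corollaries~1 and~2 (equal-covariance case), which only need the eigenstructure of $\vC+\vM\mathcal{D}(c)\vM^\top=\ident_d+\frac{r^2}{2}(\vu_1\vu_1^\top+\vu_2\vu_2^\top)$, namely the eigenvalue $1+\frac{r^2}{2}$ of multiplicity two with eigenspace $\mathrm{span}\{\vu_1,\vu_2\}$; your second and third steps are exactly what those corollaries package.

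Two bookkeeping fixes when you carry it out:
\begin{enumerate}
\item With clusters indexed by the means $(\vmu_1,\vmu_2,-\vmu_1,-\vmu_2)$, the XOR label is $\vy=\vJ(1,-1,1,-1)^\top$, since both $\pm\vmu_1$ clusters form the first half of the samples and carry $+1$. This vector lies in the kernel of $\begin{pmatrix}1&-1\\-1&1\end{pmatrix}\otimes\ident_2$, whereas $(1,1,-1,-1)^\top$ does not.
\item The spiked outlier is $(1+\ell)(\psi+\ell^{-1})$ with $\ell=r^2/2$, not $(1+\ell)(1+\psi/\ell)$; the latter has the aspect ratio inverted and is inconsistent with your (correct) threshold $\ell>\psi^{-1/2}$.
\end{enumerate}
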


\begin{remark}
The trivial eigenvalue $n$, and the possible additional isolated eigenvalue of $\vL$ corresponding to $\lambda_+$ carry no relevant clustering information. The other non-trivial outlier eigenvalues of $\vL$ characterized in Theorem \ref{thm:L_eigenvalues} (i) turn out to be the only spikes that carry relevant clustering information. However, similar to the eigenvectors of the CK matrix in Theorem~\ref{thm:snr_finite_eig}, the last statement of Theorem \ref{thm:L_eigenvalues} implies that the outlier eigenvectors of the Laplacian $\vL$ are asymptotically orthogonal to the XOR-labels $\vy$. 
\end{remark}

\begin{proof-of-theorem}[\ref{thm:L_eigenvalues}]
    The common covariance $\vC= \ident_d$ of the XOR dataset greatly simplifies the analysis of the spectrum of $\vL$. We apply Corollary 1 from \citet{couillet2016kernel}, which gives the isolating eigenvalues of $\vL$ in terms of the isolated eigenvalues of $\vC+ \vM \mathcal{D}(c) \vM^{\top} $, where $\vM= [\vmu_{1}, \vmu_{2}, \vmu_{3}, \vmu_{4}]$,  $\vc=( \frac14 )_{\alpha=1}^{4}$ is the vector of class proportions, and $\mathcal{D}(\cdot)$ is the diagonal operator. 
    For the XOR dataset, this matrix becomes
    \begin{equation}
    \vC+\vM\mathcal{D}(c) \vM^{\top} = \ident_d + \frac{1}{4} \sum_{k=1}^{4} \vmu_{k} \vmu_{k}^{\top} =\ident_d + \frac{r^{2}}{d}\begin{bmatrix}
        \boldone_{\frac{d}{2}} \boldone_{\frac{d}{2}}^\top & \mathbf{0} \\
        \mathbf{0} &  \boldone_{\frac{d}{2}} \boldone_{\frac{d}{2}}^\top
    \end{bmatrix}. \label{eq:cmdc}
    \end{equation}
    The eigenvalues of \eqref{eq:cmdc} are $\ell=1+\frac{r^2}{2}$ with multiplicity two, and the rest are equal to 1. The phase transition in Corollary 1 of \citet{couillet2016kernel} can therefore be restated as  $| \ell - 1 | = \frac{r^{2}}{2} >  \sqrt{\psi^{-1}}.$
    In this case, by Corollary 1 of \citet{couillet2016kernel} the isolating eigenvalues of $\vL$ are asymptotically well approximated via \eqref{eq:L_eig}
in probability, where $\rho = \psi\bigl(1+\frac{r^{2}}{2} \bigr)+\frac{ 2+r^{2}}{r^{2} }$.
Similarly, define $\ell_+ = \frac{ 5 f'(\tau) }{4 f(\tau) } - \frac{ f''(\tau) }{f'(\tau) }$. The additional isolating eigenvalue of $\vL$ corresponding to $\rho_+ = \ell_+\bigl( \psi - \frac{1}{1-\ell_{+}} \bigr)$ emerges if $ (1-\ell_+)^2 > \psi^{-1}$,
in which case, the corresponding outlier eigenvalue of $\vL$ converges to, in probability, $\lambda_+$ defined in \eqref{eq:L_eig2}.

For eigenvector alignment of the eigenvalues relevant for clustering, we observe the two-dimensional eigenspace corresponding to $\ell=1 +\frac{r^{2} }{2}$ is spanned by the unit-norm eigenvectors $\{ \Upsilon_{1}, \Upsilon_{2} \}$, where
    \begin{equation}
    \Upsilon_{1} = \frac{1}{\sqrt{d/2}}\begin{pmatrix}
        \mathbf{1}_{d/2} \\
        \mathbf{0}_{d/2}
    \end{pmatrix}, \qquad \Upsilon_{2} = \frac{1}{\sqrt{d/2}} \begin{pmatrix}
    \mathbf{0}_{d/2}\\
    \mathbf{1}_{d/2}
    \end{pmatrix}.
    \label{eq:upsilon}\end{equation}
 Let $\Upsilon_{\rho} = ( \Upsilon_{1},  \Upsilon_{2} ) \in \R^{d\times 2}$ be the column-concatenated eigenvectors, then we apply Corollary 2 of \citet{couillet2016kernel} to obtain
\begin{equation}
    \tfrac1d \vJ^{\top} \vPi_{\rho} \vJ =\frac{1}{\ell}\Bigl(\psi-\frac{ 1}{(1-\ell)^{2}} \Bigr)\mathcal{D}(c) \vM^{\top} \Upsilon_{\rho}\Upsilon_{\rho}^{\top}\vM \mathcal{D}(c) + o_\P(1).
    \label{eq:kernel_align}\end{equation}
    Or equivalently,
    \begin{equation}
        \frac1d \vJ^{\top} \vPi_{\rho} \vJ \rightarrow \frac{r^2}{16 + 8r^{2} }\Bigl(\psi- \frac{ 4 } {r^{4}} \Bigr)\begin{pmatrix}
         1 & -1 \\
         -1 & 1
     \end{pmatrix}\otimes \ident_{2}
    \label{eq:cluster_align}  \end{equation}
    in probability. 
    Applying \eqref{eq:cluster_align} to $\vy$, we have
    \begin{equation*}
        \frac{1}{d} \vy^{\top} \vPi_{\rho} \vy \rightarrow \frac{r^2}{16+8r^{2} }\bigl(\psi - \frac{4}{r^4} \bigr) \begin{pmatrix}
    1 \\ -1 \\ 1 \\ -1
\end{pmatrix}^{\top} \begin{pmatrix}
     1 & -1  \\
         -1 & 1  \\
\end{pmatrix} \otimes \ident_2
    \begin{pmatrix}
    1 \\ -1 \\ 1 \\ -1
\end{pmatrix} =0
    \end{equation*}
    in probability. Hence, we complete the proof of this theorem.
\end{proof-of-theorem}

\section{Proof for Large-SNR Regime}
\label{app:large_SNR}

This appendix presents the proof for Theorem~\ref{thm:large}.  
Throughout this section, we denote $r=r_0n^{1/4},$ for some constant $r_0\in(0,\infty)$, and we set $\theta_{\snr}:=r\sqrt{\frac{n}{2d}}$. Denote that 
\begin{equation}\label{eq:large_final_tau_beta}
        \tau=\frac{c_\sigma^2}{2}\psi,\qquad
        \kappa_0=\lim_{n\to\infty}\frac{\theta_{\snr}^4}{n}
        =\frac{r_0^4\psi^2}{4},\qquad
        \alpha:=c_\sigma\sqrt{\kappa_0},\qquad
        \beta:=\beta_{\rm quad}:=\alpha^2=c_\sigma^2\kappa_0 .
\end{equation}
We write
\begin{equation}\label{eq:large_final_notation}
        \overline\vy:=\frac{\vy}{\sqrt n},\qquad
        \vP_{\rm lin}:=\vv_1\vv_1^\top+\vv_2\vv_2^\top,
        \qquad
        \vP_\perp:=\vI_n-\vP_{\rm lin},
        \qquad
        \vK_\perp:=\vP_\perp\vK\vP_\perp .
\end{equation}

\subsection{Projected QE Model}

\begin{lemma}\label{lem:large_final_projected_QE}
Recall $\vY_{\rm QE}$ defined by \eqref{eq:QE_formal}. Let $ \vK_{{\rm QE},\perp}:=(\vY_{\rm QE}\vP_\perp)^\top
        (\vY_{\rm QE}\vP_\perp).$
Then
\begin{equation}\label{eq:QE_projected}
          \|\vY\vP_\perp-\vY_{\rm QE}\vP_\perp\|=O_\prec(n^{-1/4}),
        \qquad
        \|\vK_\perp-\vK_{{\rm QE},\perp}\|=O_\prec(n^{-1/4}).
\end{equation}
Moreover,
\begin{equation}\label{eq:large_final_YQE_perp}
        \vY_{\rm QE}\vP_\perp
        =\vY_0\vP_\perp
        +\frac{\theta_{\snr}^2c_\sigma}{2\sqrt N}
        \left(\vg_1^{\odot2}\vv_1^{\odot2\top}
             +\vg_2^{\odot2}\vv_2^{\odot2\top}\right),
\end{equation}
because \(\vP_\perp\vv_k=0\) and, in the XOR basis,
\(\vP_\perp\vv_k^{\odot2}=\vv_k^{\odot2}\).
\end{lemma}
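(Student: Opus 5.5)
The plan is to obtain \eqref{eq:QE_projected} directly from Proposition~\ref{prop:approx}, and then verify the algebraic identity \eqref{eq:large_final_YQE_perp} entrywise in the XOR basis.

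\textbf{Step 1: the feature-level bound.} Under the scaling $r=r_0n^{1/4}$ we have $r=O(n^{1/4})$, so the sharpened form \eqref{eq:QDE_rate} in the proof of Proposition~\ref{prop:approx} gives $\|\vY-\vY_{\rm QE}\|\prec n^{-1/4}$. Since $\vP_\perp$ is an orthogonal projector, $\|\vP_\perp\|\le1$, and therefore
\[
\|\vY\vP_\perp-\vY_{\rm QE}\vP_\perp\|\le\|\vY-\vY_{\rm QE}\|\prec n^{-1/4}.
\]
This is the first half of \eqref{eq:QE_projected}.

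\textbf{Step 2: the kernel-level bound.} Write
\[
\vK_\perp-\vK_{{\rm QE},\perp}=(\vY\vP_\perp)^\top(\vY\vP_\perp)-(\vY_{\rm QE}\vP_\perp)^\top(\vY_{\rm QE}\vP_\perp)
\]
and split it in the usual way as $(\vY\vP_\perp)^\top\Delta+\Delta^\top(\vY_{\rm QE}\vP_\perp)$, where $\Delta:=\vY\vP_\perp-\vY_{\rm QE}\vP_\perp$. The bound then reduces to showing $\|\vY_{\rm QE}\vP_\perp\|\prec1$. This is the only genuinely new point, because the unprojected $\vY_{\rm QE}$ has norm of order $n^{1/4}$ here: the linear term $\vT_1$ has strength $\theta_{\snr}b_\sigma\asymp n^{1/4}$.

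\textbf{Step 3: the projected identity.} Right multiplication by $\vP_\perp$ kills $\vT_1$ exactly, since $\vv_k^\top\vP_\perp=0$. For the quadratic term, note that $\vv_k^{\odot2}=\frac2n\mathbf 1_{S_k}$, where $S_1$ and $S_2$ are the first and second halves of the indices. On its support $\vv_1$ sums to zero with constant magnitude, so $\langle\vv_1^{\odot2},\vv_1\rangle=\frac{2}{n}\sqrt{\frac2n}\,(n/4-n/4)=0$. Also $\langle\vv_1^{\odot2},\vv_2\rangle=0$ by disjoint supports, and symmetrically for $\vv_2^{\odot2}$. Hence $\vP_\perp\vv_k^{\odot2}=\vv_k^{\odot2}$, which gives \eqref{eq:large_final_YQE_perp}.

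\textbf{Step 4: norm control of the projected model.} Bound each term of \eqref{eq:large_final_YQE_perp} separately.
\begin{itemize}
\item For the first term, $\|\vY_0\vP_\perp\|\le\|\vY_0\|\prec1$ by Lemmas~\ref{lem:expectation} and \ref{lem:concentration_operator}.
\item For the quadratic term, $\|\vv_k^{\odot2}\|=\sqrt{2/n}$ and $\|\vg_k^{\odot2}\|\prec\sqrt N$, as in Lemma~\ref{lem:T2_negligible}. Combined with $\theta_{\snr}^2\asymp n^{1/2}$, the term has norm
\[
\frac{\theta_{\snr}^2|c_\sigma|}{2\sqrt N}\,\sqrt N\,\sqrt{2/n}\prec1.
\]
\end{itemize}
It follows that $\|\vY_{\rm QE}\vP_\perp\|\prec1$, and then $\|\vY\vP_\perp\|\prec1$ by Step~1.

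\textbf{Step 5: conclusion.} Feeding these norm bounds into the splitting of Step~2 gives $\|\vK_\perp-\vK_{{\rm QE},\perp}\|\prec n^{-1/4}$. The main obstacle is Step~4: one must notice that projecting first removes the diverging linear spike, so the norm factor in Step~2 is $O_\prec(1)$ rather than $n^{1/4}$.
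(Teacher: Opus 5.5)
Your proposal is correct and follows the paper's proof essentially step for step: $\vP_\perp$ is a contraction, so the bound of Proposition~\ref{prop:approx} transfers; right multiplication by $\vP_\perp$ annihilates the diverging linear term $\vT_1$; the projected quadratic term and $\vY_0\vP_\perp$ are $O_\prec(1)$; and the Gram difference is then bounded by $(\|\vY\vP_\perp\|+\|\vY_{\rm QE}\vP_\perp\|)\,\|\vY\vP_\perp-\vY_{\rm QE}\vP_\perp\|$. The one addition is that you check $\vP_\perp\vv_k^{\odot2}=\vv_k^{\odot2}$ explicitly via $\langle\vv_k^{\odot2},\vv_\ell\rangle=0$, which the paper simply asserts.
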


\begin{proof}
The QE error in \eqref{eq:QE_projected} follows from Proposition~\ref{prop:approx} after multiplication on
the right by the contraction \(\vP_\perp\).  The linear QE term vanishes after the
projection because its right directions are \(\vv_1,\vv_2\).  The projected
quadratic term has bounded operator norm since
\(\|\vg_k^{\odot2}\|=O_\prec(\sqrt N)\),
\(\|\vv_k^{\odot2}\|=O(n^{-1/2})\), and
\(\theta_{\snr}^2=O(n^{1/2})\).  Also
\(\|\vY_0\vP_\perp\|=O_\prec(1)\) by the null CK bounds used in
Appendix~\ref{app:proof_finite_snr}.  Hence
\(\|\vY_{\rm QE}\vP_\perp\|=O_\prec(1)\), and
\[
\begin{aligned}
\|\vK_\perp-\vK_{{\rm QE},\perp}\|
\le
\bigl(\|\vY\vP_\perp\|+\|\vY_{\rm QE}\vP_\perp\|\bigr)
\|\vY\vP_\perp-\vY_{\rm QE}\vP_\perp\| =O_\prec(n^{-1/4}).
\end{aligned}
\]
This proves the lemma.
\end{proof}

\begin{lemma}\label{lem:large_final_rank_form}
Let $\vG:=\vY_0-\frac1{\sqrt N}\mathbf 1_N\vm^\top$ and $\vb_0:=\frac{\vP_\perp\vm}{\|\vP_\perp\vm\|}$
when \(\|\vP_\perp\vm\|>0\).  Define the centered square vectors
\[
        \widetilde\vg_k:=\vg_k^{\odot2}-\mathbf 1_N,
        \qquad
        \vh_u:=\widetilde\vg_1+\widetilde\vg_2,
        \qquad
        \vh_y:=\widetilde\vg_1-\widetilde\vg_2,
\]
and
\[
       \va_0:=\frac{\mathbf 1_N}{\sqrt N}, \qquad \va_u:=\frac{\vh_u}{\|\vh_u\|},
        \qquad
        \va_y:=\frac{\vh_y}{\|\vh_y\|}.
\]
Then, in operator norm,
\begin{equation}\label{eq:large_final_rank_model}
\vY_{\rm QE}\vP_\perp
=
\vG\vP_\perp
+\va_0\bigl(\sqrt\tau\,\vb_0^\top+\alpha\,\vu^\top\bigr)
+\alpha\,\va_u\vu^\top
+\alpha\,\va_y\overline\vy^{\top}
+o_\P(1).
\end{equation}
Furthermore,
\[
        \|\vP_\perp\vm\|^2\xrightarrow{\P}\tau,
        \qquad
        \vb_0^\top\vu=o_\P(1),\qquad
        \vb_0^\top\overline\vy=o_\P(1),
        \qquad
        \vu^\top\overline\vy=0,
\]
and the left directions \(\va_0,\va_u,\va_y\) are asymptotically orthonormal.
\end{lemma}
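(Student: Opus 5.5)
We start from \eqref{eq:large_final_YQE_perp} in Lemma~\ref{lem:large_final_projected_QE} and rewrite each piece. First, $\vY_0=\vG+\frac{1}{\sqrt N}\mathbf 1_N\vm^\top$, so
\[
\vY_0\vP_\perp=\vG\vP_\perp+\va_0(\vP_\perp\vm)^\top=\vG\vP_\perp+\|\vP_\perp\vm\|\,\va_0\vb_0^\top .
\]
Next we compute the entries of $\vv_k^{\odot2}$: $\vv_1^{\odot2}=\frac2n(\mathbf 1_{n/2};\mathbf 0_{n/2})$ and $\vv_2^{\odot2}=\frac2n(\mathbf 0_{n/2};\mathbf 1_{n/2})$. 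Hence
\[
\vv_1^{\odot2}=\frac{1}{\sqrt n}(\vu+\overline\vy),\qquad \vv_2^{\odot2}=\frac1{\sqrt n}(\vu-\overline\vy),
\]
and both are orthogonal to $\vv_1,\vv_2$, which is consistent with $\vP_\perp\vv_k^{\odot2}=\vv_k^{\odot2}$.

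Write $\vg_k^{\odot2}=\mathbf 1_N+\widetilde\vg_k$. The quadratic term then equals
\[
\frac{\theta_{\snr}^2c_\sigma}{2\sqrt{Nn}}\Bigl[2\,\mathbf 1_N\vu^\top+\vh_u\vu^\top+\vh_y\overline\vy^\top\Bigr].
\]
Using $\theta_{\snr}^2/\sqrt n\to\sqrt{\kappa_0}$, the first piece becomes $\frac{\theta_\snr^2 c_\sigma}{\sqrt n}\va_0\vu^\top=\alpha\va_0\vu^\top+o(1)$. For the other two pieces we need $\|\vh_u\|^2/N$ and $\|\vh_y\|^2/N$. The entries of $\widetilde\vg_k$ are i.i.d.\ with mean $0$ and variance $\E(g^2-1)^2=2$, and $\widetilde\vg_1,\widetilde\vg_2$ are independent. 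So by the LLN, $\|\vh_{u}\|^2/N\to4$, $\|\vh_{y}\|^2/N\to4$, and $\vh_u^\top\vh_y/N=(\|\widetilde\vg_1\|^2-\|\widetilde\vg_2\|^2)/N\to0$. It follows that $\frac{\theta_\snr^2c_\sigma}{2\sqrt{Nn}}\|\vh_u\|\to \frac{c_\sigma\sqrt{\kappa_0}}{2}\cdot 2=\alpha$, and the same holds for $\vh_y$. The errors are rank-one with coefficients $o_\P(1)$, so they are $o_\P(1)$ in operator norm. For asymptotic orthonormality of $\va_0,\va_u,\va_y$ we additionally need $\mathbf 1_N^\top\widetilde\vg_k/N=O_\P(N^{-1/2})$, which is the CLT scale and tends to $0$. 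Collecting terms gives \eqref{eq:large_final_rank_model}, once we replace $\|\vP_\perp\vm\|$ by $\sqrt\tau$, which is justified by the next step.

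For the scalar facts we use Lemma~\ref{lem:mean_strength_orth}. We have $\|\vm\|^2\to\tau$ (note $\beta_{\rm mean}=\tau$) and $\vv_k^\top\vm=o_\P(1)$, so $\|\vP_\perp\vm\|^2=\|\vm\|^2-\sum_k(\vv_k^\top\vm)^2\to\tau$. Since $\tau>0$ when $c_\sigma\ne0$, $\vb_0$ is well defined with high probability. Then $\vb_0^\top\vu=\vm^\top\vP_\perp\vu/\|\vP_\perp\vm\|=\vm^\top\vu/\|\vP_\perp\vm\|=o_\P(1)$, because $\vP_\perp\vu=\vu$. Likewise $\vP_\perp\overline\vy=\overline\vy$, and applying that lemma's argument to the deterministic unit vector $\overline\vy$ gives $\vb_0^\top\overline\vy=o_\P(1)$. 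Finally $\vu^\top\overline\vy=0$ because the labels are balanced.

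\textbf{Main obstacle.} The only delicate point is a subtlety of the scaling. Lemma~\ref{lem:mean_strength_orth} is stated for $\vm$ computed from $\vZ$ alone, and $\vm$ depends only on $\vZ$, so large $r$ does not affect it; this is fine. We must also check that the $O_\P(N^{-1/2})$ inner-product errors multiplied by $O(1)$ coefficients stay negligible in operator norm. This is immediate because everything is finite rank. If $c_\sigma=0$ the statement degenerates, and we take the convention that the $\vb_0$ term is simply absent.
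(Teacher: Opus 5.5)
Your proposal is correct and follows essentially the same route as the paper. Both arguments split $\vg_k^{\odot2}=\mathbf 1_N+\widetilde\vg_k$, rewrite $\vv_k^{\odot2}=(\vu\pm\overline\vy)/\sqrt n$, use the limits $\|\vh_u\|^2/N,\|\vh_y\|^2/N\to4$, and import the mean-vector facts from Lemma~\ref{lem:mean_strength_orth}. Your explicit identities $\vh_u^\top\vh_y=\|\widetilde\vg_1\|^2-\|\widetilde\vg_2\|^2$ and $\|\vP_\perp\vm\|^2=\|\vm\|^2-\sum_k(\vv_k^\top\vm)^2$, together with your remark on the degenerate case $c_\sigma=0$, simply spell out steps the paper states more tersely.
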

\begin{proof}
Recall $\vu=\frac{1}{\sqrt n}\mathbf 1_n$. Notice that
\[
        \vv_1^{\odot2}=\frac1{\sqrt n}(\vu+\overline\vy),
        \qquad
        \vv_2^{\odot2}=\frac1{\sqrt n}(\vu-\overline\vy).
\]
Therefore
\begin{equation}\label{eq:large_final_quad_reallocation}
\vg_1^{\odot2}\vv_1^{\odot2\top}+\vg_2^{\odot2}\vv_2^{\odot2\top}
=\frac1{\sqrt n}
\Bigl[(\vh_u+2\mathbf 1_N)\vu^\top+\vh_y\overline\vy^{\top}\Bigr].
\end{equation}
The deterministic term in \eqref{eq:large_final_quad_reallocation} gives
\[
        \frac{\theta_{\snr}^2c_\sigma}{2\sqrt N}\cdot
        \frac{2}{\sqrt n}\mathbf 1_N\vu^\top
        =\frac{\theta_{\snr}^2c_\sigma}{\sqrt n}\,\va_0\vu^\top
        =\alpha\,\va_0\vu^\top+o_\P(1),
\]
because \(\theta_{\snr}^2c_\sigma/\sqrt n\to c_\sigma\sqrt{\kappa_0}=\alpha\) and \(\alpha^2=\beta\) where $\alpha$ and $\beta$ are defined in \eqref{eq:large_final_tau_beta}.
Similarly,
\[
        \frac{\theta_{\snr}^2c_\sigma}{2\sqrt N\sqrt n}\vh_u\vu^\top
        =\alpha\,\va_u\vu^\top+o_\P(1),
        \qquad
        \frac{\theta_{\snr}^2c_\sigma}{2\sqrt N\sqrt n}\vh_y\overline\vy^\top
        =\alpha\,\va_y\overline\vy^\top+o_\P(1),
\]
since
\[
        N^{-1}\|\vh_u\|^2=4+o_\P(1),\qquad
        N^{-1}\|\vh_y\|^2=4+o_\P(1),\qquad
        N^{-1}\vh_u^\top\vh_y=o_\P(1).
\]
The null mean term is
\[
        \frac1{\sqrt N}\mathbf 1_N(\vP_\perp\vm)^\top
        =\va_0\|\vP_\perp\vm\|\vb_0^\top
        =\sqrt\tau\,\va_0\vb_0^\top+o_\P(1),
\]
by Lemma~\ref{lem:mean_strength_orth}; the same argument as in that lemma, with
the deterministic vector \(\overline\vy\), gives
\(\overline\vy^\top\vm=o_\P(1)\).  Since
\(\vm\) is also asymptotically orthogonal to \(\vv_1,\vv_2,\vu\), projection by
\(\vP_\perp\) does not change its norm at first order and
\(\vb_0^\top\vu=\vb_0^\top\overline\vy=o_\P(1)\).  Finally
\(\va_0^\top\va_u=\va_0^\top\va_y=o_\P(1)\) because \(\vh_u,\vh_y\) are centered,
and \(\va_u^\top\va_y=o_\P(1)\) by the covariance estimate.
\end{proof}

\subsection{The Covariance Spike and the Projected Resolvents}

Recall $\vP_\perp$ defined in \eqref{eq:large_final_notation} and $\vG$ defined in Lemma~\eqref{lem:large_final_rank_form}. Set
\begin{equation}\label{eq:large_final_K0_def}
        \vK_0:=(\vG\vP_\perp)^\top(\vG\vP_\perp).
\end{equation}
The population covariance of the centered rows of \(\vG\vP_\perp\) has the same
rank-one spike \(\tau\vu\vu^\top\) as in Appendix~\ref{app:proof_finite_snr},
because \(\vu\perp\vv_1,\vv_2\).  Thus, conditional on \(\vZ\),
\[
        \vP_\perp\vSigma\vP_\perp
        =\vP_\perp\vSigma_0\vP_\perp+\tau\vu\vu^\top+o_\P(1).
\]
The projection \(\vP_\perp\) does not change the limiting ESD of
bulk covariance, hence we have the following lemma.

\begin{lemma}[BBP for $\vK_0$]\label{lem:large_final_K0_cov_spike}
The ESD of \(\vK_0\) in \eqref{eq:large_final_K0_def} converges weakly in probability to \(\mu\).  If
\(\tau>\tau_{\rm crit}\), let \(\Lambda_\tau\) be the population outlier from
\eqref{eq:Lambda_tau}.  If in addition \(z'(-1/\Lambda_\tau)>0\), then \(\vK_0\)
has one separated outlier
\[
        \widehat\lambda_\tau^{(0)}
        =z\!\left(-\frac1{\Lambda_\tau}\right)+o_\P(1).
\]
If \(\widehat\vu_\tau\) is its unit right eigenvector, then $|\widehat\vu_\tau^\top\vu|^2
        \xrightarrow{\P}
        \varphi\!\left(-\frac1{\Lambda_\tau}\right).$
For every deterministic unit vector \(\vb\) with \(\vb^\top\vu=o(1)\), we have
\(|\widehat\vu_\tau^\top\vb|=o_\P(1)\).  In particular the covariance outlier is
asymptotically orthogonal to \(\vb_0\) and \(\overline\vy\), but it is generally
not orthogonal to the additive quadratic right direction \(\vu\).
\end{lemma}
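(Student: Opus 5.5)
The plan is to reduce Lemma~\ref{lem:large_final_K0_cov_spike} to the finite-SNR null analysis, Lemma~\ref{lem:K0_bulk_and_tau_outlier}. Conditional on $\vZ$, the rows of $\vG\vP_\perp$ are i.i.d.\ centered vectors with covariance $\vP_\perp\vSigma\vP_\perp$. So $\vK_0$ is again a general sample covariance matrix with aspect ratio $\phi$, and Theorem~12 of \citet{wang2024nonlinearspikedcovariancematrices} applies once its Assumptions~5 and~6 are checked for the projected population covariance.

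\textbf{Population covariance.} By Corollary~\ref{cor:Sigma_decomp}, $\vSigma=\vSigma_0+\tau\vu\vu^\top+\vR$ with $\|\vR\|=o_\P(1)$. Since $\vP_\perp\vu=\vu$, this gives
\[
\vP_\perp\vSigma\vP_\perp=\vP_\perp\vSigma_0\vP_\perp+\tau\vu\vu^\top+o_\P(1).
\]
The matrix $\vP_\perp\vSigma_0\vP_\perp$ differs from $\vSigma_0$ by a perturbation of rank at most four. Hence its ESD still converges to $\nu$, and by Cauchy interlacing its eigenvalues above $\lambda_+^\nu$ are at most finitely many.

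We must also rule out spurious spikes of $\vP_\perp\vSigma_0\vP_\perp$ generated by the projection. By interlacing, $\lambda_{\max}(\vP_\perp\vSigma_0\vP_\perp)\le\lambda_{\max}(\vSigma_0)=\lambda_+^\nu+o_\P(1)$, so there are none.

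The $\tau$-spike is then located by the secular equation of Lemma~\ref{lem:Sigma_pop_equation}, with $\vA$ replaced by $\vP_\perp\vA\vP_\perp$ restricted to $\mathrm{Range}(\vP_\perp)$. For this we need
\[
\vu^\top(\vP_\perp\vA\vP_\perp-\lambda)^{-1}\vu=m_\nu(\lambda)+o_\P(1).
\]
I would get this from Schur complement or Woodbury with respect to the deterministic span of $\vv_1,\vv_2$. The isotropic local law for $\vS$ gives $\vv_k^\top(\vA-\lambda)^{-1}\vu=o_\P(1)$, since $\vv_k\perp\vu$, and $\vv_k^\top(\vA-\lambda)^{-1}\vv_l=\delta_{kl}m_\nu(\lambda)+o_\P(1)$. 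Therefore the correction terms vanish and the population spike is exactly $\Lambda_\tau$ from \eqref{eq:Lambda_tau}, present iff $\tau>\tau_{\rm crit}$. This verifies Assumption~6.

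\textbf{Concentration (Assumption 5).} The concentration of quadratic forms for rows of $\vG\vP_\perp$ follows from the unprojected bound $\bar\vg^\top\vA\bar\vg-\Tr\vA\vSigma\prec\|\vA\|_F$. One applies it with $\vP_\perp\vA\vP_\perp$ in place of $\vA$, noting $\|\vP_\perp\vA\vP_\perp\|_F\le\|\vA\|_F$. Theorem~12 then yields the bulk law $\mu$, the outlier location $z(-1/\Lambda_\tau)$ under $z'(-1/\Lambda_\tau)>0$, and the alignment rule
\[
|\widehat\vu_\tau^\top\vb|^2-\varphi(-1/\Lambda_\tau)|\vu^\top\vb|^2\to0 .
\]
Taking $\vb=\vu$ gives the stated limit. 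Any $\vb$ with $\vb^\top\vu=o(1)$ gives $|\widehat\vu_\tau^\top\vb|=o_\P(1)$.

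\textbf{Orthogonality to $\vb_0$ and $\overline\vy$.} For $\overline\vy$ this is immediate from $\overline\vy^\top\vu=0$. The vector $\vb_0$ is $\vZ$-dependent, but it is allowed as a test vector in the alignment rule, as in \eqref{eq:alignment_right}. Combined with $\vb_0^\top\vu=o_\P(1)$ from Lemma~\ref{lem:large_final_rank_form}, this gives $|\widehat\vu_\tau^\top\vb_0|=o_\P(1)$.

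The main obstacle is the Assumption~5 check for the projected matrix, together with the uniform resolvent estimate in $\lambda$ needed for the secular equation. I would handle both by noting that $\vP_\perp$ is deterministic and of finite co-rank. Every local-law statement for $\vS$ and $\vSigma$ then transfers through a Woodbury identity involving a $2\times2$ matrix of quadratic forms in $\vv_1,\vv_2$, which the isotropic local law controls uniformly on compact sets away from $\supp{\nu}$.
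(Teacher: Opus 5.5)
Your route is the paper's. You condition on $\vZ$, treat $\vK_0$ as a sample covariance matrix with population covariance $\vP_\perp\vSigma\vP_\perp=\vP_\perp\vSigma_0\vP_\perp+\tau\vu\vu^\top+o_\P(1)$, and invoke Lemma~\ref{lem:K0_bulk_and_tau_outlier}, i.e.\ Theorem~12 of \citet{wang2024nonlinearspikedcovariancematrices}. Three of your steps supply what the paper compresses into ``the assumptions are unchanged by removing two deterministic directions'': the Schur-complement check of the secular equation, the interlacing bound ruling out projection-induced spikes, and the projected quadratic-form bound. They correctly give the bulk $\mu$ and the location $z(-1/\Lambda_\tau)$. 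It is cleanest to apply the theorem on $\mathrm{Range}(\vP_\perp)$, since on $\R^n$ the projected covariance also has two eigenvalues at $0$ outside $\supp{\nu}$.

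\textbf{The step that fails.} The alignment rule $|\widehat\vu_\tau^\top\vb|^2-\varphi(-1/\Lambda_\tau)|\vu^\top\vb|^2\to0$ does not hold as stated. A multiplicative-spike theorem controls overlaps through the population \emph{eigenvector} $\vw$ at $\Lambda_\tau$, i.e.\ $|\widehat\vu_\tau^\top\vb|^2-\varphi(-1/\Lambda_\tau)|\vw^\top\vb|^2\to0$, and $\vu$ is not that eigenvector.
\begin{itemize}
\item Set $\vA_\perp:=\vP_\perp\vSigma_0\vP_\perp$ on $\mathrm{Range}(\vP_\perp)$. The eigen-equation gives $\vw\propto(\Lambda_\tau-\vA_\perp)^{-1}\vu$ up to $o_\P(1)$.
\item Your own estimate $\vu^\top(\vA_\perp-\lambda)^{-1}\vu=m_\nu(\lambda)+o_\P(1)$, differentiated in $\lambda$, then yields $|\vw^\top\vu|^2\to m_\nu(\Lambda_\tau)^2/m_\nu'(\Lambda_\tau)$.
\item By Cauchy--Schwarz this is strictly below $1$ unless $\nu$ is a point mass, i.e.\ unless $b_\sigma=0$.
\end{itemize}
So the argument actually gives $|\widehat\vu_\tau^\top\vu|^2\to\varphi(-1/\Lambda_\tau)\,m_\nu(\Lambda_\tau)^2/m_\nu'(\Lambda_\tau)=-\varphi(s_{\rm un})/(\tau^2s_{\rm un}^2T'(s_{\rm un}))$. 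The same value comes out of the residue of $\vu^\top\vQ_R(\lambda)\vu=m_\mu(\lambda)/(1-\tau T(s(\lambda)))$ in Lemma~\ref{lem:large_final_resolvents_with_pole}. It agrees with the stated $\varphi(-1/\Lambda_\tau)$ only when $b_\sigma=0$. The paper's proof makes the same identification through \eqref{eq:alignment_right}, so this is a defect in the claimed limit, not something specific to your route.

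\textbf{What survives.} The orthogonality conclusions hold under the corrected rule. For deterministic $\vb$ with $\vb^\top\vu=o(1)$, the isotropic law gives $\vb^\top(\Lambda_\tau-\vA_\perp)^{-1}\vu=o_\P(1)$, hence $\vb^\top\vw=o_\P(1)$. For the $\vZ$-dependent $\vb_0$, however, $\vb_0^\top\vu=o_\P(1)$ is no longer sufficient. You need the cross-term statement $\vb_0^\top(\Lambda_\tau-\vA_\perp)^{-1}\vu=o_\P(1)$, which should follow from the parity and fluctuation-averaging argument of Lemma~\ref{lem:dependent_vector_resolvent}.
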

\begin{proof}
This is Lemma~\ref{lem:K0_bulk_and_tau_outlier} applied to the projected matrix.
The assumptions used there are unchanged by removing the two deterministic
directions \(\vv_1,\vv_2\), since \(\vu,\vb_0,\overline\vy\) are orthogonal to
\(\vv_1,\vv_2\) up to \(o_\P(1)\), and the removed subspace has fixed dimension.
The last statement follows from the general alignment formula
\eqref{eq:alignment_right} in Lemma~\ref{lem:K0_bulk_and_tau_outlier}.
\end{proof}

\begin{lemma}[Resolvent limits retaining the covariance pole]
\label{lem:large_final_resolvents_with_pole}
Let \(\cD\subset\mathbb C\setminus(\supp\mu\cup\{0\})\) be a compact spectral
domain.  For \(\lambda\in \cD\), set
\[
        \vQ_R(\lambda):=(\vK_0-\lambda\vI_n)^{-1},
        \qquad
        \vQ_L(\lambda):=((\vG\vP_\perp)(\vG\vP_\perp)^\top-
        \lambda\vI_N)^{-1},
        \qquad
        s=s(\lambda):=\tilde m_\mu(\lambda).
\]
Let $\vA:=[\va_0,\va_u,\va_y]$ and $\vB:=[\vb_0,\vu,\overline\vy]$. 
Uniformly on $\cD$ such that \(1-\tau T(s(\lambda))\neq 0\), we have
\begin{equation}\label{eq:large_final_A_resolvent}
        \vA^\top\vQ_L(\lambda)\vA=s(\lambda)\vI_3+o_\P(1),
\end{equation}
and
\begin{equation}\label{eq:large_final_B_resolvent}
        \vB^\top\vQ_R(\lambda)\vB
        =m_\mu(\lambda)
        \begin{pmatrix}
        1&0&0\\
        0&(1-\tau T(s(\lambda)))^{-1}&0\\
        0&0&1
        \end{pmatrix}
        +o_\P(1).
\end{equation}
\end{lemma}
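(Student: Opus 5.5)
The plan is to treat the left and right quadratic forms separately, reusing the machinery of Lemmas~\ref{lem:left_resolvent} and \ref{lem:right_resolvent} with the projected matrix $\vG\vP_\perp$ in place of $\vG$. The difference from the finite-SNR case is that $\vu$ is now one of the test directions, so the $\tau$-spike in the population covariance is kept rather than cancelled.

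\textbf{Left block \eqref{eq:large_final_A_resolvent}.} For $\va_0=\mathbf 1_N/\sqrt N$, I would apply the anisotropic local law of \citet{fan2026anisotropic} to $(\vG\vP_\perp)(\vG\vP_\perp)^\top$, exactly as in Lemma~\ref{lem:left_resolvent}. Removing two deterministic right directions changes $\vG\vG^\top$ only by a rank-two perturbation, and the isotropic limit $s(\lambda)$ is unaffected. For $\va_u,\va_y$, I would again replace $\vG$ by the copy $\widetilde\vG$ of Lemma~\ref{lem:G_indep_g}, which is independent of $(\vg_1,\vg_2)$, at resolvent cost $O_\prec(N^{-1/2})$. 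The vectors $\vh_u,\vh_y$ have i.i.d.\ centered entries with variance $4$, they are uncorrelated with each other, and they are independent of $\widetilde\vQ_L$. Hanson--Wright then gives
\begin{align*}
\va_k^\top\widetilde\vQ_L\va_\ell=\delta_{k\ell}N^{-1}\Tr\widetilde\vQ_L+o_\P(1),
\end{align*}
and $N^{-1}\Tr\widetilde\vQ_L\to s(\lambda)$. The cross terms with $\va_0$ vanish because $\vh_u,\vh_y$ are centered and independent of the resolvent.

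\textbf{Right block \eqref{eq:large_final_B_resolvent}.} I would use the anisotropic deterministic equivalent \eqref{eq:aniso_DE} from \citet{wang2024nonlinearspikedcovariancematrices}, applied to $\vK_0$ with population covariance $\vP_\perp\vSigma\vP_\perp=\vP_\perp\vSigma_0\vP_\perp+\tau\vu\vu^\top+o_\P(1)$. Write $\vA_\lambda:=-\lambda s\vP_\perp\vSigma_0\vP_\perp-\lambda\vI$ and expand $(\vA_\lambda-\lambda s\tau\vu\vu^\top)^{-1}$ by Sherman--Morrison.
\begin{itemize}
\item For $\vb_0$ and $\overline\vy$, the correction vanishes asymptotically: $\overline\vy\perp\vu$ exactly and $\vb_0^\top\vu=o_\P(1)$. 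The Wishart computation of Lemma~\ref{lem:right_resolvent} then gives $m_\mu(\lambda)$ for both, using Lemma~\ref{lem:dependent_vector_resolvent} for the $\vZ$-dependent $\vb_0$.
\item For $\vu$, write $x:=\vu^\top\vA_\lambda^{-1}\vu=m_\mu(\lambda)+o_\P(1)$. Sherman--Morrison gives $x/(1-\lambda s\tau x)$, and since $\lambda s m_\mu(\lambda)=T(s)$ by Lemma~\ref{lem:T_relation}, this equals $m_\mu(\lambda)(1-\tau T(s))^{-1}$. This is valid wherever $1-\tau T(s)\neq0$.
\item The off-diagonal entries vanish: $\vA_\lambda^{-1}$ is orthogonally invariant, so its isotropic limit is diagonal, and every correction term carries a factor $\vu^\top\vb_0$ or $\vu^\top\overline\vy$, both $o_\P(1)$.
\end{itemize}

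The main obstacle is uniformity on $\cD$ near the covariance pole. When $\tau>\tau_{\rm crit}$ and the outlier $\widehat\lambda_\tau^{(0)}$ separates, $\vQ_R$ has a genuine random pole, and $(1-\tau T(s))^{-1}$ blows up exactly at $s_{\rm un}$. I would handle this by proving the estimate on compact sets avoiding a neighbourhood of $z(s_{\rm un})$, where \eqref{eq:aniso_DE} holds uniformly. The pole itself is justified through Lemma~\ref{lem:large_final_K0_cov_spike}: the residue $|\widehat\vu_\tau^\top\vu|^2$ is consistent with the residue of $m_\mu(\lambda)/(1-\tau T(s(\lambda)))$, while the other test vectors carry $o_\P(1)$ residue because $\widehat\vu_\tau^\top\vb_0$ and $\widehat\vu_\tau^\top\overline\vy$ are both $o_\P(1)$.
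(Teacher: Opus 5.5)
Your proposal is correct and follows essentially the paper's proof. The left block uses the anisotropic law for $\va_0$, plus the independent copy $\widetilde\vG$ and Hanson--Wright for $\va_u,\va_y$; the right block uses the deterministic equivalent \eqref{eq:aniso_DE}, Sherman--Morrison on the $\tau\vu\vu^\top$ spike, and $T(s)=\lambda s\,m_\mu(\lambda)$.

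Drop your closing residue remark. It is not needed, since the statement excludes the pole, and it is not right as written: the residue of $m_\mu(\lambda)/(1-\tau T(s(\lambda)))$ at $z(s_{\rm un})$ corresponds to $|\widehat\vu_\tau^\top\vu|^2\approx-\varphi(s_{\rm un})/(\tau^2 s_{\rm un}^2T'(s_{\rm un}))$, which equals the value $\varphi(-1/\Lambda_\tau)$ of Lemma~\ref{lem:large_final_K0_cov_spike} only when $b_\sigma=0$.
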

\begin{proof}
The left-resolvent estimate in \eqref{eq:large_final_A_resolvent} is the same 
argument as Lemma~\ref{lem:left_resolvent}. The possible covariance outlier does
not contribute to the quadratic forms because its left singular vector
is asymptotically orthogonal to \(\va_0,\va_u,\va_y\), as shown in
Lemma~\ref{lem:verify-E34-K0}.  The random vectors \(\va_u,\va_y\) are handled by
conditioning on the independent copy of the centered bulk and applying the
Hanson--Wright concentration inequality \citep{gotze2021concentration}.

For the right resolvent in \eqref{eq:large_final_B_resolvent}, the deterministic equivalent in Lemma~\ref{lem:right_resolvent} for a general sample
covariance matrix gives
\[
        \vb^\top\vQ_R(\lambda)\vb'
        =\vb^\top\bigl(-\lambda s\vSigma-\lambda\vI_n\bigr)^{-1}\vb'
        +o_\P(1).
\]
Using
\(\vSigma=\vSigma_0+\tau\vu\vu^\top+o_\P(1)\) and the Sherman--Morrison formula,
with
\(\vA_0(\lambda):=-\lambda(\vI_n+s\vSigma_0)\), we get
\[
\bigl(\vA_0-\lambda s\tau\vu\vu^\top\bigr)^{-1}
=\vA_0^{-1}
+\frac{\lambda s\tau}{1-\lambda s\tau\,\vu^\top\vA_0^{-1}\vu}
\vA_0^{-1}\vu\vu^\top\vA_0^{-1}.
\]
The isotropic law for \(\vSigma_0\) yields $ \vu^\top\vA_0^{-1}\vu=m_\mu(\lambda)+o_\P(1).$ Recalling \(T(s)=\lambda s m_\mu(\lambda)\), we have
\[
        \vu^\top\vQ_R(\lambda)\vu
        =\frac{m_\mu(\lambda)}{1-\tau T(s(\lambda))}+o_\P(1).
\]
For \(\vb\in\{\vb_0,\overline\vy\}\), \(\vb^\top\vu=o_\P(1)\). Thus the
rank-one Sherman--Morrison term is negligible and
\(\vb^\top\vQ_R(\lambda)\vb=m_\mu(\lambda)+o_\P(1)\); and all cross terms are
\(o_\P(1)\) by the same argument.
\end{proof}

\subsection{Master Determinant Equation for Coupled Outliers}

Define the deterministic \(3\times3\) coefficient matrix
\begin{equation}\label{eq:large_final_C_matrix}
        \vC:=
        \begin{pmatrix}
        \sqrt\tau & \alpha & 0\\
        0          & \alpha & 0\\
        0          & 0      & \alpha
        \end{pmatrix},
\end{equation}
so that the finite-rank perturbation in \eqref{eq:large_final_rank_model} is
\[
        \vP:=\vA\vC\vB^\top
        =\va_0(\sqrt\tau\,\vb_0^\top+\alpha\,\vu^\top)
         +\alpha\,\va_u\vu^\top
         +\alpha\,\va_y\overline\vy^\top
\]where $\vA$ and $\vB$ are defined in Lemma~\ref{lem:large_final_resolvents_with_pole}. Let
\[
        \vY^\sharp:=\vG\vP_\perp+\vP,
        \qquad
        \vK^\sharp:=\vY^{\sharp\top}\vY^\sharp .
\]
By Lemmas~\ref{lem:large_final_projected_QE} and~\ref{lem:large_final_rank_form}, we know that
\begin{equation}\label{eq:large_final_Kperp_Ksharp_close}
        \|\vK_\perp-\vK^\sharp\|=o_\P(1).
\end{equation}

\begin{lemma}[Coupled outlier determinant]\label{lem:large_final_coupled_det}
Let \(\lambda\in\mathbb C\setminus(\supp\mu\cup\{0\})\) and
\(s=s(\lambda)=\tilde m_\mu(\lambda)\).  Away from the possible covariance
pole of \(\vK_0\), the relative determinant for additive outliers of
\(\vK^\sharp\) satisfies
\begin{align}
D_{\rm rel}(\lambda)
&:=\det\Bigl(\vI_3-
\lambda\,\vC^\top(\vA^\top\vQ_L(\lambda)\vA)\vC
       (\vB^\top\vQ_R(\lambda)\vB)\Bigr) \label{eq:large_final_relative_det}\\
&=\bigl(1-\beta T(s)\bigr)
  \frac{1-2(\tau+\beta)T(s)+\tau(\tau+\beta)T(s)^2}
       {1-\tau T(s)}
  +o_\P(1).\label{eq:large_final_relative_det_expanded}
\end{align}
Consequently, after multiplying by the covariance-spike factor of \(\vK_0\), the
full outlier master equation for \(\vK^\sharp\) outside \(\supp\mu\) is
\begin{equation}\label{eq:large_final_full_equation}
        \bigl(1-\beta T(s)\bigr)
        \bigl[1-2(\tau+\beta)T(s)+\tau(\tau+\beta)T(s)^2\bigr]=0 .
\end{equation}
\end{lemma}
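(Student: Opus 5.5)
The plan is to reuse the linearization behind Lemma~\ref{lem:finite_det_eq}, now with a non-diagonal coefficient matrix, and then insert the resolvent limits of Lemma~\ref{lem:large_final_resolvents_with_pole}.

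\textbf{Step 1 (exact determinant identity).} Write $\vP=\vA\vC\vB^\top$ and the Hermitization
\[
\vH^\sharp=\vH_0+\vU\widetilde\vC\vU^\top,\qquad \vU=\begin{pmatrix}\vA&0\\0&\vB\end{pmatrix},\qquad \widetilde\vC=\begin{pmatrix}0&\vC\\\vC^\top&0\end{pmatrix},
\]
where $\vH_0$ is built from $\vG\vP_\perp$. For $\rho^2=\lambda\notin\spec(\vK_0)$, the identity $\det(\vH^\sharp-\rho\vI)=\det(\vH_0-\rho\vI)\det(\vI_6+\widetilde\vC\,\vU^\top(\vH_0-\rho\vI)^{-1}\vU)$ still holds. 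Use the block form of $(\vH_0-\rho\vI)^{-1}$ and the intertwining $\vQ_L\vG\vP_\perp=\vG\vP_\perp\vQ_R$. The Schur-complement reduction of the proof of Lemma~\ref{lem:finite_det_eq} then goes through verbatim with $\vTheta$ replaced by $\vC$ (on the left) and $\vC^\top$ (on the right). This gives that $\lambda$ is an additive outlier iff $D_{\rm rel}(\lambda)=0$, with $D_{\rm rel}$ as in \eqref{eq:large_final_relative_det}; only the ordering of $\vC$ and $\vC^\top$ needs to be tracked.

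\textbf{Step 2 (insert the limits).} Away from the covariance pole, Lemma~\ref{lem:large_final_resolvents_with_pole} gives $\vA^\top\vQ_L\vA=s\vI_3+o_\P(1)$ and $\vB^\top\vQ_R\vB=m_\mu\vD+o_\P(1)$, where $\vD=\diag(1,(1-\tau T)^{-1},1)$. Since Lemma~\ref{lem:T_relation} gives $\lambda s m_\mu=T(s)$, we obtain
\[
D_{\rm rel}=\det(\vI_3-T\,\vC^\top\vC\,\vD)+o_\P(1),\qquad \vC^\top\vC=\begin{pmatrix}\tau&\alpha\sqrt\tau&0\\\alpha\sqrt\tau&2\beta&0\\0&0&\beta\end{pmatrix},
\]
using $\alpha^2=\beta$. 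The third coordinate splits off and gives the factor $1-\beta T$. The remaining $2\times2$ determinant is
\[
(1-\tau T)\Bigl(1-\frac{2\beta T}{1-\tau T}\Bigr)-\frac{\tau\beta T^2}{1-\tau T}=\frac{1-2(\tau+\beta)T+\tau(\tau+\beta)T^2}{1-\tau T},
\]
which is \eqref{eq:large_final_relative_det_expanded}. The convergence is uniform on compacts, because all entries are bounded there.

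\textbf{Step 3 (full equation).} The full characteristic function of $\vK^\sharp$ outside $\supp\mu$ is $\det(\vK_0-\lambda)\,D_{\rm rel}(\lambda)$. By Lemma~\ref{lem:large_final_K0_cov_spike}, together with the secular equation of Lemmas~\ref{lem:Sigma_pop_equation} and \ref{lem:uninf_branch_identity}, the outlier part of $\det(\vK_0-\lambda)$ vanishes exactly where $1-\tau T(s)=0$. Multiplying $D_{\rm rel}$ by this factor cancels the denominator and gives \eqref{eq:large_final_full_equation}.

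\textbf{Main obstacle.} The delicate point is making Step 3 rigorous near the covariance pole, where $\vQ_R$ blows up and Lemma~\ref{lem:large_final_resolvents_with_pole} does not apply. I would first try the following. Decompose $\vQ_R=\widehat\vu_\tau\widehat\vu_\tau^\top/(\widehat\lambda^{(0)}_\tau-\lambda)+\vQ_R^\perp$, as in Lemma~\ref{lem:tau_outlier_stability_collision}. Note that here $\vu^\top\widehat\vu_\tau\not\to0$, so the pole genuinely enters the $(2,2)$ entry; this is the $(1-\tau T)^{-1}$ term. Then clear it by multiplying through by $(\widehat\lambda^{(0)}_\tau-\lambda)$. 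This yields an analytic random function on a neighborhood whose limit is $(1-\tau T)D_{\rm rel}$ up to a nonvanishing analytic factor. Finally, use Rouché's theorem on small circles around each candidate root to count zeros with multiplicity. As in Lemma~\ref{lem:mean_collision_cluster}, this should show that no spurious root appears or disappears at the pole itself.
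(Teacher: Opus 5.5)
Your proposal is correct and follows the paper's proof essentially step for step. You use the same Hermitization with $\vC$ in place of $\vTheta$, the same substitution of Lemma~\ref{lem:large_final_resolvents_with_pole} with $\lambda s m_\mu=T(s)$, the same $\vC^\top\vC$ computation, and the same cancellation of the $1-\tau T$ denominator against the covariance zero of $\det(\vK_0-\lambda)$. The paper only asserts that cancellation through an argument-principle remark; your residue-clearing plus Rouch\'e sketch makes it explicit (and Rouch\'e only needs the approximation on the contour, where the resolvent limits already hold, since $(\widehat\lambda_\tau^{(0)}-\lambda)D_{\rm rel}(\lambda)$ is automatically analytic inside).

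One wording slip: the reduced determinant has $\vC^\top$ to the left and $\vC$ to the right of $\vA^\top\vQ_L\vA$, as you correctly use in Step~2. Your parenthetical places them the other way round; taken literally, that would produce $\vC\vC^\top$ and the wrong coefficient $\tau(\tau+2\beta)$ instead of $\tau(\tau+\beta)$.
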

\begin{proof}
Similarly as Lemma~\ref{lem:finite_det_eq}, the finite-rank additive determinant obtained from the linearization
\(\left(\begin{smallmatrix}0&\vY^\sharp\\ \vY^{\sharp\top}&0\end{smallmatrix}\right)\)
gives \eqref{eq:large_final_relative_det}.
Substituting \eqref{eq:large_final_A_resolvent}--\eqref{eq:large_final_B_resolvent}
and \(T(s)=\lambda s m_\mu(\lambda)\) gives
\[
        D_{\rm rel}(\lambda)
        =\det\left(\vI_3-T(s)\,\vC^\top\vC
        \begin{pmatrix}
        1&0&0\\
        0&(1-\tau T(s))^{-1}&0\\
        0&0&1
        \end{pmatrix}\right)+o_\P(1).
\]
From \eqref{eq:large_final_C_matrix}, we have
\[
        \vC^\top\vC=
        \begin{pmatrix}
        \tau&\alpha\sqrt\tau&0\\
        \alpha\sqrt\tau&2\beta&0\\
        0&0&\beta
        \end{pmatrix}.
\]
Hence we get \eqref{eq:large_final_relative_det_expanded}.

The denominator \(1-\tau T(s)\) is not an extra root of \(\vK^\sharp\); it is the
pole of the resolvent of the null model \(\vK_0\) at the covariance outlier.
In the characteristic determinant of \(\vK^\sharp\), the zero of
\(\det(\vK_0-\lambda\vI)\) at this covariance location is multiplied by the pole
of \(D_{\rm rel}\).  The product is governed by the numerator in
\eqref{eq:large_final_relative_det_expanded}.  Equivalently, an argument-principle count
on a contour around any separated cluster counts zeros of
\eqref{eq:large_final_full_equation}, with algebraic multiplicity.  
\end{proof}

The outlier master equation \eqref{eq:large_final_full_equation} captures the asymptotic locations of the eigenvalues of \(\vK^\sharp\) outside the bulk spectrum. Recall the functions \(z,\varphi,T\) from \eqref{eq:z(s)}--\eqref{eq:T_def}.
For \(\beta>0\), define the label factor and the non-label coupled factor, respectively, as
\begin{equation}\label{eq:large_final_Fy}
        F_y(s):=1-\beta T(s),
        \qquad F_{\rm nl}(s)
        :=1-2(\tau+\beta)T(s)+\tau(\tau+\beta)T(s)^2.
\end{equation}
Then \eqref{eq:large_final_full_equation} is equivalent to \(F_y(s)F_{\rm nl}(s)=0\).  The label factor \(F_y=0\) has a root at \(s_y:=T^{-1}(1/\beta)\), and the non-label factor \(F_{\rm nl}=0\) has two possible roots at \(s_\pm\):
when \(c_\sigma\neq0\), equivalently \(\tau>0\) and \(\beta>0\), 
\begin{equation}\label{eq:large_final_tpm}
        t_\pm
        :=\frac{1\pm\sqrt{\beta/(\tau+\beta)}}{\tau},
        \qquad
        s_\pm:=T^{-1}(t_\pm).
\end{equation}
The factor \(F_{\rm nl}\) is the joint contribution of three effects: the
population-covariance spike of the centered null matrix, the mean spike from $\vm$, and the large-SNR quadratic \(\vu\)-channel.  The original covariance
outlier studied in Lemma~\ref{lem:K0_bulk_and_tau_outlier} for finite SNR case withlocation \(s_\tau=T^{-1}(1/\tau)=-1/\Lambda_\tau\) is shifted into the two roots
\(s_\pm\) by $\beta_{\rm quad}$ if they exist.

\begin{lemma}[Cluster count and no-label property]\label{lem:large_final_cluster_count}
Let \(\lambda_\star\in\mathbb R\setminus(\supp\mu\cup\{0\})\) be a limiting
location generated by the roots of \eqref{eq:large_final_full_equation}, and let
\(m_\star\) be its multiplicity as in \eqref{eq:large_final_multiplicity}.  Then
\(\vK^\sharp\) has exactly \(m_\star\) eigenvalues in a sufficiently small
interval around \(\lambda_\star\).  If the interval contains no label root \(s_y\), then
the associated spectral projector \(\widehat\vP_\star^\sharp\) satisfies $\overline\vy^\top\widehat\vP_\star^\sharp\overline\vy=o_\P(1).$
If the interval contains the label root \(s_y\), then
\[
        \overline\vy^\top\widehat\vP_\star^\sharp\overline\vy
        \xrightarrow{\P}
        \frac{m_\mu(z(s_y))z'(s_y)}{\beta T'(s_y)} .
\]
\end{lemma}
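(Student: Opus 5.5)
The plan has three steps: count eigenvalues with the argument principle on a contour around $\lambda_\star$, obtain the label overlap from the contour formula for the spectral projector, and dispose of the covariance pole by a partial-fraction decomposition on the contour.

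\textbf{Step 1 (cluster count).} I would work with the linearization $\vH^\sharp=\vH_0+\vU\vC'\vU^\top$ exactly as in Lemma~\ref{lem:finite_det_eq}. Here $\vU$ is built from $\vA=[\va_0,\va_u,\va_y]$ and $\vB=[\vb_0,\vu,\overline\vy]$, and the coupling matrix $\vC$ of \eqref{eq:large_final_C_matrix} replaces the diagonal $\vTheta$. Fix a small circle $\Gamma_\star$ around $\lambda_\star$ that avoids $\supp\mu\cup\{0\}$ and every other candidate location. The characteristic function of $\vK^\sharp$ factors as $\det(\vK_0-\lambda)\,D_{\rm rel}(\lambda)$. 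Inside $\Gamma_\star$, $\det(\vK_0-\lambda)$ has zeros only at a covariance outlier $\widehat\lambda_\tau^{(0)}$, which can occur only if $s_\tau=T^{-1}(1/\tau)$ lies there; there $D_{\rm rel}$ has a compensating pole through the factor $(1-\tau T)^{-1}$.

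To make the count rigorous, I would use the split $\vQ_R=\widehat\vu_\tau\widehat\vu_\tau^\top/(\widehat\lambda^{(0)}_\tau-\lambda)+\vQ_R^\perp$ from Lemma~\ref{lem:tau_outlier_stability_collision}. One caveat: $\vb^\top\widehat\vu_\tau$ is not small for $\vb=\vu$ (Lemma~\ref{lem:large_final_K0_cov_spike}). So I would not discard the pole. Instead I would multiply through, and the product $\det(\vK_0-\lambda)D_{\rm rel}(\lambda)$ then equals, uniformly on and inside $\Gamma_\star$, an analytic nonvanishing factor times $F_y(s(\lambda))F_{\rm nl}(s(\lambda))+o_\P(1)$; this follows from Lemma~\ref{lem:large_final_coupled_det}. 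Rouch\'e's theorem then shows that $\vK^\sharp$ has exactly as many eigenvalues inside $\Gamma_\star$ as the limit function has zeros in $\lambda$. Zeros in $s$ with $z'>0$ map bijectively to zeros in $\lambda$ by Lemma~\ref{lem:z_parametrizes_complement}, with multiplicity preserved. This gives the count $m_\star$.

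\textbf{Step 2 (label overlap).} The projector is $\widehat\vP_\star^\sharp=-\frac1{2\pi i}\oint_{\Gamma_\star}(\vK^\sharp-\lambda)^{-1}d\lambda$. Applying Woodbury as in \eqref{eq:resolvent_woodbury_label} gives
\[
\overline\vy^\top(\vK^\sharp-\lambda)^{-1}\overline\vy=\overline\vy^\top\vQ_R\overline\vy+\overline\vy^\top\vQ_R\vB\,\vS(\lambda)\,\vB^\top\vQ_R\overline\vy .
\]
By Lemma~\ref{lem:large_final_resolvents_with_pole}, $\vB^\top\vQ_R\overline\vy=m_\mu(\lambda)e_3+o_\P(1)$. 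The off-diagonal entries vanish even though $\vu^\top\widehat\vu_\tau\ne0$, because $\overline\vy^\top\widehat\vu_\tau=o_\P(1)$. Since $\vC^\top\vC$ and the resolvent block matrix are both block diagonal with $\overline\vy$ as a separate block, the third diagonal entry of the $3\times3$ matrix $\vB^\top(\vK^\sharp-\lambda)^{-1}\vB$ reduces to the scalar computation of Lemma~\ref{lem:additive_alignment}, namely $m_\mu(\lambda)/(1-\beta T(s(\lambda)))+(\text{analytic})+o_\P(1)$. If $s_y$ is not inside $\Gamma_\star$, the integrand is analytic up to $o_\P(1)$ uniformly, so the integral is $o_\P(1)$. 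If $s_y$ is inside, the residue computation of Lemma~\ref{lem:additive_alignment} with $r_\star=1$ gives $m_\mu(z(s_y))z'(s_y)/(\beta T'(s_y))$. Any simultaneous root of $F_{\rm nl}$ contributes nothing to this entry because of the block structure.

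\textbf{Main obstacle.} The hard part is the uniform $o_\P(1)$ control of the resolvent quadratic forms on $\Gamma_\star$ when the covariance pole $\widehat\lambda_\tau^{(0)}$ lies inside the contour. Lemma~\ref{lem:large_final_resolvents_with_pole} is stated only away from that pole, and the $\vu$-entry genuinely blows up there. To handle it, I would write $\vQ_R=\vQ_R^\perp+\text{pole}$ and use the fact that $\vu^\top\vQ_R^\perp\vu$ is analytic across $\widehat\lambda_\tau^{(0)}$. I would then verify that $\vu^\top\vQ_R\vu-m_\mu/(1-\tau T)$ is $o_\P(1)$ on $\Gamma_\star$ itself, which is all the contour argument needs. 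This holds by the maximum principle applied to the difference of two meromorphic functions with matching poles, whose residues agree by Lemma~\ref{lem:large_final_K0_cov_spike}.
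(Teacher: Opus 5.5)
Your proposal is essentially the paper's proof. You apply Rouch\'e on a small contour $\Gamma_\star$ after cancelling the covariance pole of $\vK_0$ against the factor $(1-\tau T)^{-1}$ in $D_{\rm rel}$. You then use the contour formula for $\widehat\vP_\star^\sharp$ with the Woodbury/linearization representation, where the block-diagonal structure of $\vC^\top\vC$ and of $\vB^\top\vQ_R\vB$ isolates the $\overline\vy$-entry as $m_\mu(\lambda)/(1-\beta T(s(\lambda)))$. Your explicit block computation is a more detailed version of the paper's remark that the non-label block does not enter the singular term.

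The one thing to fix is your ``main obstacle'' paragraph, which should simply be dropped.
\begin{itemize}
\item Every resolvent estimate in both steps is used only \emph{on} $\Gamma_\star$. There $|\lambda-\widehat\lambda^{(0)}_\tau|$ and $|1-\tau T(s(\lambda))|$ are bounded below, so Lemma~\ref{lem:large_final_resolvents_with_pole} (and hence Lemma~\ref{lem:large_final_coupled_det}) applies directly.
\item Inside the contour you only need exact analyticity of $(\widehat\lambda^{(0)}_\tau-\lambda)D_{\rm rel}(\lambda)$, and of the resolvent away from eigenvalues.
\item Your maximum-principle argument is circular: it needs the same estimate on an enclosing curve.
\item The residue match you attribute to Lemma~\ref{lem:large_final_K0_cov_spike} is not what that lemma gives. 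The residue of $m_\mu(\lambda)/(1-\tau T(s(\lambda)))$ at $z(s_\tau)$ corresponds to the overlap $-\varphi(s_\tau)/(\tau^2s_\tau^2T'(s_\tau))=\varphi(s_\tau)/(\tau^2m_\nu'(\Lambda_\tau))$. By Cauchy--Schwarz, this equals the value $\varphi(s_\tau)$ stated in that lemma only when $\nu$ is a point mass, i.e.\ $b_\sigma=0$.
\end{itemize}
Since nothing in your argument actually needs this comparison, the rest of the proof stands.
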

\begin{proof}
Choose a small contour \(\Gamma_\star\) enclosing \(\lambda_\star\), disjoint from
\(\supp\mu\), and containing no other limiting candidate.  On this contour the
deterministic factors in \eqref{eq:large_final_full_equation} are bounded away
from zero except for the roots inside the contour.  Lemma~\ref{lem:large_final_coupled_det}
and Rouch\'e's theorem give exactly \(m_\star\) zeros of the finite-dimensional
outlier determinant, after the covariance-pole cancellation described above.
The linearization determinant is equivalent to the eigenvalue equation of
\(\vK^\sharp\) outside the bulk, so \(\vK^\sharp\) has exactly \(m_\star\)
eigenvalues in the interval.

For the projector statements, use the contour representation
\[
        \overline\vy^\top\widehat\vP_\star^\sharp\overline\vy
        =-\frac{1}{2\pi i}\oint_{\Gamma_\star}
        \overline\vy^\top(\vK^\sharp-\lambda\vI)^{-1}\overline\vy\,d\lambda .
\]
The same Woodbury-linearization calculation used for additive outlier overlaps
shows that, uniformly on \(\Gamma_\star\),
\begin{equation}\label{eq:large_final_y_resolvent}
        \overline\vy^\top(\vK^\sharp-\lambda\vI)^{-1}\overline\vy
        =\frac{m_\mu(\lambda)}{1-\beta T(s(\lambda))}+H_y(\lambda)+o_\P(1),
\end{equation}
where \(H_y\) is analytic in a neighborhood of the contour.  The non-label block
does not appear in the leading singular term because the left direction
\(\va_y\) and the right direction \(\overline\vy\) are asymptotically orthogonal
to the non-label directions \(\va_0,\va_u\) and \(\vb_0,\vu\).

If the contour contains no label root, the leading term in
\eqref{eq:large_final_y_resolvent} is analytic and its contour integral is zero,
giving the no-label statement.  If the contour contains \(s_y\), then
\(1-\beta T(s(\lambda))\) has a simple zero at \(\lambda_y=z(s_y)\), and
\[
        \frac{d}{d\lambda}\{1-\beta T(s(\lambda))\}\Big|_{\lambda=\lambda_y}
        =-\beta\frac{T'(s_y)}{z'(s_y)}.
\]
Thus
\[
        -\operatorname*{Res}_{\lambda=\lambda_y}
        \frac{m_\mu(\lambda)}{1-\beta T(s(\lambda))}
        =\frac{m_\mu(z(s_y))z'(s_y)}{\beta T'(s_y)}.
\]
This proves the label projector formula.  Using
\(T(s)=z(s)s m_\mu(z(s))\), \(\beta T(s_y)=1\), and
\(\varphi(s)=-s z'(s)/z(s)\), the same quantity equals
\(-\varphi(s_y)/(\beta^2s_y^2T'(s_y))\).
\end{proof}

\subsection{Proof of Theorem~\ref{thm:large}}
 
The bulk statement follows by comparing the full CK matrix with its QE model in
normalized trace norm.  Proposition~\ref{prop:approx} gives
\(\|\vY-\vY_{\rm QE}\|=O_\prec(n^{-1/4})\), hence
\(\|\vY-\vY_{\rm QE}\|_{F}=O_\prec(n^{1/4})\).  Since
\(n^{-1}\|\vY\|_{F}^2\) and \(n^{-1}\|\vY_{\rm QE}\|_{F}^2\) are tight, the normalized Schatten-1 norm difference of the Gram matrices satisfies
\[
        \frac1n\|\vY^\top\vY-\vY_{\rm QE}^\top\vY_{\rm QE}\|_1
        \le \frac{\|\vY\|_{F}+\|\vY_{\rm QE}\|_{F}}{n}
        \|\vY-\vY_{\rm QE}\|_{F}
        =o_\P(1).
\]
Thus \(\vK\) and \(\vY_{\rm QE}^\top\vY_{\rm QE}\) have the same ESD
limit.  Moreover, 
\(\vY_{\rm QE}^\top\vY_{\rm QE}\) differs from \(\vG^\top\vG\) by a
finite-rank matrix, and the limiting ESD is the same as \(\vG^\top\vG\).

For the diverging linear outliers, recall that the linear QE term is $ \vT_1=\frac{\theta_{\snr}b_\sigma}{\sqrt N}
        (\vg_1\vv_1^\top+\vg_2\vv_2^\top).$
Since \(N^{-1}\|\vg_k\|^2\to1\) and \(N^{-1}\vg_1^\top\vg_2\to0\) in probability, we have
\[
        \vT_1^\top\vT_1
        =b_\sigma^2\theta_{\snr}^2
        (\vv_1\vv_1^\top+\vv_2\vv_2^\top)+o_\P(\theta_{\snr}^2).
\]
All other terms have operator norm \(O_\prec(1)\) after separating the linear
part, whereas \(\theta_{\snr}^2=(r_0^2\psi/2)n^{1/2}(1+o(1))\).  Weyl's
inequality gives the top two eigenvalue asymptotics, and Davis--Kahan
gives convergence of the two-dimensional projector to \(\vP_{\rm lin}\).  Since
\(\overline\vy\perp\vv_1,\vv_2\), the linear outlier space has no asymptotic
label alignment.

It remains to analyze the order-one spectrum of \(\vK_\perp\).  By
\eqref{eq:large_final_Kperp_Ksharp_close}, it suffices to analyze \(\vK^\sharp\).
Lemma~\ref{lem:large_final_coupled_det} gives the full outlier master equation
\eqref{eq:large_final_full_equation}.  The roots of the label factor are exactly
\(s_y=T^{-1}(1/\beta)\); the roots of the non-label factor are exactly
\(s_\pm=T^{-1}(t_\pm)\) with \(t_\pm\) in \eqref{eq:large_final_tpm}.  By the
proof of Lemma~\ref{lem:additive_scalar_eq}, a real
root produces a separated eigenvalue if and only if \(z'(s)>0\).  The
multiplicity and collision-safe statement follows from
Lemma~\ref{lem:large_final_cluster_count}.  The same lemma gives the label
projector formula and the no-label statement for non-label clusters.  Finally,
\eqref{eq:large_final_Kperp_Ksharp_close} transfers the locations and spectral
projectors from \(\vK^\sharp\) to \(\vK_\perp\) by standard stability of isolated
spectral projectors under an \(o_\P(1)\) operator-norm perturbation.

\section{Proof for Pretrained Weight Regime}
\label{app:pretrained}

In this section we prove Theorem~\ref{thm:trained}.  The
main point is that the spiked weight creates two different effects.  The linear
Hermite component creates one diverging rank-one spike in the sample direction
\(\vs=\vX^\top\mathbf 1_d\).  After this direction is removed, the quadratic
Hermite component leaves an order-one rank-one feature proportional to
\(\vq=\vs^{\odot2}\). The master equation of the outliers is more complicated than the scalar equation in Lemma~\ref{lem:additive_scalar_eq}, since \(\vq\) has a non-negligible component along
\(\vu=\mathbf 1_n/\sqrt n\) and the left vector
\(\va^{\odot2}\) is not orthogonal to \(\mathbf 1_N\).  

Throughout this section, we use the following notation for the sample-direction spike and its projection:
\[
        \vs:=\vX^\top\mathbf 1_d,\qquad
        \vq:=\vs^{\odot2},\qquad
        \widehat\vs:=\frac{\vs}{\|\vs\|},\qquad
        \vPi_s:=\vI_n-\widehat\vs\widehat\vs^\top,
        \qquad
        \vK_s:=\vPi_s\vK\vPi_s .
\] 

\subsection{Moment Estimates and Deterministic Directions}

\begin{lemma}[Moments of \(\vs\) and \(\vq\)]
\label{lem:sw_s_q_moments} Under the assumptions of Theorem~\ref{thm:trained}, we have  
\begin{align}
        \frac1n\|\vs\|^2 &\xrightarrow{\P}\kappa,
        \label{eq:sw_s_norm}\\
        \frac1n\|\vq\|^2 &\xrightarrow{\P}\eta,
        \label{eq:sw_q_norm}\\
        \frac1{\sqrt n}\vu^\top\vq &\xrightarrow{\P}\kappa,
        \qquad
        \frac1n\vy^\top\vq \xrightarrow{\P} -\frac{r^2}{2},
        \label{eq:sw_q_u_y}\\
        \frac1{\sqrt n}\vv_2^\top\vs &\xrightarrow{\P}\frac{r}{\sqrt2},
        \qquad
        \vv_1^\top\widehat\vs=o_\P(1),
        \qquad
        \frac1{\sqrt n}\vy^\top\widehat\vs=o_\P(1),
        \label{eq:sw_s_v2_y}\\
        \vq^\top\widehat\vs&=O_\P(1)
        \label{eq:sw_q_s}
\end{align} 
where $\kappa:=1+\frac{r^2}{2}$ and $\eta:=3+3r^2+\frac{r^4}{2}$. Consequently,
\begin{align}
        \|\vPi_s\vv_1\|^2&\xrightarrow{\P}1,
        &
        \|\vPi_s\vv_2\|^2&\xrightarrow{\P}\frac1\kappa,
        \label{eq:sw_projected_linear_norms}\\
        \frac1n\|\vPi_s\vq\|^2&\xrightarrow{\P}\eta,
        &
        \left\langle \frac{\vPi_s\vq}{\|\vPi_s\vq\|},\vu\right\rangle^2
        &\xrightarrow{\P}\frac{\kappa^2}{\eta},
        \label{eq:sw_projected_q_u}\\
        \left\langle \frac{\vPi_s\vq}{\|\vPi_s\vq\|},\frac{\vy}{\sqrt n}\right\rangle^2
        &\xrightarrow{\P}\chi_y.
        \label{eq:sw_projected_q_y}
\end{align}
Moreover \(\vPi_s\vq/\|\vPi_s\vq\|\) is asymptotically orthogonal to
\(\vv_1\), to \(\vPi_s\vv_2/\|\vPi_s\vv_2\|\), and to the mean-spike direction
\(\vm/\|\vm\|\) from Appendix~\ref{app:proof_finite_snr}, in the sense of the
right-resolvent quadratic forms used below.
\end{lemma}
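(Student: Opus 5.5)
The plan is to reduce everything to i.i.d.\ scalar computations on the coordinates of $\vs$. Since $\vb=\mathbf 1_d$, the $i$-th entry of $\vs$ is $s_i=\mathbf 1_d^\top\vz_i+\mathbf 1_d^\top\vm_i$. Here $\xi_i:=\mathbf 1_d^\top\vz_i\sim\cN(0,1)$ are i.i.d.\ and independent of the cluster structure. By \eqref{eq:u_1u_2}, $\mathbf 1_d^\top\vu_1=0$ and $\mathbf 1_d^\top\vu_2=\sqrt d$, and $\|\vm_i\|=r/\sqrt d$ as in the proof of Lemma~\ref{lem:xor_orthonormal}. Hence $s_i=\xi_i$ on the first half of the samples (support of $\vv_1$), and $s_i=\xi_i+\epsilon_i r$ on the second half (support of $\vv_2$), where $\epsilon_i=\pm1$ follows the sign of $\vv_2$ and is balanced. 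All the limits in \eqref{eq:sw_s_norm}--\eqref{eq:sw_s_v2_y} are then laws of large numbers for block averages of polynomials in the $s_i$, each with $O(n^{-1/2})$ fluctuations by Chebyshev, using Gaussian moments.
\begin{itemize}
\item $\frac1n\|\vs\|^2\to\frac12\cdot 1+\frac12(1+r^2)=\kappa$.
\item $\frac1n\|\vq\|^2=\frac1n\sum_i s_i^4\to\frac12\cdot 3+\frac12(3+6r^2+r^4)=\eta$.
\item $\frac1{\sqrt n}\vu^\top\vq=\frac1n\sum_i s_i^2\to\kappa$.
\item $\frac1n\vy^\top\vq\to\frac12\cdot 1-\frac12(1+r^2)=-\frac{r^2}{2}$.
\item $\frac1{\sqrt n}\vv_2^\top\vs=\frac{\sqrt2}{n}\sum_{i>n/2}\epsilon_i(\xi_i+\epsilon_i r)\to\frac{r}{\sqrt2}$.
\end{itemize}
For the remaining items, $\vv_1^\top\vs$ and $\vy^\top\vs$ are $O_\P(\sqrt n)$, because the $\pm r$ shifts cancel within the second half. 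Dividing by $\|\vs\|\asymp\sqrt n$, and by an extra $\sqrt n$ for $\vy$, gives the $o_\P(1)$ claims. For \eqref{eq:sw_q_s}, $\vq^\top\vs=\sum_i s_i^3$ has mean zero by symmetry: $\xi_i^3$ is odd in the first half, and in the second half the sign flip $(\xi_i,\epsilon_i)\mapsto(-\xi_i,-\epsilon_i)$ preserves the balanced design. Its variance is $O(n)$, so $\vq^\top\widehat\vs=O_\P(\sqrt n/\sqrt n)=O_\P(1)$.

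The consequences \eqref{eq:sw_projected_linear_norms}--\eqref{eq:sw_projected_q_y} then follow from $\vPi_s=\vI-\widehat\vs\widehat\vs^\top$.
\begin{itemize}
\item $\|\vPi_s\vv_1\|^2=1-(\vv_1^\top\widehat\vs)^2\to1$.
\item $\|\vPi_s\vv_2\|^2=1-(\vv_2^\top\vs)^2/\|\vs\|^2\to1-\frac{r^2/2}{\kappa}=\frac1\kappa$.
\item $\|\vPi_s\vq\|^2=\|\vq\|^2-(\vq^\top\widehat\vs)^2=n\eta+O_\P(1)$.
\item $\vu^\top\vPi_s\vq=\vu^\top\vq-(\vu^\top\widehat\vs)(\widehat\vs^\top\vq)$. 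Here $\vu^\top\widehat\vs=O_\P(n^{-1/2})$, by the same cancellation as for $\vv_1^\top\vs$, so the squared normalized overlap tends to $\kappa^2/\eta$.
\item Likewise, the overlap with $\vy/\sqrt n$ has square tending to $(r^2/2)^2/\eta=r^4/(12+12r^2+2r^4)=\chi_y$.
\end{itemize}
Orthogonality to $\vv_1$ and to $\vPi_s\vv_2/\|\vPi_s\vv_2\|$ is the same type of computation. For example, $\vv_2^\top\vq=\sqrt{2/n}\sum_{i>n/2}\epsilon_i(\xi_i+\epsilon_i r)^2$ has the $\epsilon_i\xi_i^2$ and $r^2\epsilon_i$ contributions cancelling on average and a cross term $2r\xi_i$ with mean zero, so it is $O_\P(1)$. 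This is $o(\sqrt n)$ after normalization by $\|\vPi_s\vq\|\asymp\sqrt n$.

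The main obstacle is the last claim: asymptotic orthogonality of $\widehat\vq_s:=\vPi_s\vq/\|\vPi_s\vq\|$ to $\vm/\|\vm\|$ ``in the sense of resolvent quadratic forms'', because both $\vq$ and $\vm$ depend on $\vZ$. The plain inner product is easy. By \eqref{eq:F_taylor}, $m_j\approx\frac{c_\sigma}{2}(\|\vz_j\|^2-1)$. Each term $q_jm_j$ has mean $O(1/d)$, since $\E[(\|\vz\|^2-1)(\mathbf 1^\top\vz)^2]=O(1/d)$, and variance $O(1/d)$. Hence $\vq^\top\vm=O_\P(1)$ and $\widehat\vq_s^\top\vm=O_\P(n^{-1/2})$. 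For the resolvent forms $\widehat\vq_s^\top\vQ_R(\lambda)\vm$, I would first pass, via the deterministic equivalent \eqref{eq:aniso_DE}, to $\widehat\vq_s^\top(-\lambda s\vSigma-\lambda\vI)^{-1}\vm$. Then I would use the Sherman--Morrison reduction of Lemma~\ref{lem:right_resolvent} to reduce to $\widehat\vq_s^\top(\vS-\zeta\vI)^{-1}\vm$ plus rank-one terms involving $\vu^\top\vm=o_\P(1)$. The Wishart form I would handle by extending Lemma~\ref{lem:dependent_vector_resolvent} to the bilinear case with two column-dependent vectors, $q_j=f_1(\vz_j)$ and $m_j=f_2(\|\vz_j\|^2)$. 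The Schur-complement split gives a diagonal term $\sum_j q_jm_j\vG_{jj}\approx m_{{\rm MP},\psi}(\zeta)\,\vq^\top\vm=O_\P(1)$, which is $o(\sqrt n)$ relative to $\|\vq\|$. The off-diagonal term is controlled by the same fluctuation-averaging argument. The one caveat is that $q_j$ is not even in $\vz_j$ (it contains $(\mathbf 1^\top\vz_j)^2$ plus a shift), so the parity argument must be replaced by an explicit computation of $\E_j[X_j]$, which I expect to be $O(1/d)$ and hence summable.
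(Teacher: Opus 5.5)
Your argument is correct and follows the paper's proof essentially step for step: the block shifts $0,0,\pm r$ on i.i.d.\ $\cN(0,1)$ noise, block-moment laws of large numbers, cancellation of the third moments $\pm(r^3+3r)$ for $\vq^\top\vs$, the projection algebra, and $\vq^\top\vm=O_\P(1)$ feeding a leave-one-out/deterministic-equivalent argument. Your resolvent discussion is more explicit than the paper's, and you are right that the parity trick of Lemma~\ref{lem:dependent_vector_resolvent} fails because $q_j$ has the odd part $2r\epsilon_j\xi_j$.

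One bookkeeping slip: $\vv_1^\top\vs=\sqrt{2/n}\sum_{i\le n/2}\pm\xi_i$ is exactly $\cN(0,1)$, and you need this to get $\vv_1^\top\widehat\vs=o_\P(1)$. The bound $O_\P(\sqrt n)$ you stated only gives $O_\P(1)$ after dividing by $\|\vs\|$. Similarly, $\|\vq\|^2=n\eta+O_\P(\sqrt n)$ rather than $n\eta+O_\P(1)$, which does not affect any of the limits.
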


\begin{proof}
Since \(\vb=\mathbf 1_d=\sqrt d\,\vu_2\), the deterministic shifts of
\(s_j=\mathbf 1_d^\top\vx_j\) on the four balanced XOR blocks are $0, 0, r, -r $ and
the noise part is i.i.d. \(\cN(0,1)\).  Hence
\[
        \frac1n\sum_{j=1}^n\E s_j^2
        =\frac12\cdot 1+\frac12\cdot(1+r^2)=\kappa,
\]
and
\[
        \frac1n\sum_{j=1}^n\E s_j^4
        =\frac12\cdot3+\frac12\cdot(3+6r^2+r^4)
        =3+3r^2+\frac{r^4}{2}=\eta.
\]
The law of large numbers gives \eqref{eq:sw_s_norm}--\eqref{eq:sw_q_norm}.  The
mean of \(s_j^2\) is \(1\) on the first two blocks and \(1+r^2\) on the last
two blocks.  Since \(\vu=\mathbf 1_n/\sqrt n\) and
\(\vy=(\mathbf 1_{n/2},-\mathbf 1_{n/2})\), this gives
\eqref{eq:sw_q_u_y}.  Also
\(\vv_2^\top\vs=r\sqrt{n/2}+o_\P(1)\), while
\(\vv_1^\top\vs=o_\P(1)\) and \(\vy^\top\vs=o_\P(\sqrt n)\), proving
\eqref{eq:sw_s_v2_y}.

Finally, \(\vq^\top\vs=\sum_j s_j^3\).  The third moments in the third and
fourth XOR blocks are \(r^3+3r\) and \(-r^3-3r\), so the average third moment is
zero.  Therefore \(\sum_j s_j^3=O_\P(\sqrt n)\), and after division by
\(\|\vs\|\asymp\sqrt n\) we get \eqref{eq:sw_q_s}.  The projection statements
\eqref{eq:sw_projected_linear_norms}--\eqref{eq:sw_projected_q_y} follow by
subtracting the \(\widehat\vs\)-component and using
\eqref{eq:sw_s_norm}--\eqref{eq:sw_q_s}.  For example,
\[
        \|\vPi_s\vv_2\|^2
        =1-|\vv_2^\top\widehat\vs|^2
        \to 1-\frac{r^2}{2\kappa}=\frac1\kappa,
\]
and the projection changes \(\vq\) by an \(O_\P(1)\)-norm vector.

The last statement follows from the same leave-one-out argument as
Lemma~\ref{lem:right_resolvent}.  The block means of \(\vq\) have no component
along \(\vv_1\) or \(\vv_2\), and the additional projection by \(\vs\) only
changes normalized inner products by \(o_\P(1)\).  For the mean direction,
\(m_j=F(\|\vz_j\|^2)\) is of order \(d^{-1/2}\), while the covariance of
\(s_j^2\) with \(\|\vz_j\|^2-1\) is \(O(d^{-1})\) per sample.  Hence
\(\vq^\top\vm=O_\P(1)\), whereas \(\|\vq\|\asymp\sqrt n\) and
\(\|\vm\|\asymp1\).  The normalized overlap is therefore \(o_\P(1)\), and the
resolvent version is obtained by the same deterministic-equivalent estimate.
\end{proof}

\begin{lemma} 
\label{lem:sw_left_geometry}
Recall $\va$ defined in Theorem~\ref{thm:trained}. Let $\va_0:=\frac{\mathbf 1_N}{\sqrt N}$ and $ \vh:=\frac{\va^{\odot2}}{\|\va^{\odot2}\|}.$
Then
\begin{equation}\label{eq:sw_a2_norm_overlap}
        \|\va^{\odot2}\|^2=\frac3N(1+o_\P(1)),
        \qquad
        \va_0^\top\vh\xrightarrow{\P}\rho:=\frac1{\sqrt3}.
\end{equation}
Moreover \(\vh\) is asymptotically orthogonal to the Gaussian left spike
directions \(\vg_1/\|\vg_1\|\) and \(\vg_2/\|\vg_2\|\) defined in \eqref{eq:QE_formal}, and the left-resolvent
quadratic forms satisfy, uniformly for \(\lambda\) in compact subsets of
\(\mathbb C\setminus(\supp\mu\cup\{0\})\),
\begin{equation}\label{eq:sw_left_resolvent_matrix}
        \vA^\top\vQ_L(\lambda)\vA
        =s(\lambda)
        \begin{pmatrix}
        1&0&0&\rho\\
        0&1&0&0\\
        0&0&1&0\\
        \rho&0&0&1
        \end{pmatrix}
        +o_\P(1),
\end{equation}
where
\[
        \vA:=\left[\va_0,\frac{\vg_1}{\|\vg_1\|},
        \frac{\vg_2}{\|\vg_2\|},\vh\right],
        \qquad
        \vQ_L(\lambda):=(\vG\vG^\top-\lambda\vI_N)^{-1},
        \qquad
        s(\lambda):=\tilde m_\mu(\lambda).
\]
\end{lemma}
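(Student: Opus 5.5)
The proof has three parts: an entrywise moment computation for the left vector $\va^{\odot2}$, an independence statement for the Gaussian spike directions, and a transfer of Lemma~\ref{lem:left_resolvent} to a deterministic-plus-independent direction.

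\textbf{Norm and overlap.} Since $\va\sim\cN(\mathbf 0,\vI_N/N)$, write $a_i=\xi_i/\sqrt N$ with $\xi_i$ i.i.d.\ standard Gaussians. Then
\[
\|\va^{\odot2}\|^2=N^{-2}\sum_i\xi_i^4=\frac{3}{N}\bigl(1+O_\P(N^{-1/2})\bigr)
\]
by the law of large numbers, using $\E\xi^4=3$ and finite eighth moment. Likewise
\[
\va_0^\top\va^{\odot2}=N^{-3/2}\sum_i\xi_i^2=N^{-1/2}\bigl(1+o_\P(1)\bigr).
\]
Dividing gives $\va_0^\top\vh\to 1/\sqrt3=\rho$.

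\textbf{Orthogonality to $\vg_1,\vg_2$.} By construction $\va$ is independent of $\vW$, hence of $\vg_k=\vW\vu_k$. Conditionally on $\va$, the variable $\vg_k^\top\vh$ is $\cN(0,1)$. Together with $\|\vg_k\|=\sqrt N(1+o_\P(1))$ from Lemma~\ref{lem:g_norms}, this shows the normalized overlap is $O_\P(N^{-1/2})$.

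\textbf{Resolvent matrix.} I decompose
\[
\vh=\rho\,\va_0+\sqrt{1-\rho^2}\,\vh^\perp+o_\P(1),
\]
where $\vh^\perp$ is the normalized projection of $\va^{\odot2}-N^{-1}\|\va\|^2\mathbf 1_N$ onto $\va_0^\perp$. The vector $\vh^\perp$ is a centered vector with i.i.d.\ entries $(\xi_i^2-1)/\sqrt{2N}$, independent of $(\vW,\vX)$.

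The $3\times3$ block for $[\va_0,\vg_1/\|\vg_1\|,\vg_2/\|\vg_2\|]$ is exactly Lemma~\ref{lem:left_resolvent}. Here $\vG$ is built with the spiked weight $\vW_1$. I first replace $\vW_1$ by $\vW$ in the bulk. The difference $\theta\va\vb^\top\vX$ enters $\vG$ only through a low-rank correction, which will be separated as spikes in later lemmas, so the bulk used in $\vQ_L$ is the null $\vG$ from Appendix~\ref{app:proof_finite_snr}.

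For the entries involving $\vh^\perp$, I condition on $\widetilde\vG$ (Lemma~\ref{lem:G_indep_g}). Because $\vh^\perp$ is independent of $\widetilde\vG$ and $\vg_k$, Hanson--Wright gives
\[
\vh^{\perp\top}\widetilde\vQ_L\vh^\perp=\tfrac1N\Tr\widetilde\vQ_L+o_\P(1)=s(\lambda)+o_\P(1).
\]
By the same concentration, the cross terms $\vh^{\perp\top}\widetilde\vQ_L\va_0$ and $\vh^{\perp\top}\widetilde\vQ_L\vg_k/\sqrt N$ are $o_\P(1)$. The normalized trace converges to $s(\lambda)$, as in the proof of Lemma~\ref{lem:left_resolvent}. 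Substituting the decomposition then gives
\[
\vh^\top\vQ_L\va_0=\rho\,s+o_\P(1),\qquad \vh^\top\vQ_L\vh=\bigl(\rho^2+(1-\rho^2)\bigr)s+o_\P(1)=s+o_\P(1),
\]
and $\vh^\top\vQ_L\vg_k/\|\vg_k\|=o_\P(1)$. I then pass back from $\widetilde\vQ_L$ to $\vQ_L$ using the resolvent stability bound $\|\vQ_L-\widetilde\vQ_L\|\prec n^{-1/2}$.

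\textbf{Uniformity.} Uniformity in $\lambda$ over compact sets follows from a net argument combined with Lipschitz continuity of the resolvent there.

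\textbf{Main obstacle.} The hardest step is handling a possible covariance outlier of $\vG\vG^\top$ inside the domain, which creates a pole in $\vQ_L$. I would mimic Lemma~\ref{lem:verify-E34-K0}: show that $\vh^\perp$ is asymptotically orthogonal to the outlier's left singular vector, which holds by independence. For the $\va_0$ component, show $|\va_0^\top\widehat\vw_\tau|=o_\P(1)$ via the contour-integral argument. The pole then contributes only $o_\P(1)$, away from the pole or after projecting it out.
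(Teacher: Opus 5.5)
Your proposal is correct and follows essentially the paper's argument: the moment computation for $\va^{\odot2}$, independence of $\va$ from $(\vW,\vX)$, the split $\vh=\rho\,\va_0+\sqrt{1-\rho^2}\,\vh^\perp+o_\P(1)$ into a deterministic part plus a centered vector with independent sub-exponential entries, and Hanson--Wright on top of the $3\times3$ block from Lemma~\ref{lem:left_resolvent}.

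The one correction is that no $\vW_1\to\vW$ replacement is needed. Your low-rank justification for it would also fail, since the $\vW_1$-perturbation contains the $\va^{\odot2}\vq^\top$ term, which lies along $\vh$ and would change its quadratic forms. The reason the replacement is unnecessary is that $\vG$ is already the null matrix of Appendix~\ref{app:proof_finite_snr}, built from $\vW$ and hence independent of $\vh$; this also makes your detour through $\widetilde\vG$ unnecessary. Your explicit handling of the covariance pole is a refinement the paper leaves implicit here and treats via Lemmas~\ref{lem:verify-E34-K0} and~\ref{lem:tau_outlier_stability_collision}.
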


\begin{proof}
The first two limits are immediate from the Gaussian moments
\(\E a_i^2=N^{-1}\) and \(\E a_i^4=3N^{-2}\):
\[
        \sum_i a_i^4=\frac3N(1+o_\P(1)),
        \qquad
        \va_0^\top\vh
        =\frac{N^{-1/2}\sum_i a_i^2}{(\sum_i a_i^4)^{1/2}}
        \to\frac1{\sqrt3}.
\]
The vector \(\va\) is independent of \((\vW,\vX)\), hence \(\vh\) is independent
of \(\vG,\vg_1,\vg_2\).  Its centered component
\(\vh-\rho\va_0\) has independent sub-exponential coordinates with variance
\(1-\rho^2+o(1)\), after normalization.  The same Hanson--Wright inequality and
anisotropic-law argument used in Lemma~\ref{lem:left_resolvent} gives
\eqref{eq:sw_left_resolvent_matrix}.  The off-diagonal entry with \(\va_0\)
retains the deterministic overlap \(\rho\), while all other off-diagonal entries
vanish by independence and orthogonality.
\end{proof}

\subsection{QE for Spiked Weight}

Recall the finite-SNR surrogate model from Proposition~\ref{prop:Y_rank3_decomp},
\begin{equation}\label{eq:sw_Ysharp0}
        \vY_0^\sharp
        :=\vG+\frac1{\sqrt N}\mathbf 1_N\vm^\top
        +\frac{\theta_{\snr}b_\sigma}{\sqrt N}
        (\vg_1\vv_1^\top+\vg_2\vv_2^\top),
        \qquad
        \theta_{\snr}=r\sqrt{\frac{n}{2d}}.
\end{equation}

\begin{lemma} 
\label{lem:sw_QE}
Define
\begin{equation}\label{eq:sw_Ysp_def}
        \vY_{\rm sw}^\sharp
        :=\vY_0^\sharp
        +\frac{b_\sigma\theta}{\sqrt N}\va\vs^\top
        +\frac{c_\sigma\theta^2}{2\sqrt N}\va^{\odot2}\vq^\top .
\end{equation}
Then
\begin{equation}\label{eq:sw_QE_feature_error}
        \left\|\vY-\vY_{\rm sw}^\sharp\right\|=O_\prec(n^{-1/4}).
\end{equation}
Consequently, with
\begin{equation}\label{eq:sw_Ys_def}
        \vY_s^\sharp:=(\vY_0^\sharp+\frac{c_\sigma\theta_0^2}{2}\va^{\odot2}\vq^\top)\vPi_s,
        \qquad
        \vK_s^\sharp:=(\vY_s^\sharp)^\top\vY_s^\sharp,
\end{equation}
we have
\begin{equation}\label{eq:sw_Ks_close}
        \left\|\vY\vPi_s-\vY_s^\sharp\right\|=O_\prec(n^{-1/4}),
        \qquad
        \left\|\vK_s-\vK_s^\sharp\right\|=o_\P(1).
\end{equation}
\end{lemma}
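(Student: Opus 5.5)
The plan is to redo the Taylor expansion behind Proposition~\ref{prop:approx}, now with the weight-spike contribution included. Write
\[
\vW_1\vX=\vW\vZ+\vW\vM+\theta\,\va\vs^\top,
\qquad \vs=\vX^\top\mathbf 1_d .
\]
We expand $\sigma$ entrywise around $\vW\vZ$ to second order with a third-order remainder, exactly as in \eqref{eq:decomp_Y}, but with increment $\vDelta:=\vW\vM+\theta\va\vs^\top$.

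\textbf{Sizes of the increments.} Since $\|\va\|_\infty\prec N^{-1/2}$ and $\|\vs\|_\infty\prec 1$, each entry of $\theta\va\vs^\top$ is $O_\prec(N^{1/4}\cdot N^{-1/2})=O_\prec(N^{-1/4})$. The entries of $\vW\vM$ are $O_\prec(n^{-1/2})$ because $r=O(1)$. The cubic remainder is bounded with Lemma~\ref{lem:hardamart_identity}:
\[
\bigl\|\sigma'''(\vTheta)\odot\vDelta^{\odot3}\bigr\|/\sqrt N
\le \|\sigma'''(\vTheta)\|_F\,\|\vDelta\|_{\max}^3/\sqrt N
\prec n^{1/2}\cdot n^{-3/4}=n^{-1/4}.
\]
For this it suffices to write $\vDelta^{\odot3}$ as a finite sum of terms $\diag(\cdot)\,\vM\,\diag(\cdot)$. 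Every cross term containing $\vW\vM$ times a weight-spike factor is already $O_\prec(n^{-1/4})$ or smaller, so it can go to the remainder.

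\textbf{Linear order.} I will write
\[
\sigma'(\vW\vZ)\odot\vDelta=\E[\sigma']\,\vDelta+(\sigma'(\vW\vZ)-\E\sigma')\odot\vDelta .
\]
The fluctuation part is controlled by Lemma~\ref{lem:concentration_operator} and Lemma~\ref{lem:hardamart_identity}:
\[
\|\diag(\theta\va)\|\,\|\sigma'-\E\sigma'\|/\sqrt N\,\|\diag(\vs)\|\prec N^{-1/4}\cdot 1\cdot\log n .
\]
A genuine polylog loss would break the $n^{-1/4}$ rate, so for this term I would use a sharper bound: conditioning on $\va$ and applying Gaussian concentration on rows, which keeps the $\prec$ statement. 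Lemma~\ref{lem:concentration_coeff} then replaces $\E\sigma'$ by $b_\sigma$ at cost $O(1/d)$. This produces $\frac{b_\sigma\theta}{\sqrt N}\va\vs^\top$ together with the linear XOR spike.

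\textbf{Quadratic order.} Expanding $\vDelta^{\odot2}$ gives three pieces:
\begin{itemize}
\item $(\theta\va\vs^\top)^{\odot2}=\theta^2\va^{\odot2}\vq^\top$, which is the retained term;
\item the cross term $2\theta(\va\vs^\top)\odot(\vW\vM)$, with operator norm $\lesssim \theta\|\va\|_\infty\|\vs\|_\infty\|\vW\vM\|/\sqrt N\prec N^{-1/4}$;
\item $(\vW\vM)^{\odot2}$, which is $O_\prec(n^{-1/2})$ by Lemma~\ref{lem:T2_negligible}.
\end{itemize}
Centering $\sigma''$ and applying Lemma~\ref{lem:concentration_coeff} as above, the retained piece becomes $\frac{c_\sigma\theta^2}{2\sqrt N}\va^{\odot2}\vq^\top$. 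This term has operator norm $O(1)$, because $\theta^2\|\va^{\odot2}\|\,\|\vq\|/\sqrt N\asymp N^{1/2}\cdot N^{-1/2}\cdot\sqrt n/\sqrt N$. The bulk $\vY_0$ is then rewritten as $\vG+\frac{1}{\sqrt N}\mathbf 1_N\vm^\top$, which gives \eqref{eq:sw_QE_feature_error}.

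\textbf{Projection and Gram matrix.} Right-multiplying by the contraction $\vPi_s$ kills $\va\vs^\top$ exactly. The rescaling of the quadratic coefficient is $\theta^2/\sqrt N=\theta_0^2$, matching \eqref{eq:sw_Ys_def}. This gives the first bound in \eqref{eq:sw_Ks_close}. The Gram bound follows as in Proposition~\ref{prop:Y_rank3_decomp}, using $\|\vY\vPi_s\|,\|\vY_s^\sharp\|\prec1$.

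The main obstacle is the linear fluctuation term $(\sigma'(\vW\vZ)-\E\sigma')\odot(\theta\va\vs^\top)$. The crude Hadamard bound loses $\|\vs\|_\infty\sim\sqrt{\log n}$, so I would exploit the independence of $\va$ from $(\vW,\vZ)$ to get a clean $N^{-1/4}$ rate.
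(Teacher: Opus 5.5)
Your proposal is correct and follows the paper's proof essentially step for step. It uses the same second-order Taylor expansion around $\vW\vZ$ with increment $\vW\vM+\theta\va\vs^\top$, the same Hadamard-to-diagonal factorization (Lemma~\ref{lem:hardamart_identity}) of the centered derivative matrices combined with Lemmas~\ref{lem:concentration_operator} and~\ref{lem:concentration_coeff}, entrywise/Frobenius bounds for the mixed and third-order terms, and finally $\vPi_s$ killing $\va\vs^\top$ with $\theta^2/\sqrt N=\theta_0^2$.

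The ``main obstacle'' you single out is not actually an obstacle. By Definition~\ref{def:sd}, $O_\prec(\cdot)$ absorbs any $n^{\epsilon}$ factor, so your crude bound $N^{-1/4}\log n$ is already $O_\prec(n^{-1/4})$. The paper simply uses $\|\diag(\vs)\|=O_\prec(1)$, just as it absorbed the $\sqrt{\log n}$ from $\|\vg_k\|_\infty$ in Proposition~\ref{prop:approx}. So no conditioning argument is needed, and independence of $\va$ from $(\vW,\vZ)$ would not have removed the $\|\vs\|_\infty$ factor anyway.
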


\begin{proof}
Write
\[
        \vW_1\vX=\vW\vZ+\Delta_{\rm lin}+\Delta_{\rm sp},
        \qquad
        \Delta_{\rm lin}:=\vW\vM,
        \qquad
        \Delta_{\rm sp}:=\theta\va\vs^\top .
\]
The entries of \(\Delta_{\rm lin}\) are \(O_\prec(n^{-1/2})\), while
\(\|\Delta_{\rm sp}\|_\infty=O_\prec(N^{-1/4})\).  Taylor expansion at
\(\vW\vZ\), using the boundedness of \(\sigma^{(3)}\), gives
\begin{align}
\frac1{\sqrt N}\sigma(\vW_1\vX)
=&\frac1{\sqrt N}\sigma(\vW\vZ)
 +\frac1{\sqrt N}\sigma'(\vW\vZ)\odot\Delta_{\rm lin}
 +\frac1{\sqrt N}\sigma'(\vW\vZ)\odot\Delta_{\rm sp} \\
&+\frac1{2\sqrt N}\sigma''(\vW\vZ)\odot\Delta_{\rm sp}^{\odot2}
 +\vR,\label{eq:sw_Y_Taylor}
\end{align}
where the omitted terms
\((2\sqrt N)^{-1}\sigma''(\vW\vZ)\odot\Delta_{\rm lin}^{\odot2}\),
\(\sqrt N^{-1}\sigma''(\vW\vZ)\odot(\Delta_{\rm lin}\odot\Delta_{\rm sp})\),
and the third-order remainder all have operator norm \(O_\prec(n^{-1/4})\).
Indeed, the mixed term has entries of size
\(O_\prec(n^{-1/2}N^{-1/4})\), and after multiplication by \(1/\sqrt N\) its
operator norm is bounded by
\(\sqrt{Nn}/\sqrt N\cdot O_\prec(n^{-1/2}N^{-1/4})=O_\prec(N^{-1/4})\); the
remainder is analogous.

The first two terms in \eqref{eq:sw_Y_Taylor} are reduced to \(\vY_0^\sharp\) by
Proposition~\ref{prop:Y_rank3_decomp}.  For the spiked linear term in \eqref{eq:sw_Y_Taylor},
\[
        \frac1{\sqrt N}\sigma'(\vW\vZ)\odot\Delta_{\rm sp}
        =\frac{b_\sigma\theta}{\sqrt N}\va\vs^\top+\boldsymbol E_1,
\]
and \(\|\boldsymbol E_1\|=O_\prec(n^{-1/4})\).  This follows by writing
\(\boldsymbol E_1=(\theta/\sqrt N)\operatorname{diag}(\va)
(\sigma'(\vW\vZ)-b_\sigma\mathbf 1_N\mathbf 1_n^\top)
\operatorname{diag}(\vs)\), using
\(\|\operatorname{diag}(\va)\|=O_\prec(N^{-1/2})\),
\(\|\sigma'(\vW\vZ)-b_\sigma\mathbf 1\mathbf 1^\top\|=O_\prec(\sqrt N+\sqrt n)\),
and \(\|\operatorname{diag}(\vs)\|=O_\prec(1)\).  The quadratic spiked term in \eqref{eq:sw_Y_Taylor} satisfies
\[
        \frac1{2\sqrt N}\sigma''(\vW\vZ)\odot\Delta_{\rm sp}^{\odot2}
        =\frac{c_\sigma\theta^2}{2\sqrt N}\va^{\odot2}\vq^\top+\boldsymbol E_2,
        \qquad
        \|\boldsymbol E_2\|=O_\prec(n^{-1/4}),
\]
by the same Hermite-coefficient concentration: after factoring
\(\operatorname{diag}(\va^{\odot2})\) and \(\operatorname{diag}(\vq)\), the
nonconstant part of \(\sigma''(\vW\vZ)\) has operator norm
\(O_\prec(\sqrt N+\sqrt n)\), while
\(\|\operatorname{diag}(\va^{\odot2})\|=O_\prec(N^{-1})\).  Combining these
estimates proves \eqref{eq:sw_QE_feature_error}.

Right multiplication by \(\vPi_s\) kills the diverging linear spike exactly,
\((b_\sigma\theta/\sqrt N)\va\vs^\top\vPi_s=0\), and
\(\theta^2/\sqrt N=\theta_0^2\).  Hence the first estimate in
\eqref{eq:sw_Ks_close} follows.  The matrices \(\vY\vPi_s\) and
\(\vY_s^\sharp\) have operator norm \(O_\P(1)\): \(\vY_0^\sharp\vPi_s\) is
bounded by Proposition~\ref{prop:Y_rank3_decomp}, and the quadratic term has
norm $\frac{|c_\sigma|\theta_0^2}{2}\|\va^{\odot2}\|\,\|\vPi_s\vq\|
        =O_\P(1)$
by Lemmas~\ref{lem:sw_s_q_moments} and~\ref{lem:sw_left_geometry}.  Therefore, we have
\[
        \|\vK_s-\vK_s^\sharp\|
        \le (\|\vY\vPi_s\|+\|\vY_s^\sharp\|)
        \|\vY\vPi_s-\vY_s^\sharp\|=o_\P(1).
\]
\end{proof}

\subsection{The Compressed Covariance Spike}

We analyze the projected covariance spike below similarly to the analysis of the covariance spike present in the
centered null matrix \(\vG\) in Lemma~\ref{lem:K0_bulk_and_tau_outlier}.
Let \(\vSigma\) denote the conditional population covariance of the unscaled
centered row underlying \(\vG\), as in Appendix~\ref{app:proof_finite_snr}.  Thus
\[
        \vSigma=\vSigma_0+\tau\vu\vu^\top+o_\P(1),
        \qquad
        \vSigma_0=(1-b_\sigma^2)\vI_n+b_\sigma^2\vZ^\top\vZ,
        \qquad
        \tau=\frac{c_\sigma^2}{2}\psi .
\]
Define
\[
        \widehat\vu_s:=\frac{\vPi_s\vu}{\|\vPi_s\vu\|},
        \qquad
        \tau_s:=\tau\|\vPi_s\vu\|^2,
        \qquad
        \vK_{0,s}:=(\vG\vPi_s)^\top(\vG\vPi_s).
\]

\begin{lemma}[Covariance spike after \(\vs\)-deflation]
\label{lem:sw_compressed_covariance}
Under the assumptions of Theorem~\ref{thm:trained},
\begin{equation}\label{eq:sw_projected_covariance}
        \vPi_s\vSigma\vPi_s
        =\vPi_s\vSigma_0\vPi_s+\tau_s\widehat\vu_s\widehat\vu_s^\top+o_\P(1),
        \qquad
        \tau_s=\tau+o_\P(1),
        \qquad
        \|\widehat\vu_s-\vu\|=o_\P(1).
\end{equation}
The ESD of \(\vK_{0,s}\) converges weakly in probability to \(\mu\).  Its
possible separated covariance outlier is governed by the same equation as in
Lemma~\ref{lem:K0_bulk_and_tau_outlier}, namely
\begin{equation}\label{eq:sw_covariance_factor}
        1-\tau T(s)=0,
        \qquad z'(s)>0.
\end{equation}
Moreover, uniformly for \(\lambda\) in compact subsets of
\(\mathbb C\setminus(\supp\mu\cup\{0\})\) avoiding the possible pole
\(1-\tau T(s(\lambda))=0\), with
\(\vQ_{R,0}^{(s)}(\lambda):=(\vK_{0,s}-\lambda\vI)^{-1}\) on
\(\operatorname{Range}\vPi_s\),
\begin{align}
        \vb^\top\vQ_{R,0}^{(s)}(\lambda)\vb
        &=m_\mu(\lambda)+o_\P(1),
        \label{eq:sw_R_isotropic_orth}\\
        \widehat\vq_s^\top\vQ_{R,0}^{(s)}(\lambda)\widehat\vq_s
        &=m_\mu(\lambda)R_q(T(s(\lambda)))+o_\P(1),
        \label{eq:sw_R_q_with_cov}\\
        \vb^\top\vQ_{R,0}^{(s)}(\lambda)\widehat\vq_s
        &=o_\P(1),
        \label{eq:sw_R_cross_q}
\end{align}
for every unit vector \(\vb\) among \(\vb_0:=\vm/\|\vm\|\), \(\vv_1\), and
\(\widetilde\vv_2:=\vPi_s\vv_2/\|\vPi_s\vv_2\|\).  Here
\(\widehat\vq_s:=\vPi_s\vq/\|\vPi_s\vq\|\) and
\begin{equation}\label{eq:sw_Rq_cov_lemma}
        R_q(t)=1+\frac{\kappa^2}{\eta}\frac{\tau t}{1-\tau t}
        =\frac{1-\tau\omega_q t}{1-\tau t}.
\end{equation}
\end{lemma}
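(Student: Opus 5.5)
We first establish the projected covariance identity \eqref{eq:sw_projected_covariance} by conjugating Corollary~\ref{cor:Sigma_decomp} with $\vPi_s$. This gives $\vPi_s\vSigma\vPi_s=\vPi_s\vSigma_0\vPi_s+\tau\vPi_s\vu\vu^\top\vPi_s+o_\P(1)$, and $\tau\vPi_s\vu\vu^\top\vPi_s=\tau_s\widehat\vu_s\widehat\vu_s^\top$ by definition. To show $\tau_s\to\tau$ and $\|\widehat\vu_s-\vu\|=o_\P(1)$ we need $\vu^\top\widehat\vs=o_\P(1)$. We compute $\vu^\top\vs=n^{-1/2}\sum_j s_j$. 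The deterministic block shifts are $0,0,r,-r$ and cancel, and the noise part is $O_\P(1)$, so $\vu^\top\vs=O_\P(1)$. Since $\|\vs\|\asymp\sqrt n$ by \eqref{eq:sw_s_norm}, $\vu^\top\widehat\vs=O_\P(n^{-1/2})$.

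For the bulk and the outlier equation \eqref{eq:sw_covariance_factor}, we note that $\vK_{0,s}=\vPi_s\vK_0\vPi_s$ is a rank-one compression of $\vK_0$. By interlacing, its ESD still converges to $\mu$ (Lemma~\ref{lem:K0_bulk_and_tau_outlier}(i)). Conditionally on $\vZ$, the rows of $\vG\vPi_s$ are i.i.d. with covariance $\vPi_s\vSigma\vPi_s$, and $\vPi_s$ depends only on $\vZ$ (through $\vX$), so it is fixed under this conditioning. We then reapply Theorem~12 and the verification of Assumptions~5--6 of \citet{wang2024nonlinearspikedcovariancematrices} exactly as in Lemma~\ref{lem:K0_bulk_and_tau_outlier}. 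The only population spike is $\tau_s\widehat\vu_s\widehat\vu_s^\top$, and its strength converges to $\tau$. Lemma~\ref{lem:uninf_branch_identity} then converts the population condition into $1-\tau T(s)=0$ together with $z'(s)>0$. The zero eigenvalue created in the $\widehat\vs$ direction lies at $0$, which is excluded from the domain.

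For the quadratic forms we use the anisotropic deterministic equivalent \eqref{eq:aniso_DE} on $\mathrm{Range}\,\vPi_s$, reducing each form to $\vb^\top(\vA_0-\lambda s\tau\vu\vu^\top)^{-1}\vb'$ with $\vA_0=-\lambda(\vI+s\vSigma_0)$. The Sherman--Morrison expansion from the proof of Lemma~\ref{lem:large_final_resolvents_with_pole} then gives, for unit vectors $\vb,\vb'$, the value $m_\mu(\lambda)[\vb^\top\vb'+(\vb^\top\vu)(\vu^\top\vb')\,\tau T/(1-\tau T)]+o_\P(1)$. This needs the isotropic-type law $\vb^\top\vA_0^{-1}\vb'=m_\mu(\lambda)\vb^\top\vb'+o_\P(1)$ for the relevant vectors, which we also need for the projected version $\vPi_s\vSigma_0\vPi_s$. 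The directions $\vb_0$, $\vv_1$, $\widetilde\vv_2$ have $o_\P(1)$ overlap with $\vu$: for $\vb_0$ this is Lemma~\ref{lem:mean_strength_orth}, and for $\vv_1,\widetilde\vv_2$ it follows from $\vv_k^\top\vu=0$ and $\vu^\top\widehat\vs=o_\P(1)$. This yields \eqref{eq:sw_R_isotropic_orth}. For $\widehat\vq_s$ we have $(\widehat\vq_s^\top\vu)^2\to\kappa^2/\eta$ by \eqref{eq:sw_projected_q_u}, which gives $R_q(T)$ in \eqref{eq:sw_Rq_cov_lemma}. The cross forms \eqref{eq:sw_R_cross_q} follow from the asymptotic orthogonality of $\widehat\vq_s$ to these directions stated in Lemma~\ref{lem:sw_s_q_moments}.

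The identity $1+\frac{\kappa^2}{\eta}\frac{\tau t}{1-\tau t}=\frac{1-\tau\omega_q t}{1-\tau t}$ holds because $1-\kappa^2/\eta=\omega_q$. Indeed, $\eta-\kappa^2=2+2r^2+r^4/4$, which gives $\omega_q=(8+8r^2+r^4)/(12+12r^2+2r^4)$ as defined.

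\textbf{Main obstacle.} The $\vZ$-dependent vectors $\vm$, $\widehat\vs$ and $\vq$ are correlated with $\vS=\vZ^\top\vZ$, so the isotropic law for $\vSigma_0$ does not apply to them directly. For $\vm$ we invoke Lemma~\ref{lem:dependent_vector_resolvent}. For $\vq$ and $\widehat\vs$, whose entries are functions of $\mathbf 1_d^\top\vx_j$, we would rerun the same Schur-complement and fluctuation-averaging argument. Writing $\vq=\E\vq+(\vq-\E\vq)$, the mean part is deterministic and block-constant. The centered part has independent entries, each depending only on $\vz_j$ through a single linear functional, so the leave-one-out decoupling goes through. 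The deflation by $\widehat\vs$ also has to be absorbed; we would handle it by a rank-one resolvent identity, using that $\widehat\vs$ itself behaves isotropically under this argument.
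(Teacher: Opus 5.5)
\textbf{Gap: the claim that $\widehat\vs$ behaves isotropically is false, and it breaks the $\widetilde\vv_2$ case.}

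Your route is the paper's route. You conjugate Corollary~\ref{cor:Sigma_decomp} by $\vPi_s$, rerun the covariance BBP with $\tau_s\to\tau$, and then use the anisotropic deterministic equivalent plus Sherman--Morrison, with $\widehat\vq_s$ carrying $\vu$-weight $\kappa^2/\eta$. The gap is in the deflation step, where you rely on $\widehat\vs$ ``behaving isotropically''. It does not.

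Write $\vs=\sqrt d\,\vt_2+r\sqrt{n/2}\,\vv_2$, with $\vt_2=\vZ^\top\vu_2$ as in Lemma~\ref{lem:G_indep_g}, and set $\widehat\vt:=\vt_2/\|\vt_2\|$. Rotate $\R^d$ so that $\vu_2$ is a coordinate axis. Then $\vS=\vZ^\top\vZ=\vS^{[1]}+\vt_2\vt_2^\top$ with $\vS^{[1]}$ independent of $\vt_2$. Sherman--Morrison now shows that $\widehat\vt$ is size-biased: $\widehat\vt^\top f(\vS)\widehat\vt\to\int xf(x)\,\rho^{\MP}_\psi(dx)$. For example, $\widehat\vt^\top\vS\widehat\vt\to1+\psi$ rather than $1$.

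This row decomposition, not the column parity trick of Lemma~\ref{lem:dependent_vector_resolvent}, is also the right tool for $\vq$. The parity trick breaks there because $q_j$ is not even in $\vz_j$ on the shifted blocks.

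What the compression identity
\[
\vb^\top\vQ_{R,0}^{(s)}(\lambda)\vb=-\frac1\lambda\Bigl(\vb^\top\vN^{-1}\vb-\frac{(\vb^\top\vN^{-1}\widehat\vs)^2}{\widehat\vs^\top\vN^{-1}\widehat\vs}\Bigr)+o_\P(1),\qquad \vN:=\vI+\tilde m_\mu(\lambda)\vSigma,
\]
actually needs is that $\vb$ be resolvent-orthogonal to both $\vv_2$ and $\widehat\vt$. Isotropy of $\widehat\vs$ is not needed. Resolvent-orthogonality does hold for $\vb_0$, $\vv_1$, $\widehat\vq_s$ and $\vu$. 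So your projected covariance, bulk, covariance factor, the $\widehat\vq_s$ form and the cross forms survive with that replacement.

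It fails for $\widetilde\vv_2$. Lemma~\ref{lem:sw_s_q_moments} gives $|\vv_2^\top\widehat\vs|^2\to r^2/(2\kappa)$. Hence $\widehat\vs=a\widehat\vt+b\vv_2+o_\P(1)$ and $\widetilde\vv_2=a\vv_2-b\widehat\vt+o_\P(1)$, with $a^2=1/\kappa$ and $b^2=r^2/(2\kappa)$. Put $A:=\vv_2^\top\vN^{-1}\vv_2$, so that $-A/\lambda=m_\mu(\lambda)+o_\P(1)$, and $B:=\widehat\vt^\top\vN^{-1}\widehat\vt$. Also use $\vv_2^\top\vN^{-1}\widehat\vt=o_\P(1)$. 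The identity then gives
\[
\widetilde\vv_2^\top\vQ_{R,0}^{(s)}(\lambda)\widetilde\vv_2=-\frac1\lambda\,\frac{AB}{a^2B+b^2A}+o_\P(1).
\]
This equals $m_\mu(\lambda)$ only if $A=B$ or $r=0$. With $s=\tilde m_\mu(\lambda)$, $A$ and $B$ are the integrals of $(1+s(1-b_\sigma^2)+sb_\sigma^2x)^{-1}$ against $\rho^{\MP}_\psi(dx)$ and against $x\,\rho^{\MP}_\psi(dx)$, respectively. Their expansions in $s$ already differ at first order ($-1$ versus $-(1+b_\sigma^2\psi)$) whenever $b_\sigma\neq0$.

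So for $b_\sigma r\neq0$, no argument of your type yields \eqref{eq:sw_R_isotropic_orth} for $\vb=\widetilde\vv_2$. Within the paper's own deterministic-equivalent framework, the limit stated there is not the correct one.

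The paper's proof has the same hole. It asserts that $\widehat\vs$ is asymptotically orthogonal to the relevant deterministic directions, which contradicts the $\vv_2$-overlap $r^2/(2\kappa)$ from its own Lemma~\ref{lem:sw_s_q_moments}.
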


\begin{proof}
By Lemma~\ref{lem:sw_s_q_moments},
\(\vu^\top\widehat\vs=O_\P(n^{-1/2})\).  Hence
\(\|\vPi_s\vu\|^2=1-|\vu^\top\widehat\vs|^2=1+o_\P(1)\), which proves
\(\tau_s=\tau+o_\P(1)\) and \(\|\widehat\vu_s-\vu\|=o_\P(1)\).  Multiplying
\(\vSigma=\vSigma_0+\tau\vu\vu^\top+o_\P(1)\) on both sides by \(\vPi_s\) gives
\eqref{eq:sw_projected_covariance}.  Since \(\vPi_s\vSigma_0\vPi_s\) differs
from \(\vSigma_0\) by a matrix of rank at most two plus one additional zero
direction, its limiting covariance ESD is still \(\nu\); consequently the limiting
ESD of \(\vK_{0,s}\) is still \(\mu\).  The multiplicative BBP equation for the
rank-one covariance spike in \eqref{eq:sw_projected_covariance} is therefore the
same as in Lemma~\ref{lem:K0_bulk_and_tau_outlier}, with \(\tau_s\) in place of
\(\tau\); since \(\tau_s\to\tau\), this gives \eqref{eq:sw_covariance_factor}.

It remains to record the anisotropic resolvent limits needed below.  The
right-resolvent deterministic equivalent for the base covariance
\(\vPi_s\vSigma_0\vPi_s\) is unchanged by the rank-one compression, because
\(\widehat\vs\) is asymptotically orthogonal to the deterministic finite-dimensional
directions considered here and the removed subspace has fixed rank.  Adding the
rank-one covariance spike
\(\tau_s\widehat\vu_s\widehat\vu_s^\top\) and applying the Sherman--Morrison
formula yields
\begin{equation}\label{eq:sw_SM_u_resolvent}
        \widehat\vu_s^\top\vQ_{R,0}^{(s)}(\lambda)\widehat\vu_s
        =\frac{m_\mu(\lambda)}{1-\tau T(s(\lambda))}+o_\P(1),
\end{equation}
whereas every unit vector asymptotically orthogonal to \(\widehat\vu_s\) has
quadratic form \(m_\mu(\lambda)+o_\P(1)\), and cross forms with
\(\widehat\vu_s\) are negligible.  The vectors \(\vb_0\), \(\vv_1\), and
\(\widetilde\vv_2\) are asymptotically orthogonal to \(\widehat\vu_s\), giving
\eqref{eq:sw_R_isotropic_orth}.  Furthermore, Lemma~\ref{lem:sw_s_q_moments}
gives the decomposition
\[
        \widehat\vq_s
        =\frac{\kappa}{\sqrt\eta}\widehat\vu_s
        +\sqrt{\omega_q}\,\ve_q+o_\P(1),
        \qquad
        \ve_q\perp\widehat\vu_s,
        \qquad \|\ve_q\|=1,
\]
with \(\ve_q\) asymptotically isotropic for the base resolvent and asymptotically
orthogonal to \(\vb_0,\vv_1,\widetilde\vv_2\).  Combining this decomposition
with \eqref{eq:sw_SM_u_resolvent} gives
\[
\begin{aligned}
        \widehat\vq_s^\top\vQ_{R,0}^{(s)}(\lambda)\widehat\vq_s
        &=\frac{\kappa^2}{\eta}\frac{m_\mu(\lambda)}{1-\tau T(s(\lambda))}
          +\omega_q m_\mu(\lambda)+o_\P(1) \\
        &=m_\mu(\lambda)\left(1+\frac{\kappa^2}{\eta}
          \frac{\tau T(s(\lambda))}{1-\tau T(s(\lambda))}\right)+o_\P(1),
\end{aligned}
\]
which is \eqref{eq:sw_R_q_with_cov}--\eqref{eq:sw_Rq_cov_lemma}.  The same
orthogonal decomposition gives the cross estimate \eqref{eq:sw_R_cross_q}.
\end{proof}

\subsection{Proof of Giant Spike and Bulk Limit}

\begin{proof-of-theorem}[\ref{thm:trained}(i)--(ii)]
The rank-one matrix $\vP_s:=\frac{b_\sigma\theta}{\sqrt N}\va\vs^\top$
has squared singular value
\[
        \|\vP_s\|^2
        =\frac{b_\sigma^2\theta^2}{N}(\va^\top\va)\|\vs\|^2
        =b_\sigma^2\theta_0^2\phi\kappa\sqrt N\,(1+o_\P(1)),
\]
using \(\va^\top\va\to1\), \(\|\vs\|^2/n\to\kappa\), and \(n/N\to\phi\).
All terms in \(\vY_{\rm sw}^\sharp-\vP_s\) have operator norm \(O_\P(1)\).
Thus the top eigenvalue of \((\vY_{\rm sw}^\sharp)^\top\vY_{\rm sw}^\sharp\)
is \(\|\vP_s\|^2+o_\P(\sqrt N)\), and the corresponding eigenvector has overlap
\(1-o_\P(1)\) with \(\widehat\vs\).  The feature error in
Lemma~\ref{lem:sw_QE} changes this eigenvalue by \(o_\P(\sqrt N)\), and
Davis--Kahan Theorem gives \eqref{eq:sw_giant_vec}.  Since
\(\vy^\top\widehat\vs=O_\P(1)\), the normalized label overlap tends to zero.
If \(b_\sigma=0\), \(\vP_s\) is absent and no diverging spike is produced by
this mechanism.

For the bulk, \(\vY_{\rm sw}^\sharp\) differs from the finite-SNR random-weight
surrogate \(\vY_0^\sharp\) by two rank-one feature perturbations in \eqref{eq:sw_Ysp_def}.  Hence the
corresponding Gram matrices differ by a finite-rank matrix.  The error
\(\vY-\vY_{\rm sw}^\sharp\) contributes negligibly to the limiting ESD: its product with
the bounded part has normalized trace norm \(o_\P(1)\), and its product with the
single giant rank-one term has rank at most two.  Since the random-weight
surrogate has limiting ESD \(\mu\) by Theorem~\ref{thm:snr_finite_eig}, so does
\(\vK\).  Compressing by \(\vPi_s\) changes at most one eigenvalue, so the same
bulk limit holds for \(\vK_s\).
\end{proof-of-theorem}

\subsection{Proof of Order-one Outliers and Label Alignment}

Let
\[
        \widehat\vq_s:=\frac{\vPi_s\vq}{\|\vPi_s\vq\|},
        \qquad
        \widetilde\vv_2:=\frac{\vPi_s\vv_2}{\|\vPi_s\vv_2\|}.
\]
By Lemmas~\ref{lem:sw_s_q_moments} and~\ref{lem:sw_left_geometry}, the
finite-rank part of \(\vY_s^\sharp\) can be written, up to an \(o_\P(1)\)
operator-norm error, as
\begin{equation}\label{eq:sw_finite_rank_model}
\vY_s^\sharp
=
\vG\vPi_s
+\sqrt\tau\,\va_0\vb_0^\top
+\sqrt{\beta_{\rm lin}}\,\va_1\vv_1^\top
+\sqrt{\frac{\beta_{\rm lin}}{\kappa}}\,\va_2\widetilde\vv_2^\top
+\sqrt{\beta_q}\,\vh\widehat\vq_s^\top
+o_\P(1),
\end{equation}
where
\[
        \vb_0:=\frac{\vm}{\|\vm\|},
        \qquad
        \va_1:=\frac{\vg_1}{\|\vg_1\|},
        \qquad
        \va_2:=\frac{\vg_2}{\|\vg_2\|}.
\]
The factor \(\beta_{\rm lin}/\kappa\) in the third term is the loss of the
\(\vv_2\)-component caused by deflating the giant \(\vs\)-direction.
Set
\[
        \Theta:=\diag\left(\sqrt\tau,\sqrt{\beta_{\rm lin}},
        \sqrt{\beta_{\rm lin}/\kappa},\sqrt{\beta_q}\right),
\qquad
        \vA:=\left[\va_0,\va_1,\va_2,\vh\right],
        \qquad
        \vB:=\left[\vb_0,\vv_1,\widetilde\vv_2,\widehat\vq_s\right].
\]
Let \(\vK_0^{(s)}:=(\vG\vPi_s)^\top(\vG\vPi_s)\), and let
\(\vQ_R^{(s)}(\lambda):=(\vK_0^{(s)}-\lambda\vI)^{-1}\) on the range of
\(\vPi_s\).  By Lemma~\ref{lem:sw_compressed_covariance}, the compressed null
matrix keeps the covariance spike, now in the direction
\(\widehat\vu_s=\vPi_s\vu/\|\vPi_s\vu\|\) and with strength \(\tau+o_\P(1)\).
Thus, with \(t=T(s(\lambda))=\lambda s(\lambda)m_\mu(\lambda)\),
\begin{equation}\label{eq:sw_right_resolvent_matrix}
\vB^\top\vQ_R^{(s)}(\lambda)\vB
=m_\mu(\lambda)
\diag\left(1,1,1,R_q(t)\right)+o_\P(1),
\end{equation}
where \(R_q(t)\) is given by \eqref{eq:sw_Rq_cov_lemma}.
From the proof of Lemma~\ref{lem:sw_compressed_covariance}, we know that
\(\widehat\vq_s\) has squared overlap \(\kappa^2/\eta\) with the projected
covariance-spike direction \(\widehat\vu_s\), while \(\vb_0\), \(\vv_1\), and
\(\widetilde\vv_2\) are asymptotically orthogonal to it.  Hence only the
quadratic spiked-weight direction sees the covariance pole.
Together with \eqref{eq:sw_left_resolvent_matrix}, the finite-rank determinant
for order-one outliers of \(\vK_s^\sharp\) is
\begin{equation}\label{eq:sw_relative_det}
D_{\rm rel}(\lambda)
=
\det\left(\vI_4-
        t\,\Theta
        \begin{pmatrix}
        1&0&0&\rho\\
        0&1&0&0\\
        0&0&1&0\\
        \rho&0&0&1
        \end{pmatrix}
        \Theta
        \diag(1,1,1,R_q(t))
        \right)+o_\P(1).
\end{equation}
Expanding the determinant gives
\begin{equation}\label{eq:sw_Drel_factor}
        D_{\rm rel}(\lambda)
        =\left(1-\beta_{\rm lin}t\right)
        \left(1-\frac{\beta_{\rm lin}}{\kappa}t\right)
        \left[(1-\tau t)(1-\beta_qR_q(t)t)-\rho^2\tau\beta_qR_q(t)t^2\right]
        +o_\P(1).
\end{equation}
The denominator \(1-\tau t\) in \(R_q(t)\) is the pole of the covariance outlier
already present in the base matrix \(\vK_0^{(s)}\), not an additional root.  As
in the collision-safe argument for the uninformative cluster in
Appendix~\ref{app:proof_finite_snr}, the characteristic determinant is obtained
by multiplying by the covariance factor \(1-\tau t\).  Using \(\rho^2=1/3\),
\begin{align}\label{eq:sw_full_factorization}
(1-\tau t)D_{\rm rel}(\lambda)
&=\left(1-\beta_{\rm lin}t\right)
  \left(1-\frac{\beta_{\rm lin}}{\kappa}t\right)
  F_q(t)+o_\P(1),
\end{align}
where \(F_q\) is exactly \eqref{eq:sw_Fq_t}.  This is the desired simplification
of the augmented determinant.

\begin{proof-of-theorem}[\ref{thm:trained}(iii)]
Choose a contour in $\C$ enclosing one candidate location \(\lambda_\star\) and disjoint
from $\supp{\mu}\cup\{0\}$ and from all other candidate locations from \eqref{eq:sw_order_one_factors}.  On this
contour, \eqref{eq:sw_full_factorization} is a uniform approximation to the
finite-dimensional determinant.  Rouch\'e's theorem gives the same number of
zeros as the deterministic product in \eqref{eq:sw_full_factorization}, counted
with algebraic multiplicity.  These zeros are equivalent to eigenvalues of the
linearized matrix, hence to eigenvalues of \(\vK_s^\sharp\), outside the bulk.
Lemma~\ref{lem:sw_QE} transfers the conclusion to \(\vK_s\).  The admissibility
condition \(z'(s)>0\) is the usual BBP threshold from
Lemma~\ref{lem:additive_scalar_eq}; roots on branches with \(z'(s)\le0\) do not
separate from the limiting support $\mu$.   
\end{proof-of-theorem}

\begin{proof-of-theorem}[\ref{thm:trained}(iv)]
We now compute the resolvent in the label direction.  Since
\(\vy^\top\widehat\vs=O_\P(1)\), replacing \(\vy/\sqrt n\) by
\(\vPi_s\vy/\|\vPi_s\vy\|\) changes normalized overlaps by \(o_\P(1)\).  The
linear spike directions \(\vv_1\) and \(\widetilde\vv_2\) are asymptotically
orthogonal to \(\vy/\sqrt n\), and the same is true of the mean direction
\(\vb_0\).  More precisely, uniformly for \(\lambda\) on any fixed contour away
from \(\supp\mu\cup\{0\}\),
\begin{equation}\label{eq:sw_label_cross_vector}
        \frac1{\sqrt n}\vB^\top\vQ_R^{(s)}(\lambda)\vy
        =m_\mu(\lambda)\sqrt{\chi_y}\,\ve_4+o_\P(1).
\end{equation}
The possible covariance pole in \(\vQ_R^{(s)}\) does not change this vector,
because \(\vy\perp\vu\).  Thus the first three right coordinates
\(\vb_0,\vv_1,\widetilde\vv_2\) cannot create a pole in the label resolvent.  In
particular, the roots of the two linear factors have zero label residue, and a
pure mean-spike cluster has zero label residue.  Once \(\beta_q>0\), however,
the mean/covariance cluster is no longer a separate pure \(\vb_0\)-cluster;
it is absorbed into the coupled factor \(F_q\).  Any label mass at such a shifted
location is caused by its mixing with the quadratic direction \(\widehat\vq_s\),
not by \(\vb_0\) itself.

The Woodbury formula for the finite-rank model \eqref{eq:sw_finite_rank_model},
combined with \eqref{eq:sw_left_resolvent_matrix},
\eqref{eq:sw_right_resolvent_matrix}, and \eqref{eq:sw_label_cross_vector}, gives
uniformly away from the bulk
\begin{equation}\label{eq:sw_y_resolvent}
\frac1n\vy^\top(\vK_s^\sharp-\lambda\vI)^{-1}\vy
=
H_y(\lambda)
+
 m_\mu(\lambda)
 \frac{\beta_q\chi_y\,t\,(1-\tau t)
 \left(1-\frac23\tau t\right)}{F_q(t)}
+o_\P(1),
\end{equation}
where \(H_y\) is analytic in any neighborhood not intersecting
\(\supp\mu\cup\{0\}\), and \(t=T(s(\lambda))\).  The absence of the factors
\(1-\beta_{\rm lin}t\) and \(1-\beta_{\rm lin}t/\kappa\) from the denominator in
\eqref{eq:sw_y_resolvent} is the desired zero-alignment statement for the
\(\vv_1\)- and \(\widetilde\vv_2\)-outliers.  The absence of a separate
\(1-\tau t\) denominator is the corresponding zero-alignment statement for a
pure mean/covariance cluster.  The only non-analytic denominator relevant to the
label is \(F_q(t)\).

To see the numerator in \eqref{eq:sw_y_resolvent}, restrict the Woodbury
correction to the coupled coordinates \((\va_0,\vh)\).  The relevant
\(2\times2\) inverse has determinant \(F_q(t)/(1-\tau t)\), and its
\((2,2)\)-entry contributes
\(\beta_q[1-(2/3)\tau t](1-\tau t)/F_q(t)\).  Multiplication by the squared right
overlap \(\chi_y\) gives the singular term.

Let \(\Gamma_\star\) be a positively oriented contour around \(I_\star\).  The
spectral projector satisfies
\[
        \frac1n\vy^\top\widehat\vP_\star^{(s)}\vy
        =-\frac1{2\pi i}\oint_{\Gamma_\star}
        \frac1n\vy^\top(\vK_s-\lambda\vI)^{-1}\vy\,d\lambda .
\]
The approximation \(\|\vK_s-\vK_s^\sharp\|=o_\P(1)\) allows us to replace
\(\vK_s\) by \(\vK_s^\sharp\).  The analytic term \(H_y\) integrates to zero.
Hence, if the contour contains no root of \(F_q(T(s))\), the limit is zero.
If the contour contains a single simple root \(s_\star\) of \(F_q(T(s))\), then
\[
        \frac{d}{d\lambda}F_q(T(s(\lambda)))\Big|_{\lambda=z(s_\star)}
        =F_q'(T(s_\star))\frac{T'(s_\star)}{z'(s_\star)}.
\]
Taking minus the residue of \eqref{eq:sw_y_resolvent} gives
\begin{equation}\label{eq:sw_Gamma}
        \Gamma_{\rm sw}(s)
        :=-\frac{m_\mu(z(s))z'(s)}{T'(s)}
        \frac{\beta_q\chi_y\,T(s)\,(1-\tau T(s))
        \left(1-\frac23\tau T(s)\right)}{F_q'(T(s))} .
\end{equation}  
If a contour encloses several simple \(F_q\)-roots, the
same calculation sums their residues.  This completes the proof of
Theorem~\ref{thm:trained}.
\end{proof-of-theorem}
\section{Proof for Quadratic Sample-size Regime}
\label{sec:quadratic-regime-proof}
In this section we prove Theorem~\ref{thm:quadratic} for the CK matrix in the
quadratic sample-size $n\asymp d^2$ and linear-width $N\asymp n$ regime.
The CK matrix is a sample covariance matrix whose population covariance is a
quadratic polynomial kernel.  This produces two successive BBP transitions:
one in the quadratic kernel of the data governed by
the aspect ratio $\gamma$, and one from the linear-width sample covariance
model governed by $\phi$.
Throughout this section we consider quadratic asymptotic limit of the sample size:
\begin{equation}\label{eq:quadratic_limit}
        p:=\frac{d(d+1)}2,
    \qquad \frac np\to \gamma\in(0,\infty),
    \qquad r \in[0,\infty).
\end{equation}
Set
\begin{equation}\label{eq:right_edge_MP}
     \delta:=\frac{r^2}{2},
    \qquad
    \ell:=\delta^2=\frac{r^4}{4},
    \qquad
    \lambda_+(\gamma):=(1+\sqrt\gamma)^2.    
\end{equation}
   
Let \(h_k\) be the orthonormal Hermite polynomials and define
\[
    a_k:=\E[\sigma(\xi)h_k(\xi)],\qquad \xi\sim\cN(0,1).
\]
Thus
$
    a_1=b_\sigma,
$ 
$
    a_2=\frac{c_\sigma}{\sqrt2},
$
$
    \alpha_2:=a_2^2=\frac{c_\sigma^2}{2},
$ and $
    \alpha_0:=1-a_1^2-a_2^2.
$

\paragraph{Quadratic lift.}
Let \(\mathrm{svec}:\mathbb S^d\to\R^p\) denote the Frobenius-isometric
symmetric vectorization.  Define
\begin{equation}\label{eq:def_q_i_Q}
        \mbi{q}_i:=\mathrm{svec}(\mbi{x}_i\mbi{x}_i^\top),
    \qquad
    \mbi{Q}:=[\mbi{q}_1,\ldots,\mbi{q}_n]\in\R^{p\times n}.
\end{equation}
Then
\begin{equation}\label{eq:def_H}
        \mbi{H}:=(\mbi{X}^\top\mbi{X})^{\odot2}=\mbi{Q}^\top\mbi{Q}.
\end{equation}
For the XOR model, write \(\mbi{x}_i=\mbi{z}_i+\mbi{m}_i\), where
\(\mbi{z}_i\sim\cN(0,\mbi{I}_d/d)\).  Define the second- and first-chaos
lifted matrices \(\mbi{E}_2,\mbi{E}_1\in\R^{p\times n}\) by their $i$-th columns, for $i\in[n]$,
\begin{equation}\label{eq:def_E_1_iE_2_i}
    (\mbi{E}_2)_i
    :=\mathrm{svec}\!\left(\mbi{z}_i\mbi{z}_i^\top-\frac1d\mbi{I}_d\right),
    \qquad
    (\mbi{E}_1)_i
    :=\mathrm{svec}\!\left(\mbi{z}_i\mbi{m}_i^\top+\mbi{m}_i\mbi{z}_i^\top\right).
\end{equation}
Finally set
\begin{equation}\label{eq:def_s_v}
    \mbi{s}:=\frac1{\sqrt2}\mathrm{svec}
    \left(\mbi{u}_1\mbi{u}_1^\top-\mbi{u}_2\mbi{u}_2^\top\right),
    \qquad \|\mbi{s}\|=1,
    \qquad
    \mbi{v}:=\frac{\mbi{y}}{\sqrt n}.
\end{equation}
The unprojected rank-one second-chaos spike is
\begin{equation}\label{eq:def_Y_0}
    \mbi{Y}_0:=\mbi{E}_2+\frac{\delta}{\sqrt p}\mbi{s}\mbi{y}^\top
    =\mbi{E}_2+\theta_n\mbi{s}\mbi{v}^\top,
    \qquad
    \theta_n:=\delta\sqrt{\frac np}\to \theta:=\delta\sqrt\gamma.
\end{equation}

\paragraph{Projected nuisance space.}
For \(s_i=\|\mbi{x}_i\|\), define
\begin{equation}\label{eq:mu-A-nuisance-def_diag}
    \zeta_k^{(i)}:=\E[\sigma(s_i\xi)h_k(\xi)],
    \qquad
    \mbi{D}_k:=\diag(\zeta_k^{(1)},\ldots,\zeta_k^{(n)}),
    \qquad
    \mbi{D}:=\diag(s_1,\ldots,s_n),        
\end{equation}
\begin{equation}\label{eq:mu-A-nuisance-def}
    \mbi{\mu}_\sigma:=(\zeta_0^{(1)},\ldots,\zeta_0^{(n)})^\top,
    \qquad
    \mbi{A}_\sigma:=\diag\left(\frac{\zeta_1^{(i)}}{s_i}\right)_{i=1}^n,
    \qquad
    \mbi{B}_{\sigma}:=\diag\left(\frac{\zeta_3^{(i)}}{s_i}\right)_{i=1}^n.
\end{equation}
We define the BBP-projected CK nuisance space by
\begin{equation}
\label{eq:U-sigma-plus-bbp}
    \mathcal U_\sigma^+
    :=
    \operatorname{span}\{\mbi{1},\mbi{\mu}_\sigma\}
    +\operatorname{Range}(\mbi{A}_\sigma\mbi{X}^\top)
    +\operatorname{Range}(\mbi{B}_{\sigma}\mbi{X}^\top)
    +\operatorname{Range}(\mbi{E}_1^\top).
\end{equation}
Finally, we define the projection by
\begin{equation}
\label{eq:P-sigma-plus-bbp}
    \mbi{P}_\sigma
    :=\operatorname{Proj}\big((\mathcal U_\sigma^+)^\perp\big).
\end{equation}

\paragraph{Limiting spectra and transforms.} Define the limiting population law
\begin{equation}\label{eq:nuq-def}
        \nu_{\rm q}
        :=\alpha_0+\alpha_2 \rho_\gamma^{\mathrm{MP}},
\end{equation}
that is, the push-forward of the MP law $\rho_\gamma^{\mathrm{MP}}$ by
$x\mapsto\alpha_0+\alpha_2x$.  Let
\begin{equation}\label{eq:muq-def}
        \mu_{\rm q}
        :=\rho_\phi^{\mathrm{MP}}\boxtimes\nu_{\rm q}.
\end{equation}
Slightly different from \eqref{eq:z(s)}, for this pair $(\nu_{\rm q},\phi)$ we define the
$z$-transform and $\varphi$-transform as
\begin{align}
        z(s)
        &:=-\frac1s+
        \phi\int\frac{t}{1+ts}\,\nu_{\rm q}(dt),\qquad \varphi(s):=-\frac{s z'(s)}{z(s)}.
        \label{eq:zq-def}
\end{align}
These are the transforms in \eqref{eq:z(s)} with $\nu$
replaced by $\nu_{\rm q}$ in \eqref{eq:nuq-def}.  The upper edge of $\nu_{\rm q}$
is $\tau_+^{\nu}=\alpha_0+\alpha_2\lambda_+(\gamma).$

Recall parameters in \eqref{eq:quadratic_limit} and \eqref{eq:right_edge_MP}.
For later use, define the BBP location and two maps
\begin{align}
        \Lambda_y 
   & :=\alpha_0+\alpha_2\lambda_{\rm out}(\gamma,\ell), \label{eq:lambda-out-def}\\
        \lambda_{\mathrm{out}}(\gamma,\ell)
        &:=1+\gamma+\gamma\ell+\frac1\ell, \qquad
        \mathrm{Align}(\gamma,
    \ell):=\frac{\gamma\ell^2-1}{\gamma\ell(\ell+1)}.
        \label{eq:Align-def}
\end{align}

\subsection{The Projection Preserves the XOR Label}
\label{sec:quadratic-deflation}

\begin{lemma}[The projection preserves the XOR label]
\label{lem:Psigma-preserves-label}
Under the assumptions of Theorem~\ref{thm:quadratic}, let
\(\mbi{P}_\sigma\) be defined in \eqref{eq:P-sigma-plus-bbp}.
Then
\begin{equation}\label{eq:y-projection-preserved}
          \rank{\vI_n-\mbi{P}_\sigma}=o(n),
        \qquad
        \frac1n\vy^\top\mbi{P}_\sigma\vy\xrightarrow{\P}1.
\end{equation}
\end{lemma}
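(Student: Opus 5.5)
Both claims reduce to properties of the nuisance space $\mathcal U_\sigma^+$ in \eqref{eq:U-sigma-plus-bbp}. For the rank, we bound $\dim\mathcal U_\sigma^+$ summand by summand. The span of $\mathbf 1,\vmu_\sigma$ contributes at most $2$, and each of $\operatorname{Range}(\vA_\sigma\vX^\top)$ and $\operatorname{Range}(\vB_\sigma\vX^\top)$ contributes at most $d$. The only delicate summand is $\operatorname{Range}(\vE_1^\top)$. Each column $(\vE_1)_i=\mathrm{svec}(\vz_i\vm_i^\top+\vm_i\vz_i^\top)$ has $\vm_i\in\{\pm\tfrac{r}{\sqrt d}\vu_1,\pm\tfrac{r}{\sqrt d}\vu_2\}$, so it lies in $\{\mathrm{svec}(\vw\vu_k^\top+\vu_k\vw^\top):\vw\in\R^d,\,k=1,2\}$. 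That subspace has dimension at most $2d$, so $\rank{\vE_1}\le 2d$. Altogether $\dim\mathcal U_\sigma^+\le 4d+2=O(\sqrt n)=o(n)$ because $n\asymp d^2$, and this gives $\rank{\vI_n-\vP_\sigma}=o(n)$.

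For the label, note that $\frac1n\vy^\top\vP_\sigma\vy=1-\|\vPi_{\mathcal U}\vv\|^2$ with $\vv=\vy/\sqrt n$. It therefore suffices to show $\|\vPi_{\mathcal U}\vv\|\to0$ in probability. Since $\mathcal U_\sigma^+$ is a sum of a few subspaces, we bound the projection onto the sum through a Gram argument. Let $\vF=[\mathbf 1,\vmu_\sigma,\vA_\sigma\vX^\top,\vB_\sigma\vX^\top,\vE_1^\top]$. Then $\|\vPi_{\mathcal U}\vv\|^2=\vv^\top\vF(\vF^\top\vF)^{+}\vF^\top\vv\le \|\vF^\top\vv\|^2/\sigma_{\min}^+(\vF)^2$. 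This reduces the claim to two estimates:
\begin{enumerate}[label=(\alph*),leftmargin=*]
\item $\|\vF^\top\vv\|=o_\P(1)$ after suitable column normalization;
\item the smallest nonzero singular value of the normalized $\vF$ is bounded below.
\end{enumerate}

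For (a) we compute each block against $\vv$.
\begin{itemize}
\item $\mathbf 1^\top\vv=0$ exactly.
\item $\vmu_\sigma^\top\vv$: the entries $\zeta_0^{(i)}=F(s_i^2)$ depend only on $\|\vx_i\|^2$. Their law is identical across the two classes, because $\|\vm_i\|$ is the same for all clusters. Hence $\vmu_\sigma^\top\vv$ is a centered sum of order $O_\P(\sqrt n\cdot d^{-1/2}/\sqrt n)$, and normalizing by $\|\vmu_\sigma\|\asymp\sqrt{n/d}$ gives $o_\P(1)$, as in Lemma~\ref{lem:mean_strength_orth}.
\item $\vX\vA_\sigma\vv=\sum_i(\zeta_1^{(i)}/s_i)\vx_i v_i$ (and similarly for $\vB_\sigma$): the mean parts cancel because class $+1$ contains $\pm\vmu_1$ and class $-1$ contains $\pm\vmu_2$, each symmetric. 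The noise part is $O_\P(\sqrt{d/d})$ in norm. The columns of $\vA_\sigma\vX^\top$ have norm $\asymp\sqrt{n/d}$, so the normalized projection is $O(\sqrt{d/n})\to0$.
\item $\vE_1\vv=\sum_i v_i\,\mathrm{svec}(\vz_i\vm_i^\top+\vm_i\vz_i^\top)$: this is a sum of independent mean-zero terms with total squared norm $O(r^2/d)$, against column scale $\asymp\sqrt{n}\,r/d$.
\end{itemize}

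For (b), the step I expect to be the main obstacle, the cleanest route is to avoid singular-value bounds on $\vF$ altogether. Instead I would write $\|\vPi_{\mathcal U}\vv\|^2\le \sum_j\|\vPi_{\mathcal U_j}\vv\|^2$ plus cross terms controlled by near-orthogonality of the subspaces, and bound each $\|\vPi_{\mathcal U_j}\vv\|^2=\vv^\top\vF_j(\vF_j^\top\vF_j)^{-1}\vF_j^\top\vv$ separately. For $\vF_j=\vX^\top$-type blocks, $\vF_j^\top\vF_j\approx\frac nd\vI_d$ plus a rank-two mean part by Marchenko–Pastur with $d\ll n$. Combined with (a), this gives $\|\vPi_{\mathcal U_j}\vv\|^2\le \frac dn\|\vF_j^\top\vv\|^2(1+o(1))=O_\P(d/n)$. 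The same holds for $\vE_1^\top$ after writing $\vE_1^\top=\sum_k \vD_k\vZ^\top$-type factors with diagonal sign/cluster weights. Since there are finitely many blocks of total dimension $O(d)$, and each carries $O_\P(d/n)$ label mass, a crude bound via the operator norm of the Gram matrix of the concatenated normalized blocks suffices: I would show its smallest eigenvalue is bounded away from zero using that the blocks are at most $5d$ columns of nearly independent Gaussian-type vectors in $\R^n$ with $n\gg d$. This yields $\|\vPi_{\mathcal U}\vv\|^2=O_\P(d/n)+o_\P(1)\to0$, and hence \eqref{eq:y-projection-preserved}.
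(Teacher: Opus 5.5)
Your rank bound is correct, and it is the paper's argument: the paper shows $\operatorname{Range}(\vE_1^\top)\subseteq\operatorname{Range}(\vC_E)$ for an explicit $n\times 2d$ matrix $\vC_E$, which is the same observation you make. The label-preservation part, however, has a genuine gap in step (b).

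\textbf{Why (b) fails.} By Lemma~\ref{lem:xor_orthonormal}, $\max_i|s_i-1|=O_\P(\sqrt{\log n/d})$. Hence, in operator norm, $\vA_\sigma=a_1\vI_n+o_\P(1)$ and $\vB_\sigma=a_3\vI_n+o_\P(1)$. Whenever $a_1a_3\neq0$, which the theorem allows since only $c_\sigma\neq0$ is assumed, the blocks $\vA_\sigma\vX^\top$ and $\vB_\sigma\vX^\top$ span nearly the same $d$-dimensional subspace.
\begin{itemize}
\item After column normalization, the difference of the $j$-th normalized columns of the two blocks is a multiple of $\diag(\delta(s_i))\vX^\top e_j$, where $\delta(s)=O(|s-1|)$. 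The normalized Gram matrix of the concatenation therefore has smallest eigenvalue $O_\P(1/d)$, not bounded below.
\item Meanwhile the $\vA_\sigma$-block alone contributes $\|\widetilde{\vF}^\top\vv\|^2\asymp d/n$. So the ratio bound $\|\widetilde{\vF}^\top\vv\|^2/\sigma_{\min}^+(\widetilde{\vF})^2$ gives at best $d^2/n\asymp1$, which is vacuous.
\item Your fallback (sum of block projections plus cross terms) needs near-orthogonality, and that fails for exactly this pair. The sum of two nearly equal subspaces contains the difference directions $\diag(\delta(s_i))\vX^\top$. Neither single-block projection controls their label mass.
\end{itemize}
The claim that the blocks are ``nearly independent Gaussian-type columns'' is therefore false, and the argument as written does not show $\|\vPi_{\mathcal U}\vv\|\to0$.

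\textbf{Two ways to repair it.}
\begin{itemize}
\item \emph{Reparametrize within your framework.} Set $\delta:=b-(a_3/a_1)a$, where $a(s)=\zeta_1(s)/s$ and $b(s)=\zeta_3(s)/s$. Replace $\vB_\sigma\vX^\top$ by $\sqrt d\,\diag(\delta(s_i))\vX^\top$; this leaves the sum of subspaces unchanged. Then verify three things for the new block: it is well conditioned; it is nearly orthogonal to the $\vA_\sigma\vX^\top$ block and the other blocks; and it carries label mass $O_\P(d/n)$. The mean of $\sum_i y_i\delta(s_i)\vx_i$ again cancels by the $\pm\vm$ symmetry. The degenerate cases $a_1=0$ and $\delta'(1)=0$ need separate handling.
\item \emph{Use the paper's basis-free route,} which is insensitive to this ill-conditioning. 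Whiten by the population covariance $\vSigma_F$ of the centered rows of $\vF$. Show $\E[\vF^\top\vy]=\vzero$ by the XOR symmetries. Prove the relative concentration $\|\vSigma_F^{\dagger/2}(\tfrac1n\vF^\top\vP\vF-\vSigma_F)\vSigma_F^{\dagger/2}\|=o_\P(1)$, where $\vP=\vI_n-\tfrac1n\vone\vone^\top$; this gives $(\vF^\top\vP\vF)^\dagger\preceq\tfrac2n\vSigma_F^\dagger$. Finally use the trace bound $\E[(\vF^\top\vy)^\top\vSigma_F^\dagger\vF^\top\vy]\le n\,\rank{\vSigma_F}\le n(4d+1)$. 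This yields $\|\vPi_{\mathcal U}\vy\|^2=O_\P(d)$ however ill-conditioned $\vSigma_F$ is.
\end{itemize}
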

\begin{proof}
We prove the rank bound first.  Clearly, $\dim\operatorname{span}\{\vone,\vmu_\sigma\}\le 2,$
and $\rank{\vA_\sigma\vX^\top}\le d,$ $\rank{\vB_{3,\sigma}\vX^\top}\le d.$
It remains to control \(\rank{\vE_1^\top}\). Recall that $ (\vE_1)_i
        =
        \mathrm{svec}(\vz_i\vm_i^\top+\vm_i\vz_i^\top),$
and in the XOR model $\vm_i
        =
        \eta_i\frac r{\sqrt d}\vu_{\kappa_i},$  $ \eta_i\in\{\pm1\},$ and $ \kappa_i\in\{1,2\}.$
For \(a\in\{1,2\}\), define the \(n\times d\) matrix \(\vZ_a^{\rm row}\) by
\[
        (\vZ_a^{\rm row})_{i\cdot}
        :=
        \mathbf 1_{\{\kappa_i=a\}}\eta_i\frac r{\sqrt d}\vz_i^\top .
\]
Let $\vC_E:=
        \begin{bmatrix}
        \vZ_1^{\rm row} & \vZ_2^{\rm row}
        \end{bmatrix}
        \in\mathbb R^{n\times 2d}.$
We claim that
$
        \operatorname{Range}(\vE_1^\top)\subseteq \operatorname{Range}(\vC_E).
$
For any \(\vw\in\mathbb R^p\), let \(\vW \) be the
symmetric matrix such that \(\vw=\mathrm{svec}(\vW)\).  Then the \(i\)-th entry
of \(\vE_1^\top\vw\) is
\[
\begin{aligned}
        [\vE_1^\top\vw]_i
        &=
        \left\langle
        \mathrm{svec}(\vz_i\vm_i^\top+\vm_i\vz_i^\top),
        \mathrm{svec}(\vW)
        \right\rangle                                                  \\
        &=
        \left\langle
        \vz_i\vm_i^\top+\vm_i\vz_i^\top,
        \vW
        \right\rangle_F                                                \\
        &=
        2\vz_i^\top\vW\vm_i=
        \mathbf 1_{\{\kappa_i=1\}}\eta_i\frac r{\sqrt d}
        \bigl(2\vW\vu_1\bigr)^\top\vz_i
        +
        \mathbf 1_{\{\kappa_i=2\}}\eta_i\frac r{\sqrt d}
        \bigl(2\vW\vu_2\bigr)^\top\vz_i .
\end{aligned}
\]
Thus \(\vE_1^\top\vw\) belongs to the column span of \(\vC_E\).  Therefore
$\rank{\vE_1^\top}\le \rank{\vC_E}\le 2d.$
Then, because \(n\asymp d^2\), we have
\[
        \rank{\vI_n-\mbi{P}_\sigma}
        =
        \dim(\mathcal U_\sigma^+)
        \le 2+d+d+2d
        =
        4d+2
        =
        o(n).
\]

We now prove the label-preservation statement.  It is convenient to work with
a slightly larger nuisance space.  Define the \(n\times q\) matrix
\[
        \vF:=
        \begin{bmatrix}
        \vmu_\sigma &
        \vA_\sigma\vX^\top &
        \vB_{3,\sigma}\vX^\top &
        \vC_E
        \end{bmatrix},
        \qquad
        q\le 1+d+d+2d=4d+1.
\]
Then $\mathcal U_\sigma^+
        \subseteq
        \operatorname{span}\{\vone\}+\operatorname{Range}(\vF).$
Let $\vP:=\vI_n-\frac1n\vone\vone^\top.$
Since \(\vy^\top\vone=0\), we have
\[
        \|\Pi_{\mathcal U_\sigma^+}\vy\|_2^2
        \le
        \left\|
        \Pi_{\operatorname{span}\{\vone\}+\operatorname{Range}(\vF)}
        \vy
        \right\|_2^2 = \|\Pi_{\operatorname{Range}(\vP\vF)}\vy\|_2^2.
\]
Therefore
\begin{equation}\label{eq:Pi_y_bound}
 \|\Pi_{\mathcal U_\sigma^+}\vy\|_2^2
        \le
        (\vF^\top\vy)^\top
        (\vF^\top\vP\vF)^\dagger
        (\vF^\top\vy).
\end{equation} 
Let the \(i\)-th row of \(\vF\) be \(\vf_i^\top\in\mathbb R^q\).  Define   $\bar{\vf}
        :=
        \frac1n\sum_{i=1}^n\E\vf_i,$
and $ \vSigma_F
        :=
        \frac1n
        \sum_{i=1}^n
        \E\bigl[
        (\vf_i-\bar{\vf})(\vf_i-\bar{\vf})^\top
        \bigr].$

We first prove that $ \sum_{i=1}^n y_i\E\vf_i=\vzero .$
Firstly, the first coordinate of \(\vf_i\) is
$
        \zeta_0^{(i)}
        =
        \E[\sigma(s_i\xi)],
$
whose expectation depends only on \(\|\vm_i\|=r/\sqrt d\), the same for all
clusters.  Since \(\sum_i y_i=0\), its label-weighted expectation vanishes.
For the two linear blocks in \(\vf_i\), write $a(s):=\frac{\zeta_1(s)}s,$ and
$b(s):=\frac{\zeta_3(s)}s.$
By rotational symmetry of the Gaussian noise, for any radial scalar function
\(h\),
$
        \E\bigl[h(\|\vz+\vm\|)(\vz+\vm)\bigr]
        =
        \kappa_h\,\vm
$
for some scalar \(\kappa_h\) depending only on \(\|\vm\|\). Therefore
\begin{align}
        \sum_i y_i
        \E\bigl[a(s_i)\vx_i\bigr]
        =
        \kappa_a\sum_i y_i\vm_i
        =
        \vzero,\\
         \sum_i y_i
        \E\bigl[b(s_i)\vx_i\bigr]
        =
        \kappa_b\sum_i y_i\vm_i
        =
        \vzero.
\end{align}
Here we used the XOR symmetry: the two signs on each axis are balanced, so $\sum_i y_i\vm_i=\vzero.$ Finally, the \(\vC_E\) block has zero mean because it is linear in \(\vz_i\), hence $ \E\left[
        \mathbf 1_{\{\kappa_i=a\}}\eta_i\frac r{\sqrt d}\vz_i
        \right]
        =
        \vzero.$
Thus $\sum_i y_i\E\vf_i=\vzero.$
Since \(\vy^\top\vone=0\), we have
$
        \vF^\top\vy
        =
        \sum_{i=1}^n y_i(\vf_i-\bar{\vf}).
$
The previous population orthogonality gives 
\begin{equation}\label{eq:Fy=0}
        \E[\vF^\top\vy]=\vzero.
\end{equation}

We next use sample-covariance concentration bound for
independent non-identically distributed rows in dimension
\(q=o(n)\).  The rows \(\vf_i\) are smooth
finite-dimensional functions of the Gaussian vectors \(\vz_i\), with uniformly
bounded derivatives on the event \(\max_i|s_i-1|=o(1)\), and the remaining
tails are Gaussian. 
We now prove that
\begin{equation}
\label{eq:F-relative-concentration}
        \left\|
        \vSigma_F^{\dagger/2}
        \left(
        \frac1n\vF^\top\vP\vF-\vSigma_F
        \right)
        \vSigma_F^{\dagger/2}
        \right\|
        =
        o_{\P}(1).
\end{equation}
Consider an orthonormal basis \(\vU\in\R^{n\times r_{F}}\)
of \(\operatorname{supp}(\vSigma_F)\) where $r_{F}=\rank{\vSigma_F}$, and define
\[
        \widetilde{\vSigma}_F:=\vU^\top\vSigma_F\vU,
        \qquad
        \vg_i:=
        \widetilde{\vSigma}_F^{-1/2}
        \vU^\top(\vf_i-\bar{\vf}),
\]
then $\frac1n\sum_{i=1}^n\E[\vg_i\vg_i^\top]=\vI.$ Let $\vG\in\R^{n\times r_{F}}$ be the matrix whose $i$-th row is $\vg_i$ . Then $$\left\|
        \vSigma_F^{\dagger/2}
        \left(
        \frac1n\vF^\top\vP\vF-\vSigma_F
        \right)
        \vSigma_F^{\dagger/2}
        \right\|=\left\|\frac1n \vG^\top\vP\vG-\vI_{r_{F}}\right\|, \quad\text{ and }\quad \frac1n\vP\vG^\top\vP\vG=\frac1n\sum_{i=1}^n\vg_i\vg_i^\top-\bar\vg\bar\vg^\top$$
where $\bar\vg=\frac1n\sum_{i=1}^n\vg_i$. Hence to prove \eqref{eq:F-relative-concentration}, we only need to show that
\begin{equation}\label{eq:F-relative-concentration2}
\left\|\frac1n \sum_{i=1}^n\vg_i\vg_i^\top-\vI_{r_{F}}\right\|=o_{\P}(1)
\end{equation}
and $\|\bar\vg\|=o_{\P}(1)$. The second one is derict: since $\sum_{i=1}^n\E[\vg_i]$, $\E[\|\bar\vg\|^2]\le \frac{1}{n^2}\sum_{i=1}^n\Tr\E[\vg_i\vg_i^\top]=O(1/d)$, and then we can apply Markov's inequality to conclude that $\|\bar\vg\|=o_{\P}(1)$. For the first part, we can apply Proposition 2.6 of \cite{nourdin2009universality} for the Gaussian-chaos tail bound of \(F\) and Theorem 6.1 of \cite{vershynin2012close} for sample covariance concentration inequality to conclude \eqref{eq:F-relative-concentration2}. 

Therefore, with probability tending to one, \eqref{eq:F-relative-concentration} imppiles that $\vF^\top\vP\vF
        \succeq
        \frac n2\,\vSigma_F$ on $\operatorname{supp}(\vSigma_F)$
and hence $ (\vF^\top\vP\vF)^\dagger
        \preceq
        \frac2n\vSigma_F^\dagger.$
From \eqref{eq:Pi_y_bound}, it follows that, with probability tending to one,
\begin{align}
        \|\Pi_{\mathcal U_\sigma^+}\vy\|_2^2
        &\le
        (\vF^\top\vy)^\top
        (\vF^\top\vP\vF)^\dagger
        (\vF^\top\vy)                                      
        \le
        \frac2n
        (\vF^\top\vy)^\top
        \vSigma_F^\dagger
        (\vF^\top\vy).\label{eq:Pi_y_bound1}
\end{align}
It remains to bound the whitened score.  Since the rows are independent and
\eqref{eq:Fy=0}, we have
\begin{align}
        &\E\left[
        (\vF^\top\vy)^\top
        \vSigma_F^\dagger
        (\vF^\top\vy)
        \right]                                              
        =
        \Tr\left[
        \vSigma_F^\dagger
        \Cov(\vF^\top\vy)
        \right]                                                  
        =
        \Tr\left[
        \vSigma_F^\dagger
        \sum_{i=1}^n\Cov(\vf_i)
        \right].\label{eq:expectation_Fy_quad}
\end{align}
Moreover, since $\E[(\vf_i-\bar\vf)(\vf_i-\bar\vf)^\top]=\Cov(\vf_i)+(\E[\vf_i]-\bar\vf)(\E[\vf_i]-\bar\vf)^\top$, we have that
\[
        \sum_{i=1}^n\Cov(\vf_i)
        \preceq
        \sum_{i=1}^n
        \E\bigl[
        (\vf_i-\bar{\vf})(\vf_i-\bar{\vf})^\top
        \bigr]
        =
        n\vSigma_F.
\]
Therefore, from \eqref{eq:expectation_Fy_quad}, we have
$
        \E\left[
        (\vF^\top\vy)^\top
        \vSigma_F^\dagger
        (\vF^\top\vy)
        \right]
        \le
        n\,\rank{\vSigma_F}
        \le
        n(4d+1).
$
By Markov's inequality,
$
        (\vF^\top\vy)^\top
        \vSigma_F^\dagger
        (\vF^\top\vy)
        =
        O_{\P}(nd).
$
Consequently, by \eqref{eq:Pi_y_bound1}
$ 
\|\Pi_{\mathcal U_\sigma^+}\vy\|_2^2
        =
        O_{\P}(d).
$
Finally, 
$
\vy^\top\mbi{P}_\sigma\vy
        =
        \|\vy\|_2^2-\|\Pi_{\mathcal U_\sigma^+}\vy\|_2^2
        =
        n-O_{\P}(d),
$
and hence we complete the proof.
\end{proof}

\subsection{QE of the Population Covariance Matrix}
\label{sec:quadratic-QE}

Let $\mbi{w}\sim\cN(\mbi{0},\mbi{I}_d)$ and define the population CK matrix
\[
\mbi{\Phi}(\mbi{X}) \;:=\;\E_{\mbi{w}}\big[\sigma(\mbi{w}^\top\mbi{X})^\top\sigma(\mbi{w}^\top\mbi{X})\mid \mbi{X}\big]
\in\R^{n\times n},
\qquad
\mbi{\Phi}(\mbi{X})_{ij}=\E_{\mbi{w}}[\sigma(\mbi{w}^\top\mbi{x}_i)\sigma(\mbi{w}^\top\mbi{x}_j)\mid \mbi{X}].
\]
The following lemmas derive the QE of the population kernel $\mbi{\Phi}(\mbi{X})$
after the projection $\mbi{P}_\sigma$ defined in \eqref{eq:P-sigma-plus-bbp}.

\begin{lemma}[Expansion of $\mbi{\Phi}(\mbi{X})$]
\label{lem:Mehler-exact}
Recall $\vD_k$ defined in \eqref{eq:mu-A-nuisance-def_diag}. Let $\vG=\vX^\top\vX$. Then
\begin{equation}\label{eq:Mehler-use-Psigma}
         \mbi{\Phi}(\mbi{X}) \;=\;\sum_{k\ge 0}\mbi{D}_k\,\mbi{G}^{\odot k}\,\mbi{D}_k,
\qquad
(\mbi{G}^{\odot k})_{ij} := (\mbi{G}_{ij})^k.
\end{equation}
\end{lemma}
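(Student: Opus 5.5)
The expansion is Mehler's formula applied entrywise, with the norms of the two data points absorbed into the Hermite coefficients. Fix $i,j\in[n]$ and condition on $\mbi{X}$. Write $s_i=\|\mbi{x}_i\|$ and $\hat{\mbi{x}}_i=\mbi{x}_i/s_i$ (on the event $s_i>0$, which has probability one). Since $\mbi{w}\sim\cN(\mbi{0},\mbi{I}_d)$, the pair $(\xi_i,\xi_j):=(\mbi{w}^\top\hat{\mbi{x}}_i,\mbi{w}^\top\hat{\mbi{x}}_j)$ is a standard bivariate Gaussian with correlation
\[
\rho_{ij}=\hat{\mbi{x}}_i^\top\hat{\mbi{x}}_j=\frac{\mbi{G}_{ij}}{s_is_j},
\]
and $\sigma(\mbi{w}^\top\mbi{x}_i)=f_i(\xi_i)$ with $f_i(x):=\sigma(s_ix)$.

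The key steps run in this order.

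\textbf{Step 1: expand each $f_i$ in Hermite polynomials.} Because $|\sigma(x)|\lesssim 1+|x|$ by Assumption~\ref{assump:sigma}, each $f_i$ lies in $L^2(\cN(0,1))$. Its orthonormal Hermite expansion is $f_i=\sum_k \zeta_k^{(i)}h_k$, with $\zeta_k^{(i)}=\E[\sigma(s_i\xi)h_k(\xi)]$ exactly as in \eqref{eq:mu-A-nuisance-def_diag}.

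\textbf{Step 2: apply Mehler's formula.} For correlated standard Gaussians, $\E[h_k(\xi_i)h_l(\xi_j)]=\delta_{kl}\rho_{ij}^k$ whenever $|\rho_{ij}|\le1$. Bilinearity together with $L^2$ convergence then gives
\[
\mbi{\Phi}(\mbi{X})_{ij}=\sum_{k\ge0}\zeta_k^{(i)}\zeta_k^{(j)}\rho_{ij}^k .
\]

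\textbf{Step 3: put the norms back into the coefficients.} Substituting $\rho_{ij}^k=\mbi{G}_{ij}^k/(s_is_j)^k$ turns the $k$-th term into
\[
\frac{\zeta_k^{(i)}}{s_i^k}\,(\mbi{G}^{\odot k})_{ij}\,\frac{\zeta_k^{(j)}}{s_j^k}.
\]
This is the $(i,j)$ entry of $\widetilde{\mbi{D}}_k\mbi{G}^{\odot k}\widetilde{\mbi{D}}_k$, with $\widetilde{\mbi{D}}_k=\mbi{D}_k\mbi{D}^{-k}$ and $\mbi{D}=\diag(s_1,\ldots,s_n)$ as in \eqref{eq:mu-A-nuisance-def_diag}. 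The statement \eqref{eq:Mehler-use-Psigma} writes the diagonal factor simply as $\mbi{D}_k$. So I would either read the displayed identity as entrywise Mehler in the normalized Gram matrix $\mbi{D}^{-1}\mbi{G}\mbi{D}^{-1}$, or interpret $\mbi{D}_k$ as carrying the factor $s_i^{-k}$. This is consistent with how $\mbi{A}_\sigma=\diag(\zeta_1^{(i)}/s_i)$ and $\mbi{B}_\sigma=\diag(\zeta_3^{(i)}/s_i)$ are defined later, where exactly one factor of $s_i$ is divided out. I would state this convention explicitly.

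\textbf{Step 4: interchange sum and expectation.} The step needing care is swapping the infinite sum with the expectation and summing over matrix entries. Entrywise it is justified by Cauchy--Schwarz: $\sum_k|\zeta_k^{(i)}\zeta_k^{(j)}\rho_{ij}^k|\le \|f_i\|_{L^2}\|f_j\|_{L^2}<\infty$ since $|\rho_{ij}|\le1$. The matrix series then converges entrywise, and also in operator norm because the terms are PSD Gram matrices whose sum of traces $\sum_i\|f_i\|^2$ is finite. This finishes the identity; no probabilistic estimate is needed.
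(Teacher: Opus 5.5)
Your proposal is correct and is the same argument as the paper's: condition on $\mbi{X}$, expand $\sigma(s_i g)$ in the orthonormal Hermite basis with coefficients $\zeta_k^{(i)}$, and use $\E[h_k(g_i)h_\ell(g_j)]=\delta_{k\ell}\rho_{ij}^k$; your Step~4 adds the justification for interchanging sum and expectation that the paper leaves implicit. Your normalization caveat is also right. The paper's own proof actually produces powers of the normalized entries $\mbi{G}_{ij}/(s_is_j)$, not of $\mbi{G}_{ij}$, and that reading is what the later identities $\vT_k=\vZ_k\vR^{\odot k}\vZ_k$ and $\mbi{D}_1\mbi{G}\mbi{D}_1=\mbi{A}_\sigma\mbi{G}\mbi{A}_\sigma$ need in order to hold.
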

\begin{proof}
The pair $(\mbi{w}^\top\mbi{x}_i,\mbi{w}^\top\mbi{x}_j)$ is
jointly Gaussian with variances $(s_i^2,s_j^2)$ and correlation
$\mbi{R}_{ij}:=\mbi{G}_{ij}/(s_is_j)$.  Writing $\mbi{w}^\top\mbi{x}_i=s_ig_i$ with
$(g_i,g_j)$ standard normal, $\E[g_ig_j]=\mbi{R}_{ij}$, expanding
$\sigma(s_ig_i)=\sum_k\zeta_k^{(i)}h_k(g_i)$ in $L^2$ and using
$\E[h_k(g_i)h_\ell(g_j)]=\delta_{k\ell}\mbi{R}_{ij}^k$ yields \eqref{eq:Mehler-use-Psigma}.  See the proof of Lemma~5.2 in \citet{wang2021deformed} for more details.
\end{proof}

\begin{lemma}
\label{lem:k012-identities}
We have $\mbi{D}_0\mbi{G}^{\odot 0}\mbi{D}_0=\mbi{\mu}_\sigma\mbi{\mu}_\sigma^\top$ and
$\mbi{D}_1\mbi{G}\mbi{D}_1=\mbi{A}_\sigma\,\mbi{G}\,\mbi{A}_\sigma.$
\end{lemma}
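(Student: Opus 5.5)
This is an entrywise identity, so the plan is to compute the $(i,j)$ entry of each side directly from the definitions in \eqref{eq:mu-A-nuisance-def_diag}--\eqref{eq:mu-A-nuisance-def}, with $\vG=\vX^\top\vX$ as in Lemma~\ref{lem:Mehler-exact}.

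For $k=0$, recall that $h_0\equiv 1$. This gives $\zeta_0^{(i)}=\E[\sigma(s_i\xi)]$, which is the $i$-th entry of $\mbi{\mu}_\sigma$. Under the convention $(\mbi{G}^{\odot 0})_{ij}=\mbi{G}_{ij}^0=1$, the matrix $\mbi{G}^{\odot 0}$ is the all-ones matrix $\mbi{1}_n\mbi{1}_n^\top$. Then
\[
\mbi{D}_0\mbi{1}_n\mbi{1}_n^\top\mbi{D}_0=(\mbi{D}_0\mbi{1}_n)(\mbi{D}_0\mbi{1}_n)^\top=\mbi{\mu}_\sigma\mbi{\mu}_\sigma^\top,
\]
since $\mbi{D}_0\mbi{1}_n=(\zeta_0^{(1)},\ldots,\zeta_0^{(n)})^\top=\mbi{\mu}_\sigma$. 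The one point to get right is the convention $0^0=1$ on the entries: it is what the Mehler expansion in Lemma~\ref{lem:Mehler-exact} requires, because $\E[h_0(g_i)h_0(g_j)]=1=\mbi{R}_{ij}^0$.

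For $k=1$, I would take care with the normalization. In the proof of Lemma~\ref{lem:Mehler-exact} the expansion is in the correlation $\mbi{R}_{ij}=\mbi{G}_{ij}/(s_is_j)$, so the $k$-th term is really $\zeta_k^{(i)}\zeta_k^{(j)}\mbi{R}_{ij}^k=(\zeta_k^{(i)}/s_i^k)(\mbi{G}_{ij})^k(\zeta_k^{(j)}/s_j^k)$. With this reading, the $k=1$ term has $(i,j)$ entry
\[
\frac{\zeta_1^{(i)}}{s_i}\,\mbi{G}_{ij}\,\frac{\zeta_1^{(j)}}{s_j}=(\mbi{A}_\sigma\mbi{G}\mbi{A}_\sigma)_{ij},
\]
which follows directly from the definition $\mbi{A}_\sigma=\diag(\zeta_1^{(i)}/s_i)$. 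So I would first note that \eqref{eq:Mehler-use-Psigma} should be understood with $\mbi{D}_k$ absorbing the factor $s_i^{-k}$ (equivalently, using $\mbi{D}_k\mbi{D}^{-k}$). Under that reading, $\mbi{D}_1\mbi{G}\mbi{D}_1$ is exactly $\mbi{A}_\sigma\mbi{G}\mbi{A}_\sigma$.

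This normalization bookkeeping is the only real obstacle. If one instead uses $\mbi{D}_1$ literally as $\diag(\zeta_1^{(i)})$, the identity holds with $\mbi{G}$ replaced by the correlation matrix $\mbi{D}^{-1}\mbi{G}\mbi{D}^{-1}$, and that gives the same expression. Everything else is a one-line computation with diagonal matrices.
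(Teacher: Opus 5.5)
Your proof is correct and follows the paper's route: the paper's proof simply says both identities are immediate from the definitions of $\mbi{\mu}_\sigma$ and $\mbi{A}_\sigma$, and your entrywise check is that verification made explicit. You are also right that the $k=1$ identity holds only once the Mehler expansion is written in the correlation matrix $\mbi{R}=\mbi{D}^{-1}\mbi{G}\mbi{D}^{-1}$, i.e.\ $\mbi{T}_k=\mbi{D}_k\mbi{R}^{\odot k}\mbi{D}_k$. This is what the paper's derivation of Lemma~\ref{lem:Mehler-exact} actually gives, and it is how the paper later uses the expansion ($\vT_k=\vZ_k\vR^{\odot k}\vZ_k$); with $\mbi{D}_1=\diag(\zeta_1^{(i)})$ and the raw $\mbi{G}$, the literal statement is off entrywise by a factor $s_is_j$.
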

\begin{proof}
Immediate from the definitions of $\vmu_\sigma,\vA_\sigma$ in
\eqref{eq:mu-A-nuisance-def}.
\end{proof}

Let $\vT_k:=\mbi{D}_k\,\mbi{G}^{\odot k}\,\mbi{D}_k$. Then from Lemmas~\ref{lem:Mehler-exact} and \ref{lem:k012-identities}, we have 
\begin{equation}\label{lem:Mehler-exact2}
       \vP_\sigma \vPhi(\vX)\vP_\sigma = \sum_{k=2}^\infty \vP_\sigma\vT_k\vP_\sigma.
\end{equation}
In the following two subsections, we control the first three terms in the expansion of \eqref{lem:Mehler-exact2}
\subsubsection{The Quadratic Term}
\begin{lemma}[Radial cancellation by the nuisance projector]
\label{lem:radial-cancellation-Psigma}
Recall $s_i:=\|\mbi x_i\|$ for $i=1,\ldots,n$.  Let
\[
        m(s):=\zeta_0(s)=\E[\sigma(s\xi)],
        \qquad
        q(s):=\zeta_2(s)=\E[\sigma(s\xi)h_2(\xi)].
\]
Assume \(a_2=q(1)\neq0\).  Let \(g\) be any \(C^2\) function in a neighborhood
of \(1\).  For \(g_i:=g(s_i)\), set $ \mbi g:=(g_1,\ldots,g_n)^\top .$
Then $\frac1d\|\mbi P_\sigma \mbi g\|_2^2=o_\P(1).$ In particular,
\begin{equation}
\label{eq:radial-cancellation-v-q}
        \frac1d\|\mbi P_\sigma \mbi v\|_2^2=o_\P(1),
        \qquad
        \frac1d\|\mbi P_\sigma \mbi q\|_2^2=o_\P(1),
\end{equation}
where $\mbi v:=(s_1^2,\ldots,s_n^2)^\top,$ and $\mbi q:=(\zeta_2(s_1),\ldots,\zeta_2(s_n))^\top.$
\end{lemma}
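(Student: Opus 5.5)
The plan is to Taylor expand $g$ around $s=1$ to second order and show that the constant term and the first-order term both lie (approximately) in the nuisance space $\mathcal U_\sigma^+$. Only a second-order remainder of squared size $O(d^{-1})$ per coordinate would then be left. Writing $\Delta_i:=s_i^2-1$ and $\tilde g(t):=g(\sqrt{1+t})$, we have
\[
g_i=\tilde g(0)+\tilde g'(0)\Delta_i+\rho_i,\qquad |\rho_i|\le C\Delta_i^2 ,
\]
on the high-probability event $\max_i|s_i-1|=o(1)$ supplied by Lemma~\ref{lem:xor_orthonormal} in regime (A2).

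\textbf{Step 1: the constant and remainder terms.} Since $\mbi 1\in\mathcal U_\sigma^+$, the constant term is annihilated by $\mbi P_\sigma$. For the remainder, Lemma~\ref{lem:xor_orthonormal} gives $\sum_i(\|\mbi x_i\|^2-1)^2\le C(n/d+nr^4/d^2)$. Combined with $\max_i|\Delta_i|\lesssim\sqrt{\log n/d}$, this yields
\[
\|\mbi\rho\|_2^2\le C\max_i\Delta_i^2\sum_i\Delta_i^2=O_\P\!\left(\frac{\log n}{d}\cdot\frac nd\right)=O_\P(\log n),
\]
because $n\asymp d^2$. Therefore $\frac1d\|\mbi P_\sigma\mbi\rho\|^2\le \frac1d\|\mbi\rho\|^2=o_\P(1)$.

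\textbf{Step 2: the linear term.} Here I use the mean vector $\mbi\mu_\sigma\in\mathcal U_\sigma^+$. Its entries are $m(s_i)$, with $m(1)=0$ and $\tilde m'(0)=\frac{d}{dt}m(\sqrt{1+t})|_{t=0}=c_\sigma/2=a_2/\sqrt2\neq0$; this computation is as in Lemma~\ref{lem:mean_strength_orth}, where $F'(1)=c_\sigma/2$, and it is where the hypothesis $a_2=q(1)\neq0$ enters. Expanding $m$ in the same way gives $\mbi\mu_\sigma=\tilde m'(0)\mbi\Delta+\mbi\rho^{(m)}$, where $\mbi\rho^{(m)}$ satisfies the same bound as $\mbi\rho$ in Step 1. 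Hence
\[
\mbi\Delta=\frac{1}{\tilde m'(0)}\bigl(\mbi\mu_\sigma-\mbi\rho^{(m)}\bigr),\qquad \mbi P_\sigma\mbi\Delta=-\frac{1}{\tilde m'(0)}\mbi P_\sigma\mbi\rho^{(m)},
\]
so $\frac1d\|\mbi P_\sigma\mbi\Delta\|^2=O_\P(\log n/d)=o_\P(1)$. Combining Steps 1 and 2 proves the general claim. The two special cases follow by taking $g(s)=s^2$ and $g=\zeta_2$; both are $C^2$ near $1$, the latter because $\zeta_2(s)=\E[\sigma(s\xi)h_2(\xi)]$ is differentiable twice under the expectation by the boundedness of $\sigma',\sigma''$ in Assumption~\ref{assump:sigma}.

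The main obstacle is the size of the second-order remainder. The bound $\|\mbi\rho\|^2=O(\log n)$ is only barely small compared to $d$, so I must take care to use the sharp $\ell^2$ sum of $\Delta_i^2$ from Lemma~\ref{lem:xor_orthonormal} rather than $n\max_i\Delta_i^4$, which gives the same order up to logarithmic factors. I must also check that the mean shift $\|\mbi m_i\|^2=r^2/d$ does not produce an extra constant: it enters $\Delta_i$ as a deterministic $O(1/d)$ piece, whose square summed over $i$ gives $n r^4/d^2=O(1)$, which is harmless. Smoothness of $m$ and $\zeta_2$ near $s=1$ likewise follows from Assumption~\ref{assump:sigma}.
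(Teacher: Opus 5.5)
Your proof is correct and follows essentially the same route as the paper. You kill the constant with $\mbi 1$ and the first-order term with $\mbi\mu_\sigma$, using $m(1)=0$ and $m'(1)=c_\sigma=\sqrt2\,a_2\neq0$. What remains is a quadratic remainder bounded by $\max_i\Delta_i^2\sum_i\Delta_i^2=O_\P(\log n)$ via Lemma~\ref{lem:xor_orthonormal}.

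The only differences are cosmetic. You expand in $s_i^2-1$ rather than $s_i-1$. You also invert the expansion of $\mbi\mu_\sigma$ in a separate step, whereas the paper forms the single combination $g-g(1)-\beta_g\{m-m(1)\}$ with $\beta_g=g'(1)/m'(1)$.
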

\begin{proof}
By Stein's identity and the definition of \(a_2\), $ m'(1)=\sqrt2\,a_2.$
Since \(a_2\neq0\), we have \(m'(1)\neq0\). For a fixed \(C^2\) function \(g\), define $\beta_g:=\frac{g'(1)}{m'(1)}$ and
$
        r_g(s)
        :=
        g(s)-g(1)-\beta_g\{m(s)-m(1)\}.
$
Then $ r_g(1)=0,$ and $r_g'(1)=g'(1)-\beta_g m'(1)=0.$
Therefore, by Taylor's theorem, there exists a constant \(C_g<\infty\) such that
for all \(s\) sufficiently close to \(1\), $|r_g(s)|\le C_g |s-1|^2.$
Let $\mbi r_g:=(r_g(s_1),\ldots,r_g(s_n))^\top.$ Since
\[
        \mbi g
        =
        \{g(1)-\beta_g m(1)\}\mbi 1
        +
        \beta_g \mbi\mu_\sigma
        +
        \mbi r_g,
\]
and since \(\mbi P_\sigma\mbi 1=\mbi 0\) and
\(\mbi P_\sigma\mbi\mu_\sigma=\mbi 0\), we get $ \mbi P_\sigma\mbi g
        =
        \mbi P_\sigma\mbi r_g.$
Hence by Lemma~\ref{lem:xor_orthonormal}
\[
        \|\mbi P_\sigma\mbi g\|_2^2
        \le
        \|\mbi r_g\|_2^2
        \le
        C_g^2\sum_{i=1}^n |s_i-1|^4 \lesssim\left(\max_i |s_i-1|^2\right)
        \sum_{i=1}^n |s_i-1|^2
        \le
        \tau_n^2 B^2.
\]
where $\tau_n^2=O\!\left(\frac{\log n}{d}\right),$ and $B^2=O(d).$
Thus $\frac1d\|\mbi P_\sigma\mbi g\|_2^2= o_\P(1)$.
Taking \(g(s)=s^2\) and \(g(s)=q(s)=\zeta_2(s)\) implies
\eqref{eq:radial-cancellation-v-q}.
\end{proof}

\begin{lemma}\label{lem:k2-term}
Recall $\vH$ defined in \eqref{eq:def_H}.   We have that  
\begin{equation}\label{eq:k2-to-H}
        \left\|
        \mbi{P}_\sigma\vT_2\mbi{P}_\sigma
        -a_2^2\mbi{P}_\sigma\mbi{H}\mbi{P}_\sigma
        \right\|=o_\P(1).
\end{equation}
\end{lemma}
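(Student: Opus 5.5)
We want to show $\|\vP_\sigma(\vD_2\vG^{\odot2}\vD_2-a_2^2\vH)\vP_\sigma\|=o_\P(1)$, where $\vH=\vG^{\odot 2}$ and $\vD_2=\diag(\zeta_2(s_i))$. The plan is to split the error into a diagonal-fluctuation part and a part coming from the non-normalized entries, writing
\[
\vD_2\vG^{\odot2}\vD_2-a_2^2\vH
=(\vD_2-a_2\vI)\vH\vD_2+a_2\vH(\vD_2-a_2\vI).
\]
Recall that $\vT_2$ in the Mehler expansion uses $\vR^{\odot 2}$ with $\vR_{ij}=\vG_{ij}/(s_is_j)$. Strictly, $\vT_2=\vD_2\diag(s_i^{-2})\vH\diag(s_i^{-2})\vD_2$, so the diagonal weight is $w_i:=\zeta_2(s_i)/s_i^2=:g(s_i)$ with $g$ a $C^2$ function satisfying $g(1)=a_2$. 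It therefore suffices to control $\vP_\sigma(\vW\vH\vW-a_2^2\vH)\vP_\sigma$ with $\vW=\diag(w_i)$.

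\textbf{Step 1 (operator-norm bound on $\vH$).} Decompose $\vH=\vQ^\top\vQ$. By Lemma~\ref{lem:xor_orthonormal} in regime (A2), the diagonal of $\vH$ is $1+O(\tau_n)$ and the off-diagonal part, after removing the low-rank pieces ($\frac1d\vone\vone^\top$-type mean, radial vectors $\vv=(s_i^2)$, the chaos-one cross terms), behaves like $\vE_2^\top\vE_2$ plus a rank-one label spike. This gives $\|\vH\|=O_\P(1)$, except for a few directions of size up to $O(n/d)$ which are spanned by $\vone$, $\vv$ and $\operatorname{Range}(\vE_1^\top)$. I would write $\vH=\vH_b+\vL$ with $\|\vH_b\|=O_\P(1)$ and $\vL$ supported (on at least one side) in $\operatorname{span}\{\vone,\vv\}+\operatorname{Range}(\vE_1^\top)+\operatorname{Range}(\vX^\top)$. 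Explicitly, $(\vx_i^\top\vx_j)^2$ expands as $\frac1{d}\vone\vone^\top$-like terms, the radial rank-one term $\vv\vv^\top$-type contributions, and $\vE_1$-cross terms.

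\textbf{Step 2 (linearize the weights).} Write $w_i=a_2+g'(1)(s_i-1)+O((s_i-1)^2)$. Exactly as in Lemma~\ref{lem:radial-cancellation-Psigma}, the linear part is a combination of $\vone$ and $\vmu_\sigma$ up to a quadratic remainder. So $\vW-a_2\vI=\diag(\vmu_\sigma)$-type plus $\diag(\vr)$ with $\max_i|r_i|=O(\tau_n^2)=O(\log n/d)$. The remainder part contributes $\|\diag(\vr)\|\,\|\vH\|$; this is $o_\P(1)$ against $\vH_b$, and also against $\vL$ because the large directions of $\vL$ have norm $O(n/d)=O(d)$, giving $(\log n/d)\cdot d$. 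I expect that to need care. The leading part $\diag(s_i-1)$ is of size $O(\sqrt{\log n/d})$ entrywise, so against $\vH_b$ it is $o(1)$. Against $\vL$ I will use that $\vP_\sigma$ kills $\vone,\vv,\vmu_\sigma$ and $\operatorname{Range}(\vE_1^\top)$, together with Lemma~\ref{lem:radial-cancellation-Psigma} applied to the vectors $\diag(s_i-1)\vone$ and $\diag(s_i-1)\vv$, which are radial functions $g(s_i)$. This gives $\frac1d\|\vP_\sigma\cdot\|^2=o_\P(1)$, which exactly compensates the $O(d)$ spike strength.

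\textbf{Main obstacle.} The hardest step is the cross term between the weight fluctuations and the large low-rank components of $\vH$ along $\vone$ and $\vv$, where the spike strength is of order $n/d\asymp d$. I would handle it as above: express each large component as an outer product $\vc\,\vc'^\top$ with $\vc$ a radial vector, so that $\vP_\sigma\diag(\vw-a_2)\vc$ is $\vP_\sigma$ applied to a radial function vector. Lemma~\ref{lem:radial-cancellation-Psigma} then bounds this by $o(\sqrt d)$, and pairing with $\|\vc'\|/\sqrt{n}\cdot$ (normalization $1/d$) yields $o_\P(1)$. The $\operatorname{Range}(\vX^\top)$ components are handled by the $\vA_\sigma\vX^\top$ block in $\mathcal U_\sigma^+$ in the same way.
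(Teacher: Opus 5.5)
Your strategy matches the paper's: split $\vH$ into an $O_\P(1)$ part, on which $\max_i|w_i-a_2|\to0$ suffices, plus a large low-rank part that $\vP_\sigma$ kills via Lemma~\ref{lem:radial-cancellation-Psigma}. As written, however, the large part is not controlled, for two reasons.

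First, the estimates fail. Your remainder bound gives $\|\diag(\vr)\|\,\|\vL\|\lesssim(\log n/d)\cdot d=\log n$, which is not $o_\P(1)$. Your ``pairing'' bound has the same problem: with $\|\vc'\|\asymp\sqrt n\asymp d$ you get $\frac1d\|\vP_\sigma\vDelta\vc\|\,\|\vc'\|=\frac1d\,o(\sqrt d)\cdot\Theta(d)=o(\sqrt d)$, where $\vDelta:=\diag(w_i-a_2)$. One projected side gains only $o(d^{-1/2})$ against a spike of size $\asymp d$. You need the projection and the radial-cancellation lemma on \emph{both} sides, i.e.\ a bound $\frac1d\|\vP_\sigma\vc\|\,\|\vP_\sigma\vc'\|$ with $\vc,\vc'$ both radial.

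Second, your $\vL$ is misidentified. $\operatorname{Range}(\vE_1^\top)$ and $\operatorname{Range}(\vX^\top)$ carry no large directions of $\vH$; for instance $\|\vE_1\|^2\lesssim\frac{r^2}{d}\|\vZ\|^2=O_\P(n/d^2)=O_\P(1)$. If they did carry large directions, your argument would break there. $\vP_\sigma\vc=\vzero$ does not imply $\vP_\sigma\vDelta\vc=\vzero$, and Lemma~\ref{lem:radial-cancellation-Psigma} applies only to vectors of the form $(g(s_i))_i$. Saying these are ``handled by the $\vA_\sigma\vX^\top$ block in the same way'' is not a proof.

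The missing step is the exact identity
$\vH_{ij}=\frac{s_i^2s_j^2}{d}+\bigl\langle \vx_i\vx_i^\top-\frac{s_i^2}{d}\vI_d,\,\vx_j\vx_j^\top-\frac{s_j^2}{d}\vI_d\bigr\rangle_F$,
that is, $\vH=\frac1d\vv\vv^\top+\widetilde\vH$ with $\vv=(s_i^2)_i$ and $\|\widetilde\vH\|=O_\P(1)$. The bound on $\widetilde\vH$ is Lemma~\ref{lem:centered-quadratic-lift-bound-detailed}; it absorbs the $\vE_1$-type cross terms and the label spike, and is itself a nontrivial input via the degree-two Gegenbauer Gram bound.

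With this identity, $\vT_2=\vC_\sigma\vH\vC_\sigma=\frac1d\vq\vq^\top+\vC_\sigma\widetilde\vH\vC_\sigma$, where $\vC_\sigma=\diag(w_i)$ and $\vq=\vC_\sigma\vv=(\zeta_2(s_i))_i$ is again radial. You therefore never need to linearize the weights or treat $\vDelta\vL$ cross terms. The error is at most
$\frac1d\|\vP_\sigma\vq\|^2+\frac{a_2^2}{d}\|\vP_\sigma\vv\|^2+\|\vC_\sigma-a_2\vI\|\,(\|\vC_\sigma\|+|a_2|)\,\|\widetilde\vH\|$,
and each term is $o_\P(1)$.
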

\begin{proof}
Recall $s_i=\|\mbi x_i\|$ and $\mbi G=\mbi X^\top\mbi X$.  Define the normalized data $ \mbi u_i:=\frac{\mbi x_i}{s_i},$ and $\mbi R_{ij}:=\mbi u_i^\top \mbi u_j .$
Thus $ \mbi X^\top\mbi X
        =
        \mbi D \mbi R \mbi D,$
and $\mbi H
        =
        (\mbi X^\top\mbi X)^{\odot2}
        =
        \mbi D^2 \mbi R^{\odot2}\mbi D^2.$ 
Set $ q_i:=\zeta_2(s_i),$ and $v_i:=s_i^2,$
and define
\[
        \mbi q:=(q_1,\ldots,q_n)^\top,
        \qquad
        \mbi v:=(v_1,\ldots,v_n)^\top,
        \qquad
        \mbi D_q:=\diag(q_1,\ldots,q_n),
        \qquad
        \mbi D_v:=\diag(v_1,\ldots,v_n).
\]
Thus, $\mbi H=\mbi D_v \mbi R^{\odot2}\mbi D_v$
and
$
\vT_2
        =
        \mbi D_q \mbi R^{\odot2}\mbi D_q.$
We now separate the large constant component of \(\mbi R^{\odot2}\).  Define
$
        \mbi\Psi
        :=
        \mbi R^{\odot2}
        -
        \frac1d \mbi 1\mbi 1^\top .
$
This is the degree-two spherical harmonic Gram matrix, since for any \(i,j\in[n]\),
\[
        \Psi_{ij}
        =
        \left\langle
        \mbi u_i\mbi u_i^\top-\frac1d\mbi I_d,\,
        \mbi u_j\mbi u_j^\top-\frac1d\mbi I_d
        \right\rangle_F .
\]
Hence $ \mbi R^{\odot2}
        =
        \frac1d\mbi 1\mbi 1^\top+\mbi\Psi.$
Consequently, if we define $\widetilde{\mbi H}:=\mbi D_v\mbi\Psi\mbi D_v$ and $\vC_\sigma:=\vD_v^{-1} \vD_q$, then we have
\[
        \mbi H
        =
        \frac1d\mbi v\mbi v^\top
        +
        \widetilde{\mbi H},
        \qquad  \vT_2
        =
        \frac1d\mbi q\mbi q^\top
        +
        \vC_\sigma\widetilde{\mbi H} \vC_\sigma.      
\]
Therefore
\begin{align}
        \mbi P_\sigma\vT_2 \vP_\sigma
        -
        a_2^2\mbi P_\sigma\mbi H\mbi P_\sigma                         
        =
        \frac1d
        \mbi P_\sigma
        \left(
        \mbi q\mbi q^\top-a_2^2\mbi v\mbi v^\top
        \right)
        \mbi P_\sigma                                                    
        +
        \mbi P_\sigma
        \left(
        \mbi C_\sigma\widetilde{\mbi H}\mbi C_\sigma
        -
        a_2^2\widetilde{\mbi H}
        \right)
        \mbi P_\sigma .\label{eq:split-rank-one-and-harmonic}
\end{align}
By Lemma~\ref{lem:radial-cancellation-Psigma}, we can control the first term at the right-hand side of \eqref{eq:split-rank-one-and-harmonic} as
\[
\begin{aligned}
        \left\|
        \frac1d
        \mbi P_\sigma
        \left(
        \mbi q\mbi q^\top-a_2^2\mbi v\mbi v^\top
        \right)
        \mbi P_\sigma
        \right\|                                                    
        \le
        \frac1d\|\mbi P_\sigma\mbi q\|_2^2
        +
        \frac{a_2^2}{d}\|\mbi P_\sigma\mbi v\|_2^2
        =
        o_\P(1).
\end{aligned}
\]
Next, we use the centered quadratic-lift bound
\begin{equation}
\label{eq:centered-quadratic-lift-bound}
        \|\widetilde{\mbi H}\|=O_\P(1).
\end{equation}
We defer the proof of \eqref{eq:centered-quadratic-lift-bound} to
Lemma~\ref{lem:centered-quadratic-lift-bound-detailed} below.
Now define $ c(s):=\frac{\zeta_2(s)}{s^2}.$
Since \(c\) is continuous near \(1\), \(c(1)=a_2\), and
\(\max_i|s_i-1|=o_\P(1)\), we have
\[
        \|\mbi C_\sigma-a_2\mbi I_n\|
        =
        \max_i |c(s_i)-a_2|
        =
        o_\P(1).
\]
Also \(\|\mbi C_\sigma\|=O_\P(1)\).  Therefore
\[
\begin{aligned}
        &\left\|
        \mbi P_\sigma
        \left(
        \mbi C_\sigma\widetilde{\mbi H}\mbi C_\sigma
        -
        a_2^2\widetilde{\mbi H}
        \right)
        \mbi P_\sigma
        \right\|                                                     
        =
        o_\P(1).
\end{aligned}
\]
Combining \eqref{eq:split-rank-one-and-harmonic}, we prove \eqref{eq:k2-to-H}.
\end{proof}

\begin{lemma}[Operator norm bound for the centered quadratic lift]
\label{lem:centered-quadratic-lift-bound-detailed}
Assume that $\vX$ is XOR model with $\vx_i=\vz_i+\vm_i,$ where $\vz_i\sim \mathcal N(\vzero,d^{-1}\mbi I_d)$ and $\|\vm_i\|=r/\sqrt{d}$ for $i\in[n]$ where \(r=O(1)\).  Let $s_i:=\|\vx_i\|,$ and $\vu_i:=\frac{\vx_i}{s_i},$
and define
$
        \widetilde{\vq}_i
        :=
        s_i^2\,
        \mathrm{svec}\!\left(
        \vu_i\vu_i^\top-\frac1d\mbi I_d
        \right).
$
Let $ \widetilde{\mbi Q}
        :=
        [\widetilde{\vq}_1,\ldots,\widetilde{\vq}_n]
        \in\mathbb R^{p\times n},$ and $  \widetilde{\mbi H}
        :=
        \widetilde{\mbi Q}^{\top}\widetilde{\mbi Q}.$
If $ \frac np\to \gamma\in(0,\infty),$ 
then $ \|\widetilde{\mbi H}\|=O_{\mathbb P}(1).$
\end{lemma}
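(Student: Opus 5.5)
Since $\|\widetilde{\mbi H}\|=\|\widetilde{\mbi Q}\|^2$, it suffices to show $\|\widetilde{\mbi Q}\|=O_\P(1)$. The plan is to compare $\widetilde{\mbi Q}$ with a matrix whose columns are independent, centered, isotropic-ish vectors in $\R^p$ with $p\asymp n$, and then apply a sample-covariance operator norm bound. Note that $\widetilde{\vq}_i=\mathrm{svec}(\vx_i\vx_i^\top-\frac{s_i^2}{d}\mbi I_d)$.

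\textbf{Step 1: decompose the columns.} Expanding $\vx_i=\vz_i+\vm_i$ gives
\[
\widetilde{\vq}_i=(\mbi E_2)_i+(\mbi E_1)_i+\mathrm{svec}\bigl(\vm_i\vm_i^\top\bigr)-\tfrac{s_i^2-1}{d}\,\mathrm{svec}(\mbi I_d)-\tfrac1d\,\mathrm{svec}(\mbi I_d)+\tfrac1d\,\mathrm{svec}(\mbi I_d),
\]
with the two $\frac1d\mathrm{svec}(\mbi I_d)$ terms cancelling. We would then bound each of the resulting four matrices separately in operator norm.

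\textbf{Step 2: the rank-one-type and radial pieces.} The matrix with columns $\mathrm{svec}(\vm_i\vm_i^\top)$ takes only two distinct column values, namely $\frac{r^2}{d}\mathrm{svec}(\vu_a\vu_a^\top)$ for $a=1,2$, each of norm $r^2/d$. Its operator norm is therefore at most $\frac{r^2}{d}\sqrt n=O(1)$, since $n\asymp d^2$. The radial piece is the rank-one matrix $\frac1d\mathrm{svec}(\mbi I_d)\,(s_i^2-1)_i^\top$. Its norm is $\frac{\sqrt d}{d}\bigl(\sum_i(s_i^2-1)^2\bigr)^{1/2}$, which by \eqref{eq:sumsq_choice_sqnorm} is $O_\P\bigl(d^{-1/2}\sqrt{n/d}\bigr)=O_\P(1)$.

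\textbf{Step 3: the first-chaos piece.} By the computation in Lemma~\ref{lem:Psigma-preserves-label}, $(\mbi E_1^\top\vw)_i=2\vz_i^\top\mbi W\vm_i$. Hence $\|\mbi E_1\|=\sup_{\|\mbi W\|_F=1}\|\mbi C_E[2\mbi W\vu_1;2\mbi W\vu_2]\|\le 2\|\mbi C_E\|$. Here $\mbi C_E$ is $n\times 2d$ with independent Gaussian rows of scale $\frac{r}{\sqrt d}\cdot\frac1{\sqrt d}$, so $\|\mbi C_E\|\lesssim\frac{r}{d}(\sqrt n+\sqrt d)=O_\P(1)$.

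\textbf{Step 4: the second-chaos piece (main obstacle).} It remains to show $\|\mbi E_2\|=O_\P(1)$. The columns $(\mbi E_2)_i=\mathrm{svec}(\vz_i\vz_i^\top-\frac1d\mbi I)$ are i.i.d. and centered in $\R^p$, with covariance $\frac{2}{d^2}$ times the identity on the traceless subspace (up to the svec normalisation of diagonal entries) and $p\asymp n$. The difficulty is that these columns are not sub-Gaussian; they are Gaussian chaos of order $2$. We would first normalise, writing $\mbi E_2=\frac{\sqrt2}{d}\mbi V$ with $\mathrm{Cov}$ of the columns of $\mbi V$ equal to $O(1)\cdot\mbi I$, and then invoke a sample-covariance bound for independent columns satisfying a weak moment condition. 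Two inputs are needed. First, the columns have norms concentrating at $\sqrt p$, because $\|\mbi V_i\|^2\propto\|\vz_i\|^4-\cdots$ concentrates by Lemma~\ref{lem:xor_orthonormal}. Second, the one-dimensional marginals $\langle \mbi V_i,\vw\rangle=\frac{d}{\sqrt2}\vz_i^\top\mbi W\vz_i$ (traceless part) have uniformly bounded $4+\epsilon$ moments by Hanson--Wright. Given these, the theorem of Vershynin~\cite{vershynin2012close} (Theorem 6.1, already used above) or Adamczak--Litvak--Pajor--Tomczak-Jaegermann yields $\|\mbi V\mbi V^\top/n\|=O_\P(1)$, hence $\|\mbi E_2\|^2\lesssim\frac{n}{d^2}=O(1)$.

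Alternatively, $\mbi E_2^\top\mbi E_2$ equals the centered $\mbi S^{\odot2}$-type kernel in the $n\asymp d^2$ regime. There a moment method or the quadratic-regime kernel bounds of \citet{pandit2024universalitykernelrandommatrices} give the same conclusion, and we would fall back on this if the column-moment route stalls.

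Summing the four bounds by the triangle inequality gives $\|\widetilde{\mbi Q}\|=O_\P(1)$, and hence the claim.
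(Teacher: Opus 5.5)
Your proposal is correct, but it is organized differently from the paper's proof.

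\textbf{How the paper proceeds.} The paper projects each of the three pieces onto traceless matrices, writing $\widetilde{\mbi Q}=\mbi A+\mbi B+\mbi C$. The columns are $\mathrm{svec}(\vz_i\vz_i^\top-\frac{\|\vz_i\|^2}{d}\mbi I_d)$, the traceless cross term, and the traceless mean term. It factors the Gaussian piece as $\mbi\Phi\,\mathrm{diag}(\|\vz_i\|^2)$, with $\mbi\Phi$ built from uniform spherical directions. It then bounds $\|\mbi\Phi\|$ by the degree-two Gegenbauer Gram estimate of \citet{lu2022equivalence} (Lemma~\ref{lem:degree-two-spherical-gram-bound}). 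The cross and mean terms are handled essentially as in your Steps~2--3, via clusterwise Gaussian operator norms and a Frobenius bound.

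\textbf{How your route differs.} You keep the non-traceless $\mbi E_2$ and push every trace fluctuation into a single rank-one matrix, controlled by \eqref{eq:sumsq_choice_sqnorm}. This turns the main term into a covariance-estimation problem for i.i.d.\ columns in $\R^p$ with $p\asymp n$. These columns are exactly isotropic: $\mathrm{Cov}((\mbi E_2)_i)=\frac{2}{d^2}\mbi I_p$ on all of $\R^p$, not only on the traceless subspace, since $\vz_i\vz_i^\top-\frac1d\mbi I_d$ is not traceless. The paper's route reuses the spherical-harmonic machinery it needs anyway for $\vT_3$ and $\vT_4$. Yours avoids harmonic analysis at the price of a general random-matrix theorem.

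\textbf{Tightening Step~4.} Make sure you invoke the right version of that theorem. Under only $4+\epsilon$ marginal moments, the bounds of \citet{vershynin2012close} carry an iterated-logarithmic loss. At a fixed aspect ratio $n/p\to\gamma$, that does not give $O_\P(1)$.

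Hanson--Wright gives you more than $4+\epsilon$ moments. For $\vw=\mathrm{svec}(\mbi W)$ with $\|\mbi W\|_F=1$, the marginal
\[
\langle\mbi V_i,\vw\rangle=\frac{d}{\sqrt2}\Bigl(\vz_i^\top\mbi W\vz_i-\frac1d\mathrm{tr}\,\mbi W\Bigr)
\]
is sub-exponential with $\psi_1$-norm $O(1)$, uniformly in $\vw$. You also have $\max_i\|\mbi V_i\|\le C\sqrt p$ with high probability. With these two facts, the Adamczak--Litvak--Pajor--Tomczak-Jaegermann bound for independent vectors with uniformly $\psi_1$ marginals gives $\|\mbi V\|\lesssim\sqrt n+\sqrt p$. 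Hence $\|\mbi E_2\|^2\lesssim(n+p)/d^2=O(1)$.

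With that citation, or with your kernel-matrix fallback (which is essentially the paper's route), the proof goes through.
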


\begin{proof} 
Since
\[
        s_i^2\left(
        \vu_i\vu_i^\top-\frac1d\mbi I_d
        \right)
        =
        \vx_i\vx_i^\top-\frac{\|\vx_i\|^2}{d}\mbi I_d,
\]
we have the exact identity
\[
        \widetilde{\vq}_i
        =
        \mathrm{svec}\!\left(
        \vx_i\vx_i^\top-\frac{\|\vx_i\|^2}{d}\mbi I_d
        \right).
\]
Using \(\vx_i=\vz_i+\vm_i\), expand
\[
\begin{aligned}
        \vx_i\vx_i^\top-\frac{\|\vx_i\|^2}{d}\mbi I_d
        &=
        \left(
        \vz_i\vz_i^\top-\frac{\|\vz_i\|^2}{d}\mbi I_d
        \right)                                                     
        +
        \left(
        \vz_i\vm_i^\top+\vm_i\vz_i^\top
        -
        \frac{2\vz_i^\top\vm_i}{d}\mbi I_d
        \right)                                                     
         +
        \left(
        \vm_i\vm_i^\top-\frac{\|\vm_i\|^2}{d}\mbi I_d
        \right).
\end{aligned}
\]
Therefore we have decomposed \(\widetilde{\mbi Q}\) as $ \widetilde{\mbi Q}
        =
        \mbi A+\mbi B+\mbi C,$ 
where the columns of \(\mbi A\), \(\mbi B\), and \(\mbi C\) are, respectively,
\begin{align}
        \va_i
        &=
        \mathrm{svec}\!\left(
        \vz_i\vz_i^\top-\frac{\|\vz_i\|^2}{d}\mbi I_d
        \right),
        \\
        \vb_i
        &=
        \mathrm{svec}\!\left(
        \vz_i\vm_i^\top+\vm_i\vz_i^\top
        -
        \frac{2\vz_i^\top\vm_i}{d}\mbi I_d
        \right),
        \\
        \vc_i
        &=
        \mathrm{svec}\!\left(
        \vm_i\vm_i^\top-\frac{\|\vm_i\|^2}{d}\mbi I_d
        \right).
\end{align}
Thus, to prove the lemma, it suffices to prove that
We prove $\|\mbi A\|=O_{\mathbb P}(1),$ $\|\mbi B\|=O_{\mathbb P}(1),$ and $ \|\mbi C\|=O(1).$

\paragraph{Step 1: the pure quadratic Gaussian part $\vA$.}
Write $\vz_i=\rho_i\vg_i,$ where  $ \rho_i:=\|\vz_i\|$ and $ \vg_i:=\frac{\vz_i}{\|\vz_i\|}.$
Then \(\vg_i\) is uniform on the sphere \(\mathbb S^{d-1}\), independent of
\(\rho_i\), and $\rho_i^2\sim \frac1d\chi_d^2.$
Moreover,
\[
        \va_i
        =
        \rho_i^2
        \mathrm{svec}\!\left(
        \vg_i\vg_i^\top-\frac1d\mbi I_d
        \right).
\]
Let
$\vphi_i
        :=
        \mathrm{svec}\!\left(
        \vg_i\vg_i^\top-\frac1d\mbi I_d
        \right),$
and $\mbi\Phi:=[\vphi_1,\ldots,\vphi_n].$
Then $ \mbi A=\mbi\Phi \mbi D_\rho,$ where $\mbi D_\rho:=\diag(\rho_1^2,\ldots,\rho_n^2).$
By the standard chi-square concentration bound and \(n=O(d^2)\), we have that
$ \max_{1\le i\le n}\rho_i^2=O_{\mathbb P}(1).$
Therefore
\[
        \|\mbi A\|
        \le
        \|\mbi\Phi\|\,\|\mbi D_\rho\|
        =
        O_{\mathbb P}(1)\|\mbi\Phi\|,
\]
so it remains to control \(\|\mbi\Phi\|\). We use the following Lemma~\ref{lem:degree-two-spherical-gram-bound} below to get \(\|\mbi\Phi\|=O_{\mathbb P}(1)\). 

\paragraph{Step 2: the linear cross part \(\vB\).}
For \(\vm\in\mathbb R^d\), define the linear map $ \mathcal{L}_{\vm}:\mathbb R^d\to \mathbb S_0^d$
by
\[
        \mathcal{L}_{\vm}(\vv)
        :=
        \vm \vv^\top + \vv \vm^\top
        -
        \frac{2\vv^\top \vm}{d}\mbi I_d.
\] Here \(\mathbb S_0^d\) denotes the space of \(d\times d\) symmetric matrices with zero trace.
Then $ \vb_i=\mathrm{svec}\bigl(\mathcal{L}_{\vm_i}(\vz_i)\bigr).$ Moreover,
\[
        \|\mathcal{L}_{\vm}(\vv)\|_F
        \le
        2\|\vm\|\|\vv\|
        +
        \frac{2|\vv^\top \vm|}{\sqrt d}
        \le
        4\|\vm\|\|\vv\|.
\]
Hence $ \|\mathcal{L}_{\vm}\|_{\op}\le 4\|\vm\|.$
In the XOR model, $ \vm_i=
        \eta_i\frac r{\sqrt d}\vu_{\kappa_i},$  where $ \eta_i\in\{\pm1\},$ and $\kappa_i\in\{1,2\}.$
Let \(I_a:=\{i:\kappa_i=a\}\).  On \(I_a\),
$
        \vb_i
        =
        \eta_i r
        \mathrm{svec}\bigl(\mathcal{L}_{\vu_a}(\vz_i)\bigr)/{\sqrt d}.
$
Let $\mbi Z_a:=[\vz_i]_{i\in I_a}\in\mathbb R^{d\times |I_a|}.$
Since the entries of \(\mbi Z_a\) are Gaussian with variance \(1/d\),
standard Gaussian matrix norm bounds give
\[
        \|\mbi Z_a\|
        =
        O_{\mathbb P}\left(
        1+\sqrt{\frac{|I_a|}{d}}
        \right)
        =
        O_{\mathbb P}(\sqrt d),
\]
because \(|I_a|\asymp n\asymp d^2\).  Therefore
\[
        \|\mbi B_{I_a}\|
        \le
        \frac r{\sqrt d}
        \|\mathcal L_{\vu_a}\|_{\op}
        \|\mbi Z_a\|
        =
        O_{\mathbb P}(1).
\]
Since there are only two axes \(a=1,2\), we can conclude that
$
        \|\mbi B\|
        =
        O_{\mathbb P}(1).
$

\paragraph{Step 3: the deterministic mean part $\mbi C$.}
For each \(i\in[n]\), we have $ \|\vm_i\|=\frac r{\sqrt d},$
and hence
\[
\begin{aligned}
        \|\vc_i\|_2
        &=
        \left\|
        \vm_i\vm_i^\top-\frac{\|\vm_i\|^2}{d}\mbi I_d
        \right\|_F                                                \\
        &\le
        \|\vm_i\vm_i^\top\|_F
        +
        \frac{\|\vm_i\|^2}{d}\|\mbi I_d\|_F                     
        =
        \|\vm_i\|^2+\frac{\|\vm_i\|^2}{\sqrt d}
        \le
        \frac{Cr^2}{d}.
\end{aligned}
\]
Therefore
$
        \|\mbi C\|
        \le
        \|\mbi C\|_F
        =
        O(1).
$
\end{proof}
\begin{lemma}[Uniform-spherical harmonic Gram bound]
\label{lem:uniform-spherical-harmonic-gram}
Let \(\vg_1,\ldots,\vg_n\iid{\rm Unif}(\mathbb S^{d-1})\), and assume
\(n\asymp d^2\).  For fixed \(m\ge2\), let \(q_m^{(d)}\) be the normalized
Gegenbauer polynomial of degree \(m\) defined in \citet{lu2022equivalence}, and define
\begin{equation}\label{eq:degree-m-spherical-gram-matrix}
        (\vA_m)_{ij}
        :=
        \frac1{\sqrt n}
        q_m^{(d)}(\sqrt d\,\vg_i^\top\vg_j)\mathbf 1_{\{i\ne j\}}.
\end{equation}
Then
$
        \|\vA_m\|_{\op}
        \prec
        d^{\max\{2-m,0\}/2}.
$
Equivalently, if \(\psi_m^{(d)}\) denotes the degree-\(m\) zonal spherical
harmonic kernel normalized so that, for fixed \(m\),
\begin{equation}\label{eq:degree-m-spherical-harmonic-kernel-normalization}
        \psi_m^{(d)}(t)
        =
        c_{m,d}d^{-m/2}q_m^{(d)}(\sqrt d\,t),
        \qquad
        c_{m,d}\asymp 1,
\end{equation}
then
\begin{equation}\label{eq:harmonic-kernel-concentration}
         \left\|
        \bigl(\psi_m^{(d)}(\vg_i^\top\vg_j)\bigr)_{i,j=1}^n
        -
        \psi_m^{(d)}(1)\vI_n
        \right\|
        =
        O_{\P}\!\left(
        \frac{\sqrt n}{d^{m/2}}
        d^{\max\{2-m,0\}/2}
        \right).
\end{equation}  
Thus, for \(n\asymp d^2\),
\begin{align}
        \left\|
        \bigl(\psi_2^{(d)}(\vg_i^\top\vg_j)\bigr)_{i,j}
        -
        \psi_2^{(d)}(1)\vI_n
        \right\|
        &=
        O_{\P}(1),
        \\
        \left\|
        \bigl(\psi_3^{(d)}(\vg_i^\top\vg_j)\bigr)_{i,j}
        -
        \psi_3^{(d)}(1)\vI_n
        \right\|
        &=
        O_{\P}(d^{-1/2})
        =
        o_{\P}(1),
        \\
        \left\|
        \bigl(\psi_4^{(d)}(\vg_i^\top\vg_j)\bigr)_{i,j}
        -
        \psi_4^{(d)}(1)\vI_n
        \right\|
        &=
        O_{\P}(d^{-1})
        =
        o_{\P}(1).
\end{align}
\end{lemma}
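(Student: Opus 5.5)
The plan is to cite the main operator-norm bound for the off-diagonal Gegenbauer Gram matrix $\vA_m$ from \citet{lu2022equivalence}, and then derive \eqref{eq:harmonic-kernel-concentration} from it by rescaling.

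\textbf{Step 1: the bound on $\|\vA_m\|_{\op}$.} For $\|\vA_m\|_{\op}\prec d^{\max\{2-m,0\}/2}$, I would invoke the polynomial-regime kernel analysis of \citet{lu2022equivalence}, specifically their estimates on the Gegenbauer Gram matrices with the diagonal removed.

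If I had to reprove it, I would use the moment method on $\E\Tr(\vA_m^{2k})$. The entries $q_m^{(d)}(\sqrt d\,\vg_i^\top\vg_j)$ satisfy three properties:
\begin{itemize}
\item each entry has mean zero over $\vg_j$ for fixed $\vg_i$, since $m\ge1$;
\item the entries are $O_\prec(1)$, by hypercontractivity of spherical harmonics;
\item the reproducing property holds: $\E_{\vg}\, q_m(\sqrt d\,\vg_i^\top\vg)\,q_m(\sqrt d\,\vg^\top\vg_j)=q_m(\sqrt d\,\vg_i^\top\vg_j)/\sqrt{B(d,m)}$, where $B(d,m)\asymp d^m$ is the dimension of the degree-$m$ harmonic space.
\end{itemize}

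For the path counting, a closed walk of length $2k$ contributes $n^{-k}$ times a product of kernel entries. Tree-like walks are the ones in which every edge is traversed exactly twice; they contribute about $(n/\sqrt n\cdot 1/\sqrt n\cdot n)^{k}$-type counts, which give the $O(1)+O(\sqrt{n/d^m})$ scaling. Non-tree walks are suppressed by factors of $d^{-m/2}$ coming from the reproducing identity.

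This yields $\|\vA_m\|\lesssim 1+\sqrt{n}/d^{m/2}$, up to the $d^{\max\{2-m,0\}/2}$ factor from the statement. When $m\ge2$ and $n\asymp d^2$ this is $O(1)$, which matches the claim.

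\textbf{Step 2: the kernel form \eqref{eq:harmonic-kernel-concentration}.} From the normalization \eqref{eq:degree-m-spherical-harmonic-kernel-normalization}, the off-diagonal part of $(\psi_m^{(d)}(\vg_i^\top\vg_j))_{i,j}$ equals $c_{m,d}\,d^{-m/2}\sqrt n\,\vA_m$. On the diagonal, $\vg_i^\top\vg_i=1$, so the diagonal entries are exactly $\psi_m^{(d)}(1)$ and cancel against $\psi_m^{(d)}(1)\vI_n$.

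Since $c_{m,d}\asymp1$, Step 1 gives the bound $O_\prec(\sqrt n\, d^{-m/2} d^{\max\{2-m,0\}/2})$. The polynomial-in-$d^\epsilon$ slack from $\prec$ is harmless wherever the target rate is a negative power of $d$. The one case that needs an actual $O_\P$ rather than $\prec$ is $m=2$; there I would use the sharper moment bound from Step 1, taking $k\to\infty$ slowly.

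\textbf{Step 3: the three specializations.} Substituting $n\asymp d^2$ into \eqref{eq:harmonic-kernel-concentration} gives the three displayed cases:
\begin{itemize}
\item $m=2$: the rate is $\sqrt n/d=O(1)$;
\item $m=3$: the rate is $d/d^{3/2}=d^{-1/2}$;
\item $m=4$: the rate is $d/d^2=d^{-1}$.
\end{itemize}

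The main obstacle is the $m=2$ case. There the off-diagonal Gram matrix is genuinely of order one and sits exactly at the critical scaling, so an $O_\P(1)$ bound without logarithmic loss requires careful combinatorics. I would rely directly on the operator-norm result of \citet{lu2022equivalence} for this case rather than rederive it.
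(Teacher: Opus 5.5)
Your proposal follows the paper's proof: cite Proposition~8 of \citet{lu2022equivalence} with $\ell=2$ to get $\|\vA_m\|_{\op}\prec d^{\max\{2-m,0\}/2}$, then observe that the off-diagonal part of $\bigl(\psi_m^{(d)}(\vg_i^\top\vg_j)\bigr)_{i,j}$ equals $c_{m,d}d^{-m/2}\sqrt n\,\vA_m$ while the diagonal is exactly $\psi_m^{(d)}(1)$, and substitute $n\asymp d^2$. Your remark that the citation only gives $O_\prec$ bounds (so the stated $O_\P$ rates, notably $O_\P(1)$ at $m=2$, strictly require a sharper moment argument) is a point the paper passes over silently; also, in your optional moment sketch, the claim that non-tree walks are suppressed fails at the critical scaling $m=2$, $n\asymp d^2$, where cycle walks contribute at the same order as tree walks (which is why the limit is MP-type rather than semicircular), although they still only yield an $O(1)$ norm.
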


\begin{proof}
This is a direct consequence of Proposition~8 of
\citet{lu2022equivalence}.  In their notation, if
\(n=\alpha d^\ell+o(d^\ell)\), then
\[
        \|\vA_m\|_{\op}
        \prec
        d^{\max\{\ell-m,0\}/2}.
\]
Taking \(\ell=2\) gives the first result of this lemma.
For the equivalent harmonic-kernel form, note that 
\[
        \bigl(\psi_m^{(d)}(\vg_i^\top\vg_j)\mathbf 1_{\{i\ne j\}}\bigr)_{i,j}=c_{m,d}d^{-m/2}\sqrt n\, \vA_m.
\]
The diagonal entries are \(\psi_m^{(d)}(1)\). Notice that $c_{m,d}=\Theta(1)$. Therefore, \eqref{eq:harmonic-kernel-concentration} is due to
\[
        \bigl(\psi_m^{(d)}(\vg_i^\top\vg_j)\bigr)_{i,j}
        -
        \psi_m^{(d)}(1)\vI_n
        =
        c_{m,d}d^{-m/2}\sqrt n\,A_m.
\]
\end{proof}

\begin{lemma}[Degree-two spherical Gram bound]
\label{lem:degree-two-spherical-gram-bound}
Let \(\vg_1,\ldots,\vg_n\iid{\rm Unif}(\mathbb S^{d-1})\), and define
\[
        \vphi_i
        :=
        \mathrm{svec}\!\left(
        \vg_i\vg_i^\top-\frac1d\vI_d
        \right),
        \qquad
        \vPhi:=[\vphi_1,\ldots,\vphi_n].
\]
If \(n/d^2=O(1)\), then  $\|\vPhi\|=O_{\P}(1).$
\end{lemma}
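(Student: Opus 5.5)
The plan is to bound $\|\vPhi\|^2=\|\vPhi^\top\vPhi\|$ by writing the $n\times n$ Gram matrix $\vPhi^\top\vPhi$ as a diagonal part plus an off-diagonal part, and then controlling the off-diagonal part with Lemma~\ref{lem:uniform-spherical-harmonic-gram} for $m=2$. Since $\mathrm{svec}$ is a Frobenius isometry, the entries are
\[
(\vPhi^\top\vPhi)_{ij}=\Big\langle \vg_i\vg_i^\top-\tfrac1d\vI_d,\ \vg_j\vg_j^\top-\tfrac1d\vI_d\Big\rangle_F=(\vg_i^\top\vg_j)^2-\tfrac1d .
\]
So $\vPhi^\top\vPhi=\big(f(\vg_i^\top\vg_j)\big)_{i,j}$ with $f(t):=t^2-\tfrac1d$. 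On the diagonal, $f(1)=1-\tfrac1d$, so the diagonal part is $(1-\tfrac1d)\vI_n$ and has norm at most $1$.

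\textbf{Step 1: identify $f$ with a degree-two zonal harmonic.} The function $t^2-\tfrac1d$ is, up to normalization, the degree-two Gegenbauer (zonal harmonic) polynomial on $\mathbb S^{d-1}$. It is orthogonal to constants under the law of $\vg_i^\top\vg_j$, because $\E(\vg_i^\top\vg_j)^2=1/d$. Hence $f=\kappa_d\,\psi_2^{(d)}$ for an explicit scalar $\kappa_d$. I will fix $\kappa_d$ by matching values at $t=1$: $\kappa_d\psi_2^{(d)}(1)=1-1/d$. Under the normalization \eqref{eq:degree-m-spherical-harmonic-kernel-normalization} one has $\psi_2^{(d)}(1)=c_{2,d}d^{-1}q_2^{(d)}(\sqrt d)$, and $q_2^{(d)}(\sqrt d)\asymp d$ because $q_2^{(d)}(x)\approx (x^2-1)/\sqrt2$. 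Therefore $\psi_2^{(d)}(1)\asymp 1$ and $\kappa_d=\Theta(1)$. Checking this normalization carefully, using the convention of \citet{lu2022equivalence}, is the main bookkeeping point of the proof.

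\textbf{Step 2: apply Lemma~\ref{lem:uniform-spherical-harmonic-gram}.} By \eqref{eq:harmonic-kernel-concentration} with $m=2$ and $n\asymp d^2$,
\[
\Big\|\big(\psi_2^{(d)}(\vg_i^\top\vg_j)\big)_{i,j}-\psi_2^{(d)}(1)\vI_n\Big\|=O_\P(1).
\]
Multiplying by $\kappa_d=O(1)$ gives $\|\vPhi^\top\vPhi-(1-\tfrac1d)\vI_n\|=O_\P(1)$. Adding back the diagonal, the triangle inequality yields $\|\vPhi\|^2\le 1+O_\P(1)$, so $\|\vPhi\|=O_\P(1)$. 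If only $n/d^2=O(1)$ is assumed rather than $n\asymp d^2$, I would first embed the sample into a larger sample of size $\lceil Cd^2\rceil$: $\vPhi$ is a column submatrix of the corresponding $\vPhi$ for the larger sample, so its operator norm can only be smaller, and the bound above applies to the larger sample.

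\textbf{Main obstacle and fallback.} The only delicate point is Step 1, namely confirming that $\psi_2^{(d)}(1)$ and $\kappa_d$ are of order one, equivalently that the factor $\sqrt n/d$ in \eqref{eq:harmonic-kernel-concentration} is $O(1)$. If matching the Gegenbauer normalization proves awkward, I would instead bound $\|\vPhi\|$ directly. The vectors $\vphi_i$ are i.i.d., isotropic in the $(p-1)$-dimensional traceless subspace up to the factor $\E\|\vphi_i\|^2/(p-1)\asymp 1/p$, and have bounded norm $\|\vphi_i\|\le1$. A matrix Bernstein (or Rudelson-type) bound for $\sum_i\vphi_i\vphi_i^\top$ then gives norm $O(n/p+\log n)$. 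This costs only a logarithm, which would need to be removed; the Lu–Yau proposition avoids that loss, which is why I would pursue Steps 1–2 first.
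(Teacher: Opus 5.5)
Your proposal is correct and follows the paper's proof. The paper likewise writes $\vPhi^\top\vPhi=(1-\frac1d)\vI_n+\vK_2$, uses the explicit form $q_2^{(d)}(x)=\frac1{\sqrt2}\sqrt{\frac{d+2}{d-1}}\,(x^2-1)$ to get $\vK_2=\frac{\sqrt{2n}}{d}\sqrt{\frac{d-1}{d+2}}\,\vA_2$, and invokes Lemma~\ref{lem:uniform-spherical-harmonic-gram} with $m=2$; your column-submatrix embedding, which passes from $n/d^2=O(1)$ to the $n\asymp d^2$ hypothesis of that lemma, is a small extra step that the paper leaves implicit.
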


\begin{proof}
For \(i,j\in[n]\),
\[
        (\vPhi^\top\vPhi)_{ij}
        =
        \left\langle
        \vg_i\vg_i^\top-\frac1d\vI_d,\,
        \vg_j\vg_j^\top-\frac1d\vI_d
        \right\rangle_F
        =
        (\vg_i^\top\vg_j)^2-\frac1d.
\]
Thus $ \vPhi^\top\vPhi
        =
        \left(1-\frac1d\right)\vI_n
        +
        \vK_2,$
where for \(i\neq j\in[n]\),
$
        (\vK_2)_{ij}
        =
        (\vg_i^\top\vg_j)^2-\frac1d.
$
From \citet{lu2022equivalence}, we know that the normalized Gegenbauer polynomial \(q_2^{(d)}\) is
\begin{equation}\label{eq:q2-normalization}
        q_2^{(d)}(x)
        =
        \frac1{\sqrt2}
        \sqrt{\frac{d+2}{d-1}}\,(x^2-1).
\end{equation}
Therefore, for \(i\ne j\),
\[
        (\vg_i^\top\vg_j)^2-\frac1d
        =
        \frac{\sqrt2}{d}
        \sqrt{\frac{d-1}{d+2}}\,
        q_2^{(d)}(\sqrt d\,\vg_i^\top\vg_j).
\]
Recall the definition of \(\vA_2\) in \eqref{eq:degree-m-spherical-gram-matrix}. Then by Lemma~\ref{lem:uniform-spherical-harmonic-gram}, we have that
\[
        \vK_2
        =
        \frac{\sqrt{2n}}{d}
        \sqrt{\frac{d-1}{d+2}}\,
        \vA_2,
\]
and $\|\vA_2\|_{\op}=O_{\P}(1).$
Since \(\sqrt n/d=O(1)\), it follows that $ \|\vK_2\|=O_{\P}(1).$
Consequently, $\|\vPhi^\top\vPhi\| = O_{\P}(1).$
\end{proof}
 
\subsubsection{The Cubic and Quartic Terms}
\label{sec:k3-k4-terms}

We next analyze the terms $\vT_k=\vD_k\vG^{\odot k}\vD_k,$ for $k=3,4$, from \eqref{eq:Mehler-use-Psigma}. Let
\begin{equation}
\label{eq:zeta-k-i-def}
        \vS:=\diag(s_1,\ldots,s_n),
        \qquad
        \vR:=\vS^{-1}\vG\vS^{-1},
        \qquad
        \vZ_k:=\diag(\zeta_k^{(1)},\ldots,\zeta_k^{(n)}).
\end{equation}
Thus \(\vR_{ij}=\langle \vx_i/s_i,\vx_j/s_j\rangle\) is the normalized
Gram matrix. 
Therefore
$
        \vT_k=\vZ_k\vR^{\odot k}\vZ_k.
$
We shall use the following fixed-degree harmonic Gram estimate.  It is a
standard consequence of the trace-moment method for random inner-product kernel
matrices; see, for instance, \citet{cheng2013spectrum},
\citet{lu2022equivalence}, and the quadratic-regime estimates in
\citet{pandit2024universalitykernelrandommatrices}.  We record the statement in
the form needed here.

\begin{lemma}[Harmonic Gram bounds for normalized XOR directions]
\label{lem:harmonic-gram-xor}
Recall the normalized Gram matrix \(\vR\) defined in \eqref{eq:zeta-k-i-def}.
For fixed \(m\ge2\), let \(\psi_m^{(d)}\) denote the degree-\(m\) zonal
spherical harmonic kernel on \(\mathbb S^{d-1}\), normalized so that
\[
        \psi_m^{(d)}(u^\top v)
        =
        c_{m,d}d^{-m/2}q_m^{(d)}(\sqrt d\,u^\top v),
        \qquad
        c_{m,d}\asymp 1,
\]
where \(q_m^{(d)}\) is the \(L^2\)-normalized Gegenbauer polynomial used in
\citet{lu2022equivalence}.  Define $ \vPsi_m
        :=
        \bigl(\psi_m^{(d)}(\vR_{ij})\bigr)_{i,j=1}^n$ on XOR dataset. 
If \(n\asymp d^2\), then
for each fixed \(m\ge2\),
\[
        \left\|\vPsi_m-\psi_m^{(d)}(1)\vI_n\right\|
        =
        O_{\P}\left(\sqrt{\frac{n}{d^m}}+\frac{n}{d^m}\right).
\]
\end{lemma}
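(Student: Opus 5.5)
The plan is to reduce to the uniform-spherical case of Lemma~\ref{lem:uniform-spherical-harmonic-gram} by a perturbation argument, because the normalized XOR directions $\vx_i/s_i$ are close to, but not exactly, uniform on the sphere. First I would write $\vx_i=\vz_i+\vm_i$ with $\vz_i=\rho_i\vg_i$, where $\vg_i\sim{\rm Unif}(\mathbb S^{d-1})$ are i.i.d.\ and independent of the $\rho_i$. Then $\vx_i/s_i=(\rho_i\vg_i+\vm_i)/s_i$, with $\|\vm_i\|=r/\sqrt d$ and $s_i,\rho_i=1+O_\prec(d^{-1/2})$. The degree-$m$ zonal kernel is a Gram matrix of feature maps: $\psi_m^{(d)}(u^\top v)=\langle Y_m(u),Y_m(v)\rangle$, where $Y_m:\mathbb S^{d-1}\to\mathcal H_m$ is the (normalized) vector of degree-$m$ harmonics. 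Hence $\vPsi_m=\vF_m^\top\vF_m$, with $\vF_m$ having columns $Y_m(\vx_i/s_i)$, and it suffices to bound $\|\vF_m^\top\vF_m-\psi_m^{(d)}(1)\vI_n\|$.

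Next, for the uniform columns $\vF_m^{0}:=[Y_m(\vg_i)]$, Lemma~\ref{lem:uniform-spherical-harmonic-gram} directly gives
\[
\|(\vF_m^0)^\top\vF_m^0-\psi_m^{(d)}(1)\vI_n\|=O_\P\bigl(\sqrt{n}\,d^{-m/2}\,d^{\max\{2-m,0\}/2}\bigr).
\]
For $m\ge2$ this is $O_\P(\sqrt{n/d^m})$. I would then extend $Y_m$ homogeneously, as a harmonic polynomial of degree $m$ on $\R^d$, so that $Y_m(\vx_i/s_i)=s_i^{-m}\,\mathcal Y_m(\rho_i\vg_i+\vm_i)$. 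Expanding the polynomial $\mathcal Y_m$ in powers of $\vm_i$ gives
\[
\mathcal Y_m(\rho_i\vg_i+\vm_i)=\sum_{k=0}^m \rho_i^{m-k}\,\mathcal Y_{m,k}(\vg_i;\vm_i),
\]
where $\mathcal Y_{m,k}$ is degree $m-k$ in $\vg_i$ and degree $k$ in $\vm_i$. The $k=0$ term is the uniform column times the diagonal factor $\rho_i^m s_i^{-m}=1+O_\prec(d^{-1/2})$. Diagonal rescalings change $\vF^\top\vF-\psi(1)\vI$ by at most $O_\prec(d^{-1/2})\|\vF^0\|^2$ on the off-diagonal part, and give $o(1)$ corrections on the diagonal.

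The terms with $k\ge1$ are structured. Because $\vm_i\in\{\pm \tfrac{r}{\sqrt d}\vu_1,\pm\tfrac r{\sqrt d}\vu_2\}$ takes only four values, each $\mathcal Y_{m,k}(\cdot;\vm_i)$ is $(r/\sqrt d)^k$ times a fixed multilinear contraction of $\mathcal Y_m$ with $\vu_a^{\otimes k}$, applied to $\vg_i$. Splitting the indices into the four XOR blocks, each block $\{\mathcal Y_{m,k}(\vg_i;\vu_a)\}_{i\in I_a}$ is a feature matrix of a degree-$(m-k)$ polynomial of uniform vectors. Its operator norm I would bound, in the same way as in Step 2 of Lemma~\ref{lem:centered-quadratic-lift-bound-detailed}, by combining $\|\mathcal Y_m\|_{\op}$-type contraction bounds with Gaussian/spherical matrix norms $O_\P(1+\sqrt{n/d^{m-k}})$. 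The factor $(r/\sqrt d)^k$ then makes the contribution at most $O_\P(\sqrt{n/d^m})+O_\P(d^{-k/2})$. Assembling $\vF_m=\vF_m^0\vD+\vE$ and using $\|\vF^\top\vF-\vF_0^\top\vF_0\|\le\|\vE\|(2\|\vF_0\|+\|\vE\|)$, with $\|\vF_0\|^2\le \psi_m(1)+O(\sqrt{n/d^m})$, produces the stated bound $O_\P(\sqrt{n/d^m}+n/d^m)$. The $n/d^m$ term comes from squaring the perturbation and from the rank-one constant directions when $m=2$.

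I expect the main obstacle to be the cross terms with $k\ge1$, specifically obtaining operator-norm (not Frobenius) control for the lower-degree polynomial features of the $\vg_i$: these are not harmonic, so Lemma~\ref{lem:uniform-spherical-harmonic-gram} does not apply verbatim. My first attempt would be to decompose each $\mathcal Y_{m,k}(\cdot;\vu_a)$ into spherical harmonics of degrees $\le m-k$ and apply Lemma~\ref{lem:uniform-spherical-harmonic-gram} degree by degree, absorbing the low-degree pieces (degrees $0,1$) as rank-$O(d)$ blocks whose norms are compensated by the $d^{-k/2}$ prefactors. If that fails, I would fall back on the trace-moment method of \citet{cheng2013spectrum} directly for $\vPsi_m$ with the XOR-shifted inputs, as in the cited references.
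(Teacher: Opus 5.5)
\textbf{Gap: the final assembly cannot reach the stated rate for $m\ge4$.} Your reduction to Lemma~\ref{lem:uniform-spherical-harmonic-gram} and the homogeneous expansion of $\mathcal Y_m(\rho_i\vg_i+\vm_i)$ are sound. The problem appears only at the end, and it persists even if all your block bounds are granted.

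The $k=1$ part of the perturbation is $\vE^{(1)}=[\rho_i^{m-1}s_i^{-m}\partial_{\vm_i}\mathcal Y_m(\vg_i)]_i$. Differentiating $\langle\mathcal Y_m(x),\mathcal Y_m(y)\rangle=\|x\|^m\|y\|^m\psi_m^{(d)}(\hat x^\top\hat y)$ gives, for unit $\vg_i$,
\[
\|\partial_{\vu_a}\mathcal Y_m(\vg_i)\|^2=(\psi_m^{(d)})'(1)+O_\prec(d^{-1})\approx m\,\psi_m^{(d)}(1)\asymp1 .
\]
Hence every column of $\vE$ has norm $\asymp r/\sqrt d$, so $\|\vE\|\gtrsim d^{-1/2}$. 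Since also $\|\vF_m^0\|\ge\psi_m^{(d)}(1)^{1/2}\asymp1$, the bound $\|\vE\|(2\|\vF_m^0\|+\|\vE\|)$ is never better than order $d^{-1/2}$. Your own per-term estimate already contains $O_\P(d^{-k/2})$ at $k=1$.

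With $n\asymp d^2$, the target $\sqrt{n/d^m}+n/d^m$ is of order $d^{1-m/2}\le d^{-1}$ for $m\ge4$. So what you actually prove is $O_\P(\sqrt{n/d^m}+n/d^m+d^{-1/2})$. That gives the lemma for $m\in\{2,3\}$, but only $o_\P(1)$ for $m\ge4$. The weaker bound happens to suffice for Lemmas~\ref{lem:k3-term} and~\ref{lem:k4-term}, but it is not the statement.

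The loss comes from bounding products of operator norms, not from the statement. Because $\vR_{ii}=1$, the diagonal of $\vPsi_m$ is exactly $\psi_m^{(d)}(1)$. All diagonal contributions of the individual pieces therefore cancel in the sum; the leading ones, of order $d^{-1}$, come from your rescaling $\vD$, from $(\vF_m^0)^\top\vE^{(1)}$ and from $(\vE^{(1)})^\top\vE^{(1)}$. By contrast, the off-diagonal entries of these first-order terms are only $O_\prec(d^{-(m+1)/2})$; for unit $\vu$,
\[
\bigl\langle \mathcal Y_m(\vg_i),\partial_{\vu}\mathcal Y_m(\vg_j)\bigr\rangle
=m(\vu^\top\vg_j)\,\psi_m^{(d)}(\vg_i^\top\vg_j)
+(\psi_m^{(d)})'(\vg_i^\top\vg_j)\bigl(\vu^\top\vg_i-(\vg_i^\top\vg_j)\,\vu^\top\vg_j\bigr).
\]
The missing step is to remove the diagonal exactly and bound the off-diagonal parts of $(\vF_m^0)^\top\vE^{(k)}$ and $(\vE^{(k)})^\top\vE^{(l)}$ directly. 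That requires operator-norm bounds for diagonally weighted harmonic Gram matrices of degrees $m,m-1,\dots$, not $\|\vE\|\,\|\vF_m^0\|$.

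\textbf{The block bounds are also still open, and the obstruction is not the one you name.} Each coordinate of $\partial_{\vu_a}^k\mathcal Y_m$ is harmonic of degree $m-k$, since the Laplacian commutes with $\partial_{\vu_a}$. What prevents applying Lemma~\ref{lem:uniform-spherical-harmonic-gram} is that the induced Gram kernel is not zonal: it depends on $\vu_a^\top\vg_i$ and $\vu_a^\top\vg_j$ as well as on $\vg_i^\top\vg_j$. Harmonicity must also be used genuinely. A bound of the form $\|\text{contraction}\|\cdot\|[\vg_i^{\otimes(m-k)}]_i\|$ fails, because for example $\|[\vg_i^{\otimes 2}]_{i\le n}\|\asymp\sqrt{n/d}$ owing to the trace direction.

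\textbf{Comparison with the paper.} The paper avoids any perturbative split. It shows that the angular law of $\vx_i/s_i$ has a density with respect to the uniform measure that is bounded in every $L^p$ uniformly in $d$. It then transfers the trace moments $\E\Tr\bigl((\vPsi_m-\psi_m^{(d)}(1)\vI_n)^{2q}\bigr)$ from the uniform case, so the exact diagonal and the mean-induced $n/d^m$ block are built in.

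That transfer, however, bounds each signed path expectation by $C_{q,r}(\E_{\rm sph}|\prod_\ell\psi_m^{(d)}|^2)^{1/2}$. All Gegenbauer coefficients of $(\psi_m^{(d)})^2$ are nonnegative, so for a distinct-vertex cycle this is at least $(\E\,\psi_m^{(d)}(\vtheta_1^\top\vtheta_2)^2)^{q}\asymp d^{-mq}$. Summing over the $\asymp n^{2q}$ such cycles already gives order $(n/d^{m/2})^{2q}$, far above $n(\sqrt{n/d^m}+n/d^m)^{2q}$. So that step discards the same sign cancellations, and on either route they must be retained explicitly.
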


\begin{proof}
The proof has two parts.  First we recall Lemma~\ref{lem:uniform-spherical-harmonic-gram} for standard uniform-sphere distributions.
Then we show that the same estimate remains valid for the normalized XOR
directions.

Let $\vtheta_1,\ldots,\vtheta_n \iid {\rm Unif}(\mathbb S^{d-1})$
and recall $\vA_m$ defined in \eqref{eq:degree-m-spherical-gram-matrix}. Define the corresponding harmonic matrix by
\[
        \vK_m^{\rm sph}
        :=\bigl(\psi_m^{(d)}(\vtheta_i^\top\vtheta_j)\bigr)_{i,j=1}^n
        -
        \psi_m^{(d)}(1)\vI_n=
        \bigl(
        \psi_m^{(d)}(\vtheta_i^\top\vtheta_j)\mathbf 1_{\{i\ne j\}}
        \bigr)_{i,j=1}^n.
\]
Then Lemma~\ref{lem:uniform-spherical-harmonic-gram} implies that $ \|\vK_3^{\rm sph}\|=o_{\P}(1),$ and $\|\vK_4^{\rm sph}\|=o_{\P}(1).$

For the XOR model, conditional on the cluster of \(i\), we have $\vx_i=\frac{\vg_i+\vmu_i}{\sqrt d},$ where $\vg_i\sim \mathcal N(\vzero,\vI_d)$ and $\vmu_i\in\{\pm r\vu_1,\pm r\vu_2\},$ and therefore $ \vu_i=\frac{\vg_i+\vmu_i}{\|\vg_i+\vmu_i\|_2}$ defined in Lemma~\ref{lem:centered-quadratic-lift-bound-detailed}.
Thus \(\vu_i\) is not uniform on the sphere.  However, we can claim that its angular law is
absolutely continuous with respect to the uniform measure, with uniformly
bounded \(L^p\) density for every fixed \(p\). We now verify this claim.  Let \(\sigma_{d-1}\) denote the uniform probability measure on
\(\mathbb S^{d-1}\).  For fixed \(\vmu\) with \(\|\vmu\|\le r\), let
\[
        \mathbb P_{\vmu}
        :=
        {\rm Law}\left(\frac{\vg+\vmu}{\|\vg+\vmu\|_2}\right),
        \qquad
        \vg\sim\mathcal N(\vzero,\vI_d).
\]
Let $L_{\vmu}:=\frac{d\mathbb{P}_{\vmu}}{d\sigma_{d-1}}.$
Using polar coordinates for a standard Gaussian vector, $ \vg=\rho\vtheta,$ where $\vtheta\sim\sigma_{d-1},$ and $\rho\ge0$ is independent of $\vtheta$ with $\rho^2\sim\chi_d^2.$ One can verify that the density \(L_{\vmu}\) is given by
\[
        L_{\vmu}(\vtheta)
        =
        \E_\rho
        \exp\left(
        \rho\,\vmu^\top\vtheta-\frac{\|\vmu\|^2}{2}
        \right).
\] 
We prove that for every fixed \(p<\infty\),
\begin{equation}
\label{eq:Lp-angular-density-bound}
        \sup_{d\ge1}
        \sup_{\|\vmu\|\le r}
        \|L_{\vmu}\|_{L^p(\sigma_{d-1})}
        \le C_{p,r}<\infty .
\end{equation}
To prove this, Jensen's inequality gives
\[
        L_{\vmu}(\vtheta)^p
        \le
        \E_\rho
        \exp\left(
        p\rho\,\vmu^\top\vtheta-\frac{p\|\vmu\|^2}{2}
        \right).
\]
Integrating over \(\theta\) and using the standard spherical sub-Gaussian
bound
\[
        \E_{\vtheta\sim\sigma_{d-1}}
        \exp(t\,\vmu^\top\vtheta)
        \le
        \exp\left(\frac{C t^2\|\vmu\|^2}{d}\right),
\]
we get
\[
        \|L_{\vmu}\|_{L^p(\sigma_{d-1})}^p
        \le
        e^{-p\|\vmu\|^2/2}
        \E_\rho
        \exp\left(
        \frac{C p^2\rho^2\|\vmu\|^2}{d}
        \right).
\]
Since \(\rho^2\sim\chi_d^2\), the last expectation is bounded uniformly in
\(d\) for every \(p\) and \(\|\vmu\|\le r\).  This proves
\eqref{eq:Lp-angular-density-bound}.

Now fix \(m\ge 2\).  Let $ \vK_m^{\rm xor}
        :=
        \bigl(
        \psi_m^{(d)}(\vu_i^\top\vu_j)\mathbf 1_{\{i\ne j\}}
        \bigr)_{i,j=1}^n .$
For an even integer \(2q\),
\[
        \E_{\rm xor}\Tr\bigl((\vK_m^{\rm xor})^{2q}\bigr)
        =
        \sum_{i_1,\ldots,i_{2q}}
        \E_{\rm xor}
        \prod_{\ell=1}^{2q}
        \psi_m^{(d)}(\vu_{i_\ell}^\top\vu_{i_{\ell+1}})
        \mathbf 1_{\{i_\ell\ne i_{\ell+1}\}},
\]
where \(i_{2q+1}=i_1\).  For a fixed index sequence
\((i_1,\ldots,i_{2q})\), let \(V(i)\) be the set of distinct vertices appearing
in the sequence.  Conditional on the cluster assignments, the variables
\((\vu_j)_{j\in V(i)}\) are independent shifted angular variables.  Hence their
joint law has density $\prod_{j\in V(i)}L_{\vmu_j}(\vtheta_j)$
with respect to the corresponding product of uniform spherical measures.

By Hölder's inequality and \eqref{eq:Lp-angular-density-bound}, for every fixed
\(q\) there is a constant \(C_{q,r}\) such that
\[
\begin{aligned}
\left|
\E_{\rm xor}
        \prod_{\ell=1}^{2q}
        \psi_m^{(d)}(\vu_{i_\ell}^\top\vu_{i_{\ell+1}})
        \mathbf 1_{\{i_\ell\ne i_{\ell+1}\}}
\right| 
 \le
C_{q,r}
\left(
\E_{\rm sph}
\left|
        \prod_{\ell=1}^{2q}
        \psi_m^{(d)}(\vtheta_{i_\ell}^\top\vtheta_{i_{\ell+1}})
        \mathbf 1_{\{i_\ell\ne i_{\ell+1}\}}
\right|^{2}
\right)^{1/2}.
\end{aligned}
\]
The right-hand side is controlled by the same moment calculation used for the
uniform-sphere Gegenbauer matrices in \cite[Appendix~E]{lu2022equivalence};
the only change is that the moment order is doubled, which only changes the
constant because \(q\) is fixed.  Therefore, for every fixed \(q\),
\begin{equation}
\label{eq:xor-harmonic-trace-moment}
        \E_{\rm xor}\Tr\bigl((\vK_m^{\rm xor})^{2q}\bigr)
        \le
        C_{q,r}
        n
        \left(
        \sqrt{\frac{n}{d^m}}+\frac{n}{d^m}
        \right)^{2q}
        +
        C_{q,r}.
\end{equation}
Consequently, by Markov's inequality,
\[
        \|\vK_m^{\rm xor}\|
        =
        O_{\P}\left(
        \sqrt{\frac{n}{d^m}}+\frac{n}{d^m}
        \right).
\]
Since $\vPsi_m-\psi_m^{(d)}(1)\vI_n
        =
        \vK_m^{\rm xor},$
the claimed estimates follow.
\end{proof}

We shall also use the following elementary bound on the normalized Gram matrix.

\begin{lemma}[Operator norm of the normalized Gram matrix]
\label{lem:R-op-bound}
Under the XOR model and \(n\asymp d^2\),
\begin{equation}\label{eq:R-op-bound}
        \|\vR\|=O_{\P}\left(\frac nd\right)=O_{\P}(d).
\end{equation}
Moreover, deterministically, $\|\vR\|\ge \frac nd.$
Thus \(\|\vR\|=\Theta_{\P}(d)\).
\end{lemma}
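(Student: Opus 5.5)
The plan is to sandwich $\|\vR\|$ between explicit quantities. Write $\vR=\vS^{-1}\vX^\top\vX\vS^{-1}$ with $\vS=\diag(s_1,\ldots,s_n)$ and $s_i=\|\vx_i\|$. For the upper bound we will use $\|\vR\|\le \|\vS^{-1}\|^2\|\vX\|^2$. Lemma~\ref{lem:xor_orthonormal} in regime (A2) gives $\max_i|s_i-1|\le C\sqrt{\log n/d}=o(1)$ with probability at least $1-n^{-c}$. Hence $\|\vS^{-1}\|\le 2$ on that event, and the same lemma gives $\|\vX\|\le C(1+(1+r)\sqrt{n/d})$. Since $r=O(1)$ and $n\asymp d^2$, this yields $\|\vX\|^2=O(n/d)=O(d)$, which proves \eqref{eq:R-op-bound}. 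The concrete input here is just $\|\vZ\|\le C(1+\sqrt{n/d})$ for the Gaussian part together with $\|\vM\|=r\sqrt{n/(2d)}$; both are already contained in that lemma.

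For the deterministic lower bound, note that $\vR$ is positive semidefinite with unit diagonal, since $\vR_{ii}=\|\vx_i/s_i\|^2=1$. We will test against $\vu=\mathbf 1_n/\sqrt n$ and a second convenient quantity. One option is the trace: $\Tr\vR=n$, and $\rank{\vR}\le d$ because $\vR$ is a Gram matrix of $n$ vectors in $\R^d$. A positive semidefinite matrix of rank at most $d$ with trace $n$ has largest eigenvalue at least $n/d$. This gives $\|\vR\|\ge n/d$ for every realization, with no probabilistic input. Combined with the upper bound, this yields $\|\vR\|=\Theta_\P(d)$ in the regime $n\asymp d^2$.

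The only mildly delicate point is the statement ``$O_\P(n/d)$''. The bound $\|\vX\|^2\lesssim(1+\sqrt{n/d})^2$ should be read as $O(n/d)$ only because $n/d\to\infty$ here, so the additive constant is absorbed. No local-law or moment argument is needed. The main thing to double-check is that Lemma~\ref{lem:xor_orthonormal} indeed covers (A2) with $r=O(1)$, which it does, so the high-probability event there suffices.
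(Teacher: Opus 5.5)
Your proof is correct and follows the paper's argument essentially step for step. For the deterministic lower bound you use $\|\vR\|\ge \Tr\vR/\rank{\vR}\ge n/d$, and for the upper bound you use $\|\vR\|\le\|\vS^{-1}\|^2\|\vX\|^2$ together with Lemma~\ref{lem:xor_orthonormal} in regime (A2). The aside about testing against $\mathbf 1_n/\sqrt n$ should be dropped, since that vector alone gives no deterministic $n/d$ lower bound; the trace argument is what does the work.
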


\begin{proof}
Let $\vU:=[\vu_1,\ldots,\vu_n]\in\mathbb R^{d\times n}.$
Then \(\vR=\vU^\top\vU\), so \(\rank{\vR}\le d\) and $\Tr \vR=\sum_{i=1}^n \|\vu_i\|_2^2=n.$
Therefore
\[
        \|\vR\|\ge \frac{\Tr\vR}{\rank{\vR}}\ge \frac nd .
\]

For the upper bound, write $ \vR=\vS^{-1}\vX^\top\vX\vS^{-1}.$
By Lemma~\ref{lem:xor_orthonormal}, there is a constant \(c>0\) such that $\min_{i\in[n]}s_i\ge c$ and $\|\vX\|\le c\sqrt{n/d}$,
with probability tending to one. Hence we conclude \eqref{eq:R-op-bound}.
\end{proof}

\begin{lemma}[The cubic term]
\label{lem:k3-term}
Recall $a_3:=\E[\sigma(\xi)h_3(\xi)]$. Then
\begin{equation}
\label{eq:k3-term-expansion}
        \vP_\sigma\vT_3\vP_\sigma
        =
        a_3^2\vP_\sigma
        +
        o_{\P}(1).
\end{equation} 
\end{lemma}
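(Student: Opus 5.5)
We decompose $\vR^{\odot3}$ into its degree-three harmonic part plus a lower-degree correction. On the sphere, $t^3$ splits into the degree-three zonal harmonic and a degree-one piece: $t^3=\kappa_{3,d}\,\psi_3^{(d)}(t)+\frac{3}{d+2}\,t$ for an explicit constant $\kappa_{3,d}$. Applying this entrywise to $\vR$ gives
\[
\vR^{\odot3}=\kappa_{3,d}\vPsi_3+\tfrac{3}{d+2}\vR .
\]
We evaluate the diagonal normalization at $t=1$. Since $\vR_{ii}=1$, the diagonal of $\kappa_{3,d}\vPsi_3$ equals $1-\frac{3}{d+2}$. Hence
\[
\kappa_{3,d}\vPsi_3=\bigl(1-\tfrac{3}{d+2}\bigr)\vI_n+\kappa_{3,d}\bigl(\vPsi_3-\psi_3^{(d)}(1)\vI_n\bigr).
\]
The constant $\kappa_{3,d}\asymp1$ under the normalization $\psi_3^{(d)}(t)=c_{3,d}d^{-3/2}q_3^{(d)}(\sqrt d\,t)$, because the leading coefficient of $q_3^{(d)}$ is of order one. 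Lemma~\ref{lem:harmonic-gram-xor} with $m=3$ and $n\asymp d^2$ then bounds the off-diagonal harmonic part by $O_\P(d^{-1/2}+d^{-1})=o_\P(1)$.

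The degree-one correction is handled by the projection. It equals $\frac{3}{d+2}\vZ_3\vR\vZ_3=\frac{3}{d+2}\vZ_3\vS^{-1}\vX^\top\vX\vS^{-1}\vZ_3$, which has operator norm $\Theta_\P(1)$ by Lemma~\ref{lem:R-op-bound}, so it is not small by itself. However, $\vZ_3\vS^{-1}=\vB_\sigma$ by \eqref{eq:mu-A-nuisance-def}, so this term equals $\frac{3}{d+2}\vB_\sigma\vX^\top\vX\vB_\sigma$. Its range lies in $\operatorname{Range}(\vB_\sigma\vX^\top)\subseteq\mathcal U_\sigma^+$, and therefore $\vP_\sigma$ annihilates it exactly. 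This is precisely why $\vB_\sigma\vX^\top$ was included in the nuisance space.

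We then replace the diagonal weights by constants. Write
\[
\vT_3=\vZ_3\vR^{\odot3}\vZ_3=\bigl(1-\tfrac{3}{d+2}\bigr)\vZ_3^2+\kappa_{3,d}\,\vZ_3\bigl(\vPsi_3-\psi_3^{(d)}(1)\vI_n\bigr)\vZ_3+\tfrac{3}{d+2}\vB_\sigma\vX^\top\vX\vB_\sigma .
\]
Since $\zeta_3(s)$ is continuous with $\zeta_3(1)=a_3$, Lemma~\ref{lem:xor_orthonormal} gives $\max_i|s_i-1|=o_\P(1)$. Hence $\|\vZ_3^2-a_3^2\vI_n\|=o_\P(1)$ and $\|\vZ_3\|=O_\P(1)$. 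Conjugating by the contraction $\vP_\sigma$ then yields $\vP_\sigma\vT_3\vP_\sigma=a_3^2\vP_\sigma+o_\P(1)$, using $\vP_\sigma^2=\vP_\sigma$.

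The main obstacle is the linear-in-$\vR$ remainder. With $n\asymp d^2$ it has norm of order $n/d^2$, which is order one rather than vanishing. It has to be removed exactly by the projector, so the identification $\vZ_3\vS^{-1}=\vB_\sigma$ must be checked carefully. A secondary check is that the Gegenbauer identity uses exactly the normalization assumed in Lemma~\ref{lem:harmonic-gram-xor}.
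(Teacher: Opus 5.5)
Your proposal is correct and follows essentially the same argument as the paper. You split $t^3=\frac{3}{d+2}t+\psi_3^{(d)}(t)$ entrywise, control the off-diagonal harmonic part via Lemma~\ref{lem:harmonic-gram-xor} with $m=3$, observe that $\frac{3}{d+2}\vZ_3\vR\vZ_3=\frac{3}{d+2}\vB_\sigma\vX^\top\vX\vB_\sigma$ is annihilated exactly by $\vP_\sigma$, and replace $\vZ_3$ by $a_3\vI_n$. The only difference is cosmetic: you carry an explicit normalization constant $\kappa_{3,d}\asymp 1$, whereas the paper takes $\psi_3^{(d)}(t):=t^3-\frac{3}{d+2}t$ directly and uses $\psi_3^{(d)}(1)=1+O(d^{-1})$.
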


\begin{proof}
The degree-three zonal spherical harmonic decomposition is
\begin{equation}
\label{eq:cubic-spherical-decomposition}
        t^3=\frac{3}{d+2}t+\psi_3^{(d)}(t),
\end{equation}
where $\psi_3^{(d)}(t):=t^3-\frac{3}{d+2}t$ defined in \eqref{eq:degree-m-spherical-harmonic-kernel-normalization}.
Applying \eqref{eq:cubic-spherical-decomposition} entrywise to \(\vR\), we get
\[
        \vR^{\odot3}
        =
        \frac{3}{d+2}\vR+\vPsi_3,
        \qquad
        (\vPsi_3)_{ij}:=\psi_3^{(d)}(\vR_{ij}).
\]
By Lemma~\ref{lem:harmonic-gram-xor},
$
        \vPsi_3
        =
        \psi_3^{(d)}(1)\vI_n+\vE_3,
$
where 
$
        \|\vE_3\|=o_{\P}(1).
$
Since $\psi_3^{(d)}(1)=1+O(d^{-1}),$
and \(\|\vZ_3-a_3\vI_n\|=o_{\P}(1)\) from Lemma~\ref{lem:xor_orthonormal}, we have
\[
        \vP_\sigma\vZ_3\vPsi_3\vZ_3\vP_\sigma
        =
        \psi_3^{(d)}(1)\vP_\sigma\vZ_3^2\vP_\sigma
        +
        \vP_\sigma\vZ_3\vE_3\vZ_3\vP_\sigma
        =
        a_3^2\vP_\sigma+o_{\P}(1).
\]
Therefore,
\[
        \vP_\sigma\vT_3\vP_\sigma
        =
        \vP_\sigma\vZ_3\vR^{\odot3}\vZ_3\vP_\sigma
        =
        a_3^2\vP_\sigma
        +
        \frac{3}{d+2}
        \vP_\sigma\vZ_3\vR\vZ_3\vP_\sigma
        +
        o_{\P}(1).
\]
Finally, \eqref{eq:k3-term-expansion} is derived from definition of $\vP_\sigma$ in \eqref{eq:P-sigma-plus-bbp} and 
$
        \vZ_3\vR\vZ_3
        =
        \vZ_3\vS^{-1}\vG\vS^{-1}\vZ_3
        =
        \vB_{3,\sigma}\vG\vB_{3,\sigma}.
$
\end{proof}

\begin{lemma}[The quartic term]
\label{lem:k4-term}
Recall $a_4:=\E[\sigma(\xi)h_4(\xi)]$.
Then 
$
       \vP_\sigma\vT_4\vP_\sigma
        =
        a_4^2\vP_\sigma+o_{\P}(1).
$
\end{lemma}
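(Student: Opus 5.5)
We mirror the proof of Lemma~\ref{lem:k3-term}, replacing the cubic decomposition with the degree-four zonal decomposition. Write $\vT_4=\vZ_4\vR^{\odot4}\vZ_4$ and decompose $t^4$ into zonal spherical harmonics of degrees $0,2,4$:
\begin{equation*}
t^4=\psi_4^{(d)}(t)+\alpha_d\,\psi_2^{(d)}(t)+\beta_d,
\end{equation*}
with $\psi_2^{(d)}(t)=t^2-\frac1d$, $\alpha_d=\frac{6}{d+4}$, $\beta_d=\frac{3}{d(d+2)}$, and $\psi_4^{(d)}(1)=1+O(d^{-1})$. Applied entrywise,
\begin{equation*}
\vR^{\odot4}=\vPsi_4+\alpha_d\Big(\vR^{\odot2}-\tfrac1d\mathbf 1\mathbf 1^\top\Big)+\beta_d\mathbf 1\mathbf 1^\top.
\end{equation*}
We handle the three pieces separately.

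\textbf{Harmonic piece.} By Lemma~\ref{lem:harmonic-gram-xor} with $m=4$ and $n\asymp d^2$, $\vPsi_4=\psi_4^{(d)}(1)\vI_n+\vE_4$ with $\|\vE_4\|=O_\P(d^{-1})$. Moreover $\|\vZ_4-a_4\vI_n\|=\max_i|\zeta_4(s_i)-a_4|=o_\P(1)$, since $\zeta_4$ is continuous at $1$ and $\max_i|s_i-1|=o_\P(1)$ by Lemma~\ref{lem:xor_orthonormal}. Hence
\begin{equation*}
\vP_\sigma\vZ_4\vPsi_4\vZ_4\vP_\sigma=a_4^2\vP_\sigma+o_\P(1).
\end{equation*}

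\textbf{Constant piece.} $\beta_d\vZ_4\mathbf 1\mathbf 1^\top\vZ_4$ is rank one with norm $\beta_d\|\vZ_4\mathbf 1\|^2=O(n/d^2)=O(1)$, so it is not negligible on its own. Write $\vZ_4\mathbf 1=a_4\mathbf 1+\vr$ with $r_i=\zeta_4(s_i)-a_4$. Then $\vP_\sigma\mathbf 1=0$, and $\|\vP_\sigma\vr\|^2=o_\P(d)$ by the argument of Lemma~\ref{lem:radial-cancellation-Psigma} applied to $g=\zeta_4$. Consequently the projected piece has norm at most $\beta_d\,o_\P(d)=o_\P(1)$.

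\textbf{Degree-two piece.} $\vR^{\odot2}-\frac1d\mathbf 1\mathbf 1^\top=\vPsi$ is exactly the matrix in the proof of Lemma~\ref{lem:k2-term}, and $\vD_v\vPsi\vD_v=\widetilde{\vH}$ has $\|\widetilde{\vH}\|=O_\P(1)$ by Lemma~\ref{lem:centered-quadratic-lift-bound-detailed}. Since $\vD_v^{-1}$ is bounded with high probability,
\begin{equation*}
\alpha_d\|\vZ_4\vPsi\vZ_4\|\lesssim d^{-1}\,\|\widetilde{\vH}\|=O_\P(d^{-1}).
\end{equation*}
Equivalently, Lemma~\ref{lem:harmonic-gram-xor} with $m=2$ gives $\|\vPsi_2\|=O_\P(1)$ directly. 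Summing the three pieces yields $\vP_\sigma\vT_4\vP_\sigma=a_4^2\vP_\sigma+o_\P(1)$.

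\textbf{Main obstacle.} The delicate step is the rank-one constant part, whose norm is order one and which survives only through the radial fluctuations of $\zeta_4(s_i)$. Its removal relies on $\mathbf 1\in\mathcal U_\sigma^+$ together with the second-order radial cancellation of Lemma~\ref{lem:radial-cancellation-Psigma}, whose bound $\tau_n^2B^2=O(\log n)$ suffices after multiplying by $\beta_d\asymp d^{-2}$. The zonal coefficients themselves can be checked by integrating against the spherical measure, but only their orders $\alpha_d=O(d^{-1})$ and $\beta_d=O(d^{-2})$ are used.
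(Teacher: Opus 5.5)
Your proposal is correct and follows the paper's proof essentially step for step. It uses the same degree-$0/2/4$ zonal decomposition of $t^4$ with the same coefficients, Lemma~\ref{lem:harmonic-gram-xor} with $m=4$ for the $\vPsi_4$ piece, an $O_\P(1)$ bound on $\vR^{\odot2}-\frac1d\mathbf 1\mathbf 1^\top$ times $6/(d+4)$ for the middle piece, and $\vP_\sigma\mathbf 1=0$ for the constant piece. The one difference is that your ``main obstacle'' is milder than you suggest: no second-order radial cancellation is needed, since the paper simply bounds $\|\vP_\sigma\vZ_4\mathbf 1\|^2\le\|(\vZ_4-a_4\vI_n)\mathbf 1\|^2\prec n\log n/d$, and multiplying by $3/(d(d+2))$ already gives $o(1)$.
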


\begin{proof}
The degree-four zonal spherical harmonic decomposition is
\begin{equation}
\label{eq:quartic-spherical-decomposition}
        t^4
        =
        \frac{3}{d(d+2)}
        +
        \frac{6}{d+4}\left(t^2-\frac1d\right)
        +
        \psi_4^{(d)}(t),
\end{equation}
where $\psi_4^{(d)}(t)$ is defined in \eqref{eq:degree-m-spherical-harmonic-kernel-normalization}. 
Applying \eqref{eq:quartic-spherical-decomposition} entrywise to \(\vR\), define
$\vQ_2:=\vR^{\odot2}-\frac1d\vone\vone^\top$ and $(\vPsi_4)_{ij}:=\psi_4^{(d)}(\vR_{ij}).$
Then
\[
        \vR^{\odot4}
        =
        \frac{3}{d(d+2)}\vone\vone^\top
        +
        \frac{6}{d+4}\vQ_2
        +
        \vPsi_4.
\]
Hence
$
        \vP_\sigma\vT_4\vP_\sigma
        =
        \vI_1+\vI_2+\vI_3,
$
where $\vI_1
        :=
        \frac{3}{d(d+2)}
        \vP_\sigma\vZ_4\vone\vone^\top\vZ_4\vP_\sigma,$ $ \vI_2
        :=
        \frac{6}{d+4}
        \vP_\sigma\vZ_4\vQ_2\vZ_4\vP_\sigma,$ and $ \vI_3
        :=
        \vP_\sigma\vZ_4\vPsi_4\vZ_4\vP_\sigma.$
We bound these three terms separately.

First, since \(\vP_\sigma\vone=0\),
$
\vP_\sigma\vZ_4\vone
        =
        \vP_\sigma(\vZ_4-a_4\vI_n)\vone.
$
By Lemma~\ref{lem:xor_orthonormal}, we have
$
        \|\vZ_4-a_4\vI_n\|
        \prec \sqrt{\frac{\log n}{d}} 
$
implying
\[
        \|\vP_\sigma\vZ_4\vone\|^2
        \le
        \|(\vZ_4-a_4\vI_n)\vone\|^2
        \prec
        n\frac{\log n}{d}.
\]
Then we can conclude that $\vI_1=o_{\P}(1)$ since \(n\asymp d^2\).

Second, by Lemma~\ref{lem:harmonic-gram-xor} with $m=2$ and second-oder Gegenbauer polynomial formula in \eqref{eq:q2-normalization}, we know that $\|\vQ_2\|=O_{\P}(1).$
Also \(\|\vZ_4\|=O_{\P}(1)\). Therefore
$
        \|\vI_2\|
        \le
        \frac{C}{d}\|\vQ_2\|
        =
        o_{\P}(1).
$

Third, again by Lemma~\ref{lem:harmonic-gram-xor} with $m=4$, we have
$
        \vPsi_4=\psi_4^{(d)}(1)\vI_n+\vE_4,
$ where
$
        \|\vE_4\|=o_{\P}(1).
$
Moreover,
\[
        \psi_4^{(d)}(1)
        =
        1-\frac{6}{d+4}
        +
        \frac{3}{(d+2)(d+4)}
        =
        1+O(d^{-1}).
\]
Therefore $\vI_3
        =
        \psi_4^{(d)}(1)\vP_\sigma\vZ_4^2\vP_\sigma
        +
        \vP_\sigma\vZ_4\vE_4\vZ_4\vP_\sigma.$
The second term is \(o_{\P}(1)\).  Since
$ \|\vZ_4-a_4\vI_n\|=o_{\P}(1),$
we also have
$
        \vP_\sigma\vZ_4^2\vP_\sigma
        =
        a_4^2\vP_\sigma+o_{\P}(1).
$
Consequently, $ \vI_3
        =
        a_4^2\vP_\sigma+o_{\P}(1).$

Combining the estimates for \(\vI_1,\vI_2,\vI_3\) proves Lemma~\ref{lem:k4-term}.
\end{proof}

\begin{lemma}[Population CK after projection]
\label{lem:population-QE-Psigma}
Under the assumptions of Theorem~\ref{thm:quadratic},
\begin{equation}\label{eq:population-QE-Psigma}
        \left\|
        \mbi{P}_\sigma\mbi{\Phi}(\mbi{X})\mbi{P}_\sigma
        -\left(\alpha_0\mbi{P}_\sigma+
        \alpha_2\mbi{P}_\sigma\mbi{H}\mbi{P}_\sigma\right)
        \right\|\xrightarrow{\P}0.
\end{equation}
\end{lemma}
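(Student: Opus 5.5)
The plan is to start from the exact Mehler expansion \eqref{lem:Mehler-exact2}, $\vP_\sigma\vPhi(\vX)\vP_\sigma=\sum_{k\ge2}\vP_\sigma\vT_k\vP_\sigma$. The terms $k=0$ and $k=1$ have already been killed by the projection, by Lemma~\ref{lem:k012-identities} and the definition of $\mathcal U_\sigma^+$. I will split the remaining sum into three pieces: the quadratic term $k=2$, the low-order terms $k=3,4$, and the tail $k\ge5$.

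For the first two pieces I invoke the results already established. Lemma~\ref{lem:k2-term} gives $\vP_\sigma\vT_2\vP_\sigma=a_2^2\vP_\sigma\vH\vP_\sigma+o_\P(1)$, and $a_2^2=\alpha_2$. Lemmas~\ref{lem:k3-term} and~\ref{lem:k4-term} give $\vP_\sigma\vT_3\vP_\sigma=a_3^2\vP_\sigma+o_\P(1)$ and $\vP_\sigma\vT_4\vP_\sigma=a_4^2\vP_\sigma+o_\P(1)$.

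It then suffices to show
\[
\Bigl\|\sum_{k\ge5}\vP_\sigma\vT_k\vP_\sigma-\Bigl(\sum_{k\ge5}a_k^2\Bigr)\vP_\sigma\Bigr\|=o_\P(1).
\]
This suffices because $\sum_{k\ge3}a_k^2=1-a_1^2-a_2^2=\alpha_0$, by the normalization $\E[\sigma(\xi)^2]=1$ and $a_0=0$.

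For the tail I write $\vT_k=\vZ_k\vR^{\odot k}\vZ_k$ and split each term into its diagonal and off-diagonal parts. The diagonal of $\vR^{\odot k}$ is the identity, so the diagonal part of the tail is $\diag\bigl(\sum_{k\ge5}(\zeta_k^{(i)})^2\bigr)$. The function $s\mapsto\sum_{k\ge5}\zeta_k(s)^2=\E[\sigma(s\xi)^2]-\sum_{k\le4}\zeta_k(s)^2$ is continuous near $s=1$ and equals $\sum_{k\ge5}a_k^2$ at $s=1$. Together with $\max_i|s_i-1|=o_\P(1)$ from Lemma~\ref{lem:xor_orthonormal}, this gives the claimed identity term up to $o_\P(1)$ in operator norm.

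For the off-diagonal part I bound the Frobenius norm. By Lemma~\ref{lem:xor_orthonormal}, $\max_{i\ne j}|\vR_{ij}|\le C\sqrt{\log n/d}$. Also $|\zeta_k^{(i)}|\le C$ uniformly, and $\sum_k(\zeta_k^{(i)})^2=\E[\sigma(s_i\xi)^2]\le C$. Hence
\[
\Bigl\|\mathrm{offdiag}\sum_{k\ge5}\vZ_k\vR^{\odot k}\vZ_k\Bigr\|_F^2
\le n^2\sup_{i\ne j}\Bigl(\sum_{k\ge5}|\zeta_k^{(i)}\zeta_k^{(j)}||\vR_{ij}|^k\Bigr)^2
\le Cn^2\Bigl(\frac{\log n}{d}\Bigr)^{5}.
\]
Here Cauchy--Schwarz is applied in $k$, using $|\vR_{ij}|^k\le|\vR_{ij}|^5$. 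Since $n\asymp d^2$, the bound is $O(d^4\cdot d^{-5}\log^5 n)=o(1)$. Because $\|\vP_\sigma\|\le1$, the projection does not increase any of these norms.

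I expect the tail to be the main obstacle. The crude entrywise bound is barely sufficient: $k=5$ is the first power for which $n^2\tau_n^{2k}\to0$ when $n\asymp d^2$, and this is exactly why $k=3,4$ needed the harmonic-Gram treatment of Lemma~\ref{lem:harmonic-gram-xor}. I must also check two things: that the $L^2$ expansion can be rearranged entrywise, which holds by absolute convergence since $|\vR_{ij}|<1$; and that the uniform control of $\zeta_k(s_i)$ holds on the high-probability event of Lemma~\ref{lem:xor_orthonormal}. Summing the four contributions then yields \eqref{eq:population-QE-Psigma}.
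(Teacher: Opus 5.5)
Your proposal is correct and follows essentially the same route as the paper. The $k=0,1$ terms are annihilated by $\vP_\sigma$, the $k=2,3,4$ terms are handled by Lemmas~\ref{lem:k2-term}, \ref{lem:k3-term} and \ref{lem:k4-term}, and the $k\ge5$ tail is shown to equal $(\sum_{k\ge5}a_k^2)\vP_\sigma+o_\P(1)$ via Lemma~\ref{lem:xor_orthonormal}. The paper only cites the proof of Lemma~5.2 of \citet{wang2021deformed} for the tail, whereas you carry it out explicitly with the diagonal/off-diagonal split and the Frobenius bound $n^2(\log n/d)^5\to0$. You also correctly use $\vT_k=\vZ_k\vR^{\odot k}\vZ_k$ rather than the $\vD_k\vG^{\odot k}\vD_k$ written in Lemma~\ref{lem:Mehler-exact}.
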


\begin{proof}
In \eqref{eq:Mehler-use-Psigma}, the $k=0$ term equals $\mbi{\mu}_\sigma\mbi{\mu}_\sigma^\top$
and the $k=1$ term equals $\mbi{A}_\sigma\mbi{G}\mbi{A}_\sigma$ by
Lemma~\ref{lem:k012-identities}; both are killed
by $\mbi{P}_\sigma$.  Following the proof of Lemma~5.2 of \citet{wang2021deformed}, by Lemma~\ref{lem:xor_orthonormal}, we can derive that with high probability
\begin{equation}\label{eq:Phi(X)_concentration}
         \left\|
        \mbi{P}_\sigma\mbi{\Phi}(\mbi{X})\mbi{P}_\sigma
        -\left((\alpha_0-a_3^2-a_4^2)\mbi{P}_\sigma+
        \mbi{P}_\sigma\Big(\sum_{k=2}^4\vT_k\Big)\mbi{P}_\sigma\right)
         \right\|\lesssim \sqrt{\frac{\log n}{d}}.
\end{equation}
Then by applying Lemmas~\ref{lem:k2-term},~\ref{lem:k3-term} and \ref{lem:k4-term} for $\vT_2,\vT_3,\vT_4$, respectively, we can derive \eqref{eq:population-QE-Psigma}.
\end{proof}


\subsection{BBP Transition for the Projected Population CK}
\label{subsec:bbp-projected-population-ck}

\begin{lemma}[Population CK reduction under the projection]
\label{lem:population-CK-BBP-reduction-plus}
Under the assumptions of Theorem~\ref{thm:quadratic}, with $\alpha_2=a_2^2=\frac{c_\sigma^2}{2}$ and 
$
        \alpha_0=\sum_{k\ge3}a_k^2
        =
        1-b_\sigma^2-\frac{c_\sigma^2}{2},
$
we have
\begin{equation}
\label{eq:Sigma-plus-H0-reduction}
        \left\|
        \mbi{P}_\sigma\vPhi(\vX)\mbi{P}_\sigma
        -
        \left(
        \alpha_0\mbi{P}_\sigma
        +
        \alpha_2\mbi{P}_\sigma \vH_0\mbi{P}_\sigma
        \right)
        \right\|
        \xrightarrow{\P}0,
\end{equation}
where $ \vH_0=\vY_0^\top \vY_0$ and $\vY_0$ is defined in \eqref{eq:def_Y_0}.
\end{lemma}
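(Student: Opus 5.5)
By Lemma~\ref{lem:population-QE-Psigma}, it suffices to show
\[
\alpha_2\|\mbi{P}_\sigma(\mbi H-\vH_0)\mbi{P}_\sigma\|\xrightarrow{\P}0 .
\]
The approach is to decompose the quadratic lift $\mbi Q$ column by column and identify $\mbi{P}_\sigma\mbi Q^\top$ with $\mbi{P}_\sigma\vY_0^\top$ up to an $o_\P(1)$ operator-norm error. Since $\|\mbi{P}_\sigma\mbi Q^\top\|$ and $\|\vY_0\|$ are $O_\P(1)$, this gives
\[
\|\mbi{P}_\sigma(\mbi Q^\top\mbi Q-\vY_0^\top\vY_0)\mbi{P}_\sigma\|\le \|\mbi{P}_\sigma(\mbi Q-\vY_0)^\top\|\,(\|\mbi Q\mbi{P}_\sigma\|+\|\vY_0\mbi{P}_\sigma\|)=o_\P(1).
\]
The boundedness of $\|\mbi{P}_\sigma\mbi Q^\top\|$ follows from Lemma~\ref{lem:centered-quadratic-lift-bound-detailed} together with the rank-one radial cancellation below. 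The boundedness of $\|\vY_0\|$ follows from Lemma~\ref{lem:degree-two-spherical-gram-bound} applied to $\mbi E_2$, plus $\theta_n=O(1)$.

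Concretely, write $\mbi x_i\mbi x_i^\top=\mbi z_i\mbi z_i^\top+(\mbi z_i\mbi m_i^\top+\mbi m_i\mbi z_i^\top)+\mbi m_i\mbi m_i^\top$, so that
\[
\mbi q_i=(\mbi E_2)_i+\tfrac{\|\mbi z_i\|^2}{d}\,\mathrm{svec}(\mbi I_d)+(\mbi E_1)_i+\mathrm{svec}(\mbi m_i\mbi m_i^\top).
\]
We treat the four pieces as follows.
\begin{enumerate}
\item The $\mbi E_1$ piece contributes $\mbi E_1^\top$ to $\mbi Q^\top$. It is annihilated exactly, since $\operatorname{Range}(\mbi E_1^\top)\subseteq\mathcal U_\sigma^+$ by \eqref{eq:U-sigma-plus-bbp}.
\item The identity piece contributes $\frac{1}{d}\vw\,\mathrm{svec}(\mbi I_d)^\top$ with $w_i=\|\mbi z_i\|^2$. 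This is rank one with $\|\mathrm{svec}(\mbi I_d)\|=\sqrt d$.
\item For the mean piece, $\mbi m_i\mbi m_i^\top=\frac{r^2}{d}\mbi u_{\kappa_i}\mbi u_{\kappa_i}^\top$. Decompose
\[
\mbi u_1\mbi u_1^\top=\tfrac12(\mbi u_1\mbi u_1^\top+\mbi u_2\mbi u_2^\top)+\tfrac12(\mbi u_1\mbi u_1^\top-\mbi u_2\mbi u_2^\top),
\]
and similarly for $\mbi u_2$. The antisymmetric part yields exactly $\frac{r^2}{2d}\sqrt2\,\mbi s\,(\pm1)_i$, i.e. the column term $\frac{\delta}{d}\sqrt2\,\mbi s\,y_i$. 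With the svec normalization, $\frac{\sqrt 2}{d}=\frac{1}{\sqrt p}(1+o(1))$, so this matches $\frac{\delta}{\sqrt p}\mbi s\mbi y^\top$ in \eqref{eq:def_Y_0} up to a $o(1)$ relative error; the norm of this term is $O(1)$, so the error is $o_\P(1)$. The symmetric part is the same vector for all $i$, namely a multiple of $\mathrm{svec}(\mbi u_1\mbi u_1^\top+\mbi u_2\mbi u_2^\top)$ times $\mathbf 1_n^\top$. It is killed by $\mbi{P}_\sigma\mathbf 1=0$.
\end{enumerate}

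It remains to show $\|\mbi{P}_\sigma\,\frac1d\vw\,\mathrm{svec}(\mbi I_d)^\top\|=\frac{1}{\sqrt d}\|\mbi{P}_\sigma\vw\|=o_\P(1)$. The vector $\vw$ is not exactly radial in $s_i=\|\mbi x_i\|$, since $\|\mbi z_i\|^2=s_i^2-2\mbi z_i^\top\mbi m_i-\|\mbi m_i\|^2$. The radial part $s_i^2$ is handled by Lemma~\ref{lem:radial-cancellation-Psigma}, which gives $\frac1d\|\mbi{P}_\sigma(s_i^2)_i\|^2=o_\P(1)$. The constant $\|\mbi m_i\|^2=r^2/d$ is killed by $\mbi{P}_\sigma\mathbf 1=0$. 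For the cross term $(\mbi z_i^\top\mbi m_i)_i$, note that it equals $\sum_a \mathbf 1_{\{\kappa_i=a\}}\eta_i\frac{r}{\sqrt d}\mbi z_i^\top\mbi u_a$, which is a column of $\mbi C_E$ (with weight $\sqrt d\,\mbi u_a$). It therefore lies in $\operatorname{Range}(\mbi C_E)$, which by the proof of Lemma~\ref{lem:Psigma-preserves-label} is the range of $\mbi E_1^\top$ (equality holds by choosing $\mbi W$ appropriately), and hence it is annihilated. If exact containment fails, the crude bound still suffices: this vector has norm $O_\P(\sqrt{n}\,r/d)=O_\P(1)$, which is $o(\sqrt d)$.

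The main obstacle is the bookkeeping of normalizations: the svec scaling on off-diagonal entries, the exact identification of the $\mbi s\mbi y^\top$ coefficient with $\delta/\sqrt p$, and verifying that all remaining rank-one pieces lie in $\mathcal U_\sigma^+$ or are small after dividing by $\sqrt d$. One must also check that the replacement of $\mbi E_2$'s centering $\frac1d\mbi I_d$ versus $\frac{\|\mbi z_i\|^2}{d}\mbi I_d$ is handled consistently; the difference is again $\frac1d(\|\mbi z_i\|^2-1)_i\,\mathrm{svec}(\mbi I_d)^\top$, which is covered by the same radial argument. Finally, all errors are multiplied by $O_\P(1)$ norms and by the bounded constant $\alpha_2$, which yields \eqref{eq:Sigma-plus-H0-reduction}.
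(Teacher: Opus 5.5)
Your proposal is correct and follows essentially the paper's proof. You make the same reduction via Lemma~\ref{lem:population-QE-Psigma}, decompose $\vQ$ column by column, annihilate the $\vE_1$ block via $\operatorname{Range}(\vE_1^\top)\subseteq\mathcal U_\sigma^+$ and the column-constant mean part via $\vP_\sigma\mathbf 1=\mathbf 0$, match the label part to $\frac{\delta}{\sqrt p}\vs\vy^\top$ up to $\delta_d/\delta=1+O(1/d)$, and use the same Gram-difference bound. The only deviation is bookkeeping: with the paper's centering $(\vE_2)_i=\mathrm{svec}(\vz_i\vz_i^\top-\frac1d\vI_d)$ the identity piece is the constant column $\frac1d\mathrm{svec}(\vI_d)$ and dies exactly, so your radial-cancellation detour is unnecessary here. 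The detour is nonetheless valid (the cross term is exactly $\vE_1^\top\mathrm{svec}(\frac12\vI_d)$, even though $\operatorname{Range}(\vC_E)=\operatorname{Range}(\vE_1^\top)$ fails in general), and it correctly shows that the trace part of $\vE_2\vP_\sigma$ is $o_\P(1)$, whereas unprojected it is only $O_\P(1)$.
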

\begin{proof}
Lemma~\ref{lem:population-QE-Psigma} shows that it suffices to control the difference between $\vP_\sigma\vH\vP_\sigma$ and $\vP_\sigma\vH_0\vP_\sigma$. We split the proof into four steps. Notice that $\vH=\vQ^\top \vQ$ where $\vQ$ is defined by \eqref{eq:def_q_i_Q}. For each data point $\vx_i=\vz_i+\vm_i,$ since
$
        \vx_i\vx_i^\top
        =
        \vz_i\vz_i^\top
        +
        \vz_i\vm_i^\top+\vm_i\vz_i^\top
        +
        \vm_i\vm_i^\top,
$
we have that
\begin{equation}
        \vq_i
        =
        \mathrm{svec}\left(\frac1d\vI_d\right)
        +
        (\vE_2)_i
        +
        (\vE_1)_i
        +
        \mathrm{svec}(\vm_i\vm_i^\top)
\end{equation}
where $ (\vE_2)_i$ and $ (\vE_2)_i$ are the $i$-th columns of $\vE_2$ and $\vE_1$, respectively, in \eqref{eq:def_E_1_iE_2_i}. Let $\bar{\mbi{q}}:=\mathrm{svec}(\frac1d\mbi{I}_d+\frac{r^2}{2d}(\mbi{u}_1\mbi{u}_1^\top+\mbi{u}_2\mbi{u}_2^\top))$.
Hence we can decompose \(\vQ\) into 
\begin{equation}
        \vQ
        =
        \bar\vq \mathbf{1}^\top
        +
        \vE_2
        +
        \vE_1
        +
        \frac{\delta_d}{\sqrt p}\vs \vy^\top,
\end{equation}
where $\vs$ is defined in \eqref{eq:def_s_v}, $\delta_d=\delta\sqrt{1+\frac1d}$, and $\delta=\frac{r^2}{2}.$
Thus
$
        \left\|
        \frac{\delta_d-\delta}{\sqrt p}\vs \vy^\top
        \right\|
        =
        |\delta_d-\delta|\sqrt{\frac np}
        =
        o(1).
$

By the definition of $\vP_\sigma$, we have
$
        \vQ\mbi{P}_\sigma
        =
        \vY_0\mbi{P}_\sigma
        +
        o_{\mathbb P}(1).
$
Moreover, we can claim that $ \|\vY_0\mbi{P}_\sigma\|=O_{\mathbb P}(1).$
The traceless second-chaos component in $\vE_2$ has bounded operator norm in the
quadratic regime, while the radial trace component is \(o(1)\). The signal part has norm
$
        \left\|\frac{\delta}{\sqrt p}s y^\top \mbi{P}_\sigma\right\|
        \le
        \delta\sqrt{\frac np}
        =
        O(1).
$ 
Consequently, with Lemma~\ref{lem:centered-quadratic-lift-bound-detailed},
\[
\begin{aligned}
        \|\mbi{P}_\sigma \vH\mbi{P}_\sigma-\mbi{P}_\sigma \vH_0\mbi{P}_\sigma\|
        &=
        \|(\vQ\mbi{P}_\sigma)^\top(\vQ\mbi{P}_\sigma)
        -
        (\vY_0\mbi{P}_\sigma)^\top(\vY_0\mbi{P}_\sigma)\|=
        o_{\mathbb P}(1)
\end{aligned}
\]
which proves the lemma.
\end{proof}

\begin{proposition}[Projected population CK: quadratic BBP transition]
\label{thm:projected-population-ck-bbp}
Assume the XOR model and Assumption~\ref{assump:sigma}.  Suppose
\(n/p\to\gamma\in(0,\infty)\), \(r\in[0,\infty)\) is fixed, and
\(\alpha_2=c_\sigma^2/2>0\).  Let \(\mbi{P}_\sigma\) be defined by
\eqref{eq:U-sigma-plus-bbp}--\eqref{eq:P-sigma-plus-bbp}.
Let \(\Pi_\Phi^+(I)\) denote the spectral projector of
\(\mbi{P}_\sigma\mbi{\Phi}(\mbi{X})\mbi{P}_\sigma\) associated with the
interval \(I\).

\begin{enumerate}
\item[\rm(a)] The ESD of $\mbi{P}_\sigma\mbi{\Phi}(\mbi{X})\mbi{P}_\sigma$ converges weakly to $\nu_{\rm q}$ in probability where $\nu_{\rm q}$ is defined in \eqref{eq:nuq-def}.

\item[\rm(b)] If \(\ell<1/\sqrt\gamma\), then for every fixed \(\eps>0\),
\begin{equation}
\label{eq:subcritical-label-mass}
    \left\|
    \Pi_\Phi^+\left([
        \alpha_0+\alpha_2(\lambda_+(\gamma)+\eps),\infty
    )\right)\frac{\mbi{y}}{\sqrt n}
    \right\|^2\xrightarrow{\P}0
\end{equation}
where $\lambda_+(\gamma)$ is defined by \eqref{eq:right_edge_MP}. Thus there is no label-aligned outlier above the affine MP edge.

\item[\rm(c)] If \(\ell>1/\sqrt\gamma\), then for every fixed sufficiently
small \(\eps>0\),
\begin{equation}
\label{eq:supercritical-label-mass}
    \left\|
    \Pi_\Phi^+\left([
        \Lambda_y -\eps,
        \Lambda_y +\eps
    ]\right)\frac{\mbi{y}}{\sqrt n}
    \right\|^2
    \xrightarrow{\P}
    \mathrm{Align}(\gamma,\ell).
\end{equation}
If, in addition, this interval contains a single eigenvalue with probability
tending to one, then this eigenvalue \(\lambda_{\rm lab}\) and its unit
eigenvector \(\widehat{\mbi{v}}_{\rm lab}\) satisfy $    \lambda_{\rm lab}
    \xrightarrow{\P}
    \Lambda_y ,$ and 
$
    \left|
    \left\langle
        \widehat{\mbi{v}}_{\rm lab},\frac{\mbi{y}}{\sqrt n}
    \right\rangle
    \right|^2
    \xrightarrow{\P}
    \mathrm{Align}(\gamma,\ell)
$ where $\Lambda_{y}$ and $\mathrm{Align}(\gamma,\ell)$ are defined by \eqref{eq:lambda-out-def} and \eqref{eq:Align-def} respectively.
\end{enumerate}
\end{proposition}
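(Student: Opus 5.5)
By Lemma~\ref{lem:population-CK-BBP-reduction-plus}, it suffices to study the affine image $\alpha_0\mbi{P}_\sigma+\alpha_2\mbi{P}_\sigma\vH_0\mbi{P}_\sigma$. Here $\vH_0=\vY_0^\top\vY_0$ and $\vY_0=\vE_2+\theta_n\mbi{s}\mbi{v}^\top$ is a rank-one spiked version of the second-chaos lift. An $o_\P(1)$ operator-norm error preserves the ESD, and it also preserves spectral projectors onto intervals separated from the rest of the spectrum. The plan is therefore to prove (a)–(c) for $\vH_0$ and then push them through the map $x\mapsto\alpha_0+\alpha_2x$. This map is where the edge $\alpha_0+\alpha_2\lambda_+(\gamma)$ and the location $\Lambda_y=\alpha_0+\alpha_2\lambda_{\rm out}$ come from. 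The projection $\mbi{P}_\sigma$ has corank $o(n)$ by Lemma~\ref{lem:Psigma-preserves-label}. Consequently it changes the ESD only by a rank-$o(n)$ perturbation, and in particular does not affect (a).

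For (a), I would show that the ESD of $\vE_2^\top\vE_2$ converges to $\rho^{\rm MP}_\gamma$. Its columns are independent in $\R^p$, with covariance $\frac{2}{d^2}$ times the identity on traceless symmetric matrices plus a radial trace part. By Lemma~\ref{lem:degree-two-spherical-gram-bound}, the off-diagonal Gram entries are the degree-two harmonic kernel. I would use the known MP limit for the quadratic inner-product kernel at $n\asymp d^2$ \citep{lu2022equivalence,pandit2024universalitykernelrandommatrices}, or equivalently a Marchenko–Pastur law for independent columns with concentrated quadratic forms, which holds by Hanson–Wright for Gaussian chaos. The radial components $\|\vz_i\|^2-1$ contribute a rank-one piece, and the rank-one signal does not change the ESD.

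For (b)–(c), I would treat $\vY_0$ as an additive rank-one spiked rectangular model and apply the Benaych-Georges–Nadakuditi master equation. The right vector $\mbi{v}=\mbi{y}/\sqrt n$ is deterministic. The left vector $\mbi{s}$ is a fixed traceless direction, and $\vE_2$ is isotropic on that subspace. I would first prove isotropic laws: $\mbi{v}^\top(\vE_2^\top\vE_2-z)^{-1}\mbi{v}\to m_\gamma(z)$, the corresponding left quadratic form in $\mbi{s}$ tends to the companion transform, and the cross term vanishes. These would come from the leave-one-out or Schur complement argument of Lemma~\ref{lem:dependent_vector_resolvent}, with the column $(\vE_2)_i$ in place of $\vz_i$. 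The outlier equation is $\theta^2 D(\lambda)=1$ with $\theta^2=\ell\gamma$. Solving it with the MP $D$-transform gives the threshold $\theta^2>\sqrt\gamma$, equivalently $\ell>1/\sqrt\gamma$, and the location $1+\gamma+\gamma\ell+1/\ell$. The residue formula, i.e. the contour argument in Lemma~\ref{lem:additive_alignment}, gives the overlap $\mathrm{Align}(\gamma,\ell)$. I expect this overlap to reproduce the classical spiked-Wishart value $(1-\gamma^{-1}\ell^{-2})/(1+\ell^{-1})$, which matches the formula in \eqref{eq:Align-def}.

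Two further points need checking. First, $\mbi{P}_\sigma$ must not destroy the label quadratic forms. Since $\mbi{P}_\sigma\mbi{y}$ retains all but $o(n)$ of its squared norm by Lemma~\ref{lem:Psigma-preserves-label}, I would replace $\mbi{v}$ by $\mbi{P}_\sigma\mbi{v}/\|\mbi{P}_\sigma\mbi{v}\|$ at $o_\P(1)$ cost. I would then argue that compressing the resolvent of $\vE_2^\top\vE_2$ by a projection of corank $o(n)$ leaves isotropic forms in directions nearly inside its range asymptotically unchanged. This is the main obstacle, because $\mathcal U_\sigma^+$ is data-dependent and depends on $\vE_1$. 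I would first try to handle it with an interlacing and resolvent-identity bound in the spirit of Lemma~\ref{lem:tau_outlier_stability_collision}, using the fact that $\mbi{y}$ has only $O_\P(d)$ mass on $\mathcal U_\sigma^+$. Second, in the subcritical case (b), the label resolvent is analytic beyond the edge plus $\eps$, so the contour integral vanishes. For the supercritical case, the single-eigenvalue statement follows by Davis–Kahan once the cluster is known to be simple.
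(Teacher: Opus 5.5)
Your reduction to $\vH_0=\vY_0^\top\vY_0$ via Lemma~\ref{lem:population-CK-BBP-reduction-plus} matches the paper. So do the affine push-forward $x\mapsto\alpha_0+\alpha_2x$, the corank-$o(n)$ argument for (a), and the Benaych-Georges--Nadakuditi computation with $\theta^2=\gamma\ell$. Cauchy interlacing, $\lambda_j(\vP_\sigma\vH_0\vP_\sigma)\le\lambda_j(\vH_0)$, settles (b) and shows that the window in (c) contains at most one eigenvalue.

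\textbf{The gap is the eigenvector part of (c).} The principle you rely on is false: that compressing by a projection of corank $o(n)$ preserves isotropic resolvent forms for directions nearly in its range, because $\vy$ has only $O_\P(d)$ mass on $\mathcal U_\sigma^+$. For a counterexample, take any symmetric $\vA$ and unit $\vv$. Set $a_0=\vv^\top\vA\vv$, let $\vw$ be the unit vector along $\vA\vv-a_0\vv$, and let $\vP=\vI-\vw\vw^\top$. Then $\vP\vv=\vv$ and the corank is one. Yet $\vP\vA\vP\vv=a_0\vv$, so $\vv^\top(\vP\vA\vP-z)^{-1}\vv=(a_0-z)^{-1}$. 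For $\vA=\vE_2^\top\vE_2$ this equals $(1-z)^{-1}+o_\P(1)$, not $m_\gamma(z)$, even though $\vv$ has zero mass on the kernel of $\vP$.

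The resolvent identity you mention shows what actually controls the error. For $\vv=\vP\vv$,
\[
\vv^\top(\vA-z)^{-1}\vv-\vv^\top(\vP\vA\vP-z)^{-1}\vv
=-\big((\vI-\vP)(\vA-z)^{-1}\vv\big)^{\top}\vA\,(\vP\vA\vP-z)^{-1}\vv .
\]
So you need $\|(\vI-\vP_\sigma)(\vE_2^\top\vE_2-z)^{-1}\vv\|=o_\P(1)$, uniformly on the contour. You also need the analogous statement for $\vE_2^\top(\vE_2\vE_2^\top-z)^{-1}\vs$ on the $\vs$-side.

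This condition concerns how the data-dependent space $\mathcal U_\sigma^+$ sits relative to the resolvent of $\vE_2$, not how much mass $\vy$ has on it. Lemma~\ref{lem:Psigma-preserves-label} therefore does not supply it. It has to be extracted from the structure of $\mathcal U_\sigma^+$ itself. For instance, $\vE_1^\top$ and the noise parts of $\vA_\sigma\vX^\top$ and $\vB_\sigma\vX^\top$ are approximately odd under $\vz_i\mapsto-\vz_i$, while $\vE_2$ is even.

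\textbf{How the paper avoids this.} The paper never needs isotropic laws for the compressed resolvent. It applies BGN to the unprojected $\vY_0$ and then uses only that the outlier right singular vector $\widehat\vv_0$ satisfies $\|(\vI-\vP_\sigma)\widehat\vv_0\|\to0$. Given this, $\vP_\sigma\widehat\vv_0/\|\vP_\sigma\widehat\vv_0\|$ is an approximate eigenvector of $\vP_\sigma\vH_0\vP_\sigma$ at $\lambda_{\rm out}$, with residual at most $\|\vH_0\|\,\|(\vI-\vP_\sigma)\widehat\vv_0\|$. Combined with the interlacing count, this pins down the spectral projector in the window and gives the overlap $\mathrm{Align}(\gamma,\ell)$.

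By the eigenvector equation, $\widehat\vv_0$ lies in the span of $(\lambda-\vE_2^\top\vE_2)^{-1}\vv$ and $(\lambda-\vE_2^\top\vE_2)^{-1}\vE_2^\top\vs$, where $\lambda$ is the outlier. So this is the same orthogonality input you are missing, but needed only at a single spectral parameter rather than on a whole contour. Either adopt that route, or prove the orthogonality of $\mathcal U_\sigma^+$ to these resolvent vectors directly.
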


\begin{proof}
Lemma~\ref{lem:population-CK-BBP-reduction-plus} shows that it suffices to study the spectrum of $\vP_\sigma\vH_0\vP_\sigma$. Notice that $\vH_0=\vY_0^\top\vY_0$ and 
\[
    \mbi{Y}_0=\mbi{E}_2+\theta_n\mbi{s}\mbi{v}^\top,
    \qquad \mbi{v}=\mbi{y}/\sqrt n,
    \qquad \theta_n\to\theta=\delta\sqrt\gamma.
\]
The columns of \(\mbi{E}_2\) are independent, centered Gaussian quadratic-chaos
vectors with population covariance
\begin{equation}
    \E[(\mbi{E}_2)_i(\mbi{E}_2)_i^\top]
    =\frac{2}{d^2}\mbi{I}_p
    =\frac{1+o(1)}p\mbi{I}_p.
\end{equation}
The anisotropic MP local law \citet{knowles2017anisotropic,fan2026anisotropic} gives
\begin{align}
    \mbi{v}^\top(\lambda\mbi{I}-\mbi{E}_2^\top\mbi{E}_2)^{-1}\mbi{v}
    &\to -m_\gamma(\lambda),\label{eq:amp-v}\\
    \mbi{s}^\top(\lambda\mbi{I}-\mbi{E}_2\mbi{E}_2^\top)^{-1}\mbi{s}
    &\to -\widetilde m_\gamma(\lambda),\label{eq:amp-s}\\
    \mbi{s}^\top\mbi{E}_2(\lambda\mbi{I}-\mbi{E}_2^\top\mbi{E}_2)^{-1}\mbi{v}
    &\to0\label{eq:amp-cross}
\end{align}
uniformly for
\(\lambda\) in compact subsets of \((\lambda_+(\gamma),\infty)\), where \(m_\gamma\) is the Stieltjes transform of the MP law $\rho^\MP_\gamma$ and
\(\widetilde m_\gamma(\lambda)=(1-
\gamma)/\lambda+\gamma m_\gamma(\lambda)\) is the companion transform. 
Let \(x=\sqrt\lambda>\|\mbi{E}_2\|\).  Linearizing the rectangular matrix and
using the matrix determinant lemma, an outlying singular value of
\(\mbi{Y}_0\) must satisfy
\begin{equation}
    1-\theta^2\lambda m_\gamma(\lambda)\widetilde m_\gamma(\lambda)=0.
\end{equation}
This is the rectangular BBP equation of
\citet{benaych2012singular}.  Solving it gives $\lambda= \lambda_{\mathrm{out}}(\gamma,\ell)$
and the solution is outside $\supp{\rho_\gamma^\MP}$ if and only if $\theta^2>\sqrt\gamma$, i.e., $\ell>\frac1{\sqrt\gamma}.$
Moreover, when \(\ell>1/\sqrt\gamma\), if \(\widehat{\mbi{v}}_0\) is the
right singular vector of \(\mbi{Y}_0\) corresponding to the separated singular
value, then the singular-vector formula of \citet{benaych2012singular} yields
\begin{equation}
\label{eq:H0-overlap}
    \left|\left\langle\widehat{\mbi{v}}_0,\mbi{v}\right\rangle\right|^2
    \xrightarrow{\P}
    \frac{\gamma\ell^2-1}{\gamma\ell(\ell+1)}.
\end{equation}
Equivalently, \(\mbi{H}_0=\mbi{Y}_0^\top\mbi{Y}_0\) has a separated eigenvalue
at \(\lambda_{\rm out}(\gamma,\ell)\) with the overlap in
\eqref{eq:H0-overlap}.  In the subcritical regime, there is no separated
label-aligned eigenvalue above \(\lambda_+(\gamma)\).
Also, with Lemma~\ref{lem:Psigma-preserves-label} and \citet{benaych2012singular}, in the supercritical regime, we can get $\|(\mbi{I}-\mbi{P}_\sigma)\widehat{\mbi{v}}_0\|\xrightarrow{\P}0.$

We now use a deterministic compression lemma.  If a symmetric matrix
\(\mbi{A}\) has an isolated eigenpair \((\lambda_*,\mbi{u})\) and an orthogonal
projector \(\mbi{P}\) satisfies \(\|(\mbi{I}-\mbi{P})\mbi{u}\|\to0\), then the
spectral projector of \(\mbi{PAP}\) in any fixed small interval around
\(\lambda_*\) maps \(\mbi{Pu}/\|\mbi{Pu}\|\) to itself up to \(o(1)\).  This
follows immediately from 
\[
    \left\|(\mbi{PAP}-\lambda_*)\frac{\mbi{P}\mbi{u}}{\|\mbi{P}\mbi{u}\|}\right\|
    \le 2\|\mbi{A}\|\|(\mbi{I}-\mbi{P})\mbi{u}\|+o(1)
\]
and the spectral theorem.  Applying this lemma to
\(\mbi{A}=\mbi{H}_0\), \(\mbi{P}=\mbi{P}_\sigma\), and
\(\mbi{u}=\widehat{\mbi{v}}_0\) shows that the BBP spectral mass of
\(\mbi{P}_\sigma\mbi{H}_0\mbi{P}_\sigma\) near
\(\lambda_{\rm out}(\gamma,\ell)\) has the same label overlap as in
\eqref{eq:H0-overlap}.  Since \(\rank{\mbi{I}-\mbi{P}_\sigma}=o(n)\), the
compression also leaves the MP bulk unchanged.

By Lemma~\ref{lem:population-CK-BBP-reduction-plus}, on \(\operatorname{Im}(\mbi{P}_\sigma)\),
\[
    \mbi{P}_\sigma\mbi{\Phi}(\mbi{X})\mbi{P}_\sigma
    =\alpha_0\mbi{P}_\sigma
     +\alpha_2\mbi{P}_\sigma\mbi{H}_0\mbi{P}_\sigma +o_\P(1).
\]
Since \(\alpha_2>0\), this is an affine spectral map
\(\lambda\mapsto\alpha_0+\alpha_2\lambda\), up to an \(o_\P(1)\) perturbation.
Weyl's inequality gives the eigenvalue convergence, and the spectral-projector
formulation follows from the Davis--Kahan theorem.  The MP bulk is mapped to
$\nu_{\rm q}$.  The subcritical and supercritical
claims \eqref{eq:subcritical-label-mass}--\eqref{eq:supercritical-label-mass}
follow from the corresponding claims for \(\mbi{H}_0\).
\end{proof}

\subsection{BBP Transition for the Projected CK}
\begin{proposition}[Second BBP transition for linear-width projected CK]
\label{prop:second-BBP-linear-width-CK-plus}
Let
$
        \vK_\sigma 
        =
        \frac1N
        \vP_\sigma 
        \sigma(\vW\vX)^\top
        \sigma(\vW\vX)
        \vP_\sigma ,
$
and assume \eqref{eq:quadratic_limit} holds.
Assume that $\ell=\frac{r^4}{4}>\gamma^{-1/2}$. Recall $\Lambda_y$ in \eqref{eq:lambda-out-def},
$z$ and $\varphi$ transforms defined in \eqref{eq:zq-def}.
Then the following hold.

\begin{enumerate}
\item[\rm(a)] The ESD of \(\vK_\sigma \) converges weakly in probability to
$
        \mu_{\rm q}
        =
        \rho_\phi^{\rm MP}\boxtimes\nu_{\rm q}.
$

\item[\rm(b)] If
$
        z'\left(-\frac1{\Lambda_y}\right)>0,
$
then \(\vK_\sigma \) has exactly one label-aligned separated eigenvalue,
and for every sufficiently small deterministic interval \(I_y\) around $\lambda_y$ defined in \eqref{eq:lambda-K-y-def},
which is disjoint from \(\supp{\mu_{\rm q}}\), \(\vK_\sigma \) has exactly one eigenvalue in \(I_y\), and this
eigenvalue converges in probability to \(\lambda_y\).
If \(\widehat{\vv}_y\) is the associated unit eigenvector, then the alignment with the label $\langle\widehat{\vv}_y,\vy\rangle/\sqrt{n}$ satisfies \eqref{eq:final-label-overlap}.
\item[\rm(c)] If either $\ell\le \gamma^{-1/2}$
or
$
        z'\left(-\frac1{\Lambda_y}\right)\le0,
$
then the quadratic label channel produces no separated outlier.
\end{enumerate}
\end{proposition}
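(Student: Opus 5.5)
The plan is to treat $\vK_\sigma$ as a linear-width sample covariance matrix conditional on $\vX$, whose population covariance is the projected population CK $\vP_\sigma\vPhi(\vX)\vP_\sigma$. We then transfer the population spike of Proposition~\ref{thm:projected-population-ck-bbp} through a multiplicative (covariance-type) BBP transition, as in Lemma~\ref{lem:K0_bulk_and_tau_outlier}.

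Conditional on $\vX$, the rows $\sigma(\vw_i^\top\vX)\vP_\sigma$, $i\in[N]$, are i.i.d. with second moment $\vP_\sigma\vPhi(\vX)\vP_\sigma$. First we remove the mean. The conditional row mean is $\vP_\sigma\vmu_\sigma=\vzero$, because $\vmu_\sigma\in\mathcal U_\sigma^+$. Hence the rows are centered and their covariance equals $\vP_\sigma\vPhi(\vX)\vP_\sigma$ exactly. This is the reason $\vmu_\sigma$ and $\vone$ were included in the nuisance space.

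Next we check the concentration hypotheses (Assumption~5 of \citet{wang2024nonlinearspikedcovariancematrices}). We need quadratic-form concentration $\bar\vg^\top\vA\bar\vg-\Tr\vA\vSigma\prec\|\vA\|_F$, which follows from Lipschitz Gaussian concentration in $\vw$ with constant $\lambda_\sigma\|\vX\|$. In the regime $n\asymp d^2$, Lemma~\ref{lem:xor_orthonormal} gives $\|\vX\|\lesssim\sqrt{n/d}\asymp\sqrt d$, so this Lipschitz constant is not $O(1)$. This is the step I expect to be the main obstacle. I would handle it the way \citet{wang2021deformed} and \citet{fan2020spectra} do under $(\tau_n,B)$-orthonormality: use the fact that the covariance $\vSigma$ itself has bounded norm. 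Its bulk is $\nu_{\rm q}$ and its spike is $\Lambda_y$; the diverging linear directions $\vA_\sigma\vX^\top$ and the mean direction are projected out. I would then bound the variance of $\bar\vg^\top\vA\bar\vg$ via a Hermite expansion, rather than by the crude Lipschitz constant.

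With these hypotheses in place, Theorem~10 of \citet{wang2024nonlinearspikedcovariancematrices} gives the anisotropic deterministic equivalent
\[
\vb^\top(\vK_\sigma-\lambda\vI)^{-1}\vb'=\vb^\top\big(-\lambda\tilde m_{\mu_{\rm q}}(\lambda)\vSigma-\lambda\vI\big)^{-1}\vb'+o_\P(1).
\]
Part (a) then follows from the Silverstein equation together with Proposition~\ref{thm:projected-population-ck-bbp}(a). The ESD of $\vSigma$ tends to $\nu_{\rm q}$, since the $o(n)$-rank compression and the $o_\P(1)$ norm error do not affect the limit, and therefore the ESD of $\vK_\sigma$ tends to $\rho^{\MP}_\phi\boxtimes\nu_{\rm q}$.

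For (b) and (c) we apply Theorem~12 of \citet{wang2024nonlinearspikedcovariancematrices}. By Proposition~\ref{thm:projected-population-ck-bbp}(c), $\vSigma$ has a single label-aligned population spike at $\Lambda_y$, with squared overlap $\mathrm{Align}(\gamma,\ell)$ with $\vy/\sqrt n$. By Lemma~\ref{lem:z_parametrizes_complement}, this spike produces a sample outlier at $z(-1/\Lambda_y)$ exactly when $z'(-1/\Lambda_y)>0$. Expanding $z(-1/\Lambda)$ directly gives the formula \eqref{eq:lambda-K-y-def}. For the eigenvector, a contour integral of the equivalent resolvent shows that the sample eigenvector overlap equals $\varphi(-1/\Lambda_y)$ times the population overlap, which yields \eqref{eq:final-label-overlap}.

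If $\ell\le\gamma^{-1/2}$, there is no population spike carrying label mass, by Proposition~\ref{thm:projected-population-ck-bbp}(b). If $z'\le0$, the candidate eigenvalue sticks to $\supp\mu_{\rm q}$, by Lemma~\ref{lem:z_parametrizes_complement}(ii). This gives (c).

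Uniqueness within $I_y$ needs one more check: any other population outliers (the nuisance outliers not removed) must have asymptotically zero overlap with $\vy$ and must sit at locations distinct from $\Lambda_y$. Given that, Rouché's theorem applied to the scalar secular equation $1+\Lambda_y\tilde m$-type equation yields exactly one root in $I_y$.
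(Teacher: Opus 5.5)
Your overall route is the paper's. Conditionally on $\vX$, the identity $\vP_\sigma\vmu_\sigma=\vzero$ makes the projected rows centered with covariance $\vP_\sigma\vPhi(\vX)\vP_\sigma$. The label spike $\Lambda_y$ with overlap $\mathrm{Align}(\gamma,\ell)$ is imported from Proposition~\ref{thm:projected-population-ck-bbp}. A spiked sample-covariance theorem of \citet{wang2024nonlinearspikedcovariancematrices} (the paper uses its Theorem~13) then gives the outlier at $z(-1/\Lambda_y)$ if and only if $z'(-1/\Lambda_y)>0$, with overlap $\varphi(-1/\Lambda_y)\,\mathrm{Align}(\gamma,\ell)$.

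Two minor points. First, the paper restricts to $\operatorname{Range}(\vP_\sigma)$ through an orthonormal basis $\vU_+$, and you should too. Otherwise your singular covariance on $\R^n$ has $\dim\mathcal U_\sigma^+=O(d)$ eigenvalues at $0\notin\supp{\nu_{\rm q}}$ (when $\alpha_0>0$), i.e.\ a growing number of population spikes. Second, the uniqueness check in your last paragraph is already supplied by Lemma~\ref{lem:population-CK-BBP-reduction-plus}, Cauchy interlacing for $\vU_+^\top\vH_0\vU_+$ (a compression cannot create a second eigenvalue above the bulk), and Weyl's inequality.

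The genuine gap is the concentration step you flag, and your proposed remedy cannot close it. The paper disposes of this step by citing a nonlinear Hanson--Wright inequality, which, as you observe, is useless with Lipschitz constant $\lambda_\sigma\|\vX\|\asymp\sqrt d$. But the problem is not only the crude constant: for $n\asymp d^2$ the target bound $\bar\vg^\top\vA\bar\vg-\Tr\vA\vSigma\prec\|\vA\|_F$ is false.

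To see this, write $\vw=\|\vw\|\vtheta$ and expand $\sigma(\vx_i^\top\vw)=\sum_k\zeta_k^{(i)}h_k(\vx_i^\top\vw/s_i)$.
\begin{itemize}
\item The degree-two part of $\vP_\sigma\sigma(\vX^\top\vw)$ equals $(\|\vw\|^2/d)\,\vP_\sigma\vf(\vtheta)$, up to the radial vector $\frac{1}{\sqrt2}(\|\vw\|^2/d-1)(\zeta_2^{(i)})_{i\le n}$. That radial vector is essentially annihilated by $\vP_\sigma$, by Lemma~\ref{lem:radial-cancellation-Psigma}.
\item Here $\vf(\vtheta)_i=\zeta_2^{(i)}\bigl(d(\vx_i^\top\vtheta)^2/s_i^2-1\bigr)/\sqrt2$, and $\|\vP_\sigma\vf(\vtheta)\|^2$ is of order $n$.
\item Since $\|\vw\|^2/d=1+\Theta_\P(d^{-1/2})$ and $\|\vw\|$ is independent of $\vtheta$, the quadratic form with $\vA=\vP_\sigma$ fluctuates (for generic $\sigma$) on the scale $n/\sqrt d\asymp n^{3/4}$. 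By contrast, $\|\vP_\sigma\|_F\le\sqrt n$.
\end{itemize}
In the proportional regime $n/\sqrt d\asymp\sqrt n$, which is why the same bound is harmless in Appendix~\ref{app:proof_finite_snr}. A Hermite-expansion variance computation would only confirm this. In any case, a second-moment bound would not give the high-probability $\prec$ control those theorems require.

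The missing idea is to decouple the random radii $\|\vw_a\|$ before using any local-law input. Condition on them (or normalize the rows of $\vW$), and use $\max_a|\|\vw_a\|^2/d-1|=O_\prec(d^{-1/2})$ to show this perturbs the outlier equations only by $o_\P(1)$. Then you have two options:
\begin{itemize}
\item prove quadratic-form concentration for the spherical rows by an argument that does not pass through $\|\vX\|$; or
\item bypass the $\prec$-based theorems and run the Woodbury/resolvent argument directly with first-order concentration. There the radial error is $O(d^{-1/2})$ after the $1/N$ normalization, which is enough for the in-probability limits of the outlier location and of the label overlap.
\end{itemize}
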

\begin{proof}
We condition on \(\vX\) throughout the proof.  Let $r_n:=\rank{\vP_\sigma }.$
By Lemma~\ref{lem:Psigma-preserves-label},
$n-r_n=\rank{\vI_n-\vP_\sigma }$ which is $o(n),$ and $\frac{r_n}{N}\to\phi.$
Let
$
        \vU_+\in\mathbb R^{n\times r_n}
$
be an orthonormal basis of \(\operatorname{Range}(\vP_\sigma )\), so that
$
        \vP_\sigma =\vU_+\vU_+^\top .
$
Define the projected feature row
\[
        \vg_a
        :=
        \vU_+^\top \sigma(\vw_a^\top\vX)^\top
        \in\mathbb R^{r_n},
        \qquad a=1,\ldots,N.
\]
Conditional on \(\vX\), the rows \(\vg_1,\ldots,\vg_N\) are independent and
identically distributed.  Since
$
        \E_{\vw}[\sigma(\vw^\top\vX)^\top\mid \vX]
        =
        \vmu_\sigma
$
and \(\vP_\sigma \vmu_\sigma=\vzero\), we have
$
        \E[\vg_a\mid\vX]=\vzero.
$
Their conditional covariance is
$
        \widetilde\vSigma_\sigma^+
        :=
        \E[\vg_a\vg_a^\top\mid\vX]
        =
        \vU_+^\top\vPhi(\vX)\vU_+ .
$
The nonzero eigenvalues of \(\vP_\sigma \vPhi(\vX)\vP_\sigma \)
are exactly the eigenvalues of \(\widetilde\vSigma_\sigma^+\).
Moreover,
\[
        \vK_\sigma 
        =
        \vU_+
        \left(
        \frac1N\sum_{a=1}^N \vg_a\vg_a^\top
        \right)
        \vU_+^\top.
\]
Therefore the nonzero eigenvalues of \(\vK_\sigma \) are exactly those of the
\(r_n\times r_n\) sample covariance matrix
$
        \widetilde \vK_\sigma 
        :=
        \frac1N\sum_{a=1}^N \vg_a\vg_a^\top.
$
We now verify the inputs of the nonlinear spiked covariance theorem from \cite{wang2024nonlinearspikedcovariancematrices}.

First, by Lemma~\ref{lem:Psigma-preserves-label} and Proposition~\ref{thm:projected-population-ck-bbp}, the ESD of \(\widetilde\vSigma_\sigma^+\) converges to $\nu_{\rm q}$ and, in the supercritical regime ($\ell>\gamma^{-1/2}$)
there is a label-aligned population covariance outlier $\widehat\Lambda_y
        =
        \Lambda_y+o_{\P}(1),$
where $\Lambda_y
        =
        \alpha_0+\alpha_2\lambda_{\rm out}(\gamma,\ell).$
If \(\vv_y^{\rm pop}\) denotes the associated unit eigenvector, then the first
BBP phase transition gives
\[
        \left|
        \left\langle
        \vv_y^{\rm pop},\frac{\vy}{\sqrt n}
        \right\rangle
        \right|^2
        \xrightarrow{\P}
        \frac{\gamma\ell^2-1}{\gamma\ell(\ell+1)}.
\]
If \(\ell\le\gamma^{-1/2}\), then there is no separated label-aligned
population spike.

Second, conditional on \(\vX\), we show that the row law of \(\vg_a\) satisfies the
quadratic-form concentration assumptions required for the nonlinear spiked
covariance theorem.  The rows are projected nonlinear Gaussian feature
vectors, and the projection is deterministic after conditioning on \(\vX\).
For globally Lipschitz \(\sigma\), the nonlinear Hanson--Wright inequality gives
for deterministic matrices \(\vA\)
\[
        \vg_a^\top\vA\vg_a
        -
        \Tr(\vA\widetilde\vSigma_\sigma^+)
        \prec
        \|\vA\|_F,
\]
after the standard covariance normalization.  

Thus, on a high-probability event in \(\vX\), we may apply
Theorem~13 \cite{wang2024nonlinearspikedcovariancematrices} to
$
        \widetilde \vK_\sigma 
        =
        \frac1N\sum_{a=1}^N\vg_a\vg_a^\top
$
with population covariance \(\widetilde\vSigma_\sigma^+\), aspect ratio
$\frac{r_n}{N}\to\phi,$ bulk law \(\nu_{\rm q}\), and population spike \(\Lambda_y\). 
We can conclude that a population spike \(\lambda\) produces a separated sample
outlier if and only if $z'\!\left(-\frac1\lambda\right)>0,$ and the sample outlier converges to $z\!\left(-\frac1\lambda\right)$. Additionally, the corresponding sample eigenvector has squared overlap with deterministic
test directions multiplied by the factor $\varphi\!\left(-\frac1\lambda\right).$
Applying this with \(\lambda=\Lambda_y\), if $z'\!\left(-\frac1{\Lambda_y}\right)>0,$ then \(\widetilde \vK_\sigma \), and hence \(\vK_\sigma \), has exactly one
sample eigenvalue converging to $\lambda_y$.

It remains to compute the overlap with the original label vector $\vv:=\frac{\vy}{\sqrt n}.$
Let
$
        \widetilde{\vv}
        :=
        \frac{\vU_+^\top\vv}{\|\vU_+^\top\vv\|_2}.
$
By the enlarged label-preservation lemma,
$
        \|\vU_+^\top\vv\|_2^2
        =
        \vv^\top\vP_\sigma \vv
        \xrightarrow{\P}1.
$
Hence \(\widetilde{\vv}\) is asymptotically the same direction as the projected
label.  Since the population eigenvector belongs to
\(\operatorname{Range}(\vP_\sigma )\), the first BBP phase gives
\[
        \left|
        \left\langle
        \vv_y^{\rm pop},\vv
        \right\rangle
        \right|^2
        \xrightarrow{\P}
        \frac{\gamma\ell^2-1}{\gamma\ell(\ell+1)}.
\]
The eigenvector overlap statement of the nonlinear sample-covariance BBP theorem
therefore yields
\[
        \left|
        \left\langle
        \widehat{\vv}_y,\vv
        \right\rangle
        \right|^2
        \xrightarrow{\P}
        \varphi\!\left(-\frac1{\Lambda_y}\right)
        \frac{\gamma\ell^2-1}{\gamma\ell(\ell+1)}.
\]
Finally, we expand the shrinkage factor.  Since
$
        z'(s)
        =
        \frac1{s^2}
        -
        \phi\int\frac{t^2}{(1+ts)^2}\,\nu_{\rm q}(dt),
$
we have
\[
        z'\!\left(-\frac1{\Lambda_y}\right)
        =
        \Lambda_y^2
        \left(
        1-
        \phi
        \int
        \frac{t^2}{(\Lambda_y-t)^2}\,\nu_{\rm q}(dt)
        \right).
\]
Therefore
\begin{align}
        \varphi\!\left(-\frac1{\Lambda_y}\right)
        &=
        -\frac{(-1/\Lambda_y)
        z'(-1/\Lambda_y)}
        {z(-1/\Lambda_y)}=
        \frac{
        1-\phi\int\frac{t^2}{(\Lambda_y-t)^2}\,\nu_{\rm q}(dt)
        }{
        1+\phi\int\frac{t}{\Lambda_y-t}\,\nu_{\rm q}(dt)
        }.\label{eq:def_varphi_q}
\end{align}
If $z'\!\left(-\frac1{\Lambda_y}\right)\le0$,
then the population spike is not in the outlier-producing index set $I=\{i:z'(-1/\lambda_i)>0\},$ and it sticks to the sample bulk and produces no outlier eigenvalue.  If \(\ell\le\gamma^{-1/2}\), then the first population BBP
transition does not occur, so there is no label-aligned population spike to
feed into the linear-width sample-covariance BBP theorem.  Hence in either
case the quadratic label channel produces no separated linear-width CK outlier.
\end{proof}

\begin{proof-of-theorem}[\ref{thm:quadratic}]
        Item \textbf{(i)} is due to the first part of Proposition~\ref{prop:second-BBP-linear-width-CK-plus}. Item \textbf{(ii)} is proved by Proposition~\ref{thm:projected-population-ck-bbp} since $\E[\vK|\vX]=\vPhi(\vX)$.  Item \textbf{(iii)} is based on the second and the third parts of Proposition~\ref{prop:second-BBP-linear-width-CK-plus}, where $\varphi$ transform is given by \eqref{eq:def_varphi_q}.
\end{proof-of-theorem}

\end{document}